\newtheorem{thm}{Theorem}
\newtheorem{lem}[thm]{Lemma}
\newtheorem{prop}[thm]{Proposition}
\newtheorem{cor}[thm]{Corollary}
\newtheorem{defn}[thm]{Definition}
\newtheorem{assump}{Assumption}
\newtheorem{exmp}{Example}
\newenvironment{pf}{{\noindent\sc Proof. }}{\qed}
\newtheorem*{thm*}{Theorem}
\newtheorem*{prop*}{Proposition}
\newtheorem*{lem*}{Lemma}
\newtheorem*{exmp*}{Example}
\newcommand{\Tab}[1]{Table~\ref{#1}}
\newcommand{\Lem}[1]{Lemma~\ref{#1}}
\newcommand{\Prop}[1]{Proposition~\ref{#1}}
\definecolor{Gray}{gray}{0.8}
\renewcommand{\hat}{\widehat}
\renewcommand{\tilde}{\widetilde}
\renewcommand{\>}{{\rightarrow}}
\newcommand{\grad}{\nabla}
\newcommand{\argmax}{\operatorname{argmax}}
\newcommand{\argmin}{\operatorname{argmin}}
\newcommand{\conv}{\operatorname{conv}}
\newcommand{\R}{{\mathbb R}}
\newcommand{\Z}{{\mathbb Z}}
\newcommand{\N}{{\mathbb N}}
\renewcommand{\P}{{\mathbf P}}
\newcommand{\E}{{\mathbf E}}
\newcommand{\I}{{\mathbf I}}
\newcommand{\1}{{\mathbf 1}}
\newcommand{\0}{{\mathbf 0}}
\newcommand{\A}{{\mathbf A}}
\newcommand{\B}{{\mathcal B}}
\newcommand{\cC}{{\mathcal C}}
\newcommand{\cF}{{\mathcal F}}
\newcommand{\cE}{{\mathcal E}}
\newcommand{\C}{{\mathbf C}}
\newcommand{\F}{{\mathbf F}}
\renewcommand{\H}{{\mathcal H}}
\renewcommand{\L}{{\mathbf L}}
\newcommand{\cO}{{\mathcal O}}
\renewcommand{\B}{{\mathbf B}}
\newcommand{\W}{{\mathcal W}}
\newcommand{\X}{{\mathcal X}}
\newcommand{\Y}{{\mathcal Y}}
\newcommand{\cR}{{\mathcal R}}
\renewcommand{\c}{{\mathbf c}}
\newcommand{\h}{{h}}
\newcommand{\q}{{\mathbf q}}
\renewcommand{\r}{{\mathbf r}}
\newcommand{\w}{{\mathbf w}}
\newcommand{\x}{{\mathbf x}}
\newcommand{\zo}{\textup{\textrm{0-1}}}
\newcommand{\bal}{\textup{\textrm{bal}}}
\newcommand{\ord}{\textup{\textrm{ord}}}
\newcommand{\FN}{\textup{\textrm{FN}}}
\newcommand{\FP}{\textup{\textrm{FP}}}
\newcommand{\TN}{\textup{\textrm{TN}}}
\newcommand{\TP}{\textup{\textrm{TP}}}
\newcommand{\GM}{\textup{\textrm{GM}}}
\newcommand{\HM}{\textup{\textrm{HM}}}
\newcommand{\micro}{\textup{\textrm{micro}}}
\newcommand{\balpha}{{\boldsymbol \alpha}}
\newcommand{\bxi}{{\boldsymbol \xi}}
\newcommand{\bell}{{\boldsymbol \ell}}
\newcommand{\bphi}{{\boldsymbol \phi}}
\newcommand{\bbeta}{{\boldsymbol \beta}}
\newcommand{\seta}{{\boldsymbol \eta}}
\newcommand{\bGamma}{{\boldsymbol \Gamma}}
\newcommand{\boldeta}{{\boldsymbol \eta}}
\newcommand{\rl}{{}}
\newcommand{\bmu}{{\boldsymbol \mu}}
\newcommand{\svmp}{$\text{SVM}^\text{perf~}$}
\newcommand{\JLE}{\textup{\textrm{JLE}}}
\newcommand{\Zeta}{Z}
\newcommand{\vol}{\textup{\textrm{vol}}}
\newcommand{\g}{{\mathbf g}}
\newcommand{\performance}{\Psi}
\newcommand{\constraint}{\Phi}
\newcommand{\BER}{\textup{\textrm{BER}}}
\renewcommand{\prec}{\textup{\textrm{prec}}}
\newcommand{\cov}{\textup{\textrm{cov}}}
\newcommand{\eff}{\textup{\textrm{eff}}}
\newcommand{\KLD}{\textup{\textrm{KLD}}}
\newcommand{\DP}{\textup{\textrm{DP}}}
\newcommand{\EOpp}{\textup{\textrm{EOpp}}}
\newcommand{\optutag}{OP1}
\newcommand{\optctag}{OP2}
\newcommand{\usubscript}{\textup{U}}
\newcommand{\csubscript}{\textup{\textrm{C}}}
\newcommand{\blambda}{\boldsymbol{\lambda}}
\newcommand{\cL}{\mathcal{L}}
\newcommand{\aug}{\textup{\textrm{aug}}}
\newcommand{\con}{\textup{\textrm{con}}}
\newcommand{\ba}{\mathbf{a}}
\newcommand{\bb}{\mathbf{b}}
\renewcommand{\vec}{\textup{\textrm{vec}}}
\newcommand{\gen}{\textup{\textrm{gen}}}
\newcommand{\ocC}{\cC}
\begin{document}

\title{Consistent Multiclass Algorithms for\\Complex  Metrics and Constraints}

\author{
  \name 
  \hspace{-3pt}Harikrishna Narasimhan \email hnarasimhan@google.com\\
  \addr Google Research, Mountain View, USA\AND
  \name Harish G. Ramaswamy \email hariguru@cse.iitm.ac.in\\
  \addr Indian Institute of Technology Madras, Chennai, India\AND  
  \name Shiv Kumar Tavker\footnotemark
  \email tavker@amazon.com\\
  \addr  Amazon Inc., Bengaluru, India\AND
   \name Drona Khurana \email dronakhurana1294@gmail.com\\
  \addr  RBCDSAI, Indian Institute of Technology Madras, Chennai, India\AND
  \name Praneeth Netrapalli \email pnetrapalli@google.com\\
  \addr Google Research India, Bengaluru, India\AND
  \name Shivani Agarwal \email ashivani@seas.upenn.edu\\
  \addr University of Pennsylvania, Philadelphia, USA
}

\editor{}

\maketitle


\begin{abstract}
We present
 consistent algorithms
for multiclass learning with complex performance metrics and constraints, where the objective and constraints are defined by arbitrary functions
of the confusion matrix. This setting includes 
many common 
performance metrics such as the multiclass G-mean and micro $F_1$-measure, and  constraints such as those on the
classifier's precision and recall and more recent measures of fairness discrepancy. We give a
general framework for designing consistent algorithms for such complex design goals
by viewing the learning problem as an optimization problem over the set of feasible confusion matrices.
We provide multiple instantiations of our framework under 
different assumptions on the performance metrics and constraints, and in each case show rates of convergence  to the optimal (feasible) classifier (and thus asymptotic consistency).
Experiments on a variety of multiclass classification tasks and fairness constrained problems show that our algorithms compare favorably to the state-of-the-art
 baselines.
\end{abstract}

\begin{keywords}
Multiclass, non-decomposable metrics, constraints, fairness, Frank-Wolfe, ellipsoid
\end{keywords}

\section{Introduction}
\label{sec:intro}
\footnotetext{$^*$Part of this work was done while SKT was a master's student at the Indian Institute of Technology Madras, India.}
In many real-world  machine learning tasks, the performance metric used to evaluate the performance of a classifier takes a complex form, and is not simply the expectation or sum of a loss on individual examples. Indeed, this is the case with the G-mean, H-mean and Q-mean performance metric used in class imbalance settings \citep{Lawrence+98,Sun+06,Kennedy+09,WangYao12,Kim+13}, the micro and macro $F_1$-measure used in information retrieval (IR) applications \citep{Lewis91}, the worst-case error used in robust classification tasks \citep{Vincent94, chen2017robust}, and many others.  Unlike linear performance metrics, which are simply linear functions (defined by a loss matrix) of the confusion matrix of a classifier, these complex performance metrics are defined by general functions of the confusion matrix. 
In this paper, we seek to
design \emph{consistent} learning algorithms for such complex performance metrics, i.e.\ algorithms that converge in the limit of infinite data to the optimal classifier for the metrics.


More generally, it is common for a classifier to be evaluated on more than one performance metric, and in such cases, a desirable goal could be to optimize the classifier's performance on one metric while constraining the others to be within an acceptable range. These constrained classification problems  commonly arise in fairness applications, where one may constrain a classifier to have equitable performance across multiple subgroups  \citep{Hardt+16, Zafar+17}, as well as, in many practical tasks where one wishes to constrain a classifier's precision, coverage, or churn \citep{Eban+17, Goh+16,Cotter+19b}.  Such metrics and constraints can be expressed as general functions of the confusion matrix, and are categorised as complex owing to their non-decomposable structure. Standard algorithmic learning frameworks are not readily designed to handle such complexity in the objectives and constraints. Doing so requires rethinking the underlying optimization schemes, as well as conducting bespoke analysis to establish algorithmic and statistical soundness. 
Practical applications and the lack of general approaches to solve such problems, motivate us to address the following question:
\begin{center}
\emph{
How can we design consistent  algorithms for a general learning problem where the  objective and (optionally) constraints are defined by general functions of the confusion matrix?
}
\end{center}

While there has been much interest in designing consistent algorithms for various types of supervised learning problems, most of this work has focused on linear performance metrics. This includes work on the binary or multiclass {0-1} loss \citep{Bartlett+06,Zhang04a,Zhang04b,Lee+04,TewariBa07}, losses for specific problems such as {multilabel classification} \citep{GaoZh11}, ranking \citep{Duchi+10,Ravikumar+11,Calauzenes+12,yang2020consistency}, and classification with abstention \citep{YuanWeg10, Ramaswamy+18, Finocchiaro+20}, and some work on general multiclass loss matrices \citep{Steinwart07,RamaswamyAg12,Pires+13,Ramaswamy+13, Nowak+20}.  
The design of consistent algorithms for constrained classification problems has also received much attention recently, particularly in the context of fairness \citep{Agarwal+18,Kearns+18,Donini+18},
with the focus  largely being on linear metrics and constraints.

There has also been much interest in designing algorithms for more complex performance metrics. One of the seminal approaches in this area is the \svmp algorithm \citep{Joachims05}, which was developed primarily for the binary setting. 
Other examples include 
convex relaxation based approaches that seek to improve upon the performance of this method \citep{kar2014online,Kar+16,Narasimhan+19_generalized}, as well as,
algorithms for the binary $F_1$-measure and its multiclass and multilabel variants \citep{Dembczynski+11,Dembczynski+13,natarajan2016optimal,zhang2020convex}.
Parallelly, there has been increasing interest in designing \emph{consistent} algorithms for complex performance metrics.
Most of these methods are focused on the binary case \citep{Menon+13,Koyejo+14,Narasimhan+14,dembczynski2017consistency}, 
and typically require tuning a single threshold or cost parameter to optimize the metric at hand. However, this simple approach of performing a one-dimensional parameter search does not  extend easily to general $n$-class problems, where one may need to search over as many as $n^2$ parameters, requiring time exponential in $n^2$.

\begin{figure}
    \centering
    \includegraphics[scale=0.35]{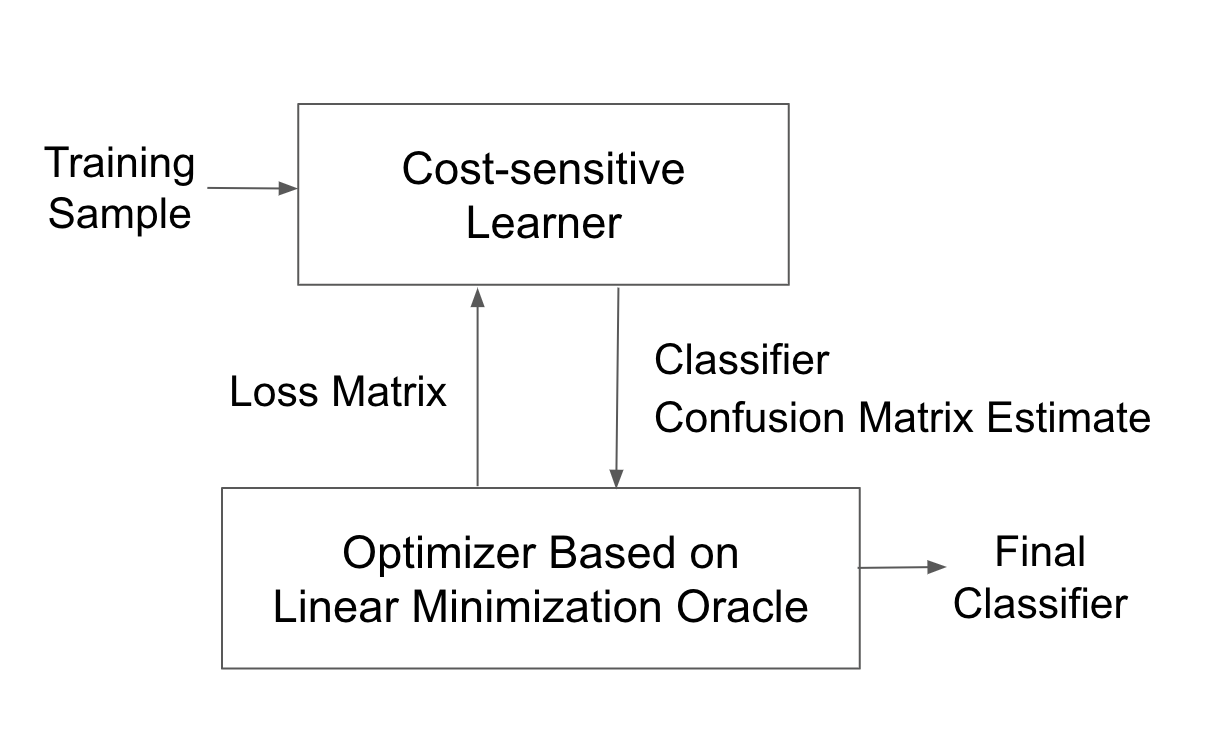}
    \vspace{-5pt}
    \caption{Simplified overview of the proposed framework.}
    \vspace{-10pt}
    \label{fig:overview}
\end{figure}

In this paper, we develop a general framework for designing statistically consistent and computationally efficient algorithms for complex multiclass performance metrics and constraints. 
Our key idea is to pose the learning problem as an optimization problem over the set of feasible and achievable confusion matrices, and to solve  this optimization problem using an optimization method that needs access to only a \textit{linear minimization} routine (see Figure \ref{fig:overview} for a simplified overview of the approach). Each of these linear minimization steps can be formulated as a \emph{cost-sensitive learning} task, a classical problem for which  numerous off-the-shelf solvers are available.

We provide instantiations of our framework under different assumptions on the performance metrics and constraints, and in each case establish rates of convergence to the optimal (feasible) classifier.
Our algorithms can be used to learn plug-in type classifiers that post-shift a pre-trained class probability model, and are shown to be effective in optimizing for the given metric and constraints on a variety of application tasks.


\subsection{Further Related Work}
The literature on complex performance metrics and constrained learning can be broadly divided into two categories:  algorithms that use surrogate relaxations \citep{Joachims05,kar2014online,Narasimhan+15b,Kar+16,Sanyal+18,Narasimhan+19_generalized}, and algorithms that use a plug-in classifier \citep{Ye+12,Menon+13,Koyejo+14,Narasimhan+14,Parambath+14,dembczynski2017consistency,yang2020fairness}. The former methods are sometimes dubbed as \textit{in-training} approaches, while the latter methods are referred to as \textit{post-hoc} approaches.
 
 A prominent example in the first category is the \svmp\ method of \cite{Joachims05},  which employs a  structural SVM formulation to construct convex surrogates for complex binary evaluation metrics. This approach does not however  extend to multiclass problems as it uses a cutting-plane finding routine whose running time grows exponentially with the number of classes. Moreover, follow-up work has shown that structural SVM style surrogates are not necessarily consistent for complex metrics \citep{Dembczynski+13}. More recent surrogate-based algorithms improve upon
this method, offering faster training procedures and better empirical performance \citep{Narasimhan+15b,Kar+16,Sanyal+18},
 but do not come with consistency guarantees.

 The second category of algorithms,
 which construct a classifier by tuning thresholds on a class-probability estimator,
 do enjoy consistency guarantees, but the bulk of the work here has focused on unconstrained binary metrics
\citep{Ye+12,Menon+13,Koyejo+14,Narasimhan+14,dembczynski2017consistency}, and for the reasons mentioned in the introduction, do not directly extend to multi-class problems.

The work that most closely relates to our paper is that of  \cite{Narasimhan+19_generalized}, where a family of algorithms is provided for optimizing complex metrics with and without constraints,
which includes as special cases some previous surrogate-based algorithms \citep{Narasimhan+15b,Kar+16}, as well as, the Frank-Wolfe based algorithm that appeared in a conference version of this paper \citep{Narasimhan+15}. 
Their key idea is to introduce auxiliary variables to re-formulate the learning task into a min-max problem, in which the 
minimization step entails solving a linear objective. They then propose  solving the  minimization step either approximately using surrogate losses, 
or exactly using a linear minimization oracle. They regard the use of surrogate relaxations to be more practical, and conduct all their empirical comparisons with this approach, although the guarantees they provide only show convergence to an optimal solution
for the surrogate-relaxed problem. We include their surrogate-based algorithms, available as a part of the TFCO library \citep{Cotter+19b}, as baselines in our experiments. 


In contrast to the methods of \cite{Narasimhan+19_generalized}, our focus is on designing algorithms that are statistically consistent, and do so using linear minimization oracles (such as plug-in classifiers) that are efficient to implement. We propose various algorithms for different problem settings, and in each case, provide consistency guarantees and rates of convergence to the optimal (feasible) classifier. For one particular problem setting (discussed in Sections \ref{sec:gda} and \ref{sec:con-gda}), both the metrics involved are  non-smooth convex functions of the confusion matrix. The algorithms we provide for this setting are a direct adaptation of the framework presented in \cite{Narasimhan+19_generalized}, but come with complete consistency analyses. 



Our paper is also closely related to the growing literature on machine learning fairness, where 
the use of constrained optimization has become one of the dominant approaches for enforcing fairness goals. The metrics handled here are typically \textit{linear} functions of (group-specific) confusion matrices \citep{Hardt+16}, with the approaches proposed using both surrogate relaxations \citep{Zafar+17,Zafar+17b, Goh+16, Cotter+19, Cotter+19b}
 and linear minimization oracles \citep{Agarwal+18,Kearns+18, yang2020fairness}.
Recently, \citet{Celis+19} extended the work of \citet{Agarwal+18} to handle more complex fairness constraints that can be written as a difference of linear-fractional metrics, but require solving a large  number of linearly-constrained sub-problems, with the number of sub-problems growing exponentially with the number of groups.

Other related work includes that of \cite{Eban+17} and \cite{Kumar+2021}, which use surrogate approximations to solve specialized non-decomposable constrained problems, such as maximizing precision subject to recall constraints. The work of \cite{chen2017robust} provides provable algorithms to minimize the maximum among multiple linear metrics 
using an oracle subroutine.

\subsection{Contributions}
The main contributions of this paper are summarized below.
\begin{itemize}[topsep=5pt,leftmargin=25pt,itemsep=2pt]
    \item We provide a characterization of the Bayes optimal classifier for unconstrained and constrained minimization of complex multiclass metrics (see Section \ref{sec:bayes}).
    \item We  propose a unified framework for designing  consistent algorithms for complex multiclass metrics and  constraints given access to a linear minimization oracle, i.e., a cost-sensitive learner (see Section \ref{sec:unconstrained}).
    \item 
    For  unconstrained metrics,  we identify four optimization algorithms that only require access to a linear minimization oracle. 
    These include (i) the Frank-Wolfe method for smooth convex metrics, 
    (ii) the gradient-descent ascent algorithm and
    (iii) the ellipsoid method for general convex metrics, and  (iv) the bisection method for ratio-of-linear metrics 
    (see Section \ref{sec:unconstrained}). 
    \item 
    For constrained learning problems, where the classifier is required to satisfy some constraints on the confusion matrix in addition to performing well on a complex metric, we provide four algorithms as counterparts to the ones mentioned above 
    (see Section \ref{sec:constrained}).
    \item We 
    show that the proposed algorithms are statistically consistent when used with a plug-in based linear minimization routine 
    (see Section \ref{sec:lmo}), and also show how they can be extended to handle fairness constraints over multiple subgroups (see Section \ref{sec:fairness}).
    \item We conduct an extensive evaluation of the proposed algorithms on benchmark multiclass, image classification,  and fair classification datasets, and show that they perform comparable to or better than the state-of-the-art approaches in each case. We also provide practical guidelines on choosing an appropriate algorithm for a given setting (see Section \ref{sec:experiments}).
\end{itemize}

 The following is a summary of the main differences from the conference
versions of this paper \citep{Narasimhan+15, Narasimhan18, Tavker+2020}. 
\begin{itemize}[itemsep=0pt,topsep=5pt,leftmargin=25pt]
    \item A definitive article on the broader topic of learning with complex metrics and constraints, with improved exposition and intuitive illustrations.
    \item New ellipsoid-based algorithms for convex performance metrics with a linear convergence rate (\emph{albiet} with a dependence on dimension).
    \item Improved  bisection-based algorithm for ratio-of-linear performance metrics with a better convergence rate for handling constraints.
    \item An adaptation of the gradient descent-ascent algorithm from \cite{Narasimhan+19_generalized} with a complete consistency analysis.
    \item Convergence results presented for a general linear minimization oracle, with the plug-in method as a special case.
    \item New set of experiments including benchmark image classification tasks. 
\end{itemize}

\noindent All  proofs not provided in the main text can be found in Appendix \ref{app:proofs}.
\section{Preliminaries and Examples}
\label{sec:prelim}

\textbf{Notations.}\ For $n\in\Z_+$, we denote $[n] = \{0,\ldots,n-1\}$. For matrices $\A,\B\in\R^{n\times n}$, we denote $\|\A\|_1 = \sum_{i,j} |A_{ij}|$ and $\langle \A,\B \rangle =\sum_{i,j} A_{i,j}B_{i,j}$. The notation $\argmin^*_{i\in[n]}$ will denote ties being broken in favor of the larger number. We use $\Delta_n$ to denote the $(n-1)$-dimensional probability simplex. See Table \ref{tab:symbols} in the appendix for a summary of other common symbols in the paper.~\\[-10pt]

We are interested in general multiclass learning problems with instance space $\X \subseteq \R^q$ and label space $\Y=[n]$. Given a finite training sample $S=((x_1,y_1),\ldots,(x_N,y_N)) \in(\X\times [n])^N$, the goal is to learn a multiclass classifier $h:\X\>[n]$, or more generally, a \emph{randomized} multiclass classifier $h:\X\>\Delta_n$, which given an instance $x$ predicts a class label in $[n]$ according to the probability distribution specified by $h(x)$. 
We assume examples are drawn iid from some distribution $D$ on $\X\times[n]$, and denote the marginal distribution over $\X$ by $\mu$, the class-conditional distribution by $\eta_i(x) = \P(Y=i \,|\, X=x)$, and the class prior probabilities by $\pi_i = \P(Y=i)$.

\subsection{Performance Metrics Based on the Confusion Matrix}
\label{subsec:performance-measures}
We will measure the performance of a classifier in terms of its confusion matrix.
\begin{defn}[Confusion matrix]
The \emph{confusion matrix},  $\C[h] \in [0,1]^{n\times n}$, of a randomised classifier $h$ w.r.t.\ a distribution $D$ has entries defined as 
\vspace{-2pt}
\[
C_{ij}[h] = \P_{(X,Y)\sim D,\,\widehat{Y} \sim h(X)}\big( Y=i,\, \widehat{Y}=j \big)
	\,,
\]
where $\widehat{Y} \sim h(X)$ denotes a random draw of label from $h(X)$. We can get the prior class probabilities, and fractions of instances predicted as a particular class from $\C[h]$ by marginalisation as follows :
$\sum_j C_{ij}[h] = \P(Y=i) := \pi_i $, and $\sum_i C_{ij}[h] = \P(h(X)=j)$.
\end{defn}

We will be interested in general, complex performance metrics that can be expressed as an arbitrary function of the entries of the confusion matrix $\C[h]$. 
For any function $\psi:[0,1]^{n\times n}\>\R_+$, we define the performance metric of $h$ 
follows:
\[
\performance[h] = \psi(\C[h]).
\]
We adopt the convention that \emph{lower} values of $\psi$ correspond to \emph{better} performance.

As the following examples show, this formulation captures both common cost-sensitive classification, which corresponds to  linear functions of the entries of the confusion matrix, and more complex performance metrics such as the G-mean, micro $F_1$-measure, and several others.

\begin{exmp}[Linear performance metrics]
Consider a multiclass loss matrix $\L\in\R^{n\times n}$, such that $L_{ij}$ represents the cost incurred on predicting class $j$ when the true class is $i$. In such ``cost-sensitive learning'' settings \citep{Elkan01}, the performance of a classifier $h$ is measured by the expected loss on a new example from $D$, which amounts to computing a linear function of the confusion matrix $\C[h]$:
\begin{eqnarray*}
\performance[h] 
	& = &
	\E\big[L_{Y,h(X)} \big] 
	~ = ~
	\sum_{i,j} L_{ij} \, C_{ij}[h]
	~ = ~
	\psi^\L(\C[h])
	\,,
\vspace{-8pt}
\end{eqnarray*}
where $\psi^\L(\C) = \langle \L,\C \rangle ~~\forall \C\in[0,1]^{n\times n}$. 
For example, for the 0-1 loss given by $L^\zo_{ij} = \1(i\neq j)$, we have $\psi^\zo(\C) = 1-\sum_i C_{ii}$; 
for the balanced 0-1 loss given by $L^{\bal}_{ij} = \frac{1}{n\pi_i}\1(i\neq j)$, we have $\psi^\bal(\C) = 1-\frac{1}{n}\sum_i \frac{1}{\pi_i}C_{ii}$;
for the absolute loss used in ordinal regression, $L^\ord_{ij} = |i-j|$, we have 
$\psi^\ord(\C) = \sum_{i,j} |i-j| C_{ij}$. 
\end{exmp}

\begin{exmp}[Binary performance metrics]
In the binary setting, the confusion matrix of a classifier contains the proportions of true negatives ($C_{00}=\TN$), false positives ($C_{01}=\FP$), false negatives ($C_{10}=\FN$), and true positives ($C_{11} =\TP$). Our framework therefore includes any binary performance metric that is expressed as a function of these quantities, including 
the balanced error rate metric \citep{Menon+13} given by
\(
\psi^{\BER}(\C) = \frac{1}{2} \big( \frac{\FP}{\TP+\FN} + \frac{\FN}{\TN+\FP} \big)
	\,,
\)
the $F_\beta$-measure  given by 
\(
\psi^{F_\beta}(\C) 
	 =  
	1 - \frac{(1+\beta^2) \, \TP}{(1+\beta^2)\,\TP + \beta^2\,\FN + \FP}
	\,
\)
for any $\beta > 0$, 
all ``ratio-of-linear'' binary performance metrics \citep{Koyejo+14}, and more generally, all ``non-decomposable'' binary performance metrics \citep{Narasimhan+14}.
\end{exmp}

\begin{exmp}[G-mean metric]\label{ex:g-mean}
The G-mean metric is used to evaluate both binary and multiclass classifiers in settings with class imbalance \citep{Sun+06,WangYao12}, and is given by
\[
\psi^\GM(\C) = 1 \,-\, \bigg( \prod_{i} \frac{C_{ii}}{\sum_{j} C_{ij}} \bigg)^{1/n}
	\,.
\]
\end{exmp}

\begin{exmp}[Micro $F_1$-measure]
\label{ex:micro-F1}
The micro $F_1$-measure is widely used to evaluate multiclass classifiers in information retrieval and information extraction applications \citep{Manning+08}. Many variants have been studied; we consider here the form used in the BioNLP challenge \citep{Kim+13}, which treats class 0 as a `default' class and is effectively given by the function\footnote{Another popular variant of the micro $F_1$ involves averaging the entries of the `one-versus-all' binary confusion matrices for all classes, and computing the $F_1$ for the averaged matrix; as pointed out by \citet{Manning+08}, this form of micro $F_1$ effectively  reduces to the 0-1 classification accuracy.
}
\[
\psi^{\micro F_1}(\C) = 1 \,-\, \frac{2\sum_{i\ne0} C_{ii}}{2 - \sum_{i} C_{0i} - \sum_{i} C_{i0}}
	\,.
\]
\end{exmp}

In \Tab{tab:perf-measures}, we provide other examples of performance metrics that are given by (complex) functions of the confusion matrix, which include the macro $F_1$-measure \citep{Lewis91}, the H-mean \citep{Kennedy+09}, the Q-mean \citep{Lawrence+98}, 
and the min-max metric in detection theory \citep{Vincent94} and for worst-case performance optimization \citep{chen2017robust}.

\begin{table}[t]
\begin{center}
\caption{Left: examples of complex multiclass performance metrics. Right:  examples of complex constraint functions. 
In the table on the right, $\pi_y = \P(Y=y)$, $\tau_i$ is the target value for class $i$, 
and $\epsilon>0$ is a small slack.
Rows 4--6 contain fairness metrics with $m$ protected groups, where $A(x) \in [m]$ is the protected group for instance $x$, $\mu_a = \P(A(X) =a)$, and $\mu_{a,i} = \P(A(X) =a, Y = i)$.
Row 5 is defined for binary labels $\Y = \{0,1\}$.
Rows 3--6 can be equivalently written as separate constraints on individual classes (and groups), but have been conveniently expressed in terms of the maximum constraint violation. 
}
\label{tab:perf-measures}
\vspace{2pt}
\begin{small}
\begin{tabular}{@{}cc@{}}
\hline
Metric \rule{0pt}{5pt} & $\psi(\C)$ \\[2pt]
\hline
\rule{0pt}{12pt} {G-mean} \rule{0pt}{12pt} & $1\,-\,\left( \prod_{i} \frac{C_{ii}}{\sum_{j} C_{ij}} \right)^{1/n}$ \\[6pt]
{H-mean} & $1 \,-\, n\left({\sum_{i} \frac{\sum_{j} C_{ij}}{C_{ii}}}\right)^{-1}$ \\[4pt]
Q-mean & $\sqrt{\frac{1}{n}\sum_{i}\left(1-\frac{C_{ii}}{\sum_{j} C_{ij}}\right)^2}$\\[6pt]
Micro $F_1$ & $1 \,-\, \frac{2\sum_{i>0} C_{ii}}{2 - \sum_{i}C_{1i} - \sum_{i} C_{i1}}$ \\[7pt]
Macro $F_1$ & $1 \,-\,\frac{1}{n}\sum_{i}\frac{2C_{ii}}{\sum_{j} C_{ij} \,+\, \sum_{j}C_{ji}}$ \\[6pt]
Min-max & $\max_{i}\left(1\,-\,\frac{C_{ii}}{\sum_{j} C_{ij}}\right)$\\[7pt]
\hline
\end{tabular}
\hspace{10pt}
\begin{tabular}{@{}cc@{}}
\hline
Constraint Function\rule{0pt}{5pt} & $\phi(\C)$ \\[2pt]
\hline
\rule{0pt}{12pt} 
{Class $i$ Precision} & $ 1\,-\,\frac{C_{ii}}{\sum_j C_{ji}} - \tau_i$ \\[6pt]
{ Quantification} & $
\sum_{i=1}^n\pi_i \log\left(
\frac{\pi_i}{\sum_{j=1}^n C_{ji}}
\right) - \epsilon
$
\\[8pt]
{Coverage}  & $\displaystyle\max_{i\in[n]}\textstyle|\sum_{j}C_{ji} \,-\, \tau_i| -\epsilon$
\\[8pt]
{Demographic Parity} & 
$\displaystyle\max_{a\in[m],\,i\in [n]}\textstyle\left|
	\frac{1}{\mu_a}\sum_{j}C^a_{ji} \,-\, \sum_{j}C_{ji}
\right| -\epsilon$
\\[8pt]
Equal Opportunity &
$
\displaystyle\max_{a\in[m]}
\textstyle
\left|
	\frac{1}{\mu_{a1}}C^a_{11} \,-\, \frac{1}{\pi_1}C_{11}
\right| - \epsilon$ 
\\[8pt]
Equalized Odds &
$
\displaystyle\max_{a\in[m],\, i,j \in [n]}
\textstyle
\left|
	\frac{1}{\mu_{ai}}C^a_{ij} \,-\, \frac{1}{\pi_i}C_{ij}
\right| - \epsilon$ 
\\[8pt]
\hline
\end{tabular}
\end{small}
\end{center}
\vspace{-12pt}
\end{table}

\subsection{Constraints Based on the Confusion Matrix}
\label{subsec:constraints}
We will also be interested in machine learning goals that can be expressed as constraints on a classifier's output. Specifically, we will consider constraints that can be expressed as a general function of the classifier's confusion matrix, i.e.\  constraints on $h$ of the form $\constraint_k[h] \leq 0, \,\forall k \in [K]$,  where
\[
\constraint_k[h] \,=\, \phi_k(\C[h])
\]
for some  $\phi_k: [0,1]^{n\times n}\>\R$. 
As shown in the following examples, this formulation includes constraints on precision, predictive coverage, fairness criteria and many others.
\begin{exmp}[Precision]
\label{ex:prec}
A common goal in real-world applications is to constrain the precision of a classifier for a particular class $i$ (i.e.\ the number of correct predictions for class $i$ divided by the total number of class  $i$ predictions) to be above a certain threshold $\tau_i$. Denoting
$\phi^{\prec\text{-}i}(\C) \,=\, 1 \,-\, \frac{C_{ii}}{\sum_j C_{ji}} - \tau_i$, this constraint can be written as $\phi^{\prec\text{-}i}(\C) \leq 0$.
\end{exmp}

\begin{exmp}[Coverage]
\label{ex:coverage}
A classifier's coverage for  class $i$ is the proportion of examples that are predicted as $i$. Prior work has looked at constraining the coverage for different classes to match a target distribution $\tau \in \Delta_n$ \citep{Goh+16, Cotter+19b}. This  can be formulated as a non-positivity constraint on the maximum coverage violation, given by $\phi^\cov(C) \,=\, \max_i|\sum_{j}C_{ji} \,-\, \tau_i| - \epsilon$, for a small slack $\epsilon > 0$. A variant of this constraint in the \textit{quantification} literature \citep{Fab1, Fab2} aims to match a classifier's coverage with the class prior distribution $\pi$, with the KL-divergence between the two distributions used as the measure of discrepancy: $\phi^\KLD(C) \,=\, \sum_{i=1}^n\pi_i \log\left(
\frac{\pi_i}{\sum_{j=1}^n C_{ji}}
\right) - \epsilon.$
\end{exmp}

We next provide examples of  fairness goals in machine learning that can be  expressed as constraints on (group-specific) confusion matrices.
In a typical fairness setup, each instance $x$ is associated with one of $m$ protected groups. For convenience, we will denote the protected group for a instance $x$ by $A(x) \in [m]$. 
\begin{defn}[Group-specific confusion matrix]
\label{defn:group-conf}
The confusion matrix of a classifier $h$  w.r.t.\ a distribution $D$  specific to group $a\in[m]$,  $\C^a[h] \in [0,1]^{n\times n}$, has entries defined as 
\[
C^a_{ij}[h] \,=\,\P_{(X,Y)\sim D,\,\widehat{Y} \sim h(X)}\big(Y=i,\, \widehat{Y}=j,\, A(X) =a\big),
\]
where 
$\widehat{Y} \sim h(X)$ denotes a random draw of label from $h(X)$. 
We denote the fraction of instances with protected attribute $a$ as $\mu_a$, i.e. $P(A(X)=a)=\mu_a = \sum_{i,j} C^a_{ij}$, and the fraction of instances with  protected attribute $a$ and label $i$ by $\mu_{a,i}$, i.e. $P(A(X)=a, Y=i)=\mu_{a,i} = \sum_{j} C^a_{ij}$. Clearly, the general confusion matrix can be expressed as $C_{ij}=\sum_{a\in[m]}C_{ij}^a$.
\end{defn}

The following fairness goals are given by general functions of the $m$ group-specific confusion matrices $\C^1, \ldots, \C^m$, and are also summarized in Table \ref{tab:perf-measures}.
\begin{exmp}[Demographic parity fairness]
A popular fairness criterion is demographic parity, which for a  problem with binary labels $\Y = \{0,1\}$, requires the proportion of class-1 predictions to be the same for each protected group \citep{Hardt+16}. This can be generalized to multiclass problems by requiring the proportion of prediction for each class $i$ to be the same for each protected group. We can enforce this criterion (approximately) by defining the demographic parity violation as $\phi^{\DP}(\C^{10}, \ldots, \C^{m-1}) \,=\, \max_{a\in[m],\,i\in [n]}\left|
	\frac{1}{\mu_a}\sum_{j}C^a_{ji} \,-\, \sum_{j}C_{ji}
\right| - \epsilon$, where $\epsilon > 0$ is a small slack that we allow,  and requiring that $\phi^{\DP}(\C^0, \ldots, \C^{m-1}) 
\leq 0$.
\end{exmp}

\begin{exmp}[Equal opportunity fairness]
Another popular fairness goal for problems with binary labels $\Y = \{0,1\}$ is the equal opportunity criterion \citep{Zafar+17, Hardt+16}, which requires that the  true positive rates be the same for  examples belonging to each group. One can approximately enforce this criterion by defining the equal opportunity violation  $\phi^{\EOpp}(\C^{0}, \ldots, \C^{m-1}) \,=\, \max_{a\in[m]}
\left|
	\frac{1}{\mu_{a1}}C^a_{11} \,-\, \frac{1}{\pi_1}C_{11}
\right| - \epsilon$ with a small slack $\epsilon > 0$, and imposing the constraint $\phi^{\EOpp}(\C^0, \ldots, \C^{m-1}) \leq 0$.
\end{exmp}


Other examples of constraints that can be defined by a general function of the confusion matrix or its generalizations include 
the equalized odds fairness constraint \citep{Hardt+16}, 
constraints on classifier churn \citep{Cormier+16, Goh+16, Cotter+19}, constraints on the performance of a classifier on multiple data distributions with varying quality \citep{Cotter+19}, and constraints that encode performance in select portions of the ROC or precision-recall curves \citep{Eban+17}. 

For ease of exposition, we will focus on metrics and constraints that are defined by a function of the overall confusion matrix $\C[h]$, and discuss in Section \ref{sec:fairness} how our approach can be extended to handle metrics defined by group-specific confusion matrices for fairness problems. 

\subsection{Learning Problems and Consistent Algorithms}
One of our goals in this paper is to design learning algorithms for optimizing a  performance metric of the form $\performance[h] = \psi(\C[h])$:
\begin{equation}
    \min_{h:\X\>\Delta_n}\,\performance[h].
    \tag{\optutag}
    \label{eq:opt-unconstrained}
\end{equation}
We will also be interested in designing consistent learning algorithms for optimizing a performance measure $\performance[h] = \psi(\C[h])$ subject to constraints on $\constraint_k[h] = \phi_k(\C[h]),\,\,\forall k \in [K]$:
\begin{equation}
    \min_{h:\X\>\Delta_n}\,\performance[h]~~~\text{s.t.}~~~\constraint_k[h] \leq 0, \, \forall k \in [K]
    \tag{\optctag}.
    \label{eq:opt-constrained}
\end{equation}

More specifically, we wish to design algorithms that are provably \emph{consistent} for \ref{eq:opt-unconstrained} and \ref{eq:opt-constrained}, in that they converge in probability to the optimal performance for these problems (and when there are constraints, to zero constraint violations)
as the training sample size increases.

\begin{defn}[Consistent algorithm for the unconstrained problem]
We define the optimal value w.r.t.\ $D$ for the unconstrained problem in 
\emph{\optutag} as the minimum value of the performance measure $\performance[h]$ over all randomized classifiers $h$:
\[
\performance_\usubscript^* = \inf_{h:\X\>\Delta_n} \performance[h].
\]
We say a multiclass algorithm that given a training sample $S$ returns a classifier $h_S:\X\>\Delta_n$ is \emph{consistent w.r.t.\ $D$} for \emph{\optutag} if $\forall \nu > 0$:
\[
\P_{S\sim D^N}\big( \performance[h_S] \,-\, \performance^*_\usubscript > \nu \big) \rightarrow 0 ~~~ \text{as $N\>\infty$}
	\,.
\]
\end{defn}

For the constrained problem, we require the algorithms to additionally converge to zero constraint violations in the large sample limit.

\begin{defn}[Consistent algorithm for the constrained problem]
We define the optimal value for the constrained problem in \emph{\optctag} as the minimum value of the performance measure $\performance[h]$ among all randomized classifiers $h$ that satisfy the $K$ constraints:
\[
\performance_\csubscript^* = \inf_{h:\X\>\Delta_n,\, \constraint_k[h] \leq0\,\forall k} \performance[h].
\]
Given a training sample $S$, we say a multiclass algorithm that, returns a classifier $h_S:\X\>\Delta_n$ is \emph{consistent w.r.t.\ $D$} for \emph{\optctag} if $\forall \nu > 0$:
\[
\P_{S\sim D^N}\big( \performance[h_S] \,-\, \performance^*_\csubscript > \nu \big) \rightarrow 0 ~~~ \text{and} ~~~
\P_{S\sim D^N}\big(\forall k,~ \constraint_k[h_S] > \nu\big) \rightarrow 0 ~~~ \text{as $N\>\infty$}
	\,.
\]
\end{defn}

In developing our algorithms, we will find it useful to also define the \emph{empirical} confusion matrix of a classifier $h$ w.r.t.\ sample $S$, denoted by $\widehat{\C}[h] \in [0,1]^{n\times n}$, as
\vspace{-2pt}
\[
\hat{C}_{ij}[h] = \frac{1}{N}\sum_{\ell=1}^N\1(y_\ell = i, h(x_\ell) = j)
	\,.
\vspace{-2pt}
\]
%

\section{Bayes Optimal Classifiers}
\label{sec:bayes}
As a first step towards designing consistent algorithms, we start by examining the form of Bayes optimal classifiers for \ref{eq:opt-unconstrained} and \ref{eq:opt-constrained}. 
It is well known that for the simpler linear performance measures (as is the case with cost-sensitive learning problems), any classifier that picks a class that minimizes the expected loss conditioned on the instance is optimal (see e.g.\ \citet{Lee+04}):
\begin{prop}
\label{prop:loss-opt}
Let $\L\in\R^{n\times n}$ be a loss matrix. Then any (deterministic) classifier $h^*$ satisfying
\vspace{-2pt}
\[
h^*(x) \in \argmin_{j\in[n]} \textstyle{\sum_{i=1}^n \eta_i(x) L_{ij}}
\vspace{-2pt}
\]
is optimal for $\psi^\L$, i.e.\ 
$\langle \L, \C[h^*]\rangle  = \displaystyle \min_{h:\X\>\Delta_n}\langle \L, \C[h] \rangle$. 
\end{prop}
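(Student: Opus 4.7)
The plan is to reduce the matrix inner product to a pointwise expectation and minimize pointwise. First I would expand using the definition of the confusion matrix:
\[
\langle \L, \C[h]\rangle \,=\, \sum_{i,j} L_{ij}\, C_{ij}[h] \,=\, \sum_{i,j} L_{ij}\, \P(Y=i,\widehat{Y}=j) \,=\, \E_{(X,Y)\sim D,\,\widehat{Y}\sim h(X)}\big[L_{Y,\widehat{Y}}\big].
\]
By the law of total expectation and independence of $\widehat{Y}$ from $Y$ given $X$,
\[
\E\big[L_{Y,\widehat{Y}}\big] \,=\, \E_X\!\left[\sum_{j} h(x)_j \sum_{i} \eta_i(x)\, L_{ij}\right].
\]

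Next I would argue pointwise. For each fixed $x$, the inner sum $\sum_j h(x)_j\, r_j(x)$, where $r_j(x) \= \sum_i \eta_i(x) L_{ij}$, is a convex combination of $\{r_j(x)\}_{j\in[n]}$ and therefore is at least $\min_j r_j(x)$. This pointwise lower bound is attained by any deterministic $h^*$ that places all its mass on an index $j^*(x) \in \argmin_j r_j(x)$; such an $h^*$ satisfies $\sum_j h^*(x)_j r_j(x) = \min_j r_j(x)$ for every $x$.

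Finally I would integrate: since the pointwise inequality $\sum_j h(x)_j r_j(x) \ge \min_j r_j(x) = \sum_j h^*(x)_j r_j(x)$ holds for every $x$, taking expectation over $\mu$ yields $\langle \L, \C[h]\rangle \ge \langle \L, \C[h^*]\rangle$ for every randomized $h:\X\>\Delta_n$, which is exactly the claim. The infimum over $h$ is thus achieved by $h^*$.

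The only technical subtlety — hardly an obstacle — is ensuring that a measurable selection $h^*(x) \in \argmin_j r_j(x)$ exists; this is immediate since $[n]$ is finite and a measurable selector can be produced by the convention of breaking ties in favor of the smaller (or larger) index, making $j^*(x)$ a measurable function of $x$ through the measurable maps $\eta_i$. No other step requires delicate argument.
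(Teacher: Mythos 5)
Your proof is correct. The paper does not actually present a proof of this proposition but cites it as well known (``see e.g.\ Lee et al., 2004''), and the argument you give is the standard one that such references provide: rewrite $\langle \L, \C[h]\rangle$ as $\E_{(X,Y)}\E_{\widehat Y \sim h(X)}[L_{Y,\widehat Y}]$, condition on $X$ using $\widehat Y \perp Y \mid X$ to obtain $\E_X\big[\sum_j h(X)_j\, r_j(X)\big]$ with $r_j(x)=\sum_i \eta_i(x) L_{ij}$, observe that for each fixed $x$ the convex combination $\sum_j h(x)_j r_j(x)$ over $h(x)\in\Delta_n$ is minimized (with value $\min_j r_j(x)$) by any deterministic $h^*$ placing all mass on an argmin index, and integrate. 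Your remark on measurability of the argmin selector (via a fixed tie-breaking rule, which the paper's $\argmin^*$ convention makes explicit) disposes of the only technical subtlety. There is nothing to add.
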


\begin{figure}[t]
\centering
\begin{subfigure}[b]{0.31\linewidth}
\centering
\includegraphics[width=0.95\linewidth]{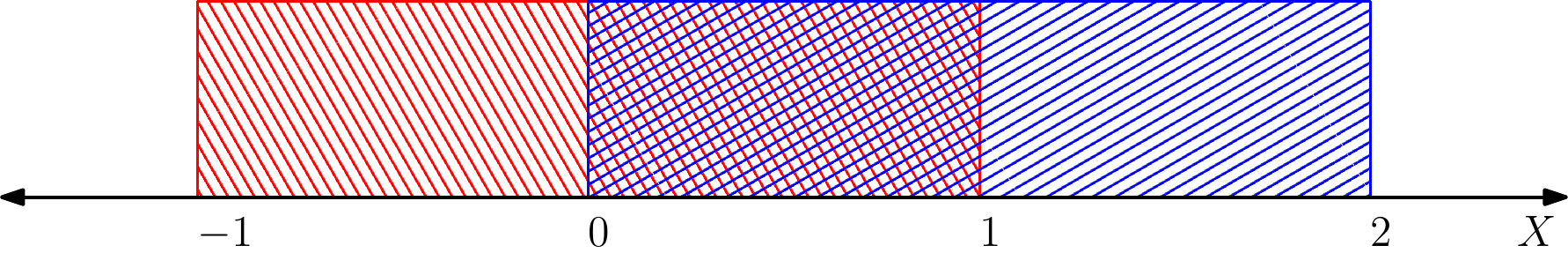}
\vspace{5pt}
\end{subfigure}
\hspace{5pt}
\begin{subfigure}[b]{0.31\linewidth}
\centering
\includegraphics[width=0.95\linewidth]{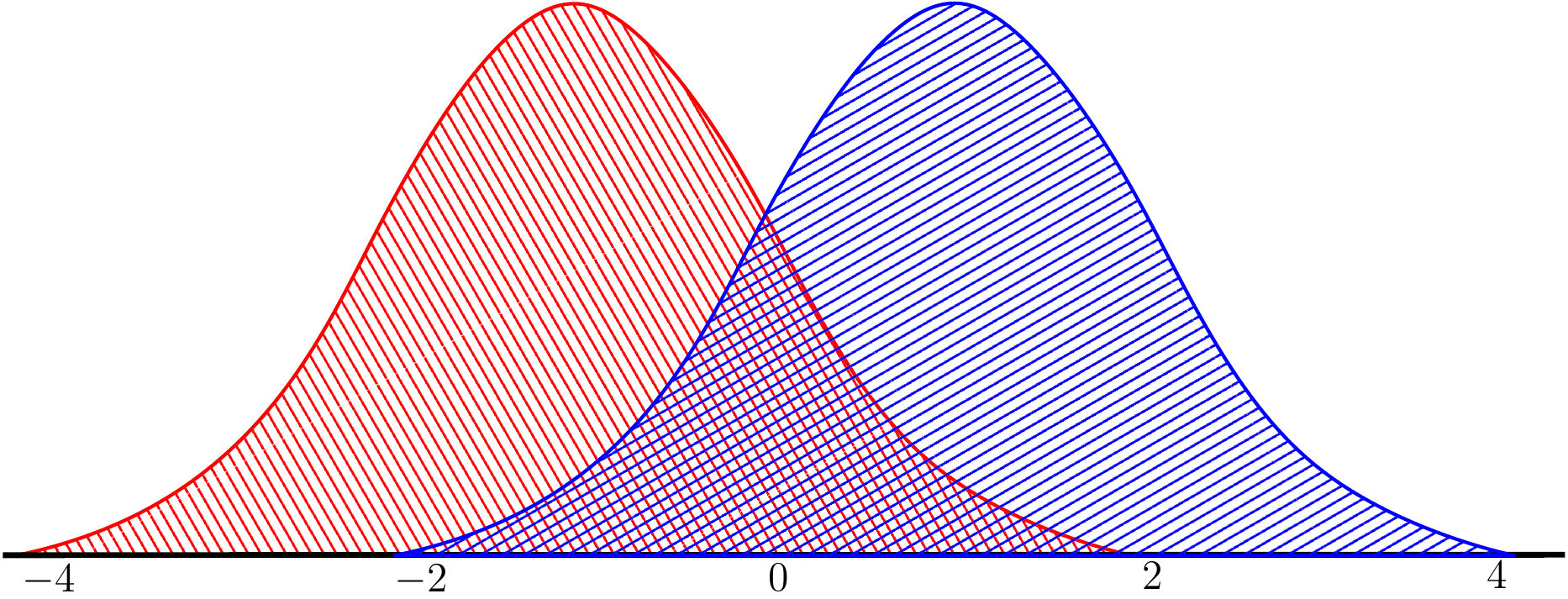}
\vspace{5pt}
\end{subfigure}
\hspace{5pt}
\begin{subfigure}[b]{0.31\linewidth}
\centering
\includegraphics[width=0.95\linewidth]{Figs/normal_dist.pdf}
\vspace{5pt}
\end{subfigure}
\begin{subfigure}[b]{0.31\linewidth}
\centering
\includegraphics[width=0.99\linewidth]{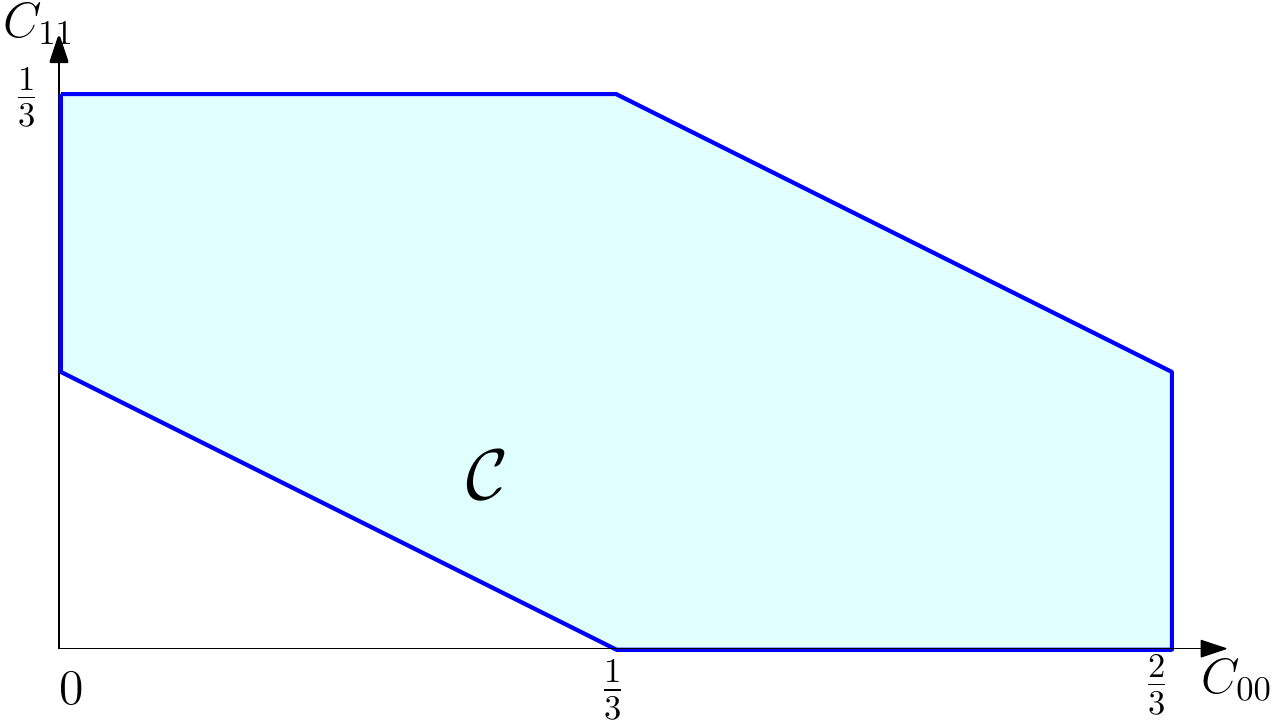}
\caption{\texttt{Unif}: $X|Y=0$ and $X|Y=1$ are uniform distributions over $[-1,1]$ and $[0,2]$ respectively; $P(Y=1)=\frac{1}{3}$.}
\label{fig:unif}
\end{subfigure}
\hspace{5pt}
\begin{subfigure}[b]{0.31\linewidth}
\centering
\includegraphics[width=0.8\linewidth]{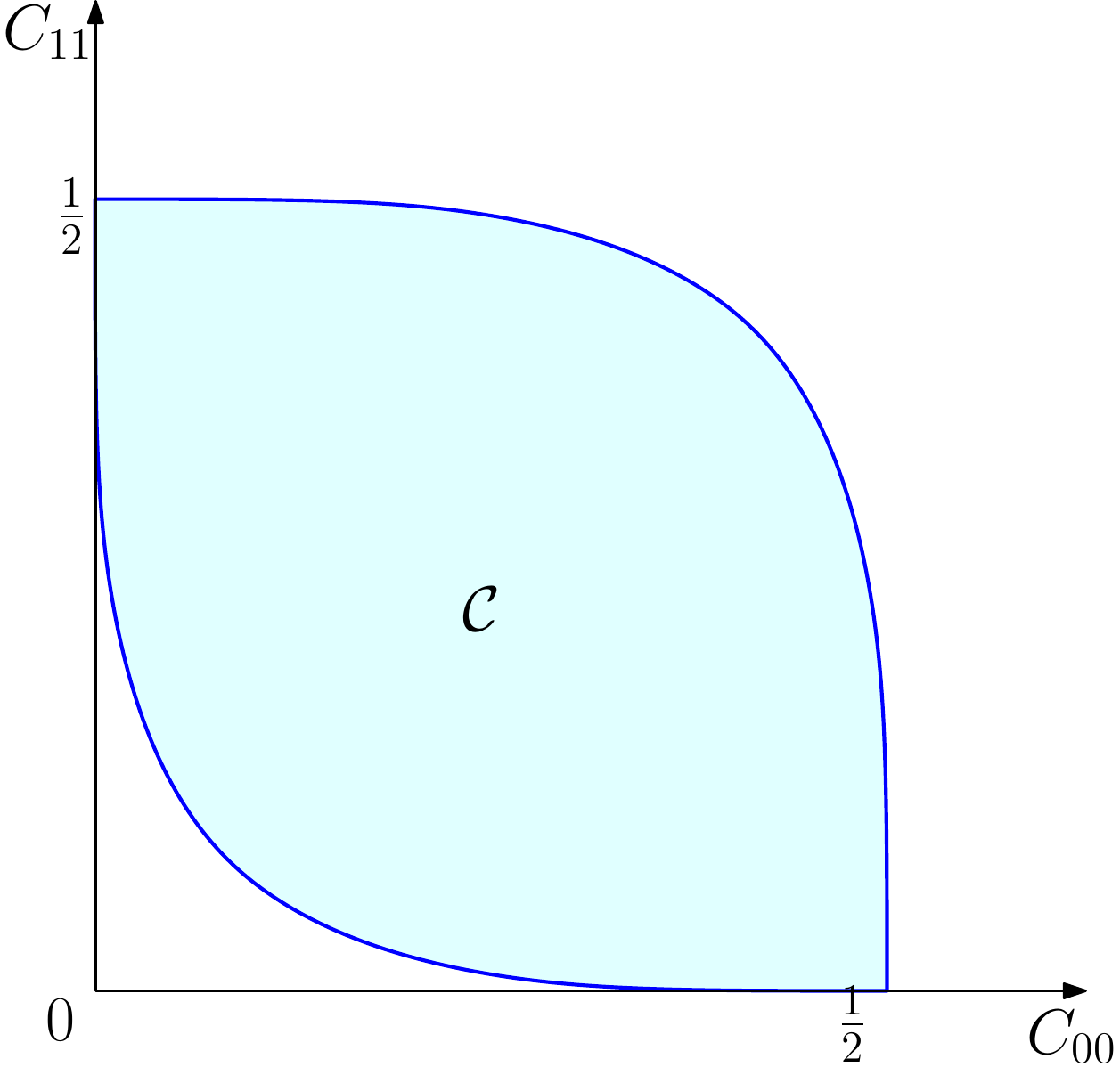}
\caption{\texttt{NormBal}: $X|Y=0$ and $X|Y=1$ are Gaussians with means $-\frac{1}{2}$ and $\frac{1}{2}$ respectively; $P(Y=1)=\frac{1}{2}$.} 
\label{fig:noram}
\end{subfigure}
\hspace{5pt}
\begin{subfigure}[b]{0.31\linewidth}
\centering
\includegraphics[width=0.8\linewidth]{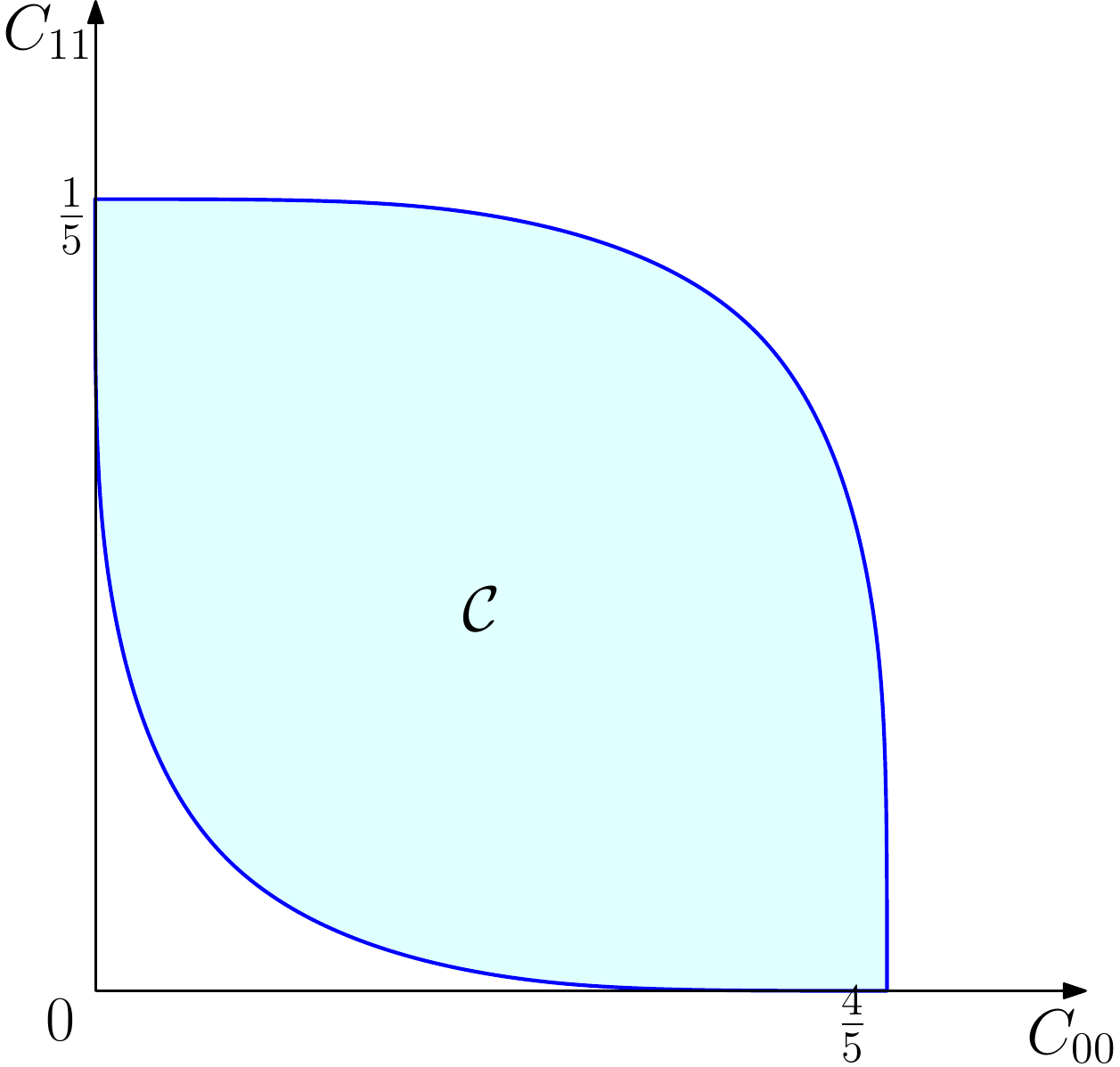}
\caption{\texttt{NormImbal}: $X|Y=0$ and $X|Y=1$ are Gaussians with means $-\frac{1}{2}$ and $\frac{1}{2}$ respectively; $P(Y=1)=\frac{1}{5}$.} 
\label{fig:normimbal}
\end{subfigure}

\caption{The set of achievable confusion  matrices $\cC$ for three example binary-labeled distributions: (a) \texttt{Unif}, (b) \texttt{NormBal}, and (c) \texttt{NormImbal}. The top row figures show the class-conditional distributions, and the bottom row figures represent the corresponding $\cC$.
While the confusion matrix  has four entries, there are only two degrees of freedom (the rows of the confusion matrix sum to the prior probabilities). We therefore only illustrate the projection of $\cC$ on to the diagonal entries $C_{00}$ and $C_{11}$. 
Note that the scales in the bottom row figures are different.}
\label{fig:example-feasible-conf}
\end{figure}

In order to understand optimal classifiers for the more complex learning problems in \ref{eq:opt-unconstrained} and  \ref{eq:opt-constrained} described in the previous section, we will find it useful to view these learning problems as optimization problems over all \emph{achievable confusion matrices}:
\begin{defn}[Achievable confusion matrices]
\label{defn:achievable-conf}
Define the set of \emph{achievable confusion matrices w.r.t.\ $D$} as the set of all confusion matrices achieved by some randomized classifier:
\[
\cC = \big\{ \textrm{\textup{vec}}(\C[h])\,|\, ~h:\X\>\Delta_n \big\} \subseteq \Delta_d
\]
where $\textrm{\textup{vec}}(\C[h]) \,=\, \left[C_{11}[h], \ldots, C_{1n}[h], \ldots, C_{n1}[h], \ldots, C_{nn}[h]\right]$ is  of dimension $d = n^2$.
\end{defn}

See Figure \ref{fig:example-feasible-conf} for an illustration of  the set of achievable confusion matrices for three simple synthetic distributions, which we will refer to as \texttt{Unif}, \texttt{NormBal} and \texttt{NormImBal}.  
For ease of exposition, in the above definition, we represent the  achievable confusion matrices  by a set of flattened vectors of dimension $d = n^2$. 
We will  also find it convenient from now on to overload notation and denote the performance measures by a function $\psi: [0,1]^d \> \R_+$ mapping a $d$-dimensional vector representation of the confusion matrix to a non-negative real number, and  the constraints by functions $\phi_1,\ldots,\phi_K: [0,1]^d \> \R_+$ defined on $d$-dimensional vectors. We will similarly represent an $n \times n$ loss matrix by a flattened $d$-dimensional vector $\L \in \R^d$.



\begin{prop}
\label{prop:CC-D-convex}
$\cC$ is a convex set.
\end{prop}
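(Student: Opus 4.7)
The plan is to show that any convex combination of achievable confusion matrices is itself achievable, by constructing an explicit randomized classifier whose confusion matrix realizes the combination.

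First, I would take two arbitrary elements $\vec(\C_1), \vec(\C_2) \in \cC$, witnessed by randomized classifiers $h_1, h_2 : \X \to \Delta_n$ with $\C[h_1] = \C_1$ and $\C[h_2] = \C_2$, and fix $\alpha \in [0,1]$. I would then define a new randomized classifier $h_\alpha : \X \to \Delta_n$ pointwise by the mixture
\[
h_\alpha(x) \,=\, \alpha\, h_1(x) \,+\, (1-\alpha)\, h_2(x),
\]
which is a valid element of $\Delta_n$ since $\Delta_n$ is itself convex. Intuitively, $h_\alpha$ first flips a biased coin (independent of $x$) with probability $\alpha$, and then predicts using $h_1$ or $h_2$ accordingly.

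Next, I would verify entrywise that $\C[h_\alpha] = \alpha \C[h_1] + (1-\alpha)\C[h_2]$. Using the definition of the confusion matrix and linearity of expectation,
\[
C_{ij}[h_\alpha] \,=\, \E_{X\sim \mu}\!\left[\eta_i(X)\, h_\alpha(X)_j\right] \,=\, \alpha\, \E_{X}\!\left[\eta_i(X)\, h_1(X)_j\right] + (1-\alpha)\, \E_{X}\!\left[\eta_i(X)\, h_2(X)_j\right],
\]
which equals $\alpha C_{ij}[h_1] + (1-\alpha) C_{ij}[h_2]$. Hence $\vec(\C[h_\alpha]) = \alpha \vec(\C_1) + (1-\alpha)\vec(\C_2) \in \cC$, establishing convexity.

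I do not anticipate a genuine obstacle here, since the result is essentially a consequence of permitting randomized classifiers: the randomization gives us closure under mixing ``for free,'' and the proof reduces to a one-line linearity computation. The only point to be slightly careful about is that the definition of $\cC$ is stated in terms of randomized classifiers $h:\X \to \Delta_n$ (not deterministic ones), which is precisely what makes the mixture $h_\alpha$ a legitimate element of the hypothesis class.
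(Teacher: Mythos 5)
Your proof is correct and is essentially identical to the paper's: both define the mixture classifier $h_\alpha(x) = \alpha h_1(x) + (1-\alpha)h_2(x)$ and observe that its confusion matrix is the corresponding convex combination by linearity. You merely spell out the entrywise linearity computation that the paper leaves implicit.
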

\begin{pf}
For any $\C_1,\C_2\in\cC$ and $\gamma\in[0,1]$, we will show $\gamma \C_1 + (1-\gamma)\C_2 \in \cC$. Clearly, there exists randomized classifiers $h_1,h_2:\X\>\Delta_n$ such that $\C_1 = \C[h_1]$ and $\C_2 = \C[h_2]$. Since
$h(x)= \gamma h_1(x) + (1-\gamma)h_2(x)$ is a valid randomized classifier,
$\C[h] = \gamma \C_1 + (1-\gamma)\C_2 \in \cC.$
\end{pf}
\vspace{2pt}



The set $\cC$ will play an important role in both our analysis of optimal classifiers and the subsequent development of consistent algorithms. Clearly, we can write \ref{eq:opt-unconstrained} as an unconstrained $d$-dimensional optimization problem over the convex set $\cC$:
 \begin{align}
 \min_{h:\X\>\Delta_n} \Psi[h] &= \min_{\C \in \cC} \psi(\C),
 \label{eq:unconstrained-reformulation}
 \tag{OP1*}
 \end{align}
 and  write \ref{eq:opt-constrained} as a constrained optimization problem over $\cC$:
 \begin{align}
 \min_{h:\X\>\Delta_n, \Phi_k [h]\leq 0, \forall k} \Psi[h] &= \min_{\C \in \cC, \bphi(\C)\leq \0} \psi(\C),
  \tag{OP2*}
 \label{eq:constrained-reformulation}
 \end{align}
where we denote $\boldsymbol{\phi}(\C) \,=\, [\phi_1(\C), \ldots, \phi_K(\C)]$.
 
\subsection{Bayes Optimal Classifier for the Unconstrained Problem}
While it is not clear if a classifier achieving the Bayes optimal performance
exists in general,
we show below that under mild assumptions, the optimal classifier for the unconstrained problem in \ref{eq:opt-unconstrained} can always be expressed as 
the optimal classifier for a certain  linear performance metric.
We show this for
 ``ratio-of-linear'' performance measures $\psi$, and for ``monotonic'' performance measures $\psi$ under a mild continuity assumption on $D$.

\begin{prop}[Bayes optimal classifier for ratio-of-linear $\psi$] 
 \label{prop:opt-classifier-ratio-linear}
Let the performance measure $\psi:[0,1]^{d}\>\R_+$ in \emph{\ref{eq:opt-unconstrained}} be of the form $\psi^\rl(\C) = \frac{\langle \A,\C \rangle}{\langle \B, \C \rangle}$ for some $\A,\B\in\R^{d}$ with $\langle \B,\C \rangle > 0 ~\forall \C\in\cC$. Then there exists loss matrix $\L^*$ (which depends on $\psi$ and $D$) such that any classifier that is optimal for the linear metric $\langle \L^*, \C \rangle$ is also optimal for \emph{\ref{eq:opt-unconstrained}}. 
\end{prop}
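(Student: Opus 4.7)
The plan is to invoke the classical Dinkelbach transformation for fractional programming. Define
\[
t^* \,=\, \inf_{\C \in \cC} \psi^{\rl}(\C) \,=\, \inf_{\C \in \cC} \frac{\langle \A, \C \rangle}{\langle \B, \C \rangle},
\]
which is finite and non-negative since $\psi^{\rl}$ takes values in $\R_+$ and $\langle \B, \C \rangle > 0$ for all $\C \in \cC$. The candidate loss vector is then $\L^* = \A - t^* \B \in \R^d$, and the claim becomes: any minimizer of the linear metric $\langle \L^*, \C \rangle$ over $\cC$ achieves $\psi^{\rl}$-value $t^*$.

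First, I would verify that $\langle \L^*, \C \rangle \geq 0$ for every $\C \in \cC$: this is immediate from $\psi^{\rl}(\C) \geq t^*$ together with $\langle \B, \C \rangle > 0$, which together yield $\langle \A, \C \rangle - t^* \langle \B, \C \rangle \geq 0$. Next I would show that $\inf_{\C \in \cC}\langle \L^*, \C \rangle = 0$. Picking any sequence $\{\C_n\} \subseteq \cC$ with $\psi^{\rl}(\C_n) \to t^*$, one computes
\[
\langle \L^*, \C_n \rangle \,=\, \big(\psi^{\rl}(\C_n) - t^*\big)\,\langle \B, \C_n \rangle,
\]
and the right-hand side tends to $0$ because $\langle \B, \C_n \rangle$ is uniformly bounded by $\|\B\|_1$ (as $\C_n \in [0,1]^d$). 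Combined with the lower bound, this pins the infimum at $0$.

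Finally, by Proposition~\ref{prop:loss-opt} applied to the loss $\L^*$, there is a deterministic classifier $h^*$ that attains the linear minimum, hence $\langle \L^*, \C[h^*] \rangle = 0$. For any classifier $\tilde h$ that is likewise optimal for $\langle \L^*, \cdot \rangle$, we have $\langle \A - t^*\B,\, \C[\tilde h]\rangle = 0$, which rearranges to $\psi^{\rl}(\C[\tilde h]) = t^*$. This shows $\tilde h$ is optimal for \ref{eq:opt-unconstrained}, completing the argument.

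The main subtle point I anticipate is being careful \emph{not} to assume that the infimum $t^*$ is attained in $\cC$ — the whole elegance of the Dinkelbach reduction is that it sidesteps this issue. We never need a minimizer of the fractional objective to exist a priori; we only need the linear problem $\min_\C \langle \L^*, \C\rangle$ to admit a minimizer, which is guaranteed (and explicit) via the pointwise cost-sensitive rule of Proposition~\ref{prop:loss-opt}. The attained linear minimizer is then automatically a minimizer of $\psi^{\rl}$, which simultaneously proves existence of a Bayes-optimal classifier for the ratio-of-linear metric and exhibits its form.
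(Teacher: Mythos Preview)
Your proposal is correct and takes essentially the same approach as the paper: define $\L^* = \A - t^*\B$ with $t^* = \inf_{\C \in \cC}\psi^{\rl}(\C)$, show $\inf_{\C \in \cC}\langle \L^*, \C\rangle = 0$, and conclude via Proposition~\ref{prop:loss-opt} that any linear minimizer achieves $\psi^{\rl}$-value $t^*$. The only minor variation is in how the infimum-equals-zero step is argued --- you use a minimizing sequence together with boundedness of $\langle \B, \C_n\rangle$, whereas the paper uses continuity of $t \mapsto \inf_\C \langle \A - t\B, \C\rangle$.
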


\begin{prop}[Bayes optimal classifier for monotonic $\psi$] 
\label{prop:opt-classifier-monotonic}
Let 
$\psi:[0,1]^{d}\>\R_+$ in \emph{\ref{eq:opt-unconstrained}} be differentiable and bounded, and be monotonically decreasing in $C_{ii}$ for each $i$ and non-decreasing in $C_{ij}$ for all $i,j$. Assume ${\boldeta}(X)$ is a continuous random vector. Then there exists a loss matrix $\L^*$ (which depends on $\psi$ and $D$) such that any classifier that is optimal for the linear metric $\langle \L^*, \C \rangle$ over $\cC$ is also optimal for \emph{\ref{eq:opt-unconstrained}}. 
\end{prop}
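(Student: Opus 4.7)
The plan is to use first-order optimality conditions on the convex set $\cC$. Since $\cC$ is a compact convex subset of $[0,1]^d$ (it is the image of the space of randomized classifiers under the continuous map $h \mapsto \C[h]$, and is convex by \Prop{prop:CC-D-convex}) and $\psi$ is continuous, the infimum $\min_{\C\in\cC}\psi(\C)$ is attained at some $\C^* \in \cC$ achieved by a randomized classifier. I would then define the loss matrix $\L^* = \nabla \psi(\C^*)$ and prove that the Bayes optimal classifier for the linear objective $\langle \L^*, \C\rangle$ has confusion matrix equal to $\C^*$.

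First, the monotonicity hypotheses translate into sign conditions: $L^*_{ii} \le 0$ (strict, from the decreasing-in-diagonal assumption) and $L^*_{ij} \ge 0$ for $i \neq j$. Because each $L^*_{jj}$ is strictly negative while the off-diagonal entries are nonnegative, the columns of $\L^*$ (viewed as an $n\times n$ matrix) are pairwise distinct. The standard first-order optimality condition for minimizing a differentiable function over a convex set gives $\langle \L^*, \C - \C^*\rangle \ge 0$ for all $\C \in \cC$, so $\C^*$ is in particular a minimizer of the linear metric $\langle \L^*, \C\rangle$ over $\cC$.

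Next, I would invoke \Prop{prop:loss-opt} to write a Bayes optimal classifier for $\L^*$ pointwise as $h^{**}(x) \in \argmin_{j\in[n]} \sum_i \eta_i(x) L^*_{ij}$. The crucial step is a uniqueness argument under the continuity assumption on $\boldeta(X)$: for any two distinct columns $j_1 \neq j_2$ of $\L^*$, the vector $\v := (L^*_{ij_1} - L^*_{ij_2})_{i\in[n]}$ is nonzero, so the event $\{x : \langle \v, \boldeta(x)\rangle = 0\}$ is contained in a hyperplane, and by continuity of $\boldeta(X)$ has $\mu$-measure zero. Taking a finite union over all pairs $(j_1,j_2)$ shows that the $\argmin$ is a singleton almost everywhere. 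Consequently any (possibly randomized) classifier $h$ that attains $\min_{\C\in\cC}\langle\L^*,\C\rangle$ must, $\mu$-a.e., place all its mass on this unique $\argmin$, and so every such minimizer realizes the same confusion matrix $\C[h^{**}]$. Since $\C^*$ is one such minimizer, $\C[h^{**}] = \C^*$, which proves that $h^{**}$ is optimal for $\psi$.

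The main obstacle is the uniqueness step: making precise the meaning of ``$\boldeta(X)$ is a continuous random vector'' so that $\langle \v, \boldeta(X)\rangle = 0$ is null for every nonzero $\v$, and handling the boundary case in which $\psi$ is only weakly decreasing on the diagonal (so some $L^*_{ii}$ could vanish and columns could coincide). In the latter case one can use a short perturbation argument, replacing $\L^*$ by $\L^* - \epsilon \mathbf{I}$ for small $\epsilon>0$: the perturbed loss has strictly distinct columns, its Bayes classifier converges to $h^{**}$ as $\epsilon\to 0$, and continuity of $\psi$ then transfers optimality. A secondary technical issue is confirming compactness of $\cC$ (closedness follows from a standard weak-$*$ compactness argument on randomized classifiers, or alternatively by viewing $\cC$ as the range of a nonatomic vector measure).
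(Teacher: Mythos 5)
Your proof takes essentially the same route as the paper: set $\L^*=\nabla\psi(\C^*)$ at a minimizer $\C^*$, use first-order optimality over the convex set to see $\C^*$ minimizes $\langle\L^*,\cdot\rangle$, use the monotonicity to see that the columns of $\L^*$ are pairwise distinct, and then use continuity of $\boldeta(X)$ to get that the pointwise $\argmin$ defining the Bayes classifier for $\L^*$ is a singleton a.e., hence every $\L^*$-optimal classifier has the same confusion matrix. The one organizational difference is that the paper takes $\C^*\in\argmin_{\C\in\overline{\cC}}\psi(\C)$ over the \emph{closure} $\overline{\cC}$ and then appeals to a cited technical lemma (Lemma~12 of the conference version) to conclude both that $\C^*\in\cC$ and that the $\L^*$-minimizer in $\cC$ is unique; you instead assert compactness of $\cC$ up front (via a weak-$*$ argument, which is fine but is exactly the kind of step the paper's detour through $\overline{\cC}$ avoids) and then re-derive the uniqueness part of that lemma yourself via the hyperplane-null-set argument. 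Your re-derivation is correct; one small point worth making explicit is that when $\bell_{j_1}-\bell_{j_2}$ is a nonzero multiple of the all-ones vector, the set $\{q\in\Delta_n:\langle\bell_{j_1}-\bell_{j_2},q\rangle=0\}$ is simply empty rather than a lower-dimensional slice, so the union bound over pairs still works. You also correctly determine the signs $L^*_{ii}<0$, $L^*_{ij}\ge 0$ for $i\ne j$ from the stated monotonicity; note the paper's appendix text states these with the opposite signs, which appears to be a typo (the conclusion it actually uses, that no two columns coincide, holds either way). Your perturbation remark for the weakly-decreasing case is unnecessary given the proposition's ``monotonically decreasing'' hypothesis but does no harm.
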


\begin{figure}[t]
\centering
\begin{subfigure}[b]{0.45\linewidth}
\includegraphics[width=0.9\linewidth]{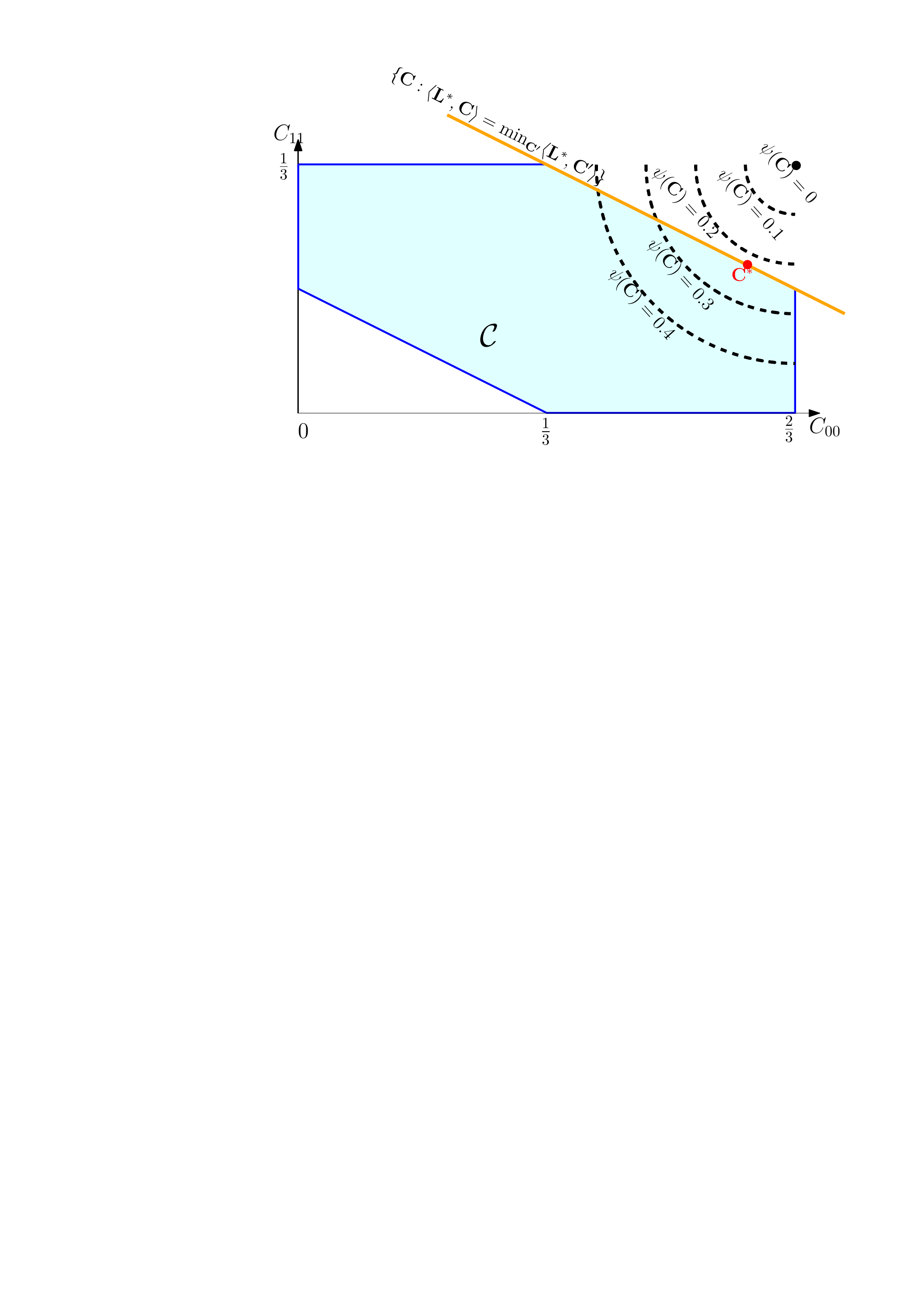}
\caption{}
\label{fig:contours-corner-point-unif}
\end{subfigure}
\hspace{20pt}
\begin{subfigure}[b]{0.4\linewidth}
\centering
\includegraphics[width=0.9\linewidth]{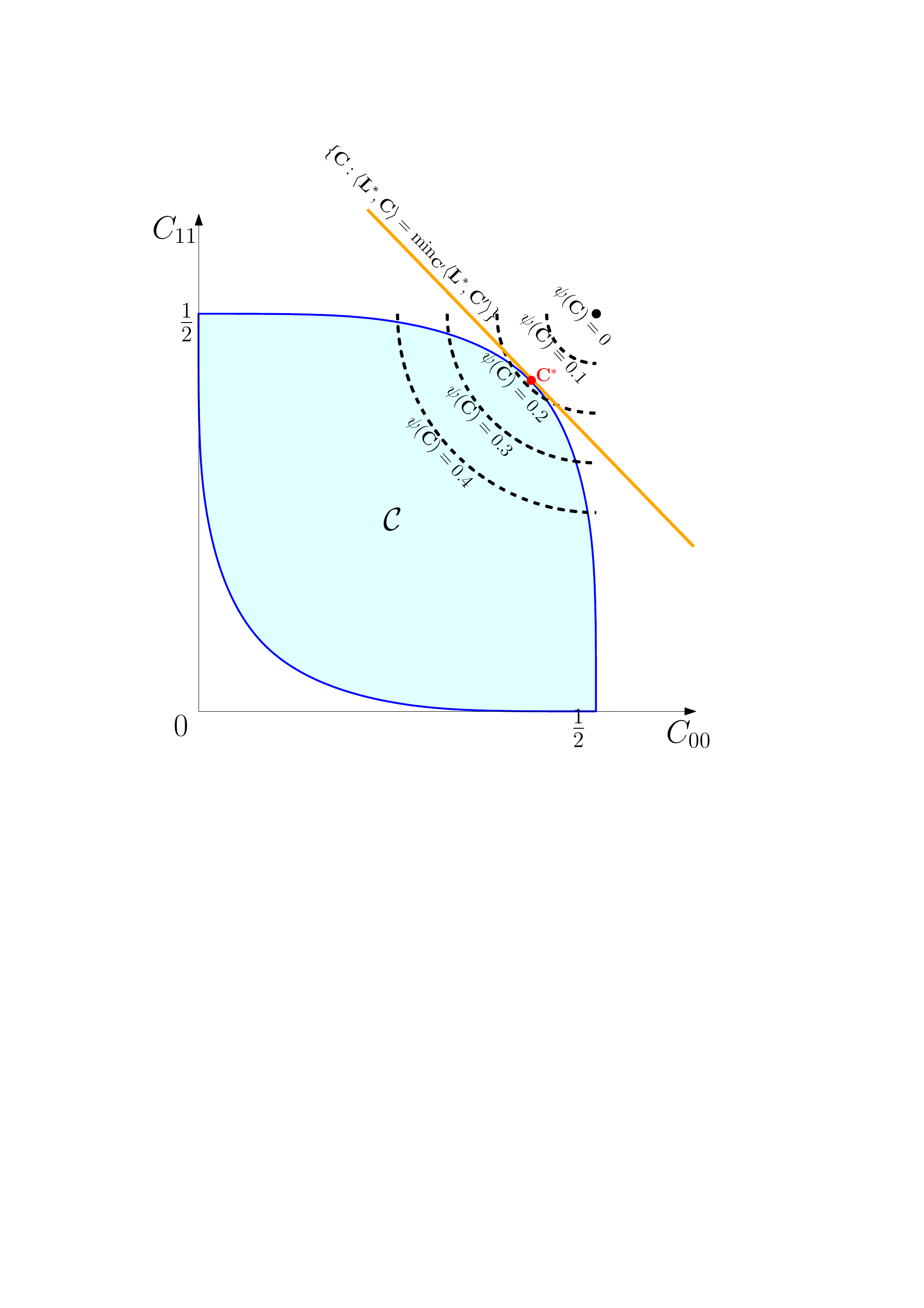}
\caption{} 
\label{fig:contours-corner-point-norm}
\end{subfigure}
\caption{Illustration of the Bayes-optimal classifier for 
the unconstrained problem in \ref{eq:opt-unconstrained} with a monotonic $\psi$. The figures show the set of confusion matrices $\cC$ for distributions \texttt{Unif} (left) and \texttt{NormBal} (right) in Figure \ref{fig:example-feasible-conf} (represented by the diagonal entries), the contours of the monotonic performance metric  $\psi$, and the corresponding solution $\C^*$ to $\min_{\C \in \cC} \psi(\C)$ (red dot). The black dot denotes the minimizer over 
all confusion matrices (even those that are not achievable). 
}
\label{fig:contours-corner-point}
\vspace{-5pt}
\end{figure}

See Appendices \ref{app:prop-bayes-ratio-of-linear} and \ref{app:proof-opt-classifier-monotonic} for the proofs.
 In Figure \ref{fig:contours-corner-point}, we provide an illustration for Proposition \ref{prop:opt-classifier-monotonic} using the 2-class example distributions \texttt{Unif} and \texttt{NormBal} from Figure \ref{fig:example-feasible-conf}.
We consider a monotonic performance metric $\psi$ whose contours are shown overlayed in the figure with the set of feasible confusion matrices $\cC$. It can be clearly seen that the minimal value of $\psi$ over $\cC$ is achieved by a point $\C^*$ on the boundary. Because $\cC$ is a convex set, it follows that all points on the boundary of $\cC$ are minimizers of some linear function $\langle \L, \C \rangle$ over $\C \in \cC$. Therefore, $\C^*$ is also a minimizer of $\langle \L^*, \C \rangle$ for some loss matrix $\L^*$. 

However, for $\C^*$ to be a unique minimizer of $\langle \L^*, \C \rangle$, 
we need the additional continuity assumption on $\boldeta(X)$ in Proposition \ref{prop:opt-classifier-monotonic} to hold. 
This does not hold for the \texttt{Unif} distribution in Figure \ref{fig:unif}, where the corresponding conditional-class probability vectors $\boldeta(X)$ take only 3 possible values in $\Delta_2$. In contrast, $\boldeta(X)$ is continuous for the \texttt{NormBal} distribution in Figure \ref{fig:contours-corner-point-norm}, and as result, 
the minimizer $\C^*$ of $\psi(\C)$, is also a unique minimizer for some linear function $\langle \L^*, \C \rangle$.


In Figure \ref{fig:3-class-best-classifiers}, we compare the forms of the Bayes-optimal classifier for the standard $\zo$ loss
and for the H-mean loss
in Table \ref{tab:perf-measures}. 
The latter seeks to explicitly balance the classifier's performance across all classes and is a monotonic function of (the diagonal elements of) $\C$. 
We provide plots of the optimal classifiers for a toy 3-class distribution, which contains equal class priors and has a
conditional-class probability distribution $\boldeta(X)$ which is continuous. 
We know that the optimal classifier for the $\zo$ loss simply outputs the label
with the maximum class probability
$h^*(x) = \argmax^*_{i} \eta_i(x)$. As seen in Figure \ref{fig:3-class-best-classifiers}(a), despite the class priors being equal,
this classifier predicts class 1 on only a small fraction of instances. 
On the other hand, for the H-mean loss,  Proposition \ref{prop:opt-classifier-monotonic} tells us that the  optimal classifier 
can be obtained by minimizing some linear function of $\C$, 
 the optimal classifier for which, in this particular case, is of the form
$h^*(x) = \argmax^*_{i} w^*_{i} \eta_i(x)$, 
for some distribution-dependent weights $w^*_i \in \R_+$. 
Note that
$w^*_i$ can be seen as the penalty associated with a wrong prediction 
on class $i$, which in this case is the highest for class 1. 
The resulting classifier, shown in Figure \ref{fig:3-class-best-classifiers}(b),
therefore yields equitable performance across the three classes.



\begin{figure}[t]
\centering
\begin{subfigure}[b]{0.45\linewidth}
\includegraphics[width=0.85\linewidth]{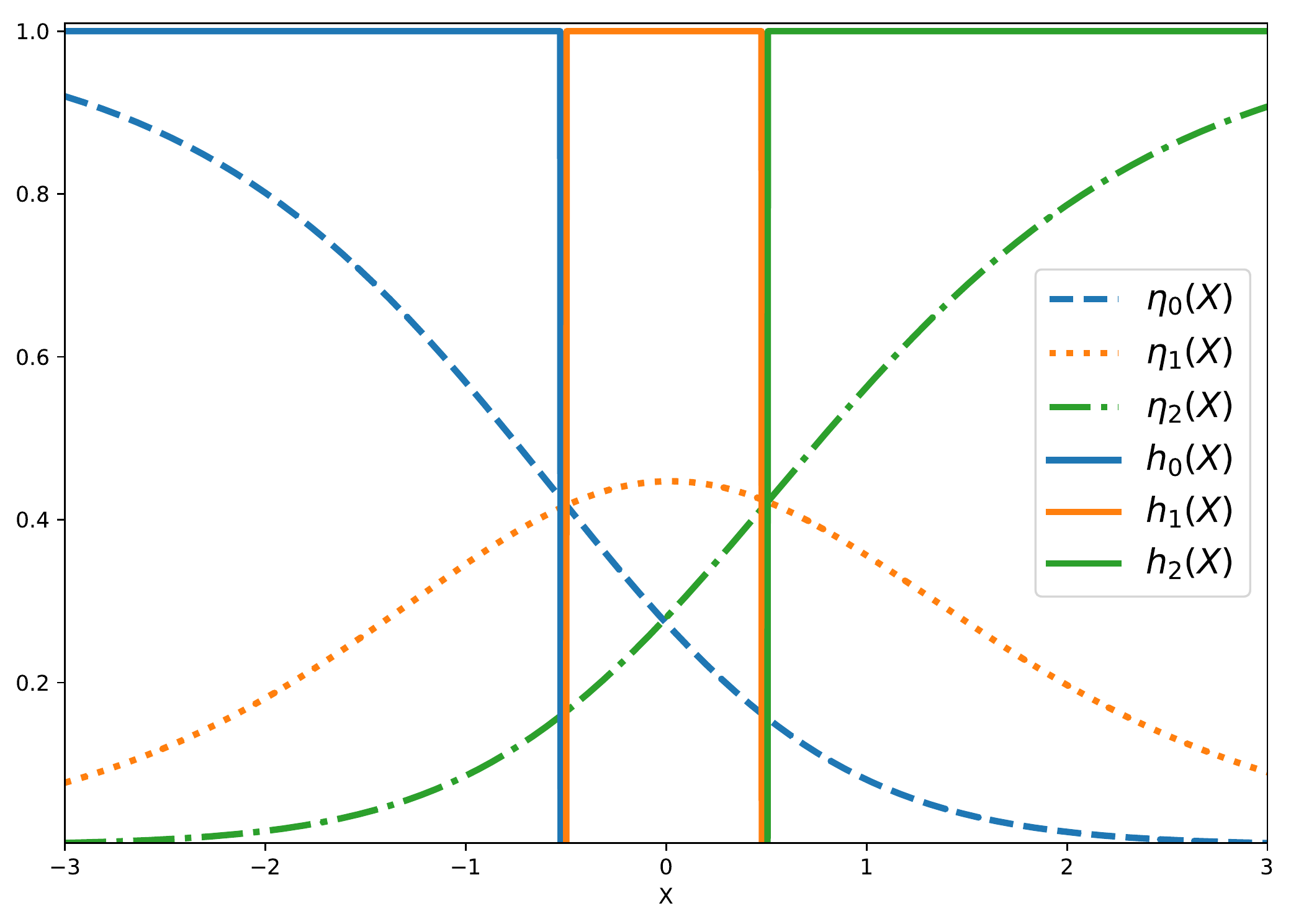}
\caption{Bayes-optimal classifier for 0-1 loss}
\label{fig:4a}
\end{subfigure}
\hspace{20pt}
\begin{subfigure}[b]{0.45\linewidth}
\centering
\includegraphics[width=0.85\linewidth]{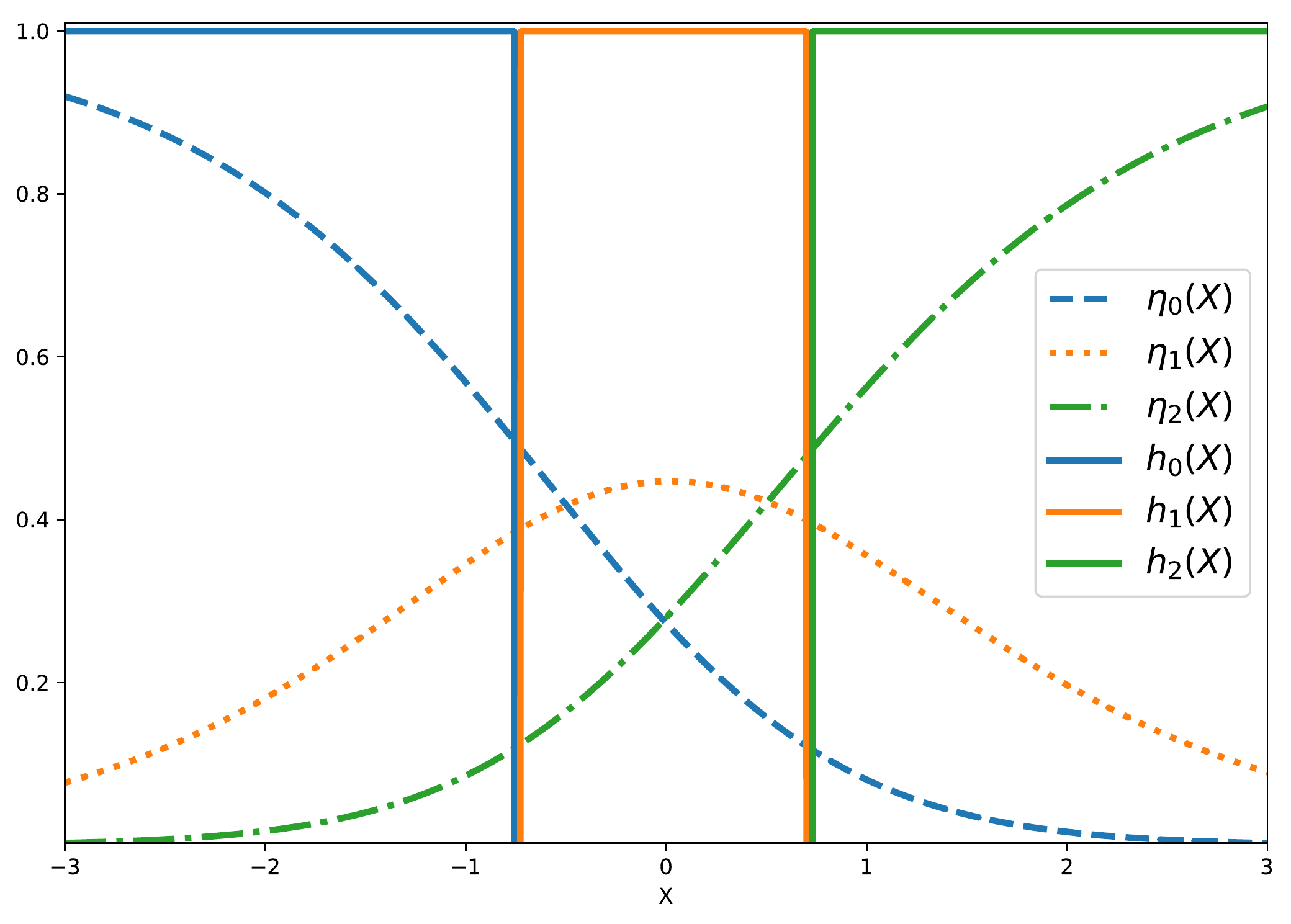}
\caption{Bayes-optimal classifier for H-mean loss} 
\label{fig:4b}
\end{subfigure}
\caption{
Comparison of Bayes-optimal classifiers for the $\zo$ loss (left)
and the H-mean loss (right).
We use a toy 3-class (denoted as class 0, 1 and 2) distribution over an one-dimensional instance space $\X = \R$, with equal priors, and with the class-conditional distribution for the three classes being a Gaussian distribution with means $-1, 0$ and $1$ respectively and variance 1.
We plot the conditional-class probability function $\eta_i(X)$, and the outputs of the optimal classifier
$h^*_i(X)$ for each class $i \in [3]$. For the $\zo$ loss, the optimal classifier predicts class 1 only on a small fraction of examples, whereas for the optimal classifier H-mean loss has greater coverage for class 1.
}
\label{fig:3-class-best-classifiers}
\vspace{-3pt}
\end{figure}

\subsection{Bayes Optimal Classifier for the Constrained Problem}
In both the characterizations in the previous section, the Bayes-optimal classifier for the unconstrained problem in \ref{eq:opt-unconstrained} is \textit{deterministic}.
An analogous statement  does not hold in general for the  constrained problem in \ref{eq:opt-constrained}. However, 
we can prove a weaker characterization for \ref{eq:opt-constrained} showing that the Bayes optimal classifier is a \textit{randomized} classifier that is supported by at most $d+1$ deterministic classifiers.
\begin{prop}[Bayes optimal classifier for continuous $\psi, \phi_1,\ldots, \phi_K$]
\label{prop:opt-classifier-constrained}
Let the performance measure $\psi: [0,1]^{d}\>\R_+$ and the constraint functions $\phi_1, \ldots, \phi_K: [-1,1]^{d}\>\R_+$ in \emph{\ref{eq:opt-constrained}} be continuous and bounded. Then there exists $d+1$ loss matrices $\L^*_1, \L^*_2, \ldots, \L^*_{d+1}$ (which can depend on $\psi, \phi_k$'s and $D$) 
such that an optimal classifier for \emph{\ref{eq:opt-constrained}} can be expressed as a randomized combination of the deterministic classifiers $h_1, h_2, \ldots, h_{d+1}$, where $h_i$ is optimal for the linear metric 
given by $\L^*_i$.
\end{prop}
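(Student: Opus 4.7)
The plan is to recast \ref{eq:opt-constrained} as a geometric optimization over the convex set $\cC$, decompose an optimal confusion matrix using Carathéodory's theorem, and then extract each loss matrix via a supporting hyperplane.

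I would first pass to the equivalent formulation in \ref{eq:constrained-reformulation}, namely $\min_{\C \in \cC}\,\psi(\C)$ subject to $\bphi(\C) \leq \0$. By Proposition~\ref{prop:CC-D-convex} the set $\cC$ is convex, and since $\cC \subseteq [0,1]^d$ it is bounded; a standard closure/measurability argument (or passing to $\overline{\cC}$) gives compactness. Continuity of $\psi$ and of each $\phi_k$ then implies that the feasible set $\cF = \{\C \in \cC : \bphi(\C) \leq \0\}$ is compact, so $\psi$ attains its minimum at some $\C^* \in \cF$.

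Next, since any randomized classifier can be realized as a mixture over deterministic ones (by independently sampling a class at each $x$), $\cC$ coincides with the convex hull of $\cC_{\textup{det}} = \{\C[h] : h \text{ deterministic}\}$. Applying Carathéodory's theorem in $\R^d$ to the compact convex set $\cC$, the optimum admits a representation $\C^* = \sum_{i=1}^{d+1} \alpha_i \C^*_i$ with $\alpha \in \Delta_{d+1}$ and each $\C^*_i$ an extreme point of $\cC$; extreme points of the convex hull of a closed set lie in that set, so each $\C^*_i \in \cC_{\textup{det}}$ is realized by some deterministic classifier $h_i$. Because $\C^*_i$ is extreme, it lies on the boundary of $\cC$, and the supporting hyperplane theorem yields a loss vector $\L^*_i \in \R^d$ with
\[
\langle \L^*_i, \C^*_i \rangle \,\leq\, \langle \L^*_i, \C \rangle \qquad \text{for all } \C \in \cC.
\]
This says $h_i$ is an optimal (not necessarily unique) classifier for the linear performance metric induced by $\L^*_i$; combining this with the randomized combination $\sum_i \alpha_i h_i$ realizing $\C^*$ completes the argument.

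The main technical hurdle is the claim that the extreme points in the Carathéodory decomposition are genuinely confusion matrices of \emph{deterministic} classifiers — equivalently, that $\cC_{\textup{det}}$ is closed in $\R^d$. For general (in particular non-atomic) marginals $\mu$ over a continuous $\X$, this typically requires invoking Lyapunov's theorem on the convex range of vector-valued measures, which guarantees that the extreme points of the image set $\cC$ are achieved by pure, partition-based rules. Once this is in hand, the two remaining ingredients — Carathéodory in $\R^d$ and the supporting hyperplane theorem — are routine.
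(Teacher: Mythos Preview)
Your argument follows the same skeleton as the paper's --- Carath\'eodory to write $\C^*$ as a convex combination of at most $d+1$ extreme points of $\cC$, then a supporting hyperplane at each extreme point to produce the loss matrices $\L^*_i$. The one substantive difference is how you tie extreme points to deterministic classifiers. You argue that extreme points of $\cC=\conv(\cC_{\textup{det}})$ must already lie in $\cC_{\textup{det}}$, which forces you to establish that $\cC_{\textup{det}}$ is closed (and you correctly identify Lyapunov's theorem as the tool for non-atomic $\mu$). The paper instead reverses the order: from the supporting hyperplane $\L^*_i$ at the extreme point $\C^*_i$, it invokes Proposition~\ref{prop:loss-opt} to obtain a deterministic minimizer $h_i$ of $\langle \L^*_i,\cdot\rangle$ over $\cC$, and then asserts that extremality of $\C^*_i$ makes it the \emph{unique} minimizer, forcing $\C[h_i]=\C^*_i$. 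This sidesteps any Lyapunov-type argument entirely, at the price of the uniqueness step (extreme versus exposed points), which the paper does not justify further. Your route is heavier machinery but is more explicit about where the measure-theoretic content lives; the paper's route is lighter but leans on a claim that would itself need work in full generality.
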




See Appendix \ref{app:prop-bayes-constrained} for the proof.
 When the objective and constraints $\psi, \phi_1,\ldots,\phi_K$ together depend on fewer than $d=n^2$ entries of the confusion matrix, we can extend the above proposition to show that the number of deterministic classifiers needed to construct an  optimal classifier for {\ref{eq:opt-constrained}} is at most one plus the number of confusion matrix entries the metrics depend on. For example, if we wish to optimize the G-mean metric (Example \ref{ex:g-mean}) subject to a constraint on the class-1 precision (Example \ref{ex:prec}), the objective and constraints together depend only on $2n-1$ ``entries'' of the confusion matrix, and so an optimal classifier for this problem can be expressed as randomized combination of at most $2n$ deterministic classifiers. In Section \ref{sec:fairness}, we provide a more detailed discussion about succinct vector representations for confusion matrices that require fewer than $n^2$ entries.

Under continuity assumptions on $\boldeta(X)$ (which essentially translate to the space of achievable confusion matrices $\cC$ being \emph{strictly} convex),
one can further show that the Bayes-optimal classifier can be expressed as a randomized combination of \emph{two} deterministic classifiers $h_1$ and $h_2$, where $h_i$ is optimal for some linear metric $\L^*_i$
\citep{yang2020fairness}. The same characterization straight-forwardly holds for unconstrained minimization of a general performance metric $\psi$
\citep{wang2019consistent}.




\subsection{Na\"{i}ve Plug-in Approach}
\label{sec:naive}
The characterization results for the unconstrained problem in \ref{eq:opt-unconstrained} 
suggest a simple algorithmic approach to finding the optimal classifier: search over a large range of  loss matrices $\L$, estimate the optimal classifier for each such $\L$, and select among these a classifier that yields maximal $\psi$-performance (e.g.\ on a held-out validation data set). This is the analogue of ``plug-in'' type methods for binary 
performance metrics
(such as those considered by \citet{Koyejo+14} and \citet{Narasimhan+14}), 
where one searches over possible thresholds on the (estimated) class probability function.
However, while the binary case involves a 
search over values for a single threshold parameter, 
in the multiclass case,
one may need to perform a brute-force search over
as many as  $d$ 
parameters, requiring time exponential in $d$. For large $d$, such a na\"{i}ve plug-in approach is computationally intractable. In fact, this procedure becomes even more difficult to implement for the constrained problem in \ref{eq:opt-constrained}, where the optimal classifier is a randomized combination of multiple $\L$-optimal classifiers, requiring a brute-force search of over multiple loss matrices $\L$.

In what follows, we will  design efficient learning algorithms that instead search over the space of feasible confusion matrices $\cC$ using suitable optimization methods.

\begin{figure}[t]
\begin{center}
\begin{subfigure}{0.32\linewidth}
\centering
\includegraphics[width=0.95\linewidth]{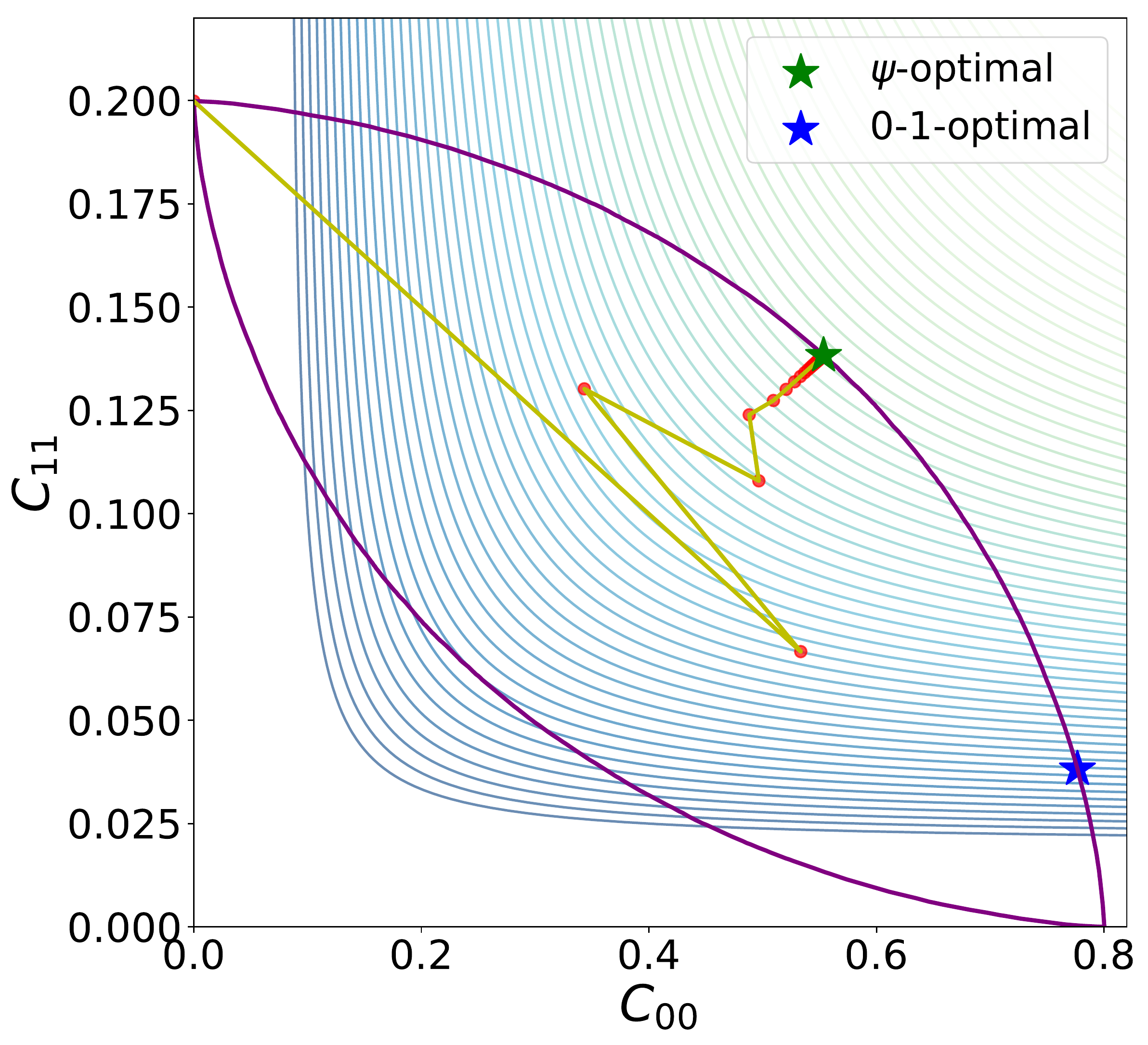}
\caption{Frank-Wolfe}
\label{fig:FW-trajectory}
\end{subfigure}
\begin{subfigure}{0.32\linewidth}
\centering
\includegraphics[width=0.95\linewidth]{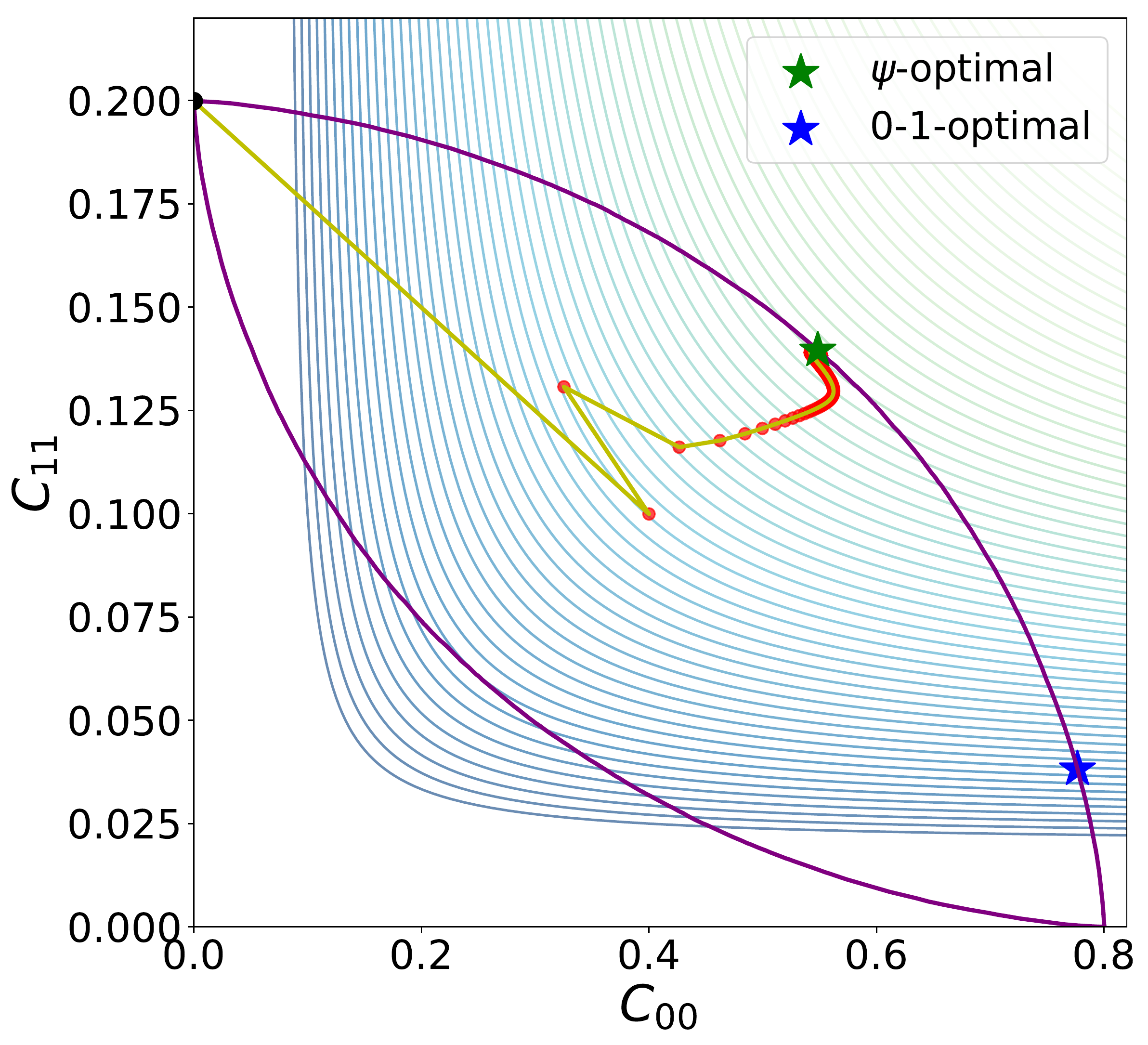}
\caption{GDA}
\label{fig:GDA-trajectory}
\end{subfigure}
\begin{subfigure}{0.32\linewidth}
\centering
\includegraphics[width=0.95\linewidth]{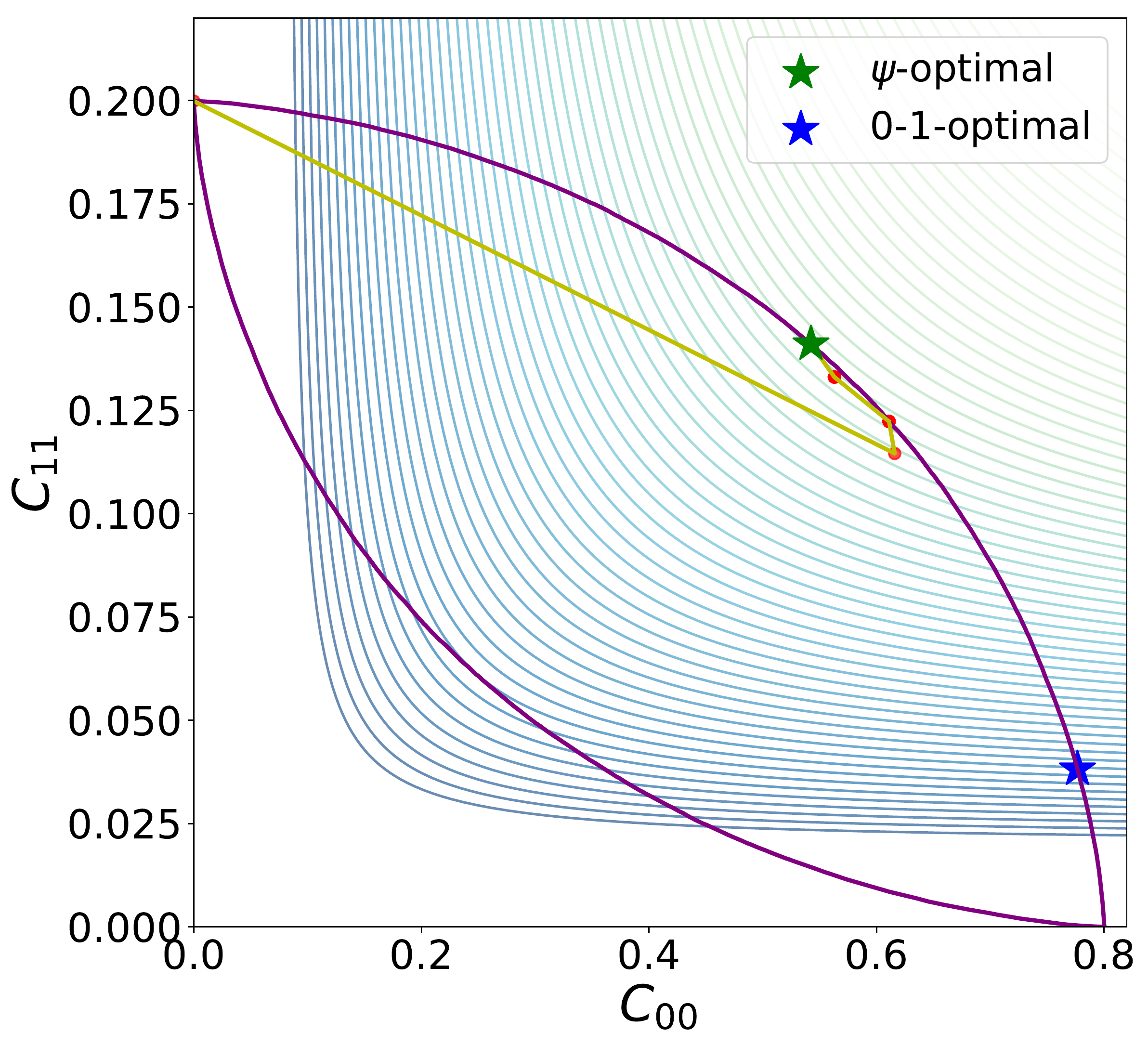}
\caption{Ellipsoid}
\label{fig:ellipsoid-trajectory}
\end{subfigure}
\caption{
Illustration of the Frank-Wolfe (Algorithm \ref{alg:FW}), Gradient Descent-Ascent (Algorithm \ref{alg:GDA}) and Ellipsoid (Algorithm \ref{alg:ellipsoid}) algorithms in minimizing the H-mean loss $\psi^{\HM}$ on the \texttt{NormImbal} distribution in Figure \ref{fig:normimbal}. The figures contain the space of achievable confusion matrices $\cC$ (with purple colored boundary), along with the contours of $\psi^{\HM}$. 
The trajectory of the confusion matrix $\C[\bar{h}^t]$ of the averaged classifier up until iteration $t$ is shown, where $\bar{h}^t = h^t$ for Frank-Wolfe, $\bar{h}^t = \frac{1}{t}\sum_{\tau=1}^\tau h^t$ for GDA, and  $\bar{h}^t = \frac{1}{t}\sum_{\tau=1}^t \alpha^*_\tau h^\tau$ for
 ellipsoid, with the optimal coefficients $\balpha^* \in \argmin_{\balpha \in \Delta_t} \psi\left(\sum_{\tau=1}^t \alpha_\tau \C^\tau\right)$ computed for iterates $1, \ldots t$. The averaged classifier is seen to converge to an optimal classifier for the H-mean loss and away from that for the 0-1 loss.
}
\label{fig:trajectories-unconstrained}
\end{center}
\vspace{-10pt}
\end{figure}

\begin{table}[t]
        \centering
        \caption{Algorithms for the unconstrained problem  in \ref{eq:opt-unconstrained}, with the number calls to the LMO and the optimality gap $\psi(\C[\bar{h}]) - \min_{\C\in \cC}\psi(\C)$ for the returned classifier $\bar{h}$.  Here $\rho^\eff = \rho+\sqrt{d}\rho'$.  
        } 
        \label{tab:algorithms-unc}
        \begin{small}
        \begin{tabular}{cccc}
        \hline
        \textbf{Algorithm} & Assumption on $\psi$
        & \textbf{\# LMO Calls} & \textbf{Optimality Gap}
        \\
        \hline
             Frank-Wolfe & Convex, smooth, Lipschitz & $\displaystyle\cO\left(1/\epsilon\right)$  &
             $\cO\big(\epsilon + \rho^\eff \big)$
             \\
            Gradient Descent-Ascent & 
            Convex, Lipschitz & 
        $\displaystyle\cO\left(1/\epsilon^2\right)$ &
        $\cO\big(\epsilon + \rho^\eff \big)$
        \\
        Ellipsoid & 
        Convex, Lipschitz &
        $\displaystyle\cO\left(d^2\log(d/\epsilon)\right)$ & 
        $\cO(\epsilon + \rho^\eff)$
        \\
        Bisection & 
        Ratio-of-linear &
        $\displaystyle\cO\left(\log(1/\epsilon)\right)$ 
        & $\cO\left(\epsilon + \rho^\eff\right)$\\
        \hline
        \end{tabular}
        \end{small}
    \end{table}

\section{Algorithms for Unconstrained Problems}
\label{sec:unconstrained}

We start with algorithms for solving the unconstrained learning problem in \ref{eq:opt-unconstrained}. As a running example to illustrate our algorithms, we will use the task of maximizing the H-mean loss  on the \texttt{NormImbal} distribution described in Figure \ref{fig:example-feasible-conf}(e).

%
%
%
%
 As noted in our discussion of \ref{eq:unconstrained-reformulation}, one can view \ref{eq:opt-unconstrained} as an optimization problem over $\cC$: $\min_{\C\in\cC} \psi(\C)$.
While $\cC$ is a convex set, 
it is not available directly to the learner
as the set of all confusion matrices is hard to characterize.
On the other hand, one operation that is easy to perform is to find an optimal classifier for a \emph{linear} loss $\langle \L, \C\rangle$ over $\cC$. Indeed this amounts to solving a  cost-sensitive learning problem \citep{Elkan01}, a task for which there are numerous classical methods available. So we assume access to an oracle for solving this linear minimization problem over $\cC$, which takes as input a loss matrix $\L$ and a sample $S$, and outputs a classifier $\widehat{g}$ and an estimate of the confusion matrix at $\widehat{g}$ with the following properties:
\begin{defn}[Linear minimization oracle]
\label{defn:lmo}
Let $\rho, \rho', \delta \in (0, 1)$. A linear minimization oracle, denoted by $\Omega$, takes a loss matrix $\L \in \R^d$ and a sample $S$ as input, and outputs a classifier $\widehat{g}$ and a confusion matrix $\hat{\bGamma} \in \R^d$. We say $\Omega$ is a $(\rho, \rho', \delta)$-approximate LMO for sample size $N$, if,  with probability $\geq 1 - \delta$ over draw of $S \sim D^N$, for any $\L \in \R_+^d$ with $\|\L\|_\infty \leq 1$, it outputs $(\widehat{g},\hat{\bGamma}) = \Omega(\L; S)$ such that:
\begin{equation*}
\langle \L, \C[\widehat{g}]\rangle \,\leq\, \min_{h:\X\>\Delta_n}\langle \L, \C[h] \rangle + \rho;\quad
\|\C[\widehat{g}] \,-\, \hat{\bGamma}\|_\infty \,\leq\, \rho'.
\label{eq:lmo2}
\end{equation*}
The approximation constants $\rho$ and $\rho'$ may in turn depend on  the sample size $N$, the dimension $d$ and the confidence level $\delta$. 
\end{defn}

In Section \ref{sec:lmo}, we discuss a practical plug-in based algorithm 
for implementing an LMO with these approximation properties.
Equipped with access to such an LMO, we develop algorithms based on iterative optimization methods for minimizing $\psi$ over $\cC$. Our algorithms do not require direct access to the set $\cC$, but only make use of calls to the LMO over $\cC$. 

We present four algorithms under different assumptions on the metric $\psi$ and show convergence guarantees in each case (see Table \ref{tab:algorithms-unc} for a summary of our results).  The proofs   build on existing techniques for showing convergence of the respective optimization solvers, and need to additionally  take into account the errors in the LMO calls. 


\begin{figure}
\begin{algorithm}[H]
\caption{Frank-Wolfe (FW) Algorithm for \ref{eq:opt-unconstrained} with Smooth Convex $\psi$}\label{alg:FW}
\begin{algorithmic}[1]
\STATE \textbf{Input:} $\psi:[0, 1]^{d} \to[0,1]$, an LMO $\Omega$, $S = \{(x_1,y_1), \ldots, (x_N,y_N)\}$, $T$
\STATE \textbf{Initialize:} 
$(h^0, \C^0) = \Omega(\L^0; S)$ for an arbitrary loss matrix $\L^0$
\STATE \textbf{For} $t =  1$ \textbf{to} $T$ \textbf{do}
\STATE ~~~~~$\L^t\,=\, \frac{\nabla\psi(\C^{t-1})}{ \|\nabla\psi(\C^{t-1})\|_\infty}$
\STATE ~~~~~$(\tilde{h}^t, \tilde{\C}^t) \,=\, \Omega(\L^t; S)$
\STATE ~~~~~${h}^{t} = \big(1-\frac{2}{t+1}\big) {h}^{t-1} + \frac{2}{t+1} \tilde{h}^t$
\STATE ~~~~~${\C}^t = \big(1-\frac{2}{t+1}\big) {\C}^{t-1} + \frac{2}{t+1}\tilde{\C}^t$
\STATE \textbf{End For}
\STATE \textbf{Output:} $\bar{h} = h^T$
\end{algorithmic}
\end{algorithm}
\vspace{-10pt}
\end{figure}



\subsection{Frank-Wolfe Algorithm for Smooth Convex Metrics}
The first algorithm that we describe uses the classical Frank-Wolfe method  \citep{FrankWolfe56} to minimize $\psi(\C)$ over $\C$ for 
performance measures $\psi$ that are convex and smooth over $\cC$. Examples of performance measures with these properties include the H-mean and Q-mean in \Tab{tab:perf-measures}.

The key idea behind this algorithm is to sequentially linearize the objective $\psi$ using its local gradients, and minimize the linear approximation over $\cC$ using the LMO. The
procedure, outlined in Algorithm \ref{alg:FW}, maintains iterates of confusion matrices $\C^t$, computes the gradient $\L^t = \grad\psi(\C^{t-1})$ for the current iterate, invokes the LMO to solve the resulting linear minimization problem $\min_{\C\in\cC_D} \langle \L^t,\C \rangle$,
and updates $\C^t$ based on the result of the linear minimization.
The minimizer $\C^*$ of $\psi(\C)$ can then be approximated by a combination of the iterates $\C^1, \ldots, \C^T$, with the final classifier that achieves this confusion matrix given by a randomized combination of classifiers learned across all the iterations. 

For metrics $\psi$ that are smooth, we  show that the algorithm takes $\cO(1/\epsilon)$ calls to the LMO to reach a classifier that is $\cO(\epsilon + c)$-optimal for a constant $c > 0$ that depends on the LMO error.
\begin{thm}[Convergence of FW algorithm]
\label{thm:FW-unc}
Fix $\epsilon \in (0,1)$. 
Let $\psi: [0,1]^d \> [0,1]$ be convex, $\beta$-smooth and $L$-Lipschitz w.r.t.\ the $\ell_2$-norm. Let $\Omega$ in Algorithm \ref{alg:FW} be a $(\rho , \rho', \delta)$-approximate LMO for sample size $m$.
Let $\bar{h}$ be a classifier returned by Algorithm \ref{alg:FW} when run for $T$ iterations. 
 Then with probability $\geq 1 - \delta$ over draw of $S \sim D^N$, after $T = \cO(1/\epsilon)$ iterations:
\[
\psi(\C[\bar{h}]) \,\leq\, \min_{\C \in \cC}\,\psi(\C) +8\beta\epsilon +2L\rho +4\beta \sqrt{d}\rho' \leq \min_{\C \in \cC}\,\psi(\C) + \cO(\epsilon + \rho^\eff),
\]
where $\rho^\eff=\rho+ \sqrt{d}\rho'$.
\end{thm}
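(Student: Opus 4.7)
The plan is to adapt the standard Frank-Wolfe convergence analysis to account for the two sources of error introduced by the approximate LMO: (i) the suboptimality $\rho$ of the classifier it returns, and (ii) the $\ell_\infty$ estimation error $\rho'$ between the returned classifier's true confusion matrix $\C[\tilde h^t]$ and the reported matrix $\tilde\C^t$. Throughout the analysis I will need to carefully distinguish the algorithmic iterate $\C^t$ (a convex combination of the reported $\tilde\C^\tau$'s) from the \emph{true} confusion matrix $\C[h^t]$ of the randomized classifier actually maintained by the algorithm.

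The first step is the classical one-step descent bound. Writing $\eta_t = 2/(t+1)$, smoothness of $\psi$ gives
\[
\psi(\C^t) \,\le\, \psi(\C^{t-1}) + \eta_t \langle \nabla\psi(\C^{t-1}), \tilde\C^t - \C^{t-1}\rangle + \tfrac{\beta \eta_t^2}{2}\|\tilde\C^t - \C^{t-1}\|_2^2,
\]
and the last term is bounded by a constant multiple of $\beta d \eta_t^2$ since all confusion matrices lie in $[0,1]^d$. For the linear term I split $\tilde\C^t - \C^{t-1} = (\C[\tilde h^t] - \C^{t-1}) + (\tilde\C^t - \C[\tilde h^t])$. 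The LMO's approximate optimality on $\L^t = \nabla\psi(\C^{t-1})/\|\nabla\psi(\C^{t-1})\|_\infty$, combined with convexity of $\psi$ and the bound $\|\nabla\psi(\C^{t-1})\|_\infty \le \|\nabla\psi(\C^{t-1})\|_2 \le L$, yields
\[
\langle \nabla\psi(\C^{t-1}), \C[\tilde h^t] - \C^{t-1}\rangle \,\le\, \psi(\C^*) - \psi(\C^{t-1}) + L\rho,
\]
where $\C^* \in \argmin_{\C\in\cC}\psi(\C)$. The error term is handled via Cauchy--Schwarz: $|\langle \nabla\psi(\C^{t-1}), \tilde\C^t - \C[\tilde h^t]\rangle| \le L \cdot \sqrt{d}\rho'$, which is exactly where the $\sqrt{d}$ factor in $\rho^\eff$ enters.

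Subtracting $\psi(\C^*)$ from both sides of the descent inequality gives the recurrence
\[
\psi(\C^t) - \psi(\C^*) \,\le\, (1-\eta_t)\bigl(\psi(\C^{t-1}) - \psi(\C^*)\bigr) + \eta_t\bigl(L\rho + \sqrt{d}L\rho'\bigr) + O(\beta d \eta_t^2).
\]
A standard induction on $t$ with $\eta_t = 2/(t+1)$ resolves the recurrence to
\[
\psi(\C^T) - \psi(\C^*) \,\le\, \frac{O(\beta d)}{T} + L\rho + \sqrt{d}L\rho',
\]
so choosing $T = \Theta(1/\epsilon)$ makes the first term $O(\epsilon)$.

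The final step is to convert this bound on the algorithmic iterate $\C^T$ into a bound on the true confusion matrix $\C[\bar h] = \C[h^T]$ of the output classifier. Since $h^t = (1-\eta_t)h^{t-1} + \eta_t \tilde h^t$ and confusion matrices are linear in the classifier, $\C[h^t]$ satisfies the same recursion as $\C^t$ but with $\C[\tilde h^t]$ in place of $\tilde \C^t$. An easy induction using $\|\C[\tilde h^t] - \tilde\C^t\|_\infty \le \rho'$ shows $\|\C[h^t] - \C^t\|_\infty \le \rho'$ for every $t$, and Lipschitzness then gives $|\psi(\C[\bar h]) - \psi(\C^T)| \le L\sqrt{d}\rho'$. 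Combining yields the claimed bound up to constants. The main obstacle is the bookkeeping: one has to track \emph{two} sequences of confusion matrices (the recorded $\C^t$ and the actual $\C[h^t]$) and ensure that the $\rho'$-errors do not accumulate across iterations, which is what the $\ell_\infty$ contraction argument accomplishes.
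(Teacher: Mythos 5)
Your proof is correct in its structure but takes a genuinely different route from the paper, and it has one quantitative looseness worth flagging.

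The paper's proof runs the Frank--Wolfe analysis on the sequence of \emph{true} confusion matrices $\C[h^0],\C[h^1],\dots$, and the technical heart (Lemma~\ref{lem:approximation-factor-FW}) is to show that the classifier $\tilde h^t$ returned by the LMO, which was queried with the gradient at the \emph{reported} iterate $\nabla\psi(\C^{t-1})$, is nonetheless approximately optimal for the linear objective $\langle\nabla\psi(\C[h^{t-1}]),\cdot\rangle$ at the \emph{true} iterate. Bridging these two gradients is where the paper's $\beta\sqrt{d}\rho'$ term comes from, and it is why the paper needs $\beta$-smoothness in the $\rho'$-error. You instead run the entire descent analysis on the reported iterates $\C^t$, where the gradient used by the LMO is \emph{exactly} $\nabla\psi(\C^{t-1})$, so no gradient-comparison step is needed; the LMO error decomposes cleanly into an optimality term $L\rho$ and a reporting term $L\sqrt d\,\rho'$ via Cauchy--Schwarz, both controlled by the Lipschitz constant. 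The price is that you then must relate $\C^T$ to $\C[\bar h]$, which you do with the observation that $\|\C[h^t]-\C^t\|_\infty\le\rho'$ propagates through the convex-combination update without accumulating. Both routes are valid; yours avoids invoking smoothness in the $\rho'$-term and is arguably cleaner, and it is also more directly in the spirit of Jaggi's ``inexact oracle'' analysis applied to the feasible set of \emph{reported} matrices.

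The one error is the bound $\|\tilde\C^t-\C^{t-1}\|_2^2 = O(d)$, which you justify by ``all confusion matrices lie in $[0,1]^d$.'' That is true but far too loose: the reported matrices $\tilde\C^t$ are empirical confusion matrices and therefore lie in the simplex $\Delta_d$, as do their convex combinations $\C^{t-1}$, so $\|\tilde\C^t-\C^{t-1}\|_2 \le \|\tilde\C^t-\C^{t-1}\|_1 \le 2$. With your $O(\beta d)/T$ term you would need $T=\Theta(d/\epsilon)$ to reach accuracy $\epsilon$, which is weaker than the stated $T=\cO(1/\epsilon)$. Replacing $d$ by the constant simplex diameter repairs this and recovers the $\cO(\beta)/T$ rate (matching the paper's curvature constant $C_\psi\le 4\beta$); with that fix, your argument yields the theorem's $\cO(\epsilon + \rho + \sqrt d\,\rho')$ bound with $T=\cO(1/\epsilon)$, differing from the paper only in which Lipschitz/smoothness constant multiplies $\sqrt d\,\rho'$ ($L$ in your version versus $\beta$ in the paper's), which is irrelevant at the level of $\cO(\cdot)$.
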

\begin{proof}
See Appendix \ref{app:proof-fw}.
\end{proof}

The proof derives a version of the convergence guarantee for the  Frank-Wolfe method \citep{Jaggi13} which is robust to errors in the gradients and confusion matrix estimates. 

In Figure \ref{fig:FW-trajectory}, we illustrate the trajectory taken by the Frank-Wolfe algorithm in minimizing the H-mean loss $\psi^\HM$ in Table \ref{tab:perf-measures}. 
Notice that the linear minimization outputs $\tilde{\C}^t$ lie on the boundary of $\cC$, while the averaged confusion matrix iterates $\C^t$ lie in the interior. Also note that because \texttt{NormImbal} distribution we use for this illustration has significant class imbalance,  the minimizer for the 0-1 loss incurs a large H-mean loss. In contrast, Algorithm \ref{alg:FW} converges to a confusion matrix with substantially better H-mean loss. 

\subsection{Gradient Descent-Ascent Algorithm for Non-smooth Convex Metrics}
\label{sec:gda}
The next algorithm we propose is designed for performance measures $\psi$ that are convex, but \textit{not necessarily smooth}, such as the min-max metric in Table \ref{tab:perf-measures}. We make use of the ``three player'' framework proposed by \citet{Narasimhan+19_generalized} and  provide a slight variant of the ``oracle-based algorithm'' in their paper. 

As a first step, we decouple the confusion matrix $\C$ from the function $\psi$ in \ref{eq:unconstrained-reformulation} by introducing auxiliary slack variables $\bxi \in \Delta_{d}$, and arrive at the following equivalent problem:
 \begin{align}
\min_{\C \in \cC}\psi(\C) &= \min_{\C \in \cC,\, \bxi \in \Delta_d,\, \bxi = \C} \psi(\bxi),
 \label{eq:slack-reformulation}
 \end{align}
where we constraint the slack variables $\xi$ to be equal to the confusion matrix $\C$. We define the Lagrangian for the above problem introducing multipliers $\blambda \in \R^d$ for the $d$ equality constraints:
\begin{align}
\cL(\C, \bxi, \blambda) 
&= \psi(\bxi) + \langle \blambda, \C-\bxi \rangle,
\label{eq:lagrangian-unc}
\end{align}
and re-formulate \eqref{eq:slack-reformulation} 
as an equivalent min-max problem where we minimize the Lagrangian over $\bxi$ and $\C$, and maximize it over the Lagrange multipliers $\blambda$:
\begin{align}
\min_{\C \in \cC}\psi(\C) &= \min_{\C \in \cC,\, \bxi \in[0,1]^d}\,\max_{\blambda \in \R^d}\,\cL(\C, \bxi, \blambda).
 \label{eq:minmax-reformulation}
\end{align}

The minimizer of $\psi(\C)$ over $\C$ can be then obtained by finding a saddle point of the above min-max problem. To this end, we first notice that the Lagrangian $\cL$ is linear in $\C$, convex in $\bxi$ and linear in $\blambda$. 
Following \cite{Narasimhan+19_generalized}, 
we maintain iterates $\C^t, \bxi^t$ and $\blambda^t$ and at each iteration, perform a full minimization of $\cL$ using a call to the LMO, perform gradient descent updates on $\bxi$, and perform gradient ascent updates on $\blambda$. 
We constrain $\bxi$ to be within the probability simplex $\Delta_d$, and for technical reasons, also constrain $\blambda$ to be within a bounded set $\Lambda$, both of which
are accomplished using projection operations.

The resulting gradient descent-ascent procedure, outlined in Algorithm \ref{alg:GDA} 
can be shown to converge to  an approximate saddle point of \eqref{eq:minmax-reformulation}. In fact, one can further show that with $\cO(\log(d)/\epsilon^2)$ calls to the LMO, the algorithm finds a classifier that is $\cO(\epsilon+c)$-optimal for $\psi$, for some constant $c > 0$ that depends on the LMO errors:
\footnote{
\citet{Narasimhan+19_generalized} point out that the min-max formulation in \eqref{eq:minmax-reformulation} can be used to re-derive  the Frank-Wolfe based procedure in Algorithm \ref{alg:FW}. Specifically, by defining  $\omega(\C, \lambda) = \min_{\bxi \in[0,1]^d}\cL(\C, \bxi, \blambda)$, and reformulate \eqref{eq:constrained-reformulation} as the equivalent min-max problem $\min_{\C \in \cC}\,\max_{\blambda \in \R^d} \omega(\C, \lambda)$,  the Frank-Wolfe based algorithm can be shown to
minimize $\omega$ using a LMO over $\C \in \cC$ and maximize it over $\blambda \in \R^d$ by applying a Follow-The-Leader (FTL) update \citep{AbernethJ17}.}
\begin{thm}[Convergence of GDA algorithm]
\label{thm:gda-unc}
Fix $\epsilon \in (0,1)$. 
Let $\psi: [0,1]^d\>[0,1]$ be convex and $L$-Lipschitz w.r.t.\ the $\ell_2$-norm. Let $\Omega$ in Algorithm \ref{alg:GDA} be a $(\rho , \rho', \delta)$-approximate LMO for sample size $N$. Let the space of Lagrange multipliers $\Lambda = \{\blambda\in \R^d\,|\,\|\blambda\|_2\leq 2L\}$. 
Let $\bar{h}$ be a classifier returned by Algorithm \ref{alg:GDA} when run for $T$ iterations,
with step-sizes $\eta = \frac{1}{4L\sqrt{2T}}$
and $\eta' = \frac{4L}{\sqrt{2T}}$.  
 Then with probability $\geq 1 - \delta$ over draw of $S \sim D^N$, after $T = \cO(1/\epsilon^2)$ iterations:
\[
\psi(\C[\bar{h}]) \,\leq\, \min_{\C \in \cC}\,\psi(\C) \,+\,\cO\left(\epsilon + \rho^\eff\right),
\]
where $\rho^\eff = \rho+\sqrt{d}\rho'$ and $\cO$ hides constants independent of $\epsilon, \rho, \rho'$ and $d$.
\end{thm}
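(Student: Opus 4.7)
The approach is to cast the algorithm as an approximate no-regret dynamics on the min-max formulation \eqref{eq:minmax-reformulation}, and then convert a saddle-point gap bound into the desired optimality gap on $\psi(\C[\bar h])$. The averaged classifier satisfies $\C[\bar h] = \bar\C := \tfrac{1}{T}\sum_{t=1}^T \C^t$, and I will similarly write $\bar\bxi = \tfrac{1}{T}\sum_t \bxi^t$ and $\bar\blambda = \tfrac{1}{T}\sum_t \blambda^t$.

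First I would bound each of the three per-iterate regrets separately. For the $\C$-update, the LMO call with the (normalised) loss $\blambda^t$ guarantees, by Definition~\ref{defn:lmo} and since $\|\blambda^t\|_\infty \le \|\blambda^t\|_2 \le 2L$, that
$\langle \blambda^t,\C^t\rangle \le \min_{\C\in\cC} \langle \blambda^t,\C\rangle + 2L\rho$; summed over $t$ this gives a per-iteration error of $2L\rho$. For the $\bxi$-update, $\cL$ is $L$-Lipschitz-convex in $\bxi$ and the subgradient $\nabla\psi(\bxi^t)-\blambda^t$ has $\ell_2$-norm at most $3L$, so standard projected online gradient descent on $\Delta_d$ (diameter $\sqrt 2$) with $\eta = 1/(4L\sqrt{2T})$ yields regret $\cO(L\sqrt T)$. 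For the $\blambda$-update, the (possibly estimated) ascent direction $\hat\bGamma^t - \bxi^t$ approximates $\C^t-\bxi^t$ with $\ell_\infty$-error $\rho'$, hence $\ell_2$-error at most $\sqrt d\,\rho'$; projected OGA over the $\ell_2$-ball $\Lambda$ of radius $2L$ with $\eta'=4L/\sqrt{2T}$ gives regret $\cO(L\sqrt T)$, and the estimation error contributes an extra additive term of order $L\sqrt d\,\rho'$ per iteration when comparing to any fixed $\blambda\in\Lambda$.

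Next, I would combine the three bounds in the usual min-max manner. Summing and using the identity $\cL(\C,\bxi,\blambda)=\psi(\bxi)+\langle \blambda,\C-\bxi\rangle$, one obtains, for every $\C\in\cC$, $\bxi\in\Delta_d$, $\blambda\in\Lambda$,
\[
\tfrac{1}{T}\sum_t \cL(\C^t,\bxi^t,\blambda) \;-\; \tfrac{1}{T}\sum_t \cL(\C,\bxi,\blambda^t) \;\le\; \cO\!\left(\tfrac{L}{\sqrt T} + L\rho + L\sqrt d\,\rho'\right).
\]
Linearity of $\cL$ in $\C$ and $\blambda$ and Jensen's inequality applied to the convex function $\psi$ then give
$\cL(\bar\C,\bar\bxi,\blambda) \le \tfrac{1}{T}\sum_t \cL(\C^t,\bxi^t,\blambda)$ and $\tfrac{1}{T}\sum_t \cL(\C,\bxi,\blambda^t)=\cL(\C,\bxi,\bar\blambda)$, producing the saddle-point bound
\[
\cL(\bar\C,\bar\bxi,\blambda) - \cL(\C,\bxi,\bar\blambda) \;\le\; \cO\!\left(\tfrac{L}{\sqrt T} + L\rho + L\sqrt d\,\rho'\right).
\]

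Finally, I would extract the target bound. Taking $(\C,\bxi)=(\C^*,\C^*)$ where $\C^*\in\arg\min_{\C\in\cC}\psi(\C)$, the right-hand Lagrangian equals $\psi(\C^*)$ regardless of $\bar\blambda$. Taking the maximum over $\blambda\in\Lambda$ on the left gives $\psi(\bar\bxi)+2L\|\bar\C-\bar\bxi\|_2$. Hence
\[
\psi(\bar\bxi) + 2L\|\bar\C-\bar\bxi\|_2 \;\le\; \psi(\C^*) + \cO\!\left(\tfrac{L}{\sqrt T} + L\rho + L\sqrt d\,\rho'\right),
\]
and the $L$-Lipschitzness of $\psi$ yields $\psi(\bar\C) \le \psi(\bar\bxi) + L\|\bar\C-\bar\bxi\|_2$, so the unwanted $\|\bar\C-\bar\bxi\|_2$ term is absorbed (with slack $L\|\bar\C-\bar\bxi\|_2\ge 0$) into the right-hand side. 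Plugging in $T = \cO(1/\epsilon^2)$ completes the proof.

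The main obstacle is the careful bookkeeping of the two LMO approximation constants: $\rho$ multiplies the scale $\|\blambda^t\|_\infty\le 2L$ of the linear objective handed to the oracle, while $\rho'$ enters through the $\blambda$-gradient and, after dualising via $|\langle \blambda, \hat\bGamma^t-\C^t\rangle| \le \|\blambda\|_2\,\|\hat\bGamma^t-\C^t\|_2 \le 2L\sqrt d\,\rho'$, picks up the factor $\sqrt d$ that produces the effective error $\rho^{\eff}=\rho+\sqrt d\,\rho'$ in the final bound. The choice $\Lambda=\{\blambda:\|\blambda\|_2\le 2L\}$ is exactly what is needed for this dualisation and for the maximisation step above to recover $\psi(\C^*)$; any smaller radius would fail to enforce $\bxi=\C$ in the dual, and any larger radius would inflate both regret bounds.
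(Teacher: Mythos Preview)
Your proposal is correct and follows essentially the same three-player regret analysis the paper uses: the paper actually defers the unconstrained case to the proof of Theorem~\ref{thm:gda-con} with $K=0$, which proceeds exactly via LMO regret on $\C$, OGD regret on $\bxi$, OGA regret on $\blambda$, and then Jensen/linearity to pass to averages. Two small remarks: (i) in Algorithm~\ref{alg:GDA} the symbol $\C^t$ denotes the LMO's \emph{estimate} (your $\hat\bGamma^t$), not $\C[h^t]$, so your opening identity $\C[\bar h]=\tfrac{1}{T}\sum_t \C^t$ is a notational mismatch with the algorithm, though your argument is internally consistent once renamed; (ii) your extraction step---bounding $\psi(\C[\bar h])\le \psi(\bar\bxi)+L\|\C[\bar h]-\bar\bxi\|_2$ directly by Lipschitzness---is a slightly more elementary version of the paper's, which instead invokes the Fenchel-type identity $\psi(\C')=\min_{\bxi}\max_{\blambda\in\Lambda}\cL(\C',\bxi,\blambda)$ from Lemma~\ref{lem:gda-con-helper} (part 2) to reach the same conclusion.
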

\begin{proof}
See Appendix \ref{app:proof-gda-unc}.
\end{proof}

Figure \ref{fig:GDA-trajectory} shows the trajectory of the iterates of the GDA algorithm on the same running example used to illustrate the Frank-Wolfe based algorithm. Notice that the GDA algorithm converges to an optimal confusion matrix (classifier) for the problem. 

\begin{figure}
\begin{algorithm}[H]
\caption{Gradient Descent-Ascent (GDA) Algorithm for \ref{eq:opt-unconstrained} with Non-smooth Convex $\psi$}\label{alg:GDA}
\begin{algorithmic}[1]
\STATE \textbf{Input:} $\psi:[0, 1]^{d} \to [0,1]$, an LMO $\Omega$ , $S = \{(x_1,y_1), \ldots, (x_N,y_N)\}$, $T$, space of Lagrange multipliers $\Lambda \subset \R^d$
\STATE \textbf{Parameters:} Step-sizes $\eta, \eta' > 0$
\STATE \textbf{Initialize:} 
$\blambda^0 \in \Lambda$
\STATE \textbf{For} $t =  0$ \textbf{to} $T-1$ \textbf{do}
\STATE ~~~~~$\L^t\,=\, \frac{\blambda^{t}}{ \|\blambda^{t}\|_\infty}$
\STATE ~~~~~$(h^{t}, \C^t) \,=\, \Omega(\L^t; S)$
\STATE ~~~~~$\tilde{\bxi}\,=\, \bxi^{t} \,-\, \eta\nabla_{\bxi}\cL(\C^{t}, \bxi^{t}, \blambda^{t});$~~~~~~$\bxi^{t+1}\,\in\,\argmin_{\bxi \in \Delta_d}\,\|\bxi - \tilde{\bxi}\|_2$
\STATE ~~~~~$\tilde{\blambda}\,=\, \blambda^{t} \,+\, \eta'\nabla_{\blambda}\cL(\C^{t}, \bxi^{t}, \blambda^{t});$~~~~~~$\blambda^{t+1}\,\in\,\argmin_{\blambda \in \Lambda}\|\blambda - \tilde{\blambda}\|_2$
\STATE \textbf{End For}
\STATE \textbf{Output:} $\bar{h} = \frac{1}{T}\sum_{t=1}^T h^t$
\end{algorithmic}
\end{algorithm}
\vspace{-15pt}
\end{figure}

\subsection{Ellipsoid Algorithm for Non-smooth Convex Metrics}

\begin{figure}[t]
\begin{algorithm}[H]
\caption{Ellipsoid Algorithm for \ref{eq:opt-unconstrained} with Non-smooth Convex $\psi$}\label{alg:ellipsoid}
\begin{algorithmic}[1]
\STATE \textbf{Input:} $\psi:[0, 1]^{d} \to [0,1]$, an LMO $\Omega$,   $S = \{(x_1,y_1), \ldots, (x_m,y_m)\}, T$
\STATE \textbf{Parameters:} Initial ellipsoid radius $a$ 
\STATE \textbf{Initialize:} $\tilde{\blambda}^0 = \0_d$, $\tilde{\A}^0 = a^2 \I_d$
\STATE \textbf{For} $t=0$ \textbf{to} $T-1$ \textbf{do}
\STATE ~~~~ \textbf{If} $\|\tilde{\blambda}^t\|_2>a:$
\STATE ~~~~~~~~ $\A^{t+1}, \blambda^{t+1} = \text{JLE}({\A}^t, {\blambda}^t, -{\blambda}^t)$
\STATE ~~~~~~~~ $h^t, \C^t = h^{0}, \C^{0}$; \textbf{continue}
\STATE  ~~~~ \textbf{Else:}
\STATE ~~~~~~~~ $\A^{t}, \blambda^{t} = \tilde{\A}^{t},\tilde{\blambda}^{t}$
\STATE ~~~~~~~~ $(h^t, \C^t)= \Omega(\blambda^t, S)$
\STATE ~~~~~~~~ $\bxi^t = \argmin_{\bxi \in \Delta_d} \psi(\bxi) - \langle \blambda^t, \bxi\rangle$
\STATE ~~~~~~~~ $\A^{t+1}, \blambda^{t+1} = \text{JLE}(\A^t, \blambda^t, \C^t-\bxi^t)$
\STATE \textbf{End For}
\STATE $\balpha^* \in \argmin_{\balpha \in \Delta_T} \psi\left(\sum_{t=0}^{T-1} \alpha_t \C^t\right)$ 
\STATE \textbf{Ouput:} $\bar{h} = \sum_{t=0}^{T-1} \alpha^*_t h^t$
\end{algorithmic}
\end{algorithm}
\vspace{-10pt}
\edef\currentthealgorithm{\thealgorithm}%
\renewcommand{\thealgorithm}{\currentthealgorithm(a)}
\addtocounter{algorithm}{-1}
\begin{algorithm}[H]
\caption{John-Lowner Ellipsoid (JLE) Construction
}\label{alg:jle}
\begin{algorithmic}[1]
\STATE \textbf{Input:} Positive-definite matrix $\A \in \R^{d\times d}$, $\blambda$, $\mathbf w$
\STATE \textbf{Output:}  $\A', \blambda'$ that parameterizes the smallest ellipsoid such that:
\[
E(\blambda',\A') \supseteq E(\blambda,\A) \cap \{\x: (\x - \blambda) ^\top \mathbf w \geq  0 \}
\]
where $E(\blambda, \A)= \{\x: (\x- \blambda)^\top (\A)^{-1} (\x-\blambda) \leq 1 \}$
\STATE $t = \frac{1}{d+1}, a = \frac{1}{(1-t)^2}, b = \frac{1-2t}{(1-t)^2}$
\STATE $\widetilde \w = \frac{A^{1/2} \w}{\|A^{1/2} \w\|_2}$
\STATE $\B^{-1} = a \widetilde\w  \widetilde\w^\top + b(I - \widetilde\w \widetilde\w^\top) $
\STATE $ \blambda' = \blambda + t \A^{\frac{1}{2}} \widetilde\w $
\STATE $(\A')^{-1} = \A^{-1/2} \B^{-1} \A^{-1/2}$  
\STATE \textbf{Return} $\A', \blambda'$
\end{algorithmic}
\end{algorithm}
\vspace{-10pt}
\end{figure}

Building on the Lagrangian dual formulation described above, we next design an approach based on the classical ellipsoid algorithm \citep{BoydVan04}, which for convex (non-smooth) performance measures $\psi$, requires only $\cO(d^2\log(d/\epsilon))$ calls to the LMO to reach an $\cO(\epsilon + c)$-optimal classifier. Note that unlike the two previous algorithms, the number of LMO calls in this case has a logarithmic dependence on $1/\epsilon$, but at the cost of a stronger dependence on dimension $d$. So for problems where $d$ is small, we expect this approach to enjoy faster convergence.

We begin by defining the Lagrange dual function for given  multipliers $\blambda$:
\begin{align*}
f(\blambda)           
&= \min_{\C\in \cC,\, \bxi \in\Delta_d } 
\cL(\C, \bxi, \blambda).
\end{align*}
Because $f$ is concave in $\blambda$,
we can  employ the ellipsoid algorithm to efficiently maximize $f$ over $\blambda$ and thus solve \ref{eq:opt-unconstrained}.
Each step of the algorithm requires computing a super-gradient for $f$ at the current iterate $\blambda^t$,
which serves as a hyper-plane separating $\blambda^t$ from the maximizer of $f$.
For this, we find the the minimizers
$\C^t\in\argmin_{\C \in \cC} \langle\blambda^t, \C\rangle$ and $\bxi^t \in \argmin_{\bxi \in \Delta_d} \psi(\bxi) - \langle\blambda^t, \bxi\rangle$; an application of Danskin's theorem \citep{danskin2012theory} then gives us that $\C^t - \bxi^t = \nabla_{\blambda} \cL(\C^t, \bxi^t, \blambda)$ is a super-gradient for $f$ at $\blambda^t$.
Note that the minimization over $\C$ can be performed (approximately) by calling the LMO $\Omega$, and the minimization over $\bxi$ is a simple convex program. 

The  algorithm uses the (approximate) super-gradient obtained above to maintain an ellipsoid containing a solution that approximately maximises $f(\cdot)$ (with the current iterate $\blambda^t$ serving as the center of the ellipsoid), and iteratively shrinks its volume until we reach a small-enough region enclosing the maximizer. In Algorithm \ref{alg:ellipsoid}, we outline the details of the procedure. Lines 5-7 of the algorithm are  added to ensure that the iterates $\blambda^t$ never leave the initial ball. 

The main loop of Algorithm \ref{alg:ellipsoid} gives us a solution $\blambda$ that is close to the optimal dual solution. All that remains is to convert this to a solution for the primal problem in \ref{eq:unconstrained-reformulation}. For this, we adopt an approach from \cite{lee2015faster}, which uses the fact that
the algorithm maintains a subset of solutions obtained from convex combinations of the confusion matrix iterates $\conv\left(\C^0, \ldots, \C^{T-1}\right)$, each of which is a primal-optimal solution. Furthermore because the ellipsoid algorithm returns a solution from this set which is (approximately) dual-optimal, we have that:
\begin{align*}
\max_{\blambda \in \R^d}\,\min_{\C\in \cC,\, \bxi \in\Delta_d }\cL(\C, \bxi, \blambda) \sim \max_{\blambda \in \R^d}\,\min_{
\substack{
\C\in \conv\left(\C^0, \ldots, \C^{T-1}\right)\\
\bxi \in\Delta_d }}
\cL(\C, \bxi, \blambda).
\end{align*}
An application of min-max theorem then gives us that an approximate primal-optimal solution can be found by solving:
\[
\min_{
\substack{
\C\in \conv\left(\C^0, \ldots, \C^{T-1}\right)\\
\bxi \in\Delta_d }}\, \max_{\blambda \in \R^d}
\cL(\C, \bxi, \blambda) 
~=~ \min_{
\C\in \conv\left(\C^0, \ldots, \C^{T-1}\right)}\,\psi(\C),
\]
which amounts to solving a convex program with no further calls to the LMO and does not require further access to the training data. Line 14 of Algorithm \ref{alg:ellipsoid} describes this post-processing step.

\begin{thm}[Convergence of Ellipsoid algorithm]
\label{thm:ellipsoid}
Fix $\epsilon \in (0,1)$. 
Let $\psi: [0,1]^d\>[0,1]$ be convex and $L$-Lipschitz w.r.t. the $\ell_2$ norm. Let $\Omega$ in Algorithm \ref{alg:ellipsoid} be a $(\rho , \rho', \delta)$-approximate LMO for sample size $N$. Let $\bar{h}$ be the classifier returned by Algorithm \ref{alg:ellipsoid} when run for $T$ iterations with  initial radius $a=2L$. Then with probability $\geq 1 - \delta$ over draw of $S \sim D^N$, after $T = \cO\left(d^2 \log\left(d/{\epsilon}\right)\right)$ iterations:
\begin{align*}
\psi(\C[\bar{h}]) \,&\leq\, \min_{\C\in\cC}\,\psi(\C) \,+ \cO\left(\epsilon + \rho^\eff \right),
\end{align*}
where $\rho^\eff = \rho+\sqrt{d}\rho'$ and the $\cO$ notation hides constant factors independent of $\rho, \rho',\epsilon,d$.
\end{thm}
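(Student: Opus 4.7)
The plan is to treat the ellipsoid algorithm as solving the Lagrangian dual of \ref{eq:unconstrained-reformulation} using approximate super-gradients supplied by the LMO, and then to recover a primal solution from the convex hull of visited iterates via a min-max argument. The main obstacle will be tracking how the LMO's additive errors in each cut propagate through the volume-shrinkage argument without destroying the $O(d^2\log(d/\epsilon))$ iteration bound; naively one would need $\rho+\sqrt d\,\rho'\lesssim\epsilon/d^2$, which would be far too stringent.

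First I would set up the duality. By strong duality applied to \eqref{eq:slack-reformulation},
\[
\min_{\C\in\cC}\psi(\C) \;=\; \max_{\blambda\in\R^d} f(\blambda), \qquad f(\blambda)=\min_{\C\in\cC,\,\bxi\in\Delta_d} \cL(\C,\bxi,\blambda),
\]
with $f$ concave in $\blambda$. First-order optimality in $\bxi$ at any dual optimum $\blambda^*$ yields $\blambda^*\in\partial\psi(\bxi^*)$, and $L$-Lipschitzness of $\psi$ then gives $\|\blambda^*\|_2\le L$. Hence the initial ball of radius $a=2L$ contains $\blambda^*$, and the guard in Lines 5--7 (which reflects any iterate that leaves the ball back through the ball's supporting hyperplane) ensures every subsequent ellipsoid still contains $\blambda^*$.

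Next I would argue convergence in the dual. By Danskin's theorem, $\C^t-\bxi^t$ is a super-gradient of $f$ at $\blambda^t$ whenever $\C^t\in\argmin_{\C\in\cC}\langle\blambda^t,\C\rangle$ and $\bxi^t\in\argmin_{\bxi\in\Delta_d}\psi(\bxi)-\langle\blambda^t,\bxi\rangle$. The LMO returns an $\C^t=\C[h^t]$ satisfying the first condition only up to slack $\rho$, and reports $\widehat{\bGamma}^t$ within $\rho'$ of $\C^t$ entrywise, so the true cut is shifted by at most $\rho+\sqrt d\,\rho'$. Using the robust (``shallow cut'') variant of the John--Lowner update, I would show that the shifted cut still contains $\blambda^*$ provided the residual suboptimality of $\blambda^t$ exceeds $O(\rho+\sqrt d\,\rho')$. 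Combined with the standard per-step volume shrinkage factor $e^{-1/(2(d+1))}$ and the bound $\|\blambda^*\|_2\le L$, a sliding-window argument then guarantees that after $T=O(d^2\log(d/\epsilon))$ iterations some visited $\blambda^t$ satisfies $f(\blambda^t)\ge\max_{\blambda} f(\blambda)-\epsilon - O(\rho+\sqrt d\,\rho')$. This is the technically delicate step.

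Finally I would convert this dual guarantee into a primal one. Because every cut used a minimizer $\C^t$ of $\langle\blambda^t,\cdot\rangle$ lying in $\conv\{\C^0,\ldots,\C^{T-1}\}$, the ellipsoid analysis in fact certifies dual optimality against the restricted primal feasible set, giving
\[
\max_{\blambda\in\R^d}\min_{\C\in\conv\{\C^t\},\,\bxi\in\Delta_d}\cL(\C,\bxi,\blambda)\;\ge\;\min_{\C\in\cC}\psi(\C)\;-\;\epsilon\;-\;O(\rho^{\eff}).
\]
Since $\cL$ is convex--concave and the inner set is convex and compact, Sion's minimax theorem exchanges the order, yielding $\min_{\balpha\in\Delta_T}\psi(\sum_t\alpha_t\C^t)\le\min_{\C\in\cC}\psi(\C)+\epsilon+O(\rho^{\eff})$, which is precisely the convex program solved in Line 14. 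Setting $\bar h=\sum_t\alpha_t^* h^t$ makes $\C[\bar h]=\sum_t\alpha_t^*\C[h^t]$, and one last application of $L$-Lipschitzness absorbs the $\rho'$ gap between the $\widehat{\bGamma}^t$ used inside the algorithm and the true $\C[h^t]$ into the $\rho^{\eff}$ term, giving the claimed bound $\psi(\C[\bar h])\le\min_{\C\in\cC}\psi(\C)+O(\epsilon+\rho^{\eff})$.
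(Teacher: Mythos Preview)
Your plan is essentially the paper's own: bound $\|\blambda^*\|_2\le L$ so the initial ball contains the dual optimum, treat the LMO output $\C^t-\bxi^t$ as an approximate supergradient of $f$ at $\blambda^t$, run the ellipsoid volume-shrinkage argument to locate a near-dual-optimal iterate, and then recover the primal by Sion's theorem on $\conv\{\C^0,\ldots,\C^{T-1}\}$ exactly as in Line~14. Two small deviations worth noting: the paper does not use shallow cuts but instead proves directly that $\C^t-\bxi^t$ is a $\tau$-supergradient with $\tau=a(\rho+2\sqrt d\rho')$, so any point excised from $\cE^t$ has dual value at most $f(\blambda^t)+\tau$ (hence $\tau$ enters the final bound once, resolving your accumulation worry without a sliding window---the paper uses a scaled-region $\cR^0_\epsilon$ volume argument instead); and Line~14 optimizes over the LMO's \emph{estimates} $\C^t$, not the true $\C[h^t]$, so one further $O(L\sqrt d\rho')$ is spent at the end to pass from $\psi(\sum_t\alpha^*_t\C^t)$ to $\psi(\C[\bar h])$.
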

\begin{proof}
See Appendix \ref{app:proof-ellipsoid-unc}.
\end{proof}

Figure \ref{fig:ellipsoid-trajectory} illustrates the trajectory taken by the LMO iterates and the final confusion matrix for the running example, and demonstrates the convergence of the algorithm to an optimal classifier.

\begin{figure}
\begin{algorithm}[H]
\caption{Bisection Algorithm for \ref{eq:opt-unconstrained} with Ratio-of-linear $\psi$}\label{alg:bisection}
\begin{algorithmic}[1]
\STATE \textbf{Input:} $\psi: [0,1]^d \> [0,1]$ s.t.\  $\psi(\C) = \frac{\langle \A, \C \rangle}{\langle \B, \C \rangle}$ with $\A,\B \in \R^{d}$
\STATE ~~~~~~~~~~~~an LMO $\Omega$, $S = \{(x_1,y_1), \ldots, (x_N,y_N)\}$, $T$
\STATE \textbf{Initialize:} 
$\alpha^0 = 0, \beta^0 = 1,$ arbitrary classifier $h^0$
\STATE \textbf{For} $t = 1~\text{to}~T$ \textbf{do}
\STATE ~~~~~$\gamma^t = (\alpha^{t-1} + \beta^{t-1})/{2}$
\STATE ~~~~~${\L}^t = \frac{\A \,-\, \gamma^t\B}{\|\A \,-\, \gamma^t\B\|_2}$
\STATE ~~~~~$(g^t, \C^t) \,=\, \Omega(\L^t; S)$
\STATE ~~~~~\textbf{If} $\psi(\C^t) \leq \gamma^t$ 
~\textbf{then}~ 
$\alpha^{t} = \alpha^{t-1}, ~~ \beta^{t} = \gamma^{t}, ~~h^t = g^t$
\STATE ~~~~~~~~~~~~~~~~~~~~~~~~~~~~~~\textbf{else}~ $\alpha^{t} = \gamma^{t}, ~~ \beta^t = \beta^{t-1}, ~~h^t = h^{t-1}$
\STATE \textbf{End For}
\STATE \textbf{Output:} $\bar{h} = h^T$
\end{algorithmic}
\end{algorithm}
\vspace{-10pt}
\end{figure}

\subsection{Bisection Algorithm for Ratio-of-linear Metrics}

The final algorithm we describe in this section uses the bisection method \citep{BoydVan04} and is designed for ratio-of-linear performance metrics that can be written in the form $\psi^{\rl}(\C) = \frac{\langle \A, \C \rangle}{\langle \B, \C \rangle}$ for some $\A,\B \in \R^{d}$, such as the micro $F_1$-measure in Example \ref{ex:micro-F1}. 

For these performance measures, it is easy to see that:
\[
\min_{\C\in\cC} \psi(\C) \geq \gamma \Longleftrightarrow \min_{\C\in\cC} \langle \A-\gamma\B, \C \rangle \geq 0.
\]
Thus, to test whether the optimal value of $\psi$ is greater than $\gamma$, one can simply solve the linear minimization problem $\min_{\C\in\cC} \langle \A-\gamma\B, \C \rangle$ and test 
the value of $\psi$ at the resulting minimizer.
Based on this observation, one can employ the bisection method to conduct a binary search for the minimal value (and the minimizer) of $\psi(\C)$ using only a linear minimization subroutine. 

As outlined in Algorithm \ref{alg:bisection}, our proposed approach maintains a confusion matrix $\C^t$ implicitly via classifier $h^t$, together with lower and upper bounds 
$\alpha^t$ and $\beta^t$ on the minimal value of $\psi$. At each iteration, it determines whether this minimal value is greater than the midpoint $\gamma^t$ of these bounds using a call to the LMO, and then update $\C^t$ and $\alpha^t, \beta^t$ accordingly. 
Since for ratio-of-linear performance measures there is always a deterministic classifier achieving the optimal performance (see \Prop{prop:opt-classifier-ratio-linear}), 
here it suffices to maintain deterministic classifiers $h^t$.

Like the previous ellipsoid-based algorithm, the bisection algorithm also enjoys a logarithmic convergence rate:\footnote{In fact, the bisection algorithm can be viewed as a special case of the ellipsoid algorithm in one dimension \citep{BoydVan04}.}
\begin{thm}[Convergence of Bisection algorithm]
\label{thm:bisection-unc}
Fix $\epsilon \in (0,1)$. 
Let $\psi: [0,1]^d \> [0,1]$ be such that $\psi(\C) \,=\, \frac{\langle \A, \C \rangle}{\langle \B, \C \rangle}$, where $\A, \B \in \R^{n \times n}$, 
and $\min_{\C \in \cC} {\langle \B, \C \rangle} \,=\, b$ for some $b > 0$. Let $\Omega$ in Algorithm \ref{alg:bisection} be a $(\rho , \rho', \delta)$-approximate LMO for sample size $N$.
Let $\bar{h}$ be a classifier returned by Algorithm \ref{alg:bisection} when run for $T$ iterations. 
 Then with probability $\geq 1 - \delta$ over draw of $S \sim D^N$, after $T = \log(1/\epsilon)$ iterations:
\[
\psi(\C[\bar{h}]) \,\leq\, \min_{\C \in \cC}\,\psi(\C) \,+\,\cO\left(\epsilon + \rho^\eff\right),
\]
where 
$\rho^\eff = \rho + \sqrt{d}\rho'$ and 
the $\cO$ notation hides constant factors independent of $\rho,\rho', \epsilon$ and $d$.
\end{thm}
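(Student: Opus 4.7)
The plan is to exploit the equivalence
\[
\min_{\C \in \cC} \psi(\C) \leq \gamma \;\iff\; m^*(\gamma) := \min_{\C \in \cC} \langle \A - \gamma\B, \C\rangle \leq 0,
\]
which follows directly from the ratio-of-linear form of $\psi$ together with $\langle \B, \C\rangle \geq b > 0$. Writing $\psi^* := \min_{\C \in \cC}\psi(\C)$, Algorithm \ref{alg:bisection} approximately tests the sign of $m^*(\gamma^t)$ at the midpoint $\gamma^t$ of the current interval by invoking the LMO with rescaled loss $\L^t = (\A - \gamma^t\B)/\|\A - \gamma^t\B\|_2$, and halves the interval accordingly. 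First I would establish the invariant
\[
\alpha^t - O(\rho^\eff/b) \;\leq\; \psi^* \;\leq\; \beta^t + O(\rho^\eff/b)
\]
for every $t$; then apply the standard doubling argument to get $\beta^T - \alpha^T \leq \epsilon$ after $T = \lceil \log_2(1/\epsilon)\rceil$ iterations; and finally convert the bound on $\beta^T$ into a bound on $\psi(\C[h^T])$.

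For the invariant, let $v^t := \langle \A - \gamma^t\B, \C^t\rangle$, so the test on line~8 is equivalent (using $\langle \B, \C^t\rangle > 0$) to $v^t \leq 0$. The LMO's optimization slack on the rescaled loss translates, via multiplication by $\|\A - \gamma^t\B\|_2$, into slack of order $\rho\|\A - \gamma^t\B\|_2$ on $\langle \A - \gamma^t\B, \C[g^t]\rangle - m^*(\gamma^t)$; the confusion-matrix error $\|\C[g^t] - \C^t\|_\infty \leq \rho'$ translates, via H\"older's inequality and $\|\cdot\|_1 \leq \sqrt{d}\|\cdot\|_2$, into an error of order $\sqrt{d}\rho'\|\A - \gamma^t\B\|_2$ between $v^t$ and $\langle \A - \gamma^t\B, \C[g^t]\rangle$. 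Treating $\|\A\|_2, \|\B\|_2$ as absolute constants, both errors are $O(\rho^\eff)$. Hence if $v^t \leq 0$ then $m^*(\gamma^t) \leq O(\rho^\eff)$; picking the corresponding minimizer and dividing by $\langle \B, \cdot\rangle \geq b$ gives $\psi^* \leq \gamma^t + O(\rho^\eff/b)$. Symmetrically, if $v^t > 0$ then $m^*(\gamma^t) \geq -O(\rho^\eff)$, so every $\C \in \cC$ satisfies $\psi(\C) \geq \gamma^t - O(\rho^\eff/b)$, and thus $\psi^* \geq \gamma^t - O(\rho^\eff/b)$. A routine induction from $\alpha^0 = 0, \beta^0 = 1$ preserves the two-sided invariant across iterations.

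After $T$ iterations the interval has length at most $\epsilon$, giving $\beta^T \leq \alpha^T + \epsilon \leq \psi^* + \epsilon + O(\rho^\eff/b)$. To bound the output, let $\tau \leq T$ be the most recent iteration where case~1 triggered, so that $h^T = g^\tau$ and $\beta^T = \beta^\tau = \gamma^\tau$. From $v^\tau \leq 0$ combined with $\|\C[g^\tau] - \C^\tau\|_\infty \leq \rho'$ and $\langle \B, \C[g^\tau]\rangle \geq b$, I would derive $\psi(\C[g^\tau]) \leq \gamma^\tau + O(\sqrt{d}\rho'/b)$, hence $\psi(\C[\bar h]) \leq \beta^T + O(\rho^\eff/b)$. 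If case~1 never triggers, then $T$ case-2 steps force $\alpha^T \geq 1 - \epsilon$, the invariant yields $\psi^* \geq 1 - \epsilon - O(\rho^\eff/b)$, and since $\psi(\C[h^0]) \leq 1$ the conclusion holds trivially. Combining these gives the claimed $\psi(\C[\bar h]) \leq \psi^* + \cO(\epsilon + \rho^\eff)$ after absorbing $1/b$ into the constant.

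The main technical difficulty lies in propagating the linear LMO slack through the nonlinear ratio structure of $\psi$; this is where the denominator assumption $\min_{\C \in \cC}\langle \B, \C\rangle \geq b > 0$ is essential, both for converting the $O(\rho^\eff)$ additive error on $m^*(\gamma^t)$ into an $O(\rho^\eff/b)$ slack on $\psi^*$ (used in the invariant), and for converting the pointwise confusion-matrix error into a bound on $\psi(\C[g^\tau])$ (used for the output). Because the LMO guarantee is uniform over all valid $\L$ on a single event of probability at least $1-\delta$, the error analysis does not require a union bound over the $T$ iterations, so the confidence parameter enters trivially.
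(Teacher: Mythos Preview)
Your proposal is correct and follows essentially the same route as the paper: establish a two-sided invariant on $[\alpha^t,\beta^t]$ relative to $\psi^*$ by propagating the LMO slack through the ratio via the lower bound $\langle \B,\C\rangle\geq b$, combine with interval halving, and conclude. The only organizational difference is that the paper's invariant (Lemma~\ref{lem:lower-upper-bounds-con}, specialized to $K=0$) is slightly stronger, tracking $\psi(\C[h^t]) < \beta^t + \cO(\rho^\eff/b)$ directly inside the induction; this lets them read off the bound on $\psi(\C[\bar h])$ immediately at $t=T$ rather than tracing back to the last case-1 iteration $\tau$ as you do. Your trace-back argument (and the edge case where case~1 never fires) is a fine substitute, so the difference is cosmetic.
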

\begin{proof}
See Appendix \ref{app:proof-bisection}.
\end{proof}


\begin{figure}[t]
    \begin{center}
        \begin{subfigure}{0.32\linewidth}
        \centering
        \includegraphics[width=0.95\linewidth]{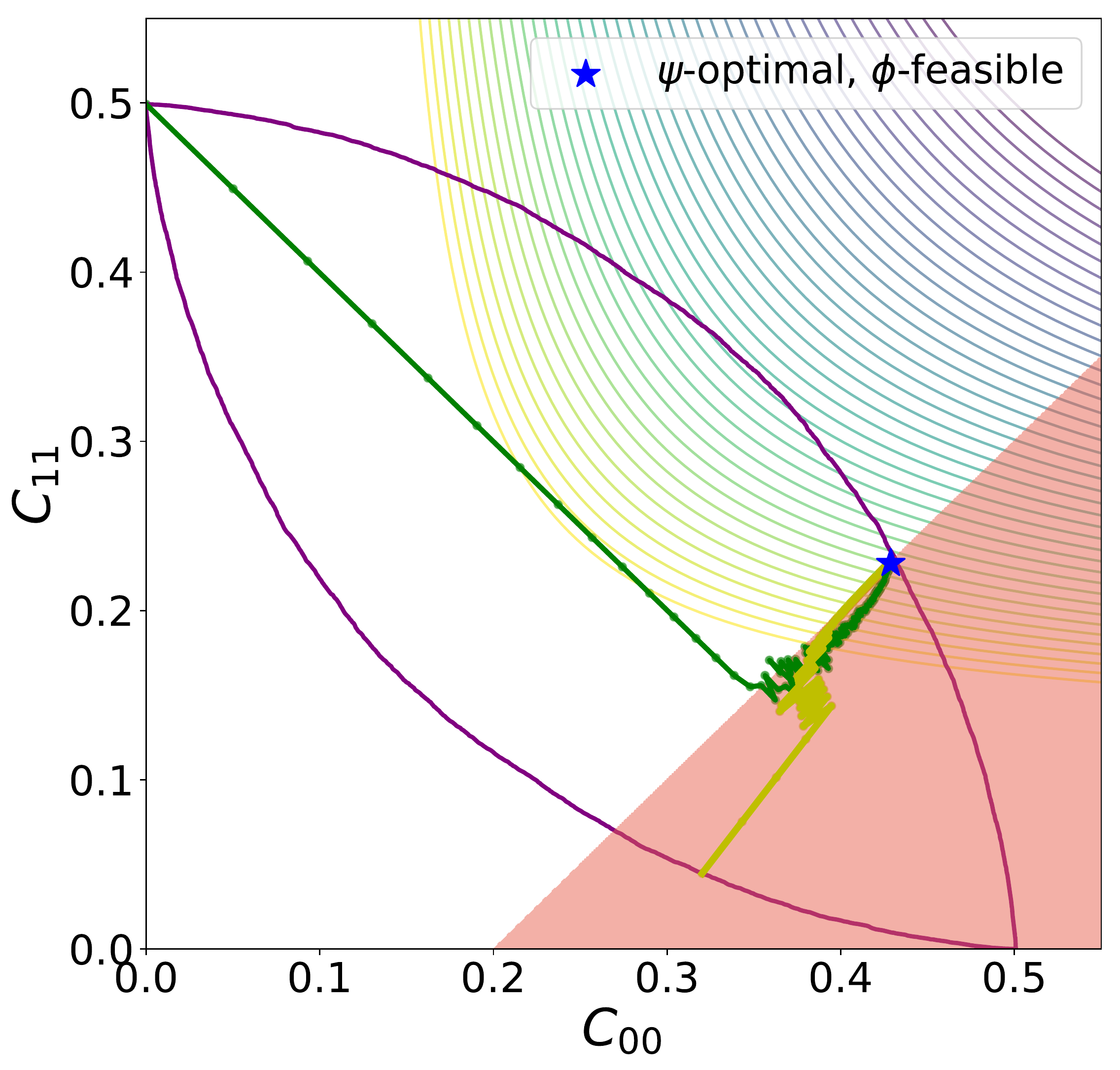}
        \caption{SplitFW}
        \label{fig:trajectory-splitfw}
        \end{subfigure}
        \begin{subfigure}{0.32\linewidth}
        \centering
        \includegraphics[width=0.95\linewidth]{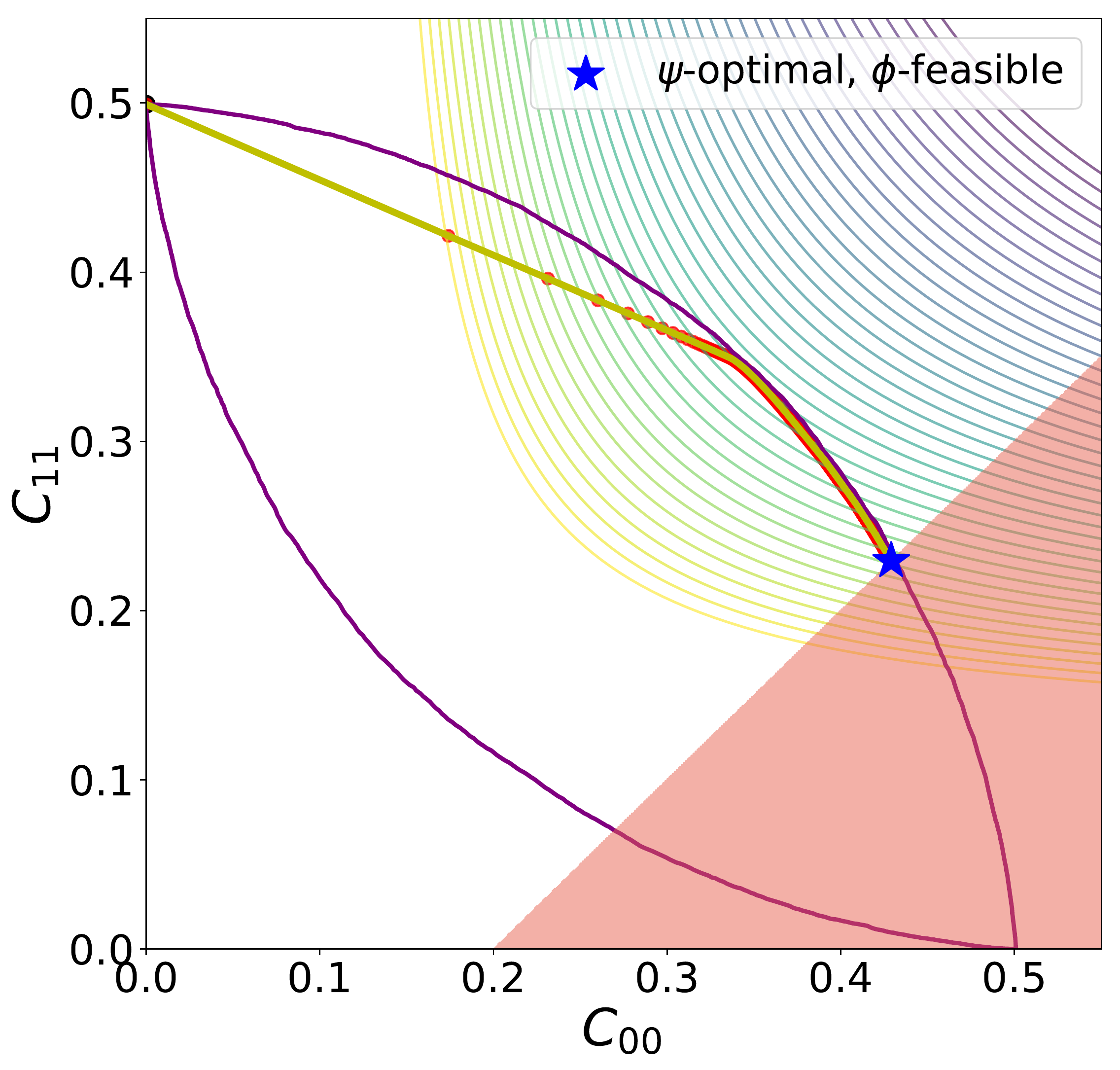}
        \caption{ConGDA}
        \label{fig:trajectory-congda}
        \end{subfigure}
        \begin{subfigure}{0.32\linewidth}
        \centering
        \includegraphics[width=0.95\linewidth]{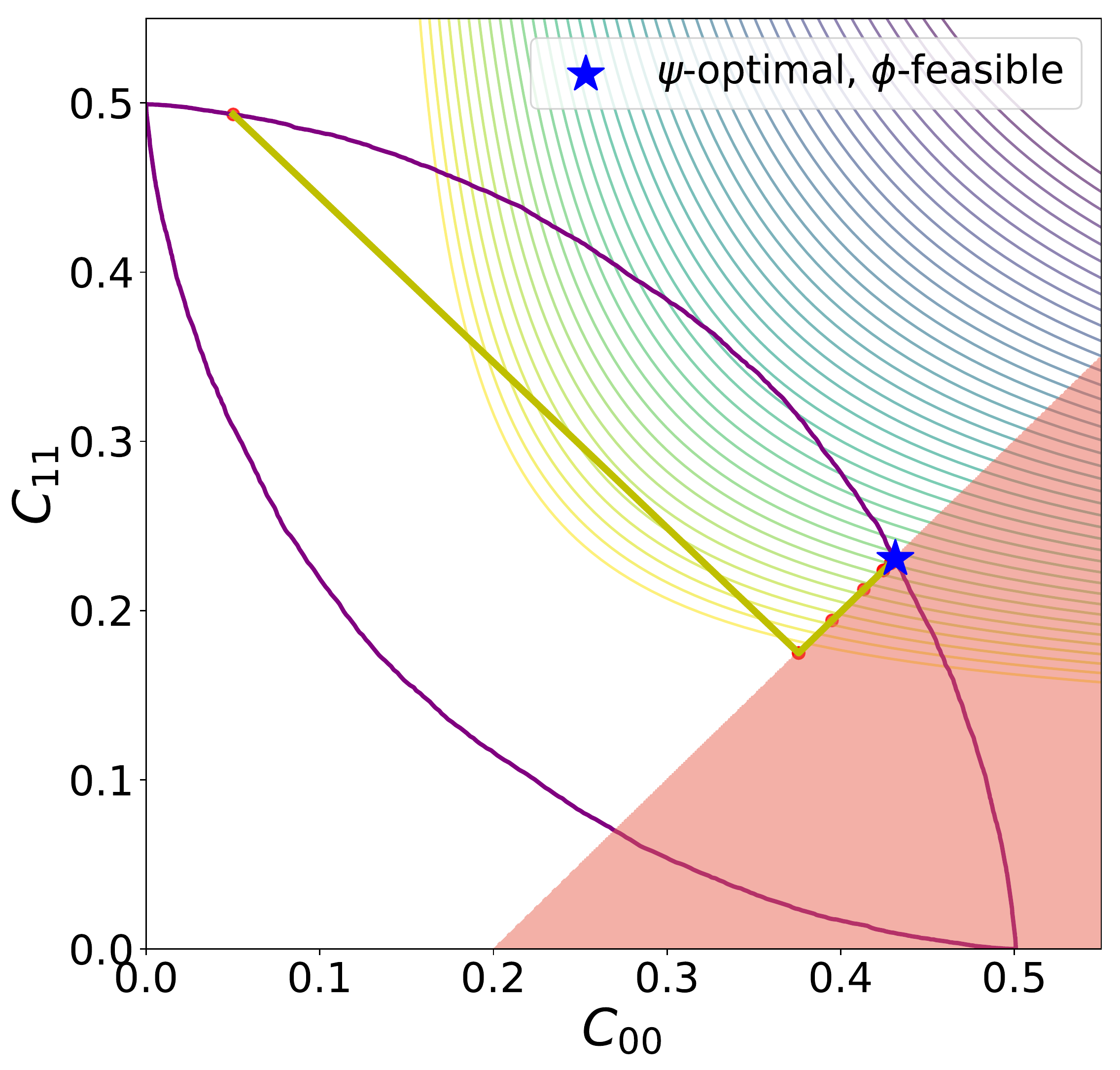}
        
        \caption{ConEllipsoid}
        \label{fig:trajectory-conellipsoid}
        \end{subfigure}
    \end{center}
    \vspace{-10pt}
    \caption{
    Illustration of the Split Frank-Wolfe (Algorithm \ref{alg:FW}), Constrained GDA (Algorithm \ref{alg:GDA-con}) and Constrained Ellipsoid (Algorithm \ref{alg:ellipsoid-con}) algorithms in minimizing the H-mean loss $\psi^{\HM}$
    subject to the constraint $C_{00} - C_{11} \geq 0.2$
    on the \texttt{NormBal} distribution in Figure \ref{fig:noram}. The figures contain the space of achievable confusion matrices $\cC$ (with purple colored boundary), along with the contours of $\psi^{\HM}$. 
    The trajectory of the averaged confusion matrix $\C[\bar{h}^t]$ for the averaged classifier is shown in green, 
    where $\bar{h}^t = h^t$ for Frank-Wolfe, $\bar{h}^t = \frac{1}{t}\sum_{\tau=1}^t h^t$ for GDA, and  $\bar{h}^t = \frac{1}{t}\sum_{\tau=1}^t \alpha^*_\tau h^\tau$ for
 ellipsoid, with the optimal coefficients $\balpha^* \in \argmin_{\balpha \in \Delta_t:\, \bphi(\sum_{\tau=1}^t \alpha_\tau \C^\tau) \leq \0} \psi\left(\sum_{\tau=1}^t \alpha_\tau \C^\tau\right)$  computed for iterates $1, \ldots t$. For SplitFW, we additionally plot the set of feasible confusion matrices $\cF$ that satisfy the constraint (shaded red region), along with the trajectory of the
    the averaged auxiliary variables $\F^t$ (gold). The algorithms can be seen to converge to an optimal feasible solution.
    }
    \label{fig:trajectories-constrained}
    \vspace{-5pt}
\end{figure}

\section{Algorithms for Constrained Problems}
\label{sec:constrained}
We next present iterative algorithms for
solving the constrained learning problem in \ref{eq:opt-constrained},
which as noted earlier, can be viewed as a minimization problem over $\cC$:
\begin{align}
\min_{\C \in \cC, \bphi(\C)\leq \0} \psi(\C).
  \tag{OP2*}
 \end{align}
As in the previous section, we will assume access to an LMO with the properties in
Definition \ref{defn:lmo}. 
 
 A simple approach to solving \ref{eq:opt-constrained} 
 for convex $\psi$'s and $\phi$'s is
 to formulate an equivalent convex-concave saddle point problem in terms of its Lagrangian:
 \[
\min_{\C \in \cC} \max_{\blambda \in \R_+^K}\,\psi(\C) + \sum_{k=1}^K\lambda_k\phi_k(\C)~=~
\max_{\blambda \in \R_+^K}
\underbrace{
\min_{\C \in \cC} \,\psi(\C) + \sum_{k=1}^K\lambda_k\phi_k(\C)}_{\nu(\lambda)},
 \]
 where $\lambda_k$ is the Lagrange multiplier for constraint $\phi_k$, and we use strong duality to exchange the `min' and `max'. 
For a fixed $\blambda$, the minimization over $\C$ is an unconstrained convex problem in $\C$. This resembles \ref{eq:opt-unconstrained} and can be solved with any of Algorithm \ref{alg:FW}--\ref{alg:ellipsoid} proposed in the previous section. 
One can therefore apply a standard gradient ascent procedure to maximize the dual function $\nu(\blambda)$, where the gradients w.r.t.\ $\blambda$ can be computed by solving the minimization of $\C$. However, this vanilla dual-ascent approach does not enjoy strong convergence guarantees because of the multiple levels of nesting. For example, with the Frank-Wolfe based algorithm (Algorithm \ref{alg:FW}) for the inner minimization, this procedure would take $\cO(1/\epsilon^3)$ calls to the LMO to reach an $\cO(\epsilon)$-optimal, $\cO(\epsilon)$-feasible solution \citep{Narasimhan18}. 

 In what follows, we describe four algorithms for solving \ref{eq:opt-constrained} which require fewer calls to the LMO than the vanilla approach described above (see Table \ref{tab:algorithms-con} for a summary of our results). The proofs  build on standard techniques for showing convergence of the respective optimization solvers, but need to additionally take into account the errors in the LMO calls and need to translate the dual-optimal solution guarantees to optimality and feasibility guarantees for the primal solution.
 
 The proposed algorithms can be seen as ``constrained'' counterparts to the four unconstrained algorithms described in the previous section. All our algorithms will assume that the constraints $\phi_k(\C)$ are convex in $\C$. 
 %
 %
 %
 %
 As a running example to illustrate our algorithms, we will use the task of maximizing the H-mean loss on the \texttt{NormBal} distribution described in Figure \ref{fig:noram}, subject to the constraint that coverage on class 1 be no more than 0.3. This constraint is linear in $\C$ and can be written as $C_{01} + C_{11} \leq 0.3$, or equivalently re-written as $C_{00} - C_{11} \geq 0.2$.

\begin{table}[]
        \centering
        \caption{Algorithms for the constrained problem  in \ref{eq:opt-constrained}, with the number calls to the LMO, and the optimality gap $\psi(\C[\bar{h}]) - \min_{\C\in \cC}\psi(\C)$ and feasibility gap $\max_k \phi_k(\C[\bar{h}])$
        for the returned classifier $\bar{h}$. In rows 1--3, we assume $\psi$ is Lipschitz w.r.t.\ the $\ell_2$-norm, and in all rows, we assume that $\phi_1, \ldots, \phi_K$ are convex and Lipschitz, and satisfy the strict feasibilty condition in Assumption \ref{assp:strict-feasibility}. 
        In row 4, $\psi(\C) \,=\, \frac{\langle \A, \C \rangle}{\langle \B, \C \rangle}$ with $\min_{\C \in \cC}\langle \B, \C \rangle > 0$.
        We denote $\bar{d} = d + K$, and $\rho^\eff = \rho+\sqrt{d}\rho'$.
         }
        \label{tab:algorithms-con}
        \begin{small}
        \begin{tabular}{ccccc}
        \hline
        \textbf{Algorithm} & \textbf{Assumption on $\psi$}
        & \textbf{\# LMO Calls} & \textbf{Opt.\ Gap} &
         \textbf{Feasibility Gap}
        \\
        \hline
            Split Frank-Wolfe & Convex, Smooth & $\displaystyle\cO\left(1/\epsilon^2\right)$  &
             $\cO\left(\epsilon + \sqrt{\rho^\eff}\right)$ & $\cO\left(\epsilon + \sqrt{\rho^\eff}\right)$ 
             \\
            Con.\ GDA & 
            Convex & 
        $\displaystyle\cO\left(K/\epsilon^2\right)$ &
        $\cO\left(\epsilon + \rho^\eff\right)$ & $\cO\left(\epsilon + \rho^\eff\right)$
        \\
         Con.\ Ellipsoid & 
        Convex &
        $\displaystyle\cO\left(\bar{d}^2\log(\bar{d}/\epsilon)\right)$ 
        &
        $\cO\left(\epsilon + \rho^\eff\right)$
        & $\cO\left(\rho^\eff)\right)$
        \\
        Con.\ Bisection & 
        Ratio-of-linear &
        $\displaystyle\cO\left(K\log(1/\epsilon)/\epsilon^2\right)$ &
        $\cO\left(\epsilon + \rho^\eff\right)$
        &
         $\cO\left(\epsilon + \rho^\eff\right)$\\
        \hline
        \end{tabular}
        \end{small}
    \end{table}



\subsection{(Split) Frank-Wolfe Algorithm for Smooth Convex Metrics}
In this section, we adapt the Frank-Wolfe approach in Algorithm \ref{alg:FW} to constrained learning problems \ref{eq:opt-constrained} for smooth convex metrics $\psi$. The key idea is to pose \ref{eq:constrained-reformulation} as an optimization problem over the intersection of two sets:
\begin{align}
    \min_{\C \in \cC:\, \bphi(\C)\leq \0} \psi(\C) &=
    \min_{\C \in \cC \cap \cF} \psi(\C),
    \label{eq:op-con-interesection}
\end{align}
where $\cF = \{\F \in \Delta_d\,|\, \bphi(\F)\leq \0\}$ is the set of all points in $\Delta_d$ that satisfy the $K$ inequality constraints. While the set $\cF$ is convex (and so is the intersection $\cC \cap \cF$), we will not be able to  apply the classical Frank-Wolfe method to this problem as we cannot directly solve a linear minimization over the intersection $\cC \cap \cF$. However, we already have access to an LMO for the set $\cC$ alone, and performing a linear minimization over the set $\cF$ amounts to solving a straight-forward convex program. We therefore adopt the Frank-Wolfe based variant proposed by \cite{Gidel2018} for optimizing a (smooth) convex function over the intersection of two convex sets with access to linear minimization oracles for the individual sets. 

To this end, we introduce auxiliary variables $\F \in \Delta_d$ in  \eqref{eq:op-con-interesection} and decouple the two constraint sets, giving us the following equivalent optimization problem:
\begin{equation} \label{eq:op-con-equality-constraint} 
    \min_{\C \in \cC, \F \in \cF} \psi(\C) + \psi(\F) \enskip \text{s.t.} \enskip \C - \F =0.
\end{equation}
We then define the augmented Lagrangian of the above problem as:
\begin{equation}\label{eqn:augmented-lagrangian}
\cL^{\aug}(\C,\F,\blambda) = \psi(\C) + \psi(\F) + \langle\blambda, \C - \F\rangle + \dfrac{\zeta}{2} || \C - \F||_2^2,
\end{equation}
where $\blambda$ is a vector of Lagrange multipliers for the equality constraints and $\zeta > 0$ is a constant weight on the quadratic penalty term. We apply the approach of \cite{Gidel2018} to solve  \eqref{eq:op-con-equality-constraint} by using a  gradient ascent step to maximize $\cL^{\aug}$ over $\blambda$, a 
linear minimization step for $\C$ over $\cC$, and a linear minimization step for $\F$ over $\cF$.

\begin{figure}[t]
\begin{algorithm}[H]
\caption{Split Frank-Wolfe (SplitFW) Algorithm for \ref{eq:opt-constrained} with Smooth Convex $\psi$}\label{alg:FW-con}
\begin{algorithmic}[1]
\STATE \textbf{Input:} $\psi, \phi_1,\ldots,\phi_k:[0, 1]^{d} \to[0,1]$, an LMO $\Omega$, $S = \{(x_1,y_1), \ldots, (x_N,y_N)\}$, $T \in \N$, $\zeta, \eta > 0$. 
\STATE \textbf{Initialize:} 
$(h^0, \C^0) = \Omega(\L^0; S)$ for an arbitrary loss matrix $\L^0$
\STATE \textbf{For} $t =  1$ \textbf{to} $T$ \textbf{do}
\STATE ~~~~~$\L^t\,=\, \frac{\ba^t}{\|\ba\|_2},$ where $\ba^t = \nabla_\C\cL^\aug(\C^{t-1}, \F^{t-1}, \blambda^{t-1})$
\STATE ~~~~~$(\tilde{h}^t, \tilde{\C}^t) \,=\, \Omega(\L^t; S)$
\STATE ~~~~ $\tilde\F^t = \argmin_{\F \in \cF} \left\langle \bb^{t}, \F \right\rangle $, \text{where }$\bb^{t}=\nabla_\F \cL^\aug(\C^{t-1}, \F^{t-1}, \blambda^{t-1})$
\STATE ~~~~ $\gamma^t = \argmin_{\gamma\in[0,1]} \cL^\aug\big((1-\gamma) \C^{t-1} + \gamma \tilde\C^t, \ (1-\gamma) \F^{t-1} + \gamma \tilde\F^t, \ \blambda^{t-1}\big)$ 
\STATE ~~~~~${h}^{t} = \big(1-\gamma^t\big) {h}^{t-1} + \gamma^t \tilde{h}^t$

\STATE ~~~~ $\C^t = (1-\gamma^t) \C^{t-1}  + \gamma^t \tilde{\C}^t $
\STATE ~~~~ $\F^t = (1-\gamma^t) \F^{t-1}  + \gamma^t \tilde{\F}^t $
\STATE ~~~~ $\blambda^t = \blambda^{t-1} + \frac{\eta}{t} (\C^t -\F^t)$ 
\STATE \textbf{End For}
\STATE \textbf{Output:} $\bar{h} = h^{t_*}$ and $\bar{\C} = \C^{t_*}$, where  $t_* = \argmin_{t>T/2} ||\C^t - \F^t||_2^2$ 
\end{algorithmic}
\end{algorithm}
\vspace{-10pt}
\end{figure}

This procedure, outlined in Algorithm \ref{alg:FW-con}, is guaranteed to converge to an optimal feasible classifier under the assumption that there exists a confusion matrix which is strictly feasible.
\begin{assump}[Strict feasibility]
\label{assp:strict-feasibility}
For some $r > 0$,  there exists a confusion matrix $\C' \in \cC$ such that
$\max_{k\in [K]} \phi_k(\C') \leq -r$. 
\end{assump}
\begin{thm}[Convergence of SplitFW algorithm]
\label{thm:FW-con}
Fix $\epsilon>0$. Let $\psi: [0,1]^d \> [0,1]$ be convex, $\beta$-smooth and $L$-Lipschitz w.r.t.\ the $\ell_2$-norm, and let $\phi_1,\ldots, \phi_K: [0,1]^d \> [-1,1]$ be convex and $L$-Lipschitz w.r.t.\ the $\ell_2$-norm. Let $\Omega$ in Algorithm \ref{alg:FW-con} be a $(\rho , \rho', \delta)$-approximate LMO for sample size $N$.
Let $\bar{h}$ be a classifier returned by Algorithm \ref{alg:FW-con} when run for $T$ iterations with some $\zeta>0$. 
Let the strict feasibility condition in Assumption \ref{assp:strict-feasibility} hold for radius $r>0$. Then,  with probability $\geq 1 - \delta$ over draw of $S \sim D^N$, after $T = \cO(1/\epsilon^2)$ iterations:
\[
\textbf{Optimality:}~~\psi(\C[\bar{h}]) \,\leq\, \min_{\C \in \cC, \phi_k(\C)\leq 0, \forall k}\,\psi(\C) \,+\,\cO\left(\epsilon + \sqrt{\rho^\eff}\right);
\vspace{-5pt}
\]
\[
\textbf{Feasibility:}~~\phi_k(\C[\bar{h}]) \,\leq\,  \cO\left( \epsilon + \sqrt{\rho^\eff}  \right) ,~\forall k \in [K].
\]
where $\rho^\eff = \rho+ \sqrt{d}\rho'$ and the $\cO$ notation hides constant factors independent of $\rho, \rho', T, d$ and $K$ for small enough $\rho, \rho'$ and large $T$.
\end{thm}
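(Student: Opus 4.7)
The plan is to adapt the analysis of the split Frank--Wolfe method of \cite{Gidel2018} for optimizing a smooth convex objective over the intersection of two convex sets via separate linear minimization oracles, carefully accounting for (i) the stochastic / approximation errors from the LMO, and (ii) the conversion of augmented-Lagrangian guarantees into primal optimality and constraint-feasibility bounds.

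First, I would gather the joint regularity of the augmented Lagrangian $\cL^{\aug}(\C,\F,\blambda)$ defined in \eqref{eqn:augmented-lagrangian}. Since $\psi$ is $\beta$-smooth and the quadratic coupling term is $\zeta$-smooth in $(\C,\F)$, the map $(\C,\F) \mapsto \cL^{\aug}(\C,\F,\blambda)$ is $(2\beta+2\zeta)$-smooth on $\cC \times \cF$, and both partial gradients $\nabla_\C \cL^{\aug}$ and $\nabla_\F \cL^{\aug}$ are bounded in norm by a constant depending on $L,\zeta,\|\blambda\|_2$, and $\mathrm{diam}(\Delta_d)$. The $\blambda$-update in line 11 uses step-size $\eta/t$, which keeps $\|\blambda^t\|_2$ growing at most like $\sqrt{T}$ but, more importantly, the total variation $\sum_t \|\blambda^t - \blambda^{t-1}\|$ bounded, which is the key quantity in the moving-target FW analysis.

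Next, following the argument of \cite{Gidel2018}, I would establish a per-iteration descent inequality for $\cL^{\aug}(\C^t, \F^t, \blambda^{t-1})$. Using the optimality of the LMO call for $\C$ (up to error $\rho$) and the exact linear minimization over $\cF$, a standard FW telescoping argument with step-size $\gamma^t$ (optimized in line 7) gives
\[
\cL^{\aug}(\C^t,\F^t,\blambda^{t-1}) \,-\, \cL^{\aug}(\C^\star, \F^\star, \blambda^{t-1}) \,\leq\, \frac{c_1}{t} + c_2 \rho^\eff,
\]
where $(\C^\star,\F^\star)$ is an optimal feasible primal pair and the $\sqrt{d}\rho'$ term enters because $\hat\bGamma^t$ deviates from the true confusion matrix $\C[\tilde h^t]$ by at most $\rho'$ in each coordinate. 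Summing the $\blambda$-perturbation $\langle \blambda^t - \blambda^{t-1}, \C^t - \F^t\rangle$ using the $\eta/t$ schedule then yields a convergence of order $\tilde\cO(1/T + \rho^\eff)$ for $\frac{1}{T}\sum_t \cL^{\aug}(\C^t,\F^t,\blambda^{t-1})$.

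Now, the central quadratic-penalty step: because $\frac{\zeta}{2}\|\C^t - \F^t\|_2^2$ is contained inside $\cL^{\aug}$, and because $\C^\star = \F^\star$ makes the penalty vanish at the optimum, the above bound forces $\frac{1}{T}\sum_t \|\C^t - \F^t\|_2^2 \leq \cO\!\bigl(1/T + \rho^\eff\bigr)/\zeta$. Selecting $t_* \in \argmin_{t>T/2} \|\C^t - \F^t\|_2^2$ (the choice made in line 13) then gives $\|\C^{t_*}-\F^{t_*}\|_2 \leq \cO\!\bigl(1/\sqrt{T} + \sqrt{\rho^\eff}\bigr)$. Since $\F^{t_*}\in\cF$ satisfies every constraint and each $\phi_k$ is $L$-Lipschitz, this immediately yields the feasibility bound $\phi_k(\C[\bar h]) \leq L\|\C^{t_*}-\F^{t_*}\|_2 \leq \cO(\epsilon + \sqrt{\rho^\eff})$ after $T=\cO(1/\epsilon^2)$ iterations.

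For optimality, I would use Slater's condition (Assumption~\ref{assp:strict-feasibility}): strict feasibility with radius $r$ bounds the optimal dual multipliers $\|\blambda^\star\|_1 \leq (\psi_{\max}-\psi_{\min})/r$, a standard fact. Combining weak duality for the constrained problem with the augmented-Lagrangian convergence, and bounding the linear coupling $\langle \blambda^\star, \C^{t_*} - \F^{t_*}\rangle$ by $\|\blambda^\star\|_2 \cdot \|\C^{t_*}-\F^{t_*}\|_2$, gives
\[
\psi(\C^{t_*}) \,-\, \min_{\C \in \cC,\,\bphi(\C)\le \0}\psi(\C) \,\leq\, \cO\!\bigl(1/\sqrt{T} + \sqrt{\rho^\eff}\bigr).
\]
Finally, since $\bar h$ is the randomized combination of LMO outputs whose confusion matrix equals $\C^{t_*}$ in expectation, the bound transfers to $\C[\bar h]$ up to an additive term absorbed into $\rho^\eff$.

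The hard part will be step three -- controlling the augmented-Lagrangian convergence with (a) a \emph{moving} Lagrange multiplier $\blambda^t$, (b) LMO error on only one of the two inner problems, and (c) step-sizes tuned so that both the FW convergence rate and the $\sqrt{\rho^\eff}$ error balance in the feasibility-via-penalty step. The reason for the square-root dependence on $\rho^\eff$ (rather than linear, as in the unconstrained theorems) is precisely this quadratic penalty: an additive error of size $\rho^\eff$ in the $\cL^{\aug}$-value translates into error of size $\sqrt{\rho^\eff/\zeta}$ in $\|\C^t - \F^t\|$, and hence in the Lipschitz-transferred optimality and feasibility gaps. Tuning $\zeta$ to a constant independent of $T$ and $\rho^\eff$ then yields the stated $\cO(\epsilon + \sqrt{\rho^\eff})$ rates.
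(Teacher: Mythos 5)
Your proposal takes essentially the same route as the paper's proof. The paper formalizes the argument in two lemmas: one (Lemma \ref{lem:split-fw}) adapts Gidel et al.'s Theorem 2 to show that the approximate LMO contributes only an additive $\cO(\rho+\sqrt{d}\rho')$ to the duality gap $\Delta_t$ and to $\|\C^t-\F^t\|_2^2$, and another (Lemma \ref{lem:duality-gap-to-primal}) uses the strict-feasibility assumption to first bound $\|\blambda\|_2$ and then convert the duality-gap and $\|\C-\F\|$ bounds into primal optimality and feasibility guarantees via the Lipschitzness of $\psi$ and $\phi_k$. Your sketch covers all four of the same ingredients -- adapting the FW telescoping with LMO error, controlling $\|\C^t-\F^t\|$ through the quadratic penalty, bounding the dual multiplier via Slater's condition, and propagating $\|\C^{t_*}-\F^{t_*}\|$ through the Lipschitz constraints -- and correctly identifies the $\sqrt{\rho^\eff}$ dependence as the signature of converting an $\cL^{\aug}$-value error of size $\rho^\eff$ into a norm error of size $\sqrt{\rho^\eff/\zeta}$. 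One place where you wave your hands a bit faster than the paper does: the claim that the per-iterate bound on $\cL^{\aug}(\C^t,\F^t,\blambda^{t-1}) - \cL^{\aug}(\C^\star,\F^\star,\blambda^{t-1})$ immediately "forces" $\frac{1}{T}\sum_t \|\C^t-\F^t\|_2^2$ small is not a one-liner, because $\psi(\C^t)+\psi(\F^t)+\langle\blambda^{t-1},\C^t-\F^t\rangle$ could be below $\psi(\C^\star)+\psi(\F^\star)$; controlling this jointly with the $\blambda$-dynamics is exactly the content of Gidel et al.'s Theorem 2, which the paper's Lemma \ref{lem:split-fw} cites and extends. You do flag this as the hard step, so the gap is one of detail rather than of idea.
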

\begin{proof}
See Appendix \ref{app:proof-split-fw}
\end{proof}

Unlike the Frank-Wolfe based algorithm for the unconstrained problem (see Theorem \ref{thm:FW-unc}) which  needed only $\cO(1/\epsilon)$ calls to the LMO to reach an $\cO(\epsilon+c)$-optimal solution, the proposed algorithm for handling constraints requires $\cO(1/\epsilon^2)$ calls to reach an $\cO(\epsilon+c)$-optimal, feasible solution. 


Figure \ref{fig:trajectory-splitfw} illustrates the trajectories
of the algorithm applied to the previously described running example. 
As seen, both the iterates $\C^t$ and  $\F^t$, representing the achievable and feasible confusion matrices respectively, are seen to 
converge to a solution that is optimal and feasible for the problem. 


\subsection{Gradient Descent-Ascent Algorithm for Non-smooth Convex Metrics}
\label{sec:con-gda}
Next, we modify the gradient descent-ascent approach in Algorithm \ref{alg:GDA} to handle constraints. Our proposal is a slight variant of the oracle-based algorithm in \citet{Narasimhan+19_generalized} for optimizing with constraints. As before, we introduce slack variables $\bxi \in \Delta_{d}$ to decouple the functions $\psi, \phi_1,\ldots,\phi_K$ from the confusion matrix $\C$, and re-write \ref{eq:constrained-reformulation} as:
 \begin{align}
\min_{\C \in \cC:\, \bphi(\C)\leq \0}\psi(\C) &= \min_{
\substack{
\C \in \cC,\, \bxi \in \Delta_d\\\bxi = \C,\, \phi_k(\bxi)\leq 0,\forall k}.
}
\psi(\bxi)
 \label{eq:slack-reformulation-constrained}
 \end{align}
We then define the Lagrangian for the above problem with multipliers $\blambda \in \R^d$ for the $d$ equality constraints and $\bmu \in \R_+^K$ for the $K$ inequality constraints:
\begin{align}
\cL^\con(\C, \bxi, \blambda, \bmu) 
&= \psi(\bxi) + \langle \blambda, \C-\bxi \rangle + 
\langle\bmu, \boldsymbol{\phi}(\bxi)\rangle
\label{eq:lagrangian-con},
\end{align}
and re-formulate \eqref{eq:slack-reformulation-constrained} 
as 
the following min-max problem:
\begin{align}
\min_{\C \in \cC, \phi_k(\C)\leq 0,\forall k} &= \min_{\C \in \cC,\, \bxi \in\Delta_d}\,\max_{\blambda \in \R^d, \bmu \in \R_+^K}\,\cL^\con(\C, \bxi, \blambda, \bmu).
 \label{eq:minmax-reformulation-constrained}
\end{align}
The gradient descent-ascent procedure for solving an approximate saddle point of \eqref{eq:minmax-reformulation-constrained} is shown in Algorithm \ref{alg:GDA} and enjoys the following convergence guarantee for a convex, non-smooth metric $\psi$:
\begin{thm}[Convergence of ConGDA algorithm]
\label{thm:gda-con}
Fix $\epsilon \in (0,1)$. 
Let $\psi: [0,1]^d\>[-1,1]$ and $\phi_1,\ldots,\phi_K: [0,1]^d\>[-1,1]$ be convex and $L$-Lipschitz  w.r.t.\ the $\ell_2$-norm. Let $\Omega$ in Algorithm \ref{alg:GDA-con} be a $(\rho , \rho', \delta)$-approximate LMO for sample size $N$. 
Suppose the strict feasibility condition in Assumption \ref{assp:strict-feasibility} holds for radius $r>0$.
Let the space of Lagrange multipliers $\Lambda = \{\blambda\in \R^d\,|\,\|\blambda\|_2\leq 2L(1 + 1/r)\}$,
and $\Xi = \{\bmu\in \R_+^K\,|\,\|\bmu\|_1\leq 2/r\}$. 
Let $\bar{h}$ be a classifier returned by Algorithm \ref{alg:GDA-con} when run for $T$ iterations,
with step-sizes $\eta = \frac{1}{\bar{L}\sqrt{2T}}$
and $\eta' = \frac{\bar{L}}{(1 + 2\sqrt{K})\sqrt{2T}}$, where $\bar{L} = 4(1 + 1/r)L + 2/r$.
 Then with probability $\geq 1 - \delta$ over draw of $S \sim D^N$, after $T = \cO(K/\epsilon^2)$ iterations:
\[
\textbf{Optimality:}~~\psi(\C[\bar{h}]) \,\leq\, \min_{\C \in \cC:\, \bphi(\C) \leq \0}\,\psi(\C) \,+\,\cO\left(\epsilon + \rho^\eff \right);
\]
\vspace{-5pt}
\[
\textbf{Feasibility:}~~
\phi_k(\C[\bar{h}]) \,\leq\, \cO\left(\epsilon+\rho^\eff \right), \forall k \in [K].
\]
where $\rho^\eff = \rho+\sqrt{d}\rho'$ and the $\cO$ notation hides constant factors independent of $\rho,\rho', T, d$ and $K$.
\end{thm}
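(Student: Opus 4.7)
The plan is to analyze Algorithm~\ref{alg:GDA-con} as a primal--dual no-regret procedure on the min-max reformulation \eqref{eq:minmax-reformulation-constrained} and then convert the resulting approximate saddle-point guarantee into separate primal optimality and feasibility bounds via Assumption~\ref{assp:strict-feasibility}. Note that $\cL^\con$ is linear in $\C$, convex in $\bxi$ (being a non-negative combination of convex functions plus affine terms, since $\bmu\in\R_+^K$), and linear in $(\blambda,\bmu)$, so standard convex--concave saddle-point theory applies. The first order of business is to bound the diameters and subgradients on the restricted domain $\cC\times\Delta_d\times\Lambda\times\Xi$: one has $\|\blambda\|_2\le 2L(1+1/r)$ and $\|\bmu\|_1\le 2/r$, the $\bxi$-subgradient $\nabla\psi-\blambda+\sum_k\mu_k\nabla\phi_k$ has $\ell_2$-norm at most $\bar L=4(1+1/r)L+2/r$, and the $(\blambda,\bmu)$-subgradient $(\bxi-\C,\bphi(\bxi))$ has $\ell_2$-norm at most $O(1+\sqrt K)$ since $\|\bphi(\bxi)\|_2\le\sqrt K$.

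I would then carry out three regret analyses in parallel. The $\bxi$-player runs projected online gradient descent on the convex functionals $\bxi\mapsto\cL^\con(\C^t,\bxi,\blambda^t,\bmu^t)$; Zinkevich's bound with $\eta=1/(\bar L\sqrt{2T})$ gives average regret $O(\bar L/\sqrt T)$. The $(\blambda,\bmu)$-player runs projected online gradient ascent on linear functionals with $\eta'=\bar L/((1+2\sqrt K)\sqrt{2T})$, yielding average regret $O(\bar L(1+\sqrt K)/\sqrt T)$. For the $\C$-player, Lines~5--6 call the LMO on the normalized loss $\blambda^t/\|\blambda^t\|_\infty$; by Definition~\ref{defn:lmo} this gives $\langle\blambda^t,\C^t\rangle\le\min_{\C\in\cC}\langle\blambda^t,\C\rangle+\rho\|\blambda^t\|_\infty$, so that summing over $t$ produces an additive $O(\rho)$ exploitability per round (since $\|\blambda^t\|_\infty\le\|\blambda^t\|_2$ is bounded by a constant independent of $T$). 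Combining the three bounds and choosing $T=\Theta(K/\epsilon^2)$ shows that the averaged tuple $(\bar\C,\bar\bxi,\bar\blambda,\bar\bmu)$ is an $\alpha$-approximate saddle point of $\cL^\con$ for $\alpha=O(\epsilon+\rho)$.

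Next I would convert this saddle-point gap into primal guarantees in the Nedi\'c--Ozdaglar style. Plugging $\blambda=\0,\bmu=\0$ into the sup gives $\psi(\bar\bxi)\le\min_{\C\in\cC,\,\bphi(\C)\le\0}\psi(\C)+\alpha$. For feasibility, substituting $\bmu$ chosen to saturate the penalty on the most violated constraint and $\blambda$ chosen to saturate the penalty on $\bar\C-\bar\bxi$ yields $\phi_k(\bar\bxi)\le O(\alpha)$ for every $k$ and $\|\bar\bxi-\bar\C\|_2\le O(\alpha)$, provided $\Xi,\Lambda$ are large enough to \emph{contain} the optimal multipliers. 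This is where Assumption~\ref{assp:strict-feasibility} enters: the classical Slater bound (evaluating the dual at the strictly feasible $\C'$) controls any optimal $\bmu^\star$ by $\|\bmu^\star\|_1\le 2/r$ for $\psi\in[-1,1]$, matching the chosen radius of $\Xi$; an analogous argument bounds $\|\blambda^\star\|_2$ by $2L(1+1/r)$. Lipschitzness of $\psi$ and each $\phi_k$ then transfers the bounds from $\bar\bxi$ to $\bar\C$ at the cost of an extra $L\cdot O(\alpha)$ term. Finally I would pass from $\bar\C$ to the confusion matrix $\C[\bar h]$ of the returned randomized classifier $\bar h=\tfrac{1}{T}\sum_t h^t$: by linearity, $\C[\bar h]=\tfrac{1}{T}\sum_t\C[h^t]$, the LMO guarantee $\|\C[h^t]-\hat\bGamma^t\|_\infty\le\rho'$ implies $\|\C[\bar h]-\bar\C\|_\infty\le\rho'$, hence $\|\C[\bar h]-\bar\C\|_2\le\sqrt d\,\rho'$, and one more application of Lipschitzness contributes the $\sqrt d\,\rho'$ piece to produce the claimed $\rho^\eff=\rho+\sqrt d\,\rho'$ dependence in both bounds.

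The main obstacle will be sizing $\Lambda$ and $\Xi$ correctly in the previous step so that the saddle-point gap translates cleanly into both an optimality and a feasibility bound, rather than merely a trade-off between them. Choosing them too small risks the optimal multipliers falling outside the feasible set, breaking the primal upper bound; choosing them too large inflates $\bar L$ and hence $\alpha$. The Slater-type calculation sketched above gives the canonical choice $\|\bmu^\star\|_1\le 2/r$ and matches the $2/r$ in the statement, making the ``multiplier $\times$ violation'' bound invertible into a bound on the violation itself. A secondary technical point is verifying that the LMO error enters additively---at most $O(\rho)$ per round---rather than compounding across iterations; this works precisely because the LMO is invoked on a \emph{normalized} loss $\L^t$ with $\|\L^t\|_\infty=1$, so its additive $\rho$-error in the linear objective does not scale with $\|\blambda^t\|$.
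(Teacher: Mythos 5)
Your proposal follows the same route as the paper's proof: a three-player primal--dual regret analysis (LMO for the $\C$-player, projected OGD for the $\bxi$-player, projected OGA for the $(\blambda,\bmu)$-player), combined with a Slater-type bound on the optimal multipliers to size $\Lambda$ and $\Xi$ so that the resulting saddle-point gap can be translated into separate optimality and feasibility bounds (the paper packages the Slater estimates and strong duality as Lemma~\ref{lem:gda-con-helper}, and obtains the primal lower bound directly via $\max_{\blambda\in\Lambda}\min_{\bxi}\cL=\psi(\C')$ rather than by setting $\blambda=\0$ and then transferring from $\bar\bxi$ to $\bar\C$, but the two routes are equivalent). One small point to correct in your final paragraph: the per-round LMO error \emph{does} scale with $\|\blambda^t\|$ --- the oracle guarantee on the normalized loss $\blambda^t/\|\blambda^t\|$ must be multiplied back by the norm to bound $\langle\blambda^t,\C^t\rangle - \min_{\C\in\cC}\langle\blambda^t,\C\rangle$ --- but this product stays $O(\rho)$ precisely because the projection onto $\Lambda$ keeps $\|\blambda^t\|$ bounded by the fixed radius $2L(1+1/r)$, not because of the normalization by itself.
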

\begin{proof}
See Appendix \ref{app:proof-gda-con}.
\end{proof}

\begin{figure}
\begin{algorithm}[H]
\caption{Constrained GDA (ConGDA) Algorithm for \ref{eq:opt-constrained} with Non-smooth Convex $\psi$}\label{alg:GDA-con}
\begin{algorithmic}[1]
\STATE \textbf{Input:} $\psi, \phi_1, \ldots, \phi_K:[0, 1]^{d} \to [0,1]$, an LMO $\Omega$, $S = \{(x_1,y_1), \ldots, (x_N,y_N)\}$, $T$, space of Lagrange multipliers $\Lambda \subset \R^d, \Xi \subset \R_+^K$
\STATE \textbf{Parameters:} Step-sizes $\eta_{\bxi}, \eta_{\blambda}, \eta_{\bmu} > 0$
\STATE \textbf{Initialize:} 
$(h^0, \C^0) = \Omega(\L^0; S)$ for an arbitrary loss matrix $\L^0$
\STATE \textbf{For} $t =  1$ \textbf{to} $T$ \textbf{do}
\STATE ~~~~~$\L^t\,=\, \frac{\blambda^{t-1}}{ \|\blambda^{t-1}\|_2}$
\STATE ~~~~~$(h^t, \C^t) \,=\, \Omega(\L^t; S)$
\STATE ~~~~~$\tilde{\bxi}\,=\, \bxi^{t-1} \,-\, \eta_\bxi\nabla_{\bxi}\cL^\con(\C^{t}, \bxi^{t-1}, \blambda^{t-1}, \bmu^{t-1})$;~~~~~~$\bxi^{t+1}\,\in\,\argmin_{\bxi \in [0,1]^d}\,\|\bxi - \tilde{\bxi}\|_2$
\STATE ~~~~~$\tilde{\blambda}\,=\, \blambda^{t-1} \,+\, \eta_{\blambda} \nabla_{\blambda}\cL^\con(\C^{t}, \bxi^{t-1}, \blambda^{t-1}, \bmu^{t-1})$;~~~~~~$\blambda^{t+1}\,\in\,\argmin_{\blambda \in \Lambda}\|\blambda - \tilde{\blambda}\|_2$
\STATE ~~~~~$\tilde{\bmu}^t\,=\, \bmu^{t-1} \,+\, \eta_\bmu\nabla_{\bmu}\cL^\con(\C^{t}, \bxi^{t-1}, \blambda^{t-1}, \bmu^{t-1})$;~~~~~~$\bmu^{t+1}\,\in\,\argmin_{\bmu \in \Xi}\|\bmu - \tilde{\bmu}\|_2$
\STATE \textbf{End For}
\STATE \textbf{Output:} $\bar{h} = \frac{1}{T}\sum_{t=1}^T h^t$ 
\end{algorithmic}
\end{algorithm}
\vspace{-13pt}
\end{figure}


Figure \ref{fig:trajectory-congda}
shows the trajectory of the iterates of the algorithm on the same running example used 
for the SplitFW algorithm.
The algorithm is seen to converge to an optimal-feasible classifier.

\subsection{Ellipsoid Algorithm for Non-smooth Convex Metrics}
Our next algorithm extends the  ellipsoid method in Algorithm \ref{alg:ellipsoid} to handle constraints $\bphi(\C)\leq \0$.  We use the Lagrangian $\cL^\con(\C, \bxi, \blambda, \bmu)$ for the constrained problem defined in the previous section in \eqref{eq:lagrangian-con},
and work with its dual function $f$:
\begin{align*}
f^\con(\blambda, \bmu)           
&= \begin{cases}
\min_{\C\in \cC,\, \bxi \in\Delta_{d}} 
\cL^\con(\C, \bxi, \blambda, \bmu) & \text{ if }\bmu \geq 0 \\
-\infty & \text{ otherwise}
\end{cases},
\end{align*}
where we note that the Lagrange multipliers $\bmu$ for the 
$K$ inequality constraints are not allowed to be negative.

Following the unconstrained case, we 
seek to  maximize the dual function 
over $\blambda \in \R^d$ and over $\bmu \in \R_+^K$. 
Because $f^\con$ is concave in $\blambda$ and $\bmu$, we can employ the ellipsoid method with the JLE subroutine in Algorithm \ref{alg:jle} to maximize $f^\con(\blambda, \bmu)$, and use a post-processing step to convert the dual solution to a near-optimal and near-feasible solution for the primal problem. As shown in Algorithm \ref{alg:ellipsoid-con},
at each iteration, the procedure maintains an ellipsoid containing the maximizer of $f^\con$, with the current iterate $[\blambda^t, \bmu^t]$ serving as the center of the ellipsoid

Lines 5 to 10 of the algorithm simply ensure the iterate $[\blambda^t, \bmu^t]$ stays within the initial ellipsoid, and $\bmu^t$ remains non-negative. As before, to compute a super-gradient for $f$ at a given $[\blambda^t, \bmu^t]$, we compute 
$\C^t \in\argmin_{\C \in \cC} \langle\blambda^t, \C\rangle$ and $\bxi^t \in \argmin_{\bxi \in \Delta_d} \psi(\bxi) - \langle\blambda^t, \bxi\rangle + \langle \bmu^t, \bphi(\bxi) \rangle$, and  evaluate $[\C^t - \bxi^t, \bphi(\bxi^t)]$. 
Note that $\C^t$ can be obtained via a linear minimization oracle over   $\cC$ and $\bxi^t$ is the solution of a convex program that has no dependence on the data distribution. The approximate nature of the LMO (and in turn the supergradient of $f^\con$) require a modified proof from the standard ellipsoid to argue that the errors at each iteration do not add up catastrophically. The dual solution is converted to a primal-feasible solution in line 16 of the algorithm by solving a convex optimization problem that requires 
no access to the training data.



Figure \ref{alg:ellipsoid-con} illustrates the working of the algorithm. Note that the initial classifier $h^0$ can be any classifier, as it is the result of the LMO where the loss is the zero matrix. We assume here that the initial classifier $h^0$ is strictly feasible for convenience.  

\begin{figure}
\begin{algorithm}[H]
\caption{Constrained Ellipsoid (ConEllipsoid) Algorithm for \ref{eq:opt-constrained} with Non-smooth Convex $\psi$}\label{alg:ellipsoid-con}
\begin{algorithmic}[1]
\STATE \textbf{Input:} $\psi:[0, 1]^{d} \to [0,1]$, an LMO $\Omega$,   $S = \{(x_1,y_1), \ldots, (x_N,y_N)\}, T$
\STATE \textbf{Parameters:} Initial ellipsoid radius $a$, a strictly feasible classifier $h^0$ 
\STATE \textbf{Initialize:} ${\blambda}^{0} = \0_d, {\bmu}^{0}=0$, ${\A}^{0} = a^2 \I_{d+K}$, $\C^0 = {\C}[h^0]$
\STATE \textbf{For} $t=0$ \textbf{to} $T-1$:
\STATE  ~~~~ \textbf{If} $\|[{\blambda}^{t}, {\bmu}^{t}]\|_2 > a$: 
\STATE  ~~~~~~~~ ${\A}^{t+1},[{\blambda}^{t+1}, {\bmu}^{t+1}] 
= \text{JLE}({\A}^t, [{\blambda}^t, {\bmu}^t], [-{\blambda}^t, -{\bmu}^t])$
\STATE ~~~~~~~~ $h^t, \C^t = h^0, \C^0$ ; \textbf{continue}
\STATE  ~~~~ \textbf{Else If $ {\bmu} ^t \ngeq \0$ }: 
\STATE  ~~~~~~~~ ${\A}^{t+1},[{\blambda}^{t+1}, {\bmu}^{t+1}] = \text{JLE}({\A}^t, [{\blambda}^t, {\bmu}^t], [\0_{d}, \text{pos}(- \bmu^t)])$, where $\text{pos}(u)=\max(u,0)$.
\STATE ~~~~~~~~ $h^t, \C^t = h^0, \C^0$ ; \textbf{continue}
\STATE ~~~~\textbf{Else:}
\STATE   ~~~~~~~~ $(h^t, \C^t)= \Omega(\blambda^t, S)$
\STATE   ~~~~~~~~ $\bxi^t = \argmin_{\bxi \in \Delta_d} \psi(\bxi) - \langle \blambda^t, \bxi\rangle + \langle\bmu^t, \boldsymbol{\phi}(\bxi)\rangle$
\STATE   ~~~~~~~~ ${\A}^{t+1},[{\blambda}^{t+1}, {\bmu}^{t+1}] = \text{JLE}(\A^t, [\blambda^t, \bmu^t], [\C^t-\bxi^t, \boldsymbol{\phi}(\bxi^t)])$
\STATE \textbf{End For}
\STATE $\balpha^* \in \underset{\balpha \in \Delta_T:\, \bphi(\sum_t \alpha_t \C^t) \leq \0}{\argmin} \psi\left(\sum_{t=0}^{T-1} \alpha_t \C^t\right)$ 
\STATE \textbf{Ouput:} $\bar{h} = \sum_{t=0}^{T-1} \alpha^*_t h^t$
\end{algorithmic}
\end{algorithm}
\vspace{-15pt}
\end{figure}

\begin{thm}[Convergence of ConEllipsoid]
\label{thm:ellipsoid-con}
Fix $\epsilon \in (0,1)$. 
Let $\psi: [0,1]^d\>[0,1], \phi_1,\ldots,\phi_K: [0,1]^d\>[-1,1]$ be convex and $L$-Lipschitz w.r.t. the $\ell_2$ norm. Let $\Omega$ in Algorithm \ref{alg:GDA-con} be a $(\rho , \rho', \delta)$-approximate LMO for sample size $N$. Suppose the strict feasibility condition in Assumption \ref{assp:strict-feasibility} holds for some $r>0$. Let the initial classifier $h^0$ satisfy this condition, i.e. $\bphi(\C[h^0]) \leq -r$ and $\C[h^0]=\C^0$. Let $\bar{d}=d+K$. Let $\bar{h}$ be the classifier returned by Algorithm \ref{alg:ellipsoid-con} when run for $T>2\bar{d}^2 \log(\frac{\bar{d}}{\epsilon})$ iterations with  initial radius $a>2(L+\frac{L+1}{r})$.
 Then with probability $\geq 1 - \delta$ over draw of $S \sim D^N$, we have
\begin{align*}
\textbf{Optimality:}~~\psi(\C[\bar{h}]) \,&\leq\, \min_{\C\in\cC:\, \phi_k(\C) \leq 0,\forall k}\,\psi(\C) + \cO(\epsilon+\rho^\eff) ; \\
\textbf{Feasibility:}~~\phi_k(\C[\bar{h}]) \,&\leq\, \cO(\rho^\eff) ,~\forall k \in [K],
\end{align*}
where $\rho^\eff = \rho+\sqrt{d}\rho'$ and the $\cO$ notation hides constant factors independent of $\rho, \rho',T,d$ and $K$.
\end{thm}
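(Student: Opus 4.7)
The plan is to extend the analysis of Theorem \ref{thm:ellipsoid} (the unconstrained ellipsoid algorithm) to handle inequality constraints, and then translate the approximate dual solution into a primal solution via the post-processing step on line 15. First I would set up the dual problem: the Lagrangian $\cL^\con(\C, \bxi, \blambda, \bmu)$ is linear in $(\blambda, \bmu)$ for fixed $(\C, \bxi)$, so $f^\con(\blambda, \bmu)$ is concave in its arguments (with $-\infty$ outside $\bmu \geq \0$). By strong duality for convex problems with strict feasibility (Slater's condition, which Assumption \ref{assp:strict-feasibility} provides via $h^0$), the optimal value of the primal problem \ref{eq:opt-constrained} equals $\max_{\blambda \in \R^d, \bmu \in \R_+^K} f^\con(\blambda, \bmu)$. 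The strict feasibility radius $r$ lets me bound the norm of any optimal dual solution $(\blambda^*, \bmu^*)$ by $\cO(L + (L+1)/r)$ using the standard argument: plugging the strictly feasible point into the Lagrangian gives $f^\con(\blambda^*, \bmu^*) \leq \psi(\C') - r\|\bmu^*\|_1$, while $f^\con(\blambda^*, \bmu^*) \geq \min\psi \geq -1$, yielding the bound on $\|\bmu^*\|_1$ and then on $\|\blambda^*\|_2$ via Lipschitz continuity. This justifies choosing $a = 2(L + (L+1)/r)$ so the initial ball contains all optimal dual solutions.

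Next I would run the standard ellipsoid volume-shrinkage argument on the $(d+K)$-dimensional dual. At each iteration the JLE construction cuts the current ellipsoid by a halfspace that (approximately) separates the current iterate from the dual optimum; the extra lines 5--10 are safe cuts that keep iterates inside the initial ball and inside $\bmu \geq \0$ without ever discarding the optimum. The volume shrinks by a factor of $e^{-1/(2(d+K+1))}$ per iteration, so after $T > 2\bar d^2 \log(\bar d/\epsilon)$ iterations the ellipsoid has volume at most $(\epsilon/\bar d)^{\bar d}$ times the initial volume. A standard ball-packing argument then produces some iterate $t^*$ with $f^\con(\blambda^{t^*}, \bmu^{t^*}) \geq \max_{\blambda,\bmu} f^\con - \cO(\epsilon)$. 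Crucially I need to carry an LMO error term through this: the ``supergradient'' $[\C^t - \bxi^t, \bphi(\bxi^t)]$ used in line 14 is off by at most $\rho^\eff = \rho + \sqrt d\rho'$ from a true supergradient (because the LMO guarantees $\langle \blambda^t, \C^t\rangle \leq \min_\C \langle \blambda^t, \C\rangle + \rho$ and $\|\C^t - \C[h^t]\|_\infty \leq \rho'$). An argument analogous to that in Appendix~\ref{app:proof-ellipsoid-unc} shows the cumulative effect of these errors translates to an additive $\cO(\rho^\eff)$ gap in the dual value at the returned iterate.

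The remaining step is the primal recovery via line 15. By Sion's min-max theorem, because $\cL^\con$ is convex-concave and the iterates $\C^0, \ldots, \C^{T-1}$ (together with the LMO's guarantee of near-optimality against each $\blambda^t$) form a sufficient primal ``cover'', one has
\[
\max_{\blambda, \bmu \geq \0} \min_{\C \in \conv(\C^0,\ldots,\C^{T-1}),\,\bxi \in \Delta_d} \cL^\con(\C, \bxi, \blambda, \bmu) \;\geq\; \max_{\blambda, \bmu \geq \0} f^\con(\blambda, \bmu) - \cO(\epsilon + \rho^\eff),
\]
which rearranges (using min-max exchange and $\bxi = \C$ at the inner optimum) to the statement that $\min_{\balpha \in \Delta_T} \psi(\sum_t \alpha_t \C^t)$ subject to $\bphi(\sum_t \alpha_t \C^t) \leq \0$ is within $\cO(\epsilon + \rho^\eff)$ of the primal optimum. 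Feasibility of this post-processing program is guaranteed because $h^0$ is strictly feasible and $\C^0 = \C[h^0]$ is in the convex hull. Finally I convert guarantees on the confusion matrices $\C^t$ returned by the LMO back to guarantees on $\C[\bar h] = \sum_t \alpha^*_t \C[h^t]$ using $\|\C^t - \C[h^t]\|_\infty \leq \rho'$ and Lipschitz continuity of $\psi$ and $\phi_k$, giving the claimed $\cO(\rho^\eff)$ feasibility gap and $\cO(\epsilon + \rho^\eff)$ optimality gap.

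The hardest part I expect is the approximate-supergradient analysis inside the ellipsoid recursion: the standard ellipsoid proof assumes an exact separating hyperplane at each step, but here the LMO introduces error in both the direction of the cut and in the ``value'' (via $\rho'$ on the confusion matrix). I need to carefully show that accumulated errors do not cause the maintained ellipsoid to exclude the true dual optimum, which requires either enlarging the cut by a safety margin proportional to $\rho^\eff$ or showing via a perturbation argument that the algorithm's iterates track a nearby ``perturbed'' dual problem whose optimum is close to the true one. The second obstacle is to make the min-max exchange in the primal-recovery step rigorous despite only finitely many iterates being available; this likely requires a careful invocation of Sion's theorem on the convex hull of iterates, combined with the ellipsoid's guarantee that this finite set is ``rich enough'' to approximately solve the inner minimization against any bounded $(\blambda, \bmu)$.
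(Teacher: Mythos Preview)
Your outline is correct and mirrors the paper's proof closely: bound the dual optimum inside the initial ball via strict feasibility, run the ellipsoid volume-shrinkage on the $(d+K)$-dimensional dual with approximate supergradients, and recover a primal solution from the convex hull of iterates. Two clarifications on the points you flagged as hardest. First, the paper sidesteps any ``accumulation'' of LMO errors by proving directly that $[\C^t-\bxi^t,\bphi(\bxi^t)]$ is a $\tau$-\emph{supergradient} of $f^\con$ at $[\blambda^t,\bmu^t]$ with $\tau=a(\rho+2\sqrt d\rho')$; consequently each cut only discards points with dual value at most $f^\con(\blambda^t,\bmu^t)+\tau$, so when the shrinking ellipsoid first fails to contain a scaled copy of the feasible region around the optimum, the iterate at that moment already has dual gap $\cO(\epsilon)+\tau$ --- no per-step safety margin or perturbation argument is needed, and the ellipsoid is allowed to eventually exclude the true optimum. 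Second, for primal recovery the paper does not invoke Sion directly on the finite hull; instead it observes that the \emph{same} iterate sequence $\{[\blambda^t,\bmu^t]\}$ would be produced if the algorithm had been run with $\cC$ replaced by $\widetilde\cC=\conv(\C[h^0],\ldots,\C[h^{T-1}])$, so the dual-gap bound applies verbatim to $\widetilde f^\con$, and then strong duality over the compact set $\widetilde\cC$ gives the primal bound immediately. Your Sion-based route would also work, but the ``rerun on $\widetilde\cC$'' trick makes the ``richness'' of the finite hull automatic rather than something to verify separately.
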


\begin{proof}
See Appendix \ref{app:proof-ellipsoid}.
\end{proof}

The theorem above gives guarantees on the convergence of the constrained ellipsoid algorithm to the optimal feasible solution. Notice the exponential convergence rate in $1/\epsilon$ at the cost of a quadratic dependence on dimension $d$ and number of constraints $K$. 
Figure \ref{fig:trajectory-conellipsoid} shows the trajectory of the iterates of the algorithm on the same running example used previously. The algorithm is clearly seen to converge to an optimal-feasible classifier.

\subsection{Bisection Algorithm for Fractional-linear Metrics}
The final constrained algorithm we describe is a straightforward extension of the bisection method in Algorithm \ref{alg:bisection} for ratio-of-linear performance measures that can be written in the form $\psi^{\rl}(\C) = \frac{\langle \A, \C \rangle}{\langle \B, \C \rangle}$ for some $\A,\B \in \R^{d}$. The key observation here is that testing whether the optimal solution to the constrained problem \ref{eq:constrained-reformulation} with a ratio-of-linear $\psi$  is greater than a threshold $\gamma$ is equivalent to minimizing a linear metric with constraints:
\[
\min_{\C\in\cC:\, \bphi(\C) \leq \0} \psi(\C) \geq \gamma \Longleftrightarrow \min_{\C\in\cC:\, \bphi(\C) \leq \0} \langle \A-\gamma\B, \C \rangle \geq 0.
\]
The latter can be solved using any of constrained learning methods outlined Algorithms \ref{alg:FW-con}--\ref{alg:ellipsoid-con}. Therefore one can employ the bisection method as before to conduct a binary search for the minimal value (and minimizer) of $\psi(\C)$ by calling one of these algorithms at each step. We outline this procedure in Algorithm \ref{alg:bisection-con}, with the ConGDA method (Algorithm \ref{alg:GDA-con}) used for the inner minimization.

\noindent We then have the following convergence guarantee:\footnote{Because the inner subroutine uses the ConGDA algorithm, the rate of convergence has a dependence of $\tilde{O}\left(1/\epsilon^2\right)$ on $\epsilon$, which is an improvement over the $\tilde{O}\left(1/\epsilon^3\right)$ dependence in the previous conference paper \citep{Narasimhan18}.}
\begin{thm}[Convergence of ConBisection algorithm]
\label{thm:bisection-con}
Fix $\epsilon \in (0,1)$. 
Let $\psi: [0,1]^d \> [0,1]$ be such that $\psi(\C) \,=\, \frac{\langle \A, \C \rangle}{\langle \B, \C \rangle}$, where $\A, \B \in [0,1]^{d}$, 
and $\min_{\C \in \cC} {\langle \B, \C \rangle} \,=\, b$ for some $b > 0$. Let $\phi_1,\ldots,\phi_K: [0,1]^d\>[-1,1]$ be convex and $L$-Lipschitz w.r.t.\ the $\ell_2$-norm. Let $\Omega$ in Algorithm \ref{alg:bisection-con} be a $(\rho , \rho', \delta)$-approximate LMO for sample size $N$. 
Suppose the strict feasibility condition in Assumption \ref{assp:strict-feasibility} holds for some $r>0$.
Let $\Lambda$, $\Xi$, $\eta$ and $\eta'$ in the call to Algorithm \ref{alg:GDA-con} be set as in Theorem \ref{thm:gda-con} with Lipschitz constant $L' = \max\{L, \|\A\|_2 + \|\B\|_2\}$. 
Let $\bar{h}$ be a classifier returned by Algorithm \ref{alg:bisection-con} when run for $T$ outer iterations and $T'$ inner iterations. 
 Then with probability $\geq 1 - \delta$ over draw of $S \sim D^N$, after $T = \log(1/\epsilon)$ outer iterations and $T'=\cO(K/\epsilon^2)$ inner iterations:
\[
\textbf{Optimality}:~~\psi(\C[\bar{h}]) \,\leq\, \min_{\C\in\cC:\, \bphi(\C) \leq \0}\,\psi(\C) \,+\,\cO\left(\epsilon + \rho^\eff\right);
\vspace{-5pt}
\]
\[
\textbf{Feasibility}:~~\phi_k(\C[\bar{h}]) \,\leq\, \cO\left(\epsilon +\rho^\eff \right),~\forall k \in [K],
\]
where $\rho^\eff = \rho+\sqrt{d}\rho'$ and the $\cO$ notation hides constant factors independent of $\rho, \rho',T,d$ and $K$.
\end{thm}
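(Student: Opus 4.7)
My plan is to reduce the analysis of Algorithm~\ref{alg:bisection-con} to the convergence of classical bisection combined with the inner convergence guarantee of ConGDA from Theorem~\ref{thm:gda-con}. Let $\nu := \epsilon + \rho^{\eff}$ denote a unified inner-solver error and let $\psi^\star := \min_{\C \in \cC,\,\bphi(\C)\leq \0} \psi(\C)$. First, I would invoke Theorem~\ref{thm:gda-con} for each outer iteration $t$. Since $\gamma^t \in [0,1]$, the linear objective $\C \mapsto \langle \A - \gamma^t\B, \C\rangle$ is $L'$-Lipschitz with $L' = \|\A\|_2 + \|\B\|_2$, so ConGDA with $T' = \cO(K/\epsilon^2)$ iterations returns a classifier $g^t$ whose confusion matrix $\C[g^t]$ satisfies
\[
\langle \A - \gamma^t\B, \C[g^t]\rangle \,\le\, v^t + \cO(\nu), \qquad \phi_k(\C[g^t]) \,\le\, \cO(\nu),~\forall k,
\]
where $v^t := \min_{\C \in \cC,\,\bphi(\C)\le \0}\langle \A - \gamma^t\B, \C\rangle$. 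A union bound over the $T = \cO(\log(1/\epsilon))$ outer iterations keeps the failure probability at $\cO(\delta)$.

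Next, I would establish the bisection invariant. Using the identity $\psi(\C) \le \gamma \iff \langle \A - \gamma\B, \C\rangle \le 0$ (valid since $\langle \B, \C\rangle \ge b > 0$ on $\cC$), I would show inductively that after each outer iteration $\alpha^t \le \psi^\star + \cO(\nu/b)$, while the retained classifier $h^t$ (reset to $g^t$ each time the test succeeds) satisfies $\psi(\C[h^t]) \le \beta^t + \cO(\nu)$ together with $\phi_k(\C[h^t]) \le \cO(\nu)$. The success branch is immediate: $\beta^{t+1} = \gamma^t$ and $h^{t+1} = g^t$ already has the required $\psi$-bound (modulo estimation noise between $\C^t$ and $\C[g^t]$, absorbed into $\cO(\nu)$) and the required feasibility. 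For the failure branch, $\psi(\C^t) > \gamma^t$ forces $\langle \A - \gamma^t\B, \C^t\rangle > 0$; combined with the ConGDA optimality bound this gives $v^t \ge -\cO(\nu)$, and converting back via the denominator lower bound $b$ produces $\psi^\star \ge \gamma^t - \cO(\nu/b)$, justifying $\alpha^{t+1} = \gamma^t$.

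After $T = \log_2(1/\epsilon)$ outer iterations the bracket shrinks to $\beta^T - \alpha^T \le 2^{-T} \le \epsilon$, so
\[
\psi(\C[\bar h]) \,\le\, \beta^T + \cO(\nu) \,\le\, \alpha^T + \epsilon + \cO(\nu) \,\le\, \psi^\star + \cO(\epsilon + \nu) \,=\, \psi^\star + \cO(\epsilon + \rho^{\eff}),
\]
while the feasibility bound $\phi_k(\C[\bar h]) \le \cO(\epsilon + \rho^{\eff})$ is inherited directly from the success branch.

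The main obstacle is the failure-branch conversion: the confusion matrix returned by ConGDA is only approximately feasible ($\bphi(\C[g^t]) \le \cO(\nu)\cdot\1$), so it is not a priori a valid competitor in the definition of $v^t$. I would handle this by mixing $\C[g^t]$ with a strictly feasible witness $\C'$ guaranteed by Assumption~\ref{assp:strict-feasibility}: the convex combination $\C_\tau = (1-\tau)\C[g^t] + \tau \C'$ with $\tau = \cO(\nu/r)$ is truly feasible by convexity and Lipschitzness of each $\phi_k$, and its linear objective value differs from that at $\C[g^t]$ by only $\cO(\nu L'/r)$, which can be compared directly against $v^t$. A secondary care is needed because the test in the algorithm evaluates $\psi$ on the estimate $\C^t$ rather than on the true $\C[g^t]$; the $\cO(\rho')$ gap between them is absorbed into $\rho^{\eff}$ throughout, and the initial bisection window $[0,1]$ should be slightly inflated (or equivalently, the test thresholded with an $\cO(\nu)$ margin) to guarantee the invariant at the boundary.
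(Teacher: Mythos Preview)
Your proposal is correct and follows essentially the same invariant-based bisection argument as the paper's proof (Lemmas~\ref{lem:lower-upper-bounds-con}--\ref{lem:multiplicative-progress-con}). One remark: the mixing argument you flag as the ``main obstacle'' is unnecessary, since the ConGDA optimality bound $\langle \A - \gamma^t\B,\,\C[g^t]\rangle \le v^t + \cO(\nu)$ already compares the returned value against the \emph{feasible} minimum $v^t$, so rearranging directly gives $v^t \ge \langle \A - \gamma^t\B,\,\C[g^t]\rangle - \cO(\nu) \ge -\cO(\nu)$ without $g^t$ ever having to serve as a competitor in the constrained problem (and likewise no union bound over outer iterations is needed, because the LMO guarantee in Definition~\ref{defn:lmo} is uniform over all loss matrices once $S$ is fixed).
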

\begin{proof}
See Appendix \ref{app:proof-bisection-con}.
\end{proof}

\begin{figure}
\begin{algorithm}[H]
\caption{Constrained Bisection (ConBisection) Algorithm for \ref{eq:opt-constrained} with Ratio-of-linear $\psi$}\label{alg:bisection-con}
\begin{algorithmic}[1]
\STATE \textbf{Input:} $\psi: [0,1]^d \> [0,1]$ s.t.\  $\psi(\C) = \frac{\langle \A, \C \rangle}{\langle \B, \C \rangle}$ with $\A,\B \in \R^{d}$ and $\phi_1,\ldots, \phi_K: [0,1]^d\>[0,1]$
\STATE ~~~~~~~~~~~~an LMO $\Omega$, $S = \{(x_1,y_1), \ldots, (x_N,y_N)\}$, $T$, $T'$, ConGDA parameters: $\Lambda$, $\Xi$, $\eta$ and $\eta'$ 
\STATE \textbf{Initialize:} 
$\alpha^0 = 0, \beta^0 = 1,$ a classifier $h^0$ that satisfies the  constraints, i.e.\ $\bphi(\C[h^0]) \leq \0$
\STATE \textbf{For} $t = 1~\text{to}~T$ \textbf{do}
\STATE ~~~~~$\gamma^t = (\alpha^{t-1} + \beta^{t-1})/{2}$
\STATE ~~~~~$({g}^t, \C^t) \,=\, \text{ConGDA}(\psi', \bphi, S, \Omega, T', \Lambda, \Xi, \eta, \eta'),$ where $\psi'(\C) = 
\langle\A \,-\, \gamma^t\B, \C\rangle$
\STATE ~~~~~\textbf{If} $\psi^\rl(\C^t) \geq \gamma^t$ 
~\textbf{then}~ 
$\alpha^{t} = \gamma^t, ~~ \beta^{t} = \beta^{t-1}, ~~h^t = {g}^t$
\STATE ~~~~~~~~~~~~~~~~~~~~~~~~~~~~~~\textbf{else}~ $\alpha^{t} = \alpha^{t-1}, ~~ \beta^t = \gamma^t, ~~h^t = h^{t-1}$
\STATE \textbf{End For}
\STATE \textbf{Output:} $\bar{h} = h^T$
\end{algorithmic}
\end{algorithm}
\vspace{-12pt}
\end{figure}

\section{Plug-in Based Linear Minimization Oracle}
\label{sec:lmo}
All the learning algorithms we have presented have assumed access to an approximate linear minimization oracle (LMO)  (see Definition \ref{defn:lmo}). In this section, we describe a practical  plug-in based LMO with the desired approximation properties. This method seeks to approximate the Bayes-optimal classifier for the given linear metric using an estimate $\widehat{\boldeta}: \X \> \Delta_n$ of the conditional-class probability distribution $\eta_i(X) = \P(Y=i|X)$. 
%

Specifically, for a flattened loss matrix $\L \in \R_+^{d}$, where $L_{n(i-1)+j}$ is the cost of predicting class $j$ when the true class is $i$, 
we have from Proposition \ref{prop:loss-opt} that the Bayes-optimal classifier is given by
 $h^*(x) = \argmin^*_{j\in[n]} \sum_{i=1}^n \eta_i(x) L_{n(i-1)+j}$. 
The plug-in based LMO  outlined in Algorithm \ref{alg:plug-in}
approximates this classifier
with the class probability model $\widehat{\boldeta}$. 
The classifier and confusion matrix returned by the algorithm satisfy the LMO approximation properties laid out in Definition \ref{defn:lmo}:
\begin{thm}[Regret bound for plug-in  LMO]
\label{thm:plug-in}
Fix $\delta \in (0, 1)$. Then with probability $\geq 1 - \delta$ over draw of sample $S \sim D^N$, for any  loss matrix $\L \in \R^d$, the classifier and confusion matrix $(\widehat{g}, \widehat{\boldsymbol{\Gamma}})$ returned by Algorithm \ref{alg:plug-in} satisfies:
\begin{equation*}
\langle \L, \C[\widehat{g}]\rangle \,\leq\, \min_{h:\X\>\Delta_n}\langle \L, \C[h] \rangle + \|\L\|_\infty \E_X\big[\big\|\widehat{\boldeta}(X) \,-\, \boldeta(X)\big\|_1\big];
\vspace{-5pt}
\label{eq:lmo1}
\end{equation*}
\begin{equation*}
\|\C[\widehat{g}] \,-\, \hat{\bGamma}\|_\infty \,\leq\, \cO\bigg(\sqrt{\displaystyle\frac{d\log(n)\log(N) + \log(d/\delta)}{N}}\bigg).
\label{eq:lmo2}
\end{equation*}
\end{thm}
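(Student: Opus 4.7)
The proof naturally separates into two independent parts—the regret bound and the concentration bound—which I would handle in sequence.

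For the regret bound, fix $x \in \X$ and let $h^*(x) \in \argmin_j \sum_i \eta_i(x) L_{n(i-1)+j}$ be the Bayes-optimal action from \Prop{prop:loss-opt}; abbreviate $z_i(x) = L_{n(i-1)+\widehat{g}(x)} - L_{n(i-1)+h^*(x)}$. The standard plug-in decomposition writes the conditional regret as
\[
\sum_i \eta_i(x)\,z_i(x) \,=\, \sum_i (\eta_i(x) - \widehat{\eta}_i(x))\,z_i(x) \,+\, \sum_i \widehat{\eta}_i(x)\,z_i(x).
\]
The second sum is non-positive by definition of $\widehat{g}(x)$ as the $\widehat{\boldeta}$-minimizer, and the first is at most $\|\boldeta(x) - \widehat{\boldeta}(x)\|_1 \cdot \max_i |z_i(x)| \leq \|\boldeta(x) - \widehat{\boldeta}(x)\|_1 \cdot \|\L\|_\infty$, using $|L_{n(i-1)+j} - L_{n(i-1)+j'}| \leq \|\L\|_\infty$ for $\L \geq \0$. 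Taking expectation over $X$ and identifying $\E[L_{n(Y-1)+h(X)}] = \langle \L, \C[h]\rangle$ completes part one.

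For the concentration bound, treat $\widehat{\boldeta}$ as fixed—trained, say, on a held-out split independent of $S$—and take $\widehat{\boldsymbol{\Gamma}} = \widehat{\C}[\widehat{g}]$, the empirical confusion matrix of $\widehat{g}$ on $S$. Consider the plug-in classifier family
\[
\mathcal{G} \,=\, \big\{x \mapsto \textstyle\argmin_j \sum_i \widehat{\eta}_i(x) L_{n(i-1)+j} \,:\, \L \in \R^d\big\},
\]
which is an $n$-class linear classifier over the feature map $\widehat{\boldeta}(x) \in \Delta_n$ parameterized by $d = n^2$ real numbers, so its Natarajan dimension is $\cO(d)$; by Natarajan's lemma its growth function on $N$ points is $\cO((N n^2)^d)$. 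For each fixed $(i,j) \in [n]^2$, Massart's finite-class lemma (together with a Hoeffding/symmetrization step applied to the 0-1 products $\1(y_\ell{=}i)\,\1(h(x_\ell){=}j)$) yields, with probability $\ge 1-\delta'$,
\[
\sup_{h\in\mathcal{G}}\, \big|\widehat{C}_{ij}[h] - C_{ij}[h]\big| \,=\, \cO\!\bigg(\sqrt{\tfrac{d\log(N)\log(n) \,+\, \log(1/\delta')}{N}}\bigg).
\]
A union bound over the $d$ pairs $(i,j)$ with $\delta' = \delta/d$ then produces the stated $\ell_\infty$ bound.

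The main obstacle is this uniform-convergence step: the LMO must function for a loss $\L$ chosen \emph{adaptively} by the outer optimization routine, so concentration of confusion-matrix entries must hold simultaneously over all $\L \in \R^d$. This is exactly what forces the $d \log(n) \log(N)$ rate, the $\log(n)$ factor arising from Natarajan's lemma for multiclass growth functions. If $\widehat{\boldeta}$ were itself data-dependent (fit to $S$ over a rich function class), the growth function of $\mathcal{G}$ would inherit the complexity of that class and an extra term would appear in the bound; the stated rate's sole dependence on $d$ suggests the authors either hold $\widehat{\boldeta}$ fixed or invoke sample splitting.
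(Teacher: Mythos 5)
Your proof is correct and follows the same overall architecture as the paper's — a conditional regret decomposition (add-and-subtract with $\widehat{\boldeta}$, exploit the $\widehat{\boldeta}$-optimality of $\widehat{g}$, Hölder) for the first inequality, and a uniform-convergence-plus-union-bound argument over the plug-in family for the second — so the first part is essentially identical. The genuine difference is in the combinatorial step of the concentration bound. The paper fixes an entry $(a,b)$, passes to the \emph{binary} class $\H^b = \{\1(h(\cdot)=b): h \in \H\}$, observes that each concept is an intersection of $n$ homogeneous halfspaces in $\R^n$, and invokes Lemma 3.2.3 of Blumer et al.\ (1989) to obtain a VC-dimension of $\cO(n^2\log n) = \cO(d\log n)$; the standard VC deviation bound then produces the stated $d\log(n)\log(N)$ rate. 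You instead keep the class multiclass, bound its \emph{Natarajan} dimension by $\cO(d)$ (since it is an $n$-class linear classifier over the feature map $\widehat{\boldeta}(x)$ with $d$ parameters), and apply Natarajan's growth-function lemma. Your route is arguably cleaner, and it is worth noting that it is actually \emph{tighter}: Natarajan's lemma gives $\log\Gamma(N) = \cO(d(\log N + \log n))$, so your argument would yield $\cO\bigl(\sqrt{(d(\log N+\log n)+\log(d/\delta))/N}\bigr)$, a \emph{sum} of logarithms inside the $d$-term rather than the paper's \emph{product}; your final displayed rate with $d\log(N)\log(n)$ therefore slightly over-states what your own chain of reasoning delivers, though of course it remains a valid upper bound. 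Your closing remark about $\widehat{\boldeta}$ needing to be fixed or fit on a held-out split is correct and matches the sample-splitting setup the paper uses in Section~5.1 when deriving consistency from this theorem; the paper leaves this implicit at the level of the theorem by treating $\widehat{\boldeta}$ as a fixed input to Algorithm~7.
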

\begin{proof}
See Appendix \ref{app:proof-plugin}
\end{proof}

\begin{figure}[t]
\begin{algorithm}[H]
\caption{Plug-in Based LMO}\label{alg:plug-in}
\begin{algorithmic}[1]
\STATE \textbf{Input:} Loss matrix $\L \in \R_+^d$, Class prob.\ model $\widehat{\boldeta}: \X \> \Delta_n$, 
$S = \{(x_1,y_1), \ldots, (x_N,y_N)\}$
\STATE Construct classifier $\widehat{g}(x) \,=\, \argmin^*_{j\in[n]} \sum_{i=1}^n \widehat{\eta}_i(x) L_{n(i-1) + j}$
\STATE $\widehat{\bGamma} =\vec\big(\widehat{\C}^S[\widehat{g}]\big)$
\STATE \textbf{Output:} $\widehat{g}$, $\widehat{\bGamma}$
\end{algorithmic}
\end{algorithm}
\vspace{-12pt}
\end{figure}

\subsection{Consistency of Proposed Algorithms with Plug-in LMO}
Theorem \ref{thm:plug-in} tells us that the quality of the classifier $\widehat{g}$ returned by the plug-in based LMO depends on the  estimation error  $\E_X\big[\big\|\widehat{\boldeta}(X) \,-\, \boldeta(X)\big\|_1\big]$, which measures the gap between the class probability model $\widehat{\boldeta}$ and the true conditional class probabilities $\boldeta$.
By combining this result with  Theorem \ref{thm:FW-unc}--\ref{thm:ellipsoid-con}, we can show that the algorithms described in Sections \ref{sec:unconstrained} and \ref{sec:constrained}, when used with the plug-in based LMO, are statistically consistent. 
For the sake of brevity, we present the consistency analysis for the GDA algorithm and its constrained counter-part alone. The analysis for the other algorithms follow identical steps.

Let $S_1$ and $S_2$ be equal splits of the training sample $S$, and suppose we provide $S_1$ to the outer optimization methods in 
Algorithms \ref{alg:GDA} and \ref{alg:GDA-con}
and $S_2$ to the inner LMO implemented using the plug-in method in Algorithm \ref{alg:plug-in}. We then have:
\begin{cor}[Regret bound for GDA algorithm]
Let $\psi: [0,1]^d\>[0,1]$ be convex and $L$-Lipschitz w.r.t.\ the $\ell_2$-norm. Let the LMO $\Omega$ in Algorithm \ref{alg:GDA} be a plug-in based LMO (as in Algorithm \ref{alg:plug-in}) with a CPE argument $\widehat\boldeta$. 
Let $\bar{h}$ be a classifier returned by Algorithm \ref{alg:GDA} when run for $T$ iterations with
the parameter settings in Theorem \ref{thm:gda-unc}. Then with probability $\geq 1 - \delta$ over draw of $S \sim D^N$, after $T = \cO(N)$ iterations: 
\[
\psi(\C[\bar{h}]) \,\leq\, \min_{\C \in \cC}\,\psi(\C) \,+\,
{\cO}\left(\E_X[\|\widehat\boldeta(X) - \boldeta(X)\|_1] + \sqrt{d}\sqrt{\frac{d\log(n)\log(N) + \log(d/\delta)}{N}}\right).
\]
\end{cor}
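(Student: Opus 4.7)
The plan is to combine Theorem \ref{thm:gda-unc} (convergence of GDA given an approximate LMO) with Theorem \ref{thm:plug-in} (approximation guarantee of the plug-in LMO) in a largely mechanical way. The key observation that makes the composition clean is that Theorem \ref{thm:plug-in} already provides a guarantee that is uniform over all query loss matrices $\L$: the plug-in classifier $\widehat g$ is deterministic given $\widehat\boldeta$ and $\L$, and the confusion matrix concentration bound $\|\C[\widehat g] - \widehat{\boldsymbol{\Gamma}}\|_\infty \le \cO(\sqrt{(d\log n\log N + \log(d/\delta))/N})$ can be obtained uniformly over the (countable or suitably discretized) family of $\L$ that GDA can query, absorbing the logarithmic covering factor into the existing $\log(d/\delta)$.

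First I would establish, via Theorem \ref{thm:plug-in}, that on a $1-\delta$ event the plug-in procedure in Algorithm \ref{alg:plug-in} acts as a $(\rho,\rho',\delta)$-approximate LMO in the sense of Definition \ref{defn:lmo} with
\[
\rho \,=\, \E_X\!\big[\|\widehat\boldeta(X)-\boldeta(X)\|_1\big], \qquad \rho' \,=\, \cO\!\left(\sqrt{\tfrac{d\log(n)\log(N)+\log(d/\delta)}{N}}\right).
\]
Here I use that every query $\L^t = \blambda^t/\|\blambda^t\|_\infty$ made by Algorithm \ref{alg:GDA} (line 5) satisfies $\|\L^t\|_\infty \le 1$, so the linear minimization regret bound of Theorem \ref{thm:plug-in} simplifies from $\|\L\|_\infty \cdot \E_X[\|\widehat\boldeta-\boldeta\|_1]$ to just $\E_X[\|\widehat\boldeta-\boldeta\|_1]$. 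If one wishes to be fully rigorous about independence, one can split $S$ into two halves and use one half to construct $\widehat\boldeta$ (or treat $\widehat\boldeta$ as externally supplied) and the other for the empirical confusion matrices inside the LMO; this only changes constants.

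Next I would instantiate Theorem \ref{thm:gda-unc} with this LMO. On the intersection of the two good events (the LMO event above and the GDA convergence event, which is deterministic given the LMO outputs), I get
\[
\psi(\C[\bar h]) \,\le\, \min_{\C\in\cC}\psi(\C) \,+\, \cO\!\big(\epsilon + \rho + \sqrt{d}\,\rho'\big).
\]
Choosing $T = \Theta(N)$, Theorem \ref{thm:gda-unc} yields $\epsilon = \cO(1/\sqrt{T}) = \cO(1/\sqrt{N})$ through the specified step-sizes $\eta = 1/(4L\sqrt{2T})$, $\eta' = 4L/\sqrt{2T}$. Since $\sqrt{d}\,\rho' = \cO(\sqrt{(d^2\log n\log N + d\log(d/\delta))/N})$ dominates $1/\sqrt{N}$, the $\epsilon$ term is absorbed, producing the advertised bound
\[
\psi(\C[\bar h]) \,\le\, \min_{\C\in\cC}\psi(\C) \,+\, \cO\!\left(\E_X[\|\widehat\boldeta(X)-\boldeta(X)\|_1] + \sqrt{d}\,\sqrt{\tfrac{d\log(n)\log(N)+\log(d/\delta)}{N}}\right).
\]

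The only non-trivial step is the uniform version of Theorem \ref{thm:plug-in}: the GDA iterates $\L^1,\ldots,\L^T$ are data-dependent (through the LMO outputs used in previous iterations), so one cannot directly apply the single-$\L$ bound and union-bound over the realized sequence. I would handle this by noting that (i) the uniform form of Theorem \ref{thm:plug-in} is either stated or can be obtained by an $\ell_\infty$-net argument over normalized loss matrices of bounded norm (the covering number is polynomial in $1/N$ and exponential in $d$, contributing $\cO(d\log N)$ to the logarithmic factor), and (ii) the classifier $\widehat g(\L;\widehat\boldeta)$ depends on $S$ only through $\widehat\boldeta$, so the confusion matrix estimate $\widehat{\boldsymbol{\Gamma}}$ concentrates around $\C[\widehat g]$ uniformly for every $\L$ via standard Rademacher/VC arguments on the finite class of plug-in classifiers indexed by $\L$. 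A final union bound over this LMO event and the GDA deterministic convergence step yields the overall probability $1-\delta$ statement.
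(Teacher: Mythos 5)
Your proof follows essentially the same path as the paper: instantiate the $(\rho,\rho',\delta)$-LMO using Theorem \ref{thm:plug-in} with $\rho = \E_X[\|\widehat\boldeta(X)-\boldeta(X)\|_1]$ and $\rho' = \cO\big(\sqrt{(d\log n\log N+\log(d/\delta))/N}\big)$, plug these into Theorem \ref{thm:gda-unc}, and set $T=\Theta(N)$ so the $\cO(1/\sqrt{N})$ optimization term is absorbed by $\sqrt d\,\rho'$. The uniformity over the data-dependent queries $\L^t$ that you belabor is already built into Theorem \ref{thm:plug-in}, whose guarantee holds simultaneously for all $\L$ with $\|\L\|_\infty\le 1$ on a single $1-\delta$ event via the VC-dimension argument over the class of plug-in classifiers indexed by $\L$, so the $\ell_\infty$-net construction you sketch is unnecessary.
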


\begin{cor}[Regret bound for ConGDA algorithm]
Let $\psi: [0,1]^d\>[0,1]$ and $\phi_1,\ldots,\phi_K: [0,1]^d\>[-1,1]$ be convex and $L$-Lipschitz. Let the LMO $\Omega$ in Algorithm \ref{alg:GDA} be a plug-in based LMO (as in Algorithm \ref{alg:plug-in}) with a CPE argument $\widehat\boldeta$. 
Let $\bar{h}$ be a classifier returned by Algorithm \ref{alg:GDA-con} when run for $T$ iterations with the parameter settings in Theorem \ref{thm:gda-con}. 
 Then with probability $\geq 1 - \delta$ over draw of $S \sim D^N$, after $T = \cO(KN)$ iterations:
\[
\psi(\C[\bar{h}]) \,\leq\, \min_{\C \in \cC}\,\psi(\C) \,+\,\cO\left(\E_X[\|\widehat\boldeta(X) - \boldeta(X)\|_1] + \sqrt{d}\sqrt{\frac{d\log(n)\log(N) + \log(d/\delta)}{N}}\right);
\vspace{-5pt}
\]
\[
\phi_k(\C[\bar{h}]) \,\leq\, \cO\left(\E_X[\|\widehat\boldeta(X) - \boldeta(X)\|_1] + \sqrt{d}\sqrt{\frac{d\log(n)\log(N) + \log(d/\delta)}{N}}\right),~\forall k \in [K].
\]
\end{cor}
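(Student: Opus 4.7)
The plan is to compose Theorem \ref{thm:gda-con} (convergence of ConGDA given a generic approximate LMO) with Theorem \ref{thm:plug-in} (approximation guarantee for the plug-in LMO) in a nearly mechanical fashion. First, I would verify that the plug-in routine fits into the framework of Definition \ref{defn:lmo}. Algorithm \ref{alg:GDA-con} queries the LMO only with the normalized losses $\L^t = \blambda^{t-1}/\|\blambda^{t-1}\|_2$, so $\|\L^t\|_\infty \leq \|\L^t\|_2 = 1$ on every call. Hence Theorem \ref{thm:plug-in} implies that, with probability at least $1-\delta$ over $S \sim D^N$, the plug-in routine is a $(\rho, \rho', \delta)$-approximate LMO with
\[
\rho \,=\, \E_X\big[\|\widehat\boldeta(X) - \boldeta(X)\|_1\big],\qquad \rho' \,=\, \cO\!\left(\sqrt{\tfrac{d\log(n)\log(N) + \log(d/\delta)}{N}}\right).
\]
A point worth emphasizing is that Theorem \ref{thm:plug-in} asserts its confusion-matrix deviation bound uniformly over all loss matrices $\L$ inside a single $1-\delta$ event. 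Consequently no union bound is needed across the $T$ LMO calls performed by ConGDA, which is what allows the failure probability to remain $\delta$ regardless of how large $T$ is.

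Next, I would invoke Theorem \ref{thm:gda-con} with these values of $\rho$ and $\rho'$. That theorem gives optimality and feasibility gaps of order $\epsilon + \rho^\eff$, where $\rho^\eff = \rho + \sqrt{d}\rho'$, after $T = \cO(K/\epsilon^2)$ iterations. Choosing $T = \Theta(KN)$ corresponds to $\epsilon = \Theta(\sqrt{K/T}) = \Theta(1/\sqrt{N})$. Observing that $\sqrt{d}\rho'$ is of order at least $\sqrt{d/N} \geq 1/\sqrt{N}$, the residual $\epsilon$ term is absorbed into $\sqrt{d}\rho'$ up to constants. Substituting the above expressions for $\rho$ and $\rho'$ into the $\cO(\epsilon + \rho^\eff)$ bounds on both the optimality and feasibility gaps in Theorem \ref{thm:gda-con} then yields the claimed rates.

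I do not anticipate any real obstacle: the only stochasticity enters through the $1-\delta$ event of Theorem \ref{thm:plug-in}, and Theorem \ref{thm:gda-con} applies as a deterministic statement conditional on that event. The one minor bookkeeping point is the $\|\L\|_\infty$ factor that multiplies the CPE error inside $\rho$ in Theorem \ref{thm:plug-in}; it drops out here because the $\ell_2$-normalization of the queries in Algorithm \ref{alg:GDA-con} automatically gives $\|\L^t\|_\infty \leq 1$. Checking that the constants in Theorem \ref{thm:gda-con} (which depend on the feasibility radius $r$ and the Lipschitz constant $L$) are independent of $N$ confirms that they are absorbed into the $\cO$ notation in the final bound.
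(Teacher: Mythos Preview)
Your proposal is correct and matches the paper's own treatment: the corollary is obtained precisely by plugging the $(\rho,\rho')$ values furnished by Theorem~\ref{thm:plug-in} into the bound of Theorem~\ref{thm:gda-con}, then choosing $T=\Theta(KN)$ so that the $\epsilon=\Theta(1/\sqrt{N})$ term is dominated by $\sqrt{d}\rho'$. The paper does not spell out a separate proof, so your explicit check that the queries satisfy $\|\L^t\|_\infty\le 1$ and your observation that the uniform-in-$\L$ guarantee of Theorem~\ref{thm:plug-in} obviates any union bound over iterations are, if anything, more careful than what the paper writes.
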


When the class probability model
 $\widehat{\boldeta}$ used by the LMO is learned by an algorithm that guarantees $\E_X[\|\widehat{\boldeta}(X) - \boldeta(X)\|_1] \> 0$ as $N\>\infty$, then Algorithm \ref{alg:GDA} is statistically consistent for the unconstrained problem in \eqref{eq:opt-unconstrained}, and Algorithm \ref{alg:GDA-con} is statistically consistent for the constrained problem in \eqref{eq:opt-constrained}. 
 The property that the learned class probability estimation error goes to zero in the large sample limit is  true for
 any algorithm that minimizes a strictly proper composite multiclass loss (e.g.\ the standard cross-entropy loss) over a suitably large function class \citep{Vernet+11}.
 
 While our consistency results require that the samples used by the outer optimization method and the inner LMO to be drawn independently, this may be inconvenient in real-world applications where data is scarce and limited. In practice, we find that using the same sample for both the outer and inner routines does not hurt performance, and  this is the approach we adopt in our experiments.
 
A practical advantage of the plug-in based LMO is that one can pre-train the class probability model $\widehat{\boldeta}$ and re-use the same model each time the LMO is invoked.
In practice, there are other off-the-shelf algorithms that one can use to implement the LMO, such as cost-weighted decision trees  \citep{ting2002instance} and those based on optimizing a cost-weighted surrogate loss (e.g. \citet{Lee+04}), which require training a new classifier for each given loss vector $\L$. While a  majority of our experiments will use a plug-in based LMO, we also explore the use of cost-weighted surrogate losses for implementing the LMO.



\section{Extension to Fairness Metrics and Other Refinements}
\label{sec:fairness}
To keep the exposition concise, we have so far focused on metrics defined by a function of the overall confusion matrix $\C[h]$. 
We now discuss how the algorithms in Sections \ref{sec:unconstrained} and \ref{sec:constrained} can be extended to handle 
the group-based fairness metrics described in  Section \ref{subsec:constraints}, which are defined in terms of group-specific confusion matrices (see Definition \ref{defn:group-conf}). 

\subsection{Group-based Fairness Metrics}
In the fairness setup we consider, each instance $x \in \X$ is associated with a group $A(x) \in [m]$, and the objective and constraints are defined by functions of $m$ group-specific confusion matrices $\C^1[h], \ldots, \C^m[h]$. Note that even for binary problems where $n = 2$, the presence of multiple groups 
poses challenges in solving the resulting learning problems in \eqref{eq:opt-unconstrained} and \eqref{eq:opt-constrained}. For example, a na\"{i}ve approach one could take for binary labels is to construct a simple plug-in classifier for these problems that assigns a separate threshold for each group, but tuning $m$ thresholds via a brute-force search can quickly become infeasible when $m$ is large.

Our approach to solving the learning problems in \eqref{eq:opt-unconstrained} and \eqref{eq:opt-constrained} with group fairness metrics is to once again reformulate as an optimization problem over the  set of achievable group-specific confusion matrices, in this case, represented by vectors of dimension $d = mn^2$.
\begin{defn}[Achievable group-specific confusion matrices]
Define the set of \emph{achievable group-specific confusion matrices w.r.t.\ $D$} as:
\[
\cC^{[m]} = \big\{ \big[\vec(\C^0[h]), \ldots, \vec(\C^{m-1}[h])\big]\big|\, ~h:\X\>\Delta_n \big\}.
\]
\end{defn}
%
%

Algorithms \ref{alg:FW}--\ref{alg:bisection-con} can now be directly applied to solve the resulting optimization over $\cC^{[m]}$, at each iteration, assuming access to an oracle for approximately solving a linear minimization problem over $\cC^{[m]}$. 
This linear minimization sub-problem can again be solved using a  plug-in based LMO similar Algorithm \ref{alg:plug-in}. The details of the  plug-in variant  for the fairness setup are provided in Algorithm \ref{alg:plug-in-group}, where we denote the empirical group-specific confusion matrix for group $a$ from sample $S = \{(x_1,y_1),\ldots,(x_N,y_N)\}$ by:
\[
\vspace{-2pt}
\hat{C}^{a}_{ij}[h] = \frac{1}{N}\sum_{\ell=1}^N\1(y_\ell = i, h(x_\ell) = j, A(x_\ell) = a)
	\,.
\vspace{-2pt}
\]

\begin{figure}
\begin{algorithm}[H]
\caption{Plug-in Based LMO for Fairness Problems}\label{alg:plug-in-group}
\begin{algorithmic}[1]
\STATE \textbf{Input:} Loss matrix $\L \in \R_+^d$, Class prob.\ model $\widehat{\boldeta}: \X \> \Delta_n$, 
$S = \{(x_1,y_1),  \ldots, (x_N,y_N)\}$
\STATE ~~~~~~~~~~~~Group assignment $A: \X \> [m]$
\STATE Define $\sigma(x,i, j) = mn(A(x)-1) + n(i-1) + j$
\STATE Construct $\widehat{g}(x) \,=\, \argmin_{j\in[n]} \sum_{i=1}^n \widehat{\eta}_i(x) L_{\sigma(x,i, j)}$
 \STATE
 $\widehat{\bGamma} =\big[\vec\big(\widehat{\C}^{0}[\widehat{g}]\big),\ldots,\vec\big(\widehat{\C}^{m-1}[\widehat{g}]\big))\big]$
\STATE \textbf{Output:} $\widehat{g}$, $\widehat{{\bGamma}}$
\end{algorithmic}
\end{algorithm}
\vspace{-15pt}
\end{figure}

\subsection{Succinct Confusion Matrix Representations}
Before closing, we note that 
 for simplicity, we have allowed the $d$-dimensional vector representation of the confusion matrix to contain all $n^2$ entries (or all $mn^2$ for fairness metrics). In practice, we only need to take into account those entries of the confusion matrix performance measures and constraints we seek to optimize depend upon. For example, the G-mean metric in Example \ref{ex:g-mean} is defined on only the diagonal entries of the confusion matrix, and so the vector representation in this case needs to only contain the $n$ diagonal entries. 
%
%
In fact, for some metrics, it suffices to represent the confusion matrix using a small number of linear transformations. 
For example, the coverage metric described in Example \ref{ex:coverage} is defined on only the column sums of the confusion matrix, and hence the $d$-dimensional vector representation in this case only needs to contain the $n$ column sums.

More generally, we can work with  succinct vector representations given by linear transformations of the confusion matrices:
\begin{defn}[Generalized confusion vectors]
Define the set of \emph{(achievable) generalized confusion vectors w.r.t.\ $D$} as:
\[
\cC^{\gen} = \big\{ \big[\varphi_1(\C^0[h],\ldots,\C^{m-1}[h]), \ldots, \varphi_d(\C^{0}[h],\ldots,\C^{m-1}[h])\big]\big|\, ~h:\X\>\Delta_n \big\},
\]
where each $\varphi_k: [0,1]^{mn^2} \> \R_+$ is a \emph{linear map}.
\end{defn}
The set $\cC^{\gen}$ is convex. In the simplest case, we can have a linear map $\varphi_k$ of dimension $d=mn^2$, where each coordinate picks one entry from the $m$ confusion matrices. However, for most of the performance metrics described in Section \ref{subsec:performance-measures} and \ref{subsec:constraints}, it suffices to use a a small number of  $d<<mn^2$ linear transformations and we can translate the corresponding learning problems in \ref{eq:opt-unconstrained} and \ref{eq:opt-constrained} into equivalent optimization problems over  $\cC^{\gen}$. The iterative algorithms discussed in Sections \ref{sec:unconstrained} and \ref{sec:constrained} can then be applied to solve the resulting lower-dimensional optimization problem over $\C$, with the plug-in procedure in Algorithm \ref{alg:plug-in} straightforwardly adapted to solve the linear minimization over $\cC^{\gen}$ at each step. 
\section{Experiments}
\label{sec:experiments}
We present an experimental evaluation of the algorithms presented in Sections \ref{sec:unconstrained} and \ref{sec:constrained} on a variety of multi-class datasets and multi-group fair classification tasks. Broadly, we cover the following:
\begin{enumerate}[topsep=5pt,leftmargin=25pt,itemsep=2pt]
    \item We showcase on a synthetic dataset that our algorithms converge in the large sample limit to optimal (feasible) classifier (Section \ref{sec:expt-consistency}).
    \item We demonstrate that the proposed algorithms are competitive or better than the state-of-the-art algorithms for the real-world tasks we consider (Sections \ref{sec:expts-unc}--\ref{sec:experiments-cons}).
    \item We provide practical guidance on which algorithm is better suited for a given application, and investigate two different choices for the LMO (Sections \ref{sec:expts-practical}--\ref{sec:expts-LMO}).
    \item We illustrate with image classification case-studies how  our algorithms can be applied to tackle class-imbalance and label noise (Section \ref{sec:expts-cifar}).
\end{enumerate}

A summary of the datasets we use is provided in Tables \ref{tab:multi-datasets} and \ref{tab:fair-datasets}, along with the model architecture we use in each case. 
The  details of the data pre-processing are provided in Appendix \ref{app:expts}.
With the exception of the CIFAR datasets, which comes with standard train-test splits, we split all other datasets into 2/3-rd for training and 1/3-rd for testing, and repeat our experiments over multiple such random splits. All our methods were implemented in Python using PyTorch and Scikit-learn.\footnote{Code available at: \url{https://github.com/shivtavker/constrained-classification}}

\begin{table}[t]
    \centering
    \caption{Multi-class datasets used in our experiments}
    \label{tab:multi-datasets}
    \begin{footnotesize}
    \begin{tabular}{ccccccc}
        \hline
        Dataset   & \#Classes & \#Train & \#Test & \#Features & $\frac{\min_y \pi_y}{\max_y \pi_y}$ & Model\\
        \hline
             Abalone   & 12 & 2923 & 1254 & 8 & 0.149  & Linear\\
             PageBlock & 5  & 3831 & 1642 & 10 & 0.0057 & Linear\\
             MACHO    & 8  & 4241 & 1818 & 64 & 0.0148 & Linear \\
             Sat-Image & 6  & 4504 & 1931 & 36 & 0.408  & Linear \\
             CovType & 7  & 406708 & 174304 & 14 & 0.0097 & Linear \\ 
             \hline
             CIFAR-10-Flip & 10 & 27500 & 5500 & 32 $\times$ 32 & 0.1 & ResNet-50\\
             CIFAR-55 & 55 & 50000 & 10000 & 32 $\times$ 32 & 0.1 & ResNet-50\\
        \hline
    \end{tabular}
    \end{footnotesize}
\end{table}

\begin{table}[t]
    \centering
    \caption{Multi-group fairness datasets with binary labels used in our experiments.}
    \label{tab:fair-datasets}
    \begin{footnotesize}
    \begin{tabular}{ccccccc}
        \hline
        Dataset & \#Train & \#Test & \#Features & Protected Attr.\ & Prot. Group Frac.
        & Model
        \\
        \hline
        Communities \& Crime & 1395 & 599 & 132 & Race (binary) & 0.49 & Linear 
        \\
        COMPAS & 4320 & 1852 & 32 & Gender & 0.19 & Linear 
        \\
        Law School & 14558 & 6240 & 16 & Race (binary) & 0.06 & Linear 
        \\
        Default & 21000 & 9000 & 23 & Gender & 0.40 & Linear
        \\
        Adult & 34189 & 14653 & 123 & Gender & 0.10 & Linear
        \\
        \hline
    \end{tabular}
    \end{footnotesize}
\end{table}

\subsection{Baselines}
In a majority of the experiments, our algorithms will use the plug-in method in Algorithm \ref{alg:plug-in} for the inner linear minimization oracle, with a  logistic regression model $\hat{\seta}:\X\>\Delta_n$ used to estimate the conditional-class probabilities. As baselines, we compare with methods for minimizing the standard 0-1 loss and the balanced 0-1 loss, both of which are simpler alternatives to the metrics we consider, and the state-of-the-art approach for directly optimizing with complex metrics and constraints. 
\begin{enumerate}[label=(\roman*),itemsep=0pt,topsep=5pt,leftmargin=16pt]    
\item A plug-in classifier that  predicts the class with the maximum class probability, i.e. $\argmax_{i} \hat{\eta}_i(x)$; this method is consistent for the 0-1 loss.
    \item A plug-in classifier that weighs the class probabilities by the inverse class priors, and predicts the class with the highest weighted probability $\argmax_{i} \frac{1}{\hat{\pi}_i}\hat{\eta}_i(x)$, where $\hat{\pi}_i$ is an estimate of the  prior for class $i$; this method is consistent for the balanced 0-1 loss.
    \item The approach of \citet{Narasimhan+19_generalized} for optimizing with complex performance metrics and constraints, available as a part of the TensorFlow Constrained Optimization (TFCO) library.\footnote{\url{https://github.com/google-research/tensorflow_constrained_optimization}}
\end{enumerate}

TFCO  uses an optimization procedure similar to the GDA method in Algorithm \ref{alg:GDA}, but instead of fitting a plug-in classifier to a pre-trained class probability model, performs online updates on surrogate approximations. Therefore one key difference between our use of plug-in classifiers and the approach taken by  TFCO is that the latter is an in-training method which trains a classifier from scratch. Unlike our proposal, it does not come with  consistency guarantees. 
It is worth noting that TFCO can be seen as a strict generalization to previous surrogate-based methods for complex evaluation metrics \citep{Narasimhan+15b, Kar+16}. 

All the plug-in based methods use the same class probability estimator $\hat{\seta}$. We employ the same architecture as $\hat{\seta}$ for the model trained by TFCO.

We do not include the previous $\text{SVM}^{\text{perf}}$ method \citep{Joachims05} as a baseline because it has a running time that is exponential in the number of classes, and as shown in the previous conference version of this paper, can be prohibitively expensive to run even for a moderate number of classes \citep{Narasimhan+15}. Moreover, this method was proposed for unconstrained problems, and does not explicitly allow for imposing constraints on metrics.

\subsection{Post-processing}
Recall that the Frank-Wolfe,  GDA and ellipsoid algorithms that we propose for convex metrics return classifiers that \emph{randomize} over $T$ plug-in classifiers.  
When implementing their constrained counterparts, we additional apply ``pruning'' step 
to the returned randomized classifier,  which re-computes the convex combination of the $T$ iterates $\C^1, \ldots, \C^T$ so that the constraints are exactly satisfied (if such a solution exists). 
Specifically, the final classifier is given by  $\frac{1}{T}\sum_{t=1}^T \alpha_*^t\,g^t$, where $\balpha_* \in \underset{\balpha \in \Delta_T:\, \sum_t \alpha^t\bphi( \C[g^t]) \leq \0}{\argmin} \sum_{t=1}^T\alpha^t\psi\left(\C[g^t]\right)$. Note that  the objective here is an approximation to the true objective 
$\psi\left(\sum_{t=1}^T\alpha^t\C[g^t]\right)$, with the former upper bounding the latter when $\psi$ is convex. This approximation to the objective allows us to compute the optimal coefficients $\balpha_*$ by solving a simple linear program. 
The use of a post-processing pruning step is prescribed by the TFCO library \citep{Cotter+19b, Narasimhan+19_generalized}, and is also applied to the classifier returned by the TFCO baseline. 
 In Appendix \ref{app:hparam}, we provide other details such as how we choose the hyper-parameters for our algorithms and the baselines. 

 We additionally note that the H-mean, Q-mean and G-mean metrics we consider in our experiments can be written as functions of normalized diagonal entries of the confusion matrix: $\frac{C_{ii}}{\pi_i}, \forall i \in [n]$ (see Table \ref{tab:perf-measures}). For these metrics, we formulate \ref{eq:opt-unconstrained} and \ref{eq:opt-constrained} as optimization problems over normalized confusion diagonal entries $ \left[\frac{C_{11}}{\pi_1}, \ldots, \frac{C_{nn}}{\pi_n}\right]^\top \in [0,1]^n$, which is of 
 lower-dimensional than the space of full confusion matrices. 
 This requires a small modification to the GDA and ellipsoid algorithms, where the slack variables $\bxi$ will have to be constrained to be in $[0,1]^n$ instead of in the simplex $\Delta_{n^2}$.
Similarly, when the fairness constraints in Table \ref{tab:perf-measures} are enforced on binary-labeled problems, we can write the objective and constraints as functions of normalized diagonal confusion entries of group-specific confusion matrices, resulting in an optimization over vectors in $[0,1]^{2m}$.




\begin{figure}[t]
\centering
\begin{subfigure}[b]{0.32\linewidth}
\includegraphics[width=0.95\linewidth]{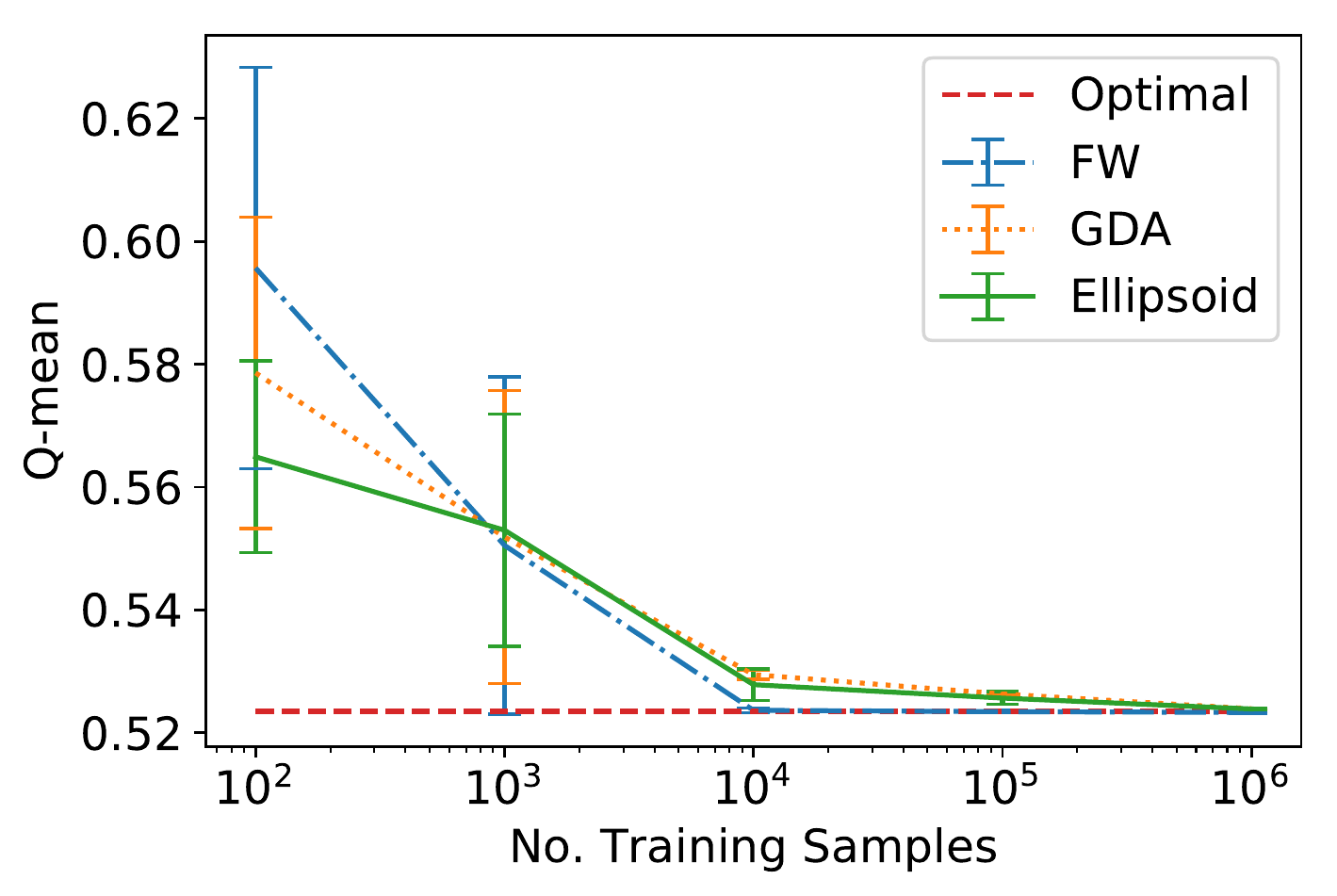}
\caption{Unconstrained}
\label{fig:q-mean-unconstrained}
\end{subfigure}
\begin{subfigure}[b]{0.64\linewidth}
\centering
\includegraphics[width=0.49\linewidth]{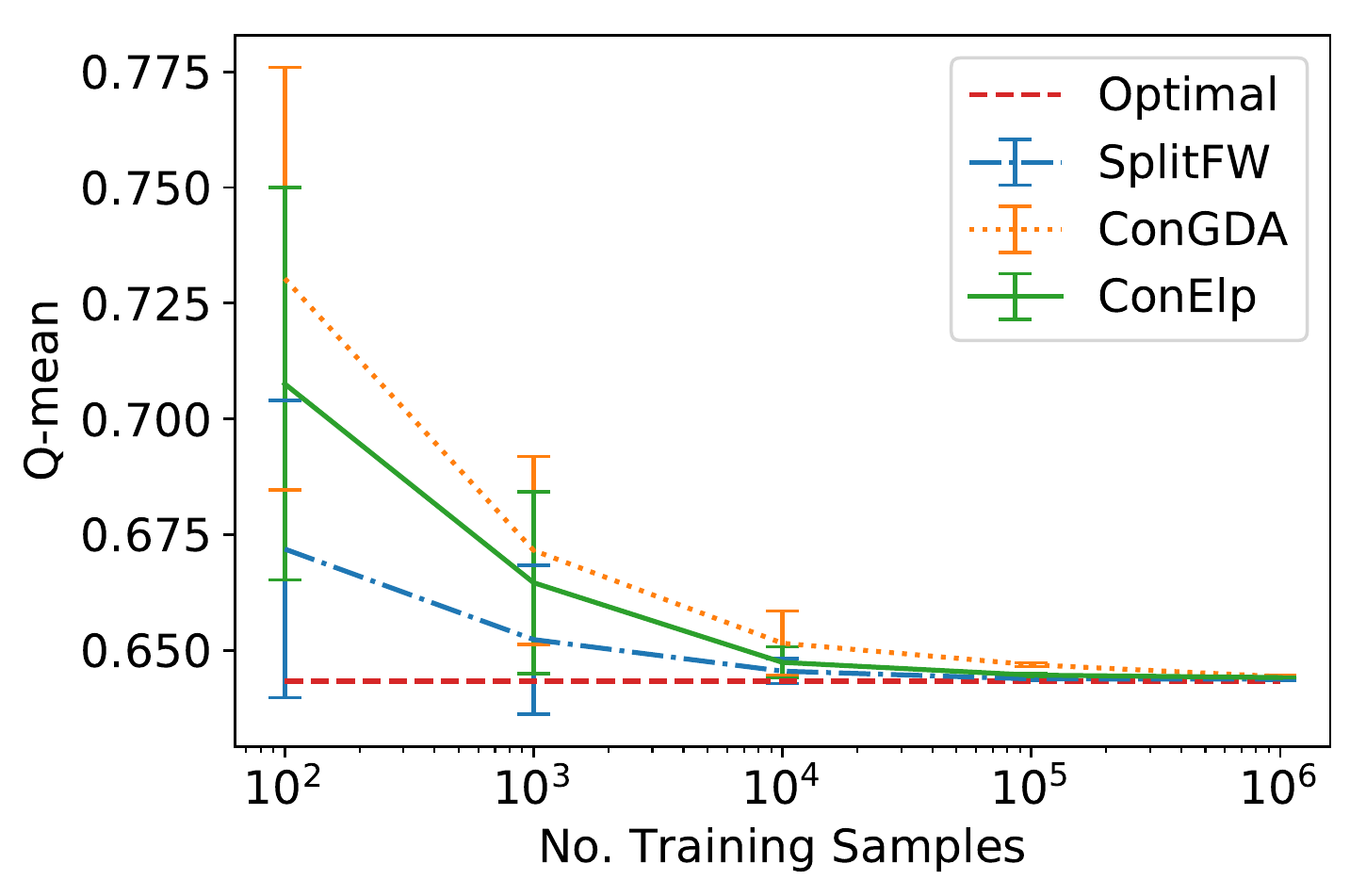}
\includegraphics[width=0.48\linewidth]{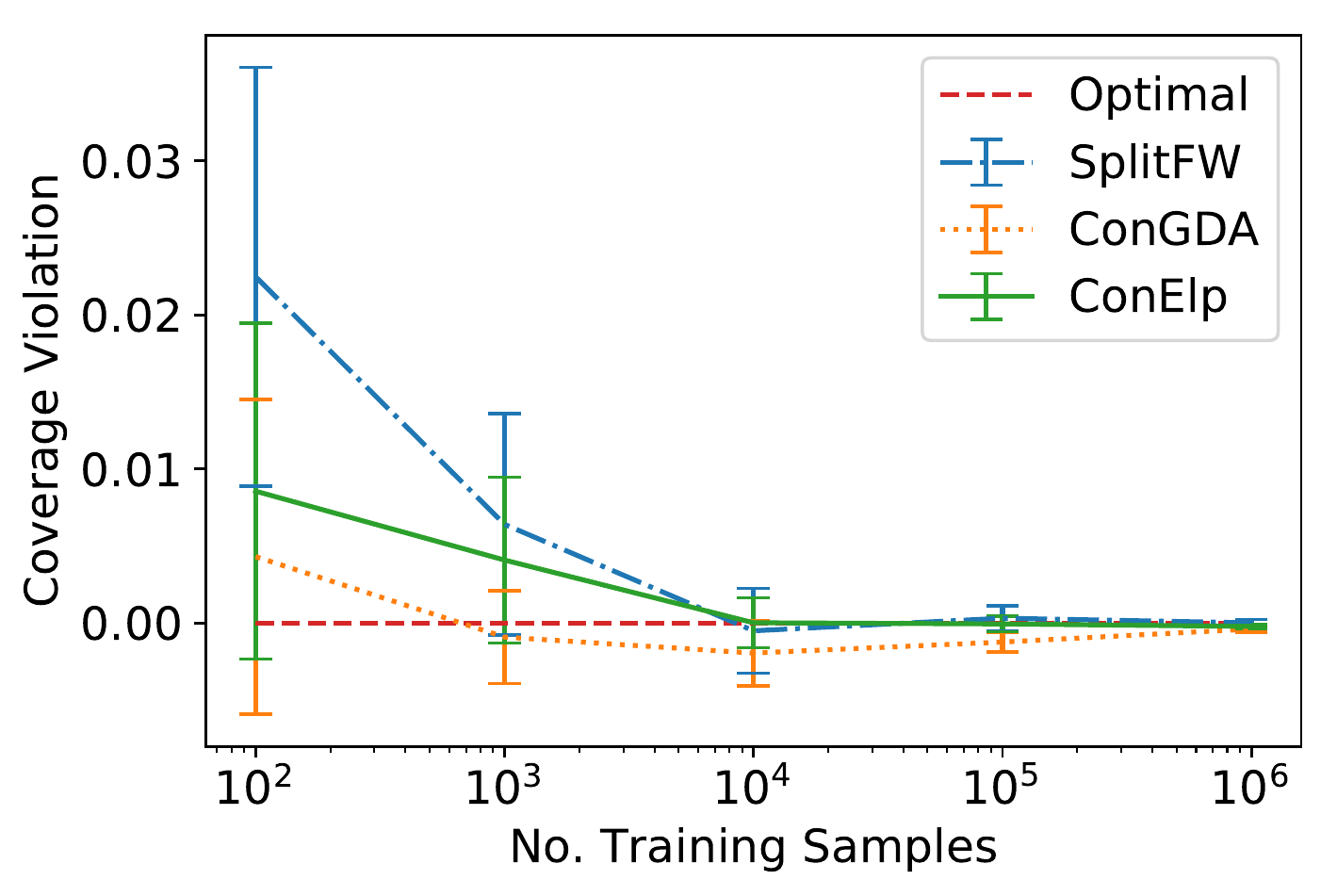}
\caption{Constrained}
\label{fig:q-mean-cov-constraint}
\end{subfigure}
\caption{Convergence of the proposed algorithms on synthetic data to (a) the Bayes optimal classifier  for of the Q-mean loss, and (b) the optimal-feasible classifier for the task of minimizing Q-mean loss subject to a coverage constraint, with the Q-mean loss shown on the left and the coverage constraint violation  $\max_{i\in[3]}\big|\sum_j C_{ji} - \pi_i\big| - 0.01$ shown on the right. The results are reported on the test set, and averaged  over training with 5 random draws of the dataset.}
\vspace{-5pt}
\label{fig:q-mean-unconstraint}
\end{figure}


\subsection{Convergence to the Optimal Classifier}
\label{sec:expt-consistency}
In our first set of experiments, we test the consistency behavior of the algorithms on a synthetic data set for which the Bayes optimal performance could be calculated. We use a 3-class synthetic data set with instances in $\X=\R^2$ generated as follows: examples are chosen from class 1 with probability 0.85, from class 2 with probability 0.1 and from class 3 with probability 0.05; instances in the three classes are then drawn from multivariate Gaussian distributions with means $(1,1)^\top$, $(0,0)^\top$, and $(-1,-1)^\top$ respectively, and with the same covariance matrix
$\big[ \begin{smallmatrix} 5 & 1 \\ 1 & 5 \end{smallmatrix} \big]$. 
The conditional-class probability function $\seta:\R^2\>\Delta_3$ for this distribution is a softmax of linear functions, and can be computed in closed-form. 

We first consider the unconstrained task of optimizing the Q-mean loss in Table \ref{tab:perf-measures}, given by $\psi^{\text{QM}}(\C) = \Big(\frac{1}{n}\sum_{i}\left(1-\frac{C_{ii}}{\sum_{j} C_{ij}}\right)^2\Big)^{1/2}$. 
Note that this performance metric is a smooth convex function of $\C$, and can be optimized with any one of the proposed Frank-Wolfe, GDA or ellipsoid methods (Algorithms \ref{alg:FW}--\ref{alg:ellipsoid}). Because the metric and the distribution satisfy the conditions of Proposition \ref{prop:opt-classifier-monotonic}, and  the Bayes-optimal classifier is of the form $h^*(x) = \argmax^*_{i\in[3]} w^*_{i}\, \eta_i(x)$, for some distribution-dependent coefficients $w^*_i \in \R$,. To compute the Bayes-optimal classifier, we run a brute-force grid search for $w^*_i$. 

Our algorithms use the plug-in method in Algorithm \ref{alg:plug-in} for the LMO subroutine. Specifically, they fit a linear logistic regression model $\hat{\seta}:\R^2\>\Delta_3$ to the training set, and iteratively learn a randomized combination of classifiers of the form $h(x) = \argmax^*_{i\in[3]} w_{i} \,\hat{\eta}_i(x)$.
In Figure~\ref{fig:q-mean-unconstrained}, we plot the Q-mean loss  for the classifier learned by the proposed algorithms, evaluated on a test set of $10^6$ examples, for different sizes of the training sample. 
In each case, we average the results over 5 random draws of the training sample.
As seen, all three methods converge to the performance of the Bayes-optimal classifier.

We next consider the task of optimizing the Q-mean loss subject to a coverage constraint, requiring the proportion of predictions made for class $i$ to be (approximately) equal to the class prior $\pi_i$. Specifically, we constraint the max coverage deviation, $\max_{i\in[3]}\big|\sum_j C_{ji} - \pi_i\big|$ to be at most 0.01. This is a constrained problem with a convex smooth objective  and a convex constraint in $\C$, and can be solved using the constrained counter-parts to the Frank-Wolfe, GDA and ellipsoid methods (Algorithm \ref{alg:FW-con}--\ref{alg:ellipsoid-con}). 
Following \cite{yang2020fairness},
we have that the optimal-feasible classifier for this problem is a randomized classifier of two classifiers $h^{1,*}(x) = \argmax^*_{i\in[3]} w^{1,*}_{i} \eta_i(x)$ and $h^{2,*}(x) = \argmax^*_{i\in[3]} w^{2,*}_{i} \eta_i(x)$, for distribution-dependent coefficients $w^{1,*}_{i}$ and $w^{2,*}_{i}$.\footnote{Proposition \ref{prop:opt-classifier-constrained} tells us that the support of the Bayes-optimal classifier randomizes over as many as $d+1$ deterministic classifiers. For the 3-class distribution we consider, $\boldeta(X)$ satisfies additional continuity conditions,
under which the optimal classifier can be shown to be a randomized combination of at most \emph{two} deterministic classifier \citep{wang2019consistent, yang2020fairness}.}
We compute these coefficients and the optimal randomized combination via a brute-force grid search.
Figure \ref{fig:q-mean-cov-constraint} plots the Q-mean loss and the constraint violation for the three algorithms. All of them can be seen to converge to the Q-mean of the optimal-feasible classifier and to zero constraint violation.

\subsection{Performance on Unconstrained Problems}
\label{sec:expts-unc}
We next compare the proposed algorithms for unconstrained problems on five benchmark multiclass datasets: (i) Abalone, (ii) PageBlock, (iii) CovType, (iv) SatImage and (v) MACHO. The first four were obtained from the UCI Machine Learning repository \citep{uci}. The fifth dataset pertains to the task of classifying celestial
objects from the Massive Compact Halo Object (MACHO)
catalog using photometric time series data \citep{alcock2000macho, kim2011quasi}. Each celestial object is described by measurements from 
6059 light curves, and is categorized either as one of seven celestial objects or as a miscellaneous category. 

We consider two performance metrics from Table \ref{tab:perf-measures}: (i) the H-mean metric 
$\psi^{\HM}(\C) = 1 \,-\, n\left({\sum_{i} \frac{\sum_{j} C_{ij}}{C_{ii}}}\right)^{-1}$
and (ii) the micro F-measure $\psi^{{\micro F_1}}(\C) = 1 \,-\, \frac{2\sum_{i\ne k} C_{ii}}{2 - \sum_{i}C_{ki} - \sum_{i} C_{ik}}$, where $k \in [n]$ is a designated default  class. The first metric is convex in $\C$, for which we compare the performances of the Frank-Wolfe, GDA, and ellipsoid algorithms (Algorithms \ref{alg:FW}--\ref{alg:ellipsoid}); the second metric is ratio-of-linear in $\C$, and for this, we apply the bisection algorithm (Algorithm \ref{alg:bisection}).
Our algorithms use a plug-in based LMO  with a linear logistic regression model used to estimate the conditional-class probabilities. We compare our methods with the 0-1 plug-in, balanced plug-in and TFCO baselines.



\begin{table}[t]
    \centering
    \caption{Unconstrained optimization of the (convex) H-mean loss. \textit{Lower} values are \textit{better}. The results are averaged over 10 random train-test splits.}
    \label{tab:hmean-uncon}
    \begin{footnotesize}
    \begin{tabular}{c|ccc|ccc}
        \hline
        \textbf{Dataset} & \textbf{Plugin [$\zo$]} & \textbf{Plugin (bal)} & \textbf{TFCO} & \textbf{FW} & \textbf{GDA} & \textbf{Ellipsoid} \
        \\
        \hline
        Abalone   & $1.0 \pm 0.0$ & $0.890 \pm 0.038$ & $0.824 \pm 0.018$ & $\textbf{0.816} \pm \textbf{0.020}$ & $0.818 \pm 0.017$ & $0.817 \pm 0.019$ \\
        Pgblk  & $0.416 \pm 0.128$ & $0.130 \pm 0.034$ & $0.200 \pm 0.023$ & $0.120 \pm 0.028$ & $0.130 \pm 0.04$ & $\textbf{0.110} \pm \textbf{0.025}$ \\
        MACHO     & $0.210 \pm 0.043$ & $0.130 \pm 0.015$ & $0.143 \pm 0.019$ & $0.124 \pm 0.017$ & $\textbf{0.124} \pm \textbf{0.015}$ & $0.125 \pm 0.017$ \\
        SatImage     & $0.279 \pm 0.01$ & $0.173 \pm 0.008$ & $\textbf{0.170} \pm \textbf{0.006}$ & $0.171 \pm 0.007$ & $0.173 \pm 0.008$ & $\textbf{0.170} \pm \textbf{0.006}$ \\
        CovType     & $1.0 \pm 0.0$ & $0.507 \pm 0.001$ & $0.469 \pm 0.001$ & $0.463 \pm 0.001$ & $0.463 \pm 0.001$ & $ \textbf{0.461} \pm \textbf{0.001}$ \\
        \hline
    \end{tabular}
    \end{footnotesize}
\vspace{-8pt}
\end{table}

\begin{table}[t]
    \centering
    \caption{Unconstrained optimization of the (ratio-of-linear) micro $F_1$ loss. \textit{Lower} values are \textit{better}. The results are averaged over 10 random train-test splits.}
    \label{tab:fmeasure-uncon}
    \begin{footnotesize}
    \begin{tabular}{c|ccc|c}
        \hline
        \textbf{Datasets} & \textbf{Plugin [$\zo$]} & \textbf{Plugin (bal)} & \textbf{TFCO} & \textbf{Bisection} \\
        \hline
        Abalone      & $0.713 \pm 0.006$  & $0.760 \pm 0.004$ 
        & $0.728 \pm 0.012$ & $\mathbf{0.693 \pm 0.006}$ \\
        Pgblk  & $0.218 \pm 0.012$ & $0.441 \pm 0.033$ 
        & $0.216 \pm 0.018$ & $\mathbf{0.211 \pm 0.016}$ \\
        MACHO    & $\textbf{0.089} \pm \textbf{0.005}$ & $0.106 \pm 0.007$ 
        & $0.110 \pm 0.005$
        & $\mathbf{0.089 \pm 0.005}$ \\
        SatImage    & $\textbf{0.180} \pm \textbf{0.005}$ & $0.185 \pm 0.007$ 
        & $0.234 \pm 0.003$
        & $\mathbf{0.180 \pm 0.005}$ \\
        CovType    & $0.548 \pm 0.001$ & $0.625 \pm 0.003$ 
        & $0.486 \pm 0.001$
        & $\mathbf{0.403 \pm 0.001}$ \\
        \hline
    \end{tabular}
    \end{footnotesize}
\vspace{-8pt}
\end{table}

The results of optimizing the two metrics are shown in Tables \ref{tab:hmean-uncon} and \ref{tab:fmeasure-uncon} respectively. As expected both the 0-1 and balanced plug-in classifiers are often seen to perform poorly on the H-mean and micro $F_1$ metrics. For example, on the Abalone and CovType dataset, the plug-in (0-1) yields a H-mean loss of 1 as it achieves high accuracies on the higher-frequency classes at the cost of yielding zero accuracy on one or more minority classes. In contrast, the proposed algorithms provide equitable performance across all classes, and are able to yield a much lower H-mean score. This demonstrates the advantage of using algorithms that directly optimize for the metric of interest. In most experiments, TFCO is seen to be a competitive baseline: with the H-mean metric, the proposed algorithms yields significantly better performance over this method on two of the five datasets, and 
with the micro $F_1$ metric it yields significantly better performance than TFCO on four of the five datasets . We stress that our algorithms are able to provide these gains despite TFCO using a more flexible class of randomized classifiers. In fact, with the MACHO dataset, TFCO can be seen to perform worse than our method as a result of over-fitting to the training set.

We also note that all the algorithms compared beat  a trivial classifier that predicts all classes with equal probability (see Appendix \ref{app:trivial-classifier} for the performance of the trivial classifier on the different datasets with different metrics).



\subsection{Performance on Constrained Problems}
\label{sec:experiments-cons}
\begin{figure}[t]
\centering
\begin{subfigure}[b]{0.48\linewidth}
\centering
\includegraphics[width=0.48\linewidth]{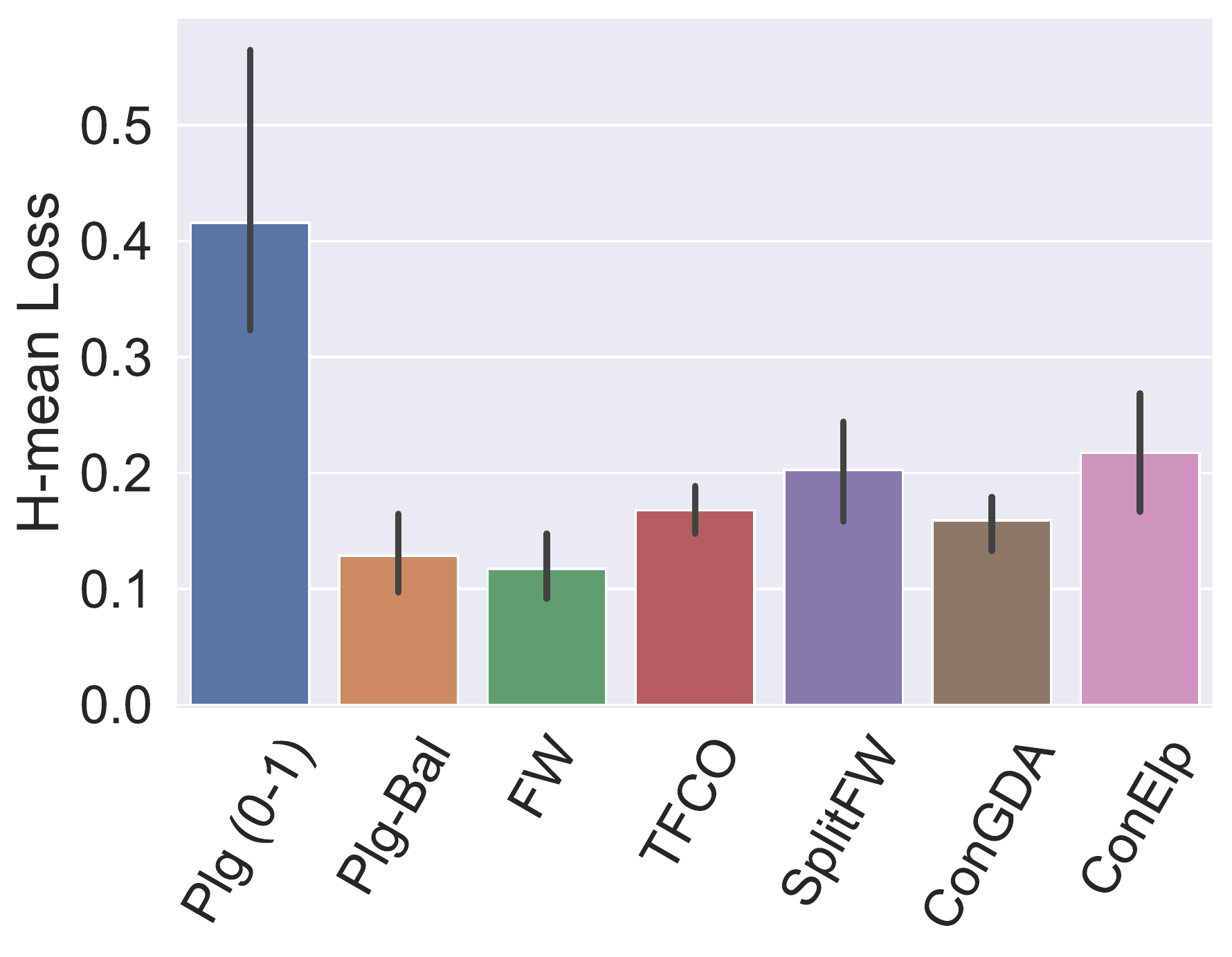}
\includegraphics[width=0.48\linewidth]{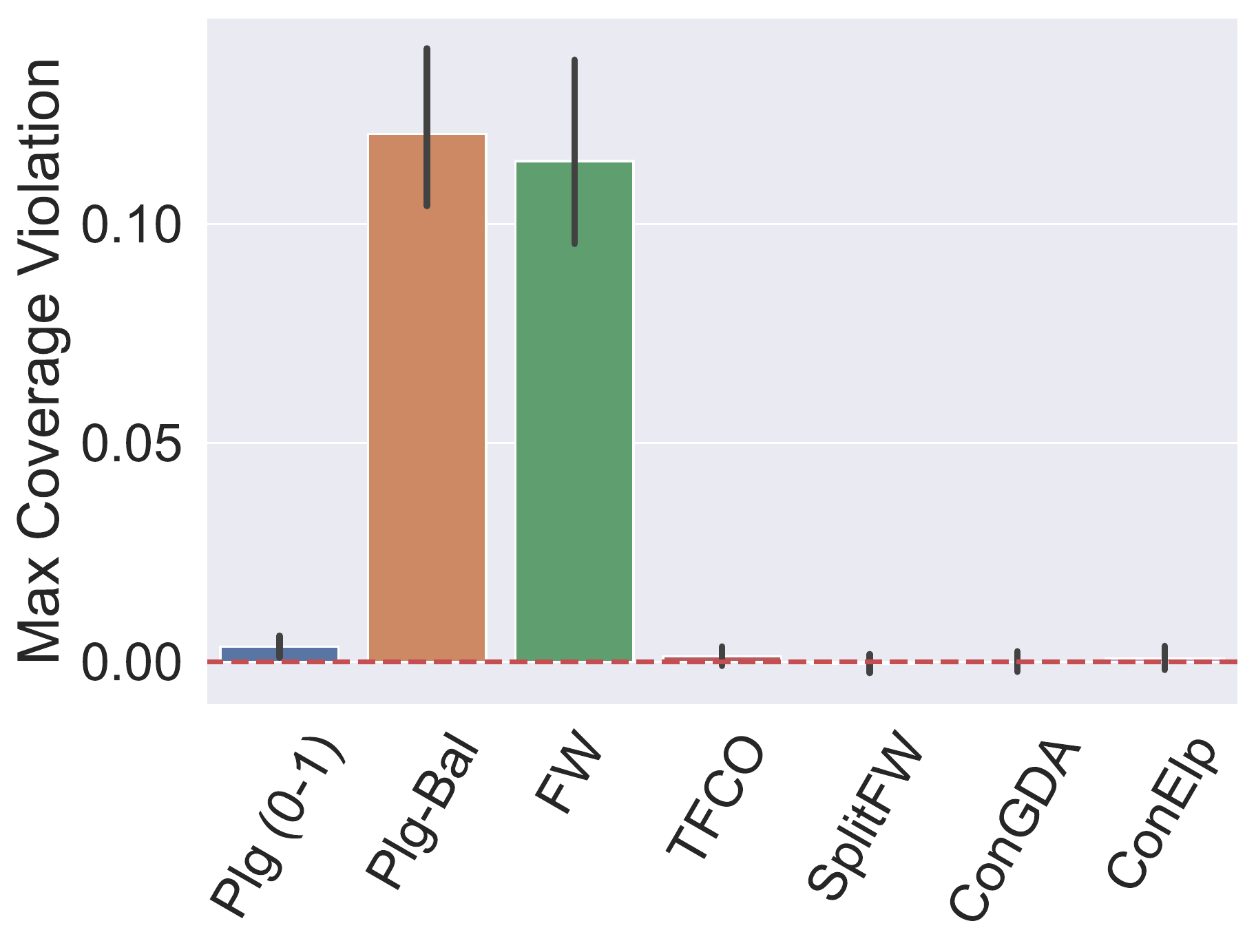}
\label{fig:pageblocks}
\caption{PageBlock}
\end{subfigure}
\begin{subfigure}[b]{0.48\linewidth}
\centering
\includegraphics[width=0.48\linewidth]{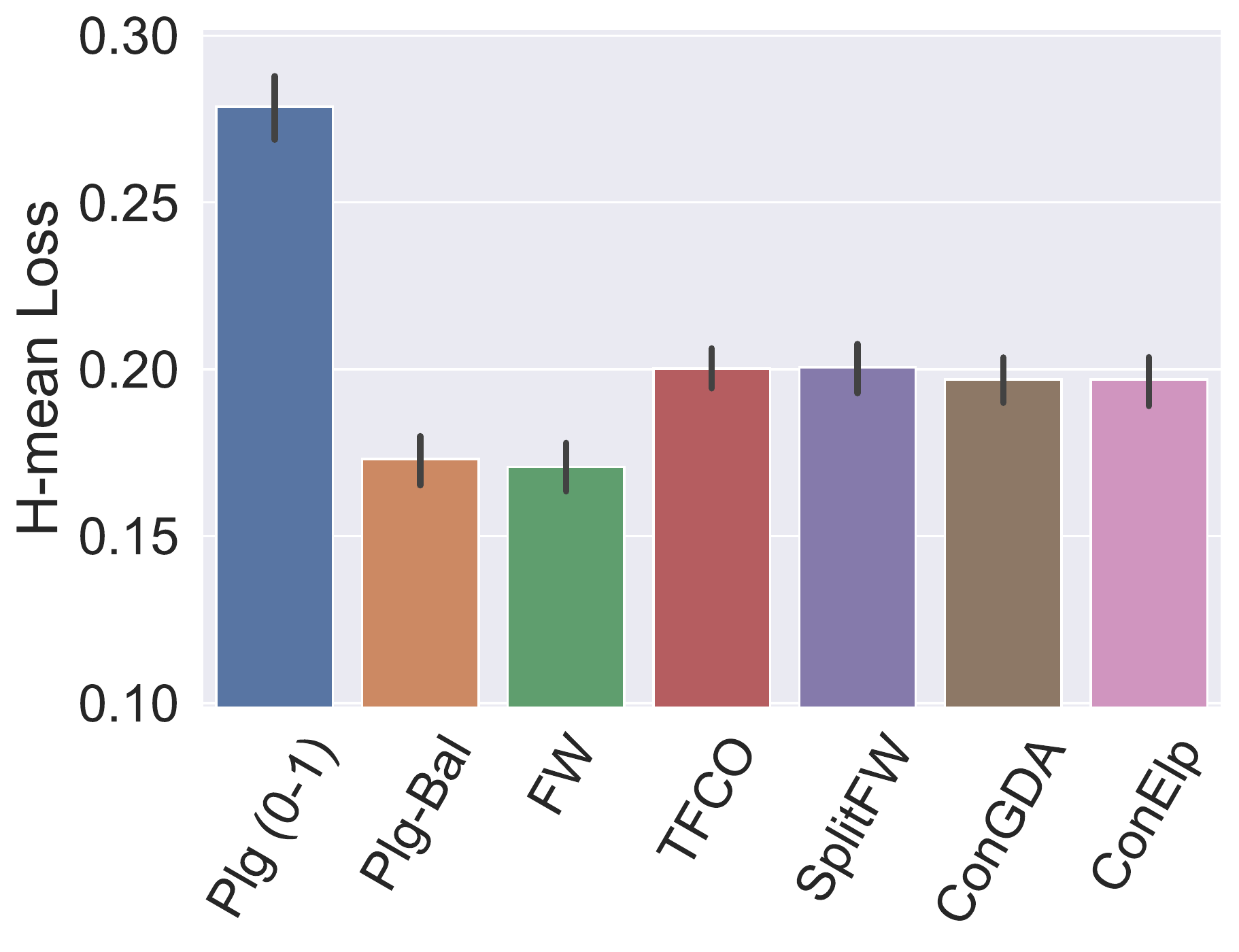}
\includegraphics[width=0.48\linewidth]{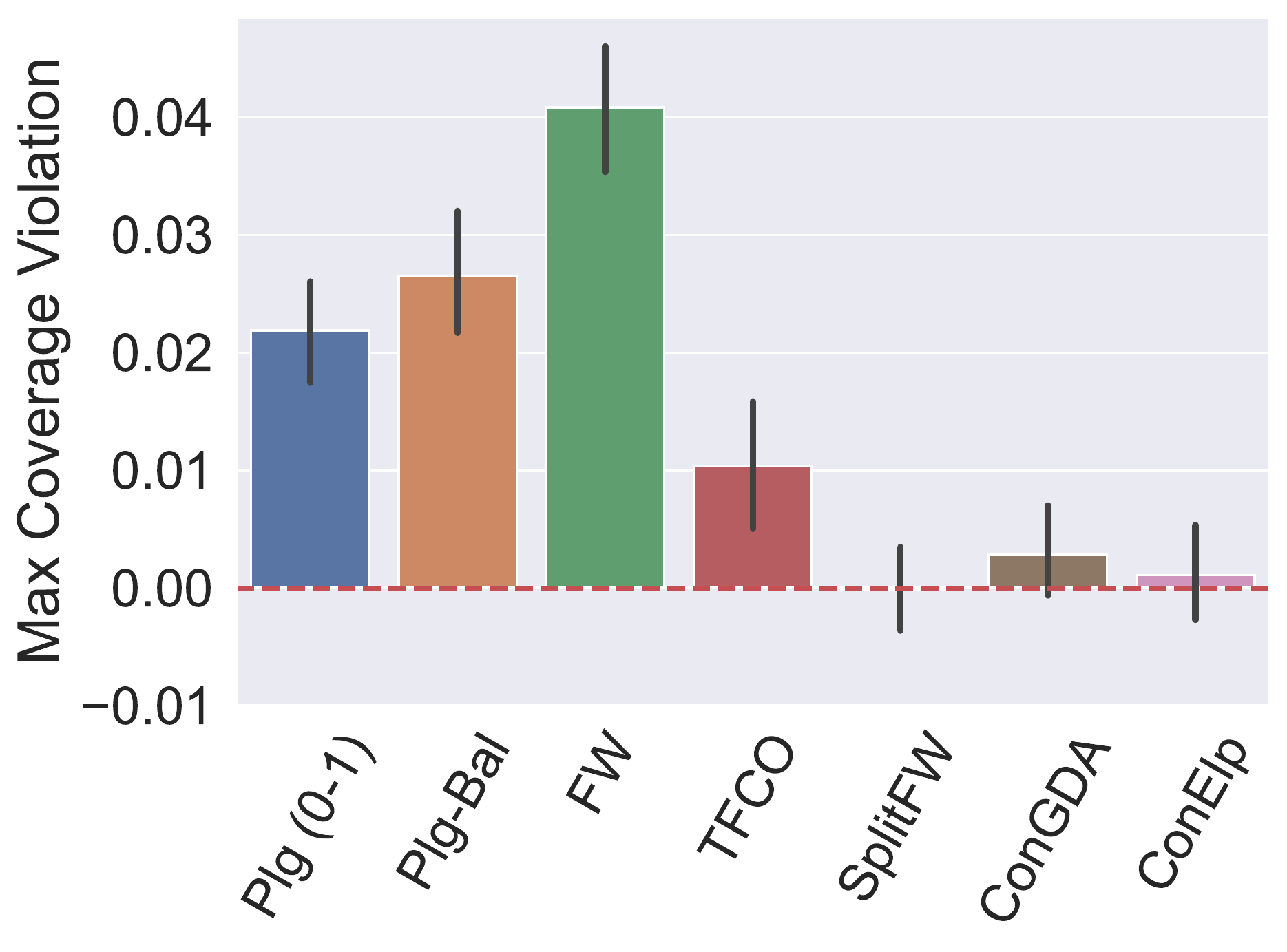}
\label{fig:satimage}
\caption{SatImage}
\end{subfigure}
\caption{Optimizing the H-mean loss subject to the coverage constraint $\max_i|\sum_{j}C_{ji} \,-\, \pi_i| \leq 0.01$. The plots on the left show the H-mean loss on the test set and those on the right show the coverage violation $\max_i|\sum_{j}C_{ji} \,-\, \pi_i| - 0.01$ on the test set. 
\textit{Lower} H-mean value are \textit{better}, and the constraint values need to be $\leq 0$. 
The results are averaged over 10 random train-test splits. The error bars indicate 95\% confidence intervals.}
\label{fig:hmean-cov}
\end{figure}
 
\begin{figure}[t]
\centering
\begin{subfigure}[b]{0.48\linewidth}
\centering
\includegraphics[width=0.48\linewidth]{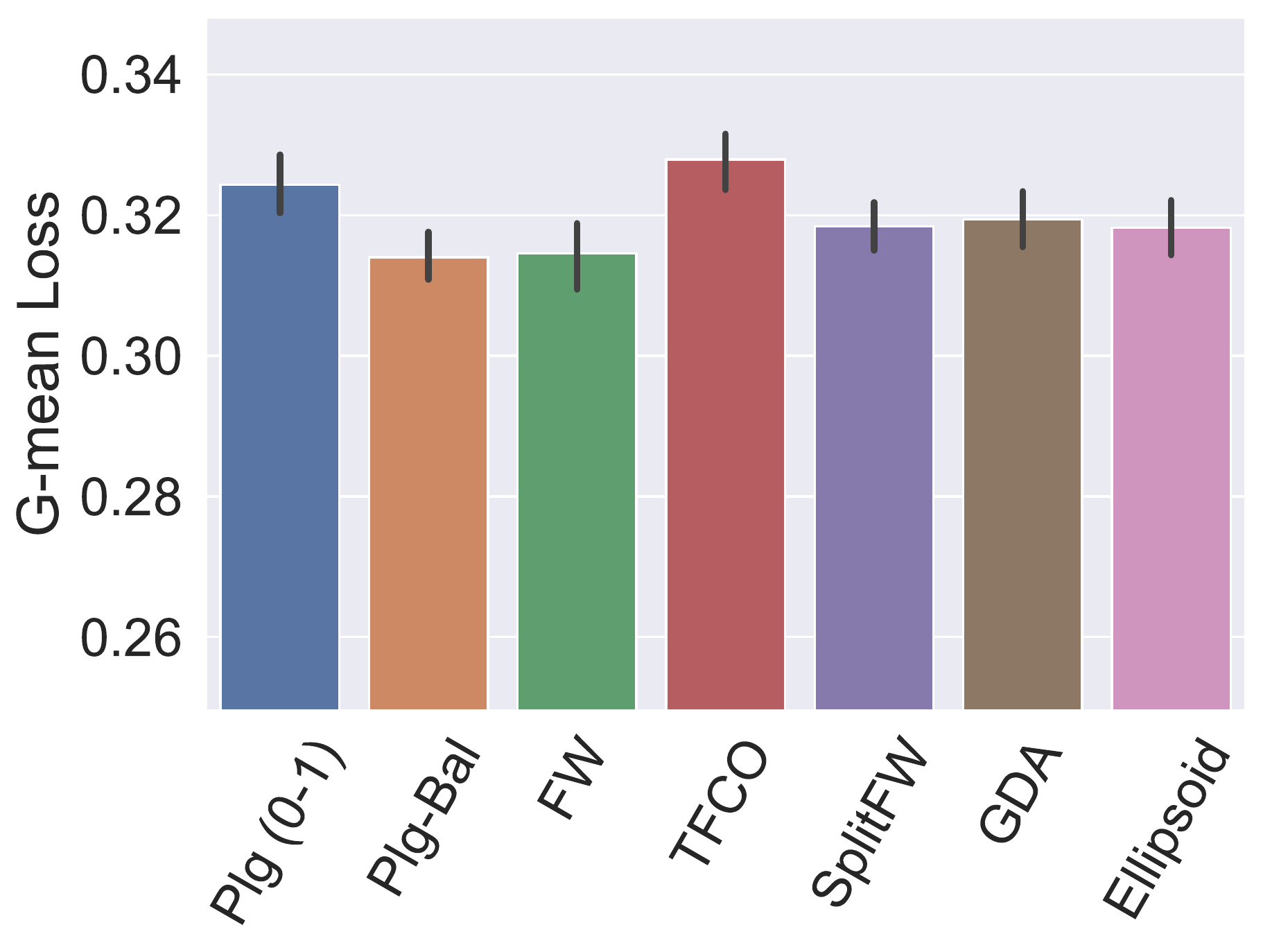}
\includegraphics[width=0.48\linewidth]{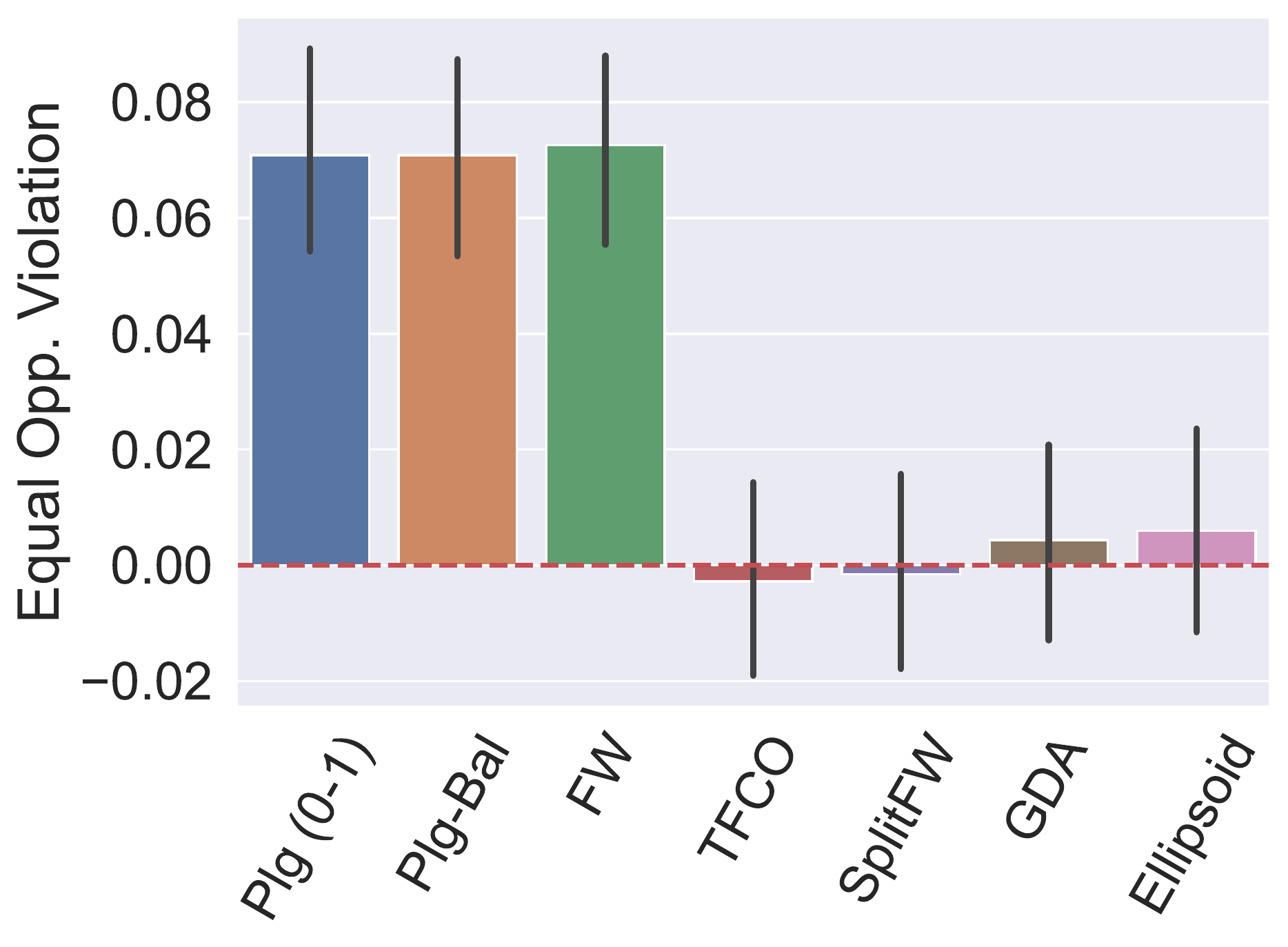}
\caption{COMPAS}
\end{subfigure}
\begin{subfigure}[b]{0.48\linewidth}
\centering
\includegraphics[width=0.48\linewidth]{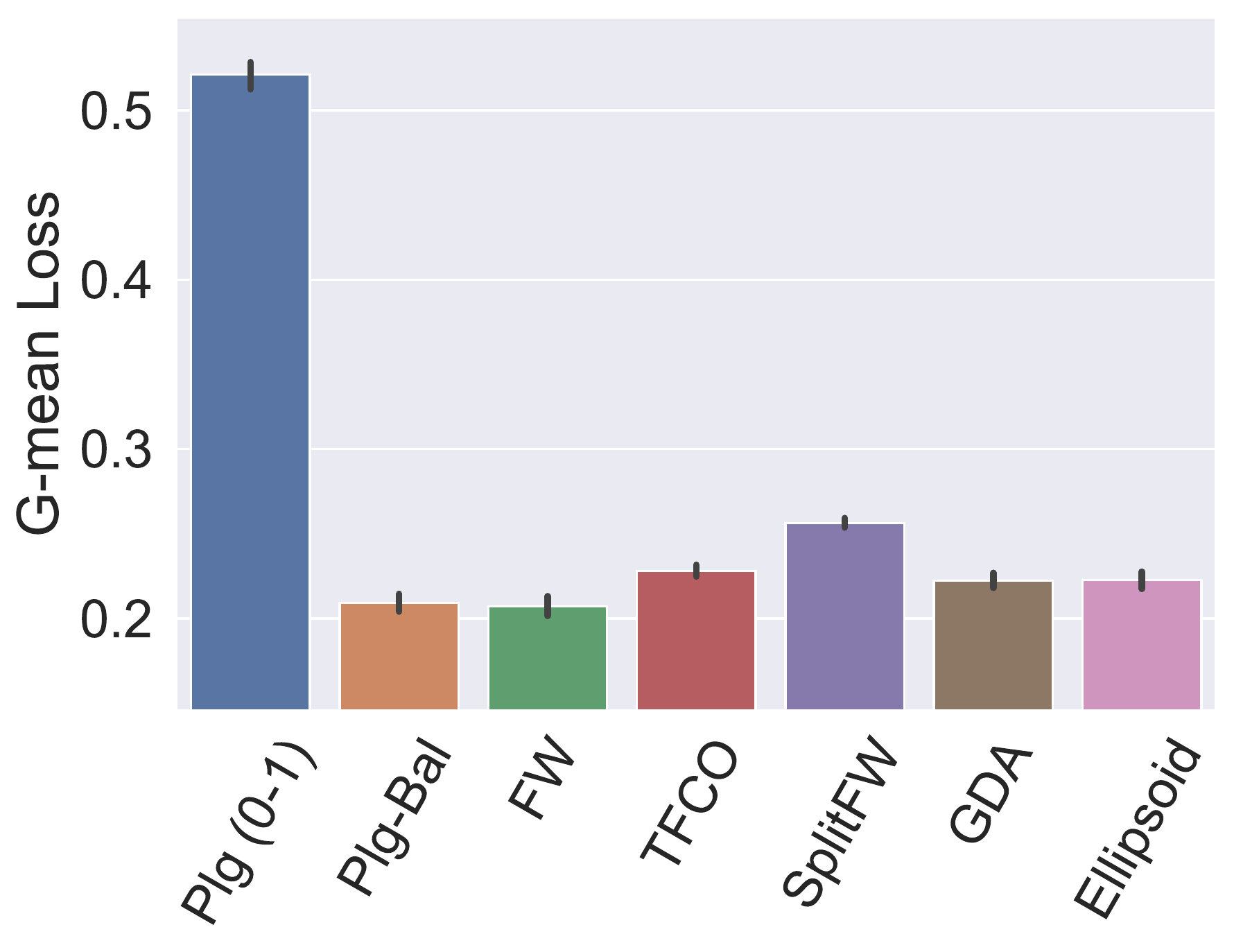}
\includegraphics[width=0.48\linewidth]{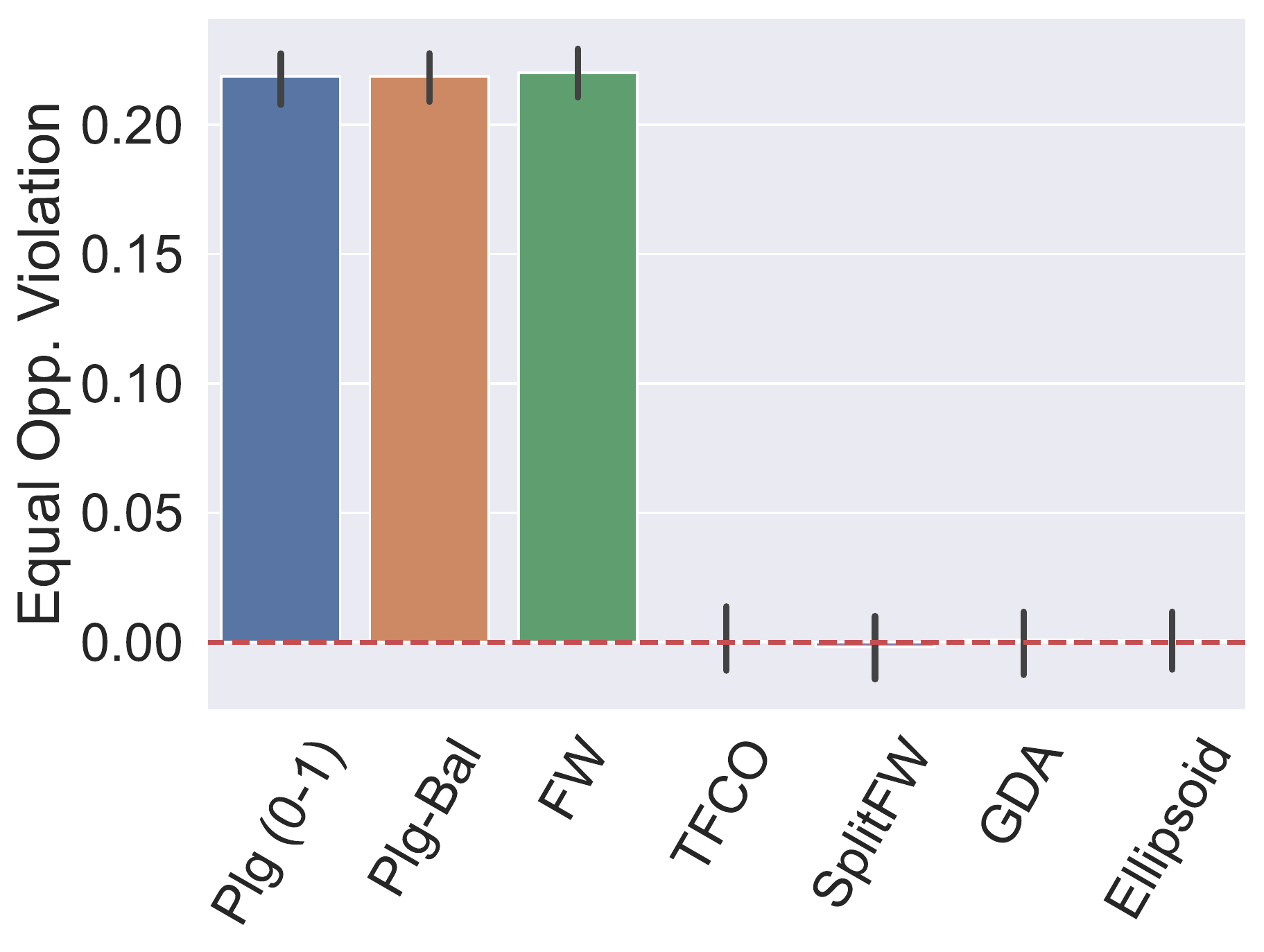}
\caption{Law School}
\end{subfigure}
\caption{Optimizing the G-mean loss subject to the equal-opportunity fairness constraint $\max_{a\in[m]}
\big|
	\frac{1}{\mu_{a1}}C^a_{11} \,-\, \frac{1}{\pi_1}C_{11}
\big| \leq 0.05$. The plots on the left show the G-mean loss on the test set and those on the right show the equal opportunity violation $\max_{a\in[m]}
\big|
	\frac{1}{\mu_{a1}}C^a_{11} \,-\, \frac{1}{\pi_1}C_{11}
\big|$ on the test set. 
\textit{Lower} G-mean value are \textit{better}, and the constraint violations need to be $\leq 0$. 
The results are averaged over 10 random train-test splits. The error bars indicate 95\% confidence intervals.}
\label{fig:gmean-eqopp}
\vspace{-10pt}
\end{figure}

Having showed the efficacy of our algorithms on unconstrained problems, we move to constrained problems. The first task we consider is to minimize the 
 H-mean loss subject to  coverage constraint requiring the proportion of predictions for each class $i$ to match the class prior $\pi$. Specifically, we require the maximum coverage violation over the $n$ classes $\max_{i\in[n]}|\sum_{j}C_{ji} \,-\, \pi_i|$ to be at most 0.01. 
 In Figure \ref{fig:hmean-cov}, we report both the H-mean and the maximum coverage violation for the three proposed constrained learning algorithms (Algorithms \ref{alg:FW-con}--\ref{alg:ellipsoid-con}) for this problem (see Appendix \ref{app:expts-additional} for additional results). 
 For comparison, we also report the performance of the 0-1 plug-in, balanced plug-in, and TFCO baselines, as well as the unconstrained Frank-Wolfe (FW) method, which seeks to optimize only the H-mean ignoring the constraint.
 We find that all three algorithms satisfy the constraint on the training set, but  occasionally incur some violations on the test set. In contrast, all baselines expect TFCO fail to satisfy the constraint.  On  SatImage, TFCO satisfies the constraint on the training set, but fails to satisfy it on the test set, while the proposed methods incur much lower test violations. This is also the case with MACHO, where TFCO incurs lower constraint violation and loss value on the training set, but compared to our methods is worse of on both metrics on the test set. The reason our methods are less prone to over-fitting is because they use a plug-in based LMO that post-shifts a pre-trained class-probability estimator, and therefore have fewer parameters to optimize when compared to TFCO.

 \begin{figure}[t]
\centering
\begin{subfigure}[b]{0.24\linewidth}
\centering
\includegraphics[width=0.99\linewidth]{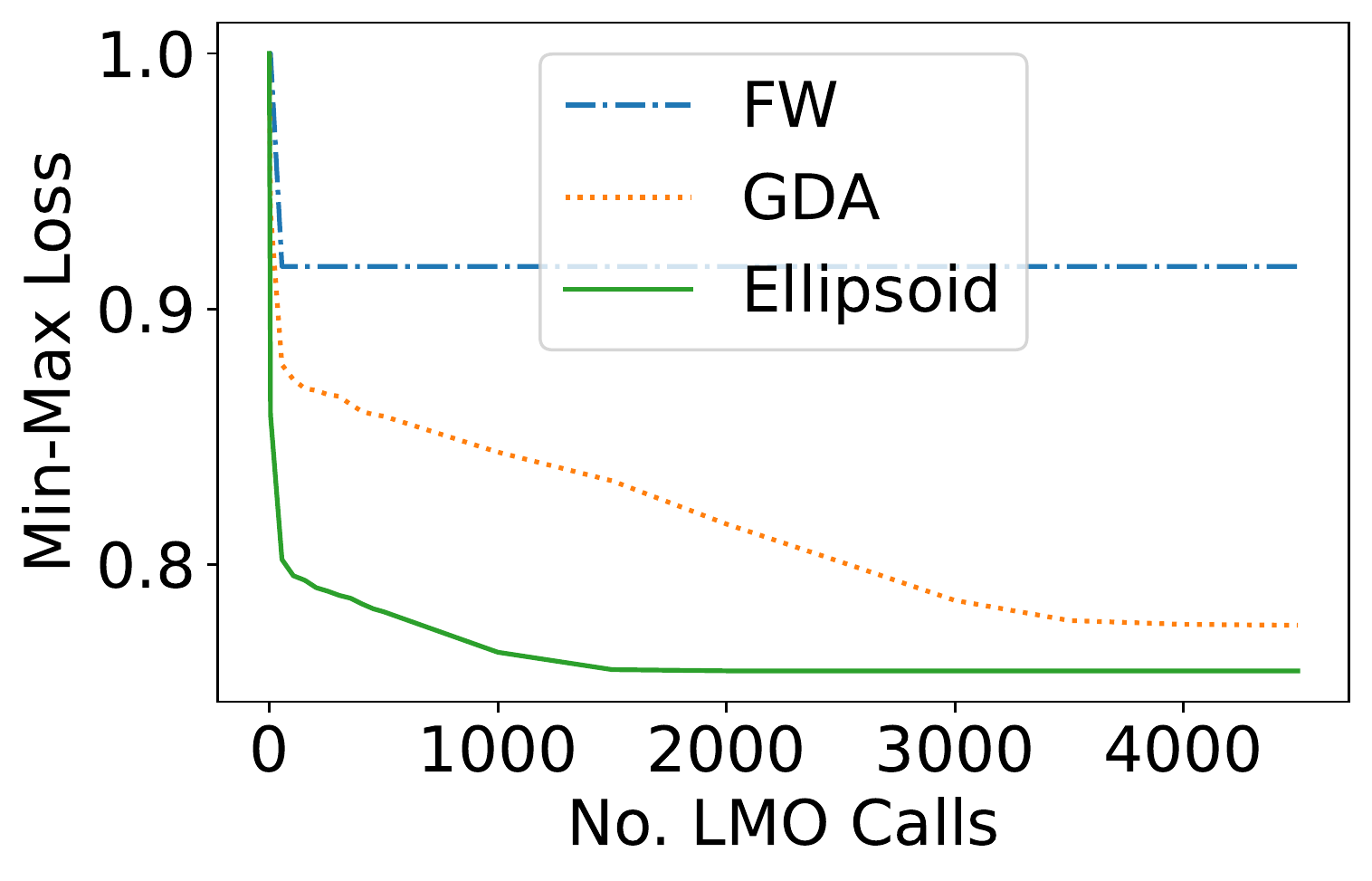}
\caption{Abalone (Train)}
\label{fig:lmo-abalone-minmax-train}
\end{subfigure}
\begin{subfigure}[b]{0.24\linewidth}
\centering
\includegraphics[width=1.02\linewidth]{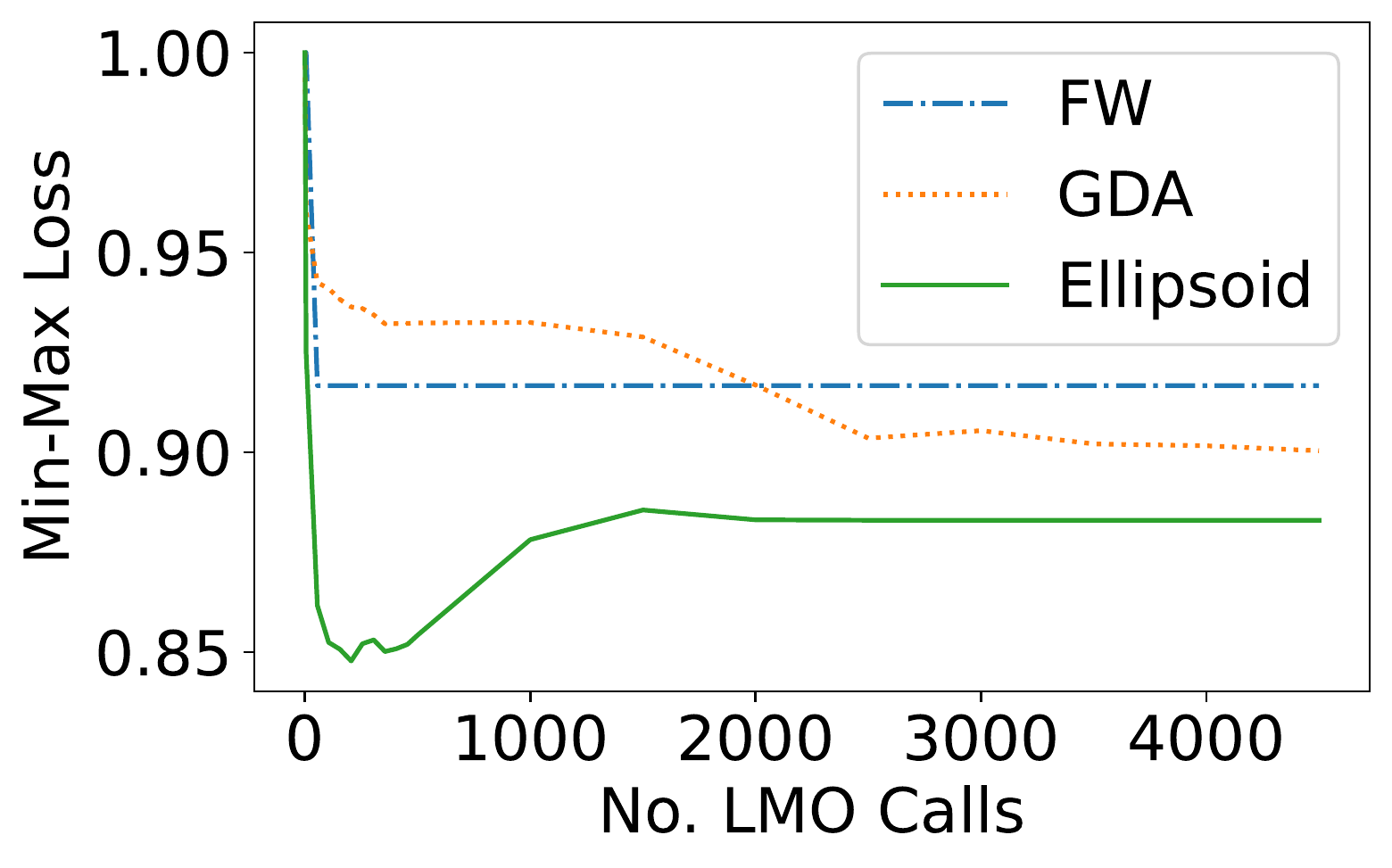}
\caption{Abalone (Test)}
\label{fig:lmo-abalone-minmax-test}
\end{subfigure}
\begin{subfigure}[b]{0.24\linewidth}
\centering
\includegraphics[width=0.99\linewidth]{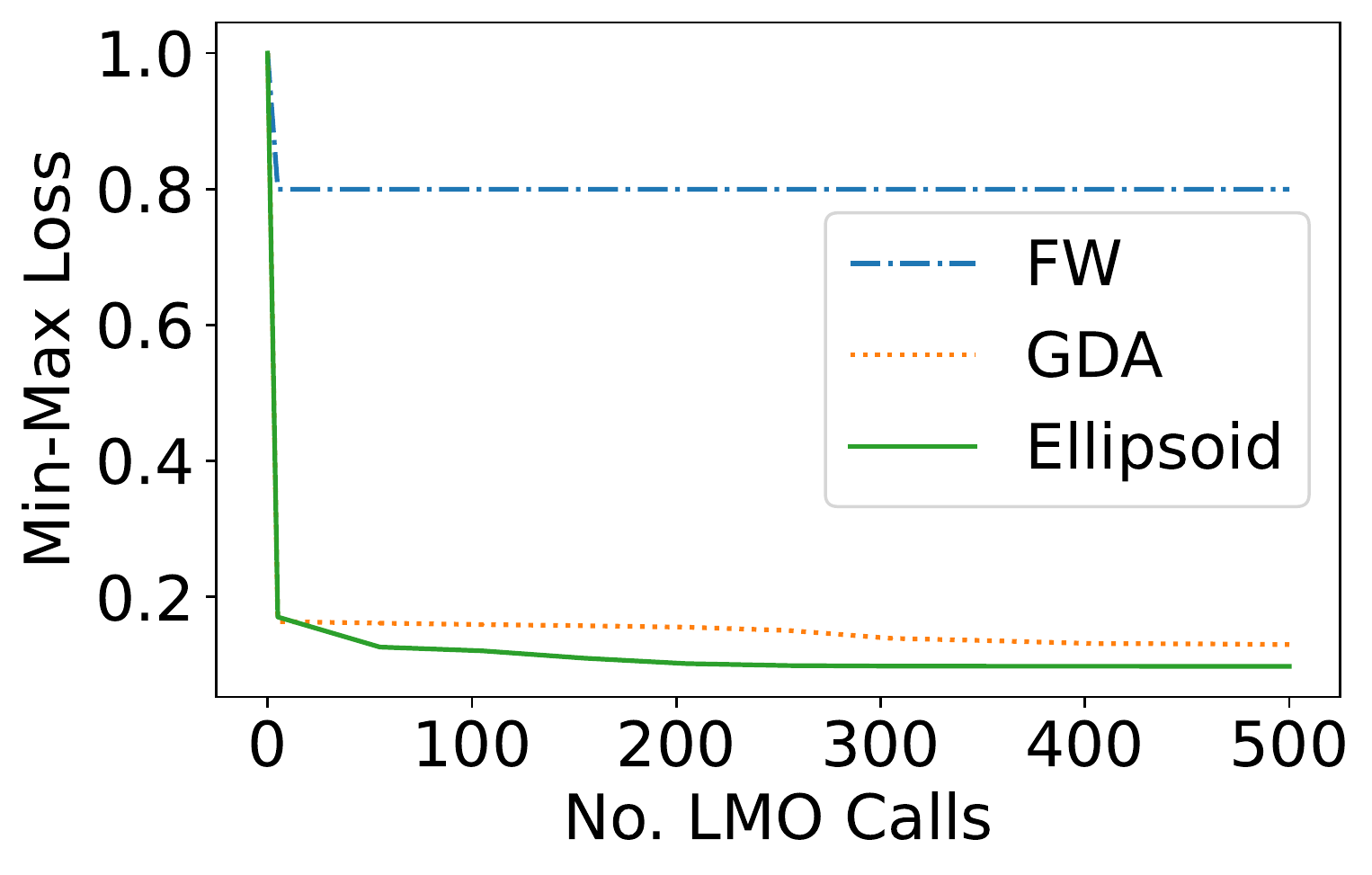}
\caption{PageBlock (Train)}
\label{fig:lmo-pageblocks-minmax-train}
\end{subfigure}
\begin{subfigure}[b]{0.24\linewidth}
\centering
\includegraphics[width=0.99\linewidth]{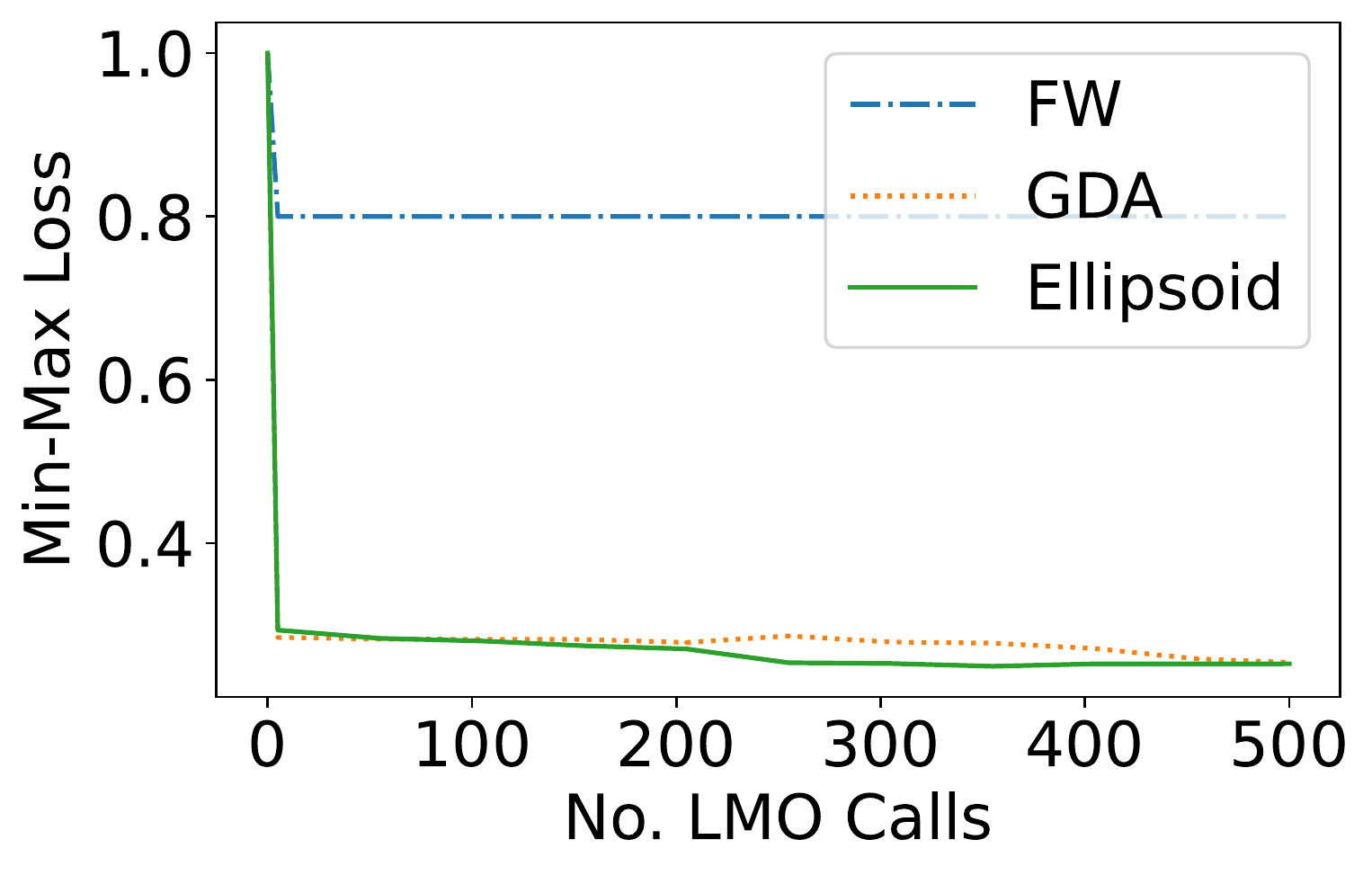}
\caption{PageBlock (Test)}
\label{fig:lmo-pageblocks-minmax-test}
\end{subfigure}
%
\caption{Optimizing the \emph{Min-max} loss: Comparison of performance of the Frank-Wolfe, GDA and ellipsoid methods as a function of the number of LMO calls. \textit{Lower} values are \textit{better}. Because the min-max loss is non-smooth, Frank-Wolfe is seen to converge to a sub-optimal classifier.}
\label{fig:min-max}
\vspace{-10pt}
\end{figure}


\begin{figure}[t]
\centering
\begin{subfigure}[b]{0.24\linewidth}
\centering
\includegraphics[width=0.99\linewidth]{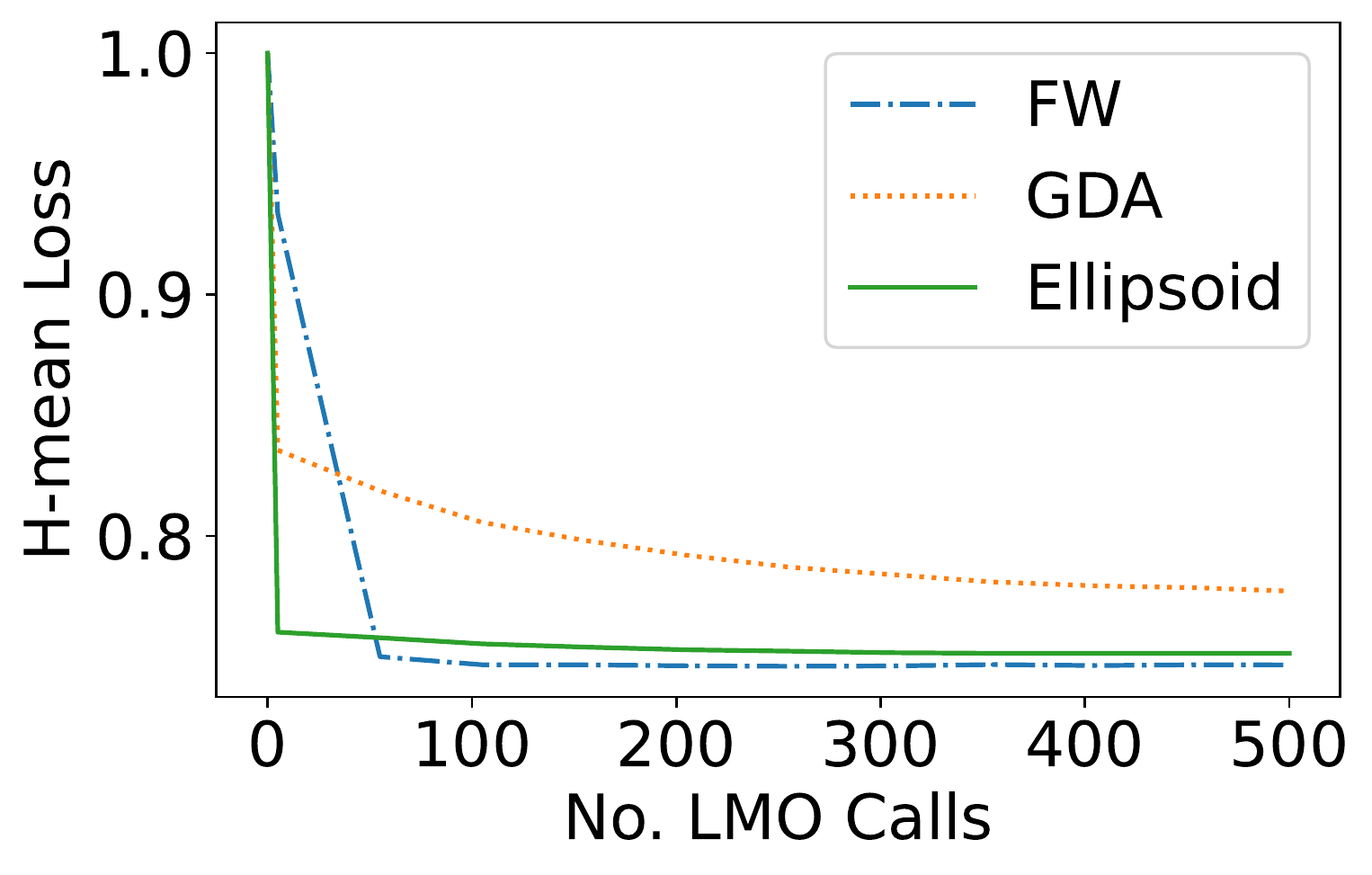}
\caption{Abalone (Train)}
\label{fig:lmo-abalone-hmean-uncon-train}
\end{subfigure}
\begin{subfigure}[b]{0.24\linewidth}
\centering
\includegraphics[width=0.99\linewidth]{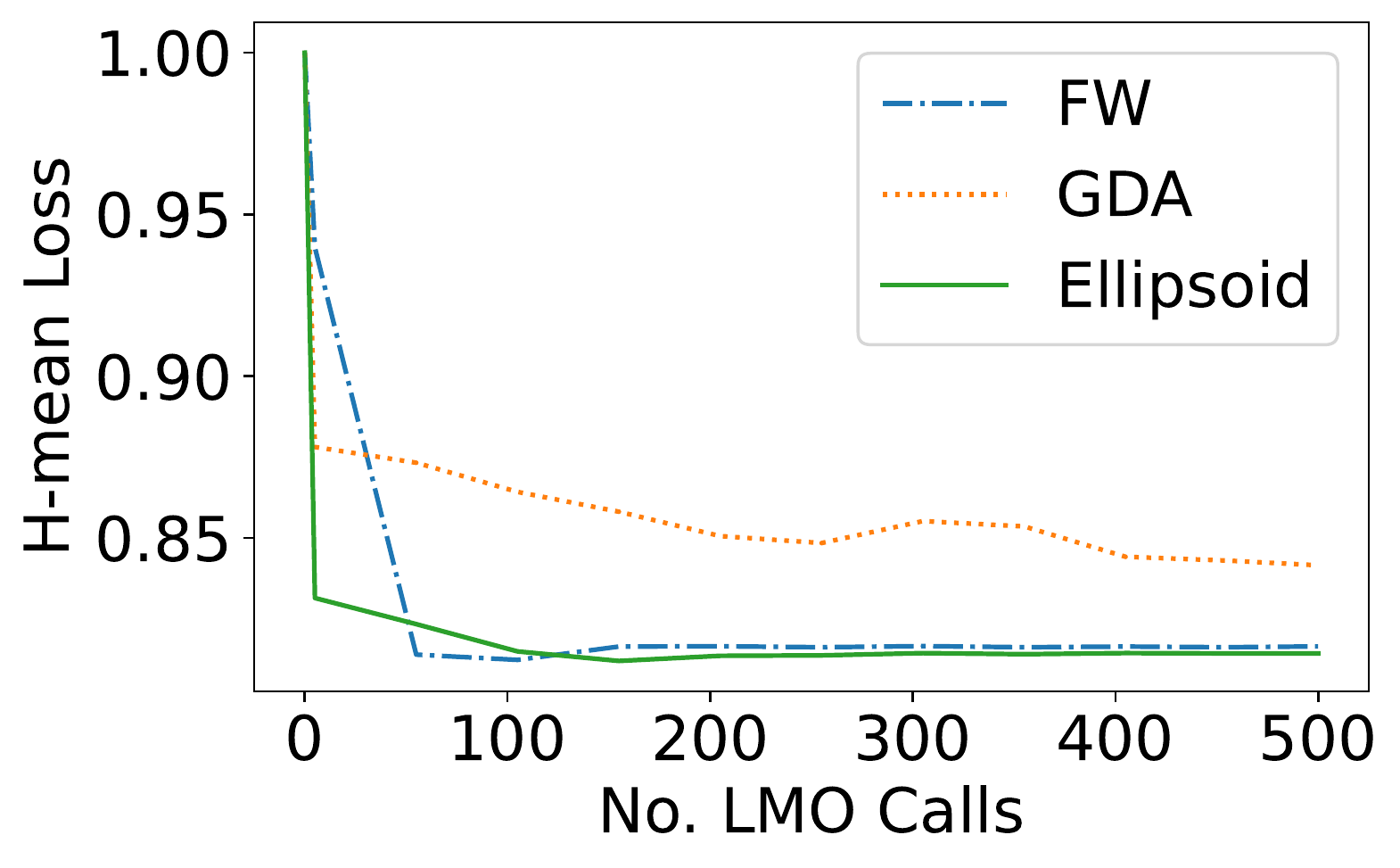}
\caption{Abalone (Test)}
\label{fig:lmo-abalone-hmean-uncon-test}
\end{subfigure}
\begin{subfigure}[b]{0.24\linewidth}
\centering
\includegraphics[width=0.99\linewidth]{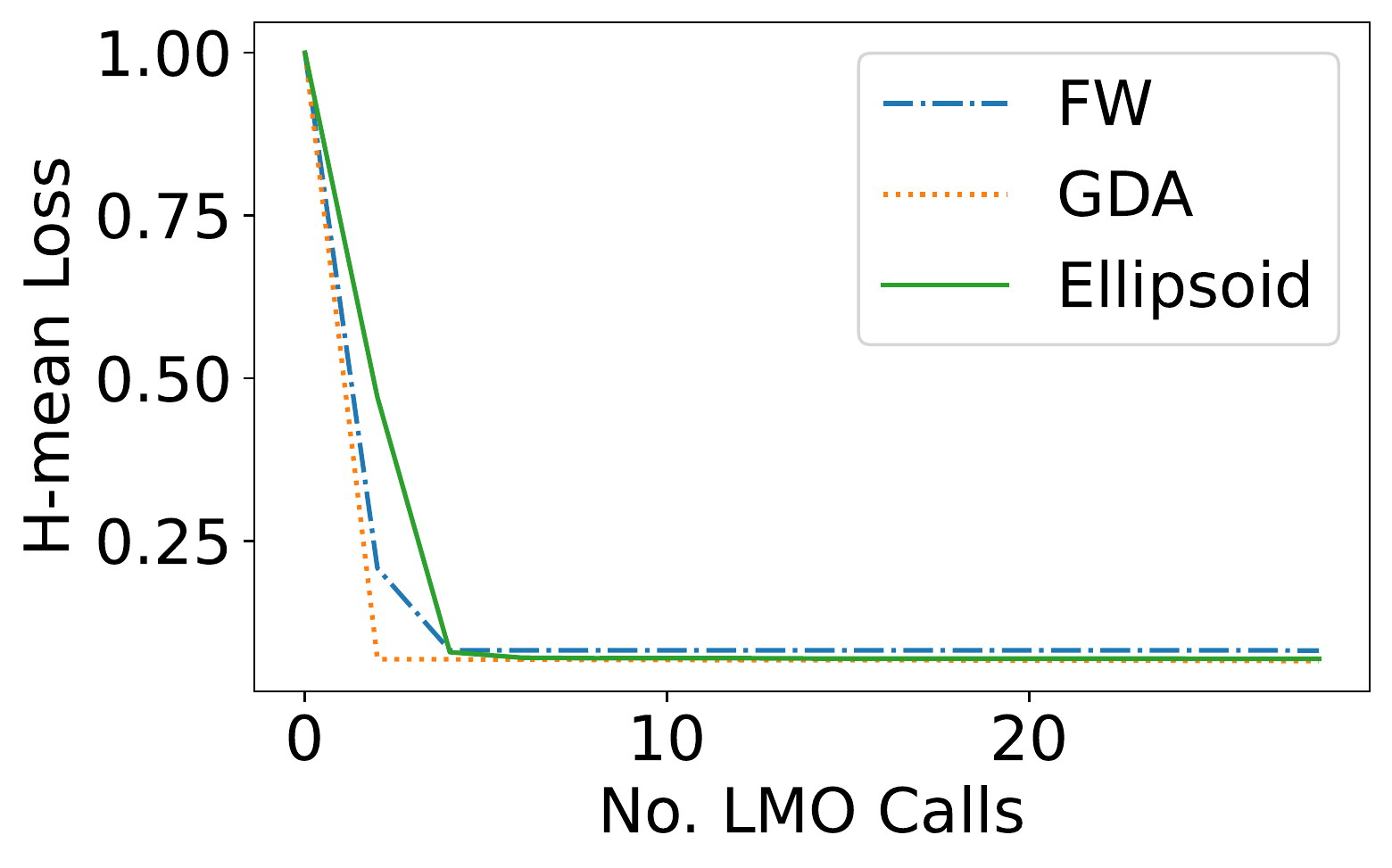}
\caption{PageBlock (Train)}
\label{fig:lmo-pageblocks-hmean-uncon-train}
\end{subfigure}
\begin{subfigure}[b]{0.24\linewidth}
\centering
\includegraphics[width=0.99\linewidth]{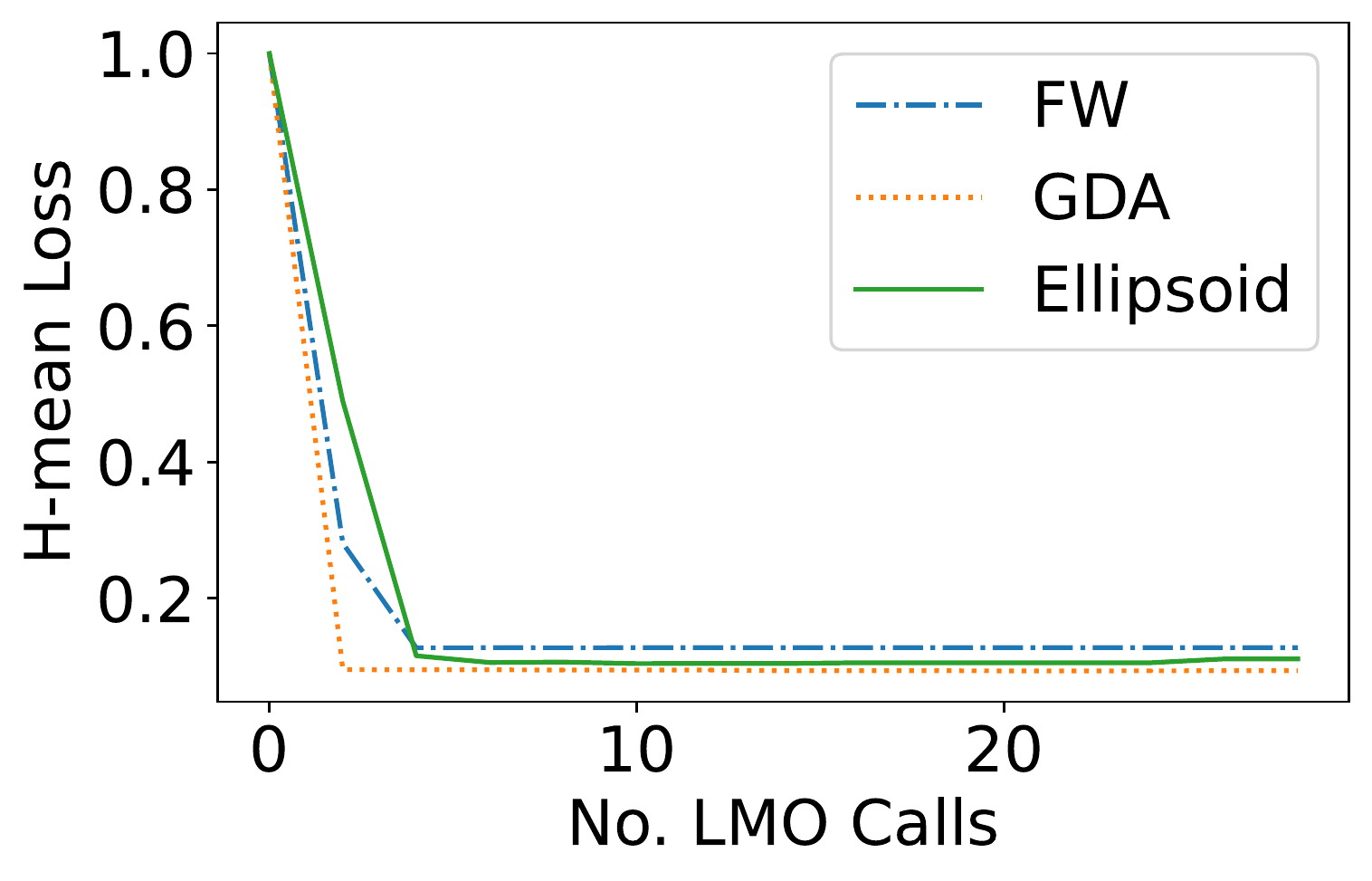}
\caption{PageBlock (Test)}
\label{fig:lmo-pageblocks-hmean-uncon-test}
\end{subfigure}
\caption{Optimizing the \emph{H-mean} loss: Comparison of performance of the Frank-Wolfe, GDA and ellipsoid methods as a function of the number of LMO calls. \textit{Lower} values are \textit{better}.
}
\label{fig:hmean-uncon}
\vspace{-12pt}
\end{figure}

 Our second task seeks to impose fairness constraints on benchmark fair classification datasets containing protected group information. These include:
 (1) \textit{COMPAS}, where the goal is to predict recidivism with \textit{gender} as the protected attribute \citep{Angwin+16}; (2) \textit{Communities \& Crime}, where the goal is to predict if a community in the US has a crime rate above the $70$th percentile \citep{uci}, and we consider communities having a black population above the $50$th percentile as protected \citep{Kearns+18}; 
(3) \textit{Law School}, where the task is to predict whether a law school student will pass the bar exam, with \textit{race} (black or other) as the protected attribute \citep{Wightman:1998}; (4) \textit{Adult}, where the task is to predict if a person's income exceeds 50K/year, with \textit{gender} as the protected attribute \citep{uci}; (5) \textit{Default}, where the task is to predict if a credit card user defaulted on a payment, with gender as the protected attribute \citep{uci}. While these are all binary-labelled datasets, because  we wish to evaluate performance separately on the individual protected groups, the number of threshold parameters  needed to learn a na\"{i}ve plug-in classifier like the one described in Section \ref{sec:naive} would grow exponentially with the number of groups, making the  algorithms proposed in this paper desirable even in these multi-group settings.
 
The specific optimization goal is to minimize the G-mean loss $\psi^\GM(\C) = 1\,-\,\big( \prod_{i} \frac{C_{ii}}{\sum_{j} C_{ij}} \big)^{1/n}$ subject to an equal opportunity constraint $\max_{a\in[m]}
\big|
	\frac{1}{\mu_{a1}}C^a_{11} \,-\, \frac{1}{\pi_1}C_{11}
\big| \leq 0.05$, requiring the true positive rates for different protected groups to be similar. The plots in Figure \ref{fig:gmean-eqopp} presents the results for the three proposed algorithms relevant to this problem, and show both the G-mean loss and the equal opportunity violation (more results in Appendix \ref{app:expts-additional}). 
In addition to the 0-1 plug-in, balanced plug-in and TFCO baselines, we include an unconstrained Frank-Wolfe (FW) method which seeks to minimize only the G-mean ignoring the constraint. All these methods incur large constrained violations.  The objectives are largely comparable for the three proposed methods, except on LawSchool, where SplitFW yields a higher loss. The constraint violations for our methods are comparable to or lower than TFCO, with TFCO failing to satisfy the constraint on Crimes as a result of over-fitting to the training set.



\begin{figure}
\centering
\begin{subfigure}[b]{0.48\linewidth}
\centering
\includegraphics[width=0.48\linewidth]{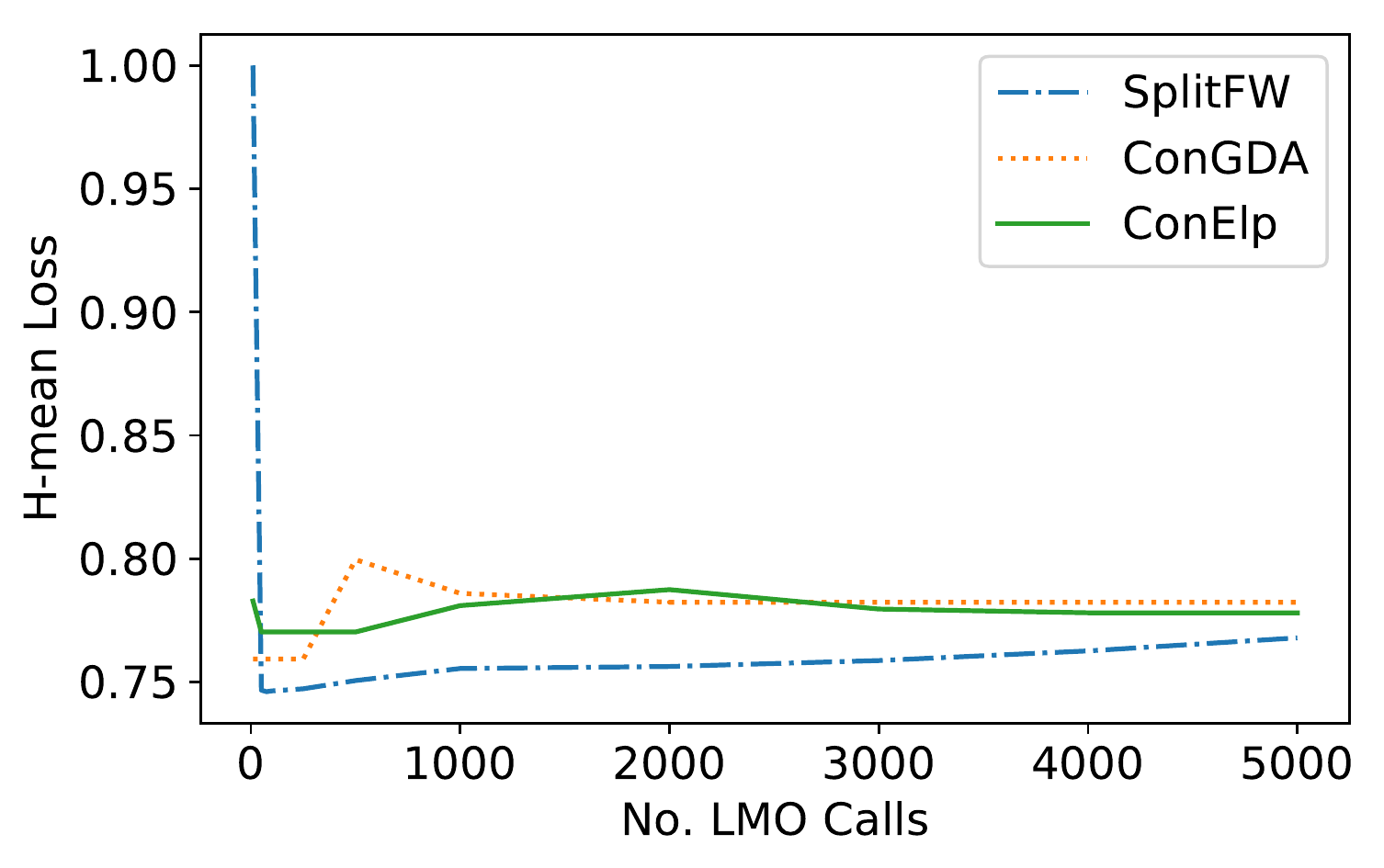}
\includegraphics[width=0.48\linewidth]{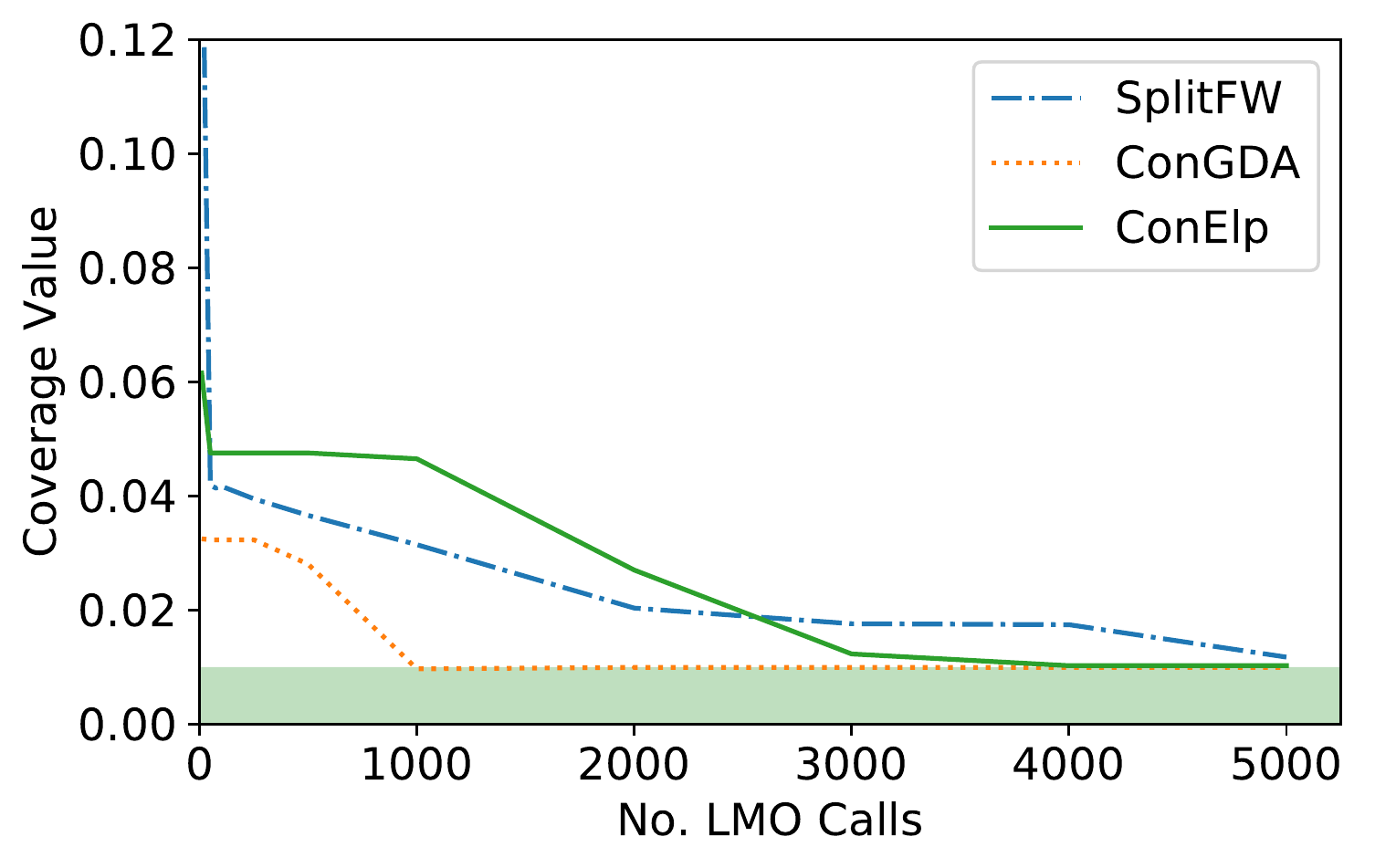}
\caption{Abalone (Train)}
\end{subfigure}
\begin{subfigure}[b]{0.48\linewidth}
\centering
\includegraphics[width=0.48\linewidth]{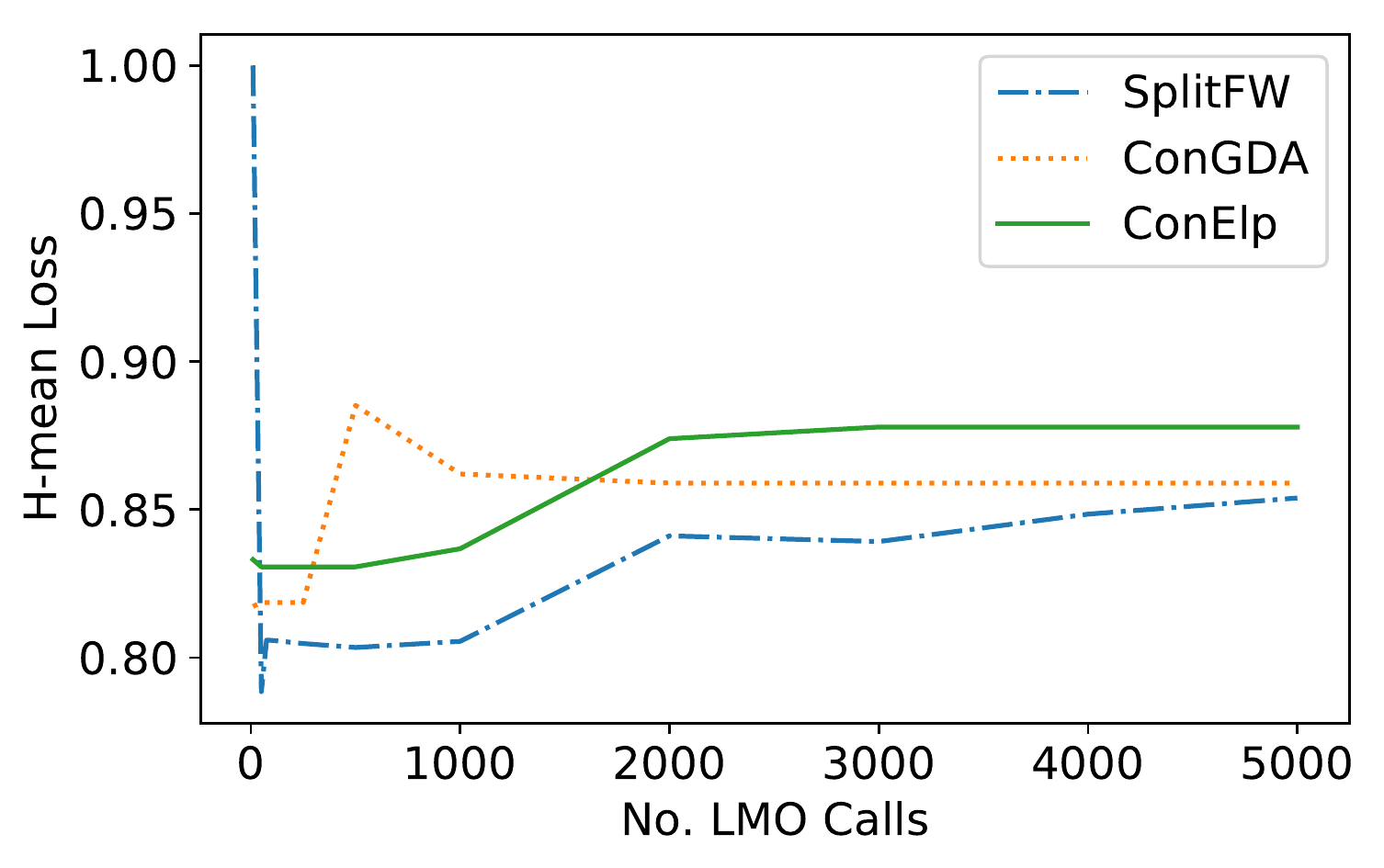}
\includegraphics[width=0.48\linewidth]{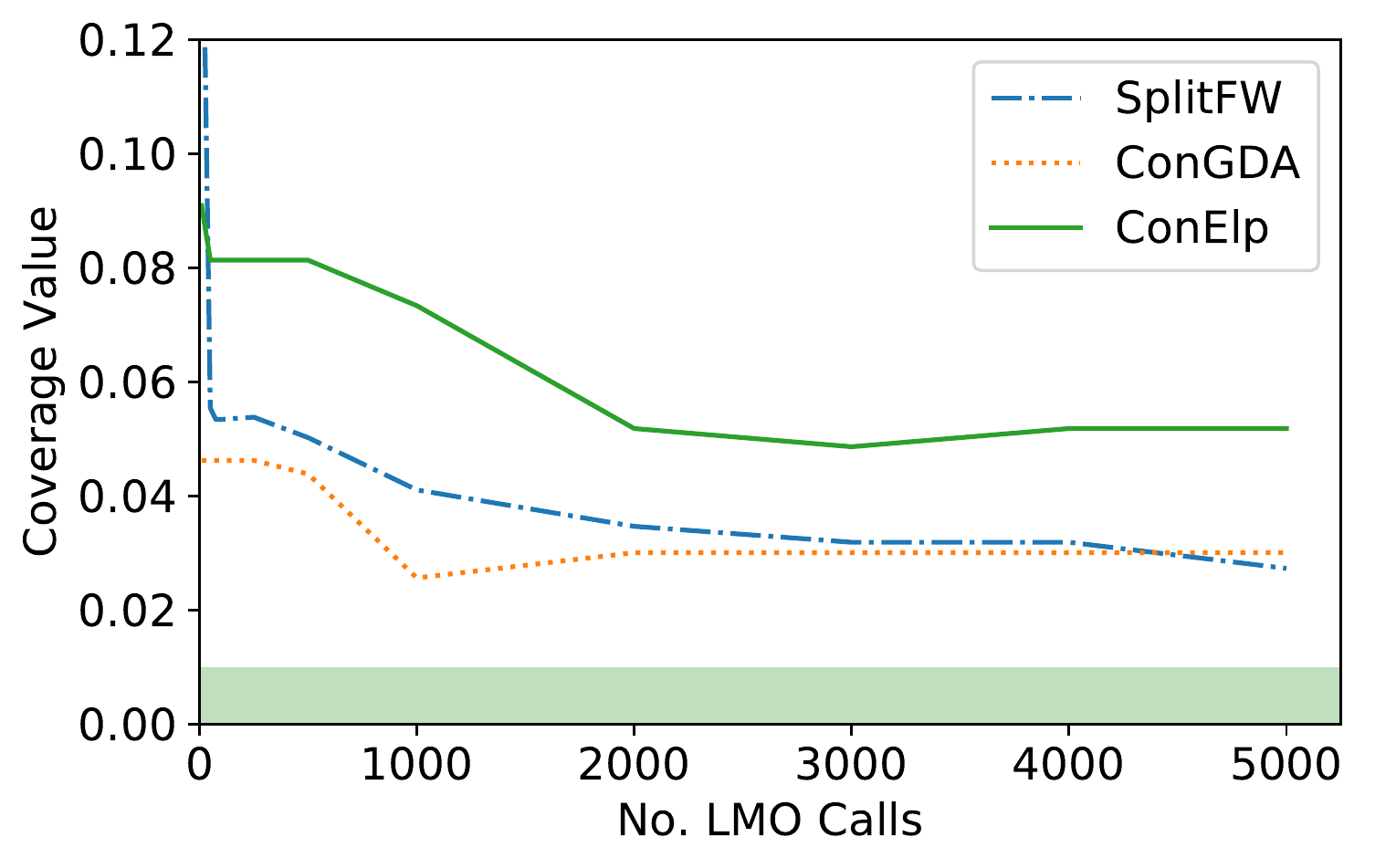}
\caption{Abalone (Test)}
\end{subfigure}
\begin{subfigure}[b]{0.48\linewidth}
\centering
\includegraphics[width=0.48\linewidth]{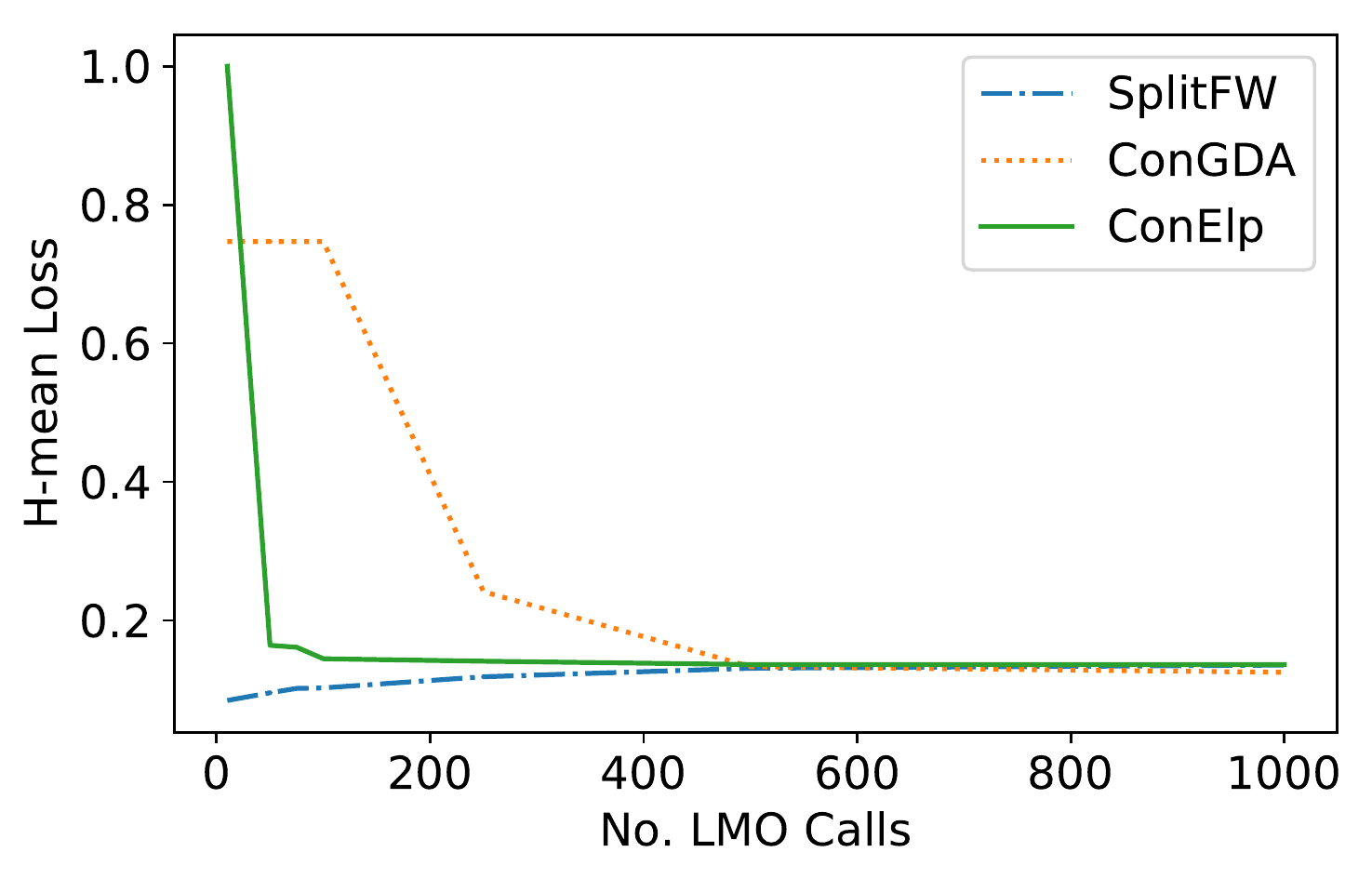}
\includegraphics[width=0.48\linewidth]{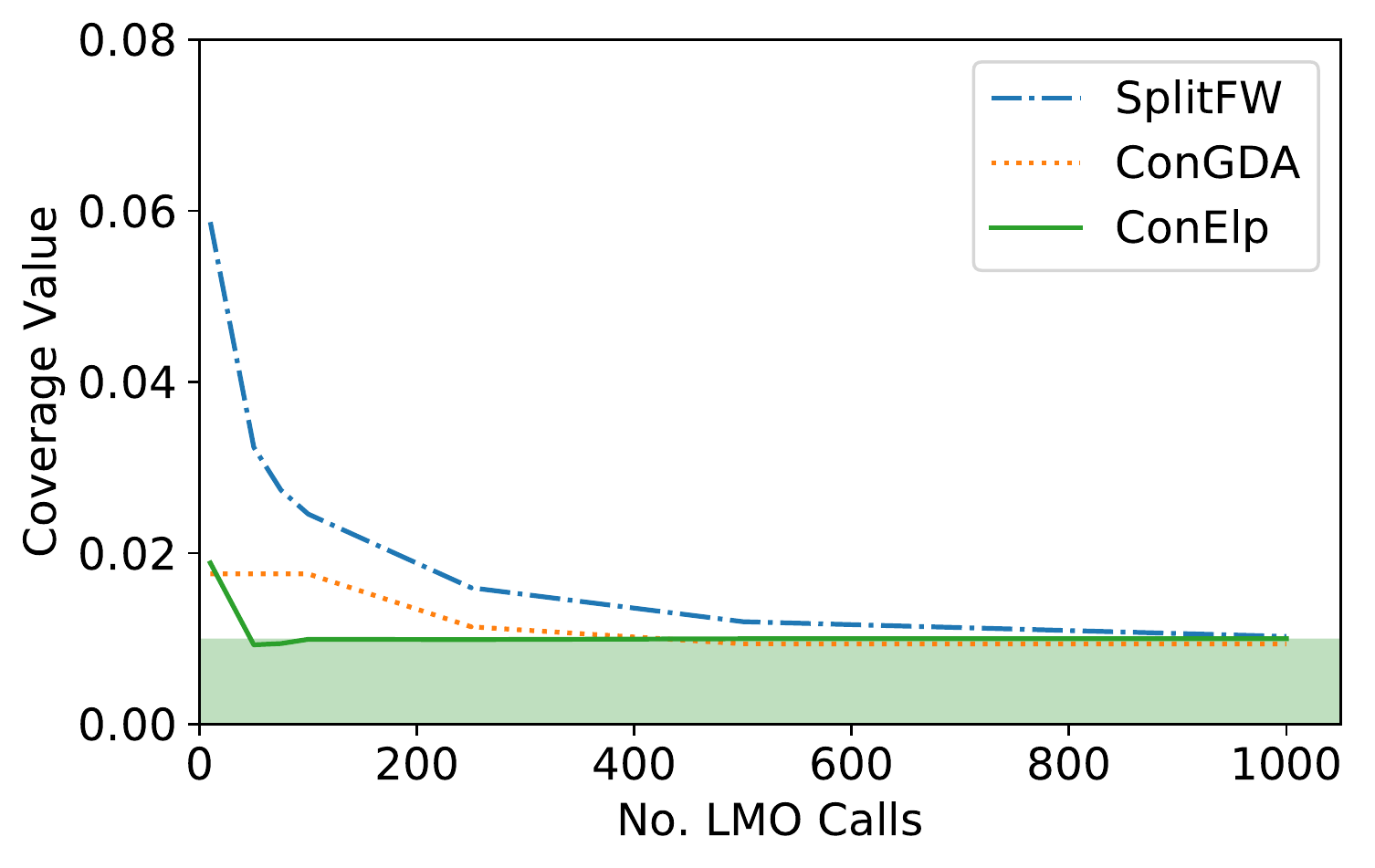}
\caption{PageBlock (Train)}
\end{subfigure}
\begin{subfigure}[b]{0.48\linewidth}
\centering
\includegraphics[width=0.48\linewidth]{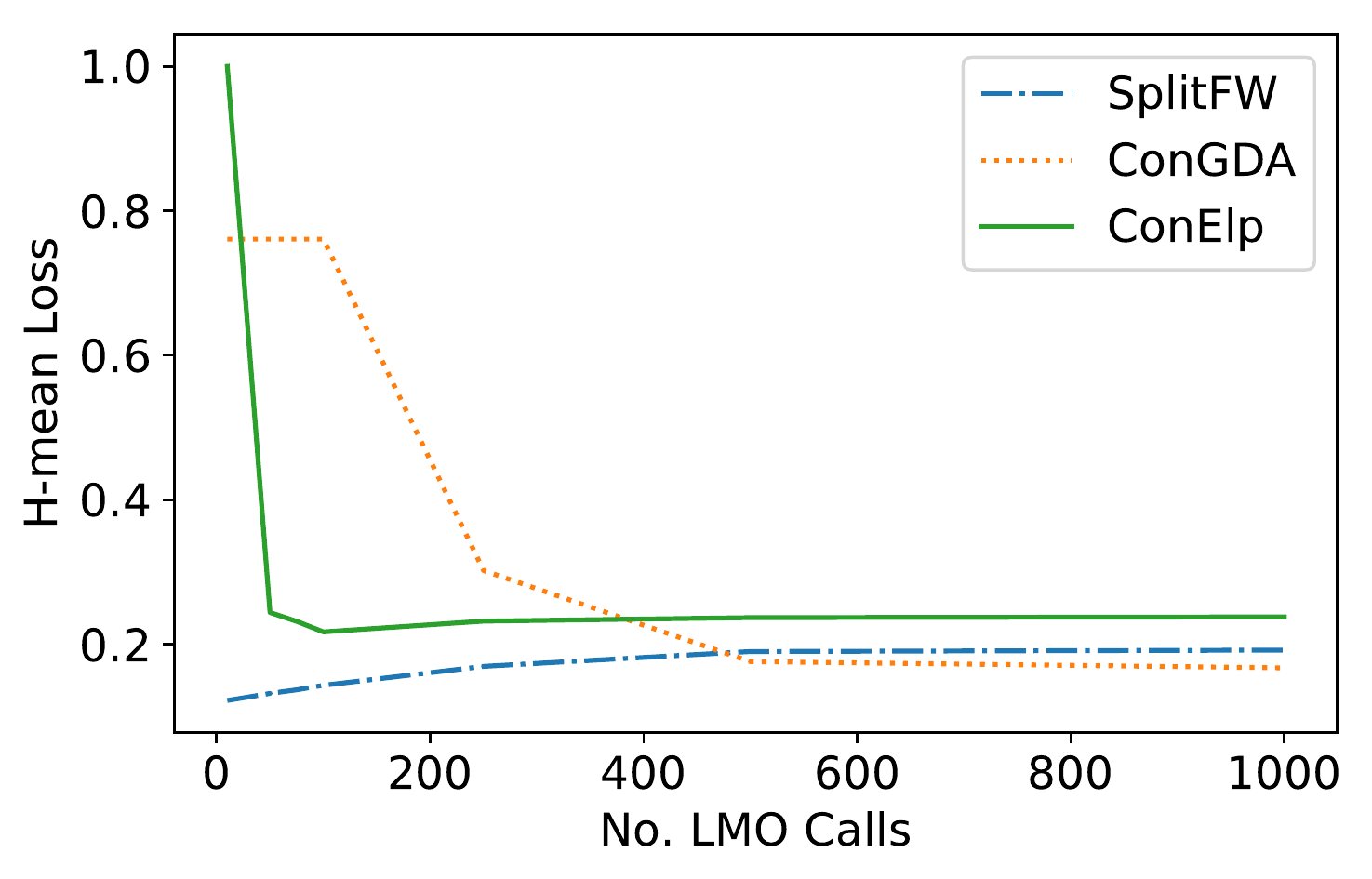}
\includegraphics[width=0.48\linewidth]{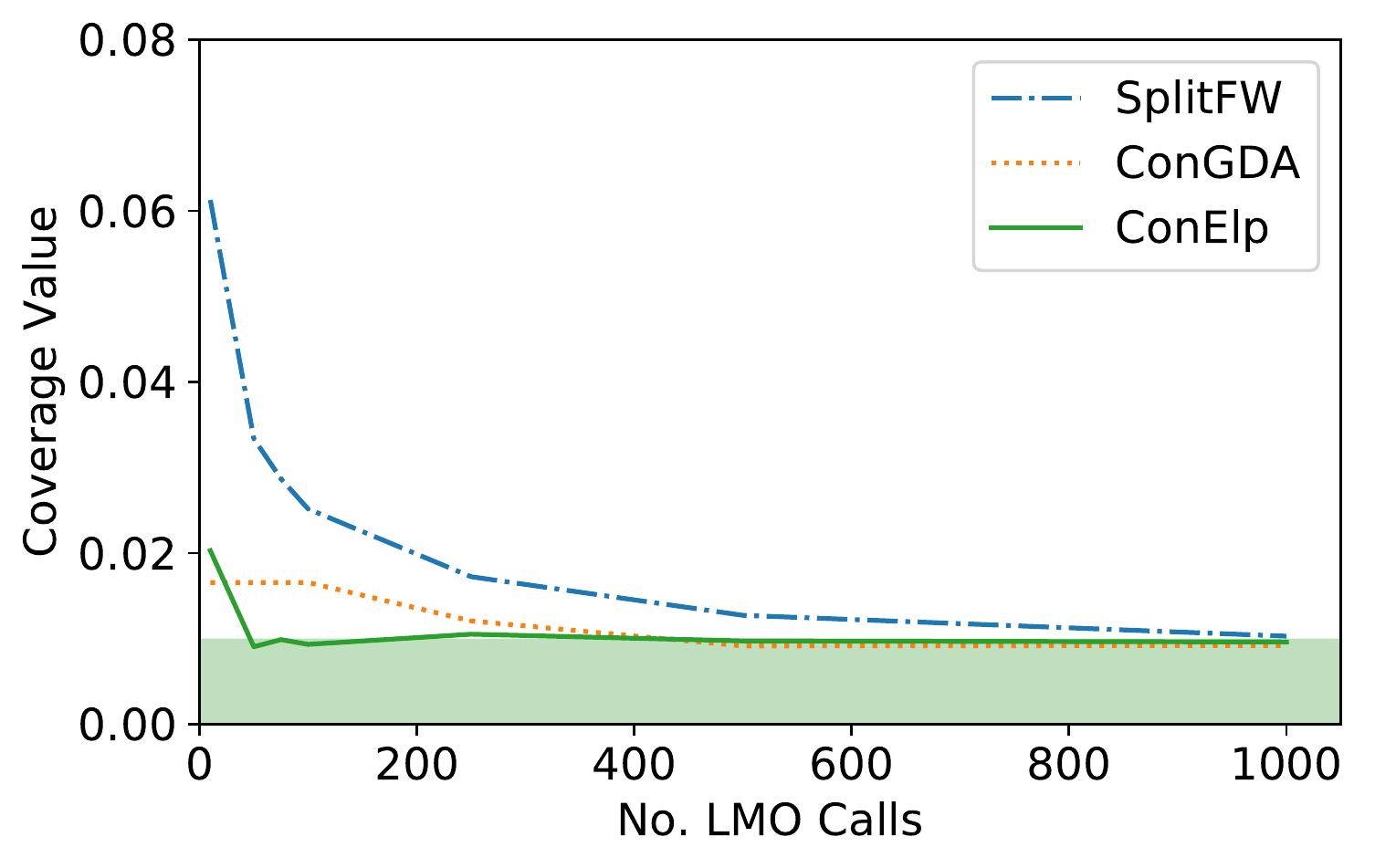}
\caption{PageBlock (Test)}
\end{subfigure}
\caption{Optimizing H-mean subject to coverage constraint: Comparison of performance of SplitFW, ConGDA and ConEllipsoid algorithms as a function of the number of LMO calls. \textit{Lower} H-mean values are \textit{better}. Green shaded region denotes coverage values that satisfy the constraints.
}
\label{fig:lmo-cons-abalone}
\vspace{-12pt}
\end{figure}

\subsection{Practical Guidance on Algorithm Choice}
\label{sec:expts-practical}
Of the three types of algorithms we have proposed for convex metrics $\psi$, the choice of the algorithm to use in an application  would depend on three factors: the smoothness of the metric, the presence of constraints, and the dimension of the problem. In Figure \ref{fig:min-max}, we consider the task of optimizing the min-max metric $\psi^{\text{MM}}(\C) = \max_{i} \left(1 - \frac{C_{ii}}{\sum_j C_{ij}}\right)$, a \emph{non-smooth} function of $\C$, and plot the performance of the three algorithms (with a plug-in based LMO) on the training and test sets as a function of the number of calls to the LMO. Since the objective for this unconstrained problem does not satisfy the smoothness property required by the Frank-Wolfe algorithm, as expected, it fairs poorly even with a large number of LMO calls. The ellipsoid algorithm is often seen to exhibit faster convergence than GDA on the training set, but there isn't a clear winner on the test set. In Figure \ref{fig:min-max}, we repeat the experiment with the smooth H-mean metric, and find that Frank-Wolfe algorithm does converge to a similar performance as the other methods, and is in fact the fastest to do so on the 12-class Abalone dataset. Moreover unlike the GDA, the Frank-Wolfe algorithm has no additional hyper-parameters to tune and is therefore an attractive option for smooth convex metrics.

On the other hand, when it comes to constrained problems, we find the (constrained) GDA algorithm to exhibit the fastest convergence. 
In this case, the (constrained) ellipsoid algorithm may take longer to converge to the optimal-feasible solution, particularly when the number of classes is high (as seen from the strong dependence on dimension its convergence rate has in Theorem \ref{thm:ellipsoid-con}). For example, this evident with the 12-class Abalone dataset in Figure \ref{fig:lmo-cons-abalone}(a)--(b), where we seek to maximize the H-mean loss subject  to the coverage constraint described in Section \ref{sec:experiments-cons}, and find the GDA algorithm to converge the fastest to a feasible classifier. In contrast, the (constrained) ellipsoid algorithm exhibits the fastest convergence on the smaller 5-class PageBlock dataset (although it yields slightly worse H-mean values than the other methods on the test set). See Appendix \ref{app:expts-additional} for additional experimental results.

Overall, we prescribe using 
the ellipsoid algorithm (or its constrained counterpart) for problems with a small number of classes, the FrankWolfe algorithm if the metric is smooth and there are no constraints, and the GDA algorithm (or its constrained counterpart) for all other scenarios.

\begin{table}[t]
    \centering
    \caption{
    Comparison of the plug-in and weighted logistic regression (WLR) based LMOs on the task of optimizing the (convex) H-mean loss. The number of iterations, i.e.\ calls to the LMO, is fixed at \emph{20}. \textit{Lower} values are \textit{better}. The results are averaged over 10 random train-test splits.}
    \label{tab:cost-sen}
    \begin{footnotesize}
    \begin{tabular}{ccccccc}
        \hline
        \multirow{2}{*}{\textbf{Data}} &
        \multicolumn{2}{c}{\textbf{FW}} &
        \multicolumn{2}{c}{\textbf{GDA}} &
        \multicolumn{2}{c}{\textbf{Ellipsoid}} 
        \\
        &
        \textbf{Plugin} & 
        \textbf{WLR} & \textbf{Plugin} & \textbf{WLR}  & \textbf{Plugin} & \textbf{WLR} \\
        \hline
        Aba  & $0.797 \pm 0.008$ & $\textbf{0.791} \pm \textbf{0.004}$ & $0.892 \pm 0.038$ & $\textbf{0.838} \pm \textbf{0.017}$ & $0.833 \pm 0.038$  & $0.833 \pm 0.038$   \\
        PgB & $0.13 \pm 0.038$  & $\textbf{0.084} \pm \textbf{0.015}$ & $0.129 \pm 0.034$ & $\textbf{0.083} \pm \textbf{0.018}$ & $0.105 \pm 0.019$ & $\textbf{0.080} \pm \textbf{0.017}$ \\
        MAC    & $\textbf{0.125} \pm \textbf{0.01}7$ & $0.245 \pm 0.027$ & $\textbf{0.124} \pm \textbf{0.015}$  & $0.206 \pm 0.028$ & $\textbf{0.122} \pm \textbf{0.015}$ & $0.247 \pm 0.027$  \\
        Sat    & $0.174 \pm 0.007$  & $\textbf{0.171} \pm \textbf{0.007}$ & $\textbf{0.173} \pm \textbf{0.008}$ & $0.176 \pm 0.006$ & $0.168 \pm 0.006$ & $\textbf{0.167} \pm \textbf{0.006}$ \\
        Cov & $0.468 \pm 0.001$ & $\textbf{0.453} \pm \textbf{0.001}$ & $0.488 \pm 0.001$ & $\textbf{0.453} \pm \textbf{0.001}$ & $0.463 \pm 0.001$ & $\textbf{0.447} \pm \textbf{0.001}$\\
        \hline
    \end{tabular}
        \end{footnotesize}

\end{table}

\subsection{Choice of LMO: Plug-in vs. Weighted Logistic Regression}
\label{sec:expts-LMO}
In previous experiments, we have seen that the proposed algorithms were less prone to over-fitting because of the use of a plug-in based LMO that post-fit a small number of parameters to a pre-trained model. We now compare the performance of these algorithms with an LMO that re-trains a classifier from scratch each time it is called. Specifically, we repeat the H-mean optimization task from Section \ref{sec:expts-unc}, with weighted (linear) logistic regression as the LMO. For a given (diagonal) loss matrix $\L$, this LMO learns a classifier by optimizing a weighted logistic loss, where the per-class weights are set to be the diagonal entries of $\L$. Note that such a weighted surrogate loss is calibrated for $\L$ \citep{TewariBa07}. Unlike the simple plug-in LMO, each call to weighted logistic regression can be expensive; hence it is important that we are able to limit the number of calls to it. 

In Table \ref{tab:cost-sen}, we present results comparing performance of the Frank-Wolfe, GDA and ellipsoid algorithms with the plug-in and weighted logistic regression LMOs when run for 20 iterations. Appendix \ref{app:expts-additional} contains results of these experiments when the algorithms are allowed 100 iterations. The performance with the two LMOs are comparable on Abalone and SatImage. On PageBlocks and CovType, weighted logistic regression has a moderate to significant advantage. Interestingly, on MACHO, the plug-in based LMO, despite learning from a less flexible hypothesis class (post-hoc adjustments to a fixed model), is substantially better. This is because weighted logistic regression over-fits to the training set in this case. 

Overall, we find that an LMO such as weighted logistic regression,
while being computationally expensive, does sometimes provide metric gains over a less-flexible plug-in type approach. However, this method can be prone to over-fitting because of its added flexibility.

\subsection{Case Study: Image Classification with Imbalance and Label Noise}
\label{sec:expts-cifar}
As case studies, we demonstrate two natural workflows our algorithms in  (i) tackling label imbalance in CIFAR-55 and (ii) mitigating label noise in a noisy version of CIFAR-10.

\subsubsection{Class Imbalance with Large Number of Classes} 
One of the undesirable effects of learning with a class-imbalanced dataset is that the 
learned classifier tends to over-predict classes that are more prevalent and under-predict classes that are rare. We consider two approaches to avoid this problem:  minimizing a loss such as the H-mean that emphasizes equal performance across all classes, and  constraining the proportions of predictions the classifier makes for each class to match the true prevalence of the class.

For this experiment, we use the  CIFAR-100 dataset \citep{Krizhevsky09learningmultiple}, which contains images labelled with one of 100 classes. 
We create an imbalanced 55-class dataset by merging 50 classes in CIFAR-100 into 5 ``super-classes'' (see Appendix \ref{app:case-studies} for details), and leaving the rest of the classes untouched.
In the resulting class distribution, 5 of the classes are 10 times more prevalent than the remaining 50. 
All our methods use a plug-in based LMO which uses a pre-trained class probability estimator.
We train a ResNet-50 model for the class probability estimator, using SGD to minimize the standard cross-entropy loss. We use a batch size of 64, a base learning rate of 0.01 
(with a warm-up cosine schedule), and a momentum of 0.9. We apply a weight decay of 0.01 and train for 39 epochs.


In Table \ref{tab:cifar55}, we analyze the  performance of a ResNet-50 model trained with the standard cross-entropy loss (Plugin [0-1]), where the class that receives the highest estimated probability is predicted as the output label, and report its 0-1 loss, its H-mean loss, and the deviation of its class prediction rates from the prior probabilities, i.e.\ its maximum coverage violation: $\max_{i\in[n]}|\sum_{j}C_{ji} \,-\, \pi_i|$. 
We find that na\"{i}vely optimizing for the 0-1 loss yields a high coverage violation. Moreover, it yields high accuracies on the 5 super-classes, at the cost of a much lower accuracy on the 50 minority classes, resulting in a high H-mean loss. To emphasize better performance on the minority classes, we train classifiers to minimize the H-mean loss (FW [H-mean]), and minimize the 0-1 loss subject to the maximum coverage violation being within a tolerance of 0.01 (SplitFW [0-1]). We also consider a combination of both, i.e.\ minimizing the H-mean loss subject to the maximum coverage violation being within 0.01 (SplitFW [H-mean]). It can be seen that all three algorithms do only slightly worse than the Plugin [$\zo$] baseline in terms of 0-1 loss, but do significantly better in terms of both the H-mean loss and the coverage violation.

\begin{table}[t]
    \centering
    \caption{Results on CIFAR-55 imbalanced dataset. The train and test sets are imbalanced, with 5 classes being 10 times larger in size than the remaining 50 classes. We report the 0-1 loss, the H-mean loss, and the coverage violation $\max_{i\in[n]}|\sum_{j}C_{ji} \,-\, \pi_i| - 0.01$. \emph{Lower} values are \emph{better}.}
    \label{tab:cifar55}
    \begin{footnotesize}
    \begin{tabular}{lcccccc}
     \hline
          \multirow{2}{*}{\textbf{Method}} &
          \multicolumn{3}{c}{\textbf{Train (Imbalanced)}} &
          \multicolumn{3}{c}{\textbf{Test (Imbalanced)}} \\
                  &
          \textbf{0-1} &
          \textbf{H-mean} &
          \textbf{Violation} &
          \textbf{0-1} &
          \textbf{H-mean} &
          \textbf{Violation} 
          \\
        \hline
        Plugin [$\zo$]     & \textbf{0.278} & 0.457 & 0.030 & 0.437 & 0.709 & 0.045 \\
        \hline
        FW [H-mean]        & 0.307 & \textbf{0.323} & 0.026 & 0.481 & \textbf{0.564} & 0.029 \\
        SplitFW [$\zo$]    & 0.279 & 0.391 & \textbf{0.000} & \textbf{0.436} & 0.636 & 0.007 \\
        SplitFW [H-mean]   & 0.279 & 0.342 & \textbf{0.000} & 0.448 & 0.595 & \textbf{0.000} \\
        \hline
    \end{tabular}
    \end{footnotesize}
    \vspace{-5pt}
\end{table}

\subsubsection{Class Imbalance with Label Noise} Our next experiment demonstrates how label noise can be mitigating by imposing coverage constraints on the classifier. We use a class-imbalanced version of the CIFAR-10 dataset \citep{Krizhevsky09learningmultiple}, where we sub-sample images from classes 1 to 5 by a factor of 10, with the resulting class priors are given by $\pi_y = \frac{2}{110}$ when $y\in\{1,2,3,4,5\}$ and $\pi_y=\frac{2}{11}$ otherwise. Our algorithms assume the knowledge of $\pi$. 
In addition to class imbalance, 
very often one has to work with noisy training labels to building a classifier that performs well on uncorrupted test data. We simulate this scenario by adding a label noise corruption to the training data, which is chosen such that classes 1 to 9 are left undisturbed, and the labels of images from class 10 are randomly chosen from 1 to 10. This effectively mimics a crowd-sourced label collection with 9 easy labels, and one difficult or incomprehensible label. Our algorithms do \emph{not} have knowledge of this corruption.

Equipped with the knowledge of the class priors $\pi$, we propose constraining the proportion of predictions made for each class to match the priors $\pi$. While this is not necessarily equivalent to training the classifier with uncorrupted labels, we expect that these additional coverage constraints will dampen the effect of the noisy labels. As with the previous experiment, we evaluate the classifier on two criteria: (i) how well it performs on the (balanced) H-mean metric on the test data, and (ii) how well the class prediction rates match the priors $\pi$ on the test data. Our framework can be applied to this problem by minimizing the H-mean error on the corrupted training dataset, subject to a coverage constraint on the classifier forcing it to predict classes at a rate that matches $\pi$.

In addition to a  ResNet model baseline that optimizes the cross-entropy loss (Plugin [$\zo$]), we include the state-of-the-art method of \citet{patrini2017making}, which uses the predictions from Plugin [$\zo$] on the training data to compute an estimate of the label noise transition matrix, and re-trains the classifier with a (forward) correction computation applied to the loss (Noise Correction [Estimate]). For completeness, we also include an \emph{idealized} version of this method, where the ``exact'' noise transition matrix is used for the forward correction (Noise Correction [Exact]). While this baseline is unrealistic, it provides us with an estimate best possible 0-1 loss achievable for this problem.

We provide the result of this experiment in Table~\ref{tab:cifar10}, 
where FW [H-mean] corresponds to a classifier that minimizes the H-mean loss on the corrupted training dataset, and
SplitFW [0-1] (resp.\ SplitFW [H-mean]), correspond to a classifier that minimizes the 0-1 loss (resp.\ H-mean loss) on the corrupted training dataset, while enforcing the coverage constraint to a tolerance of $0.01$. All three methods use the same underlying class probability model as Plugin [$\zo$]. 
%
%
It is seen that only SplitFW [0-1] and SplitFW [H-mean] achieve low coverage violations on the test set, and are still only moderately worse than the idealized Noise Correction [Exact] method in terms of their respective objective metrics. The FW [H-mean] algorithm achieves the best H-mean on clean test data despite being trained on the corrupted training labels.

\begin{table}[t]
    \centering
    \caption{Results on imbalanced CIFAR-10 dataset with label noise. The train set is imbalanced and has label noise, while the test set is imbalanced but clean. We report the 0-1 loss, the H-mean loss, and the  coverage violation $\max_{i\in[n]}|\sum_{j}C_{ji} \,-\, \pi_i| - 0.01$. \emph{Lower} values are \emph{better}.}
    \label{tab:cifar10}
    \begin{footnotesize}
    \begin{tabular}{lcccccc}
     \hline
          \multirow{2}{*}{\textbf{Method}} &
          \multicolumn{3}{c}{\textbf{Train (Flipped)}} &
          \multicolumn{3}{c}{\textbf{Test (Imbalanced)}} \\
                  &
          \textbf{0-1} &
          \textbf{H-mean} &
          \textbf{Violation} &
          \textbf{0-1} &
          \textbf{H-mean} &
          \textbf{Violation} 
          \\
        \hline
        Plugin [$\zo$]    
        & 0.266 & 0.896 & 0.170 & 0.332 & 0.899  & 0.169 \\
        Noise Correction [Estimate]
        & \textbf{0.174} & \textbf{0.370}  & 0.054 & \textbf{0.179}  & {0.430} & 0.055 \\
        \hline
        FW [H-mean]      
        & 0.348 & 0.481 & 0.072 & 0.329 & \textbf{0.396} & 0.076 \\
        SplitFW [$\zo$]   
        & 0.272 & 0.609 & \textbf{0.001} & 0.196 & 0.610  & 0.004 \\
        SplitFW [H-mean] 
        & 0.292 & 0.523 & 0.003 & 0.221 & 0.471 & \textbf{0.003} \\
        \hline
        Noise Correction [Exact]
        & 0.196 & 0.394 & 0.022 & {0.151}  & 0.358 & 0.018 \\
        \hline
    \end{tabular}
    \end{footnotesize}
\vspace{-7pt}
\end{table}

\section{Conclusions}
\label{sec:conclusions}
We have developed a  framework for designing  consistent and efficient algorithms for multiclass performance metrics and constraints that are general functions of the confusion matrix. As instantiations of this framework, we provided four algorithms for optimizing unconstrained metrics, and four analogous counterparts for solving constrained learning problems. In each case, we have shown convergence guarantees for the algorithms under different assumptions on the performance metrics and constraints. 

Our key idea was to reduce the complex learning problem into a sequence of linear minimization problems, for which we recommended an efficient plug-in based approach that applies a post-hoc transformations to a pre-trained class probability model. The results of these linear minimization problems are then combined to return a final classifier. One of the main challenges in instantiating this  idea was to identify optimization algorithms for different problem settings that only required access to a linear minimization oracle (LMO).

We also presented extensive experiments on a variety of multiclass and fairness datasets and demonstrated that the proposed algorithms (despite being limited to performing adjustments to a fixed model) are competitive or better than the state-of-the-art TFCO approach \citep{Cotter+19b} which works with a more flexible hypothesis class. We additionally provided precise guidance for which of the proposed algorithms are best suited for a given multiclass problem, and highlighted scenarios where one might want to use a more expensive LMO that trains a new classifier from scratch at each iteration.

Over the years, the conference versions of this paper have attracted several follow-up works, which have adapted our ideas to 
optimizing multiclass extensions of the F-measure \citep{Weiwei+16},
to balancing accuracy with fairness objectives \citep{Alabi+18}, 
to eliciting multi-class performance metrics \citep{hiranandani2019multiclass},
to training classifiers to optimize more general multi-output classification metrics \citep{wang2019consistent},
to imposing fairness constraints with overlapping protected groups \citep{yang2020fairness}, 
and to optimizing black-box evaluation metrics \citet{hiranandani2021optimizing}.

A number of follow-up directions arise from the proposed framework. First, it would interesting to derive lower bounds on the number of calls to the LMO needed under different assumptions on the performance metrics and constraints. 

Second, while the optimality (and feasibility) gap for most of our proposed algorithms depend linearly on the LMO approximation errors $\rho$ and $\rho'$, the split Frank-Wolfe method (Algorithm \ref{alg:FW-con} alone has a square-root dependence on these parameters. Are these dependencies on the LMO errors optimal or simply artifacts of the analysis?

Third, for algorithms where the  convergence rates have a linear (or quadratic) dependence on the dimension of the problem $d$ (which is typically the same order as the number of classes), how does one extend our framework to handle problems with an extremely large number classes (perhaps under additional structural assumptions on the classes, akin to \citet{ramaswamy2015convex}) and problems with an extremely large number of constraints \citep{narasimhan2020approximate}?

Fourth, our experiments in Section \ref{sec:expts-LMO} show that in some applications, using a flexible LMO that trains a classifier from scratch can yield significant gains over a plug-in based LMO, but this however comes at the cost of added computational time. Can one devise an intermediate approach, where each call to the LMO only needs to run a constant number of optimization steps on a surrogate loss (akin to the TFCO baseline of \cite{Cotter+19b}), while still guaranteeing that the outer algorithm provably convergences to the optimal (feasible) classifier? 

Finally,  except for the bisection method, all the  algorithms we propose rely on the use of a randomized classifier. In some applications, deploying a randomized classifier can be undesirable for  ethical reasons or because of the engineering difficulties it poses. In these scenarios, one could approximate the learned randomized classifier with a deterministic classifier using, for example, the approach of \citet{Cotter+19_stochastic}. Understanding the loss in performance and constraint satisfaction as a result of such de-randomization procedures is an interesting direction for future work.

\section*{Acknowledgements}
The authors thank Aadirupa Saha for providing helpful inputs and for running experiments for a conference version of this paper \citep{Narasimhan+15}. HG thanks the Robert Bosch Centre for Data Science and Artificial Intelligence for their support. HN thanks Pavlos Protopapas, IACS, Harvard University, for providing us access to the MACHO celestial object detection dataset.
\vskip 0.2in
\bibliography{jmlr-20-paper,generalized-rates}

\begin{thebibliography}{93}
\providecommand{\natexlab}[1]{#1}
\providecommand{\url}[1]{\texttt{#1}}
\expandafter\ifx\csname urlstyle\endcsname\relax
  \providecommand{\doi}[1]{doi: #1}\else
  \providecommand{\doi}{doi: \begingroup \urlstyle{rm}\Url}\fi

\bibitem[Abernethy and Wang(2017)]{AbernethJ17}
J.~D. Abernethy and J.-K. Wang.
\newblock On {Frank-Wolfe} and equilibrium computation.
\newblock In \emph{NIPS}, 2017.

\bibitem[Agarwal et~al.(2018)Agarwal, Beygelzimer, Dudik, Langford, and
  Wallach]{Agarwal+18}
A.~Agarwal, A.~Beygelzimer, M.~Dudik, J.~Langford, and H.~Wallach.
\newblock A reductions approach to fair classification.
\newblock In \emph{ICML}, 2018.

\bibitem[Alabi et~al.(2018)Alabi, Immorlica, and Kalai]{Alabi+18}
D.~Alabi, N.~Immorlica, and A.~Kalai.
\newblock Unleashing linear optimizers for group-fair learning and
  optimization.
\newblock In \emph{COLT}, 2018.

\bibitem[Alcock et~al.(2000)Alcock, Allsman, Alves, Axelrod, Becker, Bennett,
  Cook, Dalal, Drake, Freeman, et~al.]{alcock2000macho}
C.~Alcock, R.~A. Allsman, D.~R. Alves, T.~S. Axelrod, A.~C. Becker, D.~P.
  Bennett, K.~H. Cook, N.~Dalal, A.~J. Drake, K.~C. Freeman, et~al.
\newblock The {MACHO} project: {M}icrolensing results from 5.7 years of large
  magellanic cloud observations.
\newblock \emph{The Astrophysical Journal}, 542\penalty0 (1):\penalty0 281,
  2000.

\bibitem[Angwin et~al.(2016)Angwin, Larson, Mattu, and Kirchner]{Angwin+16}
J.~Angwin, J.~Larson, S.~Mattu, and L.~Kirchner.
\newblock Machine bias.
\newblock \emph{ProPublica, May}, 23, 2016.

\bibitem[Bartlett et~al.(2006)Bartlett, Jordan, and McAuliffe]{Bartlett+06}
P.~L. Bartlett, M.~I. Jordan, and J.~D. McAuliffe.
\newblock Convexity, classification, and risk bounds.
\newblock \emph{Journal of the American Statistical Association}, 101\penalty0
  (473):\penalty0 138--156, 2006.

\bibitem[Blumer et~al.(1989)Blumer, Ehrenfeucht, Haussler, and
  Warmuth]{Blumer+89}
A.~Blumer, A.~Ehrenfeucht, D.~Haussler, and M.~Warmuth.
\newblock Learnability and the {Vapnik-Chervonenkis} dimension.
\newblock \emph{Journal of the {ACM}}, 36:\penalty0 929--965, 1989.

\bibitem[Boyd and Vandenberghe(2004)]{BoydVan04}
S.~Boyd and L.~Vandenberghe.
\newblock \emph{Convex Optimization}.
\newblock Cambridge University Press, 2004.

\bibitem[Bubeck(2015)]{Bubeck15}
S.~Bubeck.
\newblock Convex optimization: Algorithms and complexity.
\newblock \emph{Foundations and Trends in Machine Learning}, 8:\penalty0
  231--358, 2015.

\bibitem[Calauz{\`e}nes et~al.(2012)Calauz{\`e}nes, Usunier, and
  Gallinari]{Calauzenes+12}
C.~Calauz{\`e}nes, N.~Usunier, and P.~Gallinari.
\newblock On the (non-)existence of convex, calibrated surrogate losses for
  ranking.
\newblock In \emph{NIPS}, 2012.

\bibitem[Celis et~al.(2019)Celis, Huang, Keswani, and Vishnoi]{Celis+19}
L.~E. Celis, L.~Huang, V.~Keswani, and N.~K. Vishnoi.
\newblock Classification with fairness constraints: A meta-algorithm with
  provable guarantees.
\newblock In \emph{FAT}, 2019.

\bibitem[Chen et~al.(2017)Chen, Lucier, Singer, and Syrgkanis]{chen2017robust}
R.~Chen, B.~Lucier, Y.~Singer, and V.~Syrgkanis.
\newblock Robust optimization for non-convex objectives.
\newblock \emph{arXiv preprint arXiv:1707.01047}, 2017.

\bibitem[Cormier et~al.(2016)Cormier, Fard, Canini, and Gupta]{Cormier+16}
Q.~Cormier, M.M. Fard, K.~Canini, and M.~Gupta.
\newblock Launch and iterate: Reducing prediction churn.
\newblock In \emph{NIPS}, 2016.

\bibitem[Cotter et~al.(2019{\natexlab{a}})Cotter, Jiang, and
  Sridharan]{Cotter+19}
A.~Cotter, H.~Jiang, and K.~Sridharan.
\newblock Two-player games for efficient non-convex constrained optimization.
\newblock In \emph{ALT}, 2019{\natexlab{a}}.

\bibitem[Cotter et~al.(2019{\natexlab{b}})Cotter, Jiang, Wang, Narayan, Gupta,
  You, and Sridharan]{Cotter+19b}
A.~Cotter, H.~Jiang, S.~Wang, T.~Narayan, M.~Gupta, S.~You, and K.~Sridharan.
\newblock Optimization with non-differentiable constraints with applications to
  fairness, recall, churn, and other goals.
\newblock \emph{Journal of Machine Learning Research}, 2019{\natexlab{b}}.

\bibitem[Cotter et~al.({2019})Cotter, Narasimhan, and
  Gupta]{Cotter+19_stochastic}
A.~Cotter, H.~Narasimhan, and M.~Gupta.
\newblock On making stochastic classifiers deterministic.
\newblock In \emph{{NeurIPS}}, {2019}.

\bibitem[Danskin(2012)]{danskin2012theory}
J.~M. Danskin.
\newblock \emph{The theory of max-min and its application to weapons allocation
  problems}, volume~5.
\newblock Springer Science \& Business Media, 2012.

\bibitem[Dembczynski et~al.(2011)Dembczynski, Waegeman, Cheng, and
  H{\"u}llermeier]{Dembczynski+11}
K.~Dembczynski, W.~Waegeman, W.~Cheng, and E.~H{\"u}llermeier.
\newblock An exact algorithm for {F}-measure maximization.
\newblock In \emph{NIPS}, 2011.

\bibitem[Dembczynski et~al.(2013)Dembczynski, Jachnik, Kotlowski, Waegeman, and
  Hullermeier]{Dembczynski+13}
K.~Dembczynski, A.~Jachnik, W.~Kotlowski, W.~Waegeman, and E.~Hullermeier.
\newblock Optimizing the {F}-measure in multi-label classification: {P}lug-in
  rule approach versus structured loss minimization.
\newblock In \emph{ICML}, 2013.

\bibitem[Dembczy{\'n}ski et~al.(2017)Dembczy{\'n}ski, Kot{\l}owski, Koyejo, and
  Natarajan]{dembczynski2017consistency}
K.~Dembczy{\'n}ski, W.~Kot{\l}owski, O.~Koyejo, and N.~Natarajan.
\newblock Consistency analysis for binary classification revisited.
\newblock In \emph{ICML}, 2017.

\bibitem[Donini et~al.(2018)Donini, Oneto, Ben-David, Shawe-Taylor, and
  Pontil]{Donini+18}
M.~Donini, L.~Oneto, S.~Ben-David, J.S. Shawe-Taylor, and M.~Pontil.
\newblock Empirical risk minimization under fairness constraints.
\newblock In \emph{NeurIPS}, 2018.

\bibitem[Duchi et~al.(2010)Duchi, Mackey, and Jordan]{Duchi+10}
J.~Duchi, L.~Mackey, and M.~Jordan.
\newblock On the consistency of ranking algorithms.
\newblock In \emph{ICML}, 2010.

\bibitem[Eban et~al.(2017)Eban, Schain, Mackey, Gordon, Rifkin, and
  Elidan]{Eban+17}
E.~Eban, M.~Schain, A.~Mackey, A.~Gordon, R.~Rifkin, and G.~Elidan.
\newblock Scalable learning of non-decomposable objectives.
\newblock In \emph{AISTATS}, 2017.

\bibitem[Elkan(2001)]{Elkan01}
C.~Elkan.
\newblock The foundations of cost-sensitive learning.
\newblock In \emph{IJCAI}, 2001.

\bibitem[Esuli and Sebastiani(2015)]{Fab1}
A.~Esuli and F.~Sebastiani.
\newblock Optimizing text quantifiers for multivariate loss functions.
\newblock \emph{ACM Transactions on Knowledge Discovery and Data}, 9\penalty0
  (4):\penalty0 Article 27, 2015.

\bibitem[Finocchiaro et~al.(2020)Finocchiaro, Frongillo, and
  Waggoner]{Finocchiaro+20}
J.~Finocchiaro, R.~M. Frongillo, and B.~Waggoner.
\newblock Embedding dimension of polyhedral losses.
\newblock In \emph{COLT}, 2020.

\bibitem[Frank and Asuncion(2010)]{uci}
A.~Frank and A.~Asuncion.
\newblock {UCI} machine learning repository. {URL}:
  \url{http://archive.ics.uci.edu/ml}, 2010.

\bibitem[Frank and Wolfe(1956)]{FrankWolfe56}
M.~Frank and P.~Wolfe.
\newblock An algorithm for quadratic programming.
\newblock \emph{Naval Research Logistics Quarterly}, 3\penalty0 (1-2):\penalty0
  95--110, 1956.

\bibitem[Gao and Sebastiani(2015)]{Fab2}
W.~Gao and F.~Sebastiani.
\newblock Tweet sentiment: From classification to quantification.
\newblock In \emph{ASONAM}, 2015.

\bibitem[Gao and Zhou(2011)]{GaoZh11}
W.~Gao and Z.-H. Zhou.
\newblock On the consistency of multi-label learning.
\newblock In \emph{COLT}, 2011.

\bibitem[Gidel et~al.(2018)Gidel, Pedregosa, and Lacoste-Julien]{Gidel2018}
G.~Gidel, F.~Pedregosa, and S.~Lacoste-Julien.
\newblock Frank-wolfe splitting via augmented lagrangian method.
\newblock In \emph{AISTATS}, 2018.

\bibitem[Goh et~al.(2016)Goh, Cotter, Gupta, and Friedlander]{Goh+16}
G.~Goh, A.~Cotter, M.~Gupta, and M.P. Friedlander.
\newblock Satisfying real-world goals with dataset constraints.
\newblock In \emph{NIPS}, 2016.

\bibitem[Hardt et~al.(2016)Hardt, Price, and Srebro]{Hardt+16}
M.~Hardt, E.~Price, and N.~Srebro.
\newblock Equality of opportunity in supervised learning.
\newblock In \emph{NIPS}, 2016.

\bibitem[Hiranandani et~al.(2019)Hiranandani, Boodaghians, Mehta, and
  Koyejo]{hiranandani2019multiclass}
G.~Hiranandani, S.~Boodaghians, R.~Mehta, and O.~Koyejo.
\newblock Multiclass performance metric elicitation.
\newblock \emph{NeurIPS}, 2019.

\bibitem[Hiranandani et~al.(2021)Hiranandani, Mathur, Narasimhan, Fard, and
  Koyejo]{hiranandani2021optimizing}
G.~Hiranandani, J.~Mathur, H.~Narasimhan, M.~M. Fard, and O.~Koyejo.
\newblock Optimizing black-box metrics with iterative example weighting.
\newblock In \emph{ICML}, 2021.

\bibitem[Jaggi(2013)]{Jaggi13}
M.~Jaggi.
\newblock Revisiting {F}rank-{W}olfe: {P}rojection-free sparse convex
  optimization.
\newblock In \emph{ICML}, 2013.

\bibitem[Joachims(2005)]{Joachims05}
T.~Joachims.
\newblock A support vector method for multivariate performance measures.
\newblock In \emph{ICML}, 2005.

\bibitem[Kar et~al.(2014)Kar, Narasimhan, and Jain]{kar2014online}
P.~Kar, H.~Narasimhan, and P.~Jain.
\newblock Online and stochastic gradient methods for non-decomposable loss
  functions.
\newblock In \emph{NeurIPS}, 2014.

\bibitem[Kar et~al.(2016)Kar, Li, Narasimhan, Chawla, and Sebastiani]{Kar+16}
P.~Kar, S.~Li, H.~Narasimhan, S.~Chawla, and F.~Sebastiani.
\newblock Online optimization methods for the quantification problem.
\newblock In \emph{KDD}, 2016.

\bibitem[Kearns et~al.(2018)Kearns, Neel, Roth, and Wu]{Kearns+18}
M.~Kearns, S.~Neel, A.~Roth, and Z.S. Wu.
\newblock Preventing fairness gerrymandering: Auditing and learning for
  subgroup fairness.
\newblock In \emph{ICML}, 2018.

\bibitem[Kennedy et~al.(2009)Kennedy, Namee, and Delany]{Kennedy+09}
K.~Kennedy, B.~M. Namee, and S.~J. Delany.
\newblock Learning without default: A study of one-class classification and the
  low-default portfolio problem.
\newblock In \emph{ICAICS}, 2009.

\bibitem[Kim et~al.(2011)Kim, Protopapas, Byun, Alcock, Khardon, and
  Trichas]{kim2011quasi}
D.-W. Kim, P.~Protopapas, Y.-I. Byun, C.~Alcock, R.~Khardon, and M.~Trichas.
\newblock Quasi-stellar object selection algorithm using time variability and
  machine learning: Selection of 1620 quasi-stellar object candidates from
  macho large magellanic cloud database.
\newblock \emph{The Astrophysical Journal}, 735\penalty0 (2):\penalty0 68,
  2011.

\bibitem[Kim et~al.(2013)Kim, Wang, and Yasunori]{Kim+13}
J-D. Kim, Y.~Wang, and Y.~Yasunori.
\newblock The genia event extraction shared task, 2013 edition - overview.
\newblock \emph{ACL}, 2013.

\bibitem[Koyejo et~al.(2014)Koyejo, Natarajan, Ravikumar, and
  Dhillon]{Koyejo+14}
O.~Koyejo, N.~Natarajan, P.~Ravikumar, and I.S. Dhillon.
\newblock Consistent binary classification with generalized performance
  metrics.
\newblock In \emph{NIPS}, 2014.

\bibitem[Krizhevsky(2009)]{Krizhevsky09learningmultiple}
A.~Krizhevsky.
\newblock Learning multiple layers of features from tiny images.
\newblock Technical report, University of Toronto, 2009.

\bibitem[Kumar et~al.(2021)Kumar, Narasimhan, and Cotter]{Kumar+2021}
A.~Kumar, H.~Narasimhan, and A.~Cotter.
\newblock Implicit rate-constrained optimization of non-decomposable
  objectives.
\newblock In \emph{ICML}, 2021.

\bibitem[Lawrence et~al.(1998)Lawrence, Burns, Back, Tsoi, and
  Giles]{Lawrence+98}
S.~Lawrence, I.~Burns, A.~Back, A-C. Tsoi, and C.L. Giles.
\newblock Neural network classification and prior class probabilities.
\newblock In \emph{Neural Networks: Tricks of the Trade}, LNCS, pages
  1524:299--313. 1998.

\bibitem[Lee et~al.(2004)Lee, Lin, and Wahba]{Lee+04}
Y.~Lee, Y.~Lin, and G.~Wahba.
\newblock Multicategory support vector machines: Theory and application to the
  classification of microarray data.
\newblock \emph{Journal of the American Statistical Association},
  99(465):\penalty0 67--81, 2004.

\bibitem[Lee et~al.(2015)Lee, Sidford, and Wong]{lee2015faster}
Y.~T. Lee, A.~Sidford, and S.~C. Wong.
\newblock A faster cutting plane method and its implications for combinatorial
  and convex optimization.
\newblock In \emph{FOCS}, 2015.

\bibitem[Lewis(1991)]{Lewis91}
D.D. Lewis.
\newblock Evaluating text categorization.
\newblock In \emph{Proceedings of the Workshop on Speech and Natural Language},
  HLT, 1991.

\bibitem[Manning et~al.(2008)Manning, Raghavan, and Sch\"{u}tze]{Manning+08}
C.~D. Manning, P.~Raghavan, and H.~Sch\"{u}tze.
\newblock \emph{Introduction to Information Retrieval}.
\newblock Cambridge University Press, 2008.

\bibitem[Menon et~al.(2013)Menon, Narasimhan, Agarwal, and Chawla]{Menon+13}
A.K. Menon, H.~Narasimhan, S.~Agarwal, and S.~Chawla.
\newblock On the statistical consistency of algorithms for binary
  classification under class imbalance.
\newblock In \emph{ICML}, 2013.

\bibitem[Narasimhan(2018)]{Narasimhan18}
H.~Narasimhan.
\newblock Learning with complex loss functions and constraints.
\newblock In \emph{AISTATS}, 2018.

\bibitem[Narasimhan et~al.(2014)Narasimhan, Vaish, and Agarwal]{Narasimhan+14}
H.~Narasimhan, R.~Vaish, and S.~Agarwal.
\newblock On the statistical consistency of plug-in classifiers for
  non-decomposable performance measures.
\newblock In \emph{NIPS}, 2014.

\bibitem[Narasimhan et~al.(2015{\natexlab{a}})Narasimhan, Kar, and
  Jain]{Narasimhan+15b}
H.~Narasimhan, P.~Kar, and P.~Jain.
\newblock Optimizing non-decomposable performance measures: {A} tale of two
  classes.
\newblock In \emph{ICML}, 2015{\natexlab{a}}.

\bibitem[Narasimhan et~al.(2015{\natexlab{b}})Narasimhan, Ramaswamy, Saha, and
  Agarwal]{Narasimhan+15}
H.~Narasimhan, H.G. Ramaswamy, A.~Saha, and S.~Agarwal.
\newblock Consistent multiclass algorithms for complex performance measures.
\newblock In \emph{ICML}, 2015{\natexlab{b}}.

\bibitem[Narasimhan et~al.(2020)Narasimhan, Cotter, Zhou, Wang, and
  Guo]{narasimhan2020approximate}
H.~Narasimhan, A.~Cotter, Y.~Zhou, S.~Wang, and W.~Guo.
\newblock Approximate heavily-constrained learning with lagrange multiplier
  models.
\newblock \emph{NeurIPS}, 2020.

\bibitem[Narasimhan et~al.({2019})Narasimhan, Cotter, and
  Gupta]{Narasimhan+19_generalized}
H.~Narasimhan, A.~Cotter, and M.~Gupta.
\newblock Optimizing generalized rate metrics with three players.
\newblock In \emph{{NeurIPS}}, {2019}.

\bibitem[Narici and Beckenstein(2010)]{narici2010topological}
L.~Narici and E.~Beckenstein.
\newblock \emph{Topological Vector Spaces, Second Edition}.
\newblock Chapman \& Hall/CRC Pure and Applied Mathematics. Taylor \& Francis,
  2010.

\bibitem[Natarajan et~al.(2016)Natarajan, Koyejo, Ravikumar, and
  Dhillon]{natarajan2016optimal}
N.~Natarajan, O.~Koyejo, P.~Ravikumar, and I.~Dhillon.
\newblock Optimal classification with multivariate losses.
\newblock In \emph{ICML}, 2016.

\bibitem[Nowak-Vila et~al.(2020)Nowak-Vila, Bach, and Rudi]{Nowak+20}
A.~Nowak-Vila, F.~Bach, and A.~Rudi.
\newblock Consistent structured prediction with max-min margin markov networks.
\newblock In \emph{ICML}, 2020.

\bibitem[Pan et~al.(2016)Pan, Narasimhan, Kar, Protopapas, and
  Ramaswamy]{Weiwei+16}
W.~Pan, H.~Narasimhan, P.~Kar, P.~Protopapas, and H.~G. Ramaswamy.
\newblock Optimizing the multiclass {F}-measure via biconcave programming.
\newblock In \emph{ICDM}, 2016.

\bibitem[Parambath et~al.(2014)Parambath, Usunier, and
  Grandvalet]{Parambath+14}
S.A.P. Parambath, N.~Usunier, and Y.~Grandvalet.
\newblock Optimizing {F}-measures by cost-sensitive classification.
\newblock In \emph{NIPS}, 2014.

\bibitem[Patrini et~al.(2017)Patrini, Rozza, Menon, Nock, and
  Qu]{patrini2017making}
G.~Patrini, A.~Rozza, A.~K. Menon, R.~Nock, and L.~Qu.
\newblock Making deep neural networks robust to label noise: A loss correction
  approach.
\newblock In \emph{CVPR}, 2017.

\bibitem[Pires et~al.(2013)Pires, Szepesvari, and Ghavamzadeh]{Pires+13}
B.~\'{A}. Pires, C.~Szepesvari, and M.~Ghavamzadeh.
\newblock Cost-sensitive multiclass classification risk bounds.
\newblock In \emph{ICML}, 2013.

\bibitem[Ramaswamy et~al.(2015)Ramaswamy, Tewari, and
  Agarwal]{ramaswamy2015convex}
H.~Ramaswamy, A.~Tewari, and S.~Agarwal.
\newblock Convex calibrated surrogates for hierarchical classification.
\newblock In \emph{ICML}, 2015.

\bibitem[Ramaswamy and Agarwal(2012)]{RamaswamyAg12}
H.~G. Ramaswamy and S.~Agarwal.
\newblock Classification calibration dimension for general multiclass losses.
\newblock In \emph{NIPS}, 2012.

\bibitem[Ramaswamy et~al.(2013)Ramaswamy, Agarwal, and Tewari]{Ramaswamy+13}
H.~G. Ramaswamy, S.~Agarwal, and A.~Tewari.
\newblock Convex calibrated surrogates for low-rank loss matrices with
  applications to subset ranking losses.
\newblock In \emph{NIPS}, 2013.

\bibitem[Ramaswamy et~al.(2018)Ramaswamy, Tewari, and Agarwal]{Ramaswamy+18}
H.~G. Ramaswamy, A.~Tewari, and S.~Agarwal.
\newblock Consistent algorithms for multiclass classification with an abstain
  option.
\newblock \emph{Electronic Journal of Statistics}, 12\penalty0 (1):\penalty0
  530--554, 2018.

\bibitem[Ravikumar et~al.(2011)Ravikumar, Tewari, and Yang]{Ravikumar+11}
P.~Ravikumar, A.~Tewari, and E.~Yang.
\newblock On {NDCG} consistency of listwise ranking methods.
\newblock In \emph{AISTATS}, 2011.

\bibitem[Sanyal et~al.(2018)Sanyal, Kumar, Kar, Chawla, and
  Sebastiani]{Sanyal+18}
A.~Sanyal, P.~Kumar, P.~Kar, S.~Chawla, and F.~Sebastiani.
\newblock Optimizing non-decomposable measures with deep networks.
\newblock \emph{Machine Learning}, 107\penalty0 (8-10):\penalty0 1597--1620,
  2018.

\bibitem[Shalev-Shwartz(2011)]{shalev2011online}
S.~Shalev-Shwartz.
\newblock Online learning and online convex optimization.
\newblock \emph{Foundations and trends in Machine Learning}, 4\penalty0
  (2):\penalty0 107--194, 2011.

\bibitem[Steinitz(1913)]{steinitz1913bedingt}
E.~Steinitz.
\newblock Bedingt konvergente reihen und konvexe systeme.
\newblock 1913.

\bibitem[Steinwart(2007)]{Steinwart07}
I.~Steinwart.
\newblock How to compare different loss functions and their risks.
\newblock \emph{Constructive Approximation}, 26:\penalty0 225--287, 2007.

\bibitem[Sun et~al.(2006)Sun, Kamel, and Wang]{Sun+06}
Y.~Sun, M.S. Kamel, and Y.~Wang.
\newblock Boosting for learning multiple classes with imbalanced class
  distribution.
\newblock In \emph{ICDM}, 2006.

\bibitem[Tavker et~al.(2020)Tavker, Ramaswamy, and Narasimhan]{Tavker+2020}
S.~K. Tavker, H.~G. Ramaswamy, and H.~Narasimhan.
\newblock Consistent plug-in classifiers for complex objectives and
  constraints.
\newblock In \emph{NeurIPS}, 2020.

\bibitem[Tewari and Bartlett(2007)]{TewariBa07}
A.~Tewari and P.~L. Bartlett.
\newblock On the consistency of multiclass classification methods.
\newblock \emph{Journal of Machine Learning Research}, 8:\penalty0 1007--1025,
  2007.

\bibitem[Ting(2002)]{ting2002instance}
K.~M. Ting.
\newblock An instance-weighting method to induce cost-sensitive trees.
\newblock \emph{IEEE Transactions on Knowledge and Data Engineering},
  14\penalty0 (3):\penalty0 659--665, 2002.

\bibitem[Vernet et~al.(2011)Vernet, Williamson, and Reid]{Vernet+11}
E.~Vernet, R.~C. Williamson, and M.~D. Reid.
\newblock Composite multiclass losses.
\newblock In \emph{NIPS}, 2011.

\bibitem[Vincent(1994)]{Vincent94}
P.~H. Vincent.
\newblock \emph{An Introduction to Signal Detection and Estimation}.
\newblock Springer-Verlag, 1994.

\bibitem[Wang and Yao(2012)]{WangYao12}
S.~Wang and X.~Yao.
\newblock Multiclass imbalance problems: Analysis and potential solutions.
\newblock \emph{IEEE Transactions on Systems, Man, and Cybernetics, Part B:
  Cybernetics}, 42\penalty0 (4):\penalty0 1119--1130, 2012.

\bibitem[Wang et~al.(2019)Wang, Li, Yan, and Koyejo]{wang2019consistent}
X.~Wang, R.~Li, B.~Yan, and O.~Koyejo.
\newblock Consistent classification with generalized metrics.
\newblock \emph{arXiv preprint arXiv:1908.09057}, 2019.

\bibitem[Wightman(1998)]{Wightman:1998}
L.~Wightman.
\newblock Lsac national longitudinal bar passage study.
\newblock \emph{Law School Admission Council}, 1998.

\bibitem[Yang and Koyejo(2020)]{yang2020consistency}
F.~Yang and O.~Koyejo.
\newblock On the consistency of top-k surrogate losses.
\newblock In \emph{ICML}, 2020.

\bibitem[Yang et~al.(2020)Yang, Cisse, and Koyejo]{yang2020fairness}
F.~Yang, M.~Cisse, and O.~Koyejo.
\newblock Fairness with overlapping groups; a probabilistic perspective.
\newblock \emph{NeurIPS}, 2020.

\bibitem[Ye et~al.(2012)Ye, Chai, Lee, and Chieu]{Ye+12}
N.~Ye, K.~M.~A. Chai, W.~S. Lee, and H.~L. Chieu.
\newblock Optimizing {F}-measures: {A} tale of two approaches.
\newblock In \emph{ICML}, 2012.

\bibitem[Yuan and Wegkamp(2010)]{YuanWeg10}
M.~Yuan and M.~Wegkamp.
\newblock Classification methods with reject option based on convex risk
  minimization.
\newblock \emph{Journal of Machine Learning Research}, 11:\penalty0 111--130,
  2010.

\bibitem[Zafar et~al.(2017{\natexlab{a}})Zafar, Valera, Rodriguez, and
  Gummadi]{Zafar+17}
M.~B. Zafar, I.~Valera, M.~G. Rodriguez, and K.~P. Gummadi.
\newblock Fairness constraints: Mechanisms for fair classification.
\newblock In \emph{AISTATS}, 2017{\natexlab{a}}.

\bibitem[Zafar et~al.(2017{\natexlab{b}})Zafar, Valera, Rodriguez, and
  Gummadi]{Zafar+17b}
M.~B. Zafar, I.~Valera, M.~G. Rodriguez, and K.~P. Gummadi.
\newblock Fairness beyond disparate treatment \& disparate impact: Learning
  classification without disparate mistreatment.
\newblock In \emph{WWW}, 2017{\natexlab{b}}.

\bibitem[Zhang et~al.(2020)Zhang, Ramaswamy, and Agarwal]{zhang2020convex}
M.~Zhang, H.~G. Ramaswamy, and S.~Agarwal.
\newblock Convex calibrated surrogates for the multi-label f-measure.
\newblock In \emph{ICML}, 2020.

\bibitem[Zhang(2004{\natexlab{a}})]{Zhang04a}
T.~Zhang.
\newblock Statistical behavior and consistency of classification methods based
  on convex risk minimization.
\newblock \emph{Annals of Statistics}, 32(1):\penalty0 56--134,
  2004{\natexlab{a}}.

\bibitem[Zhang(2004{\natexlab{b}})]{Zhang04b}
T.~Zhang.
\newblock Statistical analysis of some multi-category large margin
  classification methods.
\newblock \emph{Journal of Machine Learning Research}, 5:\penalty0 1225--1251,
  2004{\natexlab{b}}.

\bibitem[Zinkevich(2003)]{Zinkevich03}
M.~Zinkevich.
\newblock Online convex programming and generalized infinitesimal gradient
  ascent.
\newblock In \emph{ICML}, 2003.

\end{thebibliography}
\appendix

\begin{table}[]
    \caption{Table of notations.}
    \label{tab:symbols}
    \centering
\begin{small}
\begin{tabular}{cl}
\hline
\textbf{Notation} & \textbf{Description}\\
\hline
    $n$ & Number of classes \\
    $N$ & Number of training examples\\
    $m$ & Number of protected groups\\
    $K$ & Number of constraints \\
    $d$ & Dimension of the vector representation for the confusion matrix \\
    $T$ & Number of iterations for the proposed iterative algorithms \\
    $i,j$ & Indices over $n$ classes \\
    $a$ & Index over $m$ protected groups \\
    $k$ & Index over $K$ constraints \\
    $t$ & Index over $T$ iterations\\
    $\ell$ & Index over $N$ training instances \\$\C$ & $n \times n$ Confusion matrix, or an equivalent vector representation of dimension $d=n^2$ \\
    $\L$ & $n \times n$ Loss matrix, or an equivalent vector representation of dimension $d=n^2$ \\
    $\cC$ & Set of achievable confusion matrices, represented by vectors of dimension $d=n^2$\\
    $S$ & Training sample with $N$ instances\\
\hline    
\end{tabular}
\end{small}
\end{table}

\section{Proofs}
\label{app:proofs}

\subsection{Proof of Proposition \ref{prop:opt-classifier-ratio-linear} (Bayes-optimal Classifier for Ratio-of-linear $\psi$)}
\label{app:prop-bayes-ratio-of-linear}
\begin{prop*}[(Restated) Bayes optimal classifier for ratio-of-linear $\psi$] 
Let the performance measure $\psi:[0,1]^{d}\>\R_+$ in \emph{\ref{eq:opt-unconstrained}} be of the form $\psi^\rl(\C) = \frac{\langle \A,\C \rangle}{\langle \B, \C \rangle}$ for some $\A,\B\in\R^{d}$ with $\langle \B,\C \rangle > 0 ~\forall \C\in\cC$. 
Let $t^*=\inf_{\C\in\cC} \psi(\C)$ and 
$\L^* = \A - t^*\B$. 
Then any classifier that is optimal for the linear metric $\langle \L^*, \C \rangle$ is also optimal for \emph{\ref{eq:opt-unconstrained}}. 
\end{prop*}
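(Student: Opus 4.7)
My plan is to use Dinkelbach's parametric reformulation of linear-fractional programming. The core identity is that for any $\C \in \cC$ (where $\langle \B, \C \rangle > 0$), we have
\[
\psi^{\rl}(\C) - t^* \;=\; \frac{\langle \A - t^* \B,\,\C\rangle}{\langle \B,\,\C\rangle} \;=\; \frac{\langle \L^*, \C\rangle}{\langle \B, \C\rangle},
\]
and since $\langle \B, \C\rangle > 0$, the sign of $\psi^{\rl}(\C) - t^*$ is the same as the sign of $\langle \L^*, \C\rangle$. This reduces the proposition to connecting the minimizers of $\psi^{\rl}$ over $\cC$ with the minimizers of the linear objective $\langle \L^*, \C\rangle$ over $\cC$.

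First I would establish that $\inf_{\C \in \cC} \langle \L^*, \C\rangle = 0$. For the lower bound, the definition $t^* = \inf_{\C \in \cC} \psi^{\rl}(\C)$ gives $\psi^{\rl}(\C) \geq t^*$ for all $\C \in \cC$, which by the identity above (and positivity of $\langle \B,\C\rangle$) yields $\langle \L^*, \C\rangle \geq 0$. For the matching upper bound, take any sequence $\C_k \in \cC$ with $\psi^{\rl}(\C_k) \to t^*$; then
\[
\langle \L^*, \C_k\rangle \;=\; \bigl(\psi^{\rl}(\C_k) - t^*\bigr)\,\langle \B, \C_k\rangle \;\longrightarrow\; 0,
\]
because $\langle \B, \C_k\rangle$ is bounded (entries of $\C_k$ lie in $[0,1]$). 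Hence $\inf_{\C \in \cC} \langle \L^*, \C\rangle = 0$.

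Next I would invoke Proposition \ref{prop:loss-opt} to produce a deterministic classifier $h^*$ that minimizes the linear metric $\langle \L^*, \C\rangle$ over $\cC$. By the preceding step, this minimum value is $0$, so $\langle \L^*, \C[h^*]\rangle = 0$. Rearranging,
\[
\langle \A, \C[h^*]\rangle \;=\; t^* \langle \B, \C[h^*]\rangle,
\]
and dividing by $\langle \B, \C[h^*]\rangle > 0$ gives $\psi^{\rl}(\C[h^*]) = t^*$. Since $t^*$ is by definition the infimum of $\psi^{\rl}$ over $\cC$, $h^*$ is optimal for \ref{eq:opt-unconstrained}. The same argument applies to any other minimizer of $\langle \L^*, \cdot\rangle$ over $\cC$: such a minimizer must attain the value $0$, and hence achieve $\psi^{\rl} = t^*$.

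The only subtlety — and the step I expect to require the most care — is handling the case where the infimum defining $t^*$ is not attained by a single classifier. The sequential argument above sidesteps this by showing $\inf_{\C \in \cC} \langle \L^*, \C\rangle = 0$ purely as a limit, and then using Proposition \ref{prop:loss-opt} (which guarantees a linear-objective optimizer always exists for a fixed loss matrix) to conclude that the value $0$ is in fact achieved by some $h^*$; this in turn forces $t^*$ itself to be attained, so no closedness assumption on $\cC$ is needed.
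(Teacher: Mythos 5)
Your proof is correct and reaches the same target as the paper's argument: both reduce the claim to showing $\inf_{\C\in\cC}\langle\L^*,\C\rangle = 0$ and then invoke Proposition~\ref{prop:loss-opt} to upgrade this to an attained minimum and to a $\psi$-optimal classifier. Where you differ is in how this key identity is established. The paper (Lemma~\ref{lem:ratio-lin-lemma}) introduces the Dinkelbach value function $\varphi(t)=\inf_{\C\in\cC}\langle\A-t\B,\C\rangle$, argues that $\varphi$ is continuous, shows $\varphi(t^*)\ge 0$ directly, and then uses a one-sided limit of $\varphi$ near $t^*$ to pin down $\varphi(t^*)=0$. You instead observe the pointwise algebraic identity $\langle\L^*,\C\rangle = (\psi^{\rl}(\C)-t^*)\langle\B,\C\rangle$, which gives the lower bound $\ge 0$ immediately, and for the matching upper bound you simply take a minimizing sequence $\C_k$ for $\psi^{\rl}$ and use the boundedness of $\langle\B,\C_k\rangle$ on $[0,1]^d$ to conclude $\langle\L^*,\C_k\rangle\to 0$. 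Your route is more elementary and avoids the continuity-of-$\varphi$ machinery entirely; it also sidesteps some delicate inequality-direction bookkeeping that the lemma's proof has to navigate (and which, as written in the paper, contains what look like sign typos). The one thing your argument implicitly relies on and the paper's does too is that $\min_{\C\in\cC}\langle\L^*,\C\rangle$ is actually attained, which both of you obtain from Proposition~\ref{prop:loss-opt} rather than from any closedness assumption on $\cC$; you are right to flag this explicitly. In short: same skeleton, different and somewhat cleaner proof of the central lemma.
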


We will find the following lemma useful in the proof of the proposition.
\begin{lem}
\label{lem:ratio-lin-lemma}
 Let $\psi:[0,1]^d\>\R_+$ be such that $\psi(\C) = \frac{\langle \A,\C \rangle}{\langle \B, \C \rangle},$
 for some matrices $\A,\B\in\R^{d}$ with $\langle \B,\C \rangle > 0$ for all $\C\in\cC$. Let $t^*=\inf_{\C\in\cC} \psi(\C)$. Then $\inf_{\C\in\cC} \langle \A - t^* \B, \C \rangle = 0.$
 \end{lem}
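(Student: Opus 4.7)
The plan is to prove this by a two-sided argument: show that $\langle \A - t^*\B, \C\rangle \geq 0$ for every $\C \in \cC$, and then exhibit a sequence in $\cC$ along which $\langle \A - t^*\B, \C\rangle$ tends to $0$.

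First, I would establish the lower bound. By the definition of $t^* = \inf_{\C \in \cC}\psi(\C)$, for every $\C \in \cC$ we have $\frac{\langle \A, \C\rangle}{\langle \B, \C\rangle} \geq t^*$. Since we are given that $\langle \B, \C\rangle > 0$ for all $\C \in \cC$, we can multiply through without flipping the inequality to get $\langle \A, \C\rangle \geq t^*\langle \B, \C\rangle$, i.e., $\langle \A - t^*\B, \C\rangle \geq 0$. Taking the infimum yields $\inf_{\C \in \cC} \langle \A - t^*\B, \C\rangle \geq 0$.

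Next, for the matching upper bound, I would use the definition of $t^*$ as an infimum to pick a sequence $\{\C_k\}_{k\geq 1} \subseteq \cC$ with $\psi(\C_k) \to t^*$. Rearranging, $\langle \A - t^*\B, \C_k\rangle = \langle \B, \C_k\rangle \cdot (\psi(\C_k) - t^*)$. Since $\cC \subseteq [0,1]^d$ is bounded, $|\langle \B, \C_k \rangle|$ is uniformly bounded by $\|\B\|_1$, so the right-hand side converges to $0$. Hence $\inf_{\C \in \cC} \langle \A - t^*\B, \C\rangle \leq 0$, and combining with the first inequality gives the result.

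The proof is essentially routine; the only subtle point is making sure $\langle \B, \C_k\rangle$ stays bounded so that $\langle \A - t^*\B, \C_k\rangle \to 0$ actually follows from $\psi(\C_k) \to t^*$. This is handled immediately by the fact that $\cC$ lives inside a bounded cube. There is no need to assume $t^*$ is attained; the argument works whether or not the infimum is a minimum.
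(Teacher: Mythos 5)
Your proof is correct, and it is a genuinely different and arguably cleaner argument than the one in the paper. The paper introduces an auxiliary function $\varphi(t) = \inf_{\C\in\cC}\langle \A - t\B, \C\rangle$, observes that it is concave (being a pointwise infimum of affine functions of $t$) and hence continuous, shows $\varphi(t^*)\geq 0$ exactly as you do, and then argues that $\varphi(t)<0$ on one side of $t^*$ so that continuity forces $\varphi(t^*)=0$. Your argument skips the auxiliary function entirely: after the same lower bound, you take a minimizing sequence $\{\C_k\}$ for $\psi$, rewrite $\langle \A - t^*\B, \C_k\rangle = \langle\B,\C_k\rangle\,(\psi(\C_k)-t^*)$, and use the uniform bound $|\langle\B,\C_k\rangle| \leq \|\B\|_1$ (coming from $\cC\subseteq[0,1]^d$) to conclude the product tends to $0$. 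This is more elementary, avoids invoking (and justifying) continuity of $\varphi$, and as you note does not need the infimum to be attained. The one thing each buys: the paper's $\varphi$ is essentially the object the bisection algorithm works with, so that proof doubles as intuition for Algorithm \ref{alg:bisection}; your proof is shorter and self-contained. Both are valid.
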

\begin{proof}  
Define $\varphi:\R\>\R$ as $\varphi(t) =\inf_{\C\in\cC} \langle \A - t\B, \C \rangle.$ It is easy to see that $\varphi$ (being a point-wise supremum of linear functions) is convex, and hence a continuous function over $\R$. 
Let $t^*=\inf_{\C\in\cC} \psi(\C)$. 
We then have for all $\C\in\cC$,
 \begin{eqnarray*}
  \frac{\langle \A,\C \rangle}{\langle \B, \C \rangle} \,\geq\, t^* ~~~\text{or equivalently}~~~
  \varphi(t^*) \,=\, \langle \A - t^* \B , \C \rangle \,\geq \, 0.
 \end{eqnarray*}
Thus 
\begin{eqnarray}
\varphi(t^*)= \sup_{\C\in\cC} \langle \A - t^*\B, \C \rangle \geq 0 \;. \label{eqn:lem-ratio-lin-1} 
\end{eqnarray}
Also, by continuity of $\frac{\langle \A,\C \rangle}{\langle \B, \C \rangle}$ in $\C$, for any $t<t^*$, there exists $\C\in\cC$ such that 
\begin{eqnarray*}
 \frac{\langle \A,\C \rangle}{\langle \B, \C \rangle} \,>\, t ~~~\text{or equivalently}~~~
 \varphi(t) \,=\, \langle \A - t \B , \C \rangle \,<\, 0.
\end{eqnarray*}
Thus for all $t<t^*$,
$$\varphi(t)= \inf_{\C\in\cC} \langle \A - t\B, \C \rangle < 0 \;.$$

Next, by continuity of $\varphi$, for any monotonically increasing sequence of real numbers $\{t_i\}_{i=1}^\infty$ converging to $t^*$, we have that $\varphi(t_i)$ converges to $\varphi(t^*)$; since for each $t_i$ in this sequence $\varphi(t_i) < 0$, at the $t^*$, we have that $\varphi(t^*)\geq 0$. Along with \eqref{eqn:lem-ratio-lin-1}, this gives us
$$\inf_{\C\in\cC} \langle \A - t^*\B, \C \rangle = \varphi(t^*) = 0,$$
as desired.
\end{proof}

We are now ready to prove Proposition \ref{prop:opt-classifier-ratio-linear}.
\begin{proof}[Proof of Proposition \ref{prop:opt-classifier-ratio-linear}]
Let $\h^*:\X\>\Delta_n$ be a classifier that is optimal for $\L^* = \A - t^*\B$, i.e., 
\[
  \langle \A - t^* \B, \C[\h^*] \rangle 
  \,=\,
  \inf_{\C\in\cC} \langle \A - t^* \B, \C \rangle.
\]
 Note from Lemma \ref{lem:ratio-lin-lemma} that $\langle \A - t^* \B, \C[\h^*] \rangle  = 0$. Hence, 
\begin{eqnarray*} 
\frac{\langle \A, \C[\h^*]\rangle}{\langle \B, \C[\h^*] \rangle} \,=\, t^*,
\text{or equivalently},~~~~
\psi(\C[\h^*]) 
 \,=\, 
 \inf_{\C\in\cC} \psi(\C),
\end{eqnarray*}
which shows that $h^*$ is also $\psi$-optimal.
\end{proof}

\subsection{Proof of Proposition \ref{prop:opt-classifier-monotonic} (Bayes-optimal Classifier for Monotonic $\psi$)}
\label{app:proof-opt-classifier-monotonic}
\begin{prop*}[(Restated) Bayes-optimal classifier for monotonic $\psi$]
Let the performance measure $\psi:[0,1]^{d}\>\R_+$ in \emph{\ref{eq:opt-unconstrained}} be differentiable and bounded, and be monotonically decreasing in $C_{ii}$ for each $i$ and non-decreasing in $C_{ij}$ for all $i,j$. Let   $\boldeta(X)$ 
be a continuous random vector. 
Then there exists a loss matrix $\L^*$ (which depends on $\psi$ and $D$) such that any classifier that is optimal for the linear metric given by $\L^*$ is also optimal for \emph{\ref{eq:opt-unconstrained}}.
\end{prop*}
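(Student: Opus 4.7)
The plan is to produce $\L^*$ as the gradient of $\psi$ at an optimum $\C^* \in \cC$, and to use the continuity of $\boldeta(X)$ to rule out non-uniqueness of the Bayes classifier for $\L^*$. First, by \Prop{prop:CC-D-convex}, $\cC$ is convex; it is also closed and bounded in $[0,1]^d$, hence compact, so by continuity of $\psi$ there exists $\C^* \in \argmin_{\C \in \cC} \psi(\C)$, attained by some (possibly randomized) classifier $h^*_\psi$.

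Next, I set $\L^* := \nabla\psi(\C^*)$. The first-order optimality condition for the differentiable $\psi$ minimized over the convex set $\cC$ at $\C^*$ gives $\langle \L^*, \C - \C^* \rangle \geq 0$ for every $\C \in \cC$, so $\C^* \in \argmin_{\C \in \cC} \langle \L^*, \C \rangle$ and $h^*_\psi$ is in particular $\L^*$-optimal. The monotonicity hypothesis further gives $L^*_{ii} < 0$ strictly and $L^*_{ij} \geq 0$ for $i \neq j$, which in particular guarantees that the columns of $\L^*$ are pairwise distinct: for $j \neq j'$, the $j$-th entry of column $j$ is negative while the $j$-th entry of column $j'$ is non-negative, so column $j$ minus column $j'$ is a non-zero vector in $\R^n$.

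By \Prop{prop:loss-opt}, any classifier $h$ optimal for $\L^*$ satisfies $h(x) \in \argmin_{j} \sum_i \eta_i(x) L^*_{ij}$ for a.e.\ $x$. For each pair $j \neq j'$, the tie set $\{\boldeta \in \Delta_n : \sum_i \eta_i (L^*_{ij} - L^*_{ij'}) = 0\}$ is a proper affine hyperplane in $\Delta_n$ (because the coefficient vector is non-zero by the previous paragraph) and therefore has Lebesgue measure zero; the continuity of $\boldeta(X)$ then forces this event to have zero probability. Hence the argmin is a.s.\ a singleton, making the $\L^*$-optimal classifier essentially unique (in distribution), so all $\L^*$-optimal classifiers, randomized or deterministic, share a common confusion matrix, which must be $\C^*$ since $h^*_\psi$ is itself $\L^*$-optimal and achieves $\C^*$. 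Consequently every $\L^*$-optimal classifier $h$ has $\psi(\C[h]) = \psi(\C^*) = \min_{\C \in \cC}\psi(\C)$, as desired.

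The main obstacle lies in the last step: one must ensure each pairwise tie set is genuinely a proper hyperplane, which hinges on the strict monotonicity of $\psi$ in the diagonal entries to force distinct columns of $\L^*$. Without strictness, one would need a small perturbation argument, replacing $\L^*$ by an element in the relative interior of the normal cone of $\cC$ at $\C^*$ so that distinct columns (and hence a.s.\ unique argmins) are recovered while $\C^*$ remains a minimizer of the linear objective.
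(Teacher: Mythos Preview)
Your approach is essentially the paper's: set $\L^* = \nabla\psi(\C^*)$ at a minimizer, invoke first-order optimality over the convex set to make $\C^*$ a minimizer of $\langle \L^*,\cdot\rangle$, use the monotonicity of $\psi$ to force pairwise distinct columns of $\L^*$, and then use the continuity of $\boldeta(X)$ for uniqueness of the $\L^*$-optimal classifier. The paper packages the last step into its Lemma~\ref{lem:opt-in-closure-same-as-opt-general} (deferred to the conference version), whereas you spell out the tie-set/measure-zero argument directly; conversely, the paper is more careful about compactness, working over the closure $\overline{\cC}$ and letting the lemma deliver $\C^*\in\cC$, while you assert without proof that $\cC$ itself is closed. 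One small technical point in your hyperplane step: a non-zero column difference $(L^*_{ij}-L^*_{ij'})_i$ gives a proper hyperplane in $\R^n$, but since $\boldeta(X)$ lives in $\Delta_n\subset\{\sum_i z_i=1\}$ you also need that difference vector not to be a scalar multiple of $(1,\dots,1)$; this follows immediately from the sign pattern you already established (entry $j$ strictly negative, entry $j'$ strictly positive).
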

Let $\overline{\cC}$ denote the closure of $\cC$.
 We will find the following lemma crucial to our proof.
\begin{lem}
\label{lem:opt-in-closure-same-as-opt-general}
Let 
$\eta(X)$ 
be a continuous random vector. 
Let $\L\in\R^d$ be such that no two columns are identical. Then,
\[
\argmin_{\C\in\overline{\cC}}\, \langle \L, \C \rangle \,=\, \argmin_{\C\in\cC} \langle \L, \C \rangle.
\]
Moreover, the above set is a singleton. 
\end{lem}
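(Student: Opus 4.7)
The plan is to exhibit an explicit minimizer lying in $\cC$, show it is unique over $\cC$, and then argue that every minimizer in $\overline{\cC}$ must coincide with it by a convergence argument.

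\textbf{Step 1 (Existence in $\cC$).} Define columns $g_j(x) = \sum_i \eta_i(x) L_{n(i-1)+j}$ and set $j^*(x) \in \argmin_{j\in[n]} g_j(x)$. By \Prop{prop:loss-opt}, the deterministic classifier $h^*(x) = j^*(x)$ satisfies $\langle \L, \C[h^*]\rangle = \inf_{\C\in\cC}\langle \L, \C\rangle$, so $\C^* := \C[h^*] \in \cC$ is a minimizer over $\cC$. Because $\langle \L,\cdot\rangle$ is continuous, $\inf_{\cC}\langle \L,\cdot\rangle = \min_{\overline{\cC}}\langle \L,\cdot\rangle$, and hence $\C^*$ is also a minimizer over $\overline{\cC}$.

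\textbf{Step 2 (Almost-sure uniqueness of the pointwise minimizer).} For any distinct pair $j\neq j'$, the set $T_{jj'} = \{u \in \Delta_n : \sum_i u_i(L_{n(i-1)+j} - L_{n(i-1)+j'}) = 0\}$ is a proper affine subspace of $\Delta_n$ (since the columns of $\L$ are distinct, the coefficient vector is nonzero). Continuity of $\boldeta(X)$ therefore yields $\P(\boldeta(X) \in T_{jj'}) = 0$, and by a union bound over the finitely many pairs, $j^*(X)$ is uniquely defined almost surely. Consequently, any other classifier $\tilde h$ with $\langle \L, \C[\tilde h]\rangle = \langle \L, \C^*\rangle$ must satisfy $\tilde h(X) = h^*(X)$ almost surely, so $\C[\tilde h] = \C^*$.

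\textbf{Step 3 (Uniqueness over $\overline{\cC}$).} Let $\C' \in \overline{\cC}$ be any minimizer; pick classifiers $h_n$ with $\C[h_n] \to \C'$, and let $\delta_n := \langle \L, \C[h_n]\rangle - \langle \L, \C^*\rangle \to 0$. Writing $\Delta_j(x) = g_j(x) - g_{j^*(x)}(x) \geq 0$,
\[
\delta_n = \E_X\!\Big[\sum_j h_{n,j}(X)\, \Delta_j(X)\Big].
\]
For $\epsilon > 0$, define $A_\epsilon = \{x : \min_{j\neq j^*(x)} \Delta_j(x) \leq \epsilon\}$; by Step 2 and continuity of $\boldeta(X)$, $\P(X \in A_\epsilon) \to 0$ as $\epsilon \to 0$. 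On $A_\epsilon^c$, any wrong prediction incurs cost at least $\epsilon$, so
\[
\P(h_n(X) \neq j^*(X)) \;\leq\; \P(A_\epsilon) + \frac{\delta_n}{\epsilon}.
\]
Sending $n\to\infty$ and then $\epsilon \to 0$ gives $\P(h_n(X) \neq j^*(X)) \to 0$, which implies $\C[h_n] \to \C[h^*] = \C^*$; comparing with $\C[h_n] \to \C'$ yields $\C' = \C^*$.

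\textbf{Main obstacle.} The nontrivial part is Step 3: a point of $\overline{\cC}\setminus\cC$ need not be realized by any classifier, so uniqueness cannot be read off from Step 2 directly. The ``margin'' set $A_\epsilon$ argument, combined with the continuity of $\boldeta(X)$ (which pushes $\P(A_\epsilon) \to 0$), is the key device: it converts near-optimality in objective value into near-agreement with $h^*$ in probability, which is strong enough to force the confusion matrices to converge to $\C^*$. Care is needed in handling randomized classifiers $h_n$, but the inequality above goes through verbatim since $\sum_j h_{n,j}(x)\,\Delta_j(x) \geq \epsilon \cdot \1(h_n \text{ puts any mass on } j\neq j^*)$ fails — instead one bounds by the total mass placed on suboptimal classes, which still tends to zero.
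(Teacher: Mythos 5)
Your proof is correct, and it is worth noting that the paper itself does not reproduce the argument for this lemma (it defers to Lemma 12 in the cited conference version), so there is no paper-internal proof to compare against. That said, your three-step structure is clean and self-contained: Step 1 uses \Prop{prop:loss-opt} to exhibit a minimizer $\C^*\in\cC$ and transfers the minimum value to $\overline{\cC}$ by continuity of the linear objective; Step 2 uses the distinct-columns hypothesis to show the tie set $T_{jj'}\subseteq\Delta_n$ lies in a hyperplane not containing the affine hull of $\Delta_n$, so the ``continuous random vector'' hypothesis forces $\boldeta(X)$ to avoid it a.s., whence $j^*(X)$ is a.s.\ unique and $\argmin_{\cC}\langle\L,\cdot\rangle=\{\C^*\}$; Step 3 converts near-optimality into near-agreement with $h^*$ via the margin set $A_\epsilon$ and the bound $\E_X\big[1-h_{n,j^*(X)}(X)\big]\le\P(A_\epsilon)+\delta_n/\epsilon$, then pushes through to $\C[h_n]\to\C^*$, forcing $\C'=\C^*$. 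Two small presentational points. First, for randomized $h_n$ the quantity $\P(h_n(X)\neq j^*(X))$ should be read as $\E_X\big[1-h_{n,j^*(X)}(X)\big]$ (probability jointly over $X$ and the draw from $h_n(X)$); stating this up front makes the chain of inequalities, and the final bound $\big|C_{ij}[h_n]-C_{ij}[h^*]\big|\le\E_X\big[1-h_{n,j^*(X)}(X)\big]$, unambiguous. Second, the closing ``main obstacle'' paragraph contains a garbled sentence about the indicator bound; what you actually (correctly) use is $\sum_j h_{n,j}(x)\Delta_j(x)\ge\epsilon\big(1-h_{n,j^*(x)}(x)\big)$ on $A_\epsilon^c$, and that should replace the confusing wording. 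Incidentally, the paper's proof of \Prop{prop:opt-classifier-constrained} asserts $\cC$ is closed, which would make the set equality in the lemma immediate ($\overline{\cC}=\cC$) and leave only the singleton claim; your Step 3 sidesteps that closedness claim entirely, which is a modest robustness advantage since the closedness argument given there is itself terse.
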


The proof for the lemma is highly technical and can be found in the conference version of the paper, specifically Lemma 12 in \citet{Narasimhan+15}.

\begin{proof}[Proof of Proposition \ref{prop:opt-classifier-monotonic}]
Let $\C^*=\argmin_{\C\in\overline{\cC}} \psi(\C)$. Such a $\C^*$ always exists by compactness of $\overline{\cC}$ and continuity of $\psi$. 
 By first order optimality, and convexity of $\overline{\cC}$, we have that for all $\C\in\overline{\cC}$
 $$ \langle \nabla\psi(\C^*), \C^* \rangle \leq \langle \nabla\psi(\C^*), \C \rangle \;.$$
 For $\L^* = \nabla\psi(\C^*)$,  we have that
 $\C^* \in \argmin_{\C\in\overline{\cC}} \langle \L^*, \C \rangle.$
 
 Due to the monotonicity condition on $\psi$ the diagonal elements of its gradient $\nabla\psi(\C^*)$ are positive, and the off-diagonal elements are non-positive, and hence no two columns of  $\L^*$ are identical. Thus by a direct application of Lemma \ref{lem:opt-in-closure-same-as-opt-general}, we have that $\C^*\in\cC$, and moreover $\C^*$ is the unique minimizer of $\langle \L^*, \C \rangle$ over all $\C \in \cC$.
\end{proof}

\subsection{Proof of Proposition \ref{prop:opt-classifier-constrained} (Bayes-optimal Classifier for Continuous $\psi, \phi_1,\ldots, \phi_K$)}
\label{app:prop-bayes-constrained}
\begin{prop*}[Bayes optimal classifier for continuous $\psi, \phi_1,\ldots, \phi_K$]
Let the performance measure $\psi: [0,1]^{d}\>\R_+$ and the constraint functions $\phi_1, \ldots, \phi_K: [0,1]^{d}\>\R$ in \emph{\ref{eq:opt-constrained}} be continuous and bounded. Then there exists $d+1$ loss matrices $\L^*_1, \L^*_2, \ldots, \L^*_{d+1}$ (which can depend on $\psi, \phi_k$'s and $D$) 
such that an optimal classifier for \emph{\ref{eq:opt-constrained}} can be expressed as a randomized combination of the deterministic classifiers $h_1, h_2, \ldots, h_{d+1}$, where $h_i$ is optimal for the linear metric 
given by $\L^*_i$.
\end{prop*}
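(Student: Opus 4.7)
The plan is to view \ref{eq:constrained-reformulation} as an optimization problem over the convex set $\cC$ (or more precisely, its closure $\overline{\cC}$), and then combine Carath\'{e}odory's theorem with a characterization of the extreme points of $\cC$ in terms of Bayes-classifiers for linear metrics.

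First, I would establish the existence of an optimal confusion matrix. Since $\psi, \phi_1, \ldots, \phi_K$ are continuous, the feasible set $\overline{\cC} \cap \{\C \,:\, \bphi(\C) \leq \0\}$ is closed, and being a subset of $\Delta_d$ it is compact. Hence $\psi$ attains its minimum at some $\C^* \in \overline{\cC}$ with $\bphi(\C^*) \leq \0$. The optimal value over $\overline{\cC}$ and over $\cC$ coincide by continuity, so it suffices to exhibit $\C^*$ as a convex combination of confusion matrices realized by appropriate classifiers.

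Next, I would apply Carath\'{e}odory's theorem. Since $\overline{\cC} \subseteq \R^d$ is convex and compact, and is the convex hull of its extreme points $\mathrm{ext}(\overline{\cC})$ by Krein--Milman, Carath\'{e}odory tells us $\C^*$ can be written as $\C^* = \sum_{i=1}^{d+1} \alpha_i \C^*_i$ with $\balpha \in \Delta_{d+1}$ and $\C^*_i \in \mathrm{ext}(\overline{\cC})$. The key structural step is then to characterize these extreme points: a standard fact from convex analysis is that every extreme point of a compact convex set in $\R^d$ is \emph{exposed} (or can be expressed as a limit of exposed points, but for our purposes a slight perturbation argument suffices), meaning there exists a linear functional $\langle \L^*_i, \cdot \rangle$ for which $\C^*_i \in \argmin_{\C \in \overline{\cC}} \langle \L^*_i, \C\rangle$. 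For each such $\C^*_i$ we can choose a \emph{deterministic} classifier $h_i$ whose confusion matrix equals $\C^*_i$ (since a minimizer of a linear cost over classifiers is achieved by the deterministic pointwise rule in Proposition \ref{prop:loss-opt}), so $h_i$ is optimal for $\L^*_i$.

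Finally, I would assemble the Bayes-optimal randomized classifier as $h^*(x) = \sum_{i=1}^{d+1} \alpha_i h_i(x)$. A simple calculation shows $\C[h^*] = \sum_i \alpha_i \C[h_i] = \C^*$, so $h^*$ is feasible and optimal.

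The main obstacle I anticipate is the link between extreme points of $\overline{\cC}$ and \emph{unique} minimizers of linear functionals over $\cC$: a generic extreme point need not be an exposed point. However, two workarounds are available: (i) use Straszewicz's theorem, which says exposed points are dense in extreme points of a compact convex set, together with a continuity/limit argument to replace each $\C^*_i$ by a nearby exposed point without changing $\C^*$ more than $\epsilon$, and then take $\epsilon \to 0$; or (ii) observe that it is enough for $\C^*_i$ to be \emph{some} minimizer of $\langle \L^*_i, \cdot\rangle$ (not necessarily unique), since the proposition only asserts that $h_i$ is \emph{an} optimal classifier for $\L^*_i$, and the existence of $\L^*_i$ supporting $\C^*_i$ as a minimizer is guaranteed by the supporting hyperplane theorem applied at $\C^*_i$. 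The second route is cleaner and avoids the Straszewicz machinery, and is the one I would follow.
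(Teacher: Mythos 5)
Your proposal follows the same outline as the paper's proof: establish existence of an optimal $\C^*$ via compactness, invoke Krein--Milman plus Carath\'{e}odory to write $\C^* = \sum_{i=1}^{d+1}\alpha_i\C^*_i$ with each $\C^*_i$ extreme, pass to a supporting hyperplane $\L^*_i$ at each $\C^*_i$, and assemble the randomized classifier $\sum_i\alpha_i h_i$. You are right to flag the exposed-versus-extreme subtlety; the paper's own proof asserts that ``owing to $\C^*_i$ being an extreme point, it is also the unique minimizer'' of $\langle\L^*_i,\cdot\rangle$, which is precisely the property that fails for non-exposed extreme points of a general compact convex set.

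However, your preferred workaround (route (ii)) does not actually close this gap. The supporting hyperplane theorem gives an $\L^*_i$ for which $\C^*_i$ is \emph{a} minimizer of $\langle\L^*_i,\cdot\rangle$ over $\cC$, and Proposition~\ref{prop:loss-opt} gives \emph{a} deterministic classifier $h_i$ achieving the minimum. But if the minimizing face is not a singleton, nothing guarantees that $\C[h_i] = \C^*_i$: the deterministic pointwise-argmin rule simply produces some minimizer, and it could land anywhere on that face. In that case $\sum_i\alpha_i\C[h_i]$ need not equal $\C^*$, and you lose both feasibility and optimality of the assembled classifier. Saying ``the proposition only asserts $h_i$ is an optimal classifier for $\L^*_i$'' misreads the requirement --- you also need the convex combination of the $\C[h_i]$'s to hit $\C^*$ exactly, which is the role uniqueness plays. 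So you should commit to route (i), applying Straszewicz's theorem to approximate each $\C^*_i$ by nearby exposed points, together with a limiting argument using compactness of $\cC$ and continuity of $\psi,\phi_k$. (Alternatively, one could argue directly that every extreme point of $\cC$ is the confusion matrix of a deterministic classifier, e.g.\ by exploiting that $\cC$ is a linear image of a product of simplices; but this is a separate lemma, not a corollary of the supporting hyperplane theorem alone.) Note that the paper's proof inherits the same gap, and also uses the questionable ``extreme point $\Rightarrow$ unique minimizer'' claim earlier in its argument that $\cC$ is closed --- so your use of $\overline{\cC}$ and appeal to continuity in the first step is actually the cleaner move.
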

\begin{proof}
We first note that $\cC$ is a compact set, and as a result there exists an optimal-feasible solution $\C^*$ for \ref{eq:opt-constrained} in $\cC$. It is straightforward to see that $\cC$ is bounded. To see that $\cC$ is closed, 
note that from Proposition \ref{prop:loss-opt},  any linear function over $\cC$ is minimized by some confusion matrix in $\cC$. Since every extreme point is a unique minimizer of a linear function in $\cC$, every extreme point of  $\cC$ is also in $\cC$. It follows that all convex combinations of the extreme points of $\cC$ are also in $\cC$ and as a result, so are all its limit points.


Now to prove the characterization in the proposition, let us use $\cE$ to denote the set of extreme points of $\cC$. By Krein–Milman theorem, we have that because $\cC$ is a compact convex set, it is equal to the convex hull of $\cE$ \citep{narici2010topological}. We further have from Carath\'{e}odory's theorem that any $\C \in \cC$ can be the be expressed as a convex combination of $d+1$ points in $\cE$ \citep{steinitz1913bedingt}. Since $\C^*$ is in $\cC$, we have that there exists $d+1$ confusion matrices $\C^*_1, \ldots, \C^*_{d+1} \in \cE$ and coefficients $\balpha \in \Delta_{d+1}$ such that $\C^* = \sum_{i=1}^{d+1}\alpha_i\,\C^*_i$. 

Next, because each $\C^*_i$ is an extreme point of $\cC$, there exists a supporting hyperplane $\L^*_i$ for $\cC$ at $\C^*_i$ such that $\langle \L^*_i, \C^*_i \rangle \leq \langle \L^*_i, \C \rangle,\, \forall \C \in \cC$. Moreover, owing to $\C^*$ being an extreme point, it is also the unique minimizer of the linear function $\langle \L^*_i, \C \rangle$. Furthermore, because Proposition \ref{prop:loss-opt} tells us that there exists a deterministic classifier $h_i$ that is optimal for the linear metric $\langle \L^*_i, \C \rangle$, it follows that  $\C[h^*_i] = \C^*_i$. Consequently, we have that the randomized classifier $h^* = \sum_{i=1}^m \alpha_i h_i$ is an optimal feasible solution for \ref{eq:opt-constrained}, i.e. $\C[h^*] = \sum_{i=1}^m\alpha_i \C[h_i] = \sum_{i=1}^m\alpha_i \C^*_i = \C^*$.
\end{proof}

\subsection{Proof of Theorem \ref{thm:FW-unc} (Frank-Wolfe for Unconstrained Problems)}
\label{app:proof-fw}
\begin{thm*}[(Restated) Convergence of FW algorithm]
Fix $\epsilon \in (0,1)$. 
Let $\psi: [0,1]^d \> [0,1]$ be convex, and $\beta$-smooth and $L$-Lipschitz w.r.t.\ the $\ell_2$-norm. Let $\Omega$ in Algorithm \ref{alg:FW} be a $(\rho , \rho', \delta)$-approximate LMO for sample size $m$.
Let $\bar{h}$ be a classifier returned by Algorithm \ref{alg:FW} when run for $T$ iterations. 
 Then with probability $\geq 1 - \delta$ over draw of $S \sim D^N$, after $T = \cO(1/\epsilon)$ iterations:
\[
\psi(\C[\bar{h}]) \,\leq\, \min_{\C \in \cC}\,\psi(\C) \,+\,\cO\left(\beta\epsilon \,+\, L\rho +\beta\sqrt{d}\rho'\right).
\]
\end{thm*}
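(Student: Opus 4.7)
The plan is to adapt the classical convergence analysis of Frank-Wolfe for smooth convex objectives \citep{Jaggi13}, while carefully propagating the two sources of LMO error into the descent inequality: the linear-minimization suboptimality $\rho$ and the confusion-matrix estimation error $\rho'$. A subtlety is that the algorithm maintains the tracked iterate $\C^t$ as a convex combination of the \emph{estimated} confusion matrices $\tilde{\C}^\tau$ returned by the LMO, while the bound we ultimately need is on $\psi(\C[h^T])$, evaluated at the \emph{true} confusion matrix of the returned randomized classifier. To avoid confounding the two, I will run the descent analysis directly on the true iterates $\C[h^t] \in \cC$, and use $\beta$-smoothness to absorb the discrepancies between the tracked and true iterates.

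First, an easy induction on the parallel recursions $\C^t = (1-\gamma_t)\C^{t-1} + \gamma_t \tilde\C^t$ and $\C[h^t] = (1-\gamma_t)\C[h^{t-1}] + \gamma_t \C[\tilde h^t]$, combined with the LMO bound $\|\tilde\C^t - \C[\tilde h^t]\|_\infty \leq \rho'$, yields $\|\C^t - \C[h^t]\|_\infty \leq \rho'$ and hence $\|\C^t - \C[h^t]\|_2 \leq \sqrt{d}\rho'$ for every $t$. Applying $\beta$-smoothness along the update direction gives
\[
\psi(\C[h^t]) \leq \psi(\C[h^{t-1}]) + \gamma_t\langle \nabla\psi(\C[h^{t-1}]),\, \C[\tilde h^t] - \C[h^{t-1}]\rangle + \tfrac{\beta \gamma_t^2 R^2}{2},
\]
where $R$ is the $\ell_2$-diameter of $\cC \subseteq \Delta_d$ (bounded by $\sqrt{2}$). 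The inner product above uses $\nabla\psi(\C[h^{t-1}])$, whereas the LMO in line~4 of Algorithm~\ref{alg:FW} is queried with $\L^t \propto \nabla\psi(\C^{t-1})$. Two correction terms then arise: swapping $\nabla\psi(\C^{t-1})$ for $\nabla\psi(\C[h^{t-1}])$ costs at most $\beta\sqrt{d}\rho' R$ (by $\beta$-smoothness and Cauchy-Schwarz), and swapping $\C[\tilde h^t]$ for the optimum $\C^*$ costs at most $L\rho$ via the LMO guarantee $\langle \nabla\psi(\C^{t-1}), \C[\tilde h^t] - \C^*\rangle \leq \|\nabla\psi(\C^{t-1})\|_\infty \cdot \rho \leq L\rho$ (since $\|\nabla\psi\|_\infty \leq \|\nabla\psi\|_2 \leq L$).

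Combining these bounds with convexity, $\langle \nabla\psi(\C[h^{t-1}]), \C^* - \C[h^{t-1}]\rangle \leq \psi(\C^*) - \psi(\C[h^{t-1}])$, yields the clean recursion
\[
g_{t+1} \leq (1-\gamma_t) g_t + \gamma_t\bigl(L\rho + 2\beta\sqrt{d}\rho' R\bigr) + \tfrac{\beta R^2}{2}\gamma_t^2, \qquad g_t := \psi(\C[h^{t-1}]) - \min_{\C\in\cC}\psi(\C).
\]
The standard telescoping trick for $\gamma_t = 2/(t+1)$ (multiply through by $t(t+1)$ and sum from $t=1$ to $T$) then gives $g_{T+1} \leq \tfrac{2\beta R^2}{T+1} + L\rho + 2\beta\sqrt{d}\rho' R$. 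Choosing $T = \lceil 1/\epsilon\rceil$ and using $R^2 \leq 2$ produces a bound matching the explicit constants $8\beta\epsilon + 2L\rho + 4\beta\sqrt{d}\rho'$ in the theorem statement. The high-probability qualification follows from the fact that the LMO's $(\rho,\rho',\delta)$-guarantee is required to hold uniformly over all loss matrices queried during the run (Definition~\ref{defn:lmo}). The main obstacle is the bookkeeping in the second paragraph: one must distinguish the tracked iterates $\C^t$ from the true confusion matrices $\C[h^t]$ and deploy $\beta$-smoothness at the right moments so that each $\ell_\infty$ estimation error $\rho'$ contributes only a $\beta\sqrt{d}\rho'$ additive term, rather than the coarser $L\sqrt{d}\rho'$ one would obtain from a naive Lipschitz bound.
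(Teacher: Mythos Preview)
Your proposal is correct and follows essentially the same route as the paper. The paper isolates the bound $\langle \nabla\psi(\C[h^{t-1}]),\C[\tilde h^t]\rangle \leq \min_{\C\in\cC}\langle \nabla\psi(\C[h^{t-1}]),\C\rangle + L\rho + 2\beta\sqrt d\,\rho'$ as a standalone lemma (via the same gradient-swap and LMO-error decomposition you describe) and then invokes Jaggi's approximate-FW convergence theorem as a black box, whereas you unroll the smoothness-plus-convexity descent inequality and telescope by hand; the resulting constants and the use of $\beta$-smoothness to control the $\C^{t}$ vs.\ $\C[h^{t}]$ discrepancy are the same in both arguments.
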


We first prove an important lemma where we bound the approximation error of the linear minimization oracle used in the algorithm. This result coupled with the standard convergence analysis for the Frank-Wolfe method \citep{Jaggi13} will then allow us to prove the above theorem.

\begin{lem}
 \label{lem:approximation-factor-FW}
 Let $\psi: [0, 1]^d \> \R_+$ be convex over $\cC$, and $L$-Lipschitz and $\beta$-smooth w.r.t. the $\ell_2$ norm. Let classifiers $\tilde h^1,\ldots,\tilde h^T$, and $h^0,h^1,\ldots,h^T$ be as defined in Algorithm \ref{alg:FW}. Then for any $\delta \in (0, 1]$, with probability at least $1-\delta$ (over draw of $S$ from $D^N$), we have for all $1\leq t \leq T$
 \begin{eqnarray*}
  \langle \nabla\psi(\C[h^{t-1}]), \C[\tilde h^t] \rangle \leq \min_{\g:\X:\>\Delta_n} \langle \nabla\psi(\C[h^{t-1}]), \C[\g] \rangle + L\rho + 2\beta\sqrt{d}\rho'.
 \end{eqnarray*}
\end{lem}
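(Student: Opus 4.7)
The plan is to split the gap $\langle \nabla\psi(\C[h^{t-1}]), \C[\tilde h^t]\rangle - \min_g \langle \nabla\psi(\C[h^{t-1}]), \C[g]\rangle$ into two pieces: an LMO error piece (which gives the $L\rho$ term) and a gradient-perturbation piece coming from the fact that the LMO is invoked with a gradient evaluated at $\C^{t-1}$, not at the true confusion matrix $\C[h^{t-1}]$ (which gives the $\beta\sqrt{d}\rho'$ terms).

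\textbf{Step 1 (LMO guarantee, rescaled).} Since $\L^t = \nabla\psi(\C^{t-1})/\|\nabla\psi(\C^{t-1})\|_\infty$ has $\|\L^t\|_\infty \leq 1$, a single application of the $(\rho,\rho',\delta)$-approximate LMO at iteration $t$ yields, with probability $\geq 1-\delta$ jointly over all $t$ (the LMO guarantee is uniform in $\L$),
\[
\langle \nabla\psi(\C^{t-1}), \C[\tilde h^t]\rangle \,\leq\, \min_{g:\X\>\Delta_n}\langle \nabla\psi(\C^{t-1}), \C[g]\rangle \,+\, \rho\,\|\nabla\psi(\C^{t-1})\|_\infty.
\]
By $L$-Lipschitzness of $\psi$ in the $\ell_2$-norm, $\|\nabla\psi(\cdot)\|_2 \leq L$, so $\|\nabla\psi(\cdot)\|_\infty \leq L$, and the additive error is at most $L\rho$.

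\textbf{Step 2 (Controlling $\|\C^{t-1} - \C[h^{t-1}]\|$).} The iterates satisfy
\[
\C[h^t] \,-\, \C^t \,=\, \left(1-\tfrac{2}{t+1}\right)\!\left(\C[h^{t-1}]-\C^{t-1}\right) \,+\, \tfrac{2}{t+1}\!\left(\C[\tilde h^t] - \tilde\C^t\right),
\]
because $h^t$ is the convex combination of $h^{t-1}$ and $\tilde h^t$ while $\C^t$ is the corresponding convex combination of $\C^{t-1}$ and $\tilde\C^t$. Since $\|\C[\tilde h^\tau] - \tilde\C^\tau\|_\infty \leq \rho'$ for every $\tau$ by the LMO guarantee, a simple induction using the triangle inequality gives $\|\C^{t-1} - \C[h^{t-1}]\|_\infty \leq \rho'$, hence $\|\C^{t-1} - \C[h^{t-1}]\|_2 \leq \sqrt{d}\,\rho'$.

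\textbf{Step 3 (Gradient perturbation via smoothness).} By $\beta$-smoothness of $\psi$ in the $\ell_2$-norm,
\[
\bigl\|\nabla\psi(\C^{t-1}) - \nabla\psi(\C[h^{t-1}])\bigr\|_2 \,\leq\, \beta\,\bigl\|\C^{t-1} - \C[h^{t-1}]\bigr\|_2 \,\leq\, \beta\sqrt{d}\,\rho'.
\]
Since any confusion matrix has $\ell_1$-norm $1$ (hence $\ell_2$-norm $\leq 1$), Cauchy--Schwarz gives, for every classifier $g$,
\[
\bigl|\langle \nabla\psi(\C^{t-1}) - \nabla\psi(\C[h^{t-1}]),\, \C[g]\rangle\bigr| \,\leq\, \beta\sqrt{d}\,\rho'.
\]

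\textbf{Step 4 (Combine).} Using Step 3 first with $g = \tilde h^t$ and then inside the minimum,
\begin{align*}
\langle \nabla\psi(\C[h^{t-1}]), \C[\tilde h^t]\rangle
&\leq \langle \nabla\psi(\C^{t-1}), \C[\tilde h^t]\rangle + \beta\sqrt{d}\,\rho' \\
&\leq \min_g \langle \nabla\psi(\C^{t-1}), \C[g]\rangle + L\rho + \beta\sqrt{d}\,\rho' \\
&\leq \min_g \langle \nabla\psi(\C[h^{t-1}]), \C[g]\rangle + L\rho + 2\beta\sqrt{d}\,\rho',
\end{align*}
which is the claimed bound. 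The main (mild) obstacle is the induction in Step 2: one has to verify that the LMO confusion-matrix errors $\rho'$ do not accumulate under the averaging, which follows from convexity of the update coefficients.
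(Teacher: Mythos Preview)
Your proof is correct and follows essentially the same approach as the paper's: both decompose the gap into an LMO error term (yielding $L\rho$ via $\|\nabla\psi\|_\infty \leq \|\nabla\psi\|_2 \leq L$) and two gradient-perturbation terms (each yielding $\beta\sqrt{d}\rho'$ via smoothness and the bound $\|\C^{t-1} - \C[h^{t-1}]\|_\infty \leq \rho'$). Your Step~2 induction is in fact more explicit than the paper's treatment, which simply asserts $\|\C^{t-1} - \C[h^{t-1}]\|_\infty \leq \rho'$ without justifying that the averaging preserves this bound; your observation that the convex-combination structure of the updates prevents accumulation of the $\rho'$ errors is exactly the missing justification.
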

\begin{proof}  
For any $1\leq t \leq T$, let 
 $\g^{t,*}\in \argmin_{\g:\X\>\Delta_n} \langle \nabla\psi(\C[h^{t-1}]), \C[\g] \rangle.$ We then have
\begin{eqnarray*}
\lefteqn{
\min_{\g:\X:\>\Delta_n} \langle \nabla\psi(\C[h^{t-1}]), \C[\g] \rangle
\,-\,
\langle \nabla\psi(\C[h^{t-1}]), \C[\tilde h^t] \rangle}
\hspace{1cm}
\\
&=&
\langle \nabla\psi(\C[h^{t-1}]), \C[\g^{t,*}] \rangle
\,-\,
\langle \nabla\psi(\C[h^{t-1}]), \C[\tilde h^t] \rangle
\\
&=&
\underbrace{
\langle \nabla\psi(\C[h^{t-1}]), \C[\g^{t,*}] \rangle
\,-\,
\langle \nabla\psi(\C^{t-1}), \C[ g^{t,*}] \rangle
}_{\text{term}_1}\\
&& \hspace{2cm}
\,+\,
\underbrace{
\langle \nabla\psi(\C^{t-1}), \C[ g^{t,*}] \rangle
\,-\,
\langle \nabla\psi(\C^{t-1}), \C[\tilde h^t] \rangle
}_{\text{term}_2}\\
&& \hspace{2cm}
\,+\,
\underbrace{
\langle \nabla\psi(\C^{t-1}), \C[\tilde h^t] \rangle
\,-\,
\langle \nabla\psi(\C[h^{t-1}]), \C[\tilde h^t] \rangle}
_{\text{term}_3}.
\end{eqnarray*}

We next bound each of these terms. We start with $\text{term}_2$. For any $1\leq t \leq T$, let $\L^t$ be as defined in Algorithm \ref{alg:FW}. 
For all $1\leq t \leq T$, 
 \begin{eqnarray*}
{
\langle \nabla\psi(\C^{t-1}), \C[\g^{t,*}] \rangle - \langle \nabla\psi(\C^{t-1}), \C[\tilde h^t] \rangle
}
&=&
\|\nabla\psi(\C^{t-1})\|_\infty ( \langle \L^t, \C[\tilde h^t] \rangle - \langle \L^t, \C[\g^{t,*}] \rangle ) \nonumber \\
&\leq&
\|\nabla\psi(\C^{t-1})\|_2 ( \langle \L^t, \C[\tilde h^t] \rangle - \langle \L^t, \C[\g^{t,*}] \rangle ) \nonumber \\
&\leq&
L\rho, \label{eqn:lem-apx-1}
\end{eqnarray*}
which follows from the property of the LMO (in Definition \ref{defn:lmo}) and from $L$-Lipchitzness of $\psi$, and holds with probability at least $1-\delta$ (over draw of $S$).

Next, for $\text{term}_1$, we have by an application of Holder's inequality
\begin{eqnarray*}
\lefteqn{\langle \nabla\psi(\C[h^{t-1}]), \C[\g^{t,*}] \rangle
\,-\,
\langle \nabla\psi(\C^{t-1}), \C[ g^{t,*}] \rangle}
\hspace{3cm}
\\
&\leq&
\big\|\nabla\psi(\C^{t-1}) - \nabla\psi(\C[h^{t-1}]) \big\|_\infty
\|\C[\g^{t,*}]\|_1\\
&=& \big\|\nabla\psi(\C^{t-1}) - \nabla\psi(\C[h^{t-1}]) \big\|_\infty (1)\\
&=& \big\|\nabla\psi(\C^{t-1}) - \nabla\psi(\C[h^{t-1}]) \big\|_2\\
&\leq&
 \beta \big\| \C^{t-1} - \C[h^{t-1}] \big\|_2 \nonumber \\
&\leq&
 \beta \sqrt{d}\big\| \C^{t-1} - \C[h^{t-1}] \big\|_\infty \nonumber \\
 &\leq&
 \beta \sqrt{d} \rho', \label{eqn:lem-apx-2}
\end{eqnarray*}
where the third step follows from $\beta$-smoothness of $\psi$; the last step uses the property of the LMO and holds with probability at least $1-\delta$ (over draw of $S$).
One can similarly bound $\text{term}_3$. We thus have for all $1\leq t \leq T$, with probability at least $1-\delta$ (over draw of $S$),
\begin{eqnarray*}
{\max_{\g:\X\>\Delta_n} \langle \nabla\psi(\C[h^{t-1}]), \C[\g] \rangle  - \langle \nabla\psi(\C[h^{t-1}]), \C[\tilde h^t] \rangle }
 &\leq&
L \rho + 2\beta\sqrt{d}\rho',
\end{eqnarray*}
as desired.
\end{proof}

We are now ready to prove Theorem \ref{thm:FW-unc}.
\begin{proof}[Proof of Theorem \ref{thm:FW-unc}]
Our proof shall make use of 
the standard convergence result for the Frank-Wolfe algorithm for minimizing a convex function over a convex set \citep{Jaggi13}. We will find it useful to first define the following quantity, referred to as the curvature constant in \cite{Jaggi13}. 
\begin{eqnarray*}
 C_\psi 
 &=&
 \sup_{\C_1,\C_2\in\cC, \gamma\in[0,1]} \frac{2}{\gamma^2} \Big( \psi\big(\C_1+\gamma(\C_2-\C_1)\big) -\psi\big(\C_1\big) - \gamma \big\langle \C_2-\C_1,\nabla\psi(\C_1) \big\rangle \Big).
 \end{eqnarray*}
Also, define two positive scalars $\epsilon_S$ and $\delta_\text{apx}$ required in the analysis of \cite{Jaggi13}:
\begin{eqnarray*}
\epsilon_S &=& L\rho + 2\beta\sqrt{d}\rho'\\
\delta_\text{apx} &=& \frac{(T+1)\epsilon_S}{C_\psi},
\end{eqnarray*}
where $\delta \in (0, 1]$ is as in the theorem statement. Further, let the classifiers $\tilde h^1,\ldots,\tilde h^T$, and $h^0,\ldots,h^T$ be as defined in Algorithm \ref{alg:FW}.
We then have from Lemma \ref{lem:approximation-factor-FW} that the following holds with probability at least $1-\delta$, for all $1\leq t \leq T$,
\begin{eqnarray}
 \left\langle \nabla\psi\left(\C[h^{t-1}]\right), \C[\tilde h^t] \right\rangle 
 &\leq& \min_{\g:\X:\>\Delta_n} \left\langle \nabla\psi\left(\C\left[h^{t-1}\right]\right), \C\left[\g\right] \right\rangle + \epsilon_S \nonumber \\
 &=& \min_{\C\in\cC} \left\langle \nabla\psi\left(\C\left[h^{t-1}\right]\right), \C \right\rangle + \epsilon_S \nonumber \\
 &=& \min_{\C\in\ocC} \left \langle \nabla\psi\left(\C\left[h^{t-1}\right]\right), \C \right \rangle + \frac{1}{2}\delta_\text{apx} \frac{2}{T+1} C_\psi \nonumber \\
 &\leq& \min_{\C\in\ocC} \langle \nabla\psi\left(\C\left[h^{t-1}\right]\right), \C \rangle + \frac{1}{2}\delta_\text{apx} \frac{2}{t+1} C_\psi \;. \label{eqn:FW-iterates-prop-2}
\end{eqnarray}
Also observe that for the two sequences of iterates given by the confusion matrices of the above classifiers,
\begin{eqnarray}
 \C[h^t] &=& \left(1-\frac{2}{t+1}\right)\C[h^{t-1}] + \frac{2}{t+1} \C[\tilde h^t], \label{eqn:FW-iterates-prop-1} \;.
\end{eqnarray}
for all $1\leq t \leq T$. Based on \eqref{eqn:FW-iterates-prop-2} and \eqref{eqn:FW-iterates-prop-1}, one can now apply the result of \cite{Jaggi13}.

In particular, the sequence of iterates $\C[h^0],\C[h^1],\ldots,\C[h^T]$ can be considered as the sequence of iterates arising from running the Frank-Wolfe optimization method to minimize $\psi$ over $\overline\cC$ with a linear optimization oracle that is $\frac{1}{2}\delta_\text{apx} \frac{2}{t+1} C_\psi$ accurate at iteration $t$. Since $\psi$ is a convex function over the convex constraint set $\ocC$, one has from  Theorem 1 in \cite{Jaggi13} that the following convergence guarantee holds with probability at least $1-\delta$:
\begin{eqnarray}
\psi(\C[\bar{h}]) ~=~ \psi(\C[h^T])
 &\leq& \min_{\C\in\ocC} \psi(\C) + \frac{2C_\psi}{T+2} (1+\delta_\text{apx})  \nonumber\\
 &=& \min_{\C\in\ocC} \psi(\C) + \frac{2C_\psi}{T+2} + \frac{2\epsilon_S (T+1)}{T+2}  \nonumber \\
 &\leq& \min_{\C\in\ocC} \psi(\C) + \frac{2C_\psi}{T+2} +2\epsilon_S  \label{eqn:jaggi}
 \end{eqnarray}

We can further upper bound $C_\psi$ 
in terms of the the smoothness parameter of $\psi$:
\begin{eqnarray*}
 C_\psi 
 &=&
 \sup_{\C_1,\C_2\in\cC, \gamma\in[0,1]} \frac{2}{\gamma^2} \Big( \psi\big(\C_1+\gamma(\C_2-\C_1)\big) -\psi\big(\C_1\big) - \gamma \big\langle \C_2-\C_1,\nabla\psi(\C_1) \big\rangle \Big)\\
 &\leq&
 \sup_{\C_1,\C_2\in\cC, \gamma\in[0,1]} \frac{2}{\gamma^2} \Big( \frac{\beta}{2} \gamma^2||\C_1-\C_2||^2_2 \Big)
~=\,
 4\beta\;,
\end{eqnarray*}
where the second step follows from the $\beta$-smoothness of $\psi$. Substituting back in \eqref{eqn:jaggi}, we finally have with probability at least $1-\delta$,
\begin{eqnarray*}
\psi(\C[\bar{h}])
  &\leq& \min_{\C\in\ocC} \psi(\C) + \frac{8\beta}{T+2} + 2\epsilon_S
  ~=~ \min_{\C\in\ocC} \psi(\C)  + \frac{8\beta}{T+2} + 2L\rho + 4\beta\sqrt{d}\rho'.
\end{eqnarray*}
Setting $T = 1/\epsilon$ completes the proof.
\end{proof}

\subsection{Proof of Theorem \ref{thm:gda-unc} (GDA for Unconstrained Problems)}
\label{app:proof-gda-unc}
Theorem \ref{thm:gda-unc} follows from Theorem \ref{thm:gda-con} under the special case of $K = 0$. The algorithms for the
constrained and unconstrained case become identical and the same bounds apply with $r = \infty$. Please see Appendix \ref{app:proof-gda-con} for proof of Theorem \ref{thm:gda-con}.

\subsection{Proof of Theorem \ref{thm:ellipsoid} (Ellipsoid For Unconstrained Problems)}
\label{app:proof-ellipsoid-unc}
Theorem \ref{thm:ellipsoid} follows from Theorem \ref{thm:ellipsoid-con} under the special case of $K=0$. The algorithms for the constrained and unconstrained case become identical and the same bounds apply with $r=\infty$. Please see Appendix \ref{app:proof-ellipsoid} for proof of Theorem \ref{thm:ellipsoid-con}.

\subsection{Proof of Theorem \ref{thm:bisection-unc} (Bisection For Unconstrained Problems)}
\label{app:proof-bisection}
Theorem \ref{thm:bisection-unc} follows from Theorem \ref{thm:bisection-con} under the special case of $K=0$ and $T' = 1$.  Please see Appendix \ref{app:proof-bisection-con} for proof of Theorem \ref{thm:bisection-con}.
\subsection{Proof of Theorem \ref{thm:FW-con} (SplitFW for Constrained Problems)}
\label{app:proof-split-fw}
\begin{thm*}[(Restated) Convergence of SplitFW algorithm]
Fix $\epsilon>0$. Let $\psi: [0,1]^d \> [0,1]$ be convex, $\beta$-smooth and $L$-Lipschitz w.r.t.\ the $\ell_2$-norm, and let $\phi_1,\ldots, \phi_K: [0,1]^d \> [-1,1]$ be convex and $L$-Lipschitz w.r.t.\ the $\ell_2$-norm. Let $\Omega$ in Algorithm \ref{alg:FW-con} be a $(\rho , \rho', \delta)$-approximate LMO for sample size $N$.
Let $\bar{h}$ be a classifier returned by Algorithm \ref{alg:FW-con} when run for $T$ iterations with some $\zeta>0$. 
Let the strict feasibility condition in Assumption \ref{assp:strict-feasibility} hold for radius $r>0$. Then,  with probability $\geq 1 - \delta$ over draw of $S \sim D^N$, after $T = \cO(1/\epsilon^2)$ iterations, the classifier $\bar{h}$ is near-optimal and near-feasible:
\[
\textbf{Optimality:}~~\psi(\C[\bar{h}]) \,\leq\, \min_{\C \in \cC, \phi_k(\C)\leq 0, \forall k}\,\psi(\C) \,+\,\cO\left(\epsilon + \sqrt{\rho^\eff}\right);
\vspace{-5pt}
\]
\[
\textbf{Feasibility:}~~\phi_k(\C[\bar{h}]) \,\leq\,  \cO\left( \epsilon + \sqrt{\rho^\eff}  \right) ,~\forall k \in [K].
\]
where $\rho^\eff = \rho+\sqrt{d}\rho'$ and the $\cO$ notation hides constant factors independent of $\rho, \rho', T, \epsilon, d, K$ for small enough $\rho, \rho'$ and large enough $T$ (or small enough $\epsilon$).
\end{thm*}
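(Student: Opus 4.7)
The plan is to adapt the convergence analysis of the split Frank--Wolfe method of \cite{Gidel2018} for minimizing a smooth convex objective over the intersection of two convex sets (each with its own linear minimization oracle) to the approximate-LMO setting of Definition \ref{defn:lmo}. The natural starting point is an analogue of Lemma \ref{lem:approximation-factor-FW}: I would show that at each iteration $t$, the classifier $\tilde{h}^t$ returned by $\Omega(\L^t; S)$ satisfies $\langle \nabla_\C \cL^{\aug}(\C^{t-1}, \F^{t-1}, \blambda^{t-1}), \C[\tilde{h}^t] - \C \rangle \leq \cO(L\rho + \beta\sqrt{d}\rho')$ for every $\C \in \cC$, uniformly in $t$ with high probability, by splitting the error into (a) the suboptimality of $\Omega$ on the true gradient and (b) the gap between $\C^{t-1}$ and $\C[h^{t-1}]$ together with the $\beta$-smoothness of $\cL^{\aug}$ in $\C$ (noting that the Lagrangian is smooth in $\C$ with modulus $\beta + \zeta$ and Lipschitz with modulus depending on $L$, $\zeta$, and the diameter of $\cF$). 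The $\F$-side linear minimization is exact because it is performed on the deterministically-known set $\cF$, so no additional LMO error is incurred there.

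With this per-iteration error bound in hand, I would plug it into the primal convergence analysis of \cite{Gidel2018}, which yields $\cO(1/T)$ convergence of the augmented Lagrangian value $\cL^{\aug}(\C^t, \F^t, \blambda^{t-1})$ to its optimum; the gradient-ascent update on $\blambda$ with step $\eta/t$ is handled exactly as in their proof. Accumulating the per-step errors produces an additive penalty of $\cO(\rho^\eff)$ on the final optimality gap for the Lagrangian. Because $\cL^{\aug}$ contains the penalty $\tfrac{\zeta}{2}\|\C-\F\|_2^2$, and because the optimum of the Lagrangian has $\C^\star = \F^\star$ with value $2\,\min_{\C\in\cC,\bphi(\C)\leq \0}\psi(\C)$ by strong duality (which holds under the strict feasibility Assumption \ref{assp:strict-feasibility}), this gives simultaneous control of $\psi(\C^t)+\psi(\F^t)$ and of $\|\C^t-\F^t\|_2^2$. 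Averaging over $t \in (T/2, T]$ and choosing $t_*$ as in line~13 of Algorithm \ref{alg:FW-con} then yields
\[
\psi(\C^{t_*}) + \psi(\F^{t_*}) - 2\psi^\star_\csubscript \;\leq\; \cO(1/\sqrt{T} + \rho^\eff), \qquad \tfrac{\zeta}{2}\|\C^{t_*} - \F^{t_*}\|_2^2 \;\leq\; \cO(1/\sqrt{T} + \rho^\eff),
\]
which is where the characteristic square root appears: taking the square root of the second inequality produces the $\cO(\epsilon + \sqrt{\rho^\eff})$ bound on $\|\C^{t_*} - \F^{t_*}\|_2$ after setting $T = 1/\epsilon^2$.

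The feasibility conclusion is then immediate from $\F^{t_*} \in \cF$ and $L$-Lipschitzness of each $\phi_k$:
\[
\phi_k(\C[\bar{h}]) \;\leq\; \phi_k(\F^{t_*}) + L\,\|\C^{t_*} - \F^{t_*}\|_2 \;\leq\; \cO\bigl(\epsilon + \sqrt{\rho^\eff}\bigr).
\]
For optimality, I would use strict feasibility in the standard way: a Slater point $\C'$ with $\bphi(\C') \leq -r\mathbf{1}$ guarantees the strong duality invoked above and bounds the optimal dual multipliers by $\cO(1/r)$, so the penalty weight $\zeta$ can be chosen independent of $\epsilon$. Combining the Lipschitzness-based inequality $\psi(\C^{t_*}) \leq \psi(\F^{t_*}) + L\|\C^{t_*}-\F^{t_*}\|_2$ with the bound on $\psi(\C^{t_*}) + \psi(\F^{t_*})$ and the fact that $\psi(\F^{t_*}) \geq \psi^\star_\csubscript - L\|\C^{t_*}-\F^{t_*}\|_2 - \text{Lipschitz slack}$ (because $\F^{t_*} \in \cF$ but may not lie in $\cC$; here one projects $\F^{t_*}$ onto $\cC$ using the Slater point or invokes convexity of the combined set as in \cite{Gidel2018}) yields the stated $\cO(\epsilon + \sqrt{\rho^\eff})$ optimality gap.

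The main technical obstacle will be the careful bookkeeping in the second paragraph: verifying that the analysis of \cite{Gidel2018} is robust to an additive LMO error $\cO(\rho^\eff)$ per iteration without degrading the $1/T$ rate on the augmented Lagrangian (and hence the $1/\sqrt{T}$ rate on the primal feasibility gap), and ensuring the constant in front of $\sqrt{\rho^\eff}$ is independent of $T$ and $d$. A secondary subtlety is that the returned classifier corresponds to $\C^{t_*} \in \cC$ rather than $\F^{t_*} \in \cF$, so both directions of the Lipschitz inequality between $\psi(\C^{t_*})$ and $\psi(\F^{t_*})$ must be invoked, and the projection argument used to turn $\F^{t_*}$ into a genuinely achievable (and feasible) reference point must not introduce any dependence on $\epsilon$ worse than $\sqrt{\rho^\eff}$.
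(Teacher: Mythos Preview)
Your overall route matches the paper's: adapt the split Frank--Wolfe analysis of \cite{Gidel2018} to an approximate LMO, obtain bounds on the duality gap and on $\|\C^{t_*}-\F^{t_*}\|_2^2$, and then turn these into optimality and feasibility via Lipschitzness and the Slater point. Two points in your argument need tightening to make this go through.

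First, the sentence ``this gives simultaneous control of $\psi(\C^t)+\psi(\F^t)$ and of $\|\C^t-\F^t\|_2^2$'' is not a consequence of Lagrangian convergence alone. The augmented Lagrangian contains the cross term $\langle \blambda^{t-1}, \C^t-\F^t\rangle$, which can be either sign; you cannot read off a bound on $\psi(\C^t)+\psi(\F^t)$ until you have bounded $\|\blambda^{t-1}\|$ at the \emph{iterate}. The paper does this in its Lemma~\ref{lem:duality-gap-to-primal}: the dual-gap bound (not just strong duality for the optimal multiplier) is combined with a Slater witness $\C'$ and the choice $\F'=\C'+\tfrac{r}{L\|\blambda\|}\blambda$ to show $\|\blambda^{t-1}\|_2\le \gamma := 2L/r+\zeta r/(2L)+\tau L/r$. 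Only then does the Cauchy--Schwarz step yield $\psi(\C)+\psi(\F)\le 2\psi(\C^*)+\tau+\gamma\sqrt{\kappa}$. Separately, Gidel's analysis gives the $\|\C^{t_*}-\F^{t_*}\|_2^2$ bound (and the duality-gap bound) directly, both at rate $\cO(1/T+\rho^{\eff})$, not $1/\sqrt{T}$; the $1/\sqrt{T}$ appears only after taking the square root of $\kappa$, which is why $T=\cO(1/\epsilon^2)$ is needed.

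Second, your optimality argument via ``$\psi(\F^{t_*}) \ge \psi^\star_\csubscript - \ldots$'' runs into the problem you flag: $\F^{t_*}\in\cF$ need not be near any point of $\cC\cap\cF$, and a projection step here is awkward. The paper avoids this entirely by the trick you already half-state: add $\psi(\C)+\psi(\F)\le 2\psi(\C^*)+\tau+\gamma\sqrt{\kappa}$ to $\psi(\C)-\psi(\F)\le L\sqrt{\kappa}$ and divide by two, giving $\psi(\C)\le \psi(\C^*)+\tau/2+(\gamma+L)\sqrt{\kappa}$ with no reference to $\psi(\F)$ on its own. A final $L\sqrt d\,\rho'$ term bridges $\C^{t_*}$ (which lies only in the fattened set $\cC_{\rho'}=\cC+B(\0,\sqrt d\rho')$) to $\C[\bar h]\in\cC$.
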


There are two key steps to the proof of this theorem.
First, we show that the use of an approximate LMO in steps 9, 11 of Algorithm~\ref{alg:FW-con} does not affect the convergence results by \cite{Gidel2018}. Specifically, they measure the sub-optimality of an iterate using a duality gap measure.  
In Lemma \ref{lem:split-fw} we show that a similar bound on the duality gap can be derived with an approximate LMO over $\cC$. 
Second, we use the strict feasibilty assumption to convert a bound on the duality gap into a bound on the sub-optimality of the in problem \eqref{eq:op-con-interesection} in Lemma \ref{lem:duality-gap-to-primal}.



We will find it useful to first define the following quantities: fat achievable set, dual functions, and the primal and dual gaps.

\begin{defn}[Fat achievable set]
The set $\cC_{\rho'}$ is defined as follows:
\[
\cC_{\rho'} =\left( \cC + B(\0,\sqrt{d} \rho') \right) \cap \Delta_d  = \{\C+\r : \C \in \cC, \|\r\|_2 \leq \sqrt{d}\rho', \C+\r \in \Delta_d \}.
\]
\end{defn}
The set $\cC_{\rho'}$ is defined so that the iterates $\widetilde \C^t$ and $\C^t$ lie within $\cC_\rho'$ with high probability. 

\begin{defn}[Dual function]
The dual function $f^{\aug}:\R^d\>\R$ is defined as
\[
f^{\aug}(\blambda) = \min_{\C \in \cC_{\rho'}, \F\in \cF} \cL^{\aug}(\C, \F, \blambda).
\]
\end{defn}

We also use $\widehat \C(\blambda), \widehat \F(\blambda)$ to denote any arbitrary minimizer of $\cL^{\aug}(.,.,\blambda)$ over $\cC_{\rho'} \times \cF$. Thus $f^{\aug}(\blambda) = \cL^{\aug}(\widehat \C(\blambda), \widehat \F(\blambda), \blambda)$.
Further, let the maximum value of the dual function be $f^{\aug *}$. By the min-max theorem, we have that
\[ 
f^{\aug *} = \max_{\blambda\in\R^d} \min_{\C\in \cC_{\rho'}, \F \in \cF} \cL^{\aug}(\C, \F, \blambda) = \min_{\C\in \cC_{\rho'}, \F \in \cF} \max_{\blambda\in\R^d} \cL^{\aug}(\C, \F, \blambda) = \min_{\C \in \cC_{\rho'} \cap \cF} 2\psi(\C).
\]
The last equality follows from the observation that if $\C \neq \F$ then $\max_{\blambda\in\R^d} \cL^{\aug}(\C, \F, \blambda)=\infty$. 
Next, let $\C^* \in \cC_{\rho'} \cap \cF$ such that 
\[
\psi(\C^*) = \min_{\C \in \cC_{\rho'} \cap \cF} \psi(\C).
\]
and let $\W^* = \argmax_{\blambda\in\R^d} f^{\aug}(\blambda) \subseteq \R^d$.

\begin{defn}[Primal and dual gaps]
For any $\C \in \cC_{\rho'}, \F \in \cF$ and $\blambda \in \R^d$, we define the primal and dual gaps as follows:
\begin{align*}
    \Delta^{(\text{p})}(\C, \F, \blambda) & = \cL^{\aug}(\C, \F, \blambda) - \min_{\C\in\cC_{\rho'}, \F \in \cF} \cL^{\aug}(\C, \F, \blambda) 
    = \cL^{\aug}(\C, \F, \blambda) - f^{\aug}(\blambda);\\
    \Delta^{(\text{d})}(\blambda) &= f^{\aug *} - f^{\aug}(\blambda) 
    = 2\psi(\C^*) - f^{\aug}(\blambda),
\end{align*}
and define the total gap as $\Delta(\C, \F, \blambda) = \Delta^{(p)}(\C, \F, \blambda) + \Delta^{(d)}(\blambda)$. 
\end{defn}

In the theorems and lemmas below, we will refer to the iterates $\C^t, \F^t, \widetilde \C^t, \widetilde \F^t$ in the Algorithm \ref{alg:FW-con}. We use the the short-hands $\Delta_t, \Delta_t^{(\text{p})}, \Delta_t^{(\text{d})}$ for representing the same primal and dual gaps evaluated at, $(\C^{t+1}, \F^{t+1}, \blambda^t)$.





We will require the use of Theorem 1 and Corollary 1 from \cite{Gidel2018}, which we restate below in our notation.  We use the following facts to transform their Theorem. The norms of vectors correspond to the $\ell_2$-norm unless specified otherwise. We also overload notation and refer to the concatenation of two vectors $\C,\F$ as $[\C,\F]$.
\begin{align*}
| \psi(\C) + \psi(\F) - \psi(\C') - \psi(\F') | 
    & \leq 2L \|[\C-\C',\F-\F']\|_2 \\
\max\left(\text{eigen-val}\left([I, -I]^\top[-I, I] \right) \right)
&= 2 \\
(\text{diam}(\cF))^2 
&\leq \text{diam}(\Delta_d)^2 \leq 2 \\
(\text{diam}(\cC_{\rho'}))^2
&\leq  \text{diam}(\Delta_d)^2 \leq 2 \\
(\text{diam}(\cC_{\rho'}\times\cF))^2
&\leq 4,
\end{align*}
where $\|M\|$ of a matrix $M$ refers to its spectral norm,  and $\text{diam}(\mathcal A)$ refers to the diameter of a set $\mathcal A$,  i.e.  the maximum $\ell_2$ distance between any two elements from the set $\mathcal A$.  





\begin{thm*} [Restated from \cite{Gidel2018}]
There exists a constant $\alpha>0$ such that 
\begin{align*}
f^{\aug *} - f^{\aug}(\blambda) 
&\geq \frac{1}{8L_\zeta } \min\left\{ \alpha^2 \textup{dist}(\blambda, \W^*)^2, \alpha L_\zeta \Zeta^2 \textup{dist}(\blambda, \W^*) \right\};  \\
|| \nabla f^{\aug}(\blambda) ||_2 
&\geq \frac{1}{8L_\zeta}\min\left\{ \alpha^2 \textup{dist}(\blambda, \W^*),  \alpha L_\zeta \Zeta^2 \right\}; \\
|| \nabla f^{\aug}(\blambda) ||_2 
&\geq \frac{\alpha}{\sqrt{8 L_\zeta }}\min\left\{\sqrt{f^{\aug *} - f^{\aug}(\blambda)},\sqrt{ \frac{L_\zeta \Zeta^2}{2}} \right\},
\end{align*}
where $L_\zeta = 2L+2\zeta$ and $\textup{dist}$ represents the standard distance function between a point and a set,  i.e.
$\text{dist}(\x,\mathcal A) = \min_{\x' \in \mathcal A} \|\x - \x'\|$.
\end{thm*}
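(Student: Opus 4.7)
The plan is to verify that the hypotheses of the corresponding result in \cite{Gidel2018} are met by the augmented Lagrangian $\cL^{\aug}(\C,\F,\blambda)$ defined in \eqref{eqn:augmented-lagrangian}, restricted to the compact convex domain $\cC_{\rho'}\times \cF$, and then invoke their theorem essentially verbatim. So this proof is largely bookkeeping: we are recasting our quantities into the form they analyze and tracking the constants. First, I would check that $\cL^{\aug}$ fits their template. The function $\psi(\C)+\psi(\F)$ is convex and $2L$-Lipschitz on the product set, the coupling $\langle \blambda, \C-\F\rangle$ is linear in $(\C,\F)$, and the quadratic penalty $\frac{\zeta}{2}\|\C-\F\|_2^2$ contributes a constant $2$ to the eigenvalue of the Hessian in the direction orthogonal to $\C=\F$. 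With the diameters of $\cF$, $\cC_{\rho'}$, and $\cC_{\rho'}\times \cF$ already bounded (at most $\sqrt{2},\sqrt{2},2$ respectively), the effective Lipschitz/smoothness constant $L_\zeta=2L+2\zeta$ arises precisely as in their analysis.

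Second, I would establish the Hoffman-type error bound that furnishes the constant $\alpha>0$. Here $\cC_{\rho'}$ is a bounded convex set (a Minkowski sum of the convex $\cC$ with a ball, intersected with the simplex), and $\cF=\{\F\in\Delta_d:\bphi(\F)\le\0\}$ is a bounded convex set. Under Assumption~\ref{assp:strict-feasibility}, the primal set $\cC_{\rho'}\cap\cF$ is non-empty and has a strictly feasible point (for small enough $\rho'$). This yields a Hoffman-style regularity/error bound that controls how fast one can move from any $\C\in\cC_{\rho'}$ toward $\cC_{\rho'}\cap\cF$; this is precisely the hypothesis under which Gidel et al.\ derive the quadratic-growth (Polyak--{\L}ojasiewicz) inequalities for the dual function of an augmented Lagrangian. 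Denote this constant by $\alpha$; its existence is guaranteed by the polyhedral/compact convex structure, and its precise value depends only on $\cC,\cF,\rho',r$.

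Third, I would derive the three claimed inequalities in order. The quadratic-growth inequality
\[
f^{\aug*} - f^{\aug}(\blambda) \,\ge\, \tfrac{1}{8L_\zeta}\min\bigl\{\alpha^2\,\text{dist}(\blambda,\W^*)^2,\ \alpha L_\zeta \Zeta^2 \text{dist}(\blambda,\W^*)\bigr\}
\]
follows by taking a point $\bar\blambda\in\W^*$ closest to $\blambda$ in Euclidean distance, using concavity of $f^{\aug}$ and $(2L_\zeta)$-smoothness (on the relevant sublevel set) combined with the Hoffman error bound applied to the primal minimizers $\widehat\C(\blambda),\widehat\F(\blambda)$. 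The $\min$ arises from the standard split between the ``near'' regime (quadratic behavior) and the ``far'' regime (linear growth bounded by diameter $\Zeta$). The gradient lower bound follows by relating $\|\nabla f^{\aug}(\blambda)\|_2$ to $\text{dist}(\blambda,\W^*)$ via concavity: $f^{\aug*}-f^{\aug}(\blambda)\le \|\nabla f^{\aug}(\blambda)\|_2 \cdot \text{dist}(\blambda,\W^*)$. Combining with the first inequality and simplifying yields the second. The third (the Polyak--{\L}ojasiewicz form) follows from the second by substituting $\text{dist}(\blambda,\W^*)$ using the first inequality and simplifying the resulting $\min$ expression.

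The main obstacle will be verifying the Hoffman-type error bound with the precise constant $\alpha>0$ independent of $\blambda$, since the set $\cC_{\rho'}$ is a Minkowski perturbation of $\cC$ which itself may not be polyhedral. I expect this to go through because $\cC_{\rho'}$ remains a compact convex set and the constraints defining $\cF$ are convex and Lipschitz, but the precise dependence of $\alpha$ on $r$ and $\rho'$ requires care. Once $\alpha$ is in hand, the three inequalities are obtained by a direct adaptation of Gidel et al.'s argument to our setting, with the diameter and Lipschitz calculations substituted as noted at the start.
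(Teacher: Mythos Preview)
The paper does not prove this theorem at all: it is explicitly labeled as a restatement of Theorem~1 and Corollary~1 from \cite{Gidel2018}, and the only work the paper does is the bookkeeping immediately preceding the statement (the $2L$-Lipschitz bound on $\psi(\C)+\psi(\F)$, the eigenvalue $2$ of $[I,-I]^\top[-I,I]$, and the diameter bounds on $\cF$, $\cC_{\rho'}$, and their product). Your first paragraph matches this exactly, and that is all the paper claims.

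Your second and third paragraphs go well beyond what the paper does. The paper does not establish the Hoffman-type constant $\alpha$, does not sketch the quadratic-growth argument, and does not derive the Polyak--\L{}ojasiewicz form; it simply imports the result as a black box once the constants are matched. So while your additional discussion is a reasonable outline of how Gidel et al.\ themselves argue, it is not part of the paper's proof and is not needed here. In particular, the concern you raise about whether the Hoffman bound holds for the non-polyhedral set $\cC_{\rho'}$ is a concern about the applicability of the cited result, not something the present paper addresses.
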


We will fix a probability of failure $\delta$ throughout the rest of the proof, and assume that the training sample $S$ is ``good'', in which case the empirical confusion matrix output by the $\Omega$  is $\rho'$ close to the true confusion matrix of the classifier whenever it is called by Algorithm \ref{alg:FW-con}.

We then show below that, if the total gap is low then the resulting classifier is close to optimal and feasible.

\begin{lem}
\label{lem:duality-gap-to-primal}
Let the assumptions in Theorem \ref{thm:FW-con} hold. Let $g:\X\>\Delta_n$ be a randomized classifier, and $\C\in\Delta_d$ be such that $\|\C - \C[g]\|_\infty \leq \rho'$. Let $\F \in \cF, \blambda\in\R^d$ be such that $\Delta(\C,\F,\blambda) \leq \tau$ and $||\C -\F||_2^2 \leq \kappa$. We then have:
\vspace{0.3em}
\begin{align*}
\psi(C[g]) &\leq \min_{\C' \in \cC \cap \cF} \psi(\C') + \frac{\tau}{2} + (\gamma+L) \sqrt{\kappa} + L \sqrt{d} \rho' \\
\|\bphi(\C[g]) \|_\infty & \leq L(\sqrt{d}\rho' + \sqrt{\kappa}),
\end{align*}
where $\gamma=\frac{2L}{r} + \frac{\zeta r}{2L} + \frac{\tau L}{r}$.
\end{lem}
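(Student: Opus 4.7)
The plan is to establish the feasibility and optimality bounds separately, each building on Lipschitzness to transfer bounds from the auxiliary pair $(\C,\F)$ to the true confusion matrix $\C[g]$, with the optimality bound additionally requiring a bound on $\|\blambda\|_2$ that is extracted from the strict feasibility assumption. Throughout, I use the fact that since both $\Delta^{(p)}(\C,\F,\blambda)$ and $\Delta^{(d)}(\blambda)$ are non-negative and sum to $\Delta(\C,\F,\blambda)\leq \tau$, each is individually at most $\tau$. Let me write $p^{*}=\psi(\C^{*})=\min_{\C'\in\cC_{\rho'}\cap\cF}\psi(\C')$, so that $f^{\aug *}=2p^{*}$, and note $p^{*}\leq \min_{\C'\in\cC\cap\cF}\psi(\C')$ because $\cC\subseteq\cC_{\rho'}$.

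The feasibility bound is immediate from $L$-Lipschitzness of $\phi_k$ and $\bphi(\F)\leq \0$: the triangle inequality gives $\|\C[g]-\F\|_2\leq \|\C[g]-\C\|_2+\|\C-\F\|_2\leq \sqrt{d}\rho'+\sqrt{\kappa}$, hence $\phi_k(\C[g])\leq \phi_k(\F)+L\|\C[g]-\F\|_2\leq L(\sqrt{d}\rho'+\sqrt{\kappa})$ for every $k$.

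For the optimality bound, first use Lipschitzness of $\psi$ to pay a cost of $L\sqrt{d}\rho'$ to reduce to bounding $\psi(\C)$. Rearranging the definition of $\cL^{\aug}$ yields the identity
\[
\psi(\C)+\psi(\F)=\cL^{\aug}(\C,\F,\blambda)-\langle\blambda,\C-\F\rangle-\tfrac{\zeta}{2}\|\C-\F\|_2^{2}.
\]
Since $\Delta^{(p)}+\Delta^{(d)}\leq\tau$, we have $\cL^{\aug}(\C,\F,\blambda)\leq f^{\aug *}+\tau=2p^{*}+\tau$. Combining with Cauchy--Schwarz, dropping the non-positive $-\tfrac{\zeta}{2}\|\C-\F\|_2^{2}$ term, and using $\|\C-\F\|_2\leq\sqrt{\kappa}$ produces $\psi(\C)+\psi(\F)\leq 2p^{*}+\tau+\|\blambda\|_2\sqrt{\kappa}$. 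Adding to this the Lipschitz bound $\psi(\C)-\psi(\F)\leq L\sqrt{\kappa}$ and dividing by two gives
\[
\psi(\C)\leq p^{*}+\tfrac{\tau}{2}+\tfrac{1}{2}(\|\blambda\|_2+L)\sqrt{\kappa}.
\]

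The principal technical obstacle is bounding $\|\blambda\|_2$ using strict feasibility together with the dual-gap bound $\Delta^{(d)}(\blambda)\leq\tau$. The plan is a perturbation argument: pick any strictly feasible $\C'\in\cC\subseteq\cC_{\rho'}$ with $\bphi(\C')\leq -r$ (guaranteed by Assumption \ref{assp:strict-feasibility}), fix $\C=\C'$ in the dual function, and choose $\F_{t}=\C'+t\,\blambda/\|\blambda\|_2$. Lipschitzness of each $\phi_k$ ensures $\F_{t}\in\cF$ whenever $t\leq r/L$. Plugging this test point into $f^{\aug}(\blambda)\leq \cL^{\aug}(\C',\F_t,\blambda)$, using Lipschitzness of $\psi$ to bound $\psi(\F_t)\leq\psi(\C')+Lt$, and selecting $t=r/L$ yields
\[
f^{\aug}(\blambda)\leq 2\psi(\C')+r-\tfrac{r}{L}\|\blambda\|_2+\tfrac{\zeta r^{2}}{2L^{2}}.
\]
Combined with the lower bound $f^{\aug}(\blambda)\geq f^{\aug *}-\tau=2p^{*}-\tau$, rearranging produces $\|\blambda\|_2\leq \tfrac{2L(\psi(\C')-p^{*})}{r}+L+\tfrac{\tau L}{r}+\tfrac{\zeta r}{2L}\leq \gamma+L$, where the last inequality uses $\psi(\C')-p^{*}\leq 1$. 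Substituting into the displayed bound on $\psi(\C)$, noting $\tfrac{1}{2}((\gamma+L)+L)=\tfrac{\gamma}{2}+L\leq \gamma+L$, adding back the $L\sqrt{d}\rho'$ Lipschitz slack, and replacing $p^{*}$ by the larger $\min_{\C'\in\cC\cap\cF}\psi(\C')$ completes the proof. The only subtlety is ensuring $\F_t\in\Delta_d$, which is automatic under mild interiority of $\C'$; this can be handled by taking $t$ slightly smaller and letting $t\uparrow r/L$, or by projecting and absorbing lower-order terms.
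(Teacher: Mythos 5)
Your proof is correct and tracks the paper's argument closely: you split the total gap $\Delta \leq \tau$ into its non-negative primal and dual parts, use the primal gap together with $f^{\aug}(\blambda)\leq f^{\aug*}$ to bound $\cL^{\aug}(\C,\F,\blambda)$, apply Cauchy--Schwarz and drop the quadratic penalty, and then combine with $\psi(\C)-\psi(\F)\leq L\sqrt{\kappa}$. The perturbation argument bounding $\|\blambda\|_2$ via a strictly feasible $\C'$ and a test point pushed in the $\blambda$ direction is also the same device the paper uses.

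The one genuine divergence is how you bound $\psi(\F_t)$: you invoke Lipschitzness of $\psi$ to get $\psi(\F_t)\leq\psi(\C')+Lt$ and therefore obtain $\|\blambda\|_2\leq\gamma+L$, whereas the paper relies on the range bound $\psi(\cdot)\in[0,1]$ to obtain $\|\blambda\|_2\leq\gamma$ directly. Both are valid since after the add-and-halve step the displayed coefficient $(\gamma+L)\sqrt{\kappa}$ has a factor-of-two slack that absorbs the extra $L$; you correctly notice this. Your flagged subtlety about $\F_t$ landing in $\Delta_d$ is a legitimate gap, but it is present identically in the paper's proof (the paper only argues $\F'\in\cF$ via the Lipschitz ball around $\C'$ and ignores the simplex constraint), so this is a shared imprecision rather than a defect of your write-up. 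Likewise, both you and the paper establish $\phi_k(\C[g])\leq L(\sqrt{d}\rho'+\sqrt{\kappa})$ for each $k$, which bounds $\max_k\phi_k$ but not $\max_k|\phi_k|$ as the lemma's $\|\cdot\|_\infty$ notation literally demands; again, a shared imprecision in the statement, not a flaw introduced by you.
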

\begin{proof}
The second inequality in the lemma trivially follows from the triangle inequality and the $\ell_2$ Lipschitzness of the constraint functions $\phi_k$, i.e. for any $k\in[K]$
\begin{align*}
\phi_k(\C[g]) 
&\leq \phi_k(\C) + L\|\C - \C[g]\|_2 \\
&\leq \phi_k(\C) + L\sqrt{d}\rho' \\
&\leq \phi_k(\F) + L \|\F -\C\|_2 + L\sqrt{d}\rho' \\
&\leq L\sqrt{\kappa} + L\sqrt{d} \rho'
\end{align*}
We will prove the first inequality below. By construction, $\C \in \cC_{\rho'}$. As $\Delta(\C, \F, \blambda) \leq \tau$, we have
\begin{align}
    \Delta^{(\text{p})}(\C, \F, \blambda) &= \cL^{\aug}(\C, \F, \blambda) - \min_{\C' \in \cC_{\rho'}, \F' \in \cF} \cL^{\aug}(\C', \F', \blambda) \leq \tau \label{eq:primal-gap-bound}\\
    \Delta^{(\text{d})}(\blambda) &= 2\psi(\C^*) - \min_{\C' \in \cC_{\rho'}, \F' \in \cF} \cL^{\aug}(\C', \F', \blambda) \leq \tau \label{eq:dual-gap-bound}
\end{align}
where $\C^* \in \argmin_{\C' \in \cC_{\rho'} \cap \cF}\psi(\C')$. Setting $\C'=\F'=\C^*$ in the second term of Eqn.~\eqref{eq:primal-gap-bound}:
\begin{equation}
\label{eq:bound-1-dualitygap}
    \psi(\C) + \psi(\F) +  \blambda^T (\C - \F) + \frac{\zeta}{2}\|\C - \F\|_2^2 \leq 2\psi(\C^*) + \tau ~.
\end{equation}

The variables $\C', \F'$ in the second term of \eqref{eq:dual-gap-bound} are set as follows. Let $\C'=\C[h]$ be a strictly feasible point, i.e. $\bphi(\C') \leq -r$. Such a $h$ exists by Assumption \ref{assp:strict-feasibility}.  As the constraint functions $\phi_k$ are all $L$-Lipschitz w.r.t.\ $\ell_2$ norm, a ball of radius $\frac{r}{L}$ centered at $\C'$ is a subset of $\cF$.  Further, let $\F' = \C'+\frac{r}{L||\blambda||}\blambda$. We then have:
\begin{equation}
\label{eq:bound-2-dualitygap}
2\psi(\C^*) \leq \psi(\C') + \psi(\F') - \frac{r \|\blambda\|_2}{L} + \frac{\zeta r^2}{2 L^2} + \tau.
\end{equation}
This can be reduced to a bound on $||\blambda||_2$,
\begin{equation}
\label{eq:w-norm-bound}
\| \blambda\|_2 \leq \frac{2L}{r} + \frac{\zeta r}{2L} + \frac{\tau L}{r} = \gamma.
\end{equation}
From Cauchy-Schwarz inequality, \eqref{eq:bound-1-dualitygap} becomes:
\begin{align}
\psi(\C) + \psi(\F) 
&\leq 2\psi(\C^*) + \tau - \blambda^\top (\C - \F) - \frac{\zeta}{2}\|\C-\F\|_2^2  \nonumber \\
&\leq 2\psi(\C^*) + \tau +  \gamma \sqrt{\kappa}. \label{eq:bound-3-dualitygap}
\end{align}
As $\psi$ is $L$-Lipschitz, we have
\begin{equation}
\label{eq:bound-4-dualitygap}
\psi(\C) - \psi(\F) \leq L \|\C - \F\|_2 \leq L \sqrt{\kappa}.
\end{equation}
Adding \eqref{eq:bound-3-dualitygap} and \eqref{eq:bound-4-dualitygap} and dividing by 2, we get
\begin{equation*}
\psi(\C) \leq \min_{\C' \in \cC_{\rho'} \cap \cF} \psi(\C') + \frac{\tau}{2} +(\gamma + L)\sqrt{\kappa}.
\end{equation*}
As $\cC_{\rho'} \supseteq \cC$, and $\psi$ is $L$-Lipschitz, we have
\begin{align*}
\psi(C[\g]) 
& \leq \psi(\C) + L \|\C - C[\g]\|_2    \\
& \leq \min_{\C' \in \cC_{\rho'} \cap \cF} \psi(\C') + \frac{\tau}{2} + (\gamma+L) \sqrt{\kappa} + L \sqrt{d} \rho'  \\
& \leq \min_{\C' \in \cC \cap \cF} \psi(\C') + \frac{\tau}{2} + (\gamma+L) \sqrt{\kappa} + L \sqrt{d} \rho',
\end{align*}
which completes the proof.
\end{proof}

The lemma below bounds the duality gap $\Delta_t$ and $\|\C_t - \F_t \|^2$ based on the proof of Theorem 2 in \cite{Gidel2018}. The only difference is the approximate nature of the LMO, that simply contributes an additive factor of $\cO(\rho+\sqrt{d}\rho')$ to the convergence rate of $\cO(1/t)$. The proof is highly technical, and we skip it for brevity. The details can be inferred from \cite{Tavker+2020}, which contains the full proof using a different notation.


\begin{lem}
\label{lem:split-fw}
Let the assumptions in Theorem \ref{thm:FW-con} hold. Let $t_* \in [T]$ be such that $\bar h =  h^{t_*}$ in Algorithm \ref{alg:FW-con}. Let $\Omega$  be a $(\rho, \rho', \delta)$-approximate LMO. For large enough $T$ and $\zeta$,  with probability $1-\delta$ over draw of $S \sim D^N$ we have that
\begin{align*}
 \Delta(\C_{t_*}, \F_{t_*}, \blambda_{t_*-1}) & \leq   c_1 (\rho+\sqrt{d}\rho') + \frac{c_2}{T};
 \\
 || \C_{t_*} - \F_{t_*} ||_2^2 & \leq c_3 (\rho+\sqrt{d}\rho')  + \frac{c_4}{T},
\end{align*}
where $h_{t_*},\F_{t_*},\blambda_{t_*-1}$ are as defined in Algorithm \ref{alg:FW-con}. The constants $c_1, c_2, c_3$ and $c_4$ are independent of the dimension $d$ and number of constraints $K$, approximation constants $\rho, \rho'$ and iterations $T$. More explicitly, $c_1 = \frac{4+12\zeta}{a\zeta}$, $c_2 = 16(\beta + 2\zeta) (t_0 + 2)$, $c_3 = \frac{8+24\zeta}{\zeta}\left[ 1 + \frac{2}{a}\right]$, $c_4=8\left[32(\beta+2\zeta) \frac{(t_0 + 2)}{a} + \frac{64a(\beta+2\zeta)}{\zeta^2} \right]$, $a=\min\left[ \frac{2}{\zeta}, \frac{\alpha^{2}}{8(\beta + 2\zeta)} \right]$, and $t_0$ is a constant $> 0$. 
\end{lem}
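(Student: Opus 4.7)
The plan is to follow the overall structure of the proof of Theorem 2 in \cite{Gidel2018} for the convergence of the augmented Lagrangian Frank-Wolfe method, while carefully tracking how the approximation errors $\rho$ and $\sqrt{d}\rho'$ from the LMO propagate through each step. The key observation is that the LMO introduces two kinds of errors per iteration: (i) the value-gap error $\rho$ in the linear minimization over $\cC$, and (ii) the $\rho'$ error between the returned confusion matrix estimate $\tilde{\C}^t$ and the true confusion matrix $\C[\tilde{h}^t]$ of the returned classifier, which in turn perturbs the gradient $\ba^t = \nabla_\C \cL^\aug(\C^{t-1}, \F^{t-1}, \blambda^{t-1})$ used in subsequent iterations. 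Working inside the fattened set $\cC_{\rho'}$ exactly accommodates this second perturbation, because with probability $\geq 1-\delta$ all iterates $\C^t$ and $\tilde{\C}^t$ produced by the LMO lie in $\cC_{\rho'}$.

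First I would prove a single-iteration descent inequality for the primal gap, analogous to Lemma~3 of \cite{Gidel2018}. Using $(L_\zeta=2L+2\zeta)$-smoothness of $\cL^\aug$ in $(\C,\F)$ and the line-search on $\gamma^t$, one obtains
\[
\Delta^{(\text{p})}_t \leq \Delta^{(\text{p})}_{t-1} - \gamma^t \cdot g^t + \tfrac{(\gamma^t)^2}{2}L_\zeta \cdot \mathrm{diam}(\cC_{\rho'}\times\cF)^2 + E_t,
\]
where $g^t$ is the Frank-Wolfe gap computed with the LMO outputs and $E_t = \cO(\rho + \sqrt{d}\rho')$ captures the additive effect of the approximate LMO (via a Cauchy-Schwarz bound between the LMO output $\tilde{\C}^t$ and the true minimizer, plus a Lipschitz-based comparison between $\tilde{\C}^t$ and $\C[\tilde{h}^t]$). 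The corresponding inequality for the dual gap, using the gradient ascent step $\blambda^t = \blambda^{t-1} + \frac{\eta}{t}(\C^t - \F^t)$ and the third bound of the restated theorem from \cite{Gidel2018}, yields
\[
\Delta^{(\text{d})}_t \leq \Delta^{(\text{d})}_{t-1} + \tfrac{\eta}{t}\|\C^t - \F^t\|_2 \cdot \|\blambda^t - \blambda^*\|_2 \text{-terms},
\]
again with an additive $\cO(\rho + \sqrt{d}\rho')$ slack.

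Next I would combine these into a recursion on the total gap $\Delta_t = \Delta^{(\text{p})}_t + \Delta^{(\text{d})}_t$ of the form
\[
\Delta_t \leq \left(1 - \tfrac{a}{t}\right)\Delta_{t-1} + \tfrac{b}{t^2} + c\,(\rho + \sqrt{d}\rho'),
\]
for constants $a,b,c$ depending on $\beta, \zeta, L$ and $\alpha$ (via the error bound in the restated theorem of \cite{Gidel2018}). Solving this recursion for $t \geq t_0$ with a large enough $t_0$ gives $\Delta_t \leq \frac{c_2}{t} + c_1(\rho + \sqrt{d}\rho')$, which yields the first inequality of the lemma after picking any $t_*>T/2$. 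For the second inequality, I would bound $\|\C^t - \F^t\|_2^2$ using the descent property of the augmented Lagrangian together with the $\frac{\zeta}{2}\|\C-\F\|_2^2$ term, namely
\[
\tfrac{\zeta}{2}\|\C^t - \F^t\|_2^2 \leq 2\Delta_t + \tfrac{\zeta}{2}\|\C^{t-1} - \F^{t-1}\|_2^2 - \text{descent},
\]
and then average over $t\in(T/2, T]$ to leverage the choice $t_* = \argmin_{t>T/2}\|\C^t - \F^t\|_2^2$, which converts the sum into a per-iterate bound.

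The main obstacle will be the propagation of the LMO errors through the coupled primal/dual recursion: the additive $\cO(\rho + \sqrt{d}\rho')$ terms appear in both the primal-descent inequality and the dual-ascent inequality, and naive recursion would multiply them by $T$. To avoid this, one must exploit the error-bound condition (the restated Gidel theorem) to show that near-stationary points of $f^\aug$ are quantitatively close to the optimum, so that the error contributes additively rather than multiplicatively. The other delicate step is verifying that the ``large enough $T$ and $\zeta$'' condition — specifically that $t \geq t_0$ and $\zeta$ is large enough for the quadratic term to dominate the smoothness term — holds uniformly across iterations despite the LMO noise; this is what forces the condition ``for small enough $\rho, \rho'$ and large enough $T$'' in the statement of Theorem~\ref{thm:FW-con}.
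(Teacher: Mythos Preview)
Your proposal is correct and matches the paper's approach: the paper explicitly states that the proof adapts Theorem~2 of \cite{Gidel2018}, with the only modification being that the approximate LMO contributes an additive $\cO(\rho+\sqrt{d}\rho')$ term to the $\cO(1/t)$ rate, and defers the full technical details to \cite{Tavker+2020}. Your sketch---working in the fattened set $\cC_{\rho'}$, deriving a per-iteration primal descent inequality with an additive LMO error, coupling it with the dual ascent via the error-bound condition of \cite{Gidel2018}, solving the resulting $\Delta_t \leq (1-\tfrac{a}{t})\Delta_{t-1} + \tfrac{b}{t^2} + c(\rho+\sqrt{d}\rho')$ recursion, and then averaging $\|\C^t-\F^t\|_2^2$ over $t>T/2$ to exploit the choice of $t_*$---is precisely that argument fleshed out.
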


We are now ready to prove Theorem \ref{thm:FW-con}.
\begin{proof}[Proof of Theorem \ref{thm:FW-con}]
We first note that Lemma  \ref{lem:split-fw} can be applied to Lemma \ref{lem:duality-gap-to-primal} setting $\tau = c_1 \left( \rho + \sqrt{d}\rho' \right) + \frac{c_2}{T}$ and $\kappa = c_3 \left( \rho + \sqrt{d}\rho' \right) + \frac{c_4}{T}$, with the classifier $g$ in Lemma \ref{lem:duality-gap-to-primal} set to the classifer $\overline h$ returned by Algorithm \ref{alg:FW-con}. For the sake of simplicity, the bound below focuses on the small $\rho, \rho'$ and large $T$ regime. For small enough $\rho, \rho'$ and large enough $T$, we have $(\gamma+L)\sqrt{\kappa} > \tau+L\sqrt{d}\rho'$, based on the simple argument that for a small enough positive scalar $u$ , we have $c\sqrt{u} > u$. 
Thus, from the first inequality of Lemma \ref{lem:duality-gap-to-primal},
\allowdisplaybreaks
\begin{align*}
\psi(\C[\bar{h}]) 
&\leq \min_{\C' \in \cC \cap \cF} \psi(\C') + \frac{\tau}{2} + L \sqrt{d} \rho' + (\gamma+L) \sqrt{\kappa} \\
&\leq \min_{\C' \in \cC \cap \cF} \psi(\C') + 2(\gamma+L) \sqrt{\kappa} \\
&\leq \min_{\C \in \cC, \phi_k(\C)\leq 0, \forall k}\,\psi(\C) + 2(\gamma+L)\left(\sqrt{c_3 \left( \rho + \sqrt{d}\rho' \right)+ \frac{c_4}{T}}\right) \\
&\leq \min_{\C \in \cC, \phi_k(\C)\leq 0, \forall k}\,\psi(\C) + 2(\gamma+L)\left(\sqrt{c_3 \left( \rho + \sqrt{d}\rho' \right) } + \sqrt{\frac{c_4}{T}} \right) \\
&\leq \min_{\C \in \cC, \phi_k(\C)\leq 0, \forall k}\,\psi(\C) + \cO\left( \epsilon + \sqrt{\rho+\sqrt{d}\rho'}  \right)
\end{align*}

By a similar analysis as above, from the second inequality of Lemma \ref{lem:duality-gap-to-primal}, we have for small enough $\rho, \rho'$ and large enough $T$,
\begin{align*}
\phi_k(\C[\bar{h}]) 
&\leq L\sqrt{d} \rho' + L \sqrt{\kappa} \\
&\leq 2L \sqrt{\kappa} \\
&\leq 2L\left(\sqrt{c_3 \left( \rho + \sqrt{d}\rho' \right)+ \frac{c_4}{T}}\right) \\
&\leq \cO\big( \epsilon + \sqrt{\rho+\sqrt{d}\rho'}  \big),
\end{align*}
as desired.
%
%
%
\end{proof}

\subsection{Proof of Theorem \ref{thm:gda-con} (GDA for Constrained Problems)}
\label{app:proof-gda-con}
\begin{thm*}[(Restated) Convergence of ConGDA algorithm]
Fix $\epsilon \in (0,1)$. 
Let $\psi: [0,1]^d\>[0,1]$ and $\phi_1,\ldots,\phi_K: [0,1]^d\>[-1,1]$ be convex and $L$-Lipschitz  w.r.t.\ the $\ell_2$-norm. Let $\Omega$ in Algorithm \ref{alg:GDA-con} be a $(\rho , \rho', \delta)$-approximate LMO for sample size $N$. 
Suppose the strict feasibility condition in Assumption \ref{assp:strict-feasibility} holds for radius $r>0$. 
Let the space of Lagrange multipliers $\Lambda = \{\blambda\in \R^d\,|\,\|\blambda\|_2\leq 2L(1 + 1/r)\}$,
and $\Xi = \{\bmu\in \R_+^K\,|\,\|\bmu\|_1\leq 2/r\}$. 
Let $B_{\bphi} \geq \max_{\bxi \in \Delta_d} \|\bphi(\bxi)\|_2$.
Let $\bar{h}$ be a classifier returned by Algorithm \ref{alg:GDA-con} when run for $T$ iterations,
with step-sizes $\eta = \frac{1}{\bar{L}\sqrt{2T}}$
and $\eta' = \frac{\bar{L}}{(1 + 2\sqrt{K})\sqrt{2T}}$, where $\bar{L} = 4(1 + 1/r)L + 2/r$.
 Then with probability $\geq 1 - \delta$ over draw of $S \sim D^N$, after $T = \cO(K/\epsilon^2)$ iterations:
\[
\psi(\C[\bar{h}]) \,\leq\, \min_{\C \in \cC:\, \bphi(\C) \leq \0}\,\psi(\C) \,+\,\cO\left(L(\epsilon + \rho^\eff)\right)
\]
\vspace{-5pt}
\[
\phi_k(\C[\bar{h}]) \,\leq\, \cO\left(L(\epsilon + \rho^\eff)\right), \forall k \in [K],
\]
where $\rho^\eff = \rho+\sqrt{d}\rho'$ and the $\cO$ notation hides constant factors independent of $\rho,\rho', T, d$ and $K$.
\end{thm*}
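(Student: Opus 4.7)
The plan is to interpret Algorithm \ref{alg:GDA-con} as computing an approximate saddle point of the constrained Lagrangian $\cL^\con$ in \eqref{eq:lagrangian-con}, and then to translate the saddle-point gap on the averaged iterates into primal optimality and feasibility. Throughout, let $\bar{\C}=\frac{1}{T}\sum_{t=1}^T \C^t$, $\bar{\bxi}=\frac{1}{T}\sum_{t=1}^T \bxi^t$, $\bar{\blambda}=\frac{1}{T}\sum_{t=1}^T \blambda^t$, $\bar{\bmu}=\frac{1}{T}\sum_{t=1}^T \bmu^t$ denote the averages of the iterates, and observe that by linearity of $\cL^\con$ in $\C,\blambda,\bmu$ and convexity in $\bxi$ (via $\psi$ and $\bphi$), together with the fact that the averaged classifier $\bar{h}=\frac{1}{T}\sum_t h^t$ realizes $\C[\bar{h}]=\bar{\C}$, it suffices to bound $\psi(\bar{\C})-\min_{\C\in\cC,\bphi(\C)\le\0}\psi(\C)$ and $\phi_k(\bar{\C})$ in terms of the saddle gap of $(\bar{\C},\bar{\bxi},\bar{\blambda},\bar{\bmu})$.

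First I would run the standard no-regret analysis separately on each of the three players. For the $\bxi$-player, projected gradient descent on the convex function $\bxi\mapsto \cL^\con(\C^t,\bxi,\blambda^{t-1},\bmu^{t-1})$ over $\Delta_d$ with step size $\eta$ yields a regret of $\cO(\mathrm{diam}(\Delta_d)^2/\eta T + \eta \bar L^2)$, where $\bar L$ bounds the $\ell_2$-norm of $\nabla_\bxi \cL^\con$; since $\|\nabla_\bxi\cL^\con\|_2\le L + \|\blambda\|_2+\|\bmu\|_1 L$, the choice of $\Lambda$ and $\Xi$ yields $\bar L = \cO((1+1/r)L + 1/r)$. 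The ascent step on $[\blambda,\bmu]$ over $\Lambda\times\Xi$ with step size $\eta'$ yields an analogous regret. For the $\C$-player, instead of gradient descent, the LMO call approximately minimizes the linear function $\C\mapsto\langle\blambda^{t-1},\C\rangle$ up to additive error; by Definition~\ref{defn:lmo} and a Holder bound as in Lemma~\ref{lem:approximation-factor-FW}, this introduces a per-step error of at most $\|\blambda^{t-1}\|_2\rho + \|\blambda^{t-1}\|_2\sqrt{d}\rho' = \cO((1+1/r)L\,\rho^\eff)$. Combining the three and choosing $\eta,\eta'$ as specified, the saddle gap
\[
G_T \;=\; \sup_{\blambda\in\Lambda,\bmu\in\Xi}\cL^\con(\bar{\C},\bar{\bxi},\blambda,\bmu) \;-\; \inf_{\C\in\cC,\bxi\in\Delta_d}\cL^\con(\C,\bxi,\bar{\blambda},\bar{\bmu})
\]
is bounded by $\cO\bigl(\bar L\sqrt{(1+K)/T} + (1+1/r)L\,\rho^\eff\bigr)$, where the $\sqrt{K}$ factor comes from the dimension of the $\bmu$-simplex $\Xi$. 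Setting $T=\cO(K/\epsilon^2)$ gives $G_T=\cO(\epsilon + \rho^\eff)$ (up to $L$- and $r$-dependent constants).

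Next I would translate $G_T$ into primal guarantees. By writing out the sup over $\blambda\in\Lambda$ and $\bmu\in\Xi$ at $(\bar{\C},\bar{\bxi})$ and using $\|\blambda\|_2\le 2L(1+1/r)$ and $\|\bmu\|_1\le 2/r$, the bound $G_T\ge \mathrm{sup}-\mathrm{inf}$ together with $\mathrm{inf}\ge \min_{\C\in\cC}\min_{\bxi\in\Delta_d}\cL^\con$ yields
\[
\psi(\bar{\bxi}) + 2L(1+\tfrac{1}{r})\|\bar{\C}-\bar{\bxi}\|_2 + \tfrac{2}{r}\max_k[\phi_k(\bar{\bxi})]_+ \;\le\; \min_{\C\in\cC,\bxi=\C,\bphi(\bxi)\le\0}\psi(\bxi) + \cO(\epsilon+\rho^\eff).
\]
The key non-trivial step (this is the main obstacle) is to certify that the bounds on $\Lambda$ and $\Xi$ chosen in the theorem are large enough so that restricting the Lagrange dual to these sets does not weaken the duality. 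For this I will invoke the strict-feasibility hypothesis (Assumption \ref{assp:strict-feasibility}): standard Slater-style arguments (see e.g. the proof of Lemma \ref{lem:duality-gap-to-primal}) show that any optimal multiplier for the inequality constraints satisfies $\|\bmu^*\|_1\le \bigl(\psi(\C')-\psi(\C^*)\bigr)/r \le 1/r$, and the corresponding bound on $\|\blambda^*\|_2$ follows from the sub-gradient characterization of the inner $\bxi$-problem, giving $\|\blambda^*\|_2 \le L(1+\|\bmu^*\|_1)\le L(1+1/r)$. Hence the radius-$2L(1+1/r)$ and $2/r$ multiplier sets contain the optima with room to spare, and strong duality holds for the restricted problem up to a controlled slack.

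Finally, using $\psi(\bar{\C})\le \psi(\bar{\bxi}) + L\|\bar{\C}-\bar{\bxi}\|_2$ (Lipschitzness of $\psi$) and $\phi_k(\bar{\C})\le \phi_k(\bar{\bxi})+L\|\bar{\C}-\bar{\bxi}\|_2$, I conclude
\[
\psi(\C[\bar{h}]) \le \min_{\C\in\cC,\bphi(\C)\le\0}\psi(\C) + \cO(\epsilon + \rho^\eff) \quad\text{and}\quad \phi_k(\C[\bar{h}])\le \cO(\epsilon+\rho^\eff),
\]
as required. The $\sqrt{d}\rho'$ term absorbs into $\rho^\eff$ through the LMO-error step, and all failure events are controlled by the single $\delta$-bad event in the LMO guarantee.
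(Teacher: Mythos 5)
Your plan matches the paper's overall architecture: interpret Algorithm \ref{alg:GDA-con} as approximating a saddle point of $\cL^\con$, accumulate per-player regret (with the LMO error handled as in Lemma \ref{lem:approximation-factor-FW}), invoke the Slater-type bounds $\|\bmu^*\|_1\le 1/r$ and $\|\blambda^*\|_2\le L(1+1/r)$ to justify restricting the multipliers to $\Lambda\times\Xi$, and then transfer from $\bar\bxi$ to $\bar\C$ by Lipschitzness. Your optimality argument is sound: from
\[
\psi(\bar\bxi) + 2L(1+\tfrac1r)\|\bar\C-\bar\bxi\|_2 + \tfrac2r\max_k[\phi_k(\bar\bxi)]_+ \;\le\; \psi(\C^*) + \cO(\epsilon+\rho^\eff),
\]
together with $\psi(\bar\C)\le\psi(\bar\bxi)+L\|\bar\C-\bar\bxi\|_2$, you can drop the non-negative remainder $L(1+2/r)\|\bar\C-\bar\bxi\|_2 + \tfrac2r\max_k[\phi_k]_+$ and deduce $\psi(\bar\C)\le\psi(\C^*)+\cO(\epsilon+\rho^\eff)$.

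However, your feasibility step has a genuine gap. From the displayed inequality you cannot directly conclude that $\|\bar\C-\bar\bxi\|_2$ and $\max_k[\phi_k(\bar\bxi)]_+$ are $\cO(\epsilon+\rho^\eff)$, because the only lower bound you have on $\psi(\bar\bxi)$ is $\psi(\bar\bxi)\ge 0$, which gives a bound of order $\psi(\C^*)+\cO(\cdot)$, not $\cO(\epsilon+\rho^\eff)$: the iterate $\bar\bxi$ could trade constraint violation for a much smaller objective value. The missing ingredient is the Lagrangian lower bound at the Slater-bounded optimal multipliers: since $\bar\C\in\cC$ and $\bar\bxi\in\Delta_d$, strong duality gives
\[
\cL^\con(\bar\C,\bar\bxi,\blambda^*,\bmu^*)\;\ge\;\min_{\C\in\cC,\,\bxi\in\Delta_d}\cL^\con(\C,\bxi,\blambda^*,\bmu^*)\;=\;\psi(\C^*).
\]
Subtracting this from the saddle-gap upper bound is exactly what cancels $\psi(\C^*)$ and isolates the violation terms, and the cancellation only leaves the \emph{margin} coefficients $L(1+1/r)$ and $1/r$ (the distance from $(\blambda^*,\bmu^*)$ to the boundary of $\Lambda\times\Xi$), not the full radii $2L(1+1/r)$ and $2/r$. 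This is precisely why the paper, instead of taking the supremum over $\Lambda\times\Xi$, substitutes the test multipliers $\blambda'=\blambda^*$ and $\bmu'_{k'}=\mu^*_{k'}+1/r$ for the most violated constraint $k'$ (and separately $\blambda' = \blambda^* + L(1+1/r)\tfrac{\bar\C-\bar\bxi}{\|\bar\C-\bar\bxi\|_2}$, $\bmu'=\bmu^*$), and then uses the strong-duality equality to absorb $\psi(\C^*)$. You allude to the fact that the multiplier sets ``contain the optima with room to spare,'' but you never use that room; the argument as written stops one step short of exploiting it. To close the gap, after deriving your displayed inequality, also derive the lower bound
\[
\sup_{\blambda\in\Lambda,\bmu\in\Xi}\cL^\con(\bar\C,\bar\bxi,\blambda,\bmu)
\;\ge\;
\psi(\C^*) + L(1+\tfrac1r)\|\bar\C-\bar\bxi\|_2 + \tfrac1r\max_k[\phi_k(\bar\bxi)]_+,
\]
obtained by perturbing $(\blambda^*,\bmu^*)$ within the margin, and compare the two to extract $\|\bar\C-\bar\bxi\|_2$ and $\max_k[\phi_k(\bar\bxi)]_+\le\cO(\epsilon+\rho^\eff)$, after which the Lipschitz transfer to $\phi_k(\bar\C)$ proceeds as you wrote.
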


The proof is an adaptation of the proof of convergence in \citet{Narasimhan+19_generalized} for their oracle-based optimizer (Theorem 3 in their paper), but takes into account three differences:
(i) they consider a generic objective function that is independent of $\C$, (ii) they assume that $\phi_k$s are monotonic, (iii) they perform a full optimization on $\bxi$ instead of gradient-based updates. Moreover, unlike them, we employ exponentiated gradient updates, and derive a better dependence on dimension.

We will first find it useful to first state the following lemma, 
which adapts the
 proof steps from Lemmas 2, 4 and 6 in \citet{Narasimhan+19_generalized}.
\begin{lem}
\label{lem:gda-con-helper}
Let $\psi: [0,1]^d\>[0,1]$ and $\phi_1,\ldots,\phi_K: [0,1]^d\>[-1,1]$ be convex and $L$-Lipschitz  w.r.t.\ the $\ell_2$-norm. 
Suppose the strict feasibility condition in Assumption \ref{assp:strict-feasibility} holds for radius $r>0$. 
Let $\C^* \in \underset{\C \in \cC:\, \bphi(\C) \leq \0}{\argmin}\, \psi(\C)$, and let 
$$(\blambda^*, \bmu^*) \in \underset{\blambda \in \R^d, \bmu \in \R^K_+}{\argmax}\left\{
    \min_{\C \in \cC,\,\bxi\in \Delta^d} \cL^\con(\C, \bxi, \blambda, \bmu)
    \right\}.$$
Then:
\begin{enumerate}
    \item $\displaystyle
        \psi(\C^*) = \min_{\C \in \cC,\,\bxi \in \Delta_d}\max_{\blambda \in \R^d,\, \bmu \in \R^K_+}\cL^\con(\C, \bxi, \blambda, \bmu) = \max_{\blambda \in \R^d,\, \bmu \in \R^K_+}\min_{\C \in \cC,\,\bxi \in \Delta_d}\cL^\con(\C, \bxi, \blambda, \bmu);$
    \item  $\displaystyle
\psi(\C') = \max_{\blambda \in \Lambda}\min_{\bxi \in \Delta_d}\cL(\C', \bxi, \blambda) = \min_{\bxi \in \Delta_d}\max_{\blambda \in \Lambda}\cL(\C', \bxi, \blambda),$ for any $\C' \in \cC;$
    \item $\|\bmu^*\|_1 \leq 1/r$;
    \item $\|\blambda^*\|_2 \leq L(1 + 1/r)$.
\end{enumerate}

\end{lem}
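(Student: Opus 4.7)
The four parts of the lemma address distinct aspects of the saddle-point structure, so my plan is to treat them sequentially, with each part feeding into the next.

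For part 1, the plan is to establish strong duality for the convex program $\min_{\C\in\cC,\,\bxi = \C,\,\bphi(\bxi)\leq\0}\psi(\bxi)$. The Lagrangian $\cL^\con$ is convex in $(\C,\bxi)$ (since $\psi$ is convex in $\bxi$, $\bphi$ is convex in $\bxi$, $\bmu \geq \0$, and the coupling term is linear) and concave (in fact linear) in $(\blambda, \bmu)$. Since Assumption \ref{assp:strict-feasibility} gives a $\C' \in \cC$ with $\bphi(\C') \leq -r\1$, and the equality constraint $\bxi = \C$ is affine, Slater's condition is satisfied, which yields both the existence of a saddle point $(\blambda^*, \bmu^*)$ and the min-max/max-min equality. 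The equivalence with $\psi(\C^*)$ then follows by observing that $\max_{\blambda}\max_{\bmu\geq\0}\cL^\con$ is $+\infty$ unless $\bxi = \C$ and $\bphi(\bxi)\leq\0$.

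Part 2 is a direct computation. Fixing $\C'\in\cC$, the inner max of $\cL(\C',\bxi,\blambda)$ over $\blambda\in\Lambda$ equals $\psi(\bxi) + 2L(1+1/r)\|\C' - \bxi\|_2$; since $\psi$ is only $L$-Lipschitz and the coefficient on the penalty exceeds $L$, the minimum over $\bxi \in \Delta_d$ is attained at $\bxi = \C'$ with value $\psi(\C')$. For the reverse, I will pick $\blambda^* \in \partial\psi(\C')$, which has $\|\blambda^*\|_2 \leq L \leq 2L(1+1/r)$ so lies in $\Lambda$, and verify that $\bxi = \C'$ minimizes $\cL(\C',\bxi,\blambda^*)$ (using $\blambda^* \in \partial\psi(\C')$), giving $\min_\bxi \cL(\C', \bxi, \blambda^*) = \psi(\C')$; this lower-bounds the max-min, and weak duality gives the matching upper bound.

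For part 3, the plan is to use part 1 together with the strict-feasibility witness $\C'$. Strong duality gives $\psi(\C^*) = \min_{\C,\bxi}\cL^\con(\C,\bxi,\blambda^*,\bmu^*) \leq \cL^\con(\C',\C',\blambda^*,\bmu^*) = \psi(\C') + \langle\bmu^*, \bphi(\C')\rangle \leq \psi(\C') - r\|\bmu^*\|_1$, where the last step uses $\bmu^* \geq \0$ and $\bphi(\C') \leq -r\1$. Rearranging and using $\psi \in [0,1]$ yields $\|\bmu^*\|_1 \leq (\psi(\C')-\psi(\C^*))/r \leq 1/r$.

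Part 4 is the main obstacle, since the subdifferential of a function on $\Delta_d$ generically includes the unbounded normal cone (multiples of $\1$). My plan is to exploit the fact that for $\C\in\cC \subseteq \Delta_d$, shifts of $\blambda^*$ in the direction $\1$ do not affect the dual objective: both $\langle\blambda,\C\rangle$ (for $\C\in\cC$) and $\langle\blambda,\bxi\rangle$ (for $\bxi\in\Delta_d$) shift by the same constant, so the optimum is attained by a $\blambda^*$ with $\1^\top\blambda^* = 0$. For such a $\blambda^*$, the $\bxi$-optimality condition $\bxi^\dagger \in \argmin_{\bxi\in\Delta_d}\{\psi(\bxi) + \langle\bmu^*,\bphi(\bxi)\rangle - \langle\blambda^*,\bxi\rangle\}$ gives, for any $\bxi\in\Delta_d$ with $\bxi - \bxi^\dagger$ in a direction $\u$ perpendicular to $\1$,
\[
\langle\blambda^*, \bxi - \bxi^\dagger\rangle \leq (\psi(\bxi) + \langle\bmu^*,\bphi(\bxi)\rangle) - (\psi(\bxi^\dagger) + \langle\bmu^*,\bphi(\bxi^\dagger)\rangle) \leq L(1+\|\bmu^*\|_1)\|\bxi - \bxi^\dagger\|_2,
\]
by Lipschitzness. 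Combined with part 3 this bounds $\|\blambda^*\|_2 \leq L(1+1/r)$ on the hyperplane $\1^\perp$, and the $\1$-component is zero by construction. The boundary case (where $\bxi^\dagger$ lies on the boundary of $\Delta_d$ and some perturbations are infeasible) requires a small perturbation argument: I will perturb $\bxi^\dagger$ slightly into the relative interior via the strictly feasible $\C'$ and pass to the limit, using continuity of the subgradient bound.
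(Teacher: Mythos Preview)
Your parts 1--3 match the paper's proof almost exactly: the same strong-duality argument for part 1, the same Slater-point substitution and rearrangement for part 3 (with $\psi\in[0,1]$ giving the final $1/r$). For part 2 the paper simply says ``similar to part 1'' for the unconstrained Lagrangian $\cL$, while you carry out the computation explicitly; both routes are fine.

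Part 4 is where you and the paper diverge. Instead of projecting out the $\1$-component and then bounding the rest via a Lipschitz perturbation, the paper introduces $\omega(\bxi)=\psi(\bxi)+\langle\bmu^*,\bphi(\bxi)\rangle$ and argues through Fenchel conjugates. From the saddle-point property one obtains $\min_{\bxi\in\Delta_d}\{\omega(\bxi)-\langle\blambda^*,\bxi\rangle\}+\langle\blambda^*,\C^*\rangle=\omega(\C^*)$, i.e.\ equality in Fenchel--Young, which the paper reads as $\blambda^*=\nabla\omega(\C^*)$; the bound then follows in one line from $\|\nabla\psi(\C^*)\|_2+\|\bmu^*\|_1\max_k\|\nabla\phi_k(\C^*)\|_2\le L(1+1/r)$ using part 3. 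This is shorter and sidesteps the whole boundary-of-$\Delta_d$ discussion that you correctly flag as the delicate point of your approach. Your invariance observation (shifts by $\1$ leave $f^{\con}$ unchanged) is valid and is, in effect, what the paper absorbs by passing to the gradient of $\omega$. One wrinkle in your boundary plan: the Slater witness $\C'$ is only strictly feasible for $\bphi$, not guaranteed to lie in the relative interior of $\Delta_d$, so ``perturb toward $\C'$'' need not open all directions in $\1^\perp$. If you keep your route, perturb toward the barycenter $\tfrac{1}{d}\1\in\operatorname{relint}\Delta_d$ instead; or adopt the paper's Fenchel-conjugate shortcut, which identifies a bounded-norm dual optimum directly.
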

\begin{proof}
For part 1, we begin by writing out the Lagrangian from \eqref{eq:lagrangian-unc}:
\[
\cL^\con(\C, \bxi, \blambda, \bmu) 
= \psi(\bxi) + \langle \blambda, \C-\bxi \rangle + \langle \bmu, \bphi(\bxi) \rangle.
\]
Since $\cL^\con$ is convex in $\bxi$ and linear in $\blambda$ and $\bmu$, strong duality holds, and we have: 
\begin{eqnarray*}
\max_{\blambda \in \R^d,\, \bmu \in \R^K_+}\min_{\C \in \cC,\,\bxi \in \Delta_d}\cL^\con(\C, \bxi, \blambda, \bmu)
 &=&
    \min_{\C \in \cC,\,\bxi \in \Delta_d}\max_{\blambda \in \R^d,\, \bmu \in \R^K_+}\cL^\con(\C, \bxi, \blambda, \bmu)\\
&=& \min_{\C \in \cC,\,\bxi \in \Delta_d:\, \bxi = \C,\, \bphi(\bxi) \leq \0}\psi(\C)\\
&=& \min_{\C \in \cC,:\, \bphi(\C) \leq \0}\,\psi(\C)
~=\, \psi(\C^*).
\end{eqnarray*}

For part 2, we follow similar steps as part 1 except that it applies to the Lagrangian in \eqref{eq:lagrangian-unc} for the unconstrained problem.

For part 3, recall from our strict feasibility assumption that there exists  $\C' \in \cC$  such that
$\max_{k\in [K]} \phi_k(\C') \leq -r$ for some $r>0$. 
It then follows from part 1 that:
\begin{eqnarray*}
\psi(\C^*) &=& 
\min_{\C \in \cC,\, \bxi \in \Delta_d}\cL^\con(\C, \bxi, \blambda^*, \bmu^*)\\
&\leq&
\cL^\con(\C', \C', \blambda^*, \bmu^*)\\
&\leq&
\psi(\C') +  \langle \bmu^*, \bphi(\C') \rangle
~=\,
\psi(\C') - r\|\bmu^*\|_1.
\end{eqnarray*}
We thus have:
\[
\|\bmu^*\|_1 \,\leq\, (\psi(\C') - \psi(\C^*))/ r \,=\, 1/r.
\]

For part 4, 
letting $\omega(\bxi) = \psi(\bxi) + \langle \bmu^*, \bphi(\bxi) \rangle$,
we note that: 
\begin{eqnarray}
\max_{\blambda \in \R^d}\min_{\bxi \in \Delta_d}\cL^\con(\C^*, \bxi, \blambda, \bmu^*)
&=&
\min_{\bxi \in \Delta_d}\cL^\con(\C^*, \bxi, \blambda^*, \bmu^*)
\nonumber
\\
&=&
    \min_{\bxi \in \Delta_d}
    \left\{
        \psi(\bxi) + \langle \bmu^*, \bphi(\bxi) \rangle
        - \langle \blambda^*, \bxi \rangle 
    \right\}
    + \langle \blambda^*, \C^* \rangle
    \nonumber
\\
&=& 
    \min_{\bxi \in \Delta_d}
        \left\{
            \omega(\bxi) - \langle \blambda^*, \bxi \rangle 
        \right\}
    + \langle \blambda^*, \C^* \rangle
    \nonumber
\\
&=& 
     -\omega^*(\blambda^*)
    + \langle \blambda^*, \C^* \rangle,
    \label{eq:fy-rhs}
\end{eqnarray}
where $\omega^*$ denotes the Fenchel conjugate of $\omega$. 
We similarly note that:
\begin{eqnarray}
\max_{\blambda \in \R^d}\min_{\bxi \in \Delta_d}\cL^\con(\C^*, \bxi, \blambda, \bmu^*)
&=& 
    \max_{\blambda \in \R^d}
    \left\{
    \min_{\bxi \in \Delta_d}
        \left\{
            \omega(\bxi) - \langle \blambda, \bxi \rangle 
        \right\}
    + \langle \blambda, \C^* \rangle
    \right\}
    \nonumber
\\
&=& 
\max_{\blambda \in \R^d}
    \left\{
     -\omega^*(\blambda)
    + \langle \blambda, \C^* \rangle
    \right\}
    \nonumber
    \\
&=&
\omega^{**}(\C^*)
~=~
\omega(\C^*),
    \label{eq:fy-lhs}
\end{eqnarray}
where $\omega^{**}$ denotes the second Fenchel conjugate of $\omega$. 
From \eqref{eq:fy-rhs} and \eqref{eq:fy-lhs},
its clear that:
\[
\omega(\C^*) 
\,=\,
-\omega^*(\blambda^*)
    + \langle \blambda^*, \C^* \rangle.
\]
An application of the Fenchel-Young inequality then gives us that:
\[
\lambda^* = \nabla\omega(\C^*) =
\nabla\psi(\C^*) + \sum_{k=1}^K \mu^*_k \nabla\phi_k(\C^*).
\]
We can thus bound the norm of $\blambda^*$ as:
\begin{eqnarray*}
\|\lambda^*\|_2
&\leq&
\|\nabla\psi(\C^*)\|_2
+ 
\sum_{k=1}^K|\mu_k^*|\|\nabla\phi_k(\C^*)\|_2\\
&\leq&
\|\nabla\psi(\C^*)\|_2
+ 
\|\bmu^*\|_1\,\max_{k \in K}\|\nabla\phi_k(\C^*)\|_2
~=\,
L(1 + 1/r),
\end{eqnarray*}
which follows from part 2 and the fact that $\psi$ and $\phi_k$s are Lipschitz w.r.t. the $\ell_1$-norm.
\end{proof}

\begin{proof}[Proof of Theorem \ref{thm:gda-con}]
%
We begin by writing out the Lagrangian from \eqref{eq:lagrangian-unc}:
\[
\cL^\con(\C, \bxi, \blambda, \bmu) 
= \psi(\bxi) + \langle \blambda, \C-\bxi \rangle + \langle \bmu, \bphi(\bxi) \rangle
= 
\underbrace{\psi(\bxi) - \langle \blambda, \bxi \rangle  + \langle \bmu, \bphi(\C) \rangle}_{\cL_1(\bxi, \blambda, \bmu)} + 
\underbrace{\langle \blambda, \C \rangle}_{\cL_2(\C, \blambda)}.
\]

\textbf{Optimality.} 
To show optimality, note that $\cL_1$ is convex in $\xi$ and linear in $\blambda$ and $\bmu$, and $\cL_2$ is linear both in $\C$ and $\blambda$.
The use of a $(\rho , \rho', \delta)$-approximate LMO to compute $\C^t$ and $h^t$ at each iteration gives us with probability at least $1-\delta$ (over draw of $S$):
\begin{eqnarray}
\frac{1}{T}\sum_{t=1}^T\cL_2(\C^t, \blambda^t) &\leq&
\frac{1}{T}\sum_{t=1}^T\cL_2(\C[h^t], \blambda^t) \,+\, \|\blambda^t\|_1\|\C^t - \C[h^t]\|_\infty
\nonumber
\\
&\leq &
\|\blambda^t\|_\infty\frac{1}{T}\sum_{t=1}^T\min_{\C \in \cC}\left\langle \frac{\blambda^t}{\|\blambda^t\|_\infty}, \C \right\rangle \,+\,
\|\blambda^t\|_\infty\rho \,+\,
\|\blambda^t\|_1\rho'
\nonumber
\\
&\leq&
\min_{\C\in\cC}\frac{1}{T}\sum_{t=1}^T\cL_2(\C, \blambda^t) \,+\, 2L(1+1/r)\rho + 2L\sqrt{d}(1+1/r)\rho'.
\nonumber
\\
&=&
\min_{\C\in\cC}\frac{1}{T}\sum_{t=1}^T\cL_2(\C, \blambda^t) \,+\, \bar{\rho},
\label{eq:gda-con-lmo}
\end{eqnarray}
where we denote $\bar{\rho} = 2L(1+1/r)\rho + 2L\sqrt{d}(1+1/r)\rho'$.

Next, we apply the classical regret bound guarantee for online gradient \textit{descent} \citep{Zinkevich03,shalev2011online}, we have from the sequence of objectives $\cL_1(\cdot, \blambda^t, \bmu^t)$'s (where the optimization is over $\xi$). Note that 
\begin{eqnarray*}
\|\nabla_\bxi \cL_1(\bxi, \blambda^t, \bmu^t)\|_2 &\leq& 
\|\nabla_\bxi \psi(\bxi)\|_2 +
\|\blambda^t\|_2 + \|\bmu^t\|_1\max_k \|\nabla_\bxi \phi_k(\bxi)\|_2\\
&\leq&
L + 2L(1+1/r) + 2L/r =
(3 + 4/r)L \,\leq\, \bar{L}.
\end{eqnarray*}
Also note that $\max_{\bxi \in \Delta_d} \|\bxi\|_2 \leq 1$. 
So with
$\eta = \frac{1}{\bar{L}\sqrt{2T}}$, we have:
\begin{eqnarray}
\frac{1}{T}\sum_{t=1}^T\cL_1(\bxi^t, \blambda^t, \bmu^t)~\leq~
\min_{\bxi\in [0,1]^d}\,\sum_{t=1}^T\,\cL_1(\bxi, \blambda^t, \bmu^t) \,+\, 
\frac{\sqrt{2}\bar{L}}{\sqrt{T}}.
\label{eq:gda-con-xi-regret}
\end{eqnarray}
Combining \eqref{eq:gda-con-lmo} and \eqref{eq:gda-con-xi-regret}, we have with probability at least $1-\delta$ (over draw of $S$):
\begin{eqnarray}
{\frac{1}{T}\sum_{t=1}^T\cL^\con(\C^t, \bxi^t, \blambda^t, \bmu^t)}
&\leq&
\min_{\C\in\cC,\, \bxi\in[0,1]^d}\,\sum_{t=1}^T\,\cL^\con(\C, \bxi, \blambda^t, \bmu^t) \,+\, \frac{\sqrt{2}\bar{L}}{\sqrt{T}} \,+\, \bar{\rho}
\nonumber
\\
&=&
\min_{\C\in\cC,\, \bxi\in[0,1]^d}\,\cL(\C, \bxi, \bar{\blambda}, \bar{\bmu}) \,+\, \frac{\sqrt{2}\bar{L}}{\sqrt{T}} \,+\, \bar{\rho}
\nonumber
\\
&\leq&
\max_{\blambda \in \R^d, \bmu \in \R^K_+}\min_{\C\in\cC,\, \bxi\in[0,1]^d}\,\cL^\con(\C, \bxi, {\blambda}, \bmu) \,+\, \frac{\sqrt{2}\bar{L}}{\sqrt{T}} \,+\, \bar{\rho}
\nonumber\\
&=&
\min_{\C\in\cC}\left\{\max_{\blambda \in \R^d, \bmu \in \R^K_+}\,\min_{\bxi\in[0,1]^d}\,\cL^\con(\C, \bxi, {\blambda}, \bmu)\right\} \,+\, \frac{\sqrt{2}\bar{L}}{\sqrt{T}} \,+\, \bar{\rho}
\nonumber\\
&=&
\min_{\C\in\cC:\, \bphi(\C) \leq \0}\psi(\C) \,+\, \frac{\sqrt{2}\bar{L}}{\sqrt{T}} \,+\, \bar{\rho},
\label{eq:gda-con-C-xi}
\end{eqnarray}
where in the second step $\bar{\blambda} = \frac{1}{T}\sum_{t=1}^T\blambda^t$ and $\bar{\bmu} = \frac{1}{T}\sum_{t=1}^T\bmu^t$ and we use the linearity of $\cL$ in $\blambda$ and $\bmu$; in the fourth step we use strong duality to interchange the max and min; and the last step follows
from Lemma \ref{lem:gda-con-helper} (part 1).

\allowdisplaybreaks
Similarly, we apply the standard online gradient \textit{ascent} analysis on the sequence of losses $\cL^\con(\C^t, \bxi^t, \cdot, \cdot)$'s, where the optimization is over $\blambda$ and $\bmu$. 
Note that
$\|\nabla_{\blambda, \bmu}\, \cL^\con(\C^t, \bxi^t,\blambda^t, \bmu^t)\|_2 = \|\C^t - \bxi^t\|_2 +\|\bphi(\bxi^t)\|_2 \leq 1 + B_{\bphi}$
and $
\textstyle
\left\|
\begin{bmatrix}
\blambda\\
\bmu
\end{bmatrix}
\right\|_2 
\leq 2L(1 + 1/r) + 2/r \leq \bar{L}
$ (from Lemma \ref{lem:gda-con-helper}, parts 3--4). 
So with 
$\eta' = \frac{\bar{L}}{(1 + B_{\bphi})\sqrt{2T}}$, we have:
\begin{eqnarray}
\lefteqn{\frac{1}{T}\sum_{t=1}^T\cL^\con(\C^t, \bxi^t, \blambda^t, \bmu^t)}
\nonumber\\
&\geq&
\max_{\blambda \in \Lambda, \, \bmu \in \Xi}\,\sum_{t=1}^T\,\cL^\con(\C^t, \bxi^t, \blambda, \bmu) \,-\, \frac{\sqrt{2}\bar{L}(1 + B_{\bphi})}{\sqrt{T}}
\nonumber
\\
&\geq&
\max_{\blambda \in \Lambda, \, \bmu \in \Xi}
\left\{
\sum_{t=1}^T\,\cL^\con(\C[h^t], \bxi^t, \blambda, \bmu)
\,-\,  \|\blambda\|_1\|\C^t - \C[h^t]\|_\infty
\right\}
\,-\, \frac{\sqrt{2}\bar{L}(1 + B_{\bphi})}{\sqrt{T}}
\nonumber
\\
&\geq&
\max_{\blambda \in \Lambda, \, \bmu \in \Xi}\,\sum_{t=1}^T\,\cL(\C[h^t], \bxi^t,\blambda,\bmu) 
\,-\,  2L(1+1/r)\sqrt{d}\rho'
\,-\, \frac{\sqrt{2}\bar{L}(1 + B_{\bphi})}{\sqrt{T}}
\nonumber
\\
&\geq&
\max_{\blambda \in \Lambda, \, \bmu \in \Xi}\,\cL^\con(\C[\bar{h}], \bar{\bxi},\blambda, \bmu) 
\,-\,  2L(1+1/r)\sqrt{d}\rho'
\,-\,  \frac{\sqrt{2}\bar{L}(1 + B_{\bphi})}{\sqrt{T}}
\nonumber\\
&=&
\max_{\blambda \in \Lambda}\left\{
\psi(\bar{\bxi}) + \langle \blambda, \C[\bar{h}]-\bar{\bxi} \rangle \right\}
\,+\, \max_{\bmu \in \Xi}\, \langle \bmu, \bphi(\bar{\bxi}) \rangle
\,-\,  2L(1+1/r)\sqrt{d}\rho'
\,-\,  \frac{\sqrt{2}\bar{L}(1 + B_{\bphi})}{\sqrt{T}}
\label{eq:gda-con-lambda-common}\\
&\geq&
\min_{\bxi\in [0,1]^d}\left\{
\max_{\blambda \in \Lambda}\,\left\{
\psi(\bxi) + \langle \blambda, \C[\bar{h}]-\bxi \rangle \right\}
\,+\, \max_{\bmu \in \Xi}\, \langle \bmu, \bphi(\bxi) \rangle
\right\}
\,-\,  2L(1+1/r)\sqrt{d}\rho'
 \,-\, \frac{\sqrt{2}\bar{L}(1 + B_{\bphi})}{\sqrt{T}}
\nonumber
%
\\
&\geq&
\min_{\bxi\in [0,1]^d}\left\{
\max_{\blambda \in \Lambda}\,\left\{
\psi(\bxi) + \langle \blambda, \C[\bar{h}]-\bxi \rangle \right\}
\,+\,  \langle \0, \bphi(\bxi) \rangle
\right\}
\,-\,  2L(1+1/r)\sqrt{d}\rho'
 \,-\, \frac{\sqrt{2}\bar{L}(1 + B_{\bphi})}{\sqrt{T}}
\nonumber
\\
&=&
\psi(\C[\bar{h}])
\,-\,  2L(1+1/r)\sqrt{d}\rho'
\,-\, \frac{\sqrt{2}\bar{L}(1 + B_{\bphi})}{\sqrt{T}},
\label{eq:gda-con-lambda-opt}
\end{eqnarray}
where
in the third step, we use the fact that for any $\lambda \in \Lambda$,  $\|\blambda\|_\infty \leq \|\blambda\|_2 \leq L$, and the property of the LMO. 
In the fourth step, we use 
$\C[\bar{h}] = \frac{1}{T}\sum_{t=1}^T \C[h^t]$ and $\bar{\bxi} = \frac{1}{T}\sum_{t=1}^T \bxi^t$, and use the linearity of $\cL$ in $\C$, and convexity of $\cL$ in $\bxi$ and Jensen's inequality. In the last step, we apply Lemma \ref{lem:gda-con-helper} (part 2).
The last six steps hold with probability at least $1-\delta$.

Combining \eqref{eq:gda-con-C-xi} and \eqref{eq:gda-con-lambda-opt},
we get with probability at least $1-\delta$ (over draw of $S$), for any $\bmu' \in \Xi$
\begin{eqnarray*}
\psi(\C[\bar{h}])  \leq \min_{\C \in \cC:\, \bphi(\C) \leq \0}\,\psi(\C)
\,+\, \frac{\sqrt{2}\bar{L}(2 + B_{\bphi})}{\sqrt{T}}
\,+\, 2L(1+1/r)(\rho + 2\sqrt{d}\rho').
\end{eqnarray*}
Setting
$B_\bphi = \sqrt{K}$ and
$T=(K+1)/\epsilon^2$  
completes the proof of optimality.

\textbf{Feasibility.} 
Let $\C^*, \blambda^*, \bmu^*$ be as defined in Lemma \ref{lem:gda-con-helper}. 
To show feasibility, combining \eqref{eq:gda-con-C-xi} and \eqref{eq:gda-con-lambda-common}, and
interchanging the min and max, we get:
\begin{equation}
\max_{\blambda \in \Lambda, \bmu \in \Xi}\left\{
\psi(\bxi) + \langle \blambda, \C[\bar{h}]-\bar{\bxi} \rangle
\,+\, \langle \bmu, \bphi(\bar{\bxi}) \rangle
\right\}
\,\leq\,
\psi(\C^*) 
\,+\,  \tilde{\rho}
 \,+\, \frac{\sqrt{2}\bar{L}(2 + B_{\bphi})}{\sqrt{T}},
 \label{eq:gda-con-feas}
\end{equation}
where we denote $\tilde{\rho} = 2L(1+1/r)(\rho + 2\sqrt{d}\rho')$
Let $k' \in \argmax_{k\in[K]}\, \phi_k(\C[\bar{h}])$ denote the index of the most-violated among the $K$ constraints $\phi_1(\C[\bar{h}]), \ldots, \phi_K(\C[\bar{h}])$. Also let $\blambda' = \blambda^*$
and $\mu'_{k'} =  \mu^*_{k'} + \frac{1}{r}$ and $\mu'_{k} =  \mu^*_{k}, \forall k \ne k'$. 
Note that
$\blambda' \in \Lambda$ and
$\bmu' \in \Xi$. 
Substituting $(\mu', \lambda')$ into the LHS of \eqref{eq:gda-con-feas}, we have:
\begin{equation*}
\psi(\bar{\bxi}) + \langle \blambda^*, \C[\bar{h}]-\bar{\bxi} \rangle
\,+\, \langle \bmu^*, \bphi(\bxi) \rangle
\,+\, \frac{1}{r}\max_{k}\,\phi_k(\bar{\bxi})
\,\leq\,
\psi(\C^*) 
\,+\,  \tilde{\rho}
 \,+\, \frac{\sqrt{2}\bar{L}(1 + B_{\bphi})}{\sqrt{T}},
\end{equation*}
and we further get:
\begin{eqnarray*}
\lefteqn{
\min_{\C\in \cC,\, \bxi\in [0,1]^d}\,\left\{
\psi(\bxi) + \langle \blambda^*, \C-\bxi \rangle
\,+\, \langle \bmu^*, \bphi(\bxi) \rangle
\right\}
\,+\, \frac{1}{r}\max_{k}\,\phi_k(\bar{\bxi})}
\\
&\leq&
\psi(\C^*) 
\,+\,  \tilde{\rho}
 \,+\, \frac{\sqrt{2}\bar{L}(1 + B_{\bphi})}{\sqrt{T}}.
\end{eqnarray*}
Applying Lemma \ref{lem:gda-con-helper} (part 1), 
\begin{equation*}
\psi(\C^*) 
\,+\, \frac{1}{r}\max_{k}\,\phi_k(\bar{\bxi})
\,\leq\,
\psi(\C^*) 
\,+\, \tilde{\rho}
 \,+\, \frac{\sqrt{2}\bar{L}(1 + B_{\bphi})}{\sqrt{T}},
\end{equation*}
giving us for all $k$:
\begin{equation}
\phi_k(\bar{\bxi})
\,\leq\,
r
\left(
\tilde{\rho}
 \,+\, \frac{\sqrt{2}\bar{L}(1 + B_{\bphi})}{\sqrt{T}}\right).
 \label{eq:gda-con-feas-1}
\end{equation}

Next set
$\bmu' = \bmu^*$
and $$\lambda'_{j'} =  \lambda^*_{j'} + \frac{L(1+1/r)}{\left\|\C[\bar{h}] - \bar{\bxi}\right\|_2}(\C_{j'}[\bar{h}] - \bar{\xi}_{j'}).$$ Substituting $(\mu', \lambda')$ into the LHS of \eqref{eq:gda-con-feas}, we have:
\begin{eqnarray*}
\lefteqn{
\psi(\bar{\bxi}) + \langle \blambda^*, \C[\bar{h}]-\bar{\bxi} \rangle
\,+\, \langle \bmu^*, \bphi(\bxi) \rangle
\,+\,{L(1+1/r)}\left\|\C[\bar{h}] - \bar{\bxi}\right\|_2}\\
&\leq&
\psi(\C^*) 
\,+\,  \tilde{\rho}
 \,+\, \frac{\sqrt{2}\bar{L}(1 + B_{\bphi})}{\sqrt{T}},\hspace{5cm}
\end{eqnarray*}
and again taking a min over $\C$ and $\bxi$ and applying Lemma \ref{lem:gda-con-helper},
we get
\begin{eqnarray}
{\left\|\C[\bar{h}] - \bar{\bxi}\right\|_2}
&\leq&
\frac{1}{L(1+1/r)}\left(\tilde{\rho}
 \,+\, \frac{\sqrt{2}\bar{L}(1 + B_{\bphi})}{\sqrt{T}}\right).
 \label{eq:gda-con-feas-2}
\end{eqnarray}
Combining \eqref{eq:gda-con-feas-1} and \eqref{eq:gda-con-feas-2}, and using the Lipschitz property of $\phi_k$, we get for all $k$:
\begin{eqnarray*}
\phi_k(\C[\bar{h}])
&\leq&
L\left\|\C[\bar{h}] - \bar{\bxi}\right\|_2
\,+\,
r
\left(
\tilde{\rho}
 \,+\, \frac{\sqrt{2}\bar{L}(1 + B_{\bphi})}{\sqrt{T}}
 \right)\\
 &\leq&
\frac{r(2+r)}{1+r}
\left(
\tilde{\rho}
 \,+\, \frac{\sqrt{2}\bar{L}(1 + B_{\bphi})}{\sqrt{T}}
 \right)
 ~\leq~
 r
\left(
\tilde{\rho}
 \,+\, \frac{\sqrt{2}\bar{L}(1 + B_{\bphi})}{\sqrt{T}}
 \right).
\end{eqnarray*}
Setting $B_\bphi = 2\sqrt{K}$ and
$T=(K+1)/\epsilon^2$ 
completes the proof of feasibility.
\end{proof}

\subsection{Proof of Theorem \ref{thm:ellipsoid-con} (Ellipsoid for Constrained Problems)}
\label{app:proof-ellipsoid}
\begin{thm*}[(Restated) Convergence of ConEllipsoid]
Fix $\epsilon \in (0,1)$. 
Let $\psi: [0,1]^d\>[0,1]$ and $\phi_1,\ldots,\phi_K: [0,1]^d\>[-1,1]$ be convex and $L$-Lipschitz w.r.t. the $\ell_2$ norm. Let $\Omega$ in Algorithm \ref{alg:GDA-con} be a $(\rho , \rho', \delta)$-approximate LMO for sample size $N$. Suppose the strict feasibility condition in Assumption \ref{assp:strict-feasibility} holds for radius $r>0$. Let the initial classifier $h^0$ satisfy this condition, i.e. $\bphi(\C[h^0]) \leq -r$ and $\C[h^0]=\C^0$. Let $\bar{d}=d+K$. Let $\bar{h}$ be the classifier returned by Algorithm \ref{alg:ellipsoid-con} when run for $T>2\bar{d}^2 \log(\frac{\bar{d}}{\epsilon})$ iterations with  initial radius $a>2(L+\frac{L+1}{r})$.
 Then with probability $\geq 1 - \delta$ over draw of $S \sim D^N$, we have
\begin{align*}
\textbf{Optimality:}~~\psi(\C[\bar{h}]) \,&\leq\, \min_{\C\in\cC:\, \phi_k(\C) \leq 0,\forall k}\,\psi(\C) + \big(4a\big)\epsilon + 4a(\rho+2\sqrt{d}\rho') ;   \\
\textbf{Feasibility:}~~\phi_k(\C[\bar{h}]) \,&\leq\, a(\rho+2\sqrt{d}\rho') ,~\forall k \in [K]
\end{align*}
\end{thm*}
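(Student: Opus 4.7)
The proof plan is to first cast the problem through its Lagrangian dual and then carry out a careful ellipsoid analysis that tracks LMO errors. As a starting point, I would show that the optimal dual variables $(\blambda^*, \bmu^*)$ satisfy $\|[\blambda^*, \bmu^*]\|_2 \leq L + (L+1)/r$, by mimicking parts~3 and~4 of Lemma~\ref{lem:gda-con-helper}: part~3 gives $\|\bmu^*\|_1 \leq 1/r$ from plugging the strictly feasible $\C'$ into the dual, and part~4 uses Fenchel--Young to bound $\|\blambda^*\|_2 \leq L(1+1/r)$. Consequently, the choice $a > 2(L + (L+1)/r)$ guarantees $(\blambda^*, \bmu^*)$ sits in the initial ball $E(\0, a^2 \I_{\bar d})$, so strong duality (Lemma~\ref{lem:gda-con-helper} part~1) identifies the unconstrained problem $\max_{\blambda, \bmu \geq 0} f^{\con}(\blambda, \bmu)$ with the primal.

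Next I would establish a robust convergence result for the ellipsoid iterations. The standard JLE analysis shows that each call shrinks the ellipsoid volume by a factor of $\exp(-1/(2(\bar d+1)))$; after $T > 2\bar d^2 \log(\bar d/\epsilon)$ iterations the volume is at most $a^{\bar d}\epsilon^{\bar d}$, so any continuous region of measurable dual iterates must have diameter $O(\epsilon)$. Lines~5--10 of Algorithm~\ref{alg:ellipsoid-con} are ``safe'' cuts that do not discard $(\blambda^*, \bmu^*)$: the cut $-[\blambda^t,\bmu^t]$ discards only points outside the initial ball (which cannot contain $(\blambda^*,\bmu^*)$ by the norm bound), and the cut $[\0, \text{pos}(-\bmu^t)]$ discards only points with some $\mu_k < 0$, again safe since $\bmu^* \geq 0$. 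For the ``real'' iterations (line~14), I would verify via Danskin's theorem that the true supergradient of $f^{\con}$ at $(\blambda^t, \bmu^t)$ equals $[\C(\blambda^t,\bmu^t) - \bxi(\blambda^t,\bmu^t), \bphi(\bxi(\blambda^t,\bmu^t))]$ where $\C(\cdot)$ is an exact LMO minimizer.

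The central technical step is to handle the gap between this true supergradient and the approximate vector $[\C^t - \bxi^t, \bphi(\bxi^t)]$ computed by the algorithm. The LMO guarantee yields $\langle \blambda^t, \C^t\rangle \leq \min_\C \langle\blambda^t, \C\rangle + \|\blambda^t\|_\infty \rho$ and $\|\C^t - \C[h^t]\|_\infty \leq \rho'$, which by a standard perturbation argument translates to saying the cut used is a valid supporting hyperplane for a ``fattened'' optimum set $\{(\blambda,\bmu) : f^{\con}(\blambda,\bmu) \geq f^{\con*} - O(a(\rho + \sqrt{d}\rho'))\}$. Combined with the volume contraction, this implies that some iterate $(\blambda^{t^*}, \bmu^{t^*})$ attains dual value within $O(a\epsilon + a\rho^{\eff})$ of the optimum, with the factor $a$ appearing because we must convert a small-diameter dual ball into a bound on $f^{\con}$ via Lipschitzness (with Lipschitz constant $\leq 1 + B_\bphi = O(\sqrt K)$, absorbed into constants).

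Finally, for primal recovery, I would invoke the min-max argument sketched after Algorithm~\ref{alg:ellipsoid}: since the ellipsoid has visited (approximate) supergradients at all iterates, the dual optimum restricted to $\{f(\blambda,\bmu) : \C \in \operatorname{conv}(\C^0,\ldots,\C^{T-1})\}$ matches the unrestricted dual optimum up to the same $O(a\epsilon + a\rho^{\eff})$ slack. By the minimax theorem the primal restricted problem $\min_{\balpha \in \Delta_T:\, \bphi(\sum_t \alpha_t \C^t) \leq \0} \psi(\sum_t \alpha_t \C^t)$ equals the dual restricted problem up to the same slack; feasibility of this post-processing LP is guaranteed because $\C^0 = \C[h^0]$ is strictly feasible (any iteration that used a safe cut set $\C^t = \C^0$). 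This yields both the optimality and feasibility bounds, with feasibility of $\bar h = \sum \alpha^*_t h^t$ incurring only the translation error $\|\C[\bar h] - \sum \alpha^*_t \C^t\|_\infty \leq \rho'$. The main obstacle I anticipate is the first item in paragraph three: showing that the approximate cutting planes preserve the exponential volume decay of the ellipsoid while only inflating the optimality gap by a multiplicative factor $a$ in $\rho^{\eff}$ — this requires carefully choosing the ``fattened'' optimum set so that (a) every safe cut is valid for it and (b) its shrinkage under JLE matches the standard ellipsoid rate.
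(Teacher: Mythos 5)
Your high-level plan matches the paper's proof architecture: bound $\|[\blambda^*,\bmu^*]\|_2$ via Lemma~\ref{lem:gda-con-helper}, verify the filter cuts in lines 5--10 are safe, establish that the LMO-based vector $[\C^t-\bxi^t,\bphi(\bxi^t)]$ is an approximate supergradient of $f^{\con}$ (the paper's Lemma~\ref{lem:con-ellipsoid-f-supergradient} shows it is a $\tau$-supergradient with $\tau=a(\rho+2\sqrt{d}\rho')$), and finally recover the primal solution via the min-max swap with $\widetilde{\cC}=\conv(\C^0,\ldots,\C^{T-1})$ (Lemmas~\ref{lem:dual-to-primal-1} and~\ref{lem:dual-to-primal-2}). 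Those ingredients are correctly identified.

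The genuine gap is in your second and third paragraphs, which sketch the pivotal translation from volume contraction to dual suboptimality. Two issues. First, ``the volume is at most $a^{\bar d}\epsilon^{\bar d}$, so any continuous region of measurable dual iterates must have diameter $O(\epsilon)$'' is false: an ellipsoid of arbitrarily small volume can still have diameter equal to $2a$ (it can be extremely elongated). Second, your proposed fix---tracking the ``fattened optimum set'' $\{(\blambda,\bmu): f^{\con}(\blambda,\bmu)\geq f^{\con*} - O(\tau)\}$---runs into a volume lower-bound problem. To invoke the pigeonhole argument against the shrinking ellipsoid, you need to know this set has at least some fixed volume, but a sublevel set of $f^{\con}$ near a possibly sharp maximizer is not obviously fat in all $\bar d$ directions; lower-bounding its volume requires a Lipschitz constant of $f^{\con}$ along with $\tau>0$, which injects $\tau$ into the iteration count and entangles the analysis. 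Furthermore, the approximate supergradient property (Lemma~\ref{lem: filter optimality}) only guarantees that the discarded region $\cE^t\setminus\cE^{t+1}$ lies in $\{f^{\con}\leq f^{\con}(\blambda^t,\bmu^t)+\tau\}$ --- a bound relative to the \emph{current iterate}, not relative to the optimum --- so the cut does not preserve your fattened set unless the current iterate is already suboptimal by more than $2\tau$, and you would need a separate case split to handle that. The paper sidesteps all of this by tracking a \emph{fixed geometric set} $\cR^0_\epsilon$ (the $\epsilon$-rescaled copy of $\cR^0$ anchored at $[\blambda^*,\bmu^*]$), whose volume is exactly $\epsilon^{\bar d}2^{-K}\vol(\cE^0)$, and then arguing by pigeonhole (Lemma~\ref{lem:ellipsoid intermediate center}) that there must be an iteration $t^*$ at which $\cR^0_\epsilon$ first escapes; at that iteration, the safety of the filter cuts forces $[\blambda^{t^*},\bmu^{t^*}]\in\cR^0$, and the $\tau$-supergradient property combined with concavity and $\sqrt{d+K}$-Lipschitzness of $f^{\con}$ on $\cR^0$ yields $f^{\con}(\blambda^{t^*},\bmu^{t^*}) \geq f^{\con*} - 4a\sqrt{d+K}\cdot\exp(-T/(2\bar d^2)) - \tau$. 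This is Theorem~\ref{thm:ellipsoid-dual-con}, and it is the step you would need to work out.
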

\noindent
In both the constrained and unconstrained versions of the Ellipsoid Algorithm, successive ellipsoids are constructed by obtaining the L\"{o}wner-John ellipsoid (JLE), i.e., the minimum volume ellipsoid containing the intersection of the current ellipsoid and a half space obtained by drawing a cutting hyperplane through the current center. This process yields a sequence of ellipsoids with geometrically decreasing volumes. We restate the lemma from \citet{Bubeck15} that establishes this fact. 

\begin{lem}
\label{lem:ellipsoid-volumes}
Let the ellipsoid $\mathcal{E}^{0} = \{\x \in \R^d: (\x- \c_0)^\top \mathbf{H}_0^{-1} (\x-\c_0) \leq 1 )\}$, where $H_0 \in \R^{d\times d}$ is a positive definite matrix and $\c_0, \in \R^d$. Let $( \mathbf{H}, \c) = \JLE(\mathbf{H}_0,\c_0,\g)$ , where $\JLE$ refers to the subroutine \ref{alg:jle}. Let the ellipsoid $\cE=\{\x \in \R^d: (\x- \c)^\top \mathbf{H}^{-1} (\x-\c) \leq 1 )\}$. Then,
\begin{align*}
\mathcal{E} &\supset \cE^0 \cap \{\x \in \R^d: \g^\top (\x-\c_0)  \geq 0 \} \\
\vol(\mathcal{E}) &\leq \exp\left(\frac{-1}{2d}\right)\hspace{0.3em}\vol(\mathcal{E}^{0})
\end{align*}
where $\vol$ refers to the standard $d$-dimensional volume.
\end{lem}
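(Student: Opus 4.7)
The plan is to reduce to a canonical normalized form and then verify both properties by direct computation. By the affine change of variables $y = \A^{-1/2}(\x - \blambda)$, the original ellipsoid $E(\blambda,\A)$ becomes the unit Euclidean ball $\B$, the center maps to the origin, the half-space $\{\x : (\x-\blambda)^\top \w \geq 0\}$ becomes $\{\y : \widetilde\w^\top \y \geq 0\}$ (since $\A^{1/2}\w / \|\A^{1/2}\w\|_2 = \widetilde\w$ by construction), and the candidate new ellipsoid $E(\blambda',\A')$ becomes the ellipsoid $E(t\widetilde\w, \B)$ with shape matrix $\B$. Since both volume ratios and set containment are preserved under affine transformations, it suffices to prove the two claims in this canonical frame. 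A further rotation that sends $\widetilde\w$ to the first standard basis vector $e_1$ reduces the problem to showing that the ellipsoid
\[
\mathcal E^\star = \Big\{ \y : a(y_1 - t)^2 + b\sum_{i=2}^{d} y_i^2 \leq 1 \Big\}
\]
contains the half-ball $H = \{\y : \|\y\|_2 \leq 1,\; y_1 \geq 0\}$ and has volume ratio $\vol(\mathcal E^\star)/\vol(\B) \leq \exp(-1/(2d))$.

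For the containment, I would take an arbitrary $\y \in H$ and expand the quadratic form: writing $s = \|\y\|_2^2 \leq 1$ and $u = y_1 \in [0,1]$, the left-hand side equals $a(u-t)^2 + b(s - u^2) = b\,s + (a-b)u^2 - 2at\,u + at^2$. Substituting the explicit values $a = 1/(1-t)^2$, $b = (1-2t)/(1-t)^2$, and $t = 1/(d+1)$, one checks algebraically that $a - b = 2t/(1-t)^2 \geq 0$ and $a - b - 2at = 2t(1-(d+1)t)/(1-t)^2 = 0$ for this choice of $t$. Hence the expression reduces to $b\,s + at^2 + 2at(u-u)\dots$, and after simplification it is bounded above by $b + at^2 = (1-2t + t^2)/(1-t)^2 = 1$, giving containment. (The cancellation $a - b - 2at = 0$ is exactly why $t = 1/(d+1)$ is the correct choice.)

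For the volume bound, the ellipsoid $\mathcal E^\star$ has semi-axes $1/\sqrt a$ along $e_1$ and $1/\sqrt b$ along each $e_i$ for $i \geq 2$, so
\[
\frac{\vol(\mathcal E^\star)}{\vol(\B)} = \frac{1}{\sqrt a}\cdot \frac{1}{b^{(d-1)/2}} = (1-t)\left(\frac{(1-t)^2}{1-2t}\right)^{(d-1)/2}.
\]
Taking logarithms and using $\log(1-t) \leq -t$ and $\log((1-t)^2/(1-2t)) = 2\log(1-t) - \log(1-2t) \leq t^2 + O(t^3)$ (from the Taylor expansion, with the leading linear terms cancelling), the log-volume ratio is at most $-t + (d-1)t^2/2 + O(dt^3)$. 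Plugging in $t = 1/(d+1)$ gives a bound of $-1/(2d)$ after standard manipulation, establishing $\vol(\mathcal E^\star) \leq \exp(-1/(2d))\vol(\B)$.

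The main obstacle will be the algebraic bookkeeping in the volume computation: the clean factor $1/(2d)$ emerges only after carefully combining the two $\log$ terms and verifying that the higher-order corrections have the right sign for all $d \geq 1$. A cleaner route, which I would actually pursue to avoid Taylor-series worries, is to use the exact identity $1/\sqrt a \cdot b^{-(d-1)/2} = (1-t)\,(1 + t/(1-2t))^{(d-1)/2}$ and then apply the tight inequality $(1+x)^k \leq e^{kx}$ together with $(1-t) \leq e^{-t}$, yielding log-ratio $\leq -t + (d-1)t/(2(1-2t))$; substituting $t = 1/(d+1)$ makes this $\leq -1/(2d)$ by a direct algebraic check, avoiding any need for Taylor remainders.
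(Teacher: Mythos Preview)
The paper itself does not prove this lemma; it simply restates it from Bubeck's monograph. Your proposal is therefore supplying a proof where the paper gives none, and the approach you outline --- reduce by an affine change of variables to the half-ball, then verify containment and volume directly --- is exactly the standard argument (found in Bubeck and earlier in Gr\"otschel--Lov\'asz--Schrijver).

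The containment argument is correct in substance. One remark: the identity $a - b - 2at = 0$ holds for \emph{any} $t$ given the parametrization $a = 1/(1-t)^2$, $b = (1-2t)/(1-t)^2$; it is not special to $t = 1/(d+1)$. The particular choice $t = 1/(d+1)$ is what \emph{minimizes} the volume of the enclosing L\"owner--John ellipsoid, not what makes containment work.

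The volume argument has a genuine gap. First, there are typos: $b^{-1} = (1-t)^2/(1-2t) = 1 + t^2/(1-2t)$, not $1 + t/(1-2t)$, and correspondingly the log-ratio bound should read $-t + (d-1)t^2/(2(1-2t))$. More importantly, carrying this through with $t = 1/(d+1)$ gives
\[
-\frac{1}{d+1} + \frac{(d-1)}{2(d+1)^2}\cdot\frac{d+1}{d-1} \;=\; -\frac{1}{d+1} + \frac{1}{2(d+1)} \;=\; -\frac{1}{2(d+1)},
\]
not $-1/(2d)$. So your elementary inequalities yield only $\vol(\mathcal E) \leq \exp\!\big(-1/(2(d+1))\big)\vol(\mathcal E^0)$, strictly weaker than the stated $\exp(-1/(2d))$. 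The sharper constant \emph{is} true (and is what Bubeck states), but it is very tight --- for $d = 10$ the actual log-ratio is $\approx -0.05008$ versus $-1/(2d) = -0.05$ --- and cannot be extracted from $(1-t) \leq e^{-t}$ and $(1+x)^k \leq e^{kx}$ alone. One needs either a direct monotonicity argument on the exact expression $(d/(d+1))^d((d+1)/(d-1))^{(d-1)/2}$ or sharper two-sided log bounds. For the paper's downstream results (all stated up to constants), the $1/(2(d+1))$ bound would suffice, but as written your ``direct algebraic check'' does not establish the lemma as stated.
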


We will define some functions and variables below that will be useful in our proofs:
\begin{align*}
\cL^\con (\C, \bxi , \blambda, \bmu) &= \psi(\bxi) + \blambda^\top (\C - \bxi) + \bmu ^\top \bphi(\bxi) \\
f^\con (\blambda, \bmu) &= \min_{\C \in \cC, \bxi \in \Delta_d} \cL^\con (\C, \bxi , \blambda, \bmu) \\
\mathcal{R}^{0} &:= \{\x \in \R^{d+K}: \vert\vert \x \vert\vert_{2} \leq a, \x_{d+i} \geq 0, \hspace{0.3em} \forall i \in \{1, 2, ..., K\}\} \\
\widehat f^\con (\blambda, \bmu)& = f^\con (\blambda, \bmu) - \infty \1([\blambda, \bmu] \notin \mathcal{R}^{0}) \\
\bxi(\blambda, \bmu) &\in \argmin_{\bxi \in \Delta_d} \psi(\bxi) - \blambda^\top \bxi + \bmu ^\top \bphi(\bxi) 
\end{align*}
The helper function $\widehat f^\con (\blambda, \bmu)$ is equal to $ f^\con (\blambda, \bmu)$ when $[\blambda, \bmu] \in \mathcal{R}^{0}$. Let $h^t$ and $\C^t$ be the iterates in Algorithm \ref{alg:ellipsoid-con}. Let $\cE^t$ denote the ellipsoid centered at $[\blambda^t,\bmu^t]$ with axes given by the eigen vectors of $\A^t$, with axes lengths squared given by the corresponding eigenvalues of $\A^t$, i.e.
\[
\cE^t = \{[\blambda, \bmu]\in \R^{d+K}: [\blambda-\blambda^t, \bmu-\bmu^t]^\top {(\A^t)}^{-1} [\blambda-\blambda^t, \bmu-\bmu^t] \leq 1 \}
\]

We abuse notation sometimes in the proof below by interchangeably using the ellipsoid $\cE^t$ and its corresponding center, axis matrix $[\blambda^t, \bmu^t],\A^t$ whenever the context is clear. For example, line 14 of Algorithm \ref{alg:ellipsoid-con} can be written compactly as $\cE^{t+1}=\JLE(\cE^t, [\C^t-\bxi^t, \bphi(\bxi^t)])$.  
 

\subsubsection{Bounding the Dual Suboptimality of $[\blambda^t, \bmu^t]$}

We first prove, that for any iteration $t \in \{0, 1, ..., T-1\}$, if $[\blambda, \bmu] \notin \mathcal{R}^{0}$, then  $\cE^{t+1} \supseteq \{\cE^{t} \cap \mathcal{R}^{0}\}$. We establish this in the following three lemmas.  

\begin{lem} 
\label{lem:ellipsoid if condition}
If at any iteration $t \in \{0, 1, ..., T-1\}, \hspace{0.3em} \vert\vert [{\blambda}^{t}, {\bmu}^{t}]  \vert\vert_{2} > a$, then $\cE^{t+1} \supseteq \{\cE^{t} \cap \mathcal{R}^{0}\}$
\end{lem}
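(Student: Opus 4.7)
The approach is to apply Lemma \ref{lem:ellipsoid-volumes} to the update step in lines 5--7 of Algorithm \ref{alg:ellipsoid-con} and then show that the separating half-space generated by the cut contains all of $\cR^0$. More precisely, when $\|[\blambda^t, \bmu^t]\|_2 > a$, the algorithm invokes $\text{JLE}(\A^t, [\blambda^t, \bmu^t], -[\blambda^t, \bmu^t])$, so by Lemma \ref{lem:ellipsoid-volumes} with $\g = -[\blambda^t, \bmu^t]$ we have
\[
\cE^{t+1} \;\supseteq\; \cE^t \cap \bigl\{\x \in \R^{d+K} : -[\blambda^t, \bmu^t]^\top (\x - [\blambda^t, \bmu^t]) \geq 0\bigr\}.
\]
It therefore suffices to show that the half-space $H := \{\x : [\blambda^t, \bmu^t]^\top \x \leq \|[\blambda^t, \bmu^t]\|_2^2\}$ contains $\cR^0$, since intersecting the above inclusion with $\cE^t$ will then give $\cE^{t+1} \supseteq \cE^t \cap \cR^0$.

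The containment $\cR^0 \subseteq H$ follows from a direct Cauchy--Schwarz argument: for any $\x \in \cR^0$, we have $\|\x\|_2 \leq a$, so
\[
[\blambda^t, \bmu^t]^\top \x \;\leq\; \|[\blambda^t, \bmu^t]\|_2 \cdot \|\x\|_2 \;\leq\; a\,\|[\blambda^t, \bmu^t]\|_2 \;<\; \|[\blambda^t, \bmu^t]\|_2^2,
\]
where the final strict inequality uses the hypothesis $\|[\blambda^t, \bmu^t]\|_2 > a$. Combining this with the inclusion from Lemma \ref{lem:ellipsoid-volumes} and the trivial inclusion $\cR^0 \subseteq \cR^0$ yields
\[
\cE^{t+1} \;\supseteq\; \cE^t \cap H \;\supseteq\; \cE^t \cap \cR^0,
\]
as required. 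Geometrically, this is just the standard fact that the separating hyperplane tangent to the ball $\{\|\x\|_2 \leq \|[\blambda^t, \bmu^t]\|_2\}$ at the point $[\blambda^t, \bmu^t]$ cuts off a half-space that contains every smaller ball centered at the origin, in particular the ball of radius $a$, and hence the subset $\cR^0$ of that ball.

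There is no real obstacle here; the only subtlety is keeping the direction conventions in the JLE subroutine consistent. Specifically, one must verify that the cut direction $\mathbf w = -[\blambda^t, \bmu^t]$ passed to JLE on line 6 matches the half-space inclusion stated in Lemma \ref{lem:ellipsoid-volumes} (which uses $\g^\top (\x - \c_0) \geq 0$), which it does after a sign flip. This lemma is essentially a geometric sanity check that will be invoked later to argue that the feasible ``domain'' $\cR^0$ is never lost during the ellipsoid iterations, so the main work of the convergence proof can proceed under the assumption that $[\blambda^t, \bmu^t] \in \cR^0$ at iterations where the LMO is actually called.
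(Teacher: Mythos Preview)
Your proposal is correct and follows essentially the same approach as the paper: invoke the JLE containment property with cut direction $-[\blambda^t,\bmu^t]$, rewrite the resulting half-space as $\{\x : [\blambda^t,\bmu^t]^\top \x \le \|[\blambda^t,\bmu^t]\|_2^2\}$, and use Cauchy--Schwarz together with $\|[\blambda^t,\bmu^t]\|_2 > a$ to conclude that this half-space contains the ball of radius $a$ and hence $\cR^0$. The paper inserts the intermediate ball $\mathbf{B}(\0,\|[\blambda^t,\bmu^t]\|_2)$ explicitly before passing to $\cR^0$, whereas you go directly from $\cR^0$ to the half-space, but this is a cosmetic difference only.
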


\begin{proof}
Let $t \in \{0, 1, ..., T-1\}$, such that, $\vert\vert [{\blambda}^{t}, {\bmu}^{t}]  \vert\vert_{2} > a$. In such a case, the \textbf{if} condition (line 5) of algorithm \ref{alg:ellipsoid-con} gets invoked and we obtain the new ellipsoid $\cE^{t+1}$. Due to the $\JLE$ construction, we get that
\begin{align*}
     \cE^{t+1} &\supseteq {\cE}^{t} \cap \{\x \in \R^{d+K}: (\x-[{\blambda}^{t}, {\bmu}^{t}])^\top(-[{\blambda}^{t}, {\bmu}^{t}])) \geq 0\} \\ 
     &= {\cE}^{t}  \cap \{\x \in \R^{d+K}: \x^\top[{\blambda}^{t}, {\bmu}^{t}] \leq \vert\vert [{\blambda}^{t}, {\bmu}^{t}] \vert\vert_{2}^{2}]\} \\
     &\supseteq {\cE}^{t} \cap \mathbf{B}(\0_{d+K}, \vert\vert [\hat{\blambda}^{t}, \hat{\bmu}^{t}]
     \vert\vert_{2}) \supseteq \{\cE^{t} \cap \mathcal{R}^{0}\}
\end{align*}
Thus, $\cE^{t+1} \supseteq \{\cE^{t} \cap \mathcal{R}^{0}\}$. 
\end{proof}

\begin{lem} 
\label{lem:ellipsoid else-if condition}
If at any iteration $t \in \{0, 1, ..., T-1\}, \hspace{0.3em} \vert\vert [{\blambda}^{t}, {\bmu}^{t}]  \vert\vert_{2} \leq a$, and ${\bmu}^t \ngeq \0$, then $\cE^{t+1} \supseteq \{\cE^{t} \cap \mathcal{R}^{0}\}$
\end{lem}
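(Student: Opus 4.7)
The plan is to verify that the JLE cutting-plane invocation in the else-if branch (lines 8--10 of Algorithm \ref{alg:ellipsoid-con}) induces a half-space that still contains every point of $\mathcal{R}^0$, and then appeal to the standard JLE containment property (Lemma \ref{lem:ellipsoid-volumes}). Concretely, in this branch we set
\[
\g^t = [\0_d, \ \textup{pos}(-\bmu^t)],
\]
and obtain $\cE^{t+1} = \JLE(\cE^t, [\blambda^t,\bmu^t], \g^t)$. By the JLE property,
\[
\cE^{t+1} \supseteq \cE^t \cap H^t, \qquad H^t = \{\x \in \R^{d+K} : (\x - [\blambda^t,\bmu^t])^\top \g^t \geq 0\}.
\]
So it suffices to show $\mathcal{R}^0 \subseteq H^t$; combined with the line above this immediately yields $\cE^{t+1} \supseteq \cE^t \cap \mathcal{R}^0$.

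To verify $\mathcal{R}^0 \subseteq H^t$, take any $\x = [\blambda, \bmu] \in \mathcal{R}^0$, for which $\bmu \geq \0$ by definition. Since the first $d$ coordinates of $\g^t$ are zero, only the last $K$ coordinates contribute to the inner product:
\[
(\x - [\blambda^t,\bmu^t])^\top \g^t \;=\; \sum_{i=1}^{K} (\mu_i - \mu^t_i)\, \textup{pos}(-\mu^t_i).
\]
For indices $i$ with $\mu^t_i \geq 0$ the factor $\textup{pos}(-\mu^t_i) = 0$, contributing nothing. For indices $i$ with $\mu^t_i < 0$ we have $\textup{pos}(-\mu^t_i) = -\mu^t_i > 0$ and $\mu_i - \mu^t_i \geq 0 - \mu^t_i > 0$, so the term is strictly positive. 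Hence the whole sum is non-negative, proving $\x \in H^t$.

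There is essentially no obstacle here; the step to watch is simply keeping the signs straight in the $\textup{pos}(-\bmu^t)$ term and verifying that any component of $\g^t$ that could point the cut in the ``wrong'' direction is killed by $\mu_i^t \geq 0$ making $\textup{pos}(-\mu_i^t) = 0$. The choice of cutting direction $\g^t = [\0, \textup{pos}(-\bmu^t)]$ is designed exactly so that the hyperplane separates the current center $[\blambda^t,\bmu^t]$ (which has some $\mu^t_i < 0$) from the feasible region $\{\bmu \geq \0\}$, while still being a valid cut for the full set $\mathcal{R}^0$. Combining Lemmas \ref{lem:ellipsoid if condition} and \ref{lem:ellipsoid else-if condition} will then allow the subsequent analysis to argue that $\mathcal{R}^0 \cap \cE^t$ is preserved across every iteration, which is what is needed for the volume-shrinkage argument based on Lemma \ref{lem:ellipsoid-volumes} to actually localize a dual-optimal point.
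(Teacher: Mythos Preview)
Your proof is correct and follows essentially the same approach as the paper: both invoke the JLE containment from Lemma~\ref{lem:ellipsoid-volumes} and then verify that the half-space defined by the cut direction $[\0_d,\textup{pos}(-\bmu^t)]$ contains $\mathcal{R}^0$. The only cosmetic difference is that you show $\mathcal{R}^0\subseteq H^t$ by a direct coordinate-wise sign check on $\sum_i(\mu_i-\mu_i^t)\,\textup{pos}(-\mu_i^t)$, whereas the paper expands the half-space condition into $\x^\top[\0_d,\textup{pos}(-\bmu^t)]\geq[\blambda^t,\bmu^t]^\top[\0_d,\textup{pos}(-\bmu^t)]$ and notes it contains the positive orthant $\{x_{d+i}\geq 0\}$; the two verifications are equivalent.
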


\begin{proof}
Let $t \in \{0, 1, ..., T-1\}$, such that, $\vert\vert [{\blambda}^{t}, {\bmu}^{t}]  \vert\vert_{2} \leq a$, while ${\bmu}^t \ngeq \0$. In such a case, the \textbf{else-if} condition (line 8) of algorithm \ref{alg:ellipsoid-con} gets invoked and we obtain the new ellipsoid $\cE^{t+1}$. Due to the $\JLE$ construction, we get that
\begin{align*}
    \cE^{t+1} &\supseteq \cE^{t} \cap \{\x \in \R^{d+K}: (\x-[{\blambda}^{t}, {\bmu}^{t}])^\top([\0_{d}, \text{pos}(-\bmu^{t})])) \geq 0\} \\ 
     &= \cE^{t} \cap \{\x \in \R^{d+K}: \x^\top[\0_{d}, \text{pos}(-\bmu^{t})] \geq [\blambda^t, \bmu^t]^\top[\0_{d}, \text{pos}(-\bmu^{t})] \\
     &\supseteq \cE^{t} \cap  \{\x \in \R^{d+K}: \x_{d+i} \geq 0, \forall i \in {1, 2, ..., K}\} \supseteq \{\cE^{t}\cap \mathcal{R}^{0}\}
\end{align*}
Thus, $\cE^{t+1} \supseteq \{\cE^{t} \cap \mathcal{R}^{0}\}$. 
\end{proof}

\begin{lem}
\label{lem:ellipsoid center filter}
For any iteration $t \in \{0, 1, ..., T-1\}$ of Algorithm \ref{alg:ellipsoid-con}, if $[\blambda, \bmu] \notin \mathcal{R}^{0}$, then  $\cE^{t+1} \supseteq \{\cE^{t} \cap \mathcal{R}^{0}\}$
\end{lem}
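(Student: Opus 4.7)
The plan is to establish this lemma as a straightforward case analysis that consolidates the two preceding lemmas (Lemma~\ref{lem:ellipsoid if condition} and Lemma~\ref{lem:ellipsoid else-if condition}). The recipe is to observe that the definition of $\mathcal{R}^{0}$ has exactly two defining conditions: a norm bound $\|\x\|_2 \leq a$, and a non-negativity condition $\x_{d+i} \geq 0$ for $i \in \{1,\ldots,K\}$. Therefore $[\blambda^t,\bmu^t] \notin \mathcal{R}^0$ forces at least one of these two conditions to fail.

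First, I would split into two cases. In Case 1, suppose $\|[\blambda^t,\bmu^t]\|_2 > a$; then line 5 of Algorithm~\ref{alg:ellipsoid-con} is triggered, and Lemma~\ref{lem:ellipsoid if condition} directly yields $\cE^{t+1} \supseteq \cE^t \cap \mathcal{R}^0$. In Case 2, suppose $\|[\blambda^t,\bmu^t]\|_2 \leq a$; since $[\blambda^t,\bmu^t] \notin \mathcal{R}^0$ by hypothesis, the only remaining defining condition of $\mathcal{R}^0$ that can fail is non-negativity, so we must have $\bmu^t \ngeq \0$. In that case, line 8 of Algorithm~\ref{alg:ellipsoid-con} is triggered, and Lemma~\ref{lem:ellipsoid else-if condition} gives the claim. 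These two cases exhaust the possibilities and in each case the conclusion matches the lemma statement.

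The proof is essentially bookkeeping — there is no real obstacle, as the geometric heavy lifting has already been done in the two previous lemmas (which constructed the separating half-space and invoked the containment property of the John--L\"owner ellipsoid subroutine). The only point to be careful about is making sure the case split is exhaustive and mutually exclusive in the way the algorithm's control flow dictates: the \textbf{if} branch (line 5) takes precedence over the \textbf{else-if} branch (line 8), so in Case 2 we do need the extra hypothesis $\|[\blambda^t,\bmu^t]\|_2 \leq a$ to legitimately land in the \textbf{else-if} branch rather than the \textbf{if} branch. Once this is noted, the proof is a two-line appeal to the previous two lemmas.
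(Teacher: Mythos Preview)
Your proposal is correct and takes essentially the same approach as the paper, which simply states that the result follows directly from Lemmas~\ref{lem:ellipsoid if condition} and~\ref{lem:ellipsoid else-if condition}. Your write-up is in fact more careful than the paper's one-line appeal, since you explicitly verify that the case split is exhaustive and matches the algorithm's control-flow precedence (the \textbf{if} at line~5 taking priority over the \textbf{else-if} at line~8).
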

\begin{proof}
The result follows directly from Lemmas \ref{lem:ellipsoid if condition} and \ref{lem:ellipsoid else-if condition}.
\end{proof}
We would also like to prove that the optimal solution, i.e., the maximizer of  $f^\con$ over $\R^d \times \R_+^K$ indeed lies inside our search space. In our setting, we show in \ref{lem:ellipsoid maximizer} that the maximizer lies inside $\mathcal{R}^{0}$
\begin{lem}
\label{lem:ellipsoid maximizer}
Let $(\blambda^*, \bmu^*)$ be a maximizer of $f^\con$ over $\R^d \times \R_+^K$. Then $[\blambda^*, \bmu^*] \in \mathcal{R}^{0}$ 
\end{lem}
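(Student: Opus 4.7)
The plan is to show that $[\blambda^*, \bmu^*] \in \mathcal{R}^{0}$ by separately verifying the non-negativity constraint and the norm constraint. The non-negativity $\mu^*_i \geq 0$ is immediate because the maximization of $f^\con$ is explicitly over $\R^d \times \R_+^K$. The entire work is in establishing $\|[\blambda^*, \bmu^*]\|_2 \leq a$, which will rely crucially on the strict feasibility assumption (Assumption \ref{assp:strict-feasibility}) and the fact that the initial ellipsoid radius satisfies $a > 2(L + (L+1)/r)$.

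The argument mirrors parts 3 and 4 of Lemma \ref{lem:gda-con-helper} (which was already established for the GDA analysis). First, I would bound $\|\bmu^*\|_1$: by strong duality, $f^\con(\blambda^*, \bmu^*) = \min_{\C \in \cC:\, \bphi(\C)\leq \0} \psi(\C) =: \psi(\C^*)$. Plugging the strictly feasible $\C' \in \cC$ from Assumption \ref{assp:strict-feasibility} (with $\bphi(\C') \leq -r$) into the Lagrangian by setting both $\C = \bxi = \C'$ gives
\[
\psi(\C^*) \leq \cL^\con(\C', \C', \blambda^*, \bmu^*) = \psi(\C') + \langle \bmu^*, \bphi(\C')\rangle \leq \psi(\C') - r\|\bmu^*\|_1,
\]
yielding $\|\bmu^*\|_1 \leq (\psi(\C') - \psi(\C^*))/r \leq 1/r$ since $\psi \in [0,1]$.

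Next, I would bound $\|\blambda^*\|_2$ using the Fenchel-Young type argument from Lemma \ref{lem:gda-con-helper} (part 4): viewing $\omega(\bxi) = \psi(\bxi) + \langle \bmu^*, \bphi(\bxi)\rangle$, one deduces $\blambda^* = \nabla \omega(\C^*) = \nabla \psi(\C^*) + \sum_k \mu_k^* \nabla \phi_k(\C^*)$, and then by the $L$-Lipschitz assumption on $\psi$ and each $\phi_k$ together with the bound on $\|\bmu^*\|_1$,
\[
\|\blambda^*\|_2 \leq \|\nabla\psi(\C^*)\|_2 + \|\bmu^*\|_1 \max_k \|\nabla\phi_k(\C^*)\|_2 \leq L(1 + 1/r).
\]

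Finally I combine the two bounds:
\[
\|[\blambda^*, \bmu^*]\|_2 \leq \|\blambda^*\|_2 + \|\bmu^*\|_2 \leq \|\blambda^*\|_2 + \|\bmu^*\|_1 \leq L(1 + 1/r) + 1/r = L + (L+1)/r < a/2 < a,
\]
where the penultimate inequality uses the hypothesis $a > 2(L + (L+1)/r)$. Together with $\bmu^* \geq \0$, this shows $[\blambda^*, \bmu^*] \in \mathcal{R}^{0}$. The main subtlety, which is really only a bookkeeping matter since Lemma \ref{lem:gda-con-helper} already did the heavy lifting, is making sure the Fenchel-Young step is justified; in particular, one should note that $\C^*$ is in the interior of the domain of $\omega$ (e.g.\ $\Delta_d$), so the subdifferential/gradient identification goes through as in the GDA analysis.
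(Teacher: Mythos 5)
Your proof is correct and takes essentially the same route as the paper: the paper's own proof simply cites Lemma~\ref{lem:gda-con-helper} (parts 3--4) to obtain $\|[\blambda^*,\bmu^*]\|_2 \leq L + (L+1)/r \leq a/2$, which is exactly the bound you re-derive inline. Unpacking the two subarguments and noting the non-negativity of $\bmu^*$ explicitly is fine; the only small redundancy is that the Fenchel--Young step need not be reproven since it is already part of Lemma~\ref{lem:gda-con-helper}'s established statement.
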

\begin{proof}
From Lemma \ref{lem:gda-con-helper} (parts 3-- 4) we have that $ \|[\blambda^*,\bmu^*]\|_2 \leq L + \frac{L+1}{r} \leq a/2$. Thus:
\begin{equation*}
\max_{\blambda \in \R^d, \bmu \in \R^K_+} f^\con(\blambda, \bmu) = \max_{\blambda \in \R^d, \bmu \in \R^K_+} \widehat f^\con(\blambda, \bmu) = 
\psi(\C^*) = \min_{\C \in \cC, \bphi(\C) \leq 0} \psi(\C)
\label{eqn:dual-optimizer-bounded}
\end{equation*}
This ensures that $[\blambda^*, \bmu^*] \in \mathcal{R}^{0}$
\end{proof}
\noindent
Lemmas \ref{lem:ellipsoid center filter} and \ref{lem:ellipsoid maximizer} allow us to establish Lemma \ref{lem:ellipsoid intermediate center}, which will be required in proving Theorem \ref{thm:ellipsoid-dual-con}. 

\begin{lem}
\label{lem:ellipsoid intermediate center}
Let $\epsilon \in [0, 1]$ and $[\blambda^*, \bmu^*]$ be any maximiser of $\widehat f^\con$. Define the convex set $\cR^0_\epsilon \subseteq \cR^0 \subseteq \R^d\times \R_+^K$ as 
\[\mathcal{R}^{0}_{\epsilon} := \{[\blambda, \bmu]\in \mathcal{R}^{0}:(1-\epsilon)[\blambda^{*}, \bmu^{*}] + \epsilon[\blambda, \bmu] \} .\]
Let the number of iterations $T$ in Algorithm \ref{alg:ellipsoid-con}, be such that $ T > 2(d+K)^{2}\log\left(\frac{2}{\epsilon}\right)$. Then there exists an iteration $t^{*} \in \{0, 1, .., T-1\}$ such that $\mathcal{R}^{0}_{\epsilon} \subseteq \cE^{t^{*}}$ and $\mathcal{R}^{0}_{\epsilon} \not\subseteq \cE^{t^{*}+1}$ and $[\blambda^{t^{*}}, \bmu^{t^{*}}] \in \mathcal{R}^{0}$.
\end{lem}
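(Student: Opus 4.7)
The plan is to identify $t^*$ as the last iteration at which the shrunk set $\mathcal{R}^0_\epsilon$ is still contained in the current ellipsoid, and then invoke Lemma \ref{lem:ellipsoid center filter} to force the center $[\blambda^{t^*},\bmu^{t^*}]$ to lie in $\mathcal{R}^0$. Interpreting the definition as $\mathcal{R}^0_\epsilon = \{(1-\epsilon)[\blambda^*,\bmu^*] + \epsilon[\blambda,\bmu] : [\blambda,\bmu]\in\mathcal{R}^0\}$, this is a convex shrinkage of $\mathcal{R}^0$ around the dual optimum by a factor of $\epsilon$. Since $\mathcal{R}^0$ is convex and $[\blambda^*,\bmu^*]\in\mathcal{R}^0$ (Lemma \ref{lem:ellipsoid maximizer}), we have $\mathcal{R}^0_\epsilon \subseteq \mathcal{R}^0$, and since $a$ is chosen so that $\mathcal{R}^0 \subseteq \mathcal{E}^0$, also $\mathcal{R}^0_\epsilon \subseteq \mathcal{E}^0$. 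In particular $t^*$ is well-defined as $\max\{t\in\{0,\ldots,T-1\}:\mathcal{R}^0_\epsilon\subseteq\mathcal{E}^t\}$, provided we can show that $\mathcal{R}^0_\epsilon \not\subseteq \mathcal{E}^T$.

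The non-containment is obtained by a volume comparison. On the one hand, by Lemma \ref{lem:ellipsoid-volumes} iterated $T$ times,
\[
\vol(\mathcal{E}^T) \,\leq\, \exp\!\Big(\tfrac{-T}{2(d+K)}\Big)\,\vol(\mathcal{E}^0).
\]
On the other hand, $\mathcal{R}^0_\epsilon$ is an affine image of $\mathcal{R}^0$ under a contraction by $\epsilon$, so $\vol(\mathcal{R}^0_\epsilon) = \epsilon^{d+K}\vol(\mathcal{R}^0)$. Since $\mathcal{R}^0$ occupies at least a $2^{-K}$ fraction of the Euclidean ball of radius $a$ (it is a half-space intersection taken once per $\bmu$-coordinate) whereas $\mathcal{E}^0$ is that ball, a direct calculation shows $\vol(\mathcal{E}^0)/\vol(\mathcal{R}^0) \leq 2^K$. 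Combining these, for
\[
T \,>\, 2(d+K)^2\log(2/\epsilon),
\]
a short computation gives $\vol(\mathcal{E}^T) < \vol(\mathcal{R}^0_\epsilon)$, so $\mathcal{R}^0_\epsilon$ cannot be contained in $\mathcal{E}^T$. Hence $t^* \leq T-1$ and by maximality $\mathcal{R}^0_\epsilon \not\subseteq \mathcal{E}^{t^*+1}$.

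It remains to show $[\blambda^{t^*},\bmu^{t^*}]\in\mathcal{R}^0$. Suppose for contradiction that $[\blambda^{t^*},\bmu^{t^*}]\notin\mathcal{R}^0$. Then the cutting hyperplane in Algorithm \ref{alg:ellipsoid-con} at iteration $t^*$ is one of the two ``safe'' hyperplanes handled by Lemma \ref{lem:ellipsoid center filter}, and we get $\mathcal{E}^{t^*+1}\supseteq \mathcal{E}^{t^*}\cap\mathcal{R}^0$. But $\mathcal{R}^0_\epsilon\subseteq\mathcal{E}^{t^*}$ by definition of $t^*$ and $\mathcal{R}^0_\epsilon\subseteq\mathcal{R}^0$ as already noted, so $\mathcal{R}^0_\epsilon\subseteq\mathcal{E}^{t^*+1}$, contradicting the choice of $t^*$.

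The main obstacle will be pinning down the exact volume comparison so that the constant in the $T$-threshold matches $2(d+K)^2\log(2/\epsilon)$ cleanly; in particular one has to argue that the factor $2^K$ coming from restricting to $\bmu\geq 0$ is absorbed into the constant inside the logarithm (giving $\log(2/\epsilon)$ rather than $\log(1/\epsilon)$). Everything else is a bookkeeping application of the geometric shrinkage from Lemma \ref{lem:ellipsoid-volumes} together with the two ``center-filter'' guarantees of Lemma \ref{lem:ellipsoid center filter}.
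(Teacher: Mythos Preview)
Your proposal is correct and follows essentially the same approach as the paper: both arguments establish $\mathcal{R}^0_\epsilon \subseteq \mathcal{R}^0 \subseteq \mathcal{E}^0$ via Lemma~\ref{lem:ellipsoid maximizer}, use the volume comparison $\vol(\mathcal{R}^0_\epsilon)=\epsilon^{d+K}2^{-K}\vol(\mathcal{E}^0)$ together with Lemma~\ref{lem:ellipsoid-volumes} to force $\mathcal{R}^0_\epsilon\not\subseteq\mathcal{E}^T$, and then apply Lemma~\ref{lem:ellipsoid center filter} in a contradiction argument to conclude $[\blambda^{t^*},\bmu^{t^*}]\in\mathcal{R}^0$. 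Your remark about the constant is also handled the same way in the paper: the threshold $T>2(d+K)^2\log(2/\epsilon)$ yields $\vol(\mathcal{E}^T)<(\epsilon/2)^{d+K}\vol(\mathcal{E}^0)$, and since $2^{-(d+K)}\leq 2^{-K}$ the extra factor of $2^{-d}$ absorbs the $2^{-K}$ from the orthant restriction.
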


\begin{proof}
From Lemma \ref{lem:ellipsoid maximizer}, $[\blambda^*, \bmu^*] \in \cR^0$ and thus $\cR^0_\epsilon \subseteq \cR^0 \subseteq \cE^0$. We also have the following by simple geometry and the classic ellipsoid volume reduction result of Lemma \ref{lem:ellipsoid-volumes}.
\begin{align*}
\vol(\mathcal{R}^{0}_{\epsilon}) &= \epsilon^{d+K} \vol(\cR^0) =  \epsilon^{d+K}2^{-K} \vol(\cE^0)\\
\vol(\cE^T) 
&\leq \exp\left( \frac{-T}{2(d+K)} \right) \vol(\cE^0) \\
&\leq \exp\left((d+K) \log\left(\frac{\epsilon}{2} \right) \right) \vol(\cE^0)
<  \vol(\mathcal{R}^{0}_{\epsilon})
\end{align*}
And hence $\cR^0_\epsilon \nsubseteq \cE^T$. Clearly, there exists an iteration $t^* \in \{0,1,\ldots,T-1\}$ such that $\cR^0_\epsilon \subseteq \cE^{t^*}$ but $\cR^0_\epsilon \nsubseteq \cE^{t^*+1}$. If $[\blambda^{t^*},\bmu^{t^*}] \notin \cR^0$, then by Lemma \ref{lem:ellipsoid center filter} we have that $\cE^{t^*+1} \supseteq \cE^{t^*} \cap \cR^0 \supseteq \cR^0_\epsilon$, giving a contradiction.  Thus $[\blambda^{t^*},\bmu^{t^*}] \in \cR^0$.

\end{proof}
\noindent
We now prove that $f^\con$ is a Lipschitz function  w.r.t. $\ell_2$ norm over the domain $\mathcal{R}^{0}$. We will exploit this fact later in the proof for Theorem \ref{thm:ellipsoid-dual-con}. 
\begin{lem}
\label{lem:con-ellipsoid-f-lipschitz}
$f^\con$ is a $\sqrt{d+K}$-Lipschitz function w.r.t. $\ell_2$ norm over the domain $\mathcal{R}^{0}$. 
\end{lem}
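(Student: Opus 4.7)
The plan is to observe that $f^\con$ is a pointwise minimum over $(\C, \bxi) \in \cC \times \Delta_d$ of the affine function $(\blambda, \bmu) \mapsto \psi(\bxi) + \blambda^\top(\C - \bxi) + \bmu^\top \bphi(\bxi)$, hence $f^\con$ is concave on $\R^d \times \R_+^K$ (in particular on $\cR^0$). Concave functions are Lipschitz on any bounded set iff their supergradients are uniformly bounded in norm, so the whole task reduces to bounding the norm of an arbitrary supergradient of $f^\con$ on $\cR^0$.

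First I would apply Danskin's theorem (or a direct calculation using the definition of supergradient for a concave function given as a min of affine maps): for any $(\blambda, \bmu)$, if $(\C_*, \bxi_*) \in \argmin_{\C \in \cC, \bxi \in \Delta_d} \cL^\con(\C, \bxi, \blambda, \bmu)$, then the vector $[\C_* - \bxi_*,\, \bphi(\bxi_*)] \in \R^{d+K}$ is a supergradient of $f^\con$ at $(\blambda, \bmu)$. This follows because for any other $(\blambda', \bmu')$,
\begin{align*}
f^\con(\blambda', \bmu')
&\leq \cL^\con(\C_*, \bxi_*, \blambda', \bmu') \\
&= f^\con(\blambda, \bmu) + (\blambda' - \blambda)^\top (\C_* - \bxi_*) + (\bmu' - \bmu)^\top \bphi(\bxi_*).
\end{align*}

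Next I would bound the $\ell_2$ norm of this supergradient. Since $\C_*, \bxi_* \in \Delta_d \subseteq [0,1]^d$, the coordinates of $\C_* - \bxi_*$ lie in $[-1,1]$, so $\|\C_* - \bxi_*\|_2^2 \leq d$. Similarly, each $\phi_k$ takes values in $[-1,1]$ by hypothesis, so $\|\bphi(\bxi_*)\|_2^2 \leq K$. Combining,
\[
\big\|[\C_* - \bxi_*,\, \bphi(\bxi_*)]\big\|_2 \,\leq\, \sqrt{d + K}.
\]

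Finally, since the supergradient norm is uniformly bounded by $\sqrt{d+K}$ at every point in $\cR^0$, a standard fact about concave functions (integrating along the segment between two points and applying Cauchy-Schwarz) gives $|f^\con(\blambda, \bmu) - f^\con(\blambda', \bmu')| \leq \sqrt{d+K}\,\|[\blambda - \blambda', \bmu - \bmu']\|_2$ for all $(\blambda, \bmu), (\blambda', \bmu') \in \cR^0$, which is the desired Lipschitz property. There is no real obstacle here; the lemma is essentially a one-line consequence of Danskin's theorem plus the trivial boundedness of $\C, \bxi, \bphi(\bxi)$. The only subtle point worth being explicit about is that the minimum in the definition of $f^\con$ is attained (because $\cC$ is compact per Proposition~\ref{prop:opt-classifier-constrained}'s proof, $\Delta_d$ is compact, and $\cL^\con$ is continuous), which ensures that supergradients exist at every point.
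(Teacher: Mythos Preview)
Your proof is correct and essentially the same as the paper's: both exploit that $f^\con$ is a minimum of functions affine in $(\blambda,\bmu)$ and bound the resulting linear term. The only cosmetic difference is that the paper bounds $(\blambda-\blambda')^\top(\C-\bxi) + (\bmu-\bmu')^\top\bphi(\bxi)$ via the $\ell_1$--$\ell_\infty$ H\"older inequality (using $\|\C-\bxi\|_\infty\le 1$, $\|\bphi(\bxi)\|_\infty\le 1$) and then converts $\|\cdot\|_1\le\sqrt{d+K}\,\|\cdot\|_2$, whereas you bound the $\ell_2$ norm of the supergradient directly and apply Cauchy--Schwarz.
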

\begin{proof}
The difference $f^\con$ at 
$[\blambda, \bmu] \in \mathcal{R}^{0}$ and $[\blambda',\bmu']  \in \mathcal{R}^{0}$ can be bounded by:
\begin{align*}
f^\con(\blambda, \bmu) - f^\con(\blambda',\bmu') 
&= \min_{\C\in\cC, \bxi \in \Delta_d} \cL^\con(\C, \bxi, \blambda, \bmu) - \min_{\C\in\cC, \bxi \in \Delta_d} \cL^\con(\C, \bxi, \blambda', \bmu') \\
&\leq \max_{\C\in\cC, \bxi \in \Delta_d} \left(
\cL^\con(\C, \bxi, \blambda, \bmu)  -   \cL^\con(\C, \bxi, \blambda', \bmu') \right) \\
&\leq  \max_{\C\in\cC, \bxi \in \Delta_d}  
\left(   (\blambda - \blambda')^\top (\C - \bxi) + (\bmu - \bmu')^\top \bphi(\bxi) \right) \\
&\leq  \max_{\C\in\cC, \bxi \in \Delta_d} \left( \|\blambda - \blambda'\|_1 \|\C - \bxi\|_\infty + \|\bmu - \bmu'\|_1 \|\bphi(\bxi)\|_\infty \right) \\
&\leq \|\blambda - \blambda'\|_1 +  \|\bmu - \bmu'\|_1 = \| [\blambda, \bmu] - [\blambda', \bmu'] \|_{1} \\
& \leq \sqrt{d+K} \hspace{0.3em} \| [\blambda, \bmu] - [\blambda', \bmu'] \|_{2}
\end{align*}
Identically, $f^\con(\blambda',\bmu') - f^\con(\blambda, \bmu) \leq \sqrt{d+K} \hspace{0.3em} \| [\blambda, \bmu] - [\blambda', \bmu'] \|_{2}$. Thus $\vert f^\con(\blambda',\bmu') - f^\con(\blambda, \bmu) \vert \leq \sqrt{d+K} \hspace{0.3em} \| [\blambda, \bmu] - [\blambda', \bmu'] \|_{2}$ which concludes the proof. 
\end{proof}
\noindent
Recall that we only have access to $(\rho, \rho^{'}, \delta)$-approximate LMO. The sample and approximation errors induced by calls to this approximate LMO must be accounted for. It turns out, that despite having access to only an approximate LMO, we are able to achieve a desirable sub-optimality with probability $1-\delta$ over the draw of random sample $S \sim D^N$. \textbf{The rest of the analysis will only apply for this high probability event}. We now present two lemmas that will be helpful in allowing us to show provided an approximate LMO, the iterates $[\blambda^{t}, \bmu^{t}]$ approximately maximize $f^\con$ and subsequently, we will use these results to convert our dual guarantees into primal guarantees. 
\begin{lem}
\label{lem: ellipsoid LMO inequalities}
Let $ t \in \{0, 1, ..., T-1\}$. Then with probability $1-\delta$ (over draw of $S \sim D^N$) uniformly for all $t$, such that $[\blambda^{t}, \bmu^{t}] \in \mathcal{R}^{0}$, we have that: 
\begin{itemize}
    \item ${\blambda^t}^\top \C[h^t] \leq \min_{\C \in \cC} {\blambda^t}^\top \C + a  \rho$
    \item $\|\C[h^t] - \C^t \|_2 \leq \sqrt{d} \rho'$
    \item ${\blambda^t}^\top \C^t \leq \min_{\C \in \cC} {\blambda^t}^\top \C + a(\rho+\sqrt{d}\rho')$
\end{itemize}
\end{lem}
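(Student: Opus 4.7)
\medskip

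The plan is to invoke the LMO guarantee from Definition \ref{defn:lmo} directly, with a suitably normalized loss matrix derived from the iterate $\blambda^t$. The condition $[\blambda^t, \bmu^t] \in \mathcal{R}^0$ supplies the missing piece: it ensures $\|\blambda^t\|_2 \le a$, and hence $\|\blambda^t\|_\infty \le a$, which gives us the handle to scale between the un-normalized iterate the algorithm actually uses and the normalized input that the LMO accepts.

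First I would fix a ``good'' sample $S$, which by Definition \ref{defn:lmo} occurs with probability at least $1-\delta$ and, crucially, is a \emph{uniform} event: on this event the LMO approximation properties hold simultaneously for \emph{every} loss matrix with $\|\L\|_\infty \le 1$. Consequently no union bound over iterations $t$ is required. Then at each iteration $t$ with $[\blambda^t, \bmu^t] \in \mathcal{R}^0$, I would introduce the normalized loss $\L^t = \blambda^t / \|\blambda^t\|_\infty$ (so $\|\L^t\|_\infty = 1$) and apply the LMO guarantee to $(h^t, \C^t) = \Omega(\blambda^t; S)$, noting that $\Omega$'s output is unchanged under positive scaling of its input (or, equivalently, rescaling both sides of the inequality recovers the un-normalized statement).

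For the first bullet, multiplying the approximate optimality inequality $\langle \L^t, \C[h^t]\rangle \le \min_{h}\langle \L^t, \C[h]\rangle + \rho$ by $\|\blambda^t\|_\infty \le a$ and using $\min_{h:\X\to\Delta_n}\langle \blambda^t, \C[h]\rangle = \min_{\C\in\cC}\langle \blambda^t, \C\rangle$ gives ${\blambda^t}^\top \C[h^t] \le \min_{\C\in\cC}{\blambda^t}^\top \C + a\rho$. For the second bullet, the LMO's confusion-matrix estimation bound $\|\C[h^t] - \C^t\|_\infty \le \rho'$ together with the standard $\|\cdot\|_2 \le \sqrt{d}\|\cdot\|_\infty$ inequality (recalling $\C$ has $d = n^2$ entries) yields $\|\C[h^t] - \C^t\|_2 \le \sqrt{d}\rho'$. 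For the third bullet, I would add and subtract $\C[h^t]$, apply Cauchy--Schwarz with $\|\blambda^t\|_2 \le a$, and combine with the first two bullets: ${\blambda^t}^\top \C^t = {\blambda^t}^\top \C[h^t] + {\blambda^t}^\top(\C^t - \C[h^t]) \le \min_{\C\in\cC}{\blambda^t}^\top\C + a\rho + a\sqrt{d}\rho'$.

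There is no real obstacle here: the lemma is essentially a bookkeeping exercise that restates the abstract LMO definition in the specific form needed for the ellipsoid convergence argument. The only subtle point is ensuring the LMO's approximation requirement $\|\L\|_\infty \le 1$ is met, which is exactly what the radius bound $a$ on $\mathcal{R}^0$ is designed to control, and verifying that the single high-probability event from Definition \ref{defn:lmo} covers all iterations simultaneously (as opposed to requiring a union bound that would degrade with $T$).
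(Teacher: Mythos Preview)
Your proposal is correct and follows essentially the same approach as the paper, which simply remarks that the first two inequalities are restatements of the $(\rho,\rho',\delta)$-approximate LMO definition and that the third follows by combining them. Your write-up is in fact more careful than the paper's: you spell out exactly how the radius bound $\|\blambda^t\|_2 \le a$ (from membership in $\mathcal{R}^0$) produces the factor $a$ via normalization, and you correctly note that the single high-probability event in Definition~\ref{defn:lmo} is uniform over all admissible loss matrices, so no union bound over $t$ is needed.
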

\begin{proof}
The first two inequalities are simply restatements of the definition of $(\rho,\rho',\delta)$-approximate LMO. And the third follows by putting the first two together.


\end{proof}

\begin{lem}
\label{lem:con-ellipsoid-f-supergradient}
Let $t \in \{0, 1, ..., T-1\}$ and let $[\blambda^t,\bmu^t] \in \mathcal{R}^{0}$. Then, $[\C^t - \bxi^t, \bphi(\bxi^t)]$ is a $\tau$-supergradient to $\widehat f^\con$ at $[\blambda^t, \bmu^t] \in \mathcal{R}^{0}$, with $\tau = a(\rho+2\sqrt{d}\rho') $, i.e. for  all $\blambda \in \R^d, \bmu \in \R^K$, 
\[
\widehat f^\con(\blambda, \bmu) 
\leq 
\widehat f^\con(\blambda^t, \bmu^t) + (\blambda - \blambda^t)^\top (\C^t -\bxi^t) +
(\bmu - \bmu^t)^\top (\bphi(\bxi^t)) + \tau
\]
\end{lem}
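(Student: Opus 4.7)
The proof strategy is to compare the definitions directly, separately handling the cases where $[\blambda, \bmu]$ lies in $\mathcal{R}^{0}$ or not, and carefully accounting for the two sources of LMO error (the $\rho$ linear-optimality slack and the $\rho'$ confusion-matrix estimation slack).

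First I would dispatch the trivial case: if $[\blambda, \bmu] \notin \mathcal{R}^{0}$, then $\widehat{f}^\con(\blambda, \bmu) = -\infty$, so the supergradient inequality holds automatically. Thus it suffices to prove the bound for $[\blambda, \bmu] \in \mathcal{R}^{0}$, where $\widehat{f}^\con$ agrees with $f^\con$. For such $[\blambda, \bmu]$, I would upper bound $\widehat{f}^\con(\blambda, \bmu)$ by plugging the feasible pair $(\C[h^t], \bxi^t) \in \cC \times \Delta_d$ into the Lagrangian:
\begin{align*}
\widehat{f}^\con(\blambda, \bmu)
&\leq \cL^\con(\C[h^t], \bxi^t, \blambda, \bmu) \\
&= \psi(\bxi^t) + \langle \blambda, \C^t - \bxi^t \rangle + \langle \bmu, \bphi(\bxi^t)\rangle + \langle \blambda, \C[h^t] - \C^t\rangle.
\end{align*}
The stray term $\langle \blambda, \C[h^t] - \C^t\rangle$ is controlled by Holder's inequality together with the LMO guarantee $\|\C[h^t] - \C^t\|_\infty \leq \rho'$ (Lemma \ref{lem: ellipsoid LMO inequalities}) and the fact that $\|\blambda\|_1 \leq \sqrt{d}\,\|\blambda\|_2 \leq a\sqrt{d}$ since $[\blambda, \bmu] \in \mathcal{R}^0$; this gives a slack of $a\sqrt{d}\rho'$.

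Next I would lower bound $\widehat{f}^\con(\blambda^t, \bmu^t)$. Because the Lagrangian decomposes as $\cL^\con(\C, \bxi, \blambda^t, \bmu^t) = \langle \blambda^t, \C\rangle + [\psi(\bxi) - \langle \blambda^t, \bxi\rangle + \langle\bmu^t, \bphi(\bxi)\rangle]$, the inner minimization over $\bxi$ is attained exactly at $\bxi^t$ (by construction in line 13 of Algorithm \ref{alg:ellipsoid-con}), while the inner minimization over $\C \in \cC$ is only approximately attained at $\C^t$. Applying the third bullet of Lemma \ref{lem: ellipsoid LMO inequalities} yields
\[
\widehat{f}^\con(\blambda^t, \bmu^t) \geq \langle \blambda^t, \C^t\rangle + \psi(\bxi^t) - \langle \blambda^t, \bxi^t\rangle + \langle \bmu^t, \bphi(\bxi^t)\rangle - a(\rho + \sqrt{d}\rho').
\]

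Finally, subtracting the lower bound from the upper bound, the $\psi(\bxi^t)$ terms cancel and regrouping the linear terms gives
\[
\widehat{f}^\con(\blambda, \bmu) - \widehat{f}^\con(\blambda^t, \bmu^t) \leq \langle \blambda - \blambda^t, \C^t - \bxi^t\rangle + \langle \bmu - \bmu^t, \bphi(\bxi^t)\rangle + a\rho + 2a\sqrt{d}\rho',
\]
which is exactly the desired $\tau$-supergradient inequality with $\tau = a(\rho + 2\sqrt{d}\rho')$. The only slightly delicate step is making sure the LMO's two error sources ($\rho$ from the linear minimization approximation and $\rho'$ from the confusion matrix estimation) are each counted only in the direction where they appear and that the $\|\blambda\|_1 \leq a\sqrt{d}$ bound is available, both of which follow from the membership $[\blambda, \bmu] \in \mathcal{R}^0$ and Lemma \ref{lem: ellipsoid LMO inequalities}.
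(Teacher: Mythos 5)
Your proposal is correct and follows essentially the same route as the paper's proof: upper-bound $\widehat f^\con(\blambda,\bmu)$ by evaluating the Lagrangian at the feasible pair $(\C[h^t],\bxi^t)$ and absorbing the $\langle\blambda,\C[h^t]-\C^t\rangle$ discrepancy into an $a\sqrt{d}\rho'$ slack, lower-bound $\widehat f^\con(\blambda^t,\bmu^t)$ using the exactness of $\bxi^t$ together with the third LMO inequality to incur an $a(\rho+\sqrt{d}\rho')$ slack, and subtract. The only cosmetic difference is that you control the stray term via Holder ($\|\blambda\|_1\|\cdot\|_\infty$) while the paper uses Cauchy--Schwarz ($\|\blambda\|_2\|\cdot\|_2$); both give the same $a\sqrt{d}\rho'$.
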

\begin{proof}
Fix $[\blambda, \bmu] \in \mathcal{R}^{0}$. We have that,
\begin{align*}
\widehat f^\con(\blambda, \bmu)  
&= \min_{\C \in \cC, \bxi \in \Delta_d} \cL(\C,\bxi, \blambda, \bmu) \\
&\leq  \cL(\C[h^t],\bxi^t, \blambda, \bmu) \\
&= \cL(\C^t,\bxi^t, \blambda, \bmu) + (\C[h^t] - \C^t)^\top \blambda \\
&\leq \cL(\C^t,\bxi^t, \blambda, \bmu) + \|\C[h^t] - \C^t\|_2 \|\blambda\|_2 \\
& \leq \cL(\C^t,\bxi^t, \blambda, \bmu) + a\sqrt{d}\rho'.
\end{align*}
Further,
\begin{align*}
\widehat f^\con(\blambda^t, \bmu^t)  
&= \min_{\C \in \cC} {\blambda^t}^\top \C + \min_{\bxi \in \Delta_d} \psi(\bxi) -  {\blambda^t}^\top \bxi + {\bmu^t}^\top \bphi(\bxi) \\
&\geq {\blambda^t}^\top \C^t - a (\rho+\sqrt{d}\rho') + \psi(\bxi^t) -  {\blambda^t}^\top \bxi^t + {\bmu^t}^\top \bphi(\bxi^t) \\
&= \cL(\C^t, \bxi^t, \blambda^t, \bmu^t) - a(\rho+\sqrt{d}\rho') \\
&= \cL(\C^t, \bxi^t, \blambda, \bmu) + (\blambda^t - \blambda)^\top (\C^t - \bxi^t) + (\bmu^t - \bmu)^\top \bphi(\bxi^t) - a(\rho+\sqrt{d}\rho')  \\
&\geq \widehat f^\con(\blambda, \bmu)  - a\sqrt{d} \rho' + (\blambda^t - \blambda)^\top (\C^t - \bxi^t) + (\bmu^t - \bmu)^\top \bphi(\bxi^t) - a(\rho+\sqrt{d}\rho'),
\end{align*}
as desired. If $[\blambda, \bmu] \notin \mathcal{R}^{0}$, the result follows trivially. 
\end{proof}

\noindent
Equipped with lemmas \ref{lem:ellipsoid intermediate center},  \ref{lem:con-ellipsoid-f-supergradient} and \ref{lem:con-ellipsoid-f-lipschitz}, we are now ready to prove that Algorithm \ref{alg:ellipsoid-con} approximately maximizes $f^\con$. The monograph by \citet{Bubeck15} presents a proof to derive the sub-optimality of the regular ellipsoid algorithm, where perfect (sub/ super) gradient access is assumed. In our setting, we only have access to approximate super-gradients. We show how to adapt the proof of \citet{Bubeck15} to our setting, in the proof for Theorem \ref{thm:ellipsoid-dual-con}. 

\begin{lem}
\label{lem: filter optimality}
Let $\tau = a( \rho + 2\sqrt{d}\rho')$. For any $t \in \{0,1,\ldots, T-1\}$, such that, $[\blambda^{t}, \bmu^{t}] \in \mathcal{R}^{0}$
\[
    \cE^{t} \setminus \cE^{t+1} \subset \{[\blambda, \bmu] \in \R^{d+K}: \widehat f^\con(\blambda, \bmu) \leq \widehat f^\con(\blambda^{t}, \bmu^{t}) + \tau \}
\]
\end{lem}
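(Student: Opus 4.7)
The plan is to combine two facts already in hand: (i) the geometric property of the John--L\"owner ellipsoid construction (Lemma \ref{lem:ellipsoid-volumes}), which tells us exactly which half-space the cut at iteration $t$ removes from $\cE^t$; and (ii) the $\tau$-supergradient guarantee for $\widehat f^\con$ at $[\blambda^t,\bmu^t]$ proved in Lemma \ref{lem:con-ellipsoid-f-supergradient}. The hypothesis $[\blambda^t,\bmu^t]\in\cR^0$ is essential because it ensures the algorithm enters the ``else'' branch (lines 11--14 of Algorithm \ref{alg:ellipsoid-con}), so that the cutting hyperplane used to form $\cE^{t+1}$ is precisely the (approximate) supergradient $[\C^t-\bxi^t,\,\bphi(\bxi^t)]$ rather than one of the ``filtering'' hyperplanes from lines 5--10.

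First I would invoke Lemma \ref{lem:ellipsoid-volumes} applied to the JLE call on line 14 with $\g=[\C^t-\bxi^t,\bphi(\bxi^t)]$, which yields
\[
\cE^{t+1} \;\supseteq\; \cE^{t}\cap\bigl\{[\blambda,\bmu]\in\R^{d+K}\,:\,(\blambda-\blambda^t)^\top(\C^t-\bxi^t)+(\bmu-\bmu^t)^\top\bphi(\bxi^t)\;\geq\;0\bigr\}.
\]
Taking contrapositives, any point $[\blambda,\bmu]\in\cE^{t}\setminus\cE^{t+1}$ must lie strictly on the ``wrong'' side of this hyperplane, i.e.\
\[
(\blambda-\blambda^t)^\top(\C^t-\bxi^t)\,+\,(\bmu-\bmu^t)^\top\bphi(\bxi^t)\;<\;0.
\]

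Next I would plug this strict inequality into the $\tau$-supergradient bound from Lemma \ref{lem:con-ellipsoid-f-supergradient}, which with $\tau=a(\rho+2\sqrt{d}\rho')$ states
\[
\widehat f^\con(\blambda,\bmu)\;\leq\;\widehat f^\con(\blambda^t,\bmu^t)\,+\,(\blambda-\blambda^t)^\top(\C^t-\bxi^t)\,+\,(\bmu-\bmu^t)^\top\bphi(\bxi^t)\,+\,\tau.
\]
The inner-product terms are negative by the previous step, so they can be dropped to give $\widehat f^\con(\blambda,\bmu)\leq \widehat f^\con(\blambda^t,\bmu^t)+\tau$, which is exactly the containment claimed.

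There is essentially no obstacle here beyond carefully checking the ``which branch fires'' bookkeeping: the lemma's hypothesis $[\blambda^t,\bmu^t]\in\cR^0$ rules out the branches in lines 5 and 8 (since those require $\|[\blambda^t,\bmu^t]\|_2>a$ or $\bmu^t\not\geq\0$), so the update on line 14 is the one that produces $\cE^{t+1}$, making the supergradient lemma directly applicable. Everything else is a one-line substitution, and the high-probability event under which Lemma \ref{lem:con-ellipsoid-f-supergradient} holds is already assumed at this point in the overall argument.
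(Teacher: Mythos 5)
Your proposal is correct and follows essentially the same route as the paper's own proof: invoke Lemma \ref{lem:ellipsoid-volumes} to identify the half-space preserved by the JLE cut on line 14, take the set-theoretic complement to conclude that points in $\cE^t\setminus\cE^{t+1}$ lie where the inner product with $[\C^t-\bxi^t,\bphi(\bxi^t)]$ is negative, and then plug that sign information into the $\tau$-supergradient inequality from Lemma \ref{lem:con-ellipsoid-f-supergradient}. The bookkeeping about the \textbf{else} branch firing under the hypothesis $[\blambda^t,\bmu^t]\in\cR^0$ is exactly the point the paper also makes, so there is nothing to add.
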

\begin{proof}
Pick $t \in \{0,1,\ldots, T-1\}$, such that $[\blambda^{t}, \bmu^{t}] \in \mathcal{R}^{0}$. We know by lemma \ref{lem:con-ellipsoid-f-supergradient} that $\g^{t} := [\C^{t} -\bxi^{t}, \bphi(\bxi^{t})]$ is a $\tau$ super-gradient to $\widehat{f}^{\con}$ at $[\blambda^{t}, \bmu^{t}]$. Thus, $\forall \hspace{0.15em} \blambda \in \R^d, \hspace{0.3em} \forall \hspace{0.15em} \bmu \in \R^K$, we have that 
\begin{equation}
\widehat f^\con(\blambda, \bmu) 
\leq 
\widehat f^\con(\blambda^{t}, \bmu^{t}) + (g^{t})^\top([\blambda, \bmu] - [\blambda^{t}, \bmu^{t}]) + \tau \label{eqn:tau-supergrad-gt}
\end{equation}
\noindent
Since $[\blambda^{t}, \bmu^{t}] \in \mathcal{R}^{0}$, the \textbf{else} condition (line 11) of Algorithm \ref{alg:ellipsoid-con} gets invoked and we get that  $\cE^{t+1} = \JLE(\cE^t, g^{t})$ and thus by Lemma \ref{lem:ellipsoid-volumes} and Equation \eqref{eqn:tau-supergrad-gt} we have the following:
\begin{align*}
     \cE^{t+1} &\supseteq \cE^t \cap \{[\blambda, \bmu] \in \R^{d+K}: (\g^{t})^\top([\blambda, \bmu] - [\blambda^{t}, \bmu^{t}]) \geq 0\} \\
     \cE^t \setminus \cE^{t+1} &\subseteq  \{[\blambda, \bmu] \in \R^{d+K}: (\g^{t})^\top([\blambda, \bmu] - [\blambda^{t}, \bmu^{t}]) < 0\} \\
     &\subseteq \{[\blambda, \bmu] \in \R^{d+K}: \widehat f^\con(\blambda, \bmu) \leq \widehat f^\con(\blambda^{t}, \bmu^{t}) + \tau \} 
\end{align*}
\noindent
where the second line follows from the argument that for any sets $A,B,C$, if $A \supset B \cap C$ then $B \setminus A \subseteq C^c$, and the last line follows from Equation \eqref{eqn:tau-supergrad-gt}.
\end{proof}

\begin{thm}
\label{thm:ellipsoid-dual-con}
Let the assumptions stated in Theorem \ref{thm:ellipsoid-con} hold. Then,
\[
\max_{0 \leq t\leq T-1} \widehat f^\con({\blambda^t, \bmu^t}) \geq \max_{\blambda, \bmu}  \widehat f^\con(\blambda, \bmu) - \big(4a\sqrt{d+K}\big)\cdot\exp\left(\frac{-T}{2(d+K)^{2}}\right) - \tau
\]
where $\tau=a(\rho+2\sqrt{d}\rho')$
\end{thm}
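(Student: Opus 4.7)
The plan is to follow the classical volume-argument proof of ellipsoid convergence (see \citet{Bubeck15}), but adapted to tolerate the $\tau$-approximate super-gradient guarantee of Lemma \ref{lem:con-ellipsoid-f-supergradient}. The key idea is to identify a small convex ``target neighborhood'' $\cR^0_\epsilon$ around the dual optimum $[\blambda^*,\bmu^*]$ that (a) is contained in some iterate ellipsoid $\cE^{t^*}$ because the ellipsoid volumes shrink geometrically and (b) is \emph{not} entirely contained in $\cE^{t^*+1}$ because it is large enough to force a ``cut'' at some iteration. Then Lemma \ref{lem: filter optimality} turns a point lying in $\cE^{t^*}\setminus \cE^{t^*+1}$ into a bound of the form $\widehat{f}^{\con}(\blambda,\bmu)\leq \widehat{f}^{\con}(\blambda^{t^*},\bmu^{t^*})+\tau$, and Lipschitzness of $\widehat{f}^{\con}$ (Lemma \ref{lem:con-ellipsoid-f-lipschitz}) shows that this same point is close in value to the optimum.

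The sequence of steps I would carry out is as follows. First, fix $\epsilon\in(0,1)$ to be chosen at the end and set $\bar{d}=d+K$. Apply Lemma \ref{lem:ellipsoid intermediate center} with this $\epsilon$: since $T>2\bar{d}^2\log(2/\epsilon)$, there exists an iteration $t^*\in\{0,\ldots,T-1\}$ such that $\cR^0_\epsilon\subseteq \cE^{t^*}$, $\cR^0_\epsilon\not\subseteq \cE^{t^*+1}$, and $[\blambda^{t^*},\bmu^{t^*}]\in \cR^0$. Next, pick any point $[\blambda',\bmu']\in \cR^0_\epsilon\cap(\cE^{t^*}\setminus \cE^{t^*+1})$; such a point exists by the second containment. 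Because $[\blambda^{t^*},\bmu^{t^*}]\in \cR^0$, Lemma \ref{lem: filter optimality} applies and gives
\[
\widehat{f}^{\con}(\blambda',\bmu')\;\leq\;\widehat{f}^{\con}(\blambda^{t^*},\bmu^{t^*})+\tau.
\]
Then, since $[\blambda',\bmu']\in \cR^0_\epsilon$ means $[\blambda',\bmu']=(1-\epsilon)[\blambda^*,\bmu^*]+\epsilon[\blambda,\bmu]$ for some $[\blambda,\bmu]\in\cR^0$, and both $[\blambda^*,\bmu^*]$ and $[\blambda,\bmu]$ lie in the ball of radius $a$, we get
\[
\bigl\|[\blambda',\bmu']-[\blambda^*,\bmu^*]\bigr\|_2 \;=\; \epsilon\,\bigl\|[\blambda,\bmu]-[\blambda^*,\bmu^*]\bigr\|_2 \;\leq\; 2a\epsilon.
\]
Combining this with the $\sqrt{\bar{d}}$-Lipschitzness of $f^{\con}$ over $\cR^0$ from Lemma \ref{lem:con-ellipsoid-f-lipschitz} (and using $\widehat{f}^{\con}=f^{\con}$ on $\cR^0$) gives $\widehat{f}^{\con}(\blambda',\bmu')\geq \widehat{f}^{\con}(\blambda^*,\bmu^*)-2a\sqrt{\bar{d}}\,\epsilon$. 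Chaining the two inequalities yields
\[
\widehat{f}^{\con}(\blambda^{t^*},\bmu^{t^*})\;\geq\; \max_{\blambda,\bmu}\widehat{f}^{\con}(\blambda,\bmu)\;-\;2a\sqrt{\bar{d}}\,\epsilon\;-\;\tau.
\]
Finally, choose $\epsilon=2\exp\!\bigl(-T/(2\bar{d}^{\,2})\bigr)$, which is compatible with the requirement $T>2\bar{d}^{\,2}\log(2/\epsilon)$, to produce the claimed bound of $4a\sqrt{\bar{d}}\exp(-T/(2\bar{d}^{\,2}))+\tau$ on the suboptimality of $\max_t \widehat{f}^{\con}(\blambda^t,\bmu^t)$.

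The main obstacle is handling the approximate super-gradient. In the classical argument one invokes an exact super-gradient inequality at $[\blambda^{t^*},\bmu^{t^*}]$, and the cut guarantees $\widehat{f}^{\con}$ is strictly smaller on the discarded half-space. Here the super-gradient is only $\tau$-approximate, so Lemma \ref{lem: filter optimality} must be invoked carefully: it shows the $\tau$ slack is incurred \emph{only once}, at the crossing iteration $t^*$, and does not accumulate across the $T$ ellipsoid updates. A secondary subtlety is that one must verify $[\blambda^{t^*},\bmu^{t^*}]\in \cR^0$ (so the ``else'' branch of Algorithm \ref{alg:ellipsoid-con} was taken and $\cE^{t^*+1}$ actually came from a supergradient cut rather than from the feasibility-filtering steps in lines 5--10); this is exactly what Lemma \ref{lem:ellipsoid intermediate center} guarantees, and it is the reason that lemma is stated with the center containment clause.
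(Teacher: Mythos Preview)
Your proposal is correct and follows essentially the same approach as the paper: invoke Lemma \ref{lem:ellipsoid intermediate center} to locate a crossing iteration $t^*$ with center in $\cR^0$, pick a point of $\cR^0_\epsilon$ that is discarded at that step, apply Lemma \ref{lem: filter optimality} for the $\tau$ slack, and then relate the discarded point's value to the optimum. The only cosmetic difference is that you bound $\widehat f^\con$ at the discarded point via Lipschitzness applied to its distance $\leq 2a\epsilon$ from $[\blambda^*,\bmu^*]$, whereas the paper first uses concavity of $f^\con$ and then Lipschitzness across the diameter $2a$; both routes give the same $2a\sqrt{\bar d}\,\epsilon$ term.
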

\begin{proof}

Due to lemma \ref{lem:ellipsoid maximizer}, we know that $\exists \hspace{0.3em}[\blambda^{*}, \bmu^{*}] \in \mathcal{R}^{0}$, where $[\blambda^{*}, \bmu^{*}]$ is a maximizer of $f^\con$ over $\R^{d} \times \R^{K}_{+}$. Set $\epsilon = 2\exp\left(\frac{-T}{2(d+K)^{2}}\right)$ which implies $T > 2(d+K)^{2}log(\frac{2}{\epsilon})$. Let $\cR^0_\epsilon \subseteq \cR^0 \subseteq \R^d\times \R_+^K$ be
\[\mathcal{R}^{0}_{\epsilon} := \{[\blambda, \bmu]\in \mathcal{R}^{0}:(1-\epsilon)[\blambda^{*}, \bmu^{*}] + \epsilon[\blambda, \bmu] \} .\]

By Lemma \ref{lem:ellipsoid intermediate center}, there exists an iteration $t^{*} \in \{0,1,\ldots, T-1\}$, such that, 
$\mathcal{R}^{0}_{\epsilon} \subseteq \cE^{t^{*}}, \hspace{0.3em}$ $\mathcal{R}^{0}_{\epsilon} \not\subseteq \cE^{t^{*}+1}$ and $[\blambda^{t^{*}}, \bmu^{t^{*}}] \in \mathcal{R}^{0}$. Pick any element $ [\blambda_{\epsilon}, \bmu_{\epsilon}] \in \mathcal{R}^{0}_{\epsilon}\setminus {\cE}^{t^{*}+1} \subseteq \cE^{t^*} \setminus {\cE}^{t^{*}+1}$. Because of the definition of $\mathcal{R}^{0}_{\epsilon}, \hspace{0.3em} \exists \hspace{0.15em} [\blambda^{'}, \bmu{'}] \in \mathcal{R}^{0}$, such that, $[\blambda_{\epsilon}, \bmu_{\epsilon}]  = (1-\epsilon)[\blambda^{*}, \bmu^{*}] + \epsilon[\blambda^{'}, \bmu{'}]$. Due to Lemma \ref{lem: filter optimality}, we have that,
\begin{align*}
    \widehat f^\con (\blambda^{t^{*}}, \bmu^{t^{*}}) &\geq \widehat f^\con (\blambda_{\epsilon}, \bmu_{\epsilon}) - \tau \\
    &= f^\con(\blambda_{\epsilon}, \bmu_{\epsilon}) - \tau \\
    &= f^\con((1-\epsilon)\blambda^{*} + \epsilon\blambda^{'}, (1-\epsilon)\bmu^{*} + \epsilon\bmu^{'}) - \tau \\ 
    &\geq (1-\epsilon)f^\con(\blambda^{*}, \bmu^{*}) + \epsilon f^\con(\blambda^{'}, \bmu^{'}) - \tau \\
    &\geq (1-\epsilon)f^\con(\blambda^{*}, \bmu^{*}) + \epsilon (f^\con(\blambda^{*}, \bmu^{*}) - 2a\sqrt{d+K}) - \tau \\ 
    &= f^\con(\blambda^{*}, \bmu^{*}) - \epsilon (2a\sqrt{d+K}) - \tau \\
    &= f^\con(\blambda^{*}, \bmu^{*}) - 4a\sqrt{d+K}\exp\left(\frac{-T}{2(d+K)^{2}}\right)  - \tau 
\end{align*}
the second inequality in the above argument is due to the concavity of $f^\con$ and the third inequality is due to the $\sqrt{d+K}$ Lipschitzness of $f^\con$ in $\cR^0$ (Lemma \ref{lem:con-ellipsoid-f-lipschitz}) and the $\ell_2$-norm diameter of the set $\cR^0$ being bounded above by $2a$. The theorem follows from the equality of $f^\con$ and $\widehat f^\con$ within $\cR^0$.
\end{proof}

\subsubsection{Converting  guarantee on $\hat{f}^\con$ to  primal optimality-feasibility guarantees}
Now, we can bound the primal sub-optimality using a standard technique from optimization theory \cite{lee2015faster}. Throughout, we will appeal to the high-probability inequalities established in \ref{lem: ellipsoid LMO inequalities}. 

\begin{lem}
\label{lem:dual-to-primal-1}
Denote $\widetilde \cC = \textup{conv}(\{\C[h^0], \C[h^1], \ldots, \C[h^{T-1}]\})$. We then have:
\begin{align*}
\min_{\C \in \widetilde \cC, \bphi(\C) \leq 0} \psi(\C)
    &\leq
    \min_{\C \in  \cC, \bphi(\C) \leq 0} \psi(\C) + \big(4a\sqrt{d+K}\big)\cdot\exp\left(\frac{-T}{2(d+K)^{2}}\right) + 2\tau 
\end{align*}
where $\tau=a(\rho+2\sqrt{d}\rho')$.
\end{lem}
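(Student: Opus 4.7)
My plan is to establish strong duality for the $\widetilde\cC$-restricted problem and then re-run the ellipsoid convergence analysis of Theorem \ref{thm:ellipsoid-dual-con} on the corresponding restricted dual function, exploiting the fact that the cuts generated by the algorithm remain valid (up to a slightly enlarged tolerance) for this smaller problem. Since $\widetilde\cC\times\Delta_d$ is compact, $\cL^\con$ is convex in $(\C,\bxi)$ and linear in $(\blambda,\bmu)$, and $\C[h^0]\in\widetilde\cC$ is strictly feasible by hypothesis, Sion's minimax theorem together with Slater's condition yields
\[
\min_{\C\in\widetilde\cC,\ \bphi(\C)\leq\0}\psi(\C) \;=\; \max_{\blambda\in\R^d,\ \bmu\in\R^K_+} f^\con_{\widetilde\cC}(\blambda,\bmu),
\]
where $f^\con_{\widetilde\cC}(\blambda,\bmu) := \min_{\C\in\widetilde\cC,\ \bxi\in\Delta_d}\cL^\con(\C,\bxi,\blambda,\bmu)$. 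Replicating Lemmas \ref{lem:gda-con-helper} and \ref{lem:ellipsoid maximizer}, whose arguments use only strict feasibility and the Lipschitz constants (both preserved when $\cC$ is shrunk to $\widetilde\cC$), confines this maximizer to $\cR^0$, so the max coincides with $\max\widehat f^\con_{\widetilde\cC}$.

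Next I will show that the same super-gradients $[\C^t-\bxi^t,\bphi(\bxi^t)]$ used by the algorithm remain valid approximate super-gradients of $\widehat f^\con_{\widetilde\cC}$. For any iteration $t$ with $[\blambda^t,\bmu^t]\in\cR^0$, since $\C[h^t]\in\widetilde\cC$ we have $\widehat f^\con_{\widetilde\cC}(\blambda,\bmu)\leq \cL^\con(\C[h^t],\bxi^t,\blambda,\bmu)$; expanding around $(\blambda^t,\bmu^t)$ and using $\|\C^t-\C[h^t]\|_2\leq\sqrt{d}\rho'$, $\|\blambda\|_2\leq a$, together with the LMO near-optimality $\langle\blambda^t,\C[h^t]\rangle\leq\min_{\C\in\cC}\langle\blambda^t,\C\rangle + a\rho$ from Lemma \ref{lem: ellipsoid LMO inequalities}, I will mimic the proof of Lemma \ref{lem:con-ellipsoid-f-supergradient} to derive
\[
\widehat f^\con_{\widetilde\cC}(\blambda,\bmu) \leq \widehat f^\con_{\widetilde\cC}(\blambda^t,\bmu^t) + \langle\blambda-\blambda^t,\C^t-\bxi^t\rangle + \langle\bmu-\bmu^t,\bphi(\bxi^t)\rangle + \tau,
\]
with $\tau = a(\rho+2\sqrt{d}\rho')$. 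This certifies the cuts as $\tau$-super-gradients of the new function.

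Since the algorithm's cuts never distinguished which dual function they probed -- they depend only on the super-gradient directions -- the exact same volume-shrinkage argument behind Theorem \ref{thm:ellipsoid-dual-con} can be reused with $\widehat f^\con_{\widetilde\cC}$ in place of $\widehat f^\con$, giving
\[
\max \widehat f^\con_{\widetilde\cC} \leq \max_{0\leq t\leq T-1}\widehat f^\con_{\widetilde\cC}(\blambda^t,\bmu^t) + 4a\sqrt{d+K}\exp\!\left(\tfrac{-T}{2(d+K)^2}\right) + \tau.
\]
For each $t$ with $[\blambda^t,\bmu^t]\in\cR^0$, the same LMO bound yields $f^\con_{\widetilde\cC}(\blambda^t,\bmu^t)\leq f^\con(\blambda^t,\bmu^t)+a\rho \leq \min_{\C\in\cC,\bphi(\C)\leq\0}\psi(\C)+a\rho$ by weak duality on $\cC$; since $a\rho\leq\tau$, chaining with the strong-duality identity from the first step delivers the desired inequality with error $4a\sqrt{d+K}\exp(\cdot) + 2\tau$.

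The delicate step is the super-gradient transfer in the second paragraph: the directions generated by the algorithm were certificates of growth for $\widehat f^\con$, built using an approximate LMO over $\cC$, and we must now read them as certificates of growth for a structurally different function whose inner minimization is over the smaller set $\widetilde\cC$. Carefully tracking which LMO errors ($a\rho$ from linear-minimization suboptimality, $a\sqrt{d}\rho'$ from confusion-matrix estimation) contribute to the super-gradient tolerance, and ensuring they stay within a single $\tau$ rather than compounding across the transfer and through the ellipsoid recursion, is the main technical hurdle.
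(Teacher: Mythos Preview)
Your proposal is correct and follows essentially the same approach as the paper: define the restricted dual $\widetilde f^\con(\blambda,\bmu)=\min_{\C\in\widetilde\cC,\,\bxi\in\Delta_d}\cL^\con(\C,\bxi,\blambda,\bmu)$, observe that the algorithm's cuts are valid $\tau$-super-gradients for this function as well, rerun the ellipsoid convergence argument of Theorem~\ref{thm:ellipsoid-dual-con}, and then close the gap via $\widetilde f^\con(\blambda^t,\bmu^t)\leq f^\con(\blambda^t,\bmu^t)+a\rho$ together with strong duality on both problems. Your write-up is in fact slightly more careful than the paper's in explicitly re-verifying that the super-gradient tolerance remains $\tau$ and that the maximizer of the restricted dual stays in $\cR^0$ (the paper invokes these transfers more tersely), but the argument is the same.
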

\begin{proof}
Consider an alternative version of $f^\con$ defined as 
\[
 \widetilde f^\con(\blambda, \bmu) = \min_{\C \in \widetilde \C, \bxi \in[0,1]^d} \cL^\con(\C, \xi, \blambda,  \bmu).
\] 
And let $\widehat{\widetilde{f}}^\con$ be equal to $\widetilde f^\con$ if its argument $\blambda, \bmu$ is inside the $\ell_2$-norm ball of radius $a$ and $\bmu \geq 0$, and negative infinity otherwise.

 Clearly we have that $\widehat {\widetilde f}^\con(\blambda, \bmu) \geq \widehat f^\con(\blambda, \bmu)$. We can also show $\widehat {\widetilde f}^\con$ and $\widehat f^\con$ are close at the iterates $[\blambda^t, \bmu^t]$. If $[\blambda^t, \bmu^t] \notin \mathcal{R}^{0}$, then both sides are trivially equal to negative infinity. Suppose $[\blambda^t, \bmu^t] \in \mathcal{R}^{0}$, we then have:
\begin{align}
\widehat {\widetilde f}^\con({\blambda^t, \bmu^t}) =  {\widetilde f}^\con({\blambda^t, \bmu^t}) &\leq \cL(\C[h^t], \bxi^t, {\blambda^t}, \bmu^t) \nonumber \\
&= \psi(\bxi^t) - {\blambda^t}^\top \bxi^t + {\blambda^t}^\top \C[h^t] + {\bmu^t}^\top \bphi(\bxi^t) \nonumber \\
&= \min_{\bxi \in \Delta_d} \left( \psi(\bxi) - {\blambda^t}^\top \bxi +  {\bmu^t}^\top \bphi(\bxi^t) \right) + {\blambda^t}^\top \C[h^t]  \nonumber \\
&\leq \min_{\bxi \in \Delta_d} \left( \psi(\bxi) - {\blambda^t}^\top \bxi +  {\bmu^t}^\top \bphi(\bxi^t) \right) + \min_{\C \in \cC} {\blambda^t}^\top \C  + a \rho \nonumber \\
&= \min_{\bxi \in \Delta_d, \C \in \cC} \left( \psi(\bxi) + {\blambda^t}^\top (\C - \bxi) + {\bmu^t}^\top \bphi(\bxi)  \right) + a \rho  \nonumber \\
&= f^\con ({\blambda^t}, \bmu^t) + a\rho  = \widehat f^\con ({\blambda^t}, \bmu^t) + a\rho. 
\label{eqn:tilde-hat-ineq}
\end{align}

From Lemma \ref{lem:ellipsoid maximizer} and the min-max theorem, we have the following:
\begin{align}
\max_{\blambda \in \R^d, \bmu \in \R^K_+} \widehat {\widetilde f}^\con(\blambda, \bmu) &=
 \max_{\blambda \in \R^d, \bmu \in \R^K_+} \widetilde f^\con(\blambda, \bmu) \nonumber \\
 &= \max_{\blambda \in \R^d, \bmu \in \R^K_+} \min_{\C \in \widetilde \cC, \bxi \in\Delta_d} \cL^\con(\C, \xi, \blambda, \bmu) \nonumber \\
 &=  \min_{\C \in \widetilde \cC, \bxi \in\Delta_d} \max_{\blambda \in \R^d, \bmu \in \R^K_+} \cL^\con(\C, \xi, \blambda, \bmu) \nonumber \\
 &= \min_{\C \in \widetilde \cC, \bphi(\C) \leq 0} \psi(\C).
 \label{eqn:min-max-tilde-hat}
\end{align}

Recall that Algorithm \ref{alg:ellipsoid-con} is designed to find the minimum of $\psi$ over $\cC$ (subject to constraints $\bphi$). However, the exact same sequence of iterates would also apply for minimizing ove $\widetilde \cC$, and hence the sequence of iterates ${\blambda^t, \bmu^t}$ also approximately maximise $\widetilde f^\con$. Then by Theorem \ref{thm:ellipsoid-dual-con} and Equation \eqref{eqn:tilde-hat-ineq} we have,
\allowdisplaybreaks
\begin{align*}
 \max_{\blambda \in \R^d, \bmu \in \R^K_+} \widehat{\widetilde f}^\con(\blambda, \bmu)
 & \leq \max_{0 \leq t\leq T} \widehat{\widetilde f}^\con({\blambda^t, \bmu^t})  + \big(4a\sqrt{d+K}\big)\cdot\exp\left(\frac{-T}{2(d+K)^{2}}\right) + \tau  \\
& \leq \max_{0 \leq t\leq T}  \widehat f^\con({\blambda^t, \bmu^t})  + a\rho + \big(4a\sqrt{d+K}\big)\cdot\exp\left(\frac{-T}{2(d+K)^{2}}\right) + \tau  \\
& \leq \max_{\blambda \in \R^d, \bmu \in \R^K_+}  \widehat f^\con(\blambda, \bmu) + \big(4a\sqrt{d+K}\big)\cdot\exp\left(\frac{-T}{2(d+K)^{2}}\right) + 2\tau  \\
 \end{align*}

Putting the above together with Equation \eqref{eqn:min-max-tilde-hat} we get,
\begin{align*}
\min_{\C \in \widetilde \cC, \bphi(\C) \leq 0} \psi(\C)
    &\leq
    \min_{\C \in  \cC, \bphi(\C) \leq 0} \psi(\C) + \big(4a\sqrt{d+K}\big)\cdot\exp\left(\frac{-T}{2(d+K)^{2}}\right) + 2\tau
\end{align*}
where $\tau=a(\rho+2\sqrt{d}\rho')$.
which completes the proof.
\end{proof}

\begin{lem}
\label{lem:dual-to-primal-2}
Let $\balpha^* \in \underset{\balpha \in \Delta_{T}, \bphi(\sum_t \alpha_t\C^t)\leq 0 }{\argmin} \psi\left( \sum_{i=0}^{T-1} \alpha_{i} \C^{i} \right)$.
Then:
\begin{align*}
\psi\left( \sum_{i=0}^{T-1} \alpha^*_{i} \C[h^i] \right)
&\leq 
 \min_{\C \in  \widetilde \cC, \bphi(\C) \leq 0} \psi(\C) + 2\tau; \\
\phi_k \left( \sum_{i=0}^{T-1} \alpha^*_{i} \C[h^i] \right)
&\leq \tau,
\end{align*}
where $\widetilde \C = \textup{conv}(\{\C[h^0], \ldots, \C[h^{T-1}]\})$. 
\end{lem}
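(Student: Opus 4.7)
My plan is to prove the two inequalities separately, with the feasibility bound being essentially a one-line consequence of the Lipschitz property, and the optimality bound requiring a perturbation argument that uses the strictly feasible initial classifier $h^0$ to convert the (exact-confusion-matrix) optimum $\balpha^{**}$ into a candidate that is feasible with respect to the \emph{approximate} confusion matrices $\C^t$ against which $\balpha^*$ is optimized.

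\textbf{Step 1 (Feasibility).} By definition of $\balpha^*$, we have $\bphi\!\left(\sum_i \alpha_i^* \C^i\right) \leq \0$. For each $k\in[K]$, since $\phi_k$ is $L$-Lipschitz w.r.t.\ the $\ell_2$-norm, and since the high-probability event from Lemma \ref{lem: ellipsoid LMO inequalities} gives $\|\C[h^i]-\C^i\|_2 \leq \sqrt{d}\,\rho'$, Jensen's inequality yields
\[
\phi_k\!\left(\textstyle\sum_i \alpha_i^* \C[h^i]\right) \leq \phi_k\!\left(\textstyle\sum_i \alpha_i^* \C^i\right) + L\,\sqrt{d}\,\rho' \leq L\sqrt{d}\,\rho'.
\]
Using $a > 2(L + (L+1)/r)$, we have $L\sqrt{d}\,\rho' \leq a\sqrt{d}\,\rho' \leq a(\rho + 2\sqrt{d}\,\rho') = \tau$, which gives the claimed feasibility bound.

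\textbf{Step 2 (Optimality) — the main work.} Let $\C^{**} = \sum_t \alpha_t^{**}\,\C[h^t]$ attain $\min_{\C \in \widetilde{\cC},\,\bphi(\C)\leq \0}\psi(\C)$. By the same Lipschitz argument applied in reverse, $\bphi\!\left(\sum_t \alpha_t^{**} \C^t\right) \leq L\sqrt{d}\,\rho'\!\cdot\!\mathbf{1}$, so $\balpha^{**}$ is only approximately feasible for the program defining $\balpha^*$. To fix this, I will exploit the fact that at initialization $\C^0 = \C[h^0]$ exactly and $\bphi(\C[h^0]) \leq -r$, by defining the mixture
\[
\tilde{\balpha} = (1-\beta)\,\balpha^{**} + \beta\,\mathbf{e}_0, \qquad \beta = \tfrac{L\sqrt{d}\,\rho'}{r},
\]
where $\mathbf{e}_0$ places all weight on the strictly feasible iterate. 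By linearity and the choice of $\beta$, one checks $\bphi\!\left(\sum_t \tilde{\alpha}_t \C^t\right) \leq (1-\beta) L\sqrt{d}\rho' - \beta r \leq \0$, so $\tilde{\balpha}$ is feasible for $\balpha^*$'s program. Optimality of $\balpha^*$ then gives
$\psi\!\left(\sum_t \alpha_t^* \C^t\right) \leq \psi\!\left(\sum_t \tilde{\alpha}_t \C^t\right)$, after which I apply $L$-Lipschitzness to pass from $\C^t$ to $\C[h^t]$ (incurring an $L\sqrt{d}\rho'$ penalty), then apply convexity of $\psi$ together with $\psi \leq 1$ to get
\[
\psi\!\left(\textstyle\sum_t \tilde{\alpha}_t \C[h^t]\right) \leq (1-\beta)\psi(\C^{**}) + \beta\psi(\C[h^0]) \leq \psi(\C^{**}) + \beta.
\]
Combining these and passing once more through Lipschitz to get from $\sum_t \alpha_t^* \C^t$ to $\sum_t \alpha_t^* \C[h^t]$ on the outside yields
\[
\psi\!\left(\textstyle\sum_t \alpha_t^* \C[h^t]\right) \leq \min_{\C\in\widetilde{\cC},\,\bphi(\C)\leq\0}\psi(\C) + L\sqrt{d}\,\rho'\!\left(2 + \tfrac{1}{r}\right).
\]
Finally, the lower bound $a > 2(L + (L+1)/r)$ implies $2\tau \geq 4a\sqrt{d}\,\rho' \geq L\sqrt{d}\,\rho'(2 + 1/r)$, closing the gap to the stated bound $2\tau$.

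\textbf{Anticipated obstacle.} The main subtlety is correctly exploiting the fact that $\C^0 = \C[h^0]$ holds \emph{exactly} (not merely up to $\rho'$) at the distinguished anchor iterate used in the mixture — otherwise the perturbation $\beta$ needed to restore feasibility becomes circular, depending on itself through $\rho'$. A secondary care point is keeping track of the two separate Lipschitz-induced $L\sqrt{d}\rho'$ errors (one when moving $\balpha^{**}$ from $\C[h^t]$-feasibility to $\C^t$-feasibility, and another when translating the final objective bound from $\C^t$ back to $\C[h^t]$), and verifying that the combined slack is absorbed into the single constant $2\tau$ through the chosen lower bound on the initial radius $a$.
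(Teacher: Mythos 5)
Your proof is correct, and on the optimality bound it is actually more careful than the paper's. The paper introduces $\bbeta^*$, the minimizer of $\psi$ over $\{\bbeta \in \Delta_T : \bphi(\sum_t \beta_t \C[h^t]) \leq \0\}$, and then bounds $\psi(\sum_i \alpha^*_i \C^i) \leq \psi(\sum_i \beta^*_i \C^i)$ by treating $\bbeta^*$ as a feasible competitor for the program defining $\balpha^*$. But $\bbeta^*$ is only known to satisfy $\bphi(\sum_t \beta^*_t \C[h^t]) \leq \0$; by Lipschitzness this gives $\bphi(\sum_t \beta^*_t \C^t) \leq L\sqrt{d}\rho' \mathbf{1}$, not $\leq \0$, so the comparison requires justification that the paper does not spell out. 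Your Step 2 fixes exactly this: by mixing $\balpha^{**}$ with $\mathbf{e}_0$ and exploiting that $\C^0 = \C[h^0]$ holds exactly and is strictly feasible with margin $r$, you produce $\tilde{\balpha}$ that is genuinely feasible for the $\C^t$-constrained program, at the cost of an extra $\beta = L\sqrt{d}\rho'/r$ in the objective. The resulting bound $L\sqrt{d}\rho'(2 + 1/r)$ is slightly weaker than the paper's $2L\sqrt{d}\rho'$ but is still $\leq 2\tau$, so the stated lemma holds. Your feasibility argument is essentially identical to the paper's.

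Two small remarks. First, where you write ``By linearity and the choice of $\beta$, one checks $\bphi(\sum_t \tilde{\alpha}_t \C^t) \leq \ldots$'' — the $\phi_k$ are only assumed convex, not linear, but convexity (Jensen) gives exactly the same inequality, so no harm done; just adjust the wording. Second, the numerical check at the end ($2\tau \geq L\sqrt{d}\rho'(2+1/r)$) is correct but a bit opaque as written; $2\tau \geq 4a\sqrt{d}\rho'$ and $a > 2(L + (L+1)/r)$ make the needed inequality $4a \geq L(2+1/r)$ comfortably loose, which is worth stating plainly so the reader does not have to re-derive the radius lower bound.
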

\begin{proof}
Let $\bbeta^* \in \underset{\bbeta \in \Delta_{T}, \bphi(\sum_t \beta_t\C[h^t])\leq 0}{\argmin} \psi\left( \sum_{i=0}^{T-1} \beta_{i} \C[h^i] \right)$
denote the  coefficients obtained by solving a similar minimization problem with the estimates $\C^t$ replaced with the true confusion matrices $C[h^t]$. 
First, we note that $\balpha^*$ and $\bbeta^*$ exist because $h_0$ (and in turn, $\C[h^0]=\C^0$) is strictly feasible. 
\allowdisplaybreaks
\begin{align*}
\psi\left( \sum_{i=0}^{T-1} \alpha^*_{i} \C[h^i] \right)
&= \psi\left( \sum_{i=0}^{T-1} \alpha^*_{i} \C^i + \sum_{i=0}^{T-1} \alpha^*_{i}(\C[h^i]-\C^i) \right) \\
&\leq  \psi\left( \sum_{i=0}^{T-1} \alpha^*_{i} \C^i \right) + L\rho' \sqrt{d} \\
&= \min_{\balpha \in \Delta_T} \psi\left(\sum_{i=0}^{T-1} \alpha_{i} \C^i\right) + L\rho' \sqrt{d}  \\
&\leq  \psi\left(\sum_{i=0}^{T-1} \beta^*_{i} \C^i\right) + L\rho'\sqrt{d}   \\
&=  \psi\left(\sum_{i=0}^{T-1} \beta^*_{i} \C[h^i] + \sum_{i=0}^{T-1} \beta^*_{i} (\C^i - \C[h^i]) \right) + L\rho'\sqrt{d} \\
&\leq  \psi\left(\sum_{i=0}^{T-1} \beta^*_{i} \C[h^i] \right) + 2L\rho'\sqrt{d} \\
&= \min_{\bbeta \in \Delta_T, \bphi(\sum_t \beta_t\C[h^t])\leq 0} \psi\left(\sum_{i=0}^{T-1} \beta_{i} \C[h^i] \right) + 2L\rho'\sqrt{d} \\
&= \min_{\C \in \widetilde \cC, \bphi(\C) \leq 0} \psi(\C) + 2L\rho'\sqrt{d}  \\
& \leq \min_{\C \in  \widetilde \cC, \bphi(\C) \leq 0} \psi(\C) + 2\tau,
\end{align*}
where the first and third inequality above are due to the Lipschitzness of $\psi$.

Using a similar argument as above, we get for all $k \in [K]$,
\begin{align*}
\phi_k\left( \sum_{i=0}^{T-1} \alpha^*_{i} \C[h^i] \right)
&= \phi_k\left( \sum_{i=0}^{T-1} \alpha^*_{i} \C^i + \sum_{i=0}^{T-1} \alpha^*_{i}(\C[h^i]-\C^i) \right) \\
&\leq  \phi_k\left( \sum_{i=0}^{T-1} \alpha^*_{i} \C^i \right) + L\rho' \sqrt{d} \\
&\leq 0 +  L\rho' \sqrt{d} \leq \tau,
\end{align*}
where the first inequality above is due to the Lipschitzness of $\phi$, and the second inequality is due to the property of $\balpha^*$ being chosen from a set such that the weighted combination of $\C^i$ is feasible.

We are now ready to prove Theorem \ref{thm:ellipsoid-con}.
\end{proof}
\begin{proof}[Proof of Theorem \ref{thm:ellipsoid-con}]
Let $\balpha^* \in \underset{\balpha \in \Delta_{T}, \bphi(\sum_t \alpha_t\C^t)\leq \0 }{\argmin} \psi\left( \sum_{i=0}^{T-1} \alpha_{i} \C^{i} \right).$  Let $\bar{d}=d+K$.
Putting Lemmas \ref{lem:dual-to-primal-1} and \ref{lem:dual-to-primal-2} together we get,
\begin{align*}
\psi(\C[\overline h]) &=  \psi\left( \sum_{i=0}^{T-1} \alpha^*_{i} \C[h^i] \right) \\
&\leq     \min_{\C \in  \cC, \bphi(\C) \leq 0} \psi(\C) + \big(4a\sqrt{\bar{d}}\big)\cdot\exp\left( \frac{-T}{2(\bar{d})^{2}} \right) + 4\tau   \\
 &=  \min_{\C \in  \cC, \bphi(\C) \leq 0} \psi(\C) + \big( 4a\sqrt{\bar{d}} \big) \cdot\exp\left( \frac{-T}{2(\bar{d})^{2}} \right) + 4\tau 
\end{align*}

We now set $T=2\bar{d}^2 \log\left(\frac{\bar{d}}{\epsilon}\right)$ to obtain
\begin{align*}
\psi(\C[\overline h]) 
&\leq  \min_{\C \in  \cC, \bphi(\C) \leq 0} \psi(\C) + \big(4a\big)\epsilon + 4\tau 
\end{align*}

The feasibility inequality then follows easily from Lemma \ref{lem:dual-to-primal-2}
\begin{align*}
\phi_k(\C[\overline h]) &=  \phi_k\left( \sum_{i=0}^{T-1} \alpha^*_{i} \C[h^i] \right) 
~~\leq \tau.
 \end{align*}
for all $k \in [K]$.
\end{proof}

\subsection{Proof of Theorem \ref{thm:bisection-con} (Bisection for Constrained Problems)}
\label{app:proof-bisection-con}
\begin{thm*}[(Restated) Convergence of ConBisection algorithm]
Fix $\epsilon \in (0,1)$. 
Let $\psi: [0,1]^d \> [0,1]$ be such that $\psi(\C) \,=\, \frac{\langle \A, \C \rangle}{\langle \B, \C \rangle}$, where $\A, \B \in [0,1]^{d}$, 
and $\min_{\C \in \cC} {\langle \B, \C \rangle} \,=\, b$ for some $b > 0$. Let $\phi_1,\ldots,\phi_K: [0,1]^d\>[-1,1]$ be convex and $L$-Lipschitz w.r.t.\ the $\ell_2$-norm. Let $\Omega$ in Algorithm \ref{alg:bisection-con} be a $(\rho , \rho', \delta)$-approximate LMO for sample size $N$. 
Suppose the strict feasibility condition in Assumption \ref{assp:strict-feasibility} holds for radius $r>0$.
Let $\Lambda$, $\Xi$, $\eta$ and $\eta'$ in the call to Algorithm \ref{alg:GDA-con} be set as in Theorem \ref{thm:gda-con} with Lipschitz constant $L' = \max\{L, \|\A\|_2 + \|\B\|_2\}$. 
Let $\bar{h}$ be a classifier returned by Algorithm \ref{alg:bisection-con} when run for $T$ outer iterations and $T'$ inner iterations. 
 Then with probability $\geq 1 - \delta$ over draw of $S \sim D^N$, after $T = \log(1/\epsilon)$ outer iterations and $T'=\cO(K/\epsilon^2)$ inner iterations:
\[
\textbf{Optimality}:~~\psi(\C[\bar{h}]) \,\leq\, \min_{\C\in\cC:\, \bphi(\C) \leq \0}\,\psi(\C) \,+\,\cO\left(\kappa(\epsilon + \rho^\eff)\right);
\]
\[
\textbf{Feasibility}:~~\phi_k(\C[\bar{h}]) \,\leq\, \cO\left(L'(\epsilon + \rho^\eff)\right),~\forall k \in [K],
\]
where $\kappa = L'/b$ and  $\rho^\eff =  {\rho} + \sqrt{d}{\rho'}$.
\end{thm*}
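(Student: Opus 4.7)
The plan is to combine the binary-search progression of bisection with the convergence guarantees of ConGDA (\Thm{thm:gda-con}) to bound both the optimality gap and the feasibility violation of the returned classifier $\bar h = h^T$.

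First, I would establish the key equivalence between the ratio-of-linear minimization and a threshold test on a linear minimization: for any $\gamma \in [0,1]$,
\[
\min_{\C\in\cC,\,\bphi(\C)\leq\0} \psi(\C) \geq \gamma \iff \min_{\C\in\cC,\,\bphi(\C)\leq\0} \langle \A - \gamma\B, \C\rangle \geq 0.
\]
This follows since $\langle\B,\C\rangle \geq b > 0$ for all feasible $\C$ (mirroring \Lem{lem:ratio-lin-lemma} but restricted to the feasible set). Quantitatively, $\psi(\C) - \gamma = \langle\A - \gamma\B, \C\rangle / \langle\B, \C\rangle$, so small errors in the linear objective translate into small errors in $\psi$ scaled by $1/b$.

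Second, I would apply \Thm{thm:gda-con} to each inner call of ConGDA. The linear objective $\langle \A - \gamma^t\B, \cdot\rangle$ is $L'$-Lipschitz with $L' = \max\{L, \|\A\|_2 + \|\B\|_2\}$ (since $\gamma^t \in [0,1]$), so with $T' = \cO(K/\epsilon^2)$ inner iterations, the theorem guarantees, on a single high-probability event over $S$ shared across all outer iterations, that the returned $(g^t, \C^t)$ satisfy
\[
\langle\A - \gamma^t\B, \C[g^t]\rangle \leq \min_{\C\in\cC,\,\bphi(\C)\leq\0}\langle\A - \gamma^t\B, \C\rangle + \cO(L'(\epsilon + \rho^{\eff}))
\]
and $\phi_k(\C[g^t]) \leq \cO(L'(\epsilon + \rho^{\eff}))$ for all $k$, together with $\|\C^t - \C[g^t]\|_\infty = \cO(\rho^{\eff})$ from the LMO property and averaging within ConGDA.

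Third, I would track the bisection progression. Let $\psi^* = \min_{\C\in\cC,\bphi(\C)\leq\0}\psi(\C)$ and $\delta_\epsilon = \cO(\kappa(\epsilon + \rho^{\eff}))$ with $\kappa = L'/b$. Using the equivalence and the inner-call accuracy, the test $\psi^{\rl}(\C^t) \geq \gamma^t$ reliably distinguishes whether $\psi^* \geq \gamma^t$, with ambiguity only when $|\psi^* - \gamma^t| = \cO(\delta_\epsilon)$. Consequently, $[\alpha^t, \beta^t]$ always contains $\psi^*$ up to an additive $\cO(\delta_\epsilon)$ slack, and after $T = \log_2(1/\epsilon)$ outer iterations its width is at most $\epsilon$. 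The last $\gamma^{t^*}$ at which the test succeeded (so that $h^{t^*}= g^{t^*}$ was assigned) therefore obeys $|\gamma^{t^*} - \psi^*| \leq \epsilon + \cO(\delta_\epsilon)$, and at this $g^{t^*}$,
\[
\psi(\C[g^{t^*}]) - \gamma^{t^*} \;=\; \frac{\langle\A - \gamma^{t^*}\B, \C[g^{t^*}]\rangle}{\langle\B, \C[g^{t^*}]\rangle} \;\leq\; \frac{\cO(L'(\epsilon + \rho^{\eff}))}{b} \;=\; \cO(\kappa(\epsilon + \rho^{\eff})),
\]
which combined with $|\gamma^{t^*} - \psi^*| \leq \epsilon + \cO(\delta_\epsilon)$ yields the claimed optimality. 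Feasibility is inherited directly from \Thm{thm:gda-con}, since $\bar h$ is either a ConGDA output or the feasible initializer $h^0$.

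The main obstacle lies in the ambiguous regime $\psi^* \approx \gamma^t$, where bisection may take the ``wrong'' branch and we must ensure that the invariant maintained on $h^t$ (carried forward across iterations) stays useful. On a failed test we retain $h^{t-1}$ from an earlier successful iteration, and we must verify that its $\psi$-value is still within $\cO(\delta_\epsilon)$ of the shrinking upper bound $\beta^t$. This requires an induction: whenever a test succeeds at iteration $t'$ we set $h^{t'} = g^{t'}$ with $\psi(\C[h^{t'}]) \leq \gamma^{t'} + \cO(\delta_\epsilon)$, and subsequent failed tests only tighten $\beta$ toward $\gamma^{t'}$, so the invariant $\psi(\C[h^t]) \leq \beta^t + \cO(\delta_\epsilon)$ is preserved until the next successful test refreshes $h^t$. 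In the degenerate case where no test ever succeeds, $\bar h = h^0$ and the fact that $\beta^T$ is driven down to $\psi^* + \cO(\delta_\epsilon)$ forces $\psi^* \geq 1 - \cO(\delta_\epsilon)$, so $\psi(\C[h^0]) \leq 1 \leq \psi^* + \cO(\delta_\epsilon)$ again gives the desired bound.
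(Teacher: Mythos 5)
There is a genuine gap at the key step where you claim an upper bound on $\psi(\C[g^{t^*}])$.

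You follow the pseudocode of Algorithm~\ref{alg:bisection-con} literally, where the branch $\psi^{\rl}(\C^t)\geq\gamma^t$ is the one that assigns $h^t=g^t$ and sets $\alpha^t=\gamma^t$. But notice that this branch supplies a \emph{lower} bound on the linear objective: $\langle\A-\gamma^{t^*}\B,\C^{t^*}\rangle\geq 0$, hence $\langle\A-\gamma^{t^*}\B,\C[g^{t^*}]\rangle\geq -\cO(L'\sqrt{d}\rho')$. To get the \emph{upper} bound you assert, $\langle\A-\gamma^{t^*}\B,\C[g^{t^*}]\rangle\leq\cO(L'(\epsilon+\rho^{\eff}))$, you must pass through the ConGDA guarantee $\langle\A-\gamma^{t^*}\B,\C[g^{t^*}]\rangle\leq\min_{\C:\bphi(\C)\leq\0}\langle\A-\gamma^{t^*}\B,\C\rangle+\cO(L'(\epsilon+\rho^{\eff}))$ and then control the minimum. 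That minimum is bounded by $(\psi^*-\gamma^{t^*})\,\langle\B,\C^*\rangle$, which you estimate using your own interval bound $\psi^*-\gamma^{t^*}\leq\epsilon+\cO(\delta_\epsilon)$ with $\delta_\epsilon=\cO(\kappa(\epsilon+\rho^{\eff}))$. After dividing by $\langle\B,\C[g^{t^*}]\rangle\geq b$, this chain produces $\psi(\C[g^{t^*}])-\gamma^{t^*}\leq\cO\big(\kappa(1+\kappa)(\epsilon+\rho^{\eff})\big)$, not the claimed $\cO(\kappa(\epsilon+\rho^{\eff}))$: the interval slack already carries a $\kappa$, and dividing by $b$ again contributes another. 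The same unjustified jump appears in your inductive invariant ``$\psi(\C[h^{t'}])\leq\gamma^{t'}+\cO(\delta_\epsilon)$ whenever a test succeeds,'' which is exactly the upper bound that does not follow from the passing test under this branch.

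The resolution is that the branches in Algorithm~\ref{alg:bisection-con} as printed are reversed relative to the unconstrained Algorithm~\ref{alg:bisection} and to the paper's own proof of Lemma~\ref{lem:lower-upper-bounds-con}, which sets $h^t=g^t$ and $\beta^t=\gamma^t$ precisely when $\psi^{\rl}(\C^t)\leq\gamma^t$. Under that (intended) branching, the test condition itself gives $\langle\A-\gamma^t\B,\C^t\rangle\leq 0$, hence $\langle\A-\gamma^t\B,\C[g^t]\rangle\leq\|\A-\gamma^t\B\|_1\rho'\leq L'\sqrt{d}\rho'$ directly, and dividing by $b$ gives $\psi(\C[g^t])<\gamma^t+\cO(\kappa\rho^{\eff})=\beta^t+\cO(\kappa\rho^{\eff})$ with no detour through ConGDA's optimality gap and no appeal to the interval bound. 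The ConGDA guarantee is used only in the \emph{other} branch ($\psi^{\rl}(\C^t)>\gamma^t$) to certify that the raised lower bound $\alpha^t=\gamma^t$ is approximately correct. Your skeleton is otherwise sound --- the equivalence $\min\psi\geq\gamma\iff\min\langle\A-\gamma\B,\cdot\rangle\geq 0$, the sharing of a single high-probability event across outer iterations, the geometric shrinking of $[\alpha^t,\beta^t]$, and the observation that feasibility is inherited from ConGDA or $h^0$, all match the paper's two-lemma invariant/multiplicative-progress decomposition --- but the upper-bound step needs the test condition, not the optimization guarantee, to land the stated $\kappa$ dependence.
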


The proof follows similar steps as that for Theorem \ref{thm:bisection-unc}. 
 We will first state a couple of lemmas:
\begin{lem}[{Invariant in Algorithm \ref{alg:bisection-con}}]
\label{lem:lower-upper-bounds-con}
Under the assumptions made in Theorem \ref{thm:bisection-con}, the following invariant is true at the end of each iteration $0 \leq t \leq T$ of Algorithm \ref{alg:bisection-con}:
\[
\alpha^t  - \cO\left(\kappa (\epsilon + \rho^\eff)\right) \,\leq\, \min_{\C\in\cC:\, \phi_k(\C) \leq 0,\forall k} \psi(\C) \,\leq\, \psi(\C[h^t])  \,<\, \beta^t + \cO\left(\kappa (\epsilon + \rho^\eff)\right);
\]
\[
\phi_k(\C[\bar{h}]) \,\leq\, \cO\left(L'(\epsilon + \rho^\eff)\right),~\forall k \in [K].
\]
where $L'$, $\kappa$, and $\rho^\eff$ are defined as in Theorem \ref{thm:bisection-con}.
\end{lem}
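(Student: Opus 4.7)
I plan to prove the invariant by induction on $t$, leveraging two ingredients: (i) the key equivalence exploited by Algorithm~\ref{alg:bisection}, namely that since $\langle \B, \C\rangle \geq b > 0$ for all $\C \in \cC$, we have
\[
\min_{\C \in \cC:\, \bphi(\C)\leq \0} \psi^\rl(\C) \,\geq\, \gamma
\quad\Longleftrightarrow\quad
\min_{\C \in \cC:\, \bphi(\C)\leq \0} \langle \A - \gamma\B, \C \rangle \,\geq\, 0,
\]
and (ii) the approximate optimality/feasibility guarantees of ConGDA from Theorem~\ref{thm:gda-con} applied to each inner call. The base case is immediate: $\alpha^0 = 0 \leq \min \psi^\rl$ since $\psi^\rl \geq 0$, $\beta^0 = 1 \geq \psi^\rl(\C[h^0])$ since $\psi^\rl \leq 1$, and $h^0$ is feasible by construction.

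For the inductive step at iteration $t$, the inner ConGDA call attacks the constrained linear problem with objective $\psi'(\C) = \langle \A - \gamma^t \B, \C\rangle$, which is $(\|\A\|_2 + \|\B\|_2)$-Lipschitz since $\gamma^t \in [0,1]$, and constraints that are $L$-Lipschitz; hence Theorem~\ref{thm:gda-con} applies with effective Lipschitz constant $L' = \max\{L,\|\A\|_2+\|\B\|_2\}$ and yields, for $T'=\cO(K/\epsilon^2)$ inner iterations and with probability at least $1-\delta$,
\[
\langle \A - \gamma^t\B, \C^t \rangle \,\leq\, \min_{\C\in\cC:\,\bphi(\C)\leq\0} \langle \A - \gamma^t\B, \C\rangle + \cO(L'(\epsilon+\rho^\eff)), \quad \phi_k(\C^t) \,\leq\, \cO(L'(\epsilon+\rho^\eff)).
\]
The LMO property additionally yields $\|\C[g^t]-\C^t\|_\infty \leq \rho'$, so the same bounds transfer to $\C[g^t]$ up to an extra $\cO(L'\sqrt{d}\rho')$ slack absorbed into $\rho^\eff$.

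I then do a case split on the branch taken by the algorithm. When the test $\psi^\rl(\C^t) \geq \gamma^t$ passes, equivalently $\langle \A-\gamma^t\B,\C^t\rangle \geq 0$, then for any feasible $\C^*$ the ConGDA optimality bound gives $\langle \A-\gamma^t\B,\C^*\rangle \geq -\cO(L'(\epsilon+\rho^\eff))$; dividing by $\langle \B,\C^*\rangle \geq b$ yields $\psi^\rl(\C^*) \geq \gamma^t - \cO(\kappa(\epsilon+\rho^\eff))$, which is exactly the lower-bound part of the invariant for $\alpha^t=\gamma^t$. The upper bound $\psi^\rl(\C[h^t]) < \beta^t + \cO(\kappa(\epsilon+\rho^\eff))$ and feasibility either carry over from the induction hypothesis (whenever the updated $h^t$ is inherited from $h^{t-1}$) or are obtained from the Lipschitz/LMO argument on $g^t$: specifically $\langle \A-\gamma^t\B,\C[g^t]\rangle$ is bounded in absolute value by $\cO(L'(\epsilon+\rho^\eff))$ so that $\psi^\rl(\C[g^t]) \leq \gamma^t + \cO(\kappa(\epsilon+\rho^\eff))$. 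The symmetric branch where the test fails is handled identically, with the roles of the lower and upper bounds swapped. Feasibility of $\C[h^t]$ in both branches follows from $\phi_k(\C[g^t]) \leq \phi_k(\C^t) + L\sqrt{d}\rho' \leq \cO(L'(\epsilon+\rho^\eff))$.

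The main obstacle I anticipate is handling the $\kappa = L'/b$ factor consistently: the conversion from additive error in the surrogate linear quantity $\langle \A-\gamma\B,\C\rangle$ to additive error in $\psi^\rl(\C)$ requires dividing by $\langle \B,\C\rangle$, which inflates errors by $1/b$. A secondary bookkeeping issue is ensuring that the ConGDA high-probability event holds uniformly across all $T = \log(1/\epsilon)$ outer bisection iterations; since the approximation constants $\rho,\rho'$ already absorb a $\delta$-dependence, this amounts to a union bound (or requesting a $(\rho,\rho',\delta/T)$-LMO) that does not affect the stated asymptotic rates.
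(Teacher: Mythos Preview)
Your proposal is correct and follows essentially the same approach as the paper: induction on $t$, invoke the ConGDA guarantee (Theorem~\ref{thm:gda-con}) for the inner linear-objective problem, case-split on the bisection test, and convert additive error in $\langle \A-\gamma^t\B,\C\rangle$ to additive error in $\psi^\rl$ by dividing through by $\langle \B,\C\rangle \geq b$ (incurring the $\kappa = L'/b$ factor).

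Two small points of precision to tighten. First, Theorem~\ref{thm:gda-con} gives its guarantees for $\C[g^t]$ (the confusion of the returned classifier), not for the estimate $\C^t$; the paper applies the theorem to $\C[g^t]$ and then uses $\|\C^t-\C[g^t]\|_\infty\leq\rho'$ to handle the branching test on $\C^t$, which is the reverse of your phrasing but yields the same slack. Second, your claim that $\langle \A-\gamma^t\B,\C[g^t]\rangle$ is bounded \emph{in absolute value} by $\cO(L'(\epsilon+\rho^\eff))$ is too strong: in each branch only one direction follows (from the test condition plus the $\rho'$-closeness), and that is precisely the direction you need --- the upper bound when the test fails (to certify $\psi^\rl(\C[g^t])\leq\gamma^t+\cO(\cdot)$), and the lower bound combined with ConGDA optimality when the test passes (to certify $\psi^*\geq\gamma^t-\cO(\cdot)$). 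With these adjustments your argument coincides with the paper's.
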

\begin{proof}
We shall prove this lemma by mathematical induction on the iteration number $t$. For $t=0$, the invariant holds trivially as $0 \leq \psi(\C[h^0]) \leq 1$ and $h^0$ satisfies the constraints. Assume the invariant holds at the end of iteration $t-1 \in \{0, \ldots, T-1\}$; we shall prove that the invariant holds at the end of iteration $t$. 

First note that the 
linear function $\psi'(\C) =  \langle \A - \gamma^t\B, \,\C \rangle$
in step 6 of the algorithm is Lipschitz w.r.t.\ the $\ell_2$-norm with Lipschitz parameter of at most $\|\A - \gamma^t\B\|_2 \leq \|\A\|_2 + \|\B\|_2 \leq  L'$. We then have from Theorem \ref{thm:gda-con} that the
classifier $g^t$ returned by the ConGDA algorithm (Algorithm \ref{alg:GDA-con}) after $T' = \cO(K/\epsilon^2)$ runs enjoys the following guarantee:
\begin{eqnarray}
\langle \A - \gamma^t\B,\,\C[{g}^t]) \rangle
&\leq& \min_{\C\in\cC:\, \phi_k(\C) \leq 0,\forall k}\, \langle \A - \gamma^t\B, \,\C \rangle \,+\, \cO\left(L'(\epsilon + \rho^\eff)\right);
\label{eq:bs-splitfw-obj}
\\
\phi_k(\C[g^t]) &\leq& \cO\left(L'(\epsilon + \rho^\eff)\right),~\forall k \in [K],
\label{eq:bs-splitfw-con}
\end{eqnarray}
where we have used the fact that both $\psi'(\C) = \langle \A - \gamma^t\B, \,\C \rangle$ and $\phi_k(\C)$s are $L'$-Lipschitz w.r.t.\ the $\ell_2$-norm. 
We further have that from the property of the LMO (Definition \ref{defn:lmo}) used in turn by Algorithm \ref{alg:FW-con} that:
\begin{equation}
    \|\C^t - \C[g^t]\|_\infty \leq \rho'
    \label{eq:bs-lmo}
\end{equation}

We now consider two cases at iteration $t$. In the first case, $\psi(\C^t) \leq \gamma^t$, leading to the assignments $\alpha^t = \alpha^{t-1}$, $\beta^{t} = \gamma^t$, and $h^{t} = g^t$. We have from \eqref{eq:bs-lmo} that:
\begin{eqnarray*}
\langle \A - \gamma^t\B,\,\C[{g}^t]) \rangle
&\leq& \langle \A - \gamma^t\B, \,\C^t \rangle
  \,+\, \|\A - \gamma^t\B\|_1\rho'\\
&\leq& \langle \B,\,\C^t \rangle \big(\psi^\rl(\C^t) \,-\, \gamma^t\big)
  \,+\, \|\A - \gamma^t\B\|_1\rho'\\
&\leq& 0 \,+\, \|\A - \gamma^t\B\|_1\rho'\\
&\leq& (\|\A\|_2 + \|\B\|_2)\sqrt{d}\rho' 
\,\leq\, L'\sqrt{d}\rho'
\,<\, L'(\epsilon + \rho^\eff),
\end{eqnarray*}
where the third step follows from our case assumption that $\psi^\rl(\C^t) \leq \gamma^t$ and $\langle \B,\,\C^t \rangle > 0$, and the last step follows from triangle inequality and $0 \leq \gamma^t \leq 1$. 
The above inequality further gives us
\begin{eqnarray*}
\frac{\langle  \A,\, \C[{g}^t] \rangle}{\langle \B,\, \C[{g}^t] \rangle} 
\,<\, \gamma^t \,+\, \cO\left(\frac{L'}{b}(\epsilon + \rho^\eff)\right)
\,<\, \beta^t \,+\, \cO(\kappa (\epsilon + \rho^\eff)).
\end{eqnarray*}
 In other words,
\[
\psi^\rl\big(\C[h^{t}]\big) \,=\, \psi^\rl(\C[{g}^t]) \,=\, \frac{\langle \A,\, \C^D[{g}^t] \rangle}{\langle \B,\, \C^D[{g}^t] \rangle}  \,<\, \beta^t \,+\, \cO(\kappa (\epsilon + \rho^\eff)).
\]
Moreover, by our assumption that the invariant holds at the end of iteration $t-1$, we have
\[
\alpha^{t} \,-\, \cO\left(\kappa(\epsilon + \rho^\eff)\right) \,=\, \alpha^{t-1} \,-\, \cO\left(\kappa(\epsilon + \rho^\eff)\right)  \,\leq\, \min_{\C \in \cC}\psi^\rl(\C) \,\leq\, \psi^\rl(\C[h^{t}]) \,<\, \beta^t \,+\, \cO(\kappa(\epsilon + \rho^\eff)).
\]
Further, from \eqref{eq:bs-splitfw-con}, $\phi_k(\C[h^t]) = \phi_k(\C[g^t]) \leq \cO(L'\bar{\rho}), \forall k$. 
Thus under the first case, the invariant holds at the end of iteration $t$.

In the second case, $\psi^\rl(\C^t) > \gamma^t$ at iteration $t$, which would lead to the assignments $\alpha^t = \gamma^{t}$, $\beta^{t} = \beta^{t-1}$, and $h^{t} = h^{t-1}$. Since the invariant is assumed to hold at the end of iteration $t-1$, we have
\begin{equation}
\label{eqn:case-2-partial-con}
\beta^t + \cO\left(\kappa(\epsilon + \rho^\eff)\right) \,=\, \beta^{t-1} + \cO(\kappa(\epsilon + \rho^\eff))  \,>\, \psi^\rl(\C[h^{t-1}]) \,=\, \psi^\rl(\C[h^{t}]).
\end{equation}
Next for $\C^* \,\in\, \underset{\C\in\cC:\, \bphi(\C) \leq \0}{\argmin}\,\, \langle \A - \gamma^t\B, \,\C \rangle$, we have from \eqref{eq:bs-splitfw-obj},
\begin{eqnarray*}
\langle \A - \gamma^t\B, \,\C^* \rangle 
&=& \langle \A - \gamma^t\B, \,\C[h^t] \rangle  \,-\, \cO(L'(\epsilon + \rho^\eff))\\
&\geq& \langle \A - \gamma^t\B, \,\C^t \rangle \,-\, \|\A - \gamma^t\B\|_1\|\C^t - \C[h^t]\|_\infty \,-\, \cO(L'(\epsilon + \rho^\eff))\\
&\geq& \langle \A - \gamma^t\B, \,\C^t \rangle \,-\, \|\A - \gamma^t\B\|_1\rho' 
\,-\, \cO(L'(\epsilon + \rho^\eff))\\
&=& \langle\B, \, \C^t\rangle(\psi(\C^t) - \gamma^t)
  \,-\, \|\A - \gamma^t\B\|_1\rho' \,-\, \cO(L'(\epsilon + \rho^\eff))\\
&\geq& \langle\B, \, \C^t\rangle(0)
  \,-\, \|\A - \gamma^t\B\|_1\rho' \,-\, \cO(L'(\epsilon + \rho^\eff))\\
&\geq& -(\|\A\|_2 + \|\B\|_2)\sqrt{d}\rho' 
\,-\, \cO(L'(\epsilon + \rho^\eff)) \,=\, -\cO(L'(\epsilon + \rho^\eff)),
\end{eqnarray*}
where the second step follows from the property of the LMO, the second step follows from Holder's inequality, the third step using \eqref{eq:bs-lmo}, the forth step follows from our case assumption that $\psi^\rl(\C^t) \geq \gamma^t$ and $\langle\B, \, \C^t\rangle > 0$, the last step follows 
from the fact that $\|\mathbf{z}\|_1 \geq \|\mathbf{z}\|_2$ and from the triangle inequality and that $0 \leq \gamma^t \leq  1$. 
%
In particular, we have for all $\C \in \cC$ such that $\bphi(\C) \leq \0$,
\[
\langle \A - \gamma^t\B, \,\C \rangle  \,\geq\, -\cO(L'(\epsilon + \rho^\eff)),
\]
or
\[
\frac{\langle \A, \,\C \rangle}{\langle \B, \,\C \rangle} \,\geq\, \gamma^t 
-\cO\left(\frac{L'}{\langle \B, \,\C \rangle}(\epsilon + \rho^\eff)\right)
\,\geq\,
\gamma^t 
-\cO\left(\frac{L'}{b}(\epsilon + \rho^\eff)\right)
\,=\,
\gamma^t 
-\cO\left(\kappa(\epsilon + \rho^\eff)\right).
\]
or
\[
\psi^\rl(\C) \,\geq\, \gamma^t \,-\, \cO\left(\kappa(\epsilon + \rho^\eff)\right) \,=\, \alpha^t -\cO\left(\kappa(\epsilon + \rho^\eff)\right).
\]	
By combining the above with \eqref{eqn:case-2-partial-con}, we can see that the invariant holds in iteration $t$ under this case as well. 
We also have $\phi_k(\C[h^t]) = \phi_k(\C[h^{t-1}]) \leq \cO\left(L'(\epsilon + \rho^\eff)\right), \forall k$.
This completes the proof of the lemma.
\end{proof}

\begin{lem}[{Multiplicative Progress in Each Iteration of Algorithm \ref{alg:bisection-con}}]
\label{lem:multiplicative-progress-con}
Let $\psi$ be as defined in Theorem \ref{thm:bisection-con}. Then the following is true in each iteration $1 \leq t \leq T$ of Algorithm \ref{alg:bisection-con}: 
\[
\beta^t - \alpha^t ~=~ \frac{1}{2}\big(\beta^{t-1} - \alpha^{t-1}\big)	.
\]
\end{lem}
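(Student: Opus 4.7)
The plan is to prove this lemma by a direct case analysis on the branch taken in each iteration of Algorithm \ref{alg:bisection-con}. Since $\gamma^t$ is defined as the midpoint $(\alpha^{t-1} + \beta^{t-1})/2$ in line 5, whichever branch is taken, exactly one of $\alpha^t$ or $\beta^t$ is replaced by $\gamma^t$ while the other is preserved from iteration $t-1$. The result then follows from a one-line computation in each case.

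Concretely, I would first fix an iteration $t \in \{1, \ldots, T\}$ and record the midpoint identity $\gamma^t - \alpha^{t-1} = \beta^{t-1} - \gamma^t = \tfrac{1}{2}(\beta^{t-1} - \alpha^{t-1})$. Then I would split into two cases according to the test $\psi^\rl(\C^t) \geq \gamma^t$ in line 7. In the first case (condition holds), we have $\alpha^t = \gamma^t$ and $\beta^t = \beta^{t-1}$, so
\[
\beta^t - \alpha^t \,=\, \beta^{t-1} - \gamma^t \,=\, \tfrac{1}{2}(\beta^{t-1} - \alpha^{t-1}).
\]
In the second case (condition fails), we have $\alpha^t = \alpha^{t-1}$ and $\beta^t = \gamma^t$, giving
\[
\beta^t - \alpha^t \,=\, \gamma^t - \alpha^{t-1} \,=\, \tfrac{1}{2}(\beta^{t-1} - \alpha^{t-1}).
\]
Either way, the desired identity holds.

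There is no real obstacle here; the lemma simply codifies the defining geometric property of the bisection update, and is unaffected by the presence of constraints or the use of the ConGDA subroutine, since neither influences how $\gamma^t$, $\alpha^t$, or $\beta^t$ are defined relative to $\alpha^{t-1}$ and $\beta^{t-1}$. The only thing to be careful about is writing down the update rule exactly as stated in lines 5--8 of Algorithm \ref{alg:bisection-con}, so that the case split matches the algorithm. This lemma will subsequently be combined with Lemma \ref{lem:lower-upper-bounds-con} in the proof of Theorem \ref{thm:bisection-con} to conclude that after $T = \log(1/\epsilon)$ outer iterations the interval $[\alpha^T, \beta^T]$ has length $\epsilon$, pinning down $\psi(\C[\bar h])$ up to the additive slack derived there.
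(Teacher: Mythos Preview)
Your proposal is correct and follows essentially the same approach as the paper: a direct two-case split on the branch taken in lines 7--8 of Algorithm \ref{alg:bisection-con}, using the midpoint definition of $\gamma^t$ to conclude the interval halves. The only cosmetic difference is that you first isolate the midpoint identity $\gamma^t - \alpha^{t-1} = \beta^{t-1} - \gamma^t = \tfrac{1}{2}(\beta^{t-1}-\alpha^{t-1})$ before splitting, whereas the paper expands $\gamma^t = (\alpha^{t-1}+\beta^{t-1})/2$ inline within each case.
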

\begin{proof}
We consider two cases in each iteration of Algorithm \ref{alg:bisection-con}. If in an iteration $t \in \{1, \ldots, T\}$, 
$\psi^\rl(\C^t) \leq \gamma^t$, leading to the assignment $\beta^t = \gamma^t$, then
\begin{eqnarray*}
\beta^t - \alpha^t &=& \gamma^t - \alpha^{t-1}
~=~
\frac{\alpha^{t-1} + \beta^{t-1}}{2} - \alpha^{t-1}
~=~
\frac{1}{2}(\beta^{t-1} - \alpha^{t-1}).
\end{eqnarray*}
On the other hand, if
$\psi^\rl(\C^t) > \gamma^t$, leading to the assignment $\alpha^t = \gamma^t$, then
\begin{eqnarray*}
\beta^t - \alpha^t &=& \beta^{t-1} - \gamma^t
~=~
\beta^{t-1} - \frac{\alpha^{t-1} + \beta^{t-1}}{2}
~=~
\frac{1}{2}\big(\beta^{t-1} - \alpha^{t-1}\big).
\end{eqnarray*}
Thus in both cases, the statement of the lemma is seen to hold.
\end{proof}


We are now ready to prove Theorem \ref{thm:bisection-con}.
\begin{proof}[Proof of Theorem \ref{thm:bisection-con}]
For the classifier $\bar{h} = h^T$ output by Algorithm \ref{alg:bisection-con} after $T$ iterations, we have from \Lem{lem:lower-upper-bounds-con},
\begin{eqnarray*}
\psi^\rl\big(\C[h^T]\big) \,-\,
\min_{\C \in \cC} \psi^\rl(\C)
&< & 
\beta^t - \alpha^t \,+\, \cO(\kappa(\epsilon + \rho^\eff))
\\
&\leq& 2^{-T}\big(\beta^0 - \alpha^0\big) \,+\, \cO(\kappa(\epsilon + \rho^\eff))
\\
&=& 2^{-T}\big(1 - 0\big) \,+\, \cO(\kappa(\epsilon + \rho^\eff))\\
&=& 2^{-T} \,+\, \cO(\kappa(\epsilon + \rho^\eff)),
\end{eqnarray*}
where 
the second step  follows from Lemma \ref{lem:multiplicative-progress-con}. We additionally have from Lemma \ref{lem:lower-upper-bounds-con}, 
$\phi_k(\C[\bar{h}]) \,\leq\, \cO\left(L'(\epsilon + \rho^\eff)\right),~\forall k \in [K].$ Setting $T = \log(1/\epsilon)$ completes the proof.
\end{proof}

\subsection{Proof of Theorem \ref{thm:plug-in}}
\label{app:proof-plugin}
\begin{thm*}[(Restated) Regret bound for plug-in LMO]
Fix $\delta \in (0, 1)$. Then with probability $\geq 1 - \delta$ over draw of sample $S \sim D^N$, for any  loss matrix $\L \in \R_+^d$ with $\|\L\|_\infty = 1$, the classifier and confusion matrix $(\widehat{g}, \widehat{\boldsymbol{\Gamma}})$ returned by Algorithm \ref{alg:plug-in} satisfies:
\begin{equation*}
\langle \L, \C[\widehat{g}]\rangle \,\leq\, \min_{h:\X\>\Delta_n}\langle \L, \C[h] \rangle + \E_X\big[\big\|\widehat{\boldeta}(X) \,-\, \boldeta(X)\big\|_1\big];
\vspace{-5pt}
\label{eq:lmo1}
\end{equation*}
\begin{equation*}
\|\C[\widehat{g}] \,-\, \hat{\bGamma}\|_\infty \,\leq\, \cO\bigg(\sqrt{\displaystyle\frac{d\log(n)\log(N) + \log(d/\delta)}{N}}\bigg).
\label{eq:lmo2}
\end{equation*}
\end{thm*}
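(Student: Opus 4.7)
The proof splits cleanly into two independent arguments, one for each inequality.

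The first inequality is a standard cost-sensitive calibration argument. Writing $h^*(x) = \argmin_{j\in[n]}\sum_i \eta_i(x) L_{n(i-1)+j}$ for the Bayes-optimal classifier of Proposition \ref{prop:loss-opt}, I would bound the pointwise conditional regret
\[\sum_i \eta_i(x)\big(L_{n(i-1)+\widehat{g}(x)} - L_{n(i-1)+h^*(x)}\big)\]
by an add-and-subtract trick: insert $\pm\sum_i \widehat{\eta}_i(x) L_{n(i-1)+\widehat{g}(x)}$ and $\pm\sum_i \widehat{\eta}_i(x) L_{n(i-1)+h^*(x)}$. The telescoping middle piece $\sum_i \widehat{\eta}_i(x)(L_{n(i-1)+\widehat{g}(x)} - L_{n(i-1)+h^*(x)})$ is non-positive by the optimality of $\widehat{g}(x)$ under $\widehat{\boldeta}$, and the remaining two pieces collapse into $\sum_i (\eta_i(x) - \widehat{\eta}_i(x))(L_{n(i-1)+\widehat{g}(x)} - L_{n(i-1)+h^*(x)})$, which Holder's inequality bounds by $\|\L\|_\infty \|\widehat{\boldeta}(x) - \boldeta(x)\|_1$. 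Taking $\E_X[\cdot]$ and observing that $\langle \L, \C[h]\rangle = \E_X[\sum_i \eta_i(X) L_{n(i-1)+h(X)}]$ for any deterministic $h$ then yields the first inequality.

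The second inequality requires uniform convergence, because the theorem asserts the bound holds for all loss matrices $\L \in \R^d$ simultaneously, and $\widehat{g}$ depends on $\L$. I would consider the plug-in class
\[\mathcal{G} = \big\{x \mapsto \argmin^*_{j\in[n]} \textstyle{\sum_i} \widehat{\eta}_i(x) L_{n(i-1)+j} : \L \in \R^d\big\},\]
which, viewing $\widehat{\boldeta}(x)\in\R^n$ as a feature vector, is the family of linear multiclass classifiers with one weight vector per class; such families have Natarajan dimension $O(d)$. Appealing to a standard multiclass growth-function bound, the indicator class $\{x \mapsto \1(g(x)=j) : g \in \mathcal{G}\}$ for each fixed $j$ has VC-type growth function of order $N^{O(d\log n)}$, and therefore a symmetrization/Hoeffding argument gives, for any fixed pair $(i,j)\in[n]^2$,
\[\P_{S\sim D^N}\bigg(\sup_{g\in\mathcal{G}}\big|\widehat{C}^S_{ij}[g] - C_{ij}[g]\big| > \epsilon\bigg) \leq 2\cdot N^{O(d\log n)}\exp(-2N\epsilon^2).\]
Inverting this tail bound and taking a union bound over the $d=n^2$ entries of the confusion matrix (using confidence $\delta/d$ per entry) produces the claimed rate $O\big(\sqrt{(d\log n\log N + \log(d/\delta))/N}\big)$, which applies in particular to the specific $\widehat{g}$ returned by the algorithm.

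The main obstacle I anticipate is the Natarajan-to-sample-complexity step: one must argue that the multiclass plug-in family $\mathcal{G}$ has Natarajan dimension of order $d$ (rather than $d\log n$ or worse), and then track the $\log n$ and $\log N$ factors carefully through the growth-function-based uniform convergence bound so that the final rate matches the stated $\sqrt{d\log n\log N/N}$. The first inequality, being a pointwise decomposition of the Bayes regret, is comparatively routine and deterministic in $S$.
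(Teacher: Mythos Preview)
Your proposal is correct and matches the paper's proof almost exactly. The first inequality is handled identically: the paper writes $\langle \L,\C[h]\rangle = \E_X[\boldeta(X)^\top\bell_{h(X)}]$, adds and subtracts $\widehat{\boldeta}(X)^\top\bell_{\widehat{g}(X)}$, uses optimality of $\widehat{g}$ under $\widehat{\boldeta}$, and finishes with H\"older --- precisely your argument.

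For the second inequality the overall strategy is again the same (uniform convergence over the plug-in class, then union bound over the $n^2$ entries), but the paper sidesteps the Natarajan-dimension detour you flag as your main obstacle. Instead of bounding the multiclass complexity of $\mathcal{G}$ and then translating, the paper works directly with the binary indicator class $\H^b = \{x\mapsto \1(b=\argmin^*_y \bell_y^\top\widehat{\boldeta}(x)) : \L\in[0,1]^{n\times n}\}$ for each fixed label $b$, and observes geometrically that each such concept is an intersection of $n$ homogeneous halfspaces in the $n$-dimensional space of $\widehat{\boldeta}$-values. A classical result (Lemma 3.2.3 of Blumer et al., 1989) then gives VC-dimension at most $2n^2\log(3n) = O(d\log n)$ directly, from which the stated rate follows by standard VC uniform convergence plus a union bound over the $d$ entries. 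Your Natarajan route would also work (linear $n$-class classifiers over $\R^n$ have Natarajan dimension $O(n^2)=O(d)$), but the paper's halfspace-intersection argument is shorter and removes the bookkeeping you were worried about.
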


\begin{proof}
For simplicity, we will represent both $\L$ and $\C$ as $n \times n$ matrices instead of flattened $n^2$-dimensional vectors.
Let us denote the columns of $\L$ by  $\bell_1, \ldots, \bell_n$, where $\ell_j = [L_{1, j}, L_{2, j}, \ldots, L_{n, j}]^\top$.

We can then re-write:
\begin{align*}
\langle \L , \C[{h}] \rangle 
    &=
    \sum_{i,j}  L_{ij} \,C_{ij}[h] 
    ~=\,
    \sum_{i,j} \E_{X}\left[
        \eta_i(X)\, L_{ij}\, \1(h(X) = j)
        \right]\\
    &=\, \sum_{j=1}^n \E_{X}\left[\1(h(X) = j)\,\boldeta(X)^\top \bell_j\right]
    ~=\,  \E_{X}\left[\boldeta(X)^\top \bell_{h(X)}\right].
\end{align*}

Let $h^*$ be the Bayes-optimal classifier for the linear metric $\langle \L , \C[\widehat{h}] \rangle$. 
For the first part, we bound the $\L$-regret as follows:
\begin{eqnarray*}
\lefteqn{\langle \L , \C[\widehat{g}] \rangle - \langle \L , \C[h^*] \rangle }\\
&=&
\E_X\big[
\boldeta(X)^\top \bell_ {\widehat g(X)}\big]  - \E_X  \big[\boldeta(X)^\top \bell_ {h^*(X)}\big] \\[0.1em]
&=&
\E_X  \big[\hat\boldeta(X)^\top \bell_ {\widehat g(X)}\big] + \E_X  \big[(\boldeta(X)-\hat\boldeta(X)^\top \bell_ {\widehat g(X)}\big] - \E_X  \big[\boldeta(X)^\top \bell_ {h^*(X)}\big] \\[0.1em]
&\leq&
\E_X  \big[\hat\boldeta(X)^\top \bell_ {h^*(X)}\big] + \E_X  \big[(\boldeta(X)-\hat\boldeta(X)^\top \bell_ {\widehat g(X)}\big] - \E_X  \big[\boldeta(X)^\top \bell_ {h^*(X)}\big] \\[0.1em]
&=&
\E_X  \big[(\boldeta(X)-\hat\boldeta(X))^\top (\bell_ {\widehat g(X)}-\bell_ {h^*(X)})\big]  \\[0.1em]
&\leq&
\E_X \big[\big\|\boldeta(X)-\hat\boldeta(X)\big\|_1 \cdot\big\|\bell_ {\widehat g(X)}-\bell_ {h^*(X)}\big\|_\infty\big] \\[0.1em]
&\leq&
\E_X \big[\big\|\boldeta(X)-\hat\boldeta(X)\big\|_1\big],
\end{eqnarray*}
where in the
third step, we use the fact that 
$\widehat{g}(x) \,=\, \argmin^*_{j\in[n]}  \widehat{\boldeta}(x)^\top \bell_j$; in the
last step, we have use the fact that $\|\L\|_\infty =  1$.

For the second part, we denote the class of all plug-in classifiers constructed from a fixed class-probability estimator $\widehat{\eta}$ by:
\[\H = \left\{h:\X\>[n],\, h(x)\,=\, \argmin^*_{y\in[n]} \bell_y^\top \widehat{\boldeta}(x) \,|\,\L \in [0,1]^{n\times n}\right\},\]
and provide a uniform convergence bound over all classifiers in $\H$, and in turn applies to the classifier $\widehat{g}$ output by Algorithm \ref{alg:plug-in}.

For any  $a,b\in[n]$, we have
\begin{eqnarray*}
\sup_{h\in\H_\q}\left| \hat{C}^S_{a,b}[h] - C_{a,b}[h] \right| 
&=&
\sup_{h\in\H}\left| \frac{1}{m}\sum_{i=1}^{m} \left(\1(y_i=a, h(x_i)=b) - \E [\1(Y=a,h(X)=b)] \right) \right| \\
&=&
\sup_{h\in\H^b}\left| \frac{1}{m}\sum_{i=1}^{ m} \left(\1(y_i=a, h(x_i)=1) - \E [\1(Y=a,h(X)=1)] \right) \right|\;, \\
\end{eqnarray*}
where for a fixed $b \in [n]$, 
$$\H^b \,=\, \left\{h:\X\>\{0,1\}: \exists \L\in [0,1]^{n\times n}, \forall x\in\X,\, h(x)=\1\left(b= \argmin^*_{y\in[n]}\bell_y^\top \widehat{\boldeta}(x)\right) \right\}.$$ 
The set $\H^b$ can be seen as hypothesis class whose concepts are the intersection of $n$ halfspaces in $\R^n$ (corresponding to $\widehat{\boldeta}(x)$) through the origin.  Hence we have from Lemma 3.2.3 of  \citet{Blumer+89} that the VC-dimension of $\H^b$ is at most $2n^2\log (3n)$. 

From standard uniform convergence arguments we have that for each $a, b \in [n]$, the following holds with at least probability $1-\delta$ (over draw of $S \sim D^N$),
\begin{eqnarray*}
\sup_{h\in\H}\left| \hat{C}^S_{a,b}[h] - C_{a,b}[h] \right| 
\,\leq\, 
\cO\left(\sqrt{\frac{n^2\log(n)\log( N ) + \log(\frac{1}{\delta})}{ N} }\right).
\end{eqnarray*}
Applying union bound over all $a,b \in [n]$, we have that the following holds with probability $\geq 1-\delta$:
\begin{eqnarray*}
\left|\left| \hat\C^S[\hat{g}]  -  \C[\hat{g}]  \right|\right|_\infty \,\leq\, \sup_{h\in\H}\left|\left| \hat\C^S[h]  -  \C[h]  \right|\right|_\infty
\,\leq\,
\cO\left(\sqrt{\frac{n^2\log(n)\log( N ) + \log(\frac{n^2}{\delta})}{ N } }\right).
\end{eqnarray*}
Plugging $d= n^2$ completes the proof.
\end{proof}

\section{Additional Experimental Details}
\label{app:expts}

\subsection{Hyper-parameter Selection}
\label{app:hparam}
We run the Frank-Wolfe and GDA algorithms  for 5000 LMO calls, and the ellipsoid algorithm for 1000 LMO calls. We run the constrained algorithms for 10000, 10000 and 1000 LMO calls respectively.  The unconstrained Frank-Wolfe algorithm has no other hyper-parameters to tune. 
The GDA algorithm has two step-size parameters $\eta$ and $\eta'$, which we tune using a two-dimensional grid-search over  $\{0.001, 0.01, 0.1\}^2$, picking the parameters that yield the lowest objective on the training set. For the ellipsoid algorithm, we fix 
the initial ellipsoid radius $a$ to 1000.


The constrained counterpart to the Frank-Wolfe algorithm (SplitFW) in Algorithm \ref{alg:FW-con} has two additional hyper-parameters: the weight on the quadratic penalty $\zeta$, which we set to 10, and the step-size $\eta$, for which, we adopt the same schedule used by \cite{Gidel2018}, and set it to 0.5 for first $T/3$ iterations, 0.1 for the next $T/3$ iterations, and 0.001 for the final $T/3$ iterations. Additionally, we find it sufficient to avoid the explicit line search for $\gamma^t$ in line 7 and instead set to $\frac{2}{t+2}$, akin to the standard Frank-Wolfe setup. For the constrained version of GDA algorithm, we set the step-sizes $\eta_{\blambda} =\eta_{\bmu} = \eta'$, and tune $\eta_{\bxi}$ and $\eta'$ using the same the two-dimensional grid search used for unconstrained GDA, picking among those that satisfy the constraints on the training set, the ones with the least training objective (When none of the parameters satisfy the constraints, we pick the one with the minimum constraint violation). The hyper-parameters for the constrained ellipsoid algorithm were chosen in the same way as the unconstrained version. For TFCO, we tuned the learning rates for the model and constraint from $\{0.001, 0.01, 0.1\}$ and ran it for $5000$ iterations.



\subsection{Additional Details for CIFAR Case Studies}
\label{app:case-studies}
Below, we list the five super-classes in the CIFAR-55 dataset described in Section \ref{sec:expts-cifar}, and the 10 classes that each of them comprise of:
    (i) \textbf{Flowers and Fruits:} Orchid, Poppy, Rose, Sunflower, Tulip, Mushroom, Orange, Pear, Apples, and Sweet Pepper.
    (ii) \textbf{Aquatic Animals:} Beaver, Dolphin, Otter, Aquarium Fish, Ray, Flat Fish, Shark, Trout, Whale, and Seal.
    (iii) \textbf{Household Items:} Clock, Bed, Chair, Couch, Keyboard, Telephone, Television, Wardrobe, Table, and Lamp.
    (iv) \textbf{Large Outdoor Scenes:} Bridge, Castle, House, Road, Mountain, Skyscraper, Cloud, Forest, Plain, and Sea.
    (v)
    \textbf{Mammals:} Camel, Cattle, Chimpanzee, Elephant, Kangaroo, Porcupine, Possum, Raccoon, Fox, and Skunk.
%
%
We employ standard data augmentation techniques on the CIFAR datasets by applying  random crops and horizontal flips.\footnote{The learning rate schedules 
were adopted from: \url{https://github.com/huyvnphan/PyTorch_CIFAR10}.}

\begin{figure}[t]
\centering
\begin{subfigure}[b]{0.48\linewidth}
\centering
\includegraphics[width=0.48\linewidth]{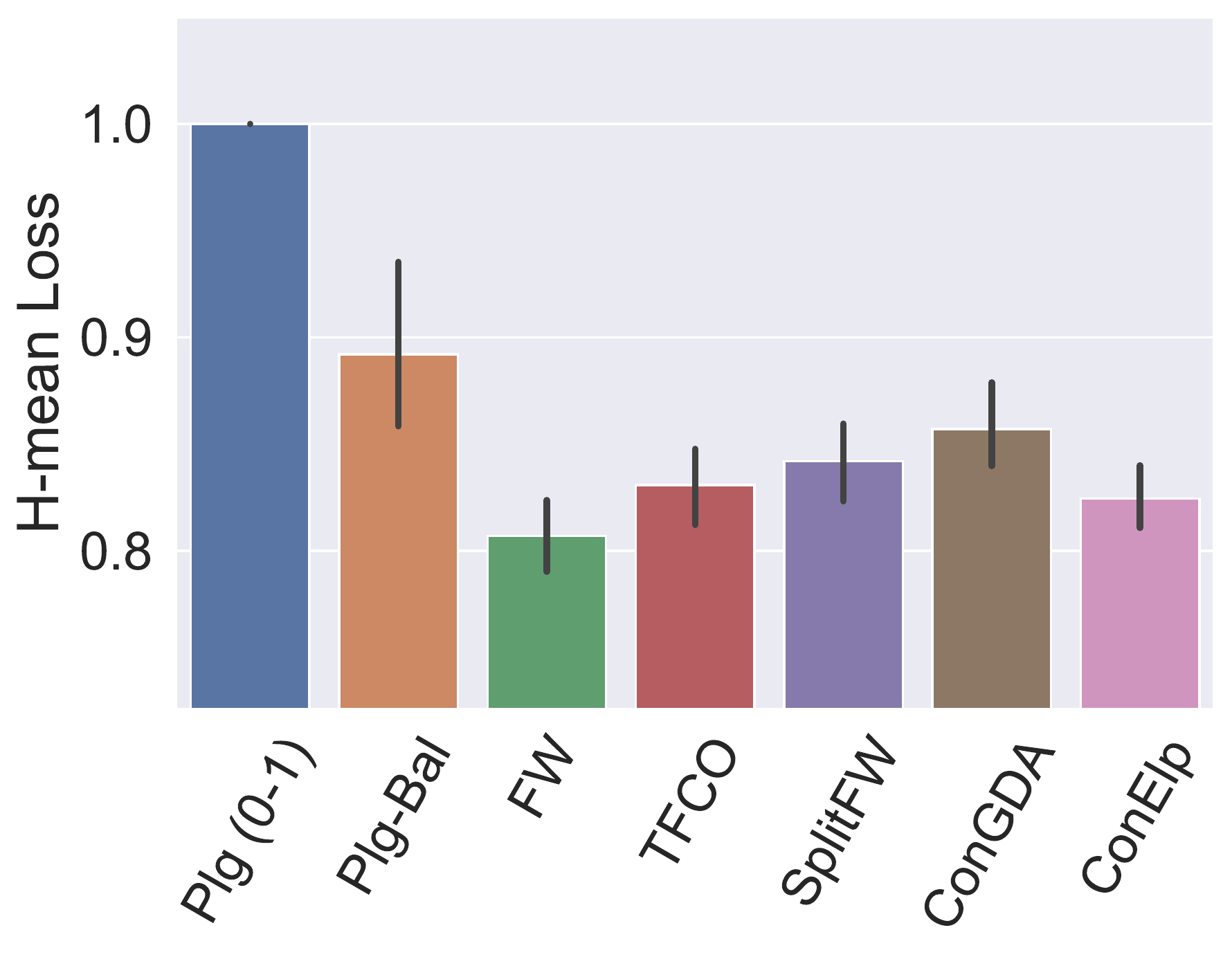}
\includegraphics[width=0.48\linewidth]{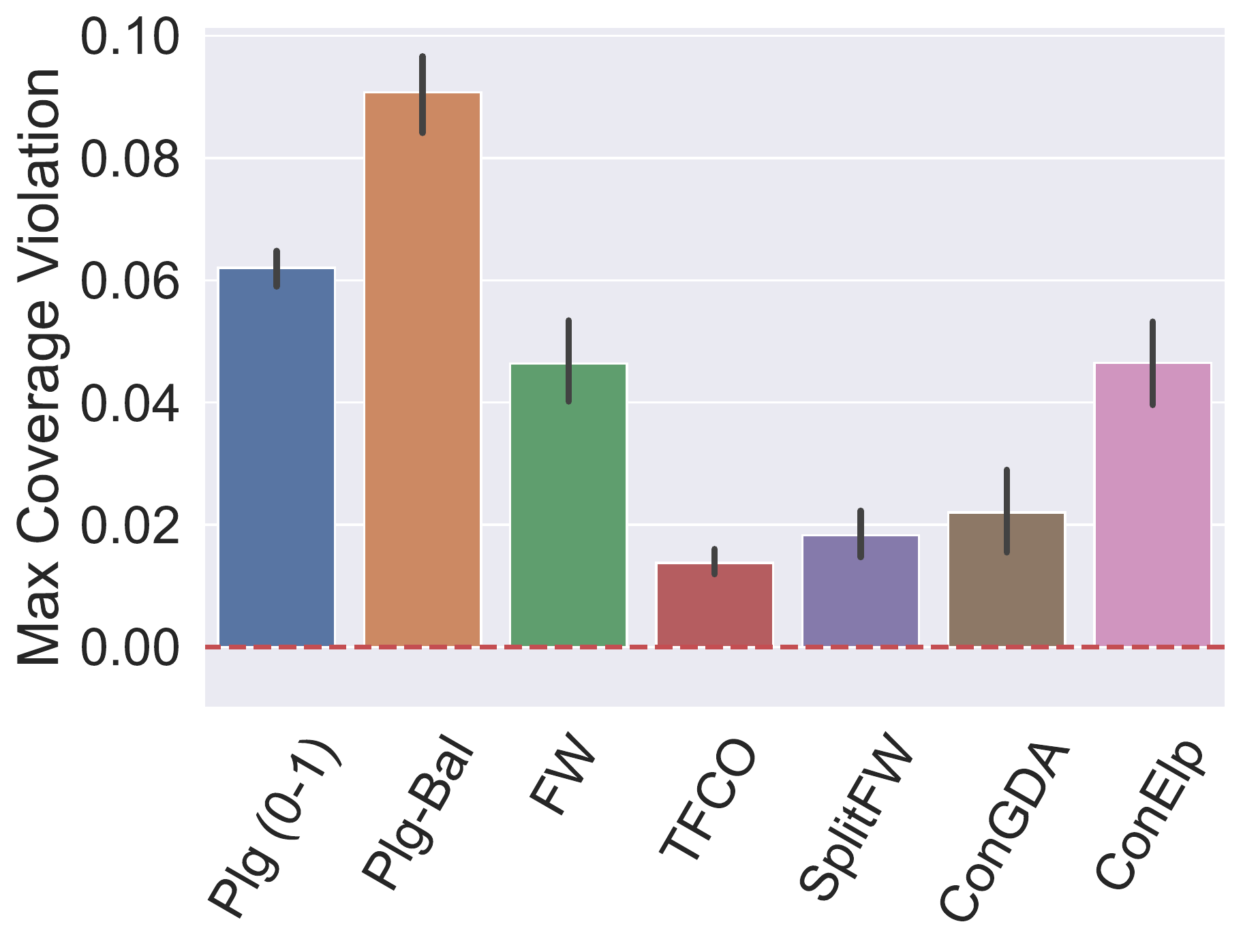}
\label{fig:app-abalone}
\caption{Abalone}
\end{subfigure}
\begin{subfigure}[b]{0.48\linewidth}
\centering
\includegraphics[width=0.48\linewidth]{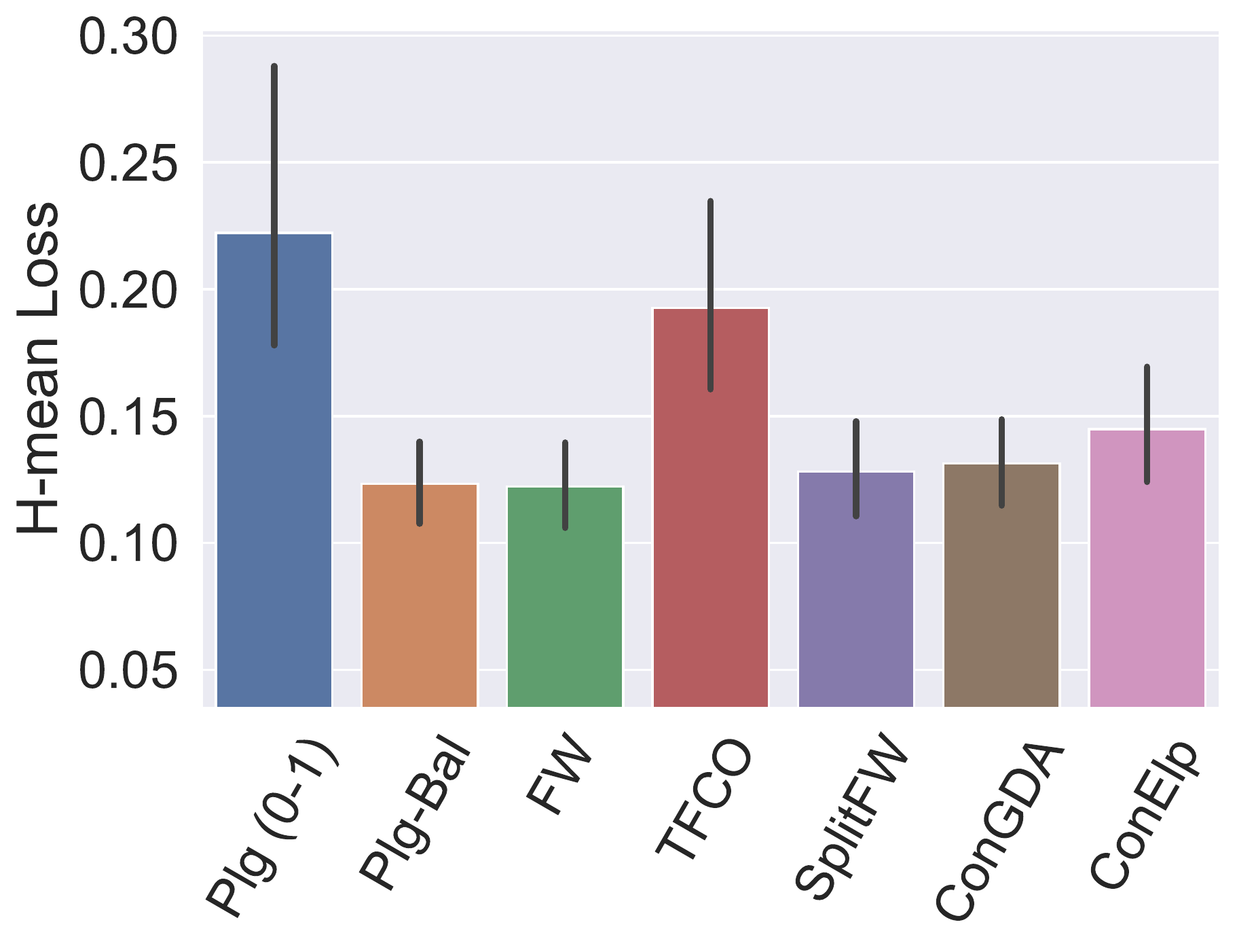}
\includegraphics[width=0.48\linewidth]{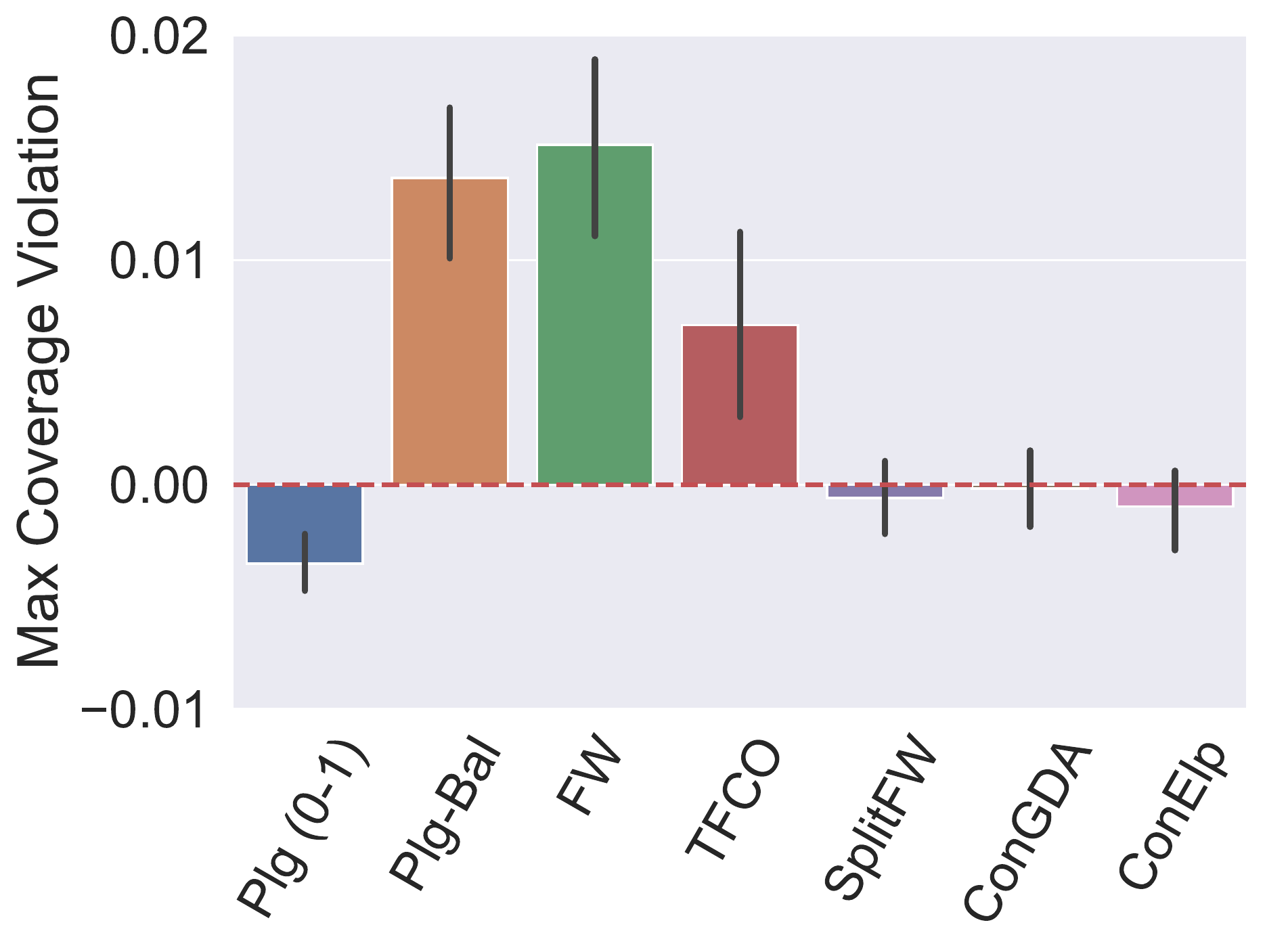}
\label{fig:app-macho}
\caption{MACHO}
\end{subfigure}
\begin{subfigure}[b]{0.48\linewidth}
\centering
\vspace{3pt}
\includegraphics[width=0.48\linewidth]{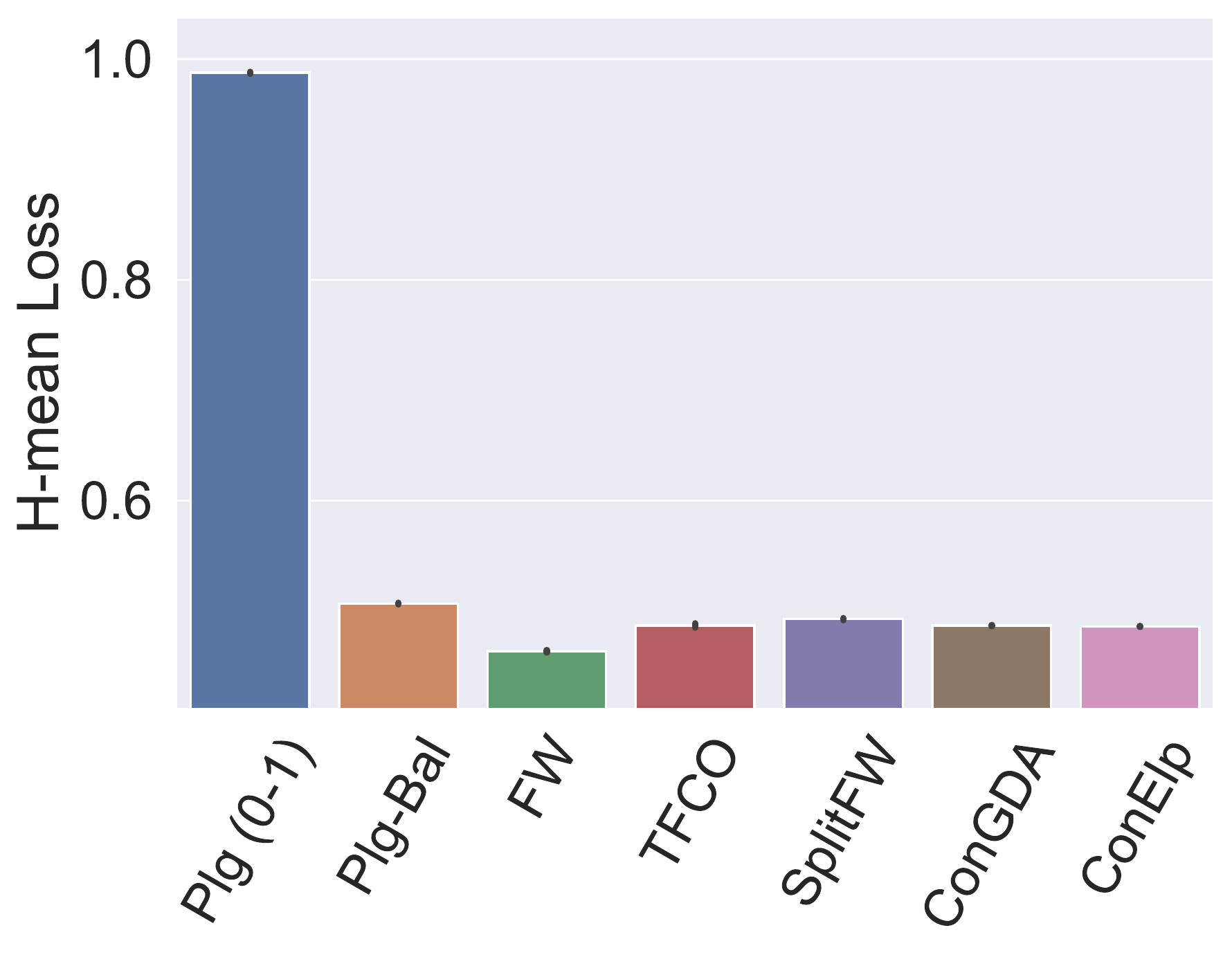}
\includegraphics[width=0.48\linewidth]{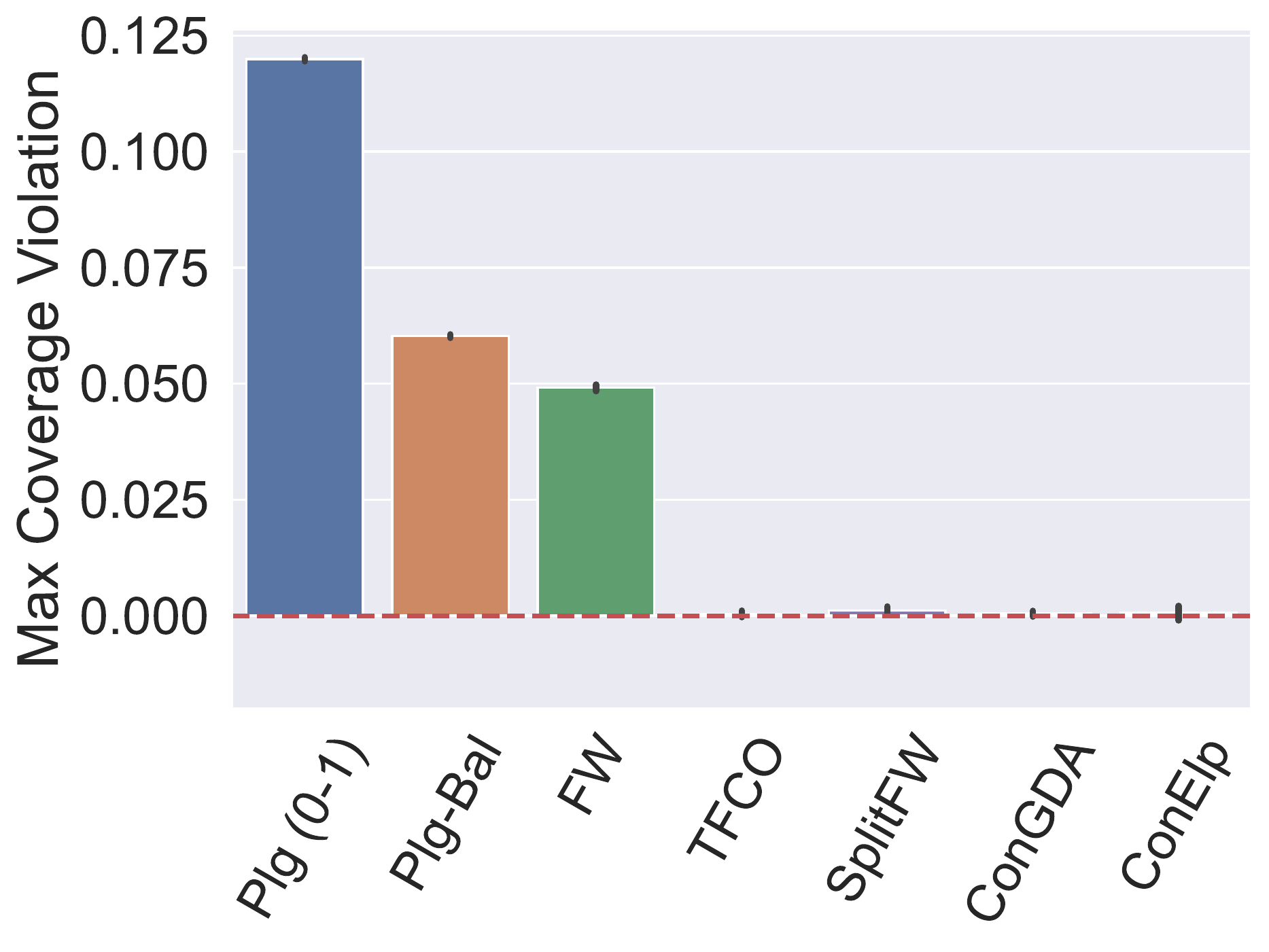}
\label{fig:app-covtype}
\caption{CovType}
\end{subfigure}
\caption{Optimizing the H-mean loss subject to the coverage constraint $\max_i|\sum_{j}C_{ji} \,-\, \pi_i| \leq 0.01$. The plots on the left show the H-mean loss on the test set and those on the right show the coverage violation $\max_i|\sum_{j}C_{ji} \,-\, \pi_i| - 0.01$ on the test set. 
\textit{Lower} H-mean value are \textit{better}, and the constraint values need to be $\leq 0$. 
}
\label{fig:app-hmean-cov}
\vspace{-3pt}
\end{figure}

\begin{figure}[t]
\centering
\begin{subfigure}[b]{0.48\linewidth}
\centering
\includegraphics[width=0.48\linewidth]{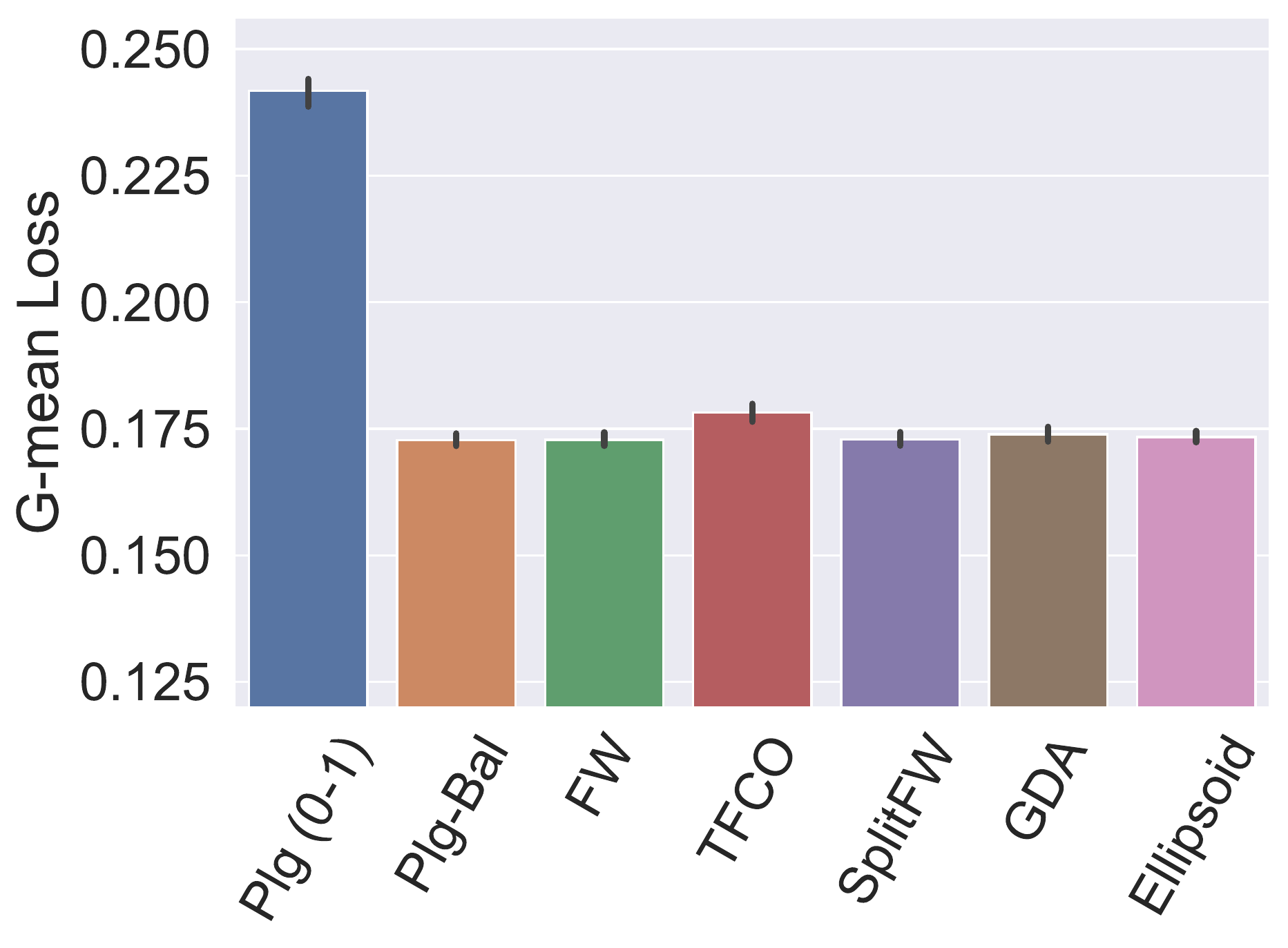}
\includegraphics[width=0.48\linewidth]{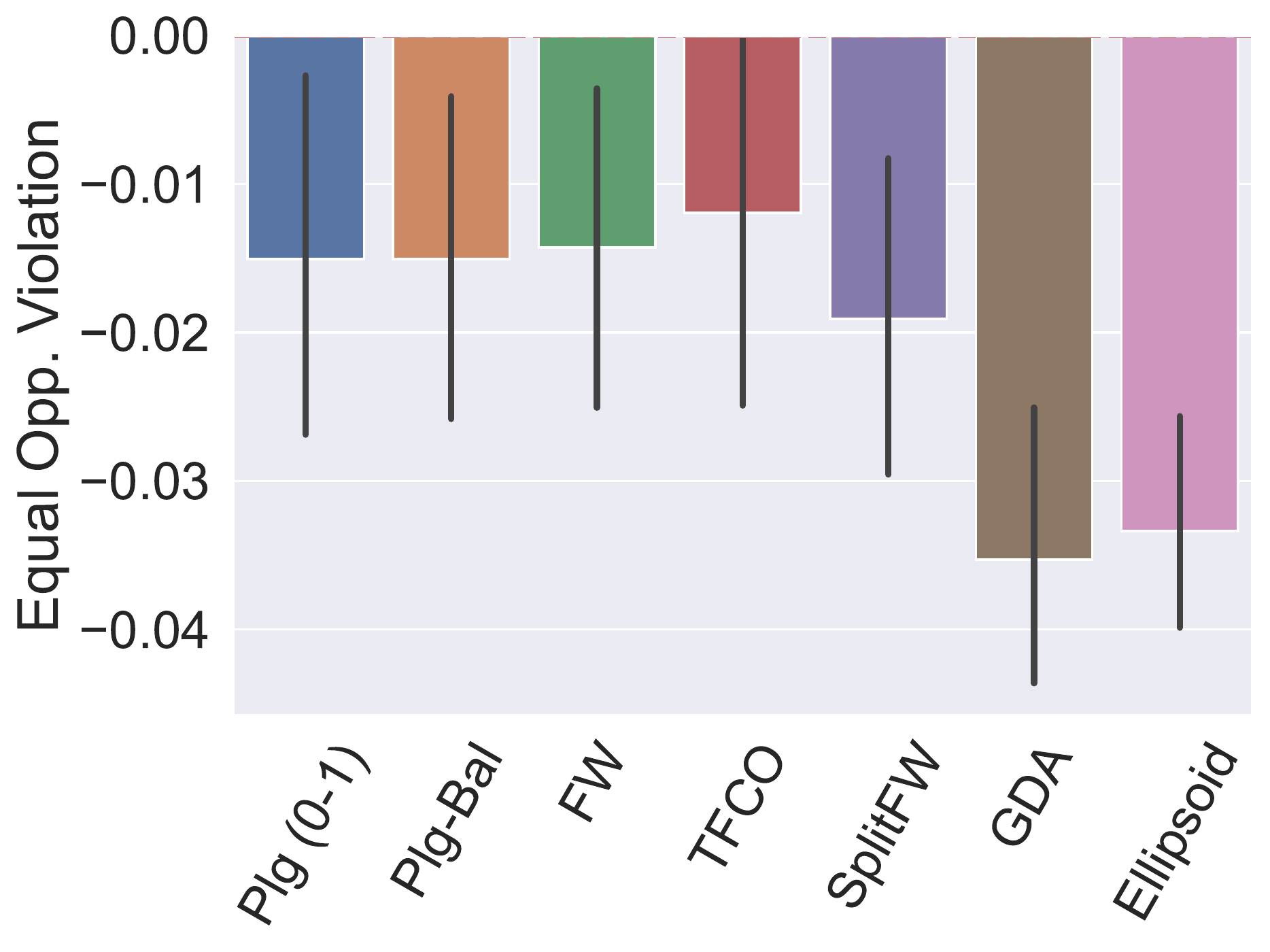}
\label{fig:app-adult}
\caption{Adult}
\end{subfigure}
\begin{subfigure}[b]{0.48\linewidth}
\centering
\includegraphics[width=0.48\linewidth]{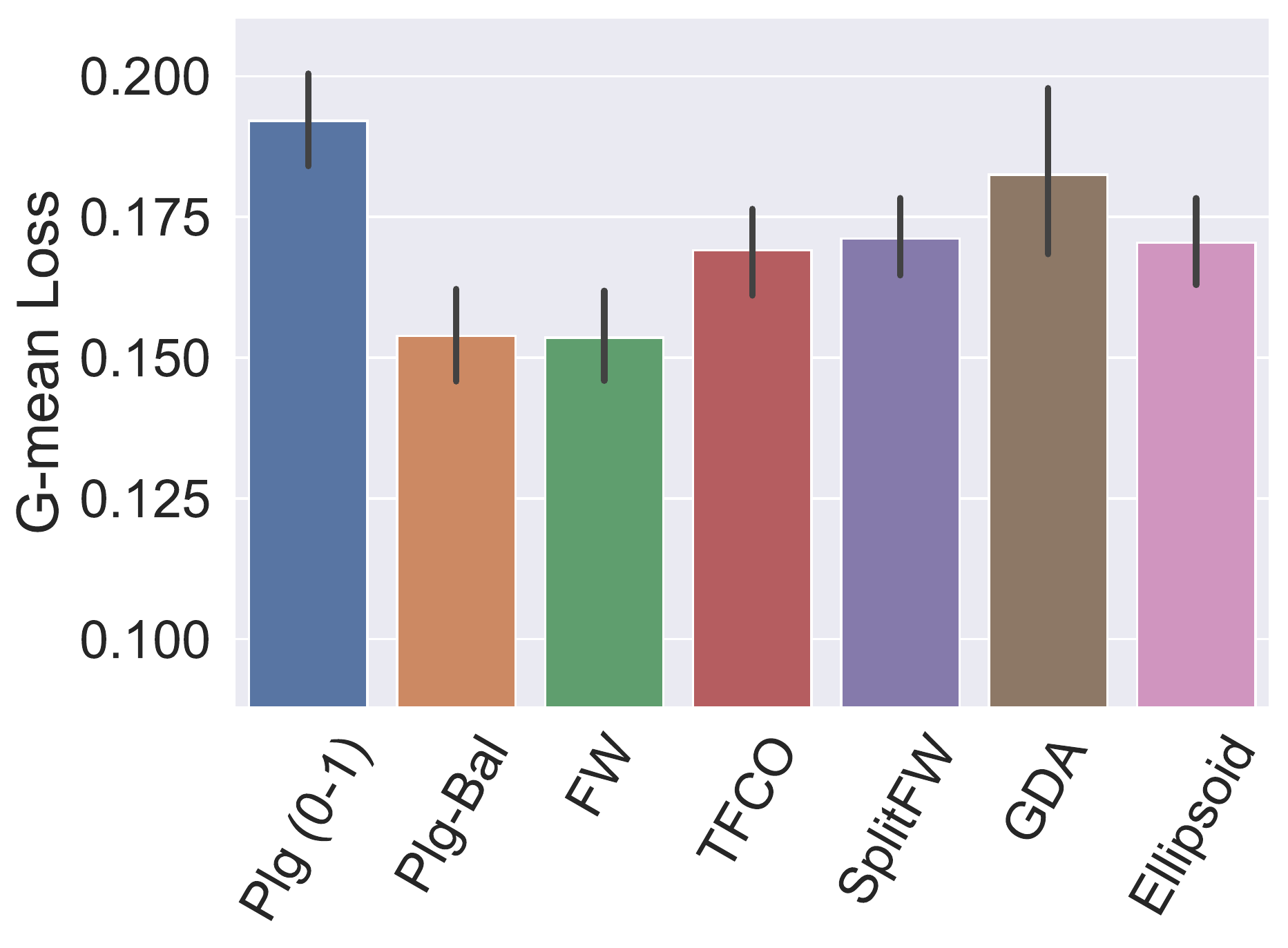}
\includegraphics[width=0.48\linewidth]{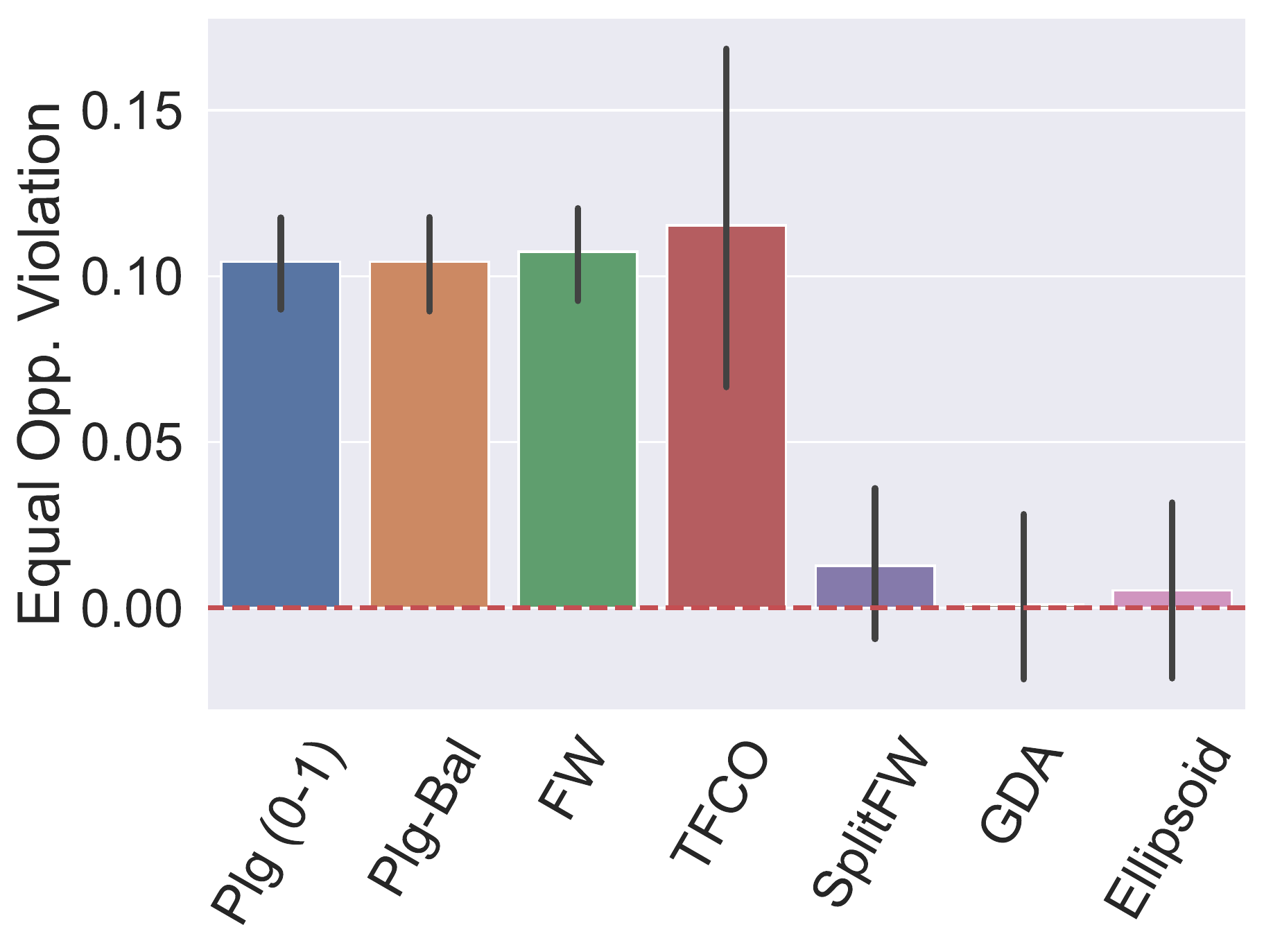}
\label{fig:app-crimes}
\caption{Crimes}
\end{subfigure}
%
\begin{subfigure}[b]{0.48\linewidth}
\centering
\vspace{3pt}
\includegraphics[width=0.48\linewidth]{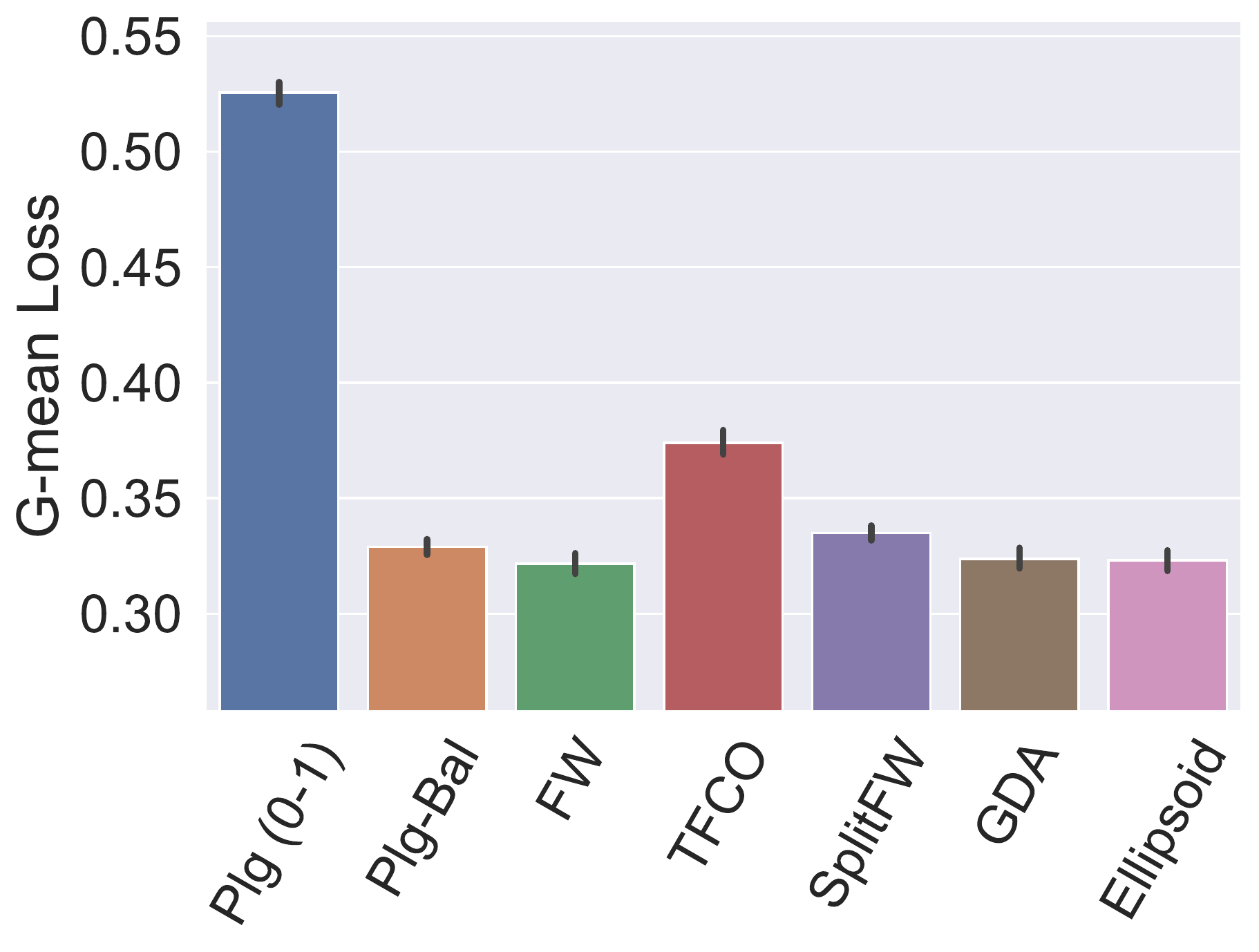}
\includegraphics[width=0.48\linewidth]{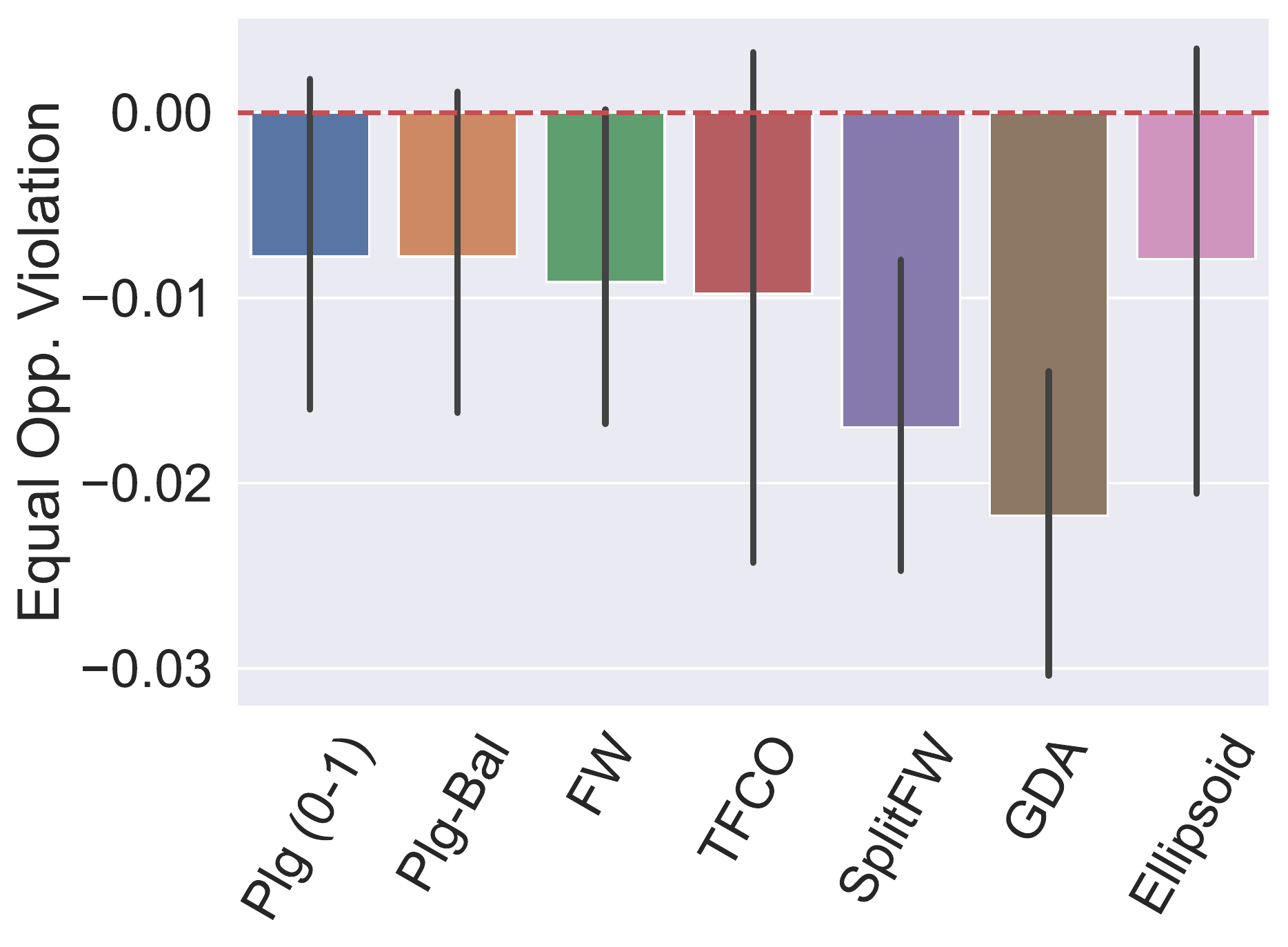}
\label{fig:app-default}
\caption{Default}
\end{subfigure}
\caption{Optimizing the G-mean loss subject to the Equal Opportunity constraint $\leq 0.01$. The plots on the left show the G-mean loss on the test set and those on the right show the equal opportunity violation (needs to be $\leq 0$) on the test set. 
\textit{Lower} G-mean values are \textit{better}.
}
\label{fig:app-gmean-eqopp-2}
\vspace{-10pt}
\end{figure}

\begin{figure}[t]
\centering
\begin{subfigure}[b]{0.48\linewidth}
\centering
\includegraphics[width=0.48\linewidth]{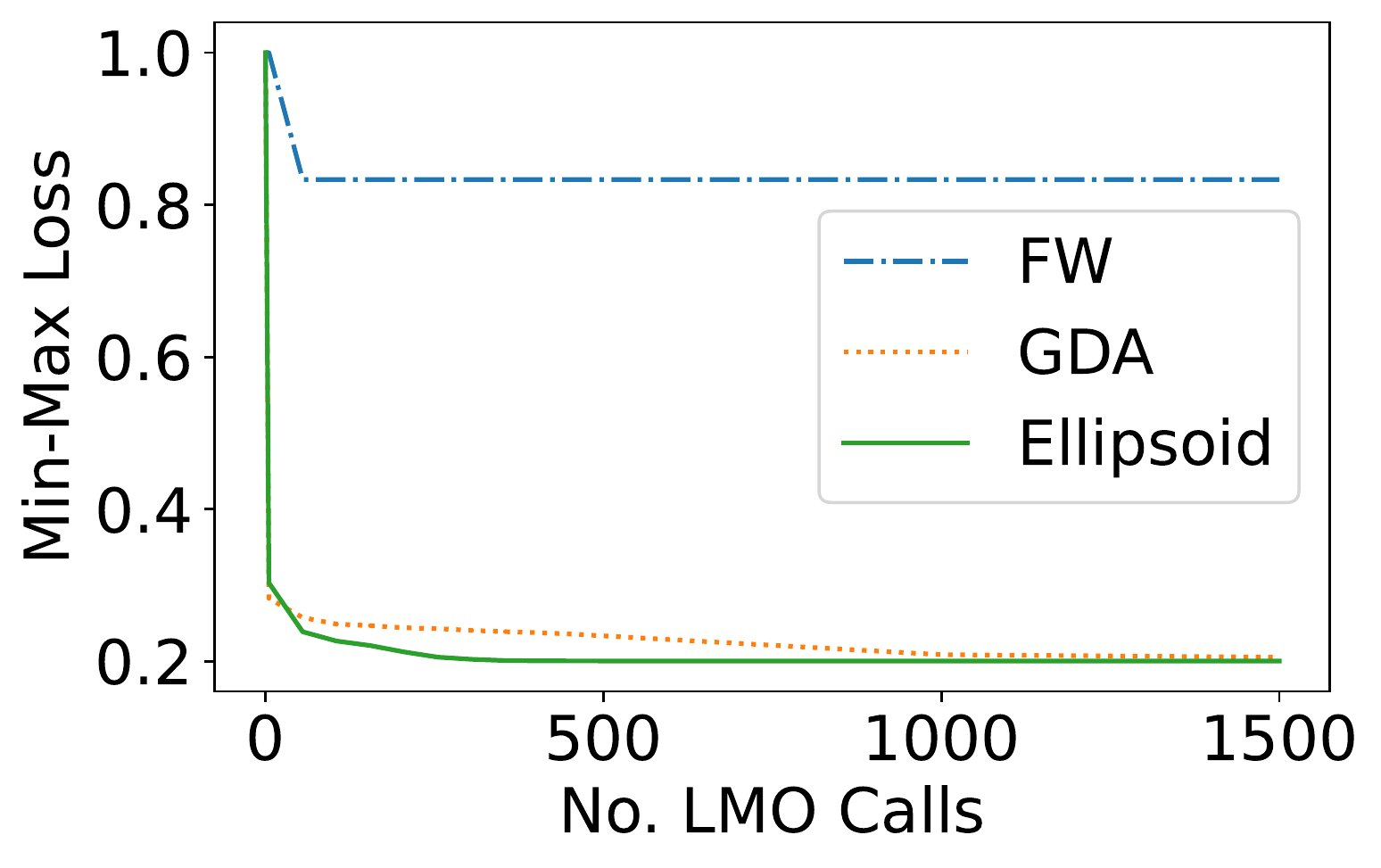}
\includegraphics[width=0.48\linewidth]{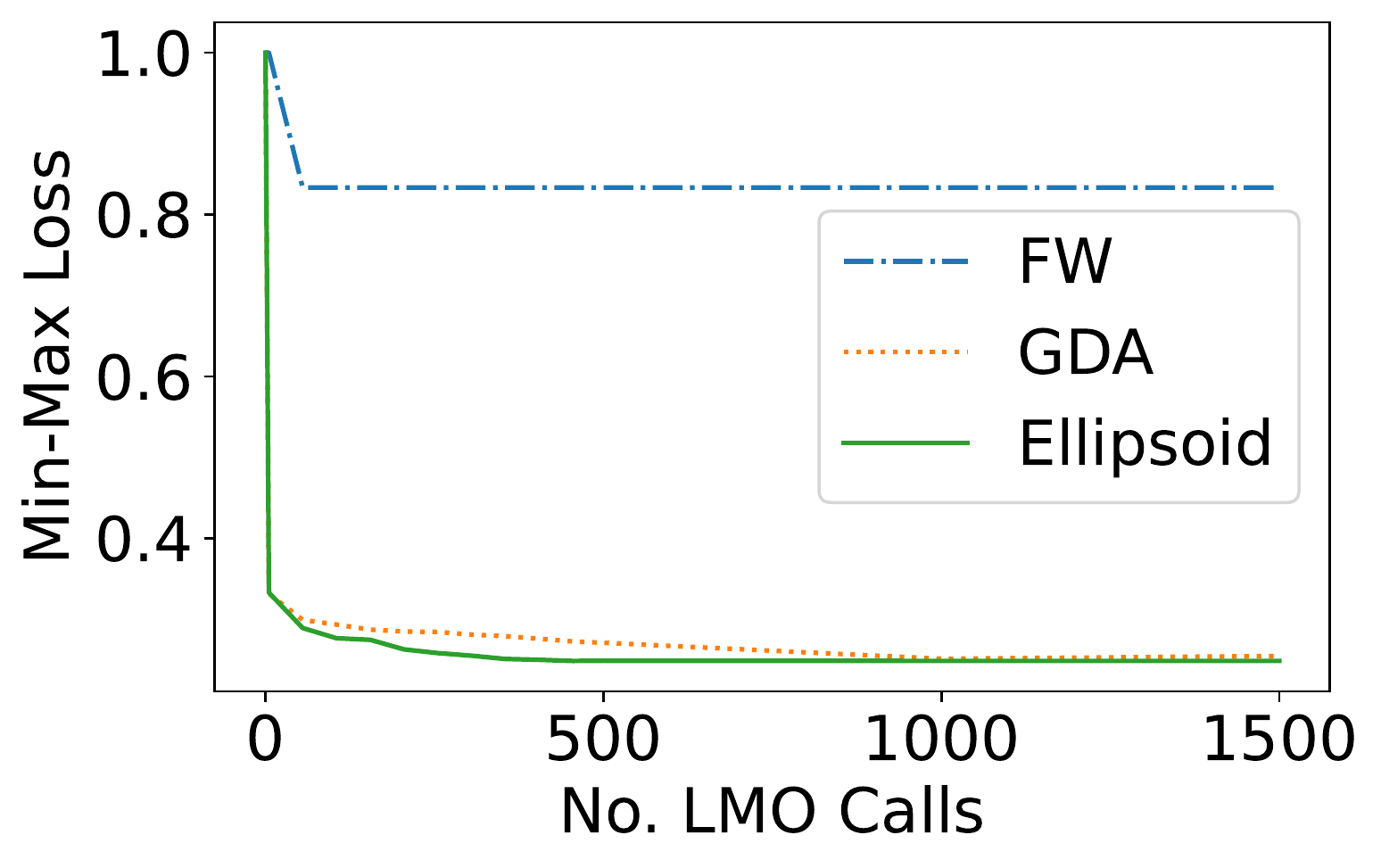}
\label{fig:lmo-satimage-train-minmax}
\caption{SatImage}
\end{subfigure}
\begin{subfigure}[b]{0.48\linewidth}
\centering
\includegraphics[width=0.48\linewidth]{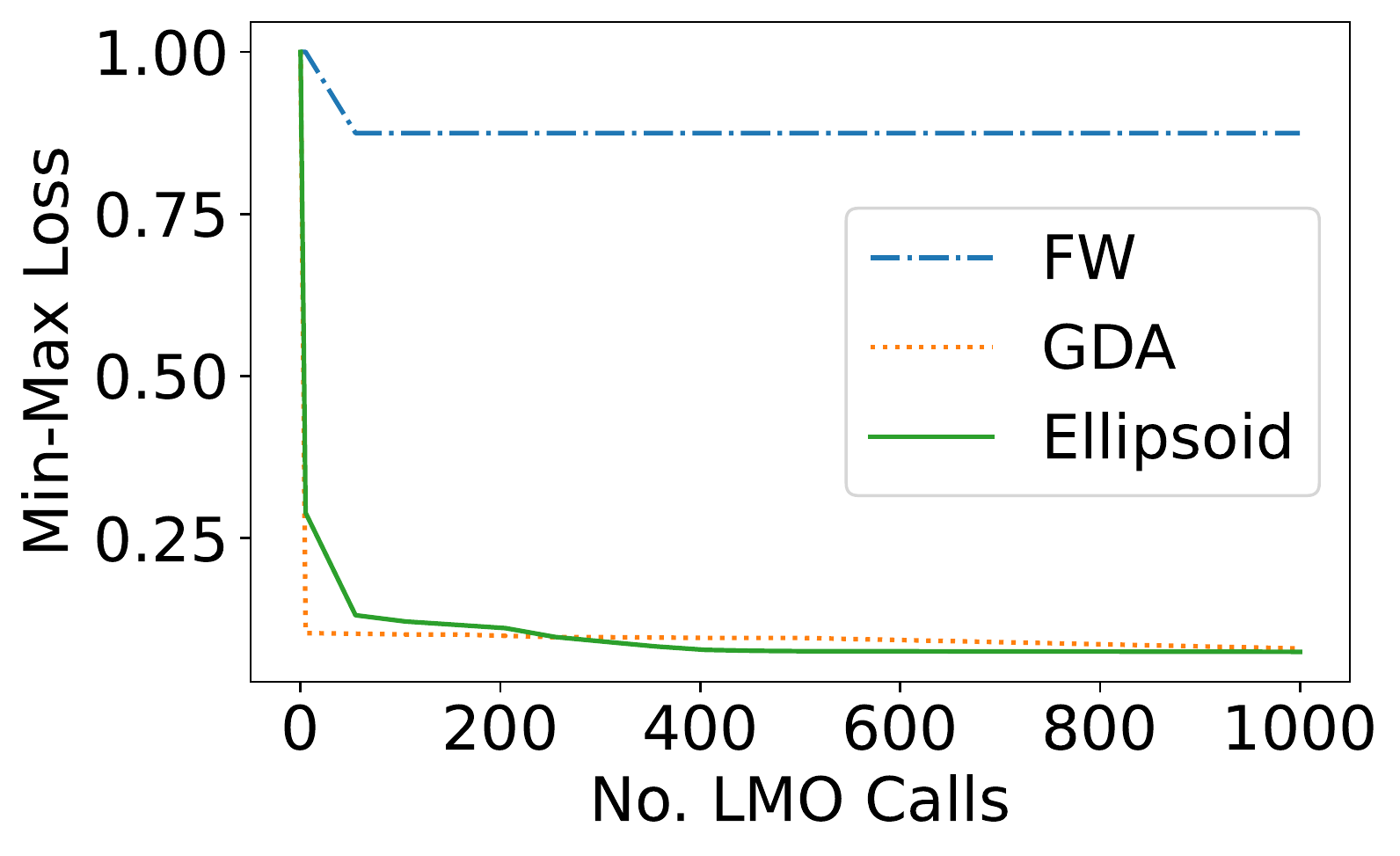}
\includegraphics[width=0.48\linewidth]{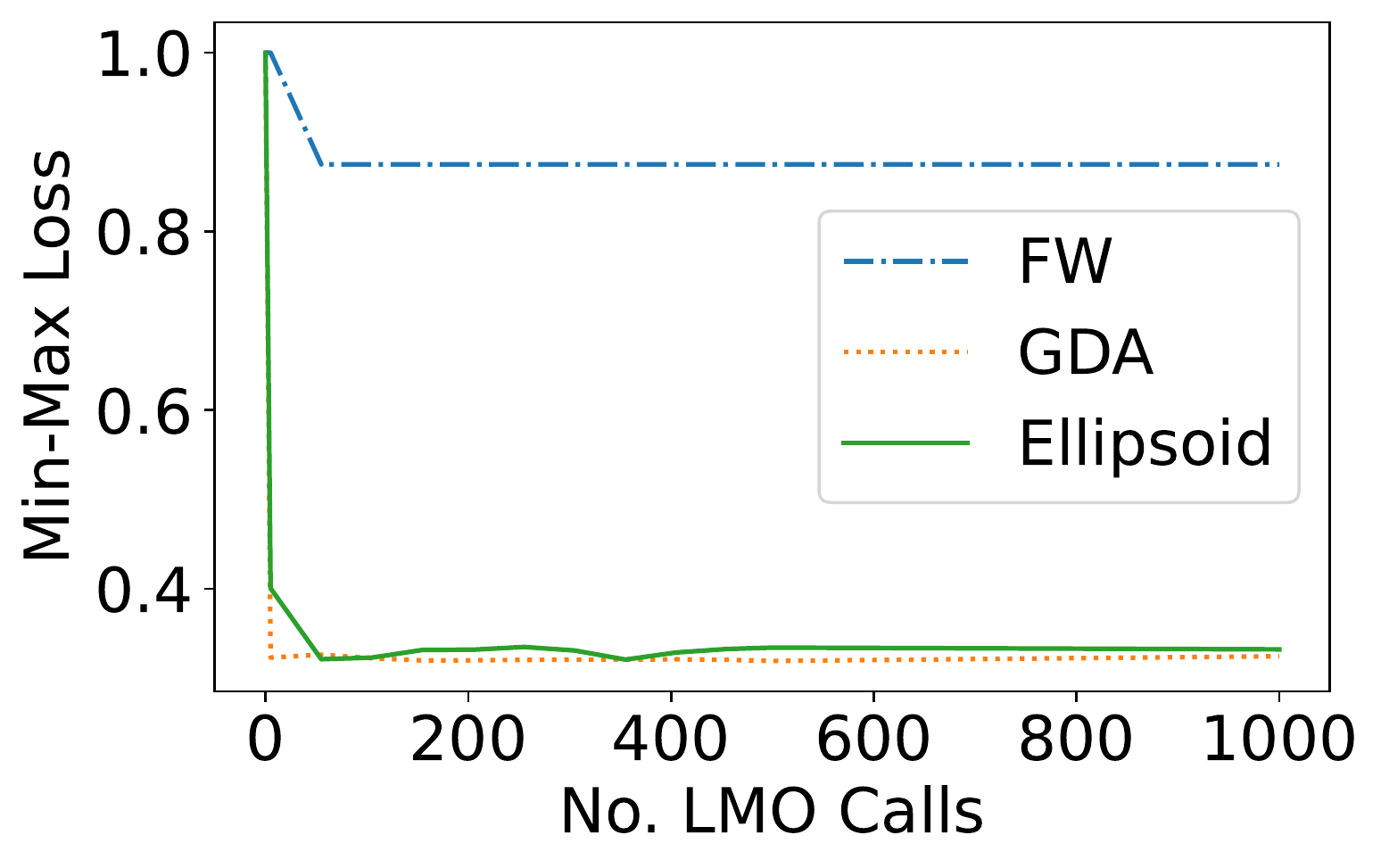}
\label{fig:lmo-macho-train-minmax}
\caption{MACHO}
\end{subfigure}
\caption{Optimizing the \emph{Min-max} loss: Comparison of performance of the Frank-Wolfe, GDA and ellipsoid methods as a function of the number of LMO calls. The plot on the left is for train data and on the right is for test data. \textit{Lower} values are \textit{better}. Because the min-max loss is non-smooth, Frank-Wolfe is seen to converge to a sub-optimal classifier.}
\label{fig:app-min-max}
\vspace{-6pt}
\end{figure}


\begin{figure}[t]
\centering
\begin{subfigure}[b]{0.48\linewidth}
\centering
\includegraphics[width=0.48\linewidth]{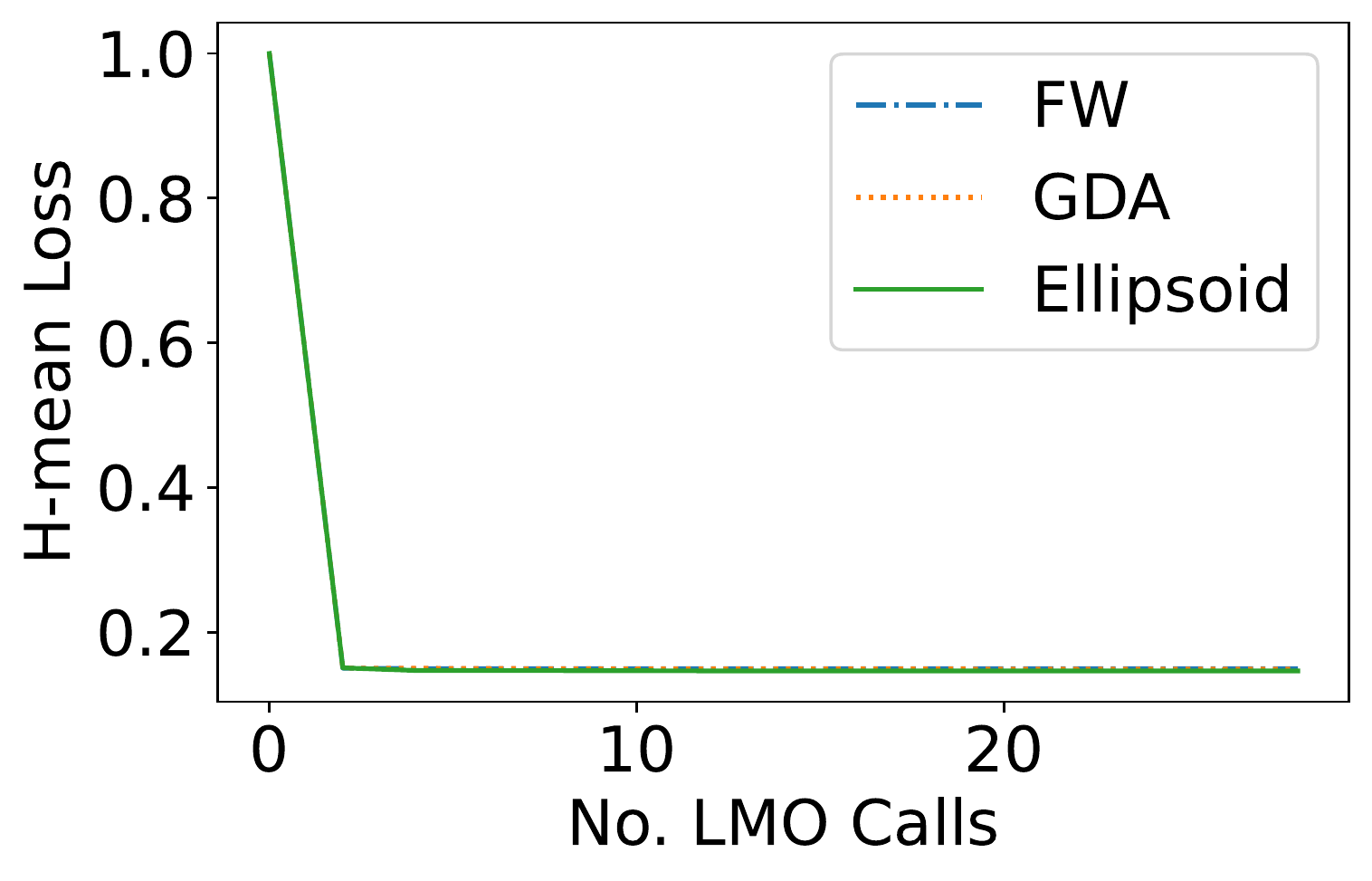}
\includegraphics[width=0.48\linewidth]{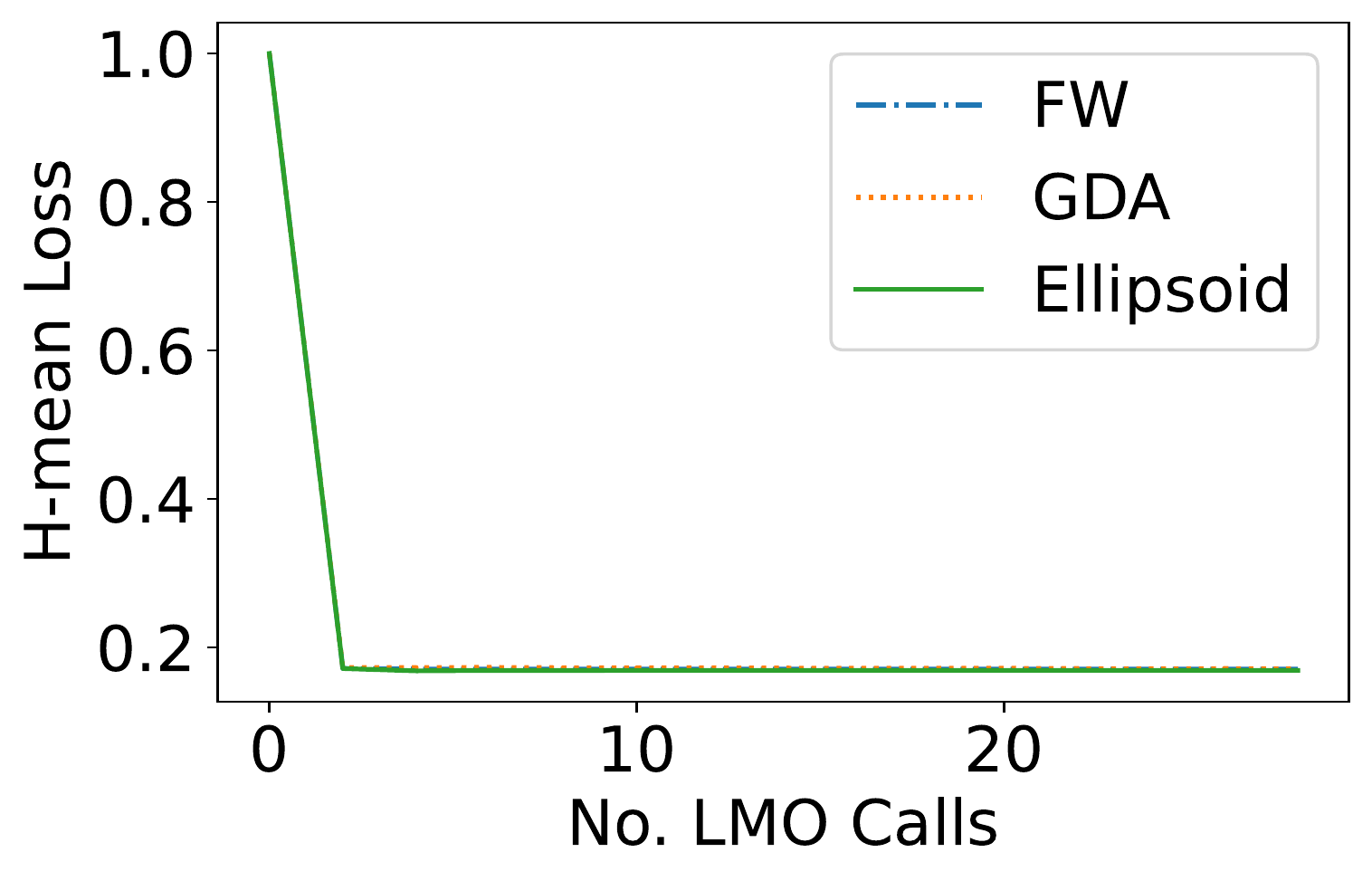}
\label{fig:lmo-satimage-train-hmean-uncon}
\caption{SatImage}
\end{subfigure}
\begin{subfigure}[b]{0.48\linewidth}
\centering
\includegraphics[width=0.48\linewidth]{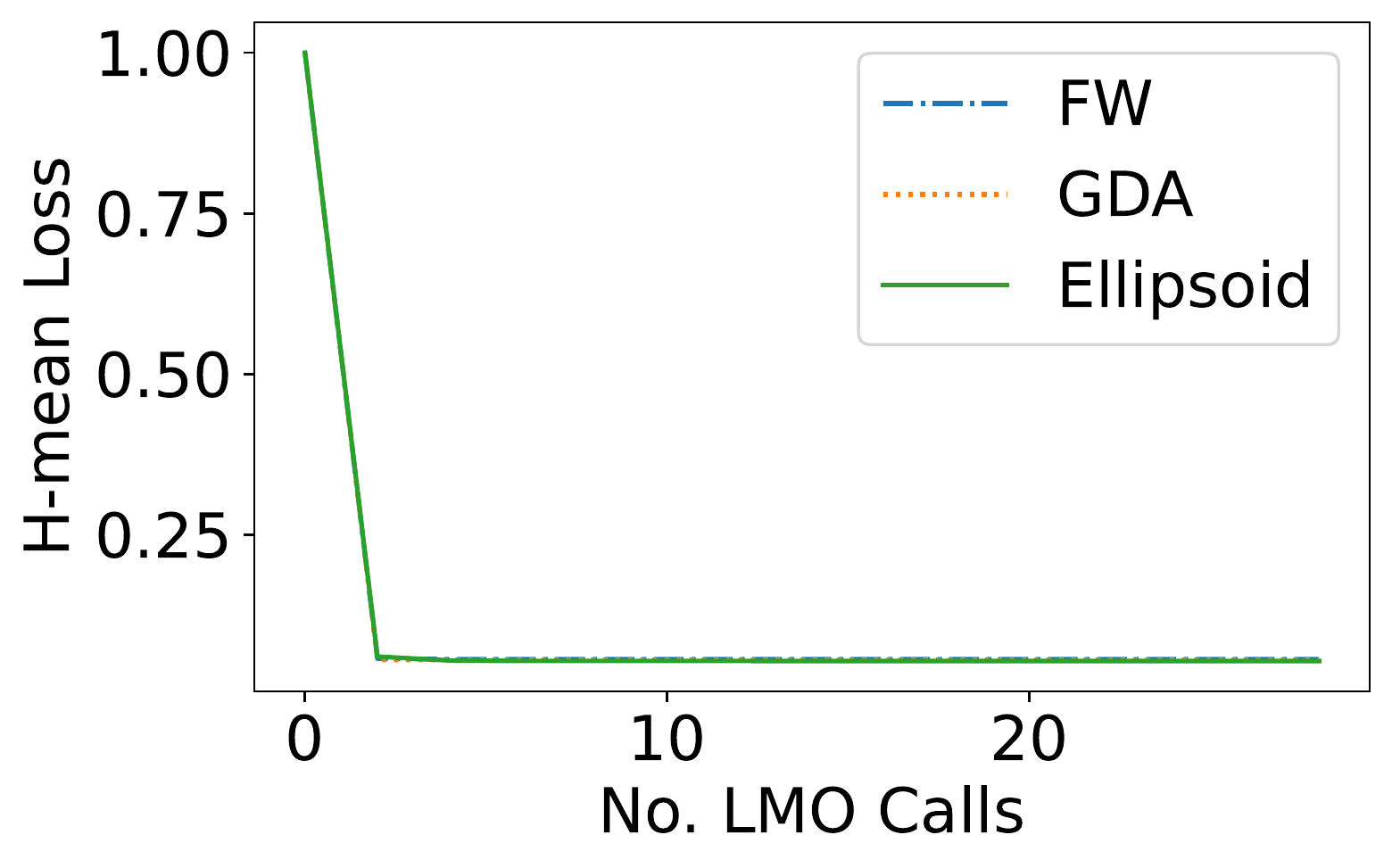}
\includegraphics[width=0.48\linewidth]{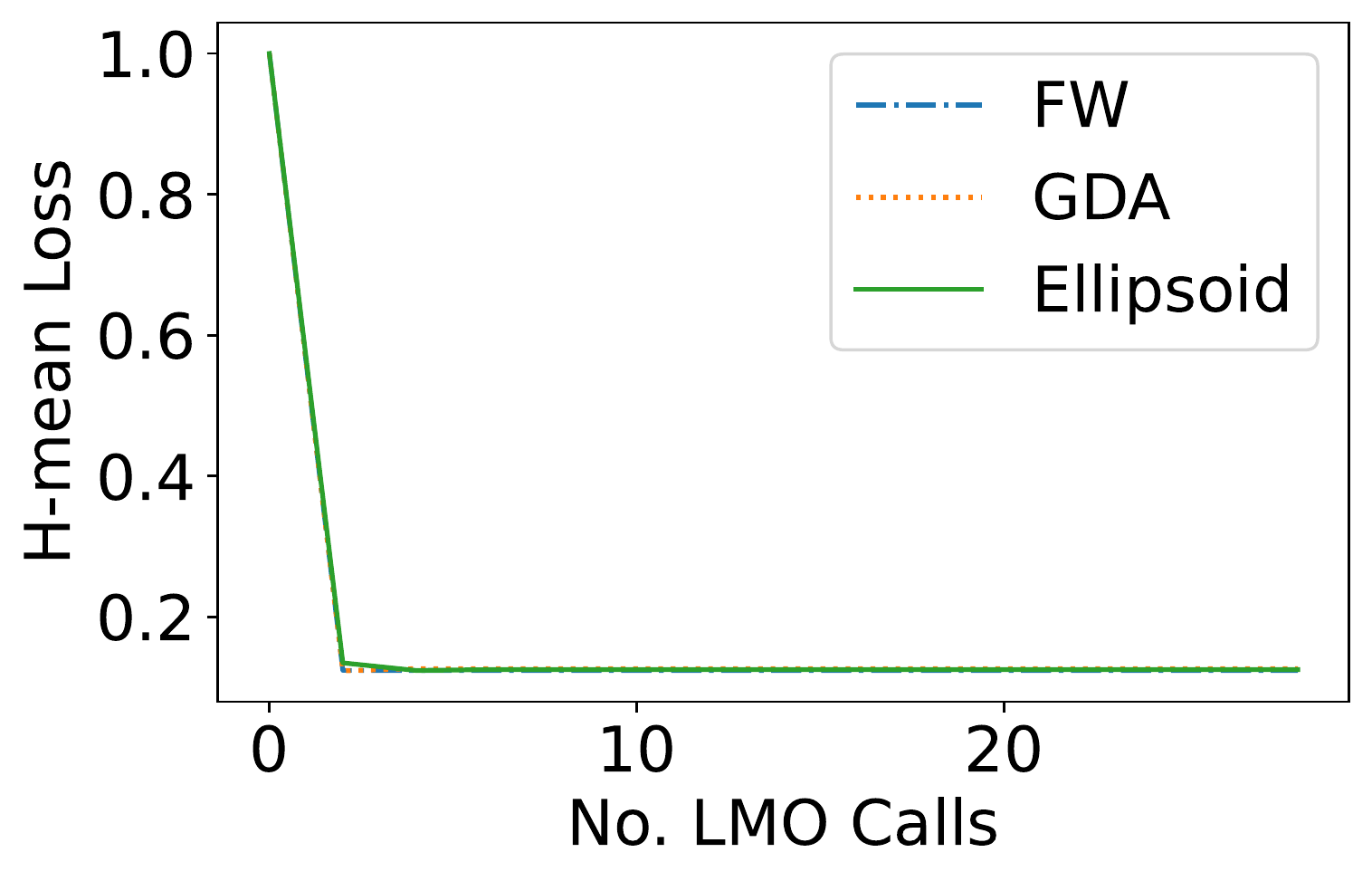}
\label{fig:lmo-macho-train-hmean-uncon}
\caption{MACHO}
\end{subfigure}
\caption{Optimizing the \emph{Hmean} loss: Comparison of performance of the Frank-Wolfe, GDA and ellipsoid methods as a function of the number of LMO calls. The plot on the left is for Train data and on the right is for Test data. \textit{Lower} values are \textit{better}.}
\label{fig:app-hmean-uncon}
\vspace{-10pt}
\end{figure}

\begin{figure}[t]
\centering
\begin{subfigure}[b]{0.48\linewidth}
\centering
\includegraphics[width=0.48\linewidth]{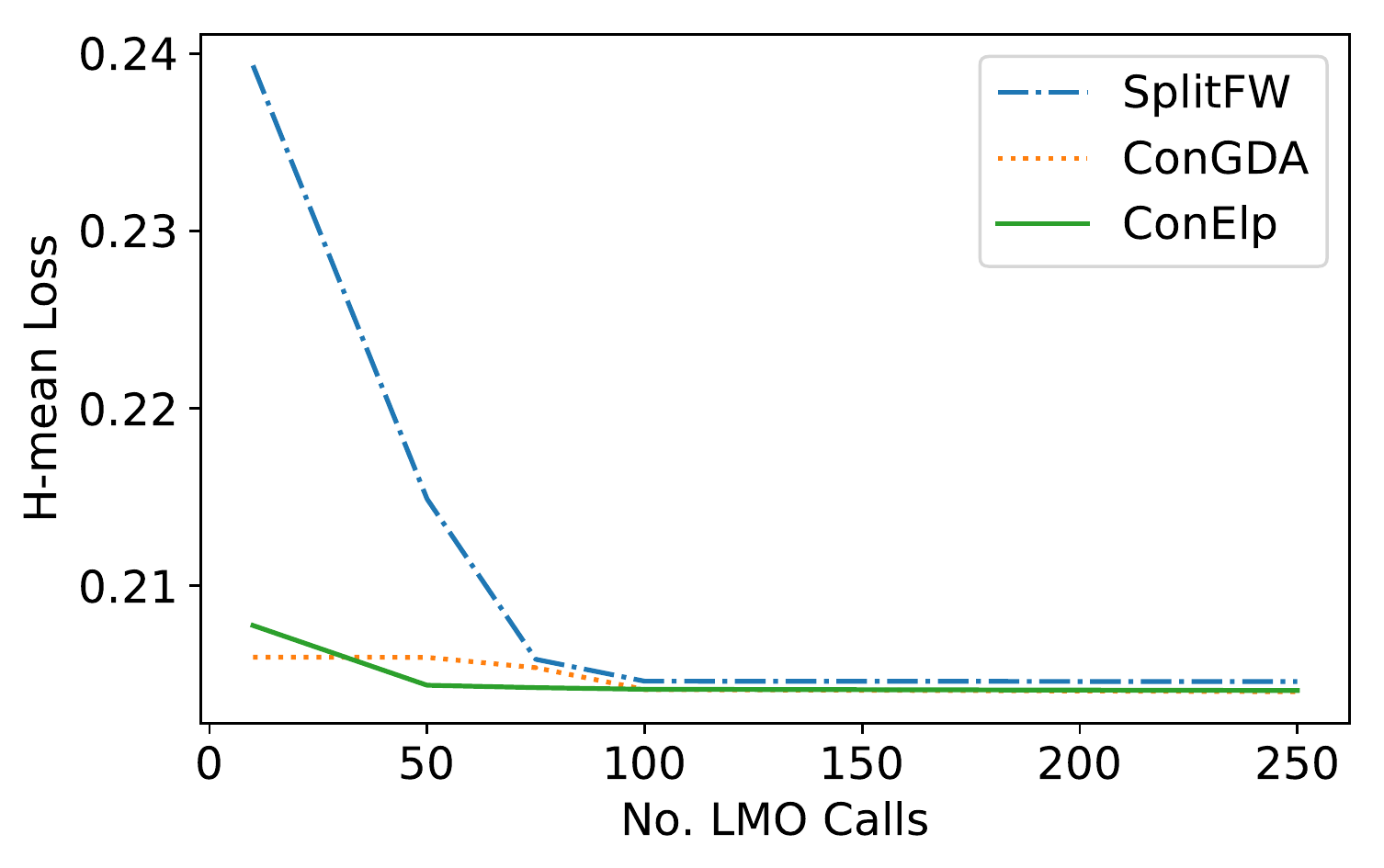}
\includegraphics[width=0.48\linewidth]{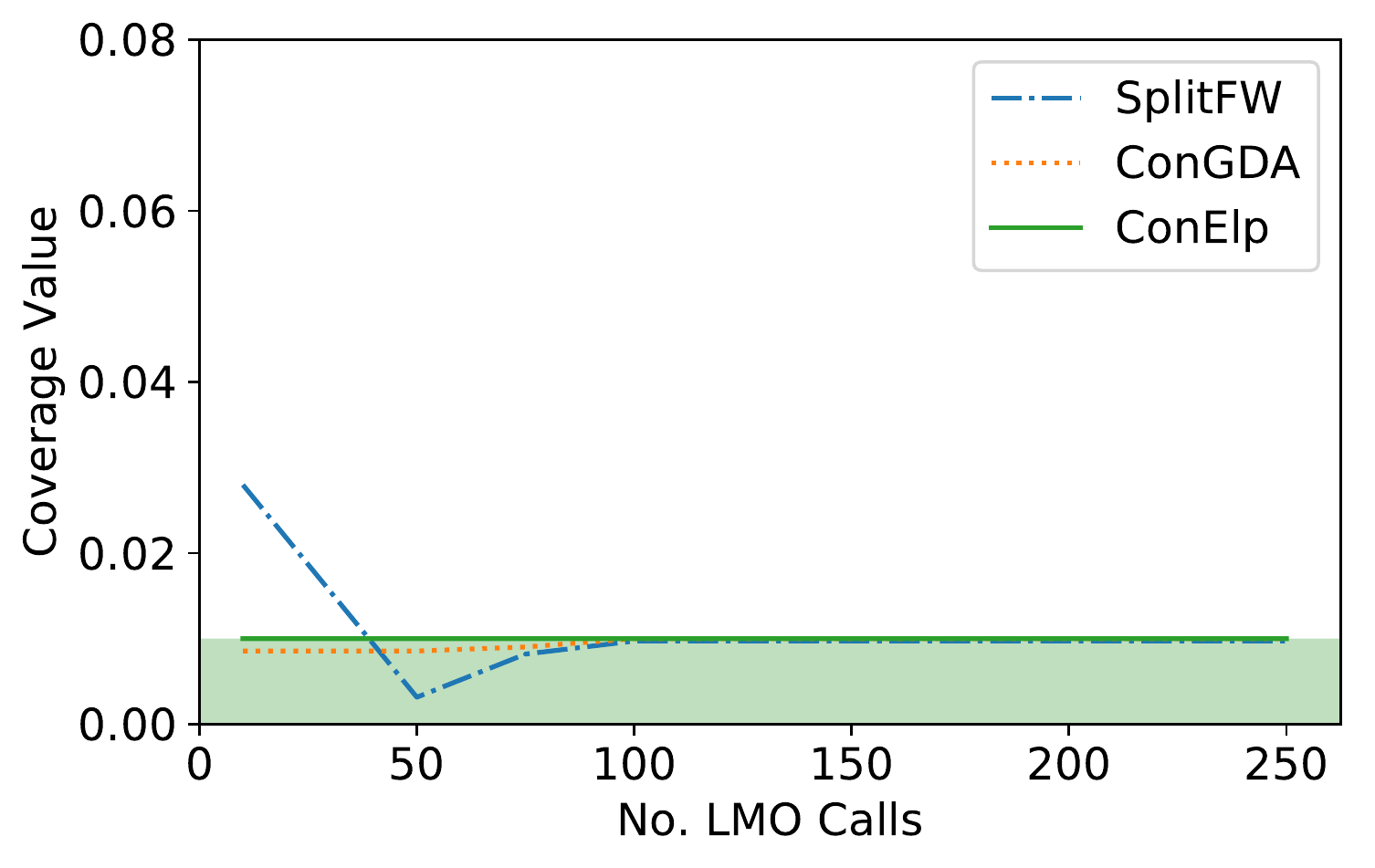}
\label{fig:app-lmo-adult-train}
\caption{Adult}
\end{subfigure}
\begin{subfigure}[b]{0.48\linewidth}
\centering
\includegraphics[width=0.48\linewidth]{Figs/LMOCalls/abalone_train_score.pdf}
\includegraphics[width=0.48\linewidth]{Figs/LMOCalls/abalone_train_cons.pdf}
\label{fig:app-lmo-macho-train}
\caption{Abalone}
\end{subfigure}
\caption{Optimizing the H-mean loss subject to the coverage constraint $\max_i|\sum_{j}C_{ji} \,-\, \pi_i| \leq 0.01$. The plots on the left show the H-mean loss on the train set and those on the right show the coverage violation $\max_i|\sum_{j}C_{ji} \,-\, \pi_i| - 0.01$ on the train set. 
\textit{Lower} H-mean value are \textit{better}, and the constraint values need to be $\leq 0$. 
}
\label{fig:app-lmo-hmean-cov}
\vspace{-15pt}
\end{figure}

\begin{table}[t]
    \centering
    \caption{Comparison of the plug-in and weighted logistic regression (WLR) based LMOs on the task of optimizing the (convex) H-mean loss. The number of iterations, i.e.\ calls to the LMO, is fixed at \emph{100}. \textit{Lower} values are \textit{better}. The results are averaged over 10 random train-test splits}
    \label{tab:cost-sen-100}
    \begin{footnotesize}
    \begin{tabular}{ccccccc}
        \hline
        \multirow{2}{*}{\textbf{Dataset}} &
        \multicolumn{2}{c}{\textbf{FW}} &
        \multicolumn{2}{c}{
        \textbf{Ellipsoid}} &
        \multicolumn{2}{c}{\textbf{GDA}} 
        \\
        &
        \textbf{Plugin} & 
        \textbf{WLR} & \textbf{Plugin} & \textbf{WLR}  & \textbf{Plugin} & \textbf{WLR} \\
        \hline
        Aba  & $0.812 \pm 0.017$ & $\textbf{0.798} \pm \textbf{0.013}$ & $\textbf{0.815} \pm \textbf{0.017}$     & $0.817 \pm 0.012$  & $0.841 \pm 0.032$   & $\textbf{0.837} \pm \textbf{0.035}$ \\
        PgB & $0.127 \pm 0.039$  & $\textbf{0.079} \pm \textbf{0.015}$ & $0.111 \pm 0.026$     & $\textbf{0.079} \pm \textbf{0.018}$  & $0.122 \pm 0.032$  & $\textbf{0.084} \pm \textbf{0.018}$ \\
        MAC    & $\textbf{0.124} \pm \textbf{0.017}$  & $0.245 \pm 0.027$ & $\textbf{0.125} \pm \textbf{0.017}$    & $0.247 \pm 0.027$ & $\textbf{0.124} \pm \textbf{0.016}$  & $0.206 \pm 0.029$ \\
        Sat    & $0.171 \pm 0.007$  & $\textbf{0.170} \pm \textbf{0.007}$ & $0.170 \pm 0.006$  & $\textbf{0.167} \pm \textbf{0.006}$ & $0.171 \pm 0.007$  & $\textbf{0.170} \pm \textbf{0.006}$ \\
        Cov & $0.466 \pm 0.001$ & $\textbf{0.450} \pm \textbf{0.001}$ & $0.466 \pm 0.001$ & $\textbf{0.451} \pm \textbf{0.001}$ & $0.463 \pm 0.001$ & $\textbf{0.447} \pm \textbf{0.001}$\\
        \hline
    \end{tabular}
    \end{footnotesize}
    \vspace{-10pt}
\end{table}


\label{app:trivial-classifier}
\begin{table}[t]
    \centering
    \caption{Performance metrics of a Random Classifiers on the Dataset. \textit{Lower Values are better.}}
    \label{tab:trivial-classifier-datasets}
    \begin{footnotesize}
    \begin{tabular}{ccccccc}
        \hline
        Dataset & H-mean Loss & micro-F1 
        \\
        \hline
        Communities \& Crime & 0.506 & 0.503 
        \\
        COMPAS & 0.501 & 0.504
        \\
        Law School & 0.501 & 0.499
        \\
        Default & 0.499 & 0.498
        \\
        Adult & 0.499 & 0.499
        \\
        Abalone & 0.940 & 0.918
        \\
        Pgblk & 0.839 & 0.794
        \\
        MACHO & 0.914 & 0.874
        \\
        SatImage & 0.835 & 0.832
        \\
        CovType & 0.858 & 0.857
        \\
        \hline
    \end{tabular}
    \end{footnotesize}
\end{table}

\subsection{Additional Experimental Results}
\label{app:expts-additional}
We report the H-mean Loss and micro-F measures of a random classifier on our datasets in Table~\ref{tab:trivial-classifier-datasets}.
We also present additional results for the experiments described in Section \ref{sec:experiments}:
    (i) \textbf{Performance on Constrained Problems} (Section \ref{sec:experiments-cons}): See Figures \ref{fig:app-hmean-cov}-- \ref{fig:app-gmean-eqopp-2}.
    (ii) \textbf{Practical Guidance on Algorithm Choice} (Section \ref{sec:expts-practical}): See Figures \ref{fig:app-min-max}--\ref{fig:app-lmo-hmean-cov}.
    (iii) \textbf{Choice of LMO: Plug-in vs. Weighted Logistic Regression} (Section \ref{sec:expts-LMO}): See Table \ref{tab:cost-sen-100}.







\end{document}